\renewenvironment{proof}[1][]{\par\noindent{\bf Proof #1\ }}{\hfill\BlackBox\\[2mm]}
\newcommand{\Yone}{{\rm Y1}}
\newcommand{\Ytwo}{{\rm Y2}}
\newcommand{\X}{\mathcal X}
\newcommand{\Y}{\mathcal Y}
\newcommand{\F}{\mathcal F}
\newcommand{\G}{\mathcal G}
\newcommand{\A}{\mathcal A}
\renewcommand{\L}{\mathcal L}
\newcommand{\bclower}{\underline{\phi}}
\newcommand{\bcupper}{\overline{\phi}}
\newcommand{\T}{\mathcal T}
\renewcommand{\P}{\mathbb P}
\newcommand{\nats}{\mathbb{N}}
\newcommand{\ints}{\mathbb{Z}}
\newcommand{\reals}{\mathbb{R}}
\newcommand{\E}{\mathbb E}
\newcommand{\eps}{\varepsilon}
\newcommand{\argmax}{\mathop{\rm argmax}}
\newcommand{\argmin}{\mathop{\rm argmin}}
\renewcommand{\limsup}{\mathop{\rm limsup}}
\renewcommand{\liminf}{\mathop{\rm liminf}}
\DeclareSymbolFont{bbold}{U}{bbold}{m}{n}
\DeclareSymbolFontAlphabet{\mathbbold}{bbold}
\newcommand{\ind}{\mathbbold{1}}
\newcommand{\Borel}{{\cal B}}
\newcommand{\ProcX}{\mathbb{X}}
\newcommand{\ProcY}{\mathbb{Y}}
\newcommand{\target}{f^{\star}}
\newcommand{\goodfun}{\bar{f}}
\newcommand{\loss}{\ell}
\newcommand{\lossdom}{\loss_{o}}
\newcommand{\domfunc}{\chi}
\newcommand{\maxloss}{\bar{\loss}}
\newcommand{\triconst}{c_{\loss}}
\newcommand{\SUIL}{{\rm SUIL}}
\newcommand{\WUIL}{{\rm WUIL}}
\newcommand{\SUAL}{{\rm SUAL}}
\newcommand{\WUAL}{{\rm WUAL}}
\newcommand{\SUOL}{{\rm SUOL}}
\newcommand{\WUOL}{{\rm WUOL}}
\newcommand{\ProcSet}{\mathcal{C}}
\newcommand{\KC}{\mathcal{C}_{1}}
\newcommand{\OKC}{\mathcal{C}_{2}}
\newcommand{\UKC}{\mathcal{C}_{3}}
\newcommand{\CRF}{{\rm CRF}}
\newcommand{\zo}{{\scriptscriptstyle{01}}}
\newcommand{\sq}{{\rm sq}}
\newcommand{\sqsmall}{{\scriptscriptstyle{sq}}}
\renewcommand{\i}{{\mathrm{i}}}
\renewcommand{\j}{{\mathrm{j}}}
\newcommand{\ignore}[1]{}
\newcommand{\private}[1]{}
\newcommand{\vast}{\bBigg@{3}}
\newcommand{\Vast}{\bBigg@{4}}
\newsavebox{\savepar}
\newtheorem{condition}{Condition}
\newtheorem{problem}{Open Problem}
\begin{document}

\title{Learning Whenever Learning is Possible: \\Universal Learning under General Stochastic Processes}

\author{\name Steve Hanneke \email steve.hanneke@gmail.com\\
\addr Toyota Technological Institute at Chicago}

\editor{G\'{a}bor Lugosi}

\maketitle

\begin{abstract}
This work initiates a general study of learning and generalization without the i.i.d.~assumption,
starting from first principles.
While the traditional approach to statistical learning theory
typically relies on standard assumptions from probability theory (e.g., i.i.d.~or stationary ergodic),
in this work
we are interested in developing a theory of learning based only
on the most fundamental and 
necessary assumptions implicit in the requirements of the learning 
problem itself.  
We specifically study
universally consistent function learning, 
where the objective is to obtain low long-run average loss for any target function, when 
the data follow a given stochastic process.
We are then interested in the question of whether there exist learning rules guaranteed to be universally consistent 
given \emph{only} the assumption that universally consistent learning is \emph{possible} for the given data process.
The reasoning that motivates this criterion emanates from a kind of \emph{optimist's decision theory}, 
and so we refer to such learning rules as being \emph{optimistically universal}.
We study this question in three natural learning settings:
\emph{inductive}, \emph{self-adaptive}, and \emph{online}.
Remarkably, as our strongest positive result, we find that
optimistically universal learning rules do indeed exist in the self-adaptive learning setting.  
Establishing this fact requires us to develop 
new approaches to the design of learning algorithms. 
Along the way, we also identify concise characterizations of the family of processes 
under which universally consistent learning is possible in the inductive and self-adaptive settings.
We additionally pose a number of enticing open problems, particularly for the 
online learning setting.
\end{abstract}

\begin{keywords}
statistical learning theory, universal consistency, nonparametric estimation, 
stochastic processes, non-stationary processes,
generalization, domain adaptation, online learning
\end{keywords}

\section{Introduction}
\label{sec:intro}

At least since 
the time of
the ancient Pyrrhonists,
it has been observed that learning in general is sometimes not possible.
Rather than turning to radical skepticism, 
modern learning theorists have 
preferred to introduce 
constraining assumptions, under which learning becomes possible, 
and have established positive guarantees for various learning strategies under these assumptions.
However, the assumptions we have focused on in the literature 
tend to be imported from the probability theory literature, rather than being rooted in a principled approach to the learning problem itself.
This is typified by the overwhelming 
reliance on the assumption that training samples are independent and identically distributed, or resembling this (e.g., stationary ergodic).
While such assumptions are known to be sufficient for learning due to their convenient convergence properties (i.e., laws of large numbers), 
it is clear that they are not \emph{necessary} for learning.
In the present work, we revisit the issue of the assumptions at the foundations of statistical learning theory, starting from first principles,
without relying on traditional probabilistic assumptions 
about the data, such as independence and stationarity 
(which will be recovered as special cases).

{\vskip 1mm}
We approach this via a kind of {\bf optimist's decision theory}, reasoning that if we are tasked with achieving a given objective $O$ in some scenario,
then already we have implicitly committed to the assumption that achieving objective $O$ is at least \emph{possible} in that scenario.
We may therefore \emph{rely} on this assumption in our strategy for achieving the objective.
We are then most interested in strategies guaranteed to achieve objective $O$ in \emph{all} scenarios 
where it is possible to do so: that is, strategies that rely \emph{only} on the assumption that objective $O$ is achievable.
Such strategies have the satisfying property that, if ever they fail to achieve the objective, we may rest assured that 
no other strategy could have succeeded, so that nothing was lost.

{\vskip 1mm}
Thus, in approaching the problem of learning (suitably formalized), we may restrict focus to those scenarios in which \emph{learning is possible}.
This assumption --- that learning is possible --- essentially represents a most ``natural'' assumption,
since it is \emph{necessary} for a theory of learning.
Concretely, in this work, we initiate this line of exploration by focusing on
(arguably) the most basic type of learning problem: \emph{universal consistency} in learning a function.
Following the optimist's reasoning above, 
we are interested in determining whether there exist
\emph{optimistically} universal learners, 
in the sense that they are guaranteed to be universally consistent 
given only the assumption that universally consistent learning is \emph{possible} under the given data process:
that is, they are universally consistent under all data processes that admit the existence of universally consistent learners.
We find that, in certain learning protocols,
such optimistically universal learners do indeed exist, and we provide a construction of such a learning rule.
Interestingly, it turns out that 
not all learning rules consistent under the i.i.d.\ assumption satisfy this 
type of universality, so that this criterion can serve as an informative 
desideratum in the design of learning methods.
Along the way, we are also interested in expressing concise necessary and sufficient conditions 
for universally consistent learning to be possible under a given data process.

{\vskip 1mm} 
We specifically consider three natural learning settings --- \emph{inductive}, \emph{self-adaptive}, and \emph{online} --- 
distinguished by the level of access to the data available to the learner.
In all three settings, we suppose there is an unknown \emph{target function} $\target$ and a sequence of data $(X_{1},Y_{1}),(X_{2},Y_{2}),\ldots$ with $Y_{t} = \target(X_{t})$,\footnote{Later we 
also discuss extensions to allow noisy responses $Y_{t}$.} 
of which the learner is permitted to observe the first $n$ samples $(X_{1},Y_{1}),\ldots,(X_{n},Y_{n})$: the \emph{training data}.  
Based on these observations, the learner is tasked with producing a predictor $f_{n}$.  
The performance of the learner is determined by how well $f_{n}(X_{t})$ approximates the (unobservable) $Y_{t}$ value 
for data $(X_{t},Y_{t})$ encountered in the \emph{future} (i.e., $t > n$).\footnote{Of course, in certain real learning scenarios, 
these future $Y_{t}$ values might never actually be observable, and therefore should be considered merely as hypothetical values 
for the purpose of theoretical analysis.}
To quantify this, we suppose there is a \emph{loss function} $\loss$, 
and we are interested in obtaining a small \emph{long-run average} value of $\loss(f_{n}(X_{t}),Y_{t})$.
A learning rule is said to be \emph{universally consistent} under the process $\{X_{t}\}$ if it achieves this (almost surely, as $n \to \infty$) for all target functions $\target$.\footnote{Technically, 
to be consistent with the terminology used in the literature on universal consistency, we should qualify this as 
``universally consistent for function learning,'' to indicate that $Y_{t}$ is a fixed function of $X_{t}$.
We omit this qualification and simply write ``universally consistent'' for brevity.  The more-general case 
of random variable pairs $(X_{t},Y_{t})$, where $Y_{t}$ may be noisy, will be discussed in Section~\ref{sec:noise}.}

The three different settings
are then 
formed 
as natural variants of this high-level description.
The first is the basic \emph{inductive} learning setting, in which $f_{n}$ is fixed 
after observing the initial $n$ samples, and we are interested in obtaining a 
small value of $\frac{1}{m} \sum_{t=n+1}^{n+m} \loss(f_{n}(X_{t}),Y_{t})$ for all large $m$.
This inductive setting is perhaps the most commonly-studied in the prior literature on statistical learning theory
\citep*[see e.g.,][]{devroye:96}.
The second setting is a more-advanced variant, which we call \emph{self-adaptive} learning, 
in which $f_{n}$ may be updated after each subsequent prediction $f_{n}(X_{t})$, 
based on the additional (unlabeled) \emph{test} observations $X_{n+1},\ldots,X_{t}$: that is, 
it continues to learn from its \emph{test data}.  In this case, denoting by $f_{n,t}$ the 
predictor after observing $(X_{1},Y_{1}),\ldots,(X_{n},Y_{n}),X_{n+1},\ldots,X_{t}$, 
we are interested in obtaining a small value of $\frac{1}{m} \sum_{t=n+1}^{n+m} \loss(f_{n,t-1}(X_{t}),Y_{t})$ for all large $m$.
In principle, self-adaptive learning should be possible in many common learning scenarios where the test data are observed sequentially, 
such as in pattern recognition based on a data stream from a camera or other sensors.
This setting is related to several
others studied in the literature,
including \emph{semi-supervised} learning \citep*{chapelle:10},
\emph{transductive} learning \citep*{vapnik:82,vapnik:98},
and (perhaps most-closely related) the problems of \emph{domain adaptation}
and \emph{covariate shift} \citep*{huang:07,cortes:08,ben-david:10,hanneke:19b}.
Finally, the strongest
setting considered in this work is the 
\emph{online} learning setting, in which, after each prediction $f_{n}(X_{t})$, the learner 
is permitted to \emph{observe} $Y_{t}$ and update its predictor $f_{n}$.  
We are then interested in obtaining a small value of 
$\frac{1}{m} \sum_{n=0}^{m-1} \loss(f_{n}(X_{n+1}),Y_{n+1})$ for all large $m$.
This is a particularly strong setting, since it requires that the supervisor providing the $Y_{t}$ responses remains
present in perpetuity.  Nevertheless, this is sometimes the case to a certain extent (e.g., in forecasting problems),  
and consequently the online setting has received considerable attention 
\citep*[e.g.,][]{littlestone:88,haussler:94,cesa-bianchi:06,ben-david:09,rakhlin:15}.

Our
most-complete result is for the self-adaptive setting, where we propose a new learning rule and prove that it is 
universally consistent under \emph{every} data process $\{X_{t}\}$ for which there exist universally consistent self-adaptive learning 
rules.  As mentioned above, we refer to this property as being \emph{optimistically universal}.  Interestingly, we also 
prove that there is \emph{no} optimistically universal \emph{inductive} learning rule, so that the additional 
ability to learn from the (unlabeled) test data is crucial.  For both inductive and self-adaptive learning, we 
also prove that the family of processes $\{X_{t}\}$ that admit the existence of universally consistent learning rules 
is completely characterized by a simple condition on the tail behavior of empirical frequencies.  In particular, 
this also means that these two families of processes are equal.  In contrast, 
we find that the family of processes admitting the existence of universally consistent \emph{online} learning rules 
forms a strict \emph{superset} of these other two families.  However, beyond this, 
the treatment of the online learning setting in this work remains incomplete, 
and leaves a number of enticing open problems regarding whether or not there exist optimistically universal online 
learning rules, and concisely characterizing the family of processes admitting the existence of universally consistent 
online learners.
In addition to results about learning rules, we also argue that there is no consistent 
\emph{hypothesis test} for whether a given process admits the existence of universally consistent learners
(in any of these settings),
indicating that the possibility of learning must indeed be considered an \emph{assumption}, 
rather than merely a verifiable hypothesis.
The above results are all established for general bounded losses.
We also discuss the case of unbounded losses, a much more demanding setting for universal learners.
In that setting, the theory becomes significantly simpler, and we are able to resolve the essential 
questions of interest for all three learning settings, with the exception of one particular question on the 
types of processes that admit the existence of universally consistent learning rules.

In addition to these general results for function learning, we also discuss extensions of the theory 
to allow \emph{noisy responses} $Y_{t}$, in Section~\ref{sec:noise}. Specifically, we consider 
the case of responses $Y_{t}$ that are \emph{conditionally independent} given $X_{t}$, 
with the further requirement that there is a time-invariant optimal function $\target$.  
We find that the results for inductive and self-adaptive learning indeed extend to these noisy scenarios,
for certain families of losses: for instance, regression with the squared loss.

\subsection{Formal Definitions}
\label{subsec:notation}

We begin our formal discussion with a few basic definitions.
Let $(\X,\Borel)$ be a measurable space, 
with $\Borel$ a Borel $\sigma$-algebra generated by a 
separable metrizable topological space $(\X,\T)$,
where $\X$ is called the \emph{instance space} and is assumed to be nonempty. 
Fix a space $\Y$, called the \emph{value space}, and a function $\loss : \Y^{2} \to [0,\infty)$, called the \emph{loss function}. 
We also define $\maxloss = \sup\limits_{y,y^{\prime} \in \Y} \loss(y,y^{\prime})$.
Unless otherwise indicated explicitly, we will suppose $\maxloss < \infty$ (i.e., $\loss$ is \emph{bounded});
the sole exception to this is Section~\ref{sec:unbounded-losses}, which is devoted to exploring the setting of unbounded $\loss$.
Furthermore, to focus on nontrivial scenarios, we will suppose 
$\X$ and $\Y$ are nonempty and 
$\maxloss > 0$
throughout.

For simplicity, we suppose that $\loss$ is a \emph{near-metric}:
that is, $\forall y_{1},y_{2}\in \Y$, $\loss$ satisfies $\loss(y_{1},y_{2}) = \loss(y_{2},y_{1})$, and $\loss(y_{1},y_{2})=0$ if and only if $y_{1}=y_{2}$, 
and also satisfies a relaxed triangle inequality, namely, there is a finite constant $\triconst \geq 1$ such that 
$\forall y_{1},y_{2},y_{3} \in \Y$, $\loss(y_{1},y_{2}) \leq \triconst ( \loss(y_{1},y_{3}) + \loss(y_{3},y_{2}) )$.
We further suppose that $(\Y,\loss)$ is \emph{separable}, in the usual sense that there exists a countable $\tilde{\Y} \subseteq \Y$ 
with $\sup\limits_{y \in \Y} \inf\limits_{\tilde{y} \in \tilde{\Y}} \loss(\tilde{y},y) = 0$.
For instance, these conditions are satisfied for discrete classification under the $0$-$1$ loss ($\Y$ countable, $\loss(a,b)=\ind[a \neq b]$), 
or bounded real-valued regression under the squared loss ($\Y = [-B,B]$, $\loss(a,b) = (a-b)^{2}$) or indeed any $L_{p}$ loss ($\loss(a,b)=|a-b|^{p}$, $p > 0$),
as well as many other losses.
Most of the theory developed here also easily extends
to any $\loss$ that is merely \emph{dominated} by a separable near-metric $\lossdom$, in the sense that
$\forall y,y^{\prime} \in \Y, \loss(y,y^{\prime}) \leq \domfunc(\lossdom(y,y^{\prime}))$ for a continuous nondecreasing function 
$\domfunc : [0,\infty) \to [0,\infty)$ with $\domfunc(0)=0$ and satisfying a non-triviality condition $\sup\limits_{y_{0},y_{1}} \inf\limits_{y} \max\{ \loss(y,y_{0}),\loss(y,y_{1}) \} > 0$.
This then admits discrete classification with 
asymmetric misclassification costs, 
and many other interesting cases.
We include a brief discussion of this generalization in Section~\ref{subsec:nonmetric-losses}.

Below, any reference to a \emph{measurable set} 
$A \subseteq \X$ should be taken to mean $A \in \Borel$, unless otherwise specified.  
Additionally, let $\mathcal{T}_{y}$ be the topology on $\Y$ generated by the open balls of $\ell$,  $\{\{ y \in \Y : \loss(y,y_{0}) < r \} : y_{0} \in \Y, r > 0 \}$,
and let $\Borel_{y} = \sigma(\mathcal{T}_{y})$ denote the Borel $\sigma$-algebra on $\Y$ generated by $\mathcal{T}_{y}$;
references to measurability of subsets $B \subseteq \Y$ below should be taken to indicate $B \in \Borel_{y}$.
We will be interested in the problem of learning from data described by a discrete-time
stochastic process $\ProcX = \{X_{t}\}_{t=1}^{\infty}$ on $\X$.  
We do not make any assumptions about the nature of this process.
For any $s \in \nats$ and $t \in \nats \cup \{\infty\}$, and any sequence $\{x_{i}\}_{i=1}^{\infty}$, 
define $x_{s:t} = \{x_{i}\}_{i=s}^{t}$, or $x_{s:t} = \{\}$ if $t < s$, where $\{\}$ or $\emptyset$ denotes the empty sequence
(overloading notation, as these may also denote the empty \emph{set}); 
for convenience, also define $x_{s:0} = \{\}$.
For any function $f$ and sequence $\mathbf{x} = \{x_{i}\}_{i=1}^{\infty}$ in the domain of $f$, 
we define $f(\mathbf{x}) = \{f(x_{i})\}_{i=1}^{\infty}$ and $f(x_{s:t}) = \{f(x_{i})\}_{i=s}^{t}$.
Also, for any set $A \subseteq \X$, we denote by $x_{s:t} \cap A$ or $A \cap x_{s:t}$ 
the subsequence of all entries of $x_{s:t}$ contained in $A$, 
and $|x_{s:t} \cap A|$ denotes the number of indices $i \in \nats \cap [s,t]$ with $x_{i} \in A$.

For any function $g : \X \to \reals$, and any sequence $\mathbf{x} = \{x_{t}\}_{t=1}^{\infty}$ in $\X$, define
\begin{equation*}
\hat{\mu}_{\mathbf{x}} (g) = \limsup_{n\to\infty} \frac{1}{n} \sum_{t=1}^{n} g(x_t).
\end{equation*}
In particular, we will often use this notation with $\mathbf{x}=\ProcX$, 
for a process $\ProcX = \{X_t\}_{t=1}^{\infty}$, in which case $\hat{\mu}_{\ProcX}(g)$ is a random variable.
For any set $A \subseteq \X$ we overload this 
notation, defining $\hat{\mu}_{\mathbf{x}}(A) = \hat{\mu}_{\mathbf{x}}(\ind_{A})$,
where $\ind_{A}$ is the binary indicator function for the set $A$.
We also use the notation $\ind[p]$, for any logical proposition $p$, 
to denote a value that is $1$ if $p$ holds (evaluates to ``True''), and $0$ otherwise.
We also make use of the standard notation for limits of sequences $\{A_{i}\}_{i=1}^{\infty}$ of sets \citep*[see e.g.,][]{ash:00}:
$\limsup\limits_{i \to \infty} A_{i} = \bigcap\limits_{k = 1}^{\infty} \bigcup\limits_{i = k}^{\infty} A_{i}$,
$\liminf\limits_{i\to\infty} A_{i} = \bigcup\limits_{k=1}^{\infty} \bigcap\limits_{i=k}^{\infty} A_{i}$,
and $\lim\limits_{i\to\infty} A_{i}$ exists and equals $\limsup\limits_{i\to\infty} A_{i}$
if and only if $\limsup\limits_{i\to\infty} A_{i} = \liminf\limits_{i\to\infty} A_{i}$. 
As one final remark on notation, we note that we will generally interpret claims 
regarding conditional expectations to mean that there exist \emph{versions} 
of the corresponding conditional expectations for which the claims hold, 
such as in $\E[Z|Y] \leq \E[Z|X] = \E[W|X]$.

As discussed above, we are interested in three learning settings, defined as follows.
An \emph{inductive} learning rule is any sequence of measurable functions $f_{n} : \X^n \times \Y^n \times \X \to \Y$, $n \in \nats \cup \{0\}$.
A \emph{self-adaptive} learning rule is any array of measurable functions $f_{n,m} : \X^{m} \times \Y^{n} \times \X \to \Y$, $n,m \in \nats \cup \{0\}$, $m \geq n$.
An \emph{online} learning rule is any sequence of measurable functions $f_{n} : \X^{n} \times \Y^{n} \times \X \to \Y$, $n \in \nats\cup\{0\}$.
In each case, these functions can potentially be stochastic 
(that is, we allow $f_{n}$ itself to be a random variable),
though independent from $\ProcX$.
For any measurable $\target : \X \to \Y$, any inductive learning rule $f_{n}$, any self-adaptive learning rule $g_{n,m}$, and any online learning rule $h_{n}$, we define 
\begin{align*}
\hat{\L}_{\ProcX}(f_{n}, \target;n) & = \limsup\limits_{t \to \infty} \frac{1}{t} \sum_{m=n+1}^{n+t} \loss\left( f_{n}(X_{1:n}, \target(X_{1:n}), X_{m}), \target(X_{m}) \right),\\
\hat{\L}_{\ProcX}(g_{n,\cdot},\target;n) & = \limsup\limits_{t \to \infty} \frac{1}{t+1} \sum_{m=n}^{n+t} \loss\!\left( g_{n,m}(X_{1:m},\target(X_{1:n}),X_{m+1}), \target(X_{m+1}) \right),\\
\hat{\L}_{\ProcX}(h_{\cdot},\target;n) & = \frac{1}{n} \sum_{t=0}^{n-1} \loss\!\left(h_{t}(X_{1:t},\target(X_{1:t}),X_{t+1}),\target(X_{t+1})\right).
\end{align*}

In each case, $\hat{\L}_{\ProcX}(\cdot,\target; n)$ measures a kind of limiting loss of the learning rule, relative to the source of the target values: $\target$.
In this context, we refer to $\target$ as the \emph{target function}.
Note that, in the cases of inductive and self-adaptive learning rules, we are interested in the average \emph{future} losses 
after some initial number $n$ of ``training'' observations,
for which target values are provided, and after which no further target values are observable.
Thus, a small value of the loss $\hat{\L}_{\ProcX}$ in these settings represents a kind of \emph{generalization} to future (possibly previously-unseen) data points.
In particular, in the special case of i.i.d.\! $\ProcX$ with marginal distribution $\P_{X}$, the strong law of large numbers implies that 
the loss $\hat{\L}_{\ProcX}(f_{n},\target;n)$ of an inductive learning rule $f_{n}$ 
is equal (almost surely) to the usual notion of the \emph{risk} of $f_{n}(X_{1:n},\target(X_{1:n}),\cdot)$ 
--- namely, $\int \loss(f_{n}(X_{1:n},\target(X_{1:n}),x),\target(x)) \P_{X}({\rm d}x)$ --- 
commonly studied in the statistical learning theory literature, 
so that $\hat{\L}_{\ProcX}(f_{n},\target;n)$ represents a generalization of the notion of \emph{risk}. 
Note that, in the case of general processes $\ProcX$, the average loss $\frac{1}{t} \sum\limits_{m=n+1}^{n+t} \loss\!\left( f_{n}(X_{1:n}, \target(X_{1:n}), X_{m}), \target(X_{m}) \right)$ 
might not have a well-defined limit as $t \to \infty$, particularly for \emph{non-stationary} processes $\ProcX$, 
and it is for this reason that we use the limit superior in the definition (and similarly for $\hat{\L}_{\ProcX}(g_{n,\cdot},\target;n)$).
We also note that, since the loss function is always finite, we could have included the losses on the $n$ training samples 
in the summation in the inductive $\hat{\L}_{\ProcX}(f_{n},\target;n)$ definition without affecting its value.
This observation implies the following simple equality.
\begin{equation}
\label{eqn:inductive-simple-equiv}
\hat{\L}_{\ProcX}(f_{n}, \target;n) = \hat{\mu}_{\ProcX}\!\left( \loss\!\left(f_n(X_{1:n}, \target(X_{1:n}), \cdot), \target(\cdot)\right)\right).
\end{equation}

The distinction between the inductive and self-adaptive settings is merely the fact that the self-adaptive learning rule is able to 
\emph{update} the function used for prediction after observing each ``test'' point $X_{t}$, $t > n$.  Note that the target values
are not available for these test points: only the ``unlabeled'' $X_{t}$ values.
In the special case of an i.i.d.\! process, the self-adaptive setting is closely related to the \emph{semi-supervised} learning setting
studied in the statistical learning theory literature \citep*{chapelle:10}.  For more-general processes, 
it has relations to problems of \emph{domain adaptation} and \emph{covariate shift} \citep*{huang:07,cortes:08,ben-david:10,hanneke:19b}, 
as the additional samples $X_{(n+1):m}$ provide important information about how representative the training samples $X_{1:n}$ are
for the purpose of estimating certain relevant long-run averages $\hat{\mu}_{\ProcX}(g)$ (see Section~\ref{sec:universal2-adaptive} for details).
In particular, for this purpose, it is important that these additional unlabeled samples are actual \emph{test} samples, 
rather than (for instance) taken from an independent copy of the process, 
since general (non-ergodic) processes may have very different long-run behaviors in different sample paths.

{\vskip 2mm}In the case of online learning, the prediction function is again allowed to update after every test point,
but in this case the target value for the test point \emph{is} accessible (after the prediction is made).
This online setting, with precisely this same $\hat{\L}_{\ProcX}(h_{\cdot},\target;n)$ objective function, 
has been studied in the learning theory literature, both in the case of i.i.d.\! processes 
and relaxations thereof \citep*[e.g.,][]{haussler:94,gyorfi:02} 
and in the very-general setting of $\ProcX$ an \emph{arbitrary} process \citep*[e.g.,][]{littlestone:88,cesa-bianchi:06,rakhlin:15}.

{\vskip 2mm}Our interest in the present work is the basic problem of \emph{universal consistency}, 
wherein the objective is to design a learning rule with the guarantee that the long-run average loss $\hat{\L}_{\ProcX}$ 
approaches \emph{zero} (almost surely) as the training sample size $n$ grows large, and that this fact holds true 
for \emph{any} target function $\target$.
Specifically, we have the following definitions.

\newpage
\begin{definition}
\label{def:suil}
We say an inductive learning rule $f_n$ is strongly universally consistent under $\ProcX$
if, for every measurable $\target : \X \to \Y$, 
$\lim\limits_{n\to\infty} \hat{\L}_{\ProcX}(f_n,\target;n) = 0\text{ (a.s.)}$.
\\We say a process $\ProcX$ admits strong universal inductive learning
if there exists an inductive learning rule $f_n$ that is 
strongly universally consistent under $\ProcX$.
\\We denote by $\SUIL$ the set of all processes $\ProcX$ that admit strong universal inductive learning.
\end{definition}
\begin{definition}
\label{def:sual}
We say a self-adaptive learning rule $f_{n,m}$ is strongly universally consistent under $\ProcX$
if, for every measurable $\target : \X \to \Y$, 
$\lim\limits_{n\to\infty} \hat{\L}_{\ProcX}(f_{n,\cdot},\target;n) = 0\text{ (a.s.)}$.
\\We say a process $\ProcX$ admits strong universal self-adaptive learning
if there exists a self-adaptive learning rule $f_{n,m}$ that is 
strongly universally consistent under $\ProcX$.
\\We denote by $\SUAL$ the set of all processes $\ProcX$ that admit strong universal self-adaptive learning.
\end{definition}
\begin{definition}
\label{def:suol}
We say an online learning rule $f_{n}$ is strongly universally consistent under $\ProcX$
if, for every measurable $\target : \X \to \Y$, 
$\lim\limits_{n\to\infty} \hat{\L}_{\ProcX}(f_{\cdot},\target;n) = 0\text{ (a.s.)}$.
\\We say a process $\ProcX$ admits strong universal online learning
if there exists an online learning rule $f_{n}$ that is 
strongly universally consistent under $\ProcX$.
\\We denote by $\SUOL$ the set of all processes $\ProcX$ that admit strong universal online learning.
\end{definition}

Technically, the above definitions of universal consistency are defined relative 
to the loss function $\loss$.  However, we will establish below that $\SUIL$ and $\SUAL$
are in fact \emph{invariant} to the choice of $(\Y,\loss)$, subject to the basic assumptions stated above 
(separable near-metric, $0 < \maxloss < \infty$).  We will also find that this is true of $\SUOL$, subject to the 
additional constraint that $(\Y,\loss)$ is \emph{totally bounded}.  Furthermore, for unbounded losses 
we find that all three families are invariant to $(\Y,\loss)$, subject to separability and $\maxloss > 0$.

As noted above, much of the prior literature on universal consistency without the i.i.d.\ assumption has focused on 
relaxations of the i.i.d.\ assumption to more-general families of processes, such as stationary mixing, stationary ergodic,
or certain limited forms of non-stationarity (see e.g., \citealp*{steinwart:09}, Chapter 27 of \citealp*{gyorfi:02}, and references therein).
Though the analysis of learning techniques becomes significantly more challenging under these 
relaxations, in many cases the essential features of the i.i.d.\ setting 
useful for proving consistency are preserved (particularly, laws of large numbers).
In contrast, our primary interest in the present work is to study the \emph{natural} assumption 
\emph{intrinsic to the universal consistency problem itself}:
the assumption that universal consistency is \emph{possible}.  By definition, this is a \emph{necessary} assumption for universal consistency. 
Thus, the important question is whether there is a learning rule for which it is also a \emph{sufficient} assumption for establishing universal consistency.
In other words, we are interested in the following abstract question:
\begin{center}
{\bf Do there exist learning rules that are strongly universally consistent under \emph{every} process $\ProcX$ that admits strong universal learning?}
\end{center}
Each of the three learning settings yields a concrete instantiation of this question.
For the reason discussed in the introductory remarks, we refer to any such learning rule as being 
\textbf{optimistically universal}.
Hence we have the following definition.

\begin{definition}
\label{def:optimistically-universal}
An (inductive/self-adaptive/online) learning rule is \emph{optimistically universal} if it is strongly universally consistent 
under every process $\ProcX$ that admits strong universal (inductive/self-adaptive/online) learning.
\end{definition}

\subsection{Summary of the Main Results}
\label{subsec:main}

Here we briefly summarize the main results of this work.  
Their proofs, along with several other results, will be developed throughout the rest of this article.

The main positive result in this work is the following theorem, which establishes that optimistically universal self-adaptive learning is indeed possible.
In fact, in proving this result, we develop a specific construction of one such self-adaptive learning rule.

\begin{theorem}
\label{thm:optimistic-self-adaptive}
There exists an optimistically universal self-adaptive learning rule.
\end{theorem}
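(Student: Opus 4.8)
The plan is to split the proof into a reduction and a construction. For the reduction, I would invoke the characterization of $\SUAL$ established elsewhere in the paper: a process $\ProcX$ belongs to $\SUAL$ (equivalently, to $\SUIL$) if and only if it satisfies a tail empirical-frequency condition, $\mathcal{C}_{1}$, namely that for every nonincreasing sequence of measurable sets $A_{k}\downarrow\emptyset$ one has $\hat{\mu}_{\ProcX}(A_{k})\to 0$ almost surely; equivalently, for every countable measurable partition $\{C_{k}\}_{k}$ of $\X$, $\hat{\mu}_{\ProcX}\!\big(\bigcup_{k>K} C_{k}\big)\to 0$ almost surely as $K\to\infty$. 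Thus it suffices to exhibit a \emph{single} self-adaptive rule that is strongly universally consistent under every $\ProcX$ satisfying $\mathcal{C}_{1}$; this simultaneously re-derives the ``easy'' direction $\mathcal{C}_{1}\Rightarrow\SUAL$ of the characterization. I would also use the invariance of $\SUAL$ to the choice of separable $(\Y,\loss)$ with $0<\maxloss<\infty$, established separately, only to justify designing the rule for a general such $(\Y,\loss)$: fix once and for all a countable dense set $\{y_{k}\}_{k}\subseteq\Y$, and let every prediction be an approximate $\loss$-median over $\{y_{k}\}$ of a suitable finite multiset of observed labels.

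For the construction, since $(\X,\T)$ need not admit a totally bounded metrization, fix any bounded metric $\metric$ compatible with $\T$ and a countable dense set $\{z_{k}\}_{k}\subseteq\X$, and from these build a nested sequence of countable measurable partitions $\mathcal{P}_{1},\mathcal{P}_{2},\ldots$ of $\X$ (cut out by the balls $\Ball(z_{k},2^{-j})$) with $\mathcal{P}_{i+1}$ refining $\mathcal{P}_{i}$, $\bigcup_{i}\sigma(\mathcal{P}_{i})$ generating $\Borel$, and the $\metric$-diameter of the cell $\mathcal{P}_{i}(x)\ni x$ vanishing as $i\to\infty$. The self-adaptive learner $f_{n,m}$, given $(X_{1:m},\target(X_{1:n}),x)$ with $x=X_{m+1}$, then selects a \emph{data-dependent} resolution $\hat{\imath}$ --- the finest level $i$ (bounded by a slowly growing cap) for which the cell $C=\mathcal{P}_{i}(x)$ contains a labeled point $X_{j}$, $j\le n$, and has been visited often enough along the full prefix $X_{1:m}$ to be treated as a ``live'' region, with the bookkeeping calibrated so that finer resolutions are adopted exactly as fast as the data (labeled and unlabeled) warrant --- and then outputs an approximate $\loss$-median over $\{y_{k}\}$ of $\{\target(X_{j}):j\le n,\ X_{j}\in C\}$ (with a fixed default if no labeled point lies in $C$). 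The defining feature is that $\hat{\imath}$, hence the effective resolution at time $m$, is computed from the \emph{entire} observed prefix $X_{1:m}$, including the unlabeled test points $X_{n+1:m}$; as the companion negative result shows, any inductive rule must fix this resolution obliviously at time $n$, and no oblivious schedule can track all of $\mathcal{C}_{1}$. Randomization of $f_{n,m}$ is not needed.

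For the analysis, fix $\ProcX$ satisfying $\mathcal{C}_{1}$ and measurable $\target:\X\to\Y$, and control $\hat{\L}_{\ProcX}(f_{n,\cdot},\target;n)$, decomposing each per-step loss into three parts. (i) \emph{Uncovered steps}, where $X_{m+1}$ lies outside the region hit by $X_{1:n}$ at the relevant resolution: the set of such points is the complement of $\bigcup\{C\in\mathcal{P}_{i}:C\cap X_{1:n}\neq\emptyset\}$, which decreases as $n\to\infty$ to a set of vanishing $\hat{\mu}_{\ProcX}$-measure (every stream point hits its own cell), so by $\mathcal{C}_{1}$ its future $\hat{\mu}_{\ProcX}$-frequency $\to 0$. (ii) \emph{Under-resolved steps}, where $\hat{\imath}$ fails to reach a target level $i^{*}$: bounded similarly, using $\mathcal{C}_{1}$ together with the fact that as $n$ grows the labeled prefix fills in the $\mathcal{P}_{i^{*}}$-cells. (iii) \emph{Accurate steps}, where a fine, well-populated cell $C$ is used: here the approximate-median loss is governed not by the crude oscillation of $\target$ on $C$ but by how faithfully the labeled points in $C$ represent $\target(X_{m+1})$ --- a local density / Lebesgue-differentiation statement. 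Partitioning $\Y$ into small $\loss$-pieces $\{\Y_{k}\}$, applying $\mathcal{C}_{1}$ to the induced partition $\{\target^{-1}(\Y_{k})\}$ of $\X$ to reduce to finitely many pieces, and then arguing that for $\hat{\mu}_{\ProcX}$-almost every $x$ the cell $\mathcal{P}_{i}(x)$ is eventually ``pure'' for $\target$ in the relevant density sense, one gets that the accurate-step loss is $\le\eps$ outside a set of vanishing future frequency. Letting $n\to\infty$, then $i^{*}\to\infty$, then refining the $\Y$-partition, yields $\hat{\L}_{\ProcX}(f_{n,\cdot},\target;n)\to 0$ almost surely.

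I expect the main obstacle to be part (iii) together with the schedule calibration: proving a \emph{universal Lebesgue-differentiation theorem} --- that under an arbitrary $\mathcal{C}_{1}$ process, with no reference measure, no stationarity, and only a measurable target, the local cell-estimates from the labeled sample converge to $\target$ in the $\hat{\mu}_{\ProcX}$-average sense almost surely --- and, intertwined with it, choosing the cap on $\hat{\imath}$, the ``liveness'' thresholds, and the comparison between labeled and unlabeled cell-counts so that parts (i)--(iii) are simultaneously controlled. This is precisely where the self-adaptive versus inductive distinction is consumed: the learner cannot fix the resolution in advance, because the correct growth rate of the resolution depends on how fast the particular process populates cells, which the self-adaptive learner can read off from $X_{1:m}$ but an inductive learner cannot; making this rigorous --- in particular, showing that the data-dependent resolution tends to infinity along $\hat{\mu}_{\ProcX}$-almost all of the future stream while keeping the chosen cells labeled --- is the delicate core of the argument.
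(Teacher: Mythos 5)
Your high-level plan matches the paper's: reduce to processes in $\KC$ via the equivalence $\SUAL=\KC$, and exhibit a single self-adaptive rule that is strongly universally consistent under every $\ProcX\in\KC$, with the data-dependent ``resolution'' chosen from the entire observed prefix $X_{1:m}$ rather than only $X_{1:n}$. That last idea --- that the self-adaptive capability is consumed precisely in calibrating a model-complexity index from the unlabeled tail --- is also the paper's central insight, realized there through the quantity $\hat{i}_{n,m}(x_{1:m})$.

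However, the prediction rule you propose (approximate $\loss$-median of $\{\target(X_j):j\le n,\,X_j\in C\}$ over the data-dependently chosen cell $C$) is a local-averaging/histogram estimator, and this is where the gap lies. Your step (iii) requires what you call a ``universal Lebesgue-differentiation theorem,'' and you correctly flag it as the delicate core; the problem is that it is not merely delicate but essentially the thing that fails for local rules under general $\KC$ processes. Section~\ref{subsec:inconsistent-nn} of the paper constructs a process $\ProcX\in\KC$ (even $\ProcX\in\CRF$) in which, infinitely often, the labeled points $X_{1:n}$ that fall in a region $C$ are deterministic and carry the \emph{minority} label, while the representative (majority-label) points from an earlier epoch are vanishingly sparse in $X_{1:n}$. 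Any estimator that aggregates labels over a current cell $C$ --- median included --- inherits the wrong value there, even though the $\hat\mu_{\ProcX}$-tail frequency of $C$ is nonnegligible. Your ``liveness'' bookkeeping on $X_{1:m}$ detects whether $C$ keeps being visited, but it does not detect that the \emph{labeled} sample inside $C$ is unrepresentative of the \emph{tail} of the stream inside $C$; and since $\E[\hat\mu_{\ProcX}(\cdot)]$ is only a continuous submeasure, not a measure, there is no Vitali/martingale machinery to fall back on to assert local purity of $\target$ on your fixed partition cells.

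The paper's construction avoids this obstacle by replacing local averaging with a \emph{minimax empirical risk} over a fixed countable family of simple functions: one fixes finite nested classes $\F_1\subseteq\F_2\subseteq\cdots$ (from Lemma~\ref{lem:approximating-sequence}) approximating every measurable $\target$ in the $\E[\hat\mu_{\ProcX}]$ sense, and the rule in \eqref{eqn:sual-rule} selects
\[
\epsargmin{f\in\F_{\hat{i}}}{\eps_n}\ \max_{u_{\hat{i}}\le s\le n}\ \frac{1}{s}\sum_{t=1}^{s}\loss(f(x_t),y_t),
\]
i.e.\ it penalizes a candidate $f$ by its \emph{worst} empirical loss over a whole range of initial segments, not just the most recent one. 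This ``max over prefixes'' criterion is exactly what defeats the nearest-neighbor-type counterexample: the earlier, representative labeled points are given veto power. The index $\hat{i}_{n,m}(x_{1:m})$ is then capped by demanding that, uniformly over $f,g\in\F_{\hat{i}}$, the max-prefix empirical losses computed on $X_{1:m}$ agree with those on $X_{1:n}$ up to $\gamma_{\hat{i}}$ --- a concrete, checkable version of your ``liveness'' condition. The concentration-type inputs are Lemmas~\ref{lem:bracketing-convergence} and~\ref{lem:exponential-approximating-sequence}, which are uniform over a \emph{finite} class at each level; no local-density statement is proved or needed. So while your reduction and your diagnosis of what the self-adaptive capability buys are both sound, the estimator itself would have to be replaced (or the calibration made dramatically more elaborate, along the lines of a max-over-prefixes comparison of labeled subsample statistics against the full-prefix statistics) before the argument closes; as written, step (iii) does not go through.
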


Interestingly, it turns out that the additional capabilities of self-adaptive learning, compared to inductive learning, 
are actually \emph{necessary} for optimistically universal learning.  This is reflected in the following result.

\begin{theorem}
\label{thm:no-optimistic-inductive}
There \emph{does not exist} an optimistically universal inductive learning rule, if $(\X,\T)$ is an uncountable Polish space.
\end{theorem}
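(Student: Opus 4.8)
The plan is to construct, for any candidate inductive learning rule $f_{n}$, a process $\ProcX \in \SUIL$ (in fact one that is very simple — say, supported on a countable set, so that it clearly admits strong universal inductive learning) together with a target function $\target$ on which $f_{n}$ fails to be consistent, thereby showing no single rule can handle all of $\SUIL$. The uncountable Polish hypothesis is what gives us room to maneuver: we will use it to embed an adversarial combinatorial structure — essentially a rich family of target functions that cannot be simultaneously learned — while keeping the marginal process trivially learnable. Concretely, I would first fix a countable set $\{z_{1}, z_{2}, \ldots\} \subseteq \X$ (available since $\X$ is separable and uncountable, hence infinite) and let $\ProcX$ be deterministic, visiting the $z_{i}$'s in some fixed order (e.g. each $z_{i}$ infinitely often, in a pattern with well-defined empirical frequencies). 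Any such process lies in $\SUIL$ — indeed a memorization rule is strongly universally consistent under it — so it is a legitimate element of the family over which an optimistically universal rule must succeed.

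The heart of the argument is a diagonalization / probabilistic adversary on the target function. Since $\{z_{i}\}$ is countably infinite and $\Y$ contains at least two points $y_{0} \neq y_{1}$ with $\loss(y_{0},y_{1}) > 0$ (which follows from $\maxloss > 0$ together with the metric and the nontriviality condition), the class of targets $\target$ obtained by freely choosing $\target(z_{i}) \in \{y_{0},y_{1}\}$ is uncountable. The key point: after observing $X_{1:n} = z_{i_{1}}, \ldots, z_{i_{n}}$ and the labels $\target(z_{i_{1}}), \ldots, \target(z_{i_{n}})$, the rule's prediction $f_{n}(X_{1:n}, \target(X_{1:n}), \cdot)$ is a \emph{fixed} function, while infinitely many of the $z_{i}$ appearing in the future tail of $\ProcX$ have labels not yet revealed. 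I would choose $\target$ randomly — $\target(z_{i})$ i.i.d.\ uniform on $\{y_{0},y_{1}\}$ — and argue, conditionally on the first $n$ labels, that for each future time $m$ with $X_{m} = z_{j}$ and $z_{j}$ not among $X_{1:n}$, the prediction $f_{n}(\ldots, X_{m})$ is wrong with probability $\tfrac12$ (it is independent of the fresh coin $\target(z_{j})$), so it incurs expected loss at least $\tfrac12 \loss(y_{0},y_{1})$ on that point. Arranging the enumeration so that such "fresh" indices have empirical density bounded below along the tail (easy: just make sure each $z_{i}$ first appears late and keeps recurring), we get $\E[\hat{\L}_{\ProcX}(f_{n},\target;n)] \geq c > 0$ for every $n$, uniformly. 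A Borel–Cantelli / Fatou argument over the random $\target$ then yields a \emph{single} deterministic $\target$ for which $\hat{\L}_{\ProcX}(f_{n},\target;n) \not\to 0$, contradicting strong universal consistency of $f_{n}$ under $\ProcX$.

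Two technical points need care. First, measurability: $f_{n}$ may be a stochastic rule (independent of $\ProcX$), so the "wrong with probability $\tfrac12$" claim must be stated with respect to the product of the rule's internal randomness and the random target, and one must check $\hat{\L}_{\ProcX}(f_{n},\target;n)$ is jointly measurable so Fubini applies — this is routine given the stated measurability of the $f_{n}$ and separability of $(\Y,\loss)$. Second, and this is where I expect the main obstacle: one must ensure the chosen process $\ProcX$ genuinely lies in $\SUIL$ while the enumeration is still adversarial enough to keep a constant fraction of tail points "unseen at training time $n$" for every $n$ simultaneously. The tension is that $\SUIL$ (by the characterization promised earlier in the paper, via tail empirical frequencies) may constrain how the $z_{i}$ can recur; I would resolve this by taking the simplest safe choice — a process whose range is a finite-to-growing set, or more robustly one where the $z_{i}$ are visited so that $\hat\mu_{\ProcX}$ is a well-defined (say geometric) measure on $\{z_{i}\}$ — which manifestly admits memorization-based universal learning, and then verifying that even so, for each fixed $n$ the set of indices $i$ with $z_{i} \notin X_{1:n}$ still carries positive $\hat\mu_{\ProcX}$-mass. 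Getting a clean, self-contained choice of $\ProcX$ that makes both halves go through transparently is the one part I would spend real effort on; everything else is a standard averaging-adversary argument.
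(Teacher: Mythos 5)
There is a genuine gap, and it sits at exactly the point you flag as the main obstacle — except it is not a technical hurdle to be resolved by a careful choice of enumeration, but an obstruction in principle. You want a process $\ProcX \in \SUIL$ supported on a countable set $\{z_1,z_2,\ldots\}$ such that for every $n$ the ``fresh'' indices (those $z_j \notin X_{1:n}$) carry tail density bounded below by a constant $c>0$, and you claim this is ``easy: just make sure each $z_i$ first appears late and keeps recurring.'' But $\SUIL = \KC$ (Theorem~\ref{thm:main}), and by Lemma~\ref{lem:limsup-equiv} together with Lemma~\ref{lem:equiv} (specifically equation~\eqref{eqn:raw-condition}, applied with $A_i = \{z_i\}$), \emph{every} $\ProcX \in \KC$ whose range lies in $\{z_i\}$ satisfies $\lim_{n\to\infty}\hat\mu_{\ProcX}\!\left(\bigcup\{A_i : X_{1:n}\cap A_i=\emptyset\}\right)=0$ almost surely. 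That is, the $\hat\mu_{\ProcX}$-mass of unseen points vanishes, so your adversary's budget is at most $\maxloss$ times a quantity going to $0$, and the lower bound $\E[\hat\L_{\ProcX}(f_n,\target;n)]\geq c$ uniformly in $n$ is unattainable. Arranging the enumeration so the fresh fraction stays bounded below is precisely equivalent to leaving $\KC$, and then $\ProcX\notin\SUIL$, so failure on it proves nothing about optimistic universality. Indeed Corollary~\ref{cor:countable-doubly-universal-inductive} shows the memorization rule is optimistically universal for countable $\X$, and its proof applies verbatim to any process supported on a countable subset of an uncountable $\X$: no such process can be a counterexample.

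The actual proof is therefore forced to use the uncountability and Polish structure in an essential way, not merely as a source of a countable test set. It fixes a nonatomic probability measure $\pi_0$ on $\X$, constructs ``splitting'' sets $R_k$ with $\pi_0(R_k)=1/2$ that asymptotically bisect every Borel set (Lemma~\ref{lem:non-atomic-half-covering}), and builds $\ProcX$ from conditionally i.i.d.\ draws of $\pi_0(\cdot\mid R_k)$, $\pi_0(\cdot\mid\X\setminus R_k)$, and $\pi_0$, using a cutting-and-stacking scheme adapted from Adams and Nobel. The target $\target$ labels a countable, adversarially placed dense set by $y_0$ and everything else by $y_1$. Because the process draws from a nonatomic distribution, future test points are (with probability one) not equal to any training point, so the learner genuinely must \emph{generalize}; the construction arranges that, at check times $\sqrt{n_k}$, the learner's generalization on a $\pi_0$-mass-$\geq 1/4$ region is provably wrong, and then tests it on a long run of fresh $\pi_0$ draws. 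These test points are not ``unseen'' in your sense (their neighborhoods and $\pi_0$-typical behavior are well represented in $X_{1:n}$); the failure comes from the learner having been tricked into the wrong extrapolation, not from label-uncertainty on literally novel points. Crucially the resulting processes are verified to be in $\CRF\subseteq\KC=\SUIL$. Your higher-level instinct — average over a random target, apply Fubini/Fatou, extract a single bad $\target$ — does survive; what does not survive is the idea that a process with countable support can furnish the uniform-in-$n$ adversarial budget.
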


Taken together, these two results are interesting indeed, as they indicate there can be strong advantages to 
designing learning methods to be self-adaptive.  
This seems particularly interesting when we note that 
very few learning methods in common use are designed to exploit this capability: that is, to adjust their 
trained predictor based on the (unlabeled) test samples they encounter.  
As mentioned, self-adaptive learning 
should be possible in many common learning scenarios where the unlabeled test data are observed sequentially, 
such as in pattern recognition based on a data stream from a camera or other sensors.
In light of these results, it seems worthwhile 
to revisit the definitions of commonly-used learning methods with a view toward designing self-adaptive variants.
In the self-adaptive method we propose
in Section~\ref{sec:universal2-adaptive} below, 
the main utility of being self-adaptive is in a model selection component: 
for each hypothesis class $\F_{i}$ in a hierarchy of classes, 
we use $X_{1:n}$ and $X_{1:m}$ to produce two estimates of 
$\hat{\mu}_{\ProcX}(\loss(f(\cdot),f^{\prime}(\cdot)))$ for all $f,f^{\prime} \in \F_{i}$, 
and select the largest class $\F_{i}$ in the hierarchy 
for which these two estimates are uniformly close.
If $\F_{i}$ is sufficiently rich to approximate $\target$, 
this technique functions as an approximate test for 
whether a particular estimate of $\hat{\mu}_{\ProcX}(\loss(f(\cdot),\target(\cdot)))$ 
based on $(X_{1:n},\target(X_{1:n}))$ is close to the true value 
$\hat{\mu}_{\ProcX}(\loss(f(\cdot),\target(\cdot)))$, 
for all $f \in \F_{i}$, so that minimizing the estimate 
over $f \in \F_{i}$ produces a function $f$ 
with relatively small $\hat{\mu}_{\ProcX}(\loss(f(\cdot),\target(\cdot)))$; 
see Section~\ref{sec:universal2-adaptive} for the details.\\

As for the online learning setting, the present work makes only partial progress toward resolving the 
question of the existence of optimistically universal online learning rules (in Section~\ref{sec:online}).  In particular, the following 
question remains open at this time.

\begin{problem}
\label{prob:optimistic-online}
Does there exist an optimistically universal online learning rule?
\end{problem}

To be clear, as we discuss in Section~\ref{sec:online}, one can convert 
the optimistically universal self-adaptive learning rule from Theorem~\ref{thm:optimistic-self-adaptive}
into an online learning rule that is strongly universally consistent for any process $\ProcX$ 
that admits strong universal \emph{self-adaptive} learning.  However, as we prove below, 
the set of processes $\ProcX$ that admit strong universal \emph{online} learning is a strict 
superset of these, and so optimistically universal online learning represents a much stronger 
requirement for the learner.

In the process of studying the above, we also investigate the problem of concisely \emph{characterizing} 
the family of processes that admit strong universal learning, of each of the three types: that is, $\SUIL$, $\SUAL$, and $\SUOL$.
In particular, consider the following simple condition on the tail behavior of a given process $\ProcX$.

\begin{condition}
\label{con:kc}
For every monotone sequence $\{A_k\}_{k=1}^{\infty}$ of sets in $\Borel$ with $A_k \downarrow \emptyset$,
\begin{equation*}
\lim\limits_{k\to\infty} \E\!\left[ \hat{\mu}_{\ProcX}\!\left( A_k \right) \right] = 0.
\end{equation*}
\end{condition}

Denote by $\KC$ the set of all processes $\ProcX$ satisfying Condition~\ref{con:kc}.
In Section~\ref{sec:equiv} below, we discuss this condition in detail, and also provide several
equivalent forms of the condition.  One interesting instance of this is Theorem~\ref{thm:maharam}, 
which notes that Condition~\ref{con:kc} is equivalent to the condition that the set function 
$\E\!\left[ \hat{\mu}_{\ProcX}(\cdot) \right]$ is a \emph{continuous submeasure} (Definition~\ref{def:maharam} below).
For our present interest, the most important fact about Condition~\ref{con:kc} is that 
it precisely identifies which processes $\ProcX$ admit strong universal inductive or self-adaptive learning,
as the following theorem states.

\begin{theorem}
\label{thm:main}
The following statements are equivalent for any process $\ProcX$.
\begin{itemize}
\item $\ProcX$ satisfies Condition~\ref{con:kc}.
\item $\ProcX$ admits strong universal inductive learning.
\item $\ProcX$ admits strong universal self-adaptive learning.
\end{itemize}
Equivalently, $\SUIL = \SUAL = \KC$.
\end{theorem}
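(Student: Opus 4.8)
The plan is to prove the cycle of inclusions $\KC \subseteq \SUIL \subseteq \SUAL \subseteq \KC$, which gives all three sets equal. The middle inclusion is essentially free: given an inductive rule $f_{n}$, the self-adaptive rule $g_{n,m}(x_{1:m},y_{1:n},x_{m+1}) := f_{n}(x_{1:n},y_{1:n},x_{m+1})$ simply discards the extra test covariates, and one checks that $\hat{\L}_{\ProcX}(g_{n,\cdot},\target;n) = \hat{\L}_{\ProcX}(f_{n},\target;n)$ for every $\target$ and every realization (the two $\limsup$-Cesàro averages differ by a reindexing term of order $1/t$, which vanishes since $\loss \le \maxloss < \infty$); hence $\SUIL \subseteq \SUAL$.

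For $\SUAL \subseteq \KC$ I would argue the contrapositive. If $\ProcX \notin \KC$, fix $A_{k} \downarrow \emptyset$ in $\Borel$ with $\inf_{k}\E[\hat{\mu}_{\ProcX}(A_{k})] \ge c > 0$ (pass to a subsequence for the uniform bound), set $D_{k} = A_{k}\setminus A_{k+1}$, fix $y_{0}\neq y_{1}$ in $\Y$, and put $\gamma = \tfrac12\loss(y_{0},y_{1}) > 0$ (so $\tfrac12\loss(y,y_{0})+\tfrac12\loss(y,y_{1}) \ge \gamma$ for all $y$, by the triangle inequality). Introduce the randomized target $\target_{b}$ indexed by i.i.d.\ fair coins $b = (b_{k})$: $\target_{b}\equiv y_{0}$ off $A_{1}$ and $\target_{b}\equiv y_{b_{k}}$ on $D_{k}$ (a measurable simple function). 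Fix any self-adaptive rule $g$. The key point is that $g$ never sees labels of test points, so $g_{n,m}$ depends on $\target_{b}$ only through $\target_{b}(X_{1:n})$ and is conditionally independent of every coin $b_{k}$ with $D_{k}\cap X_{1:n} = \emptyset$; writing $B_{n}$ for the union of these untouched $D_{k}$, every test point in $B_{n}$ then incurs conditional expected loss at least $\gamma$, and a reverse Fatou estimate (valid since $\loss\le\maxloss<\infty$) yields $\E[\hat{\L}_{\ProcX}(g_{n,\cdot},\target_{b};n)] \ge \gamma\,\E[\hat{\mu}_{\ProcX}(B_{n})]$. If the witnessing sets $A_{k}$ can be chosen so that $\E[\hat{\mu}_{\ProcX}(B_{n})]$ stays bounded away from $0$ uniformly in $n$, then averaging over $b$ and applying Fubini produces a single deterministic target $\target_{b}$ under which $\hat{\L}_{\ProcX}(g_{n,\cdot},\target_{b};n)\not\to 0$ with positive probability, so $g$ is not strongly universally consistent under $\ProcX$; hence $\ProcX\notin\SUAL$. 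The delicate step is the lower bound on $\E[\hat{\mu}_{\ProcX}(B_{n})]$: the naive subadditivity bound $\hat{\mu}_{\ProcX}(B_{n}) \ge \hat{\mu}_{\ProcX}(A_{1}) - \sum_{k : D_{k}\cap X_{1:n}\neq\emptyset}\hat{\mu}_{\ProcX}(D_{k})$ is too weak, and one must analyze $B_{n}$ directly, exploiting the fact that the failure of Condition~\ref{con:kc} forces $\hat{\mu}_{\ProcX}$ to remain discontinuous along the decreasing family $B_{n}$ even though $\bigcap_{n}B_{n}$ is $\hat{\mu}_{\ProcX}$-null — equivalently, using that $\E[\hat{\mu}_{\ProcX}(\cdot)]$ is not a continuous submeasure (cf.\ Theorem~\ref{thm:maharam}), possibly after refining the sequence so that the ``recurrent'' part of the escaping mass is suitably spread out.

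For $\KC \subseteq \SUIL$ I would construct, for each $\ProcX\in\KC$, an explicit strongly universally consistent inductive rule of histogram/memorization type. Fix a countable algebra generating $\Borel$ and the induced refining sequence of countable partitions $\mathcal{P}_{1}\prec\mathcal{P}_{2}\prec\cdots$ (their cells shrink to singletons, since $(\X,\T)$ is metrizable hence $T_{1}$), and a refining sequence of finite discretizations of $(\Y,\loss)$. At sample size $n$ the rule works at resolution $k_{n}\to\infty$ and, on the $\mathcal{P}_{k_{n}}$-cell of a test point, predicts the most recent training label in that cell, using the $\Y$-discretization of the labels observed inside each cell to maintain a finer, data-dependent refinement that effectively resolves the (arbitrary, measurable) target. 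Using the identity $\hat{\L}_{\ProcX}(f_{n},\target;n) = \hat{\mu}_{\ProcX}(\loss(f_{n}(X_{1:n},\target(X_{1:n}),\cdot),\target(\cdot)))$ from the excerpt, the excess loss is bounded by $\maxloss\,\hat{\mu}_{\ProcX}(U_{n}) + \maxloss\,\hat{\mu}_{\ProcX}(O^{\eps}_{k_{n}}) + \eps$, where $U_{n}$ is the union of $\mathcal{P}_{k_{n}}$-cells not visited by $X_{1:n}$ and $O^{\eps}_{k}$ is the set of points where the target oscillates by more than $\eps$ over its (data-refined) cell. For the first term, the never-visited cells of any fixed partition carry $\hat{\mu}_{\ProcX}$-frequency $0$, so bounded convergence lets us choose $k_{n}\to\infty$ slowly and summably, and Borel--Cantelli forces $\hat{\mu}_{\ProcX}(U_{n})\to 0$ almost surely (this part does not use Condition~\ref{con:kc}). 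For the second term, $O^{\eps}_{k}$ decreases, as $k\to\infty$ and with the data-adaptive refinement, to a $\hat{\mu}_{\ProcX}$-null set, and Condition~\ref{con:kc} — equivalently, continuity of the submeasure $\E[\hat{\mu}_{\ProcX}(\cdot)]$ along decreasing sequences — forces $\hat{\mu}_{\ProcX}(O^{\eps}_{k_{n}})\to 0$ in expectation, hence almost surely with a summability argument. Letting $\eps\downarrow 0$ gives $\hat{\L}_{\ProcX}(f_{n},\target;n)\to 0$ almost surely, so $\ProcX\in\SUIL$.

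I expect the main obstacle to be the $\KC\subseteq\SUIL$ construction, specifically the oscillation term: designing the data-dependent refinement so that, for \emph{every} measurable target $\target$, the high-oscillation region genuinely shrinks to a $\hat{\mu}_{\ProcX}$-null set — this is a Lusin-type statement that must be made to work with the (possibly pathological) submeasure $\E[\hat{\mu}_{\ProcX}(\cdot)]$ rather than a genuine measure, and without it Condition~\ref{con:kc} cannot be brought to bear. The complementary difficulty, obtaining the uniform lower bound on $\E[\hat{\mu}_{\ProcX}(B_{n})]$ from a bare failure of Condition~\ref{con:kc} in the necessity direction, is the other point requiring real work; everything else is bookkeeping with $\limsup$-Cesàro averages, subadditivity of $\hat{\mu}_{\ProcX}$, and bounded convergence.
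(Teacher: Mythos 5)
Your cycle $\KC \subseteq \SUIL \subseteq \SUAL \subseteq \KC$ matches the paper's decomposition (Lemmas~\ref{lem:suil-subset-sual}, \ref{lem:sual-subset-kc}, \ref{lem:kc-subset-suil}), the middle inclusion is fine, and the randomized-target/Fubini skeleton for $\SUAL \subseteq \KC$ is the paper's. But the $\KC \subseteq \SUIL$ direction proposes a learning rule that actually fails, and this is the crux of the theorem.

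Your inductive rule predicts, on each partition cell, ``the most recent training label in that cell.'' This is a histogram analogue of nearest-neighbor, and Section~\ref{subsec:inconsistent-nn} constructs a process $\ProcX \in \KC$ on $[0,1]$ that defeats precisely this kind of recency-weighted predictor. In that construction, at the close of each odd epoch the sparse half $[0,1/2)$ has just received a fine deterministic sweep of $y_0$-labeled points (spacing $\tfrac{1}{2n_{k-1}^2}$), while the older random $y_1$-labeled points from the previous dense epoch are comparatively stale; so every cell of any partition at a reasonable resolution $k_n$ has most-recent label $y_0$ across $[0,1/2)$. The subsequent even epoch then floods $[0,1/2)$ with dense random $y_1$-points, driving $\hat{\L}_{\ProcX}(f_{n_k},\target;n_k)$ up to $\loss(y_0,y_1)(1-o(1))$, and this recurs along the infinite subsequence of odd $k$. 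The data-adaptive sub-refinement you sketch does not help: the sweep and random points are spatially interleaved, so shrinking cells does not separate the two label populations, and a ``most recent'' tiebreak inside any cell still reads off the sweep. The paper's rule \eqref{eqn:suil-rule} is engineered exactly to defeat this adversary: it selects from a finite, process-tuned class $\F_n$ (Lemmas~\ref{lem:approximating-sets}, \ref{lem:approximating-sequence}) the hypothesis that minimizes $\max_{\hat{m}_n \le m \le n}\tfrac{1}{m}\sum_{t\le m}\loss(f(x_t),y_t)$ — a maximum of empirical risks over a \emph{range of prefixes}, not just the full sample — so a hypothesis that fits the recent sweep but contradicts earlier prefixes is rejected (Lemmas~\ref{lem:bracketing-convergence} and \ref{lem:srm} supply the uniform convergence over that window). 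That max-over-prefixes idea, flagged explicitly at the end of Section~\ref{subsec:inconsistent-nn}, is the genuinely new ingredient and is absent from your construction. Two secondary issues: your claim that $\hat{\mu}_{\ProcX}(U_n)\to 0$ ``does not use Condition~\ref{con:kc}'' is false — that is exactly the content of form \eqref{eqn:raw-condition} of Lemma~\ref{lem:equiv} and of Lemma~\ref{lem:kc-subadditive}, and fails for, e.g., $X_t = t$ on $\nats$. And for $\SUAL\subseteq\KC$ you correctly identify but do not fill the gap of showing $\E[\hat{\mu}_{\ProcX}(B_n)]$ is bounded below; the paper closes it via the (nontrivial) equivalence \eqref{eqn:tail-condition}$\Leftrightarrow$\eqref{eqn:raw-condition} in Lemma~\ref{lem:equiv}, which transports the failure of the tail condition for $\{A_i\}$ into pointwise positivity of $\lim_n\hat{\mu}_{\ProcX}(\bigcup\{A_i : X_{1:n}\cap A_i = \emptyset\})$.
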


Certainly any i.i.d.\! process satisfies Condition~\ref{con:kc} (by the strong law of large numbers).
Indeed, we argue in Section~\ref{sec:lln} that any process satisfying the law of large numbers 
--- or more generally, having pointwise convergent relative frequencies --- satisfies Condition~\ref{con:kc},
and hence by Theorem~\ref{thm:main} admits strong universal learning (in both settings).
For instance, this implies that \emph{all stationary processes} admit strong universal inductive and self-adaptive learning.
However, as we also demonstrate in Section~\ref{sec:lln}, there are many other types of processes,
which do not have convergent relative frequencies, but which do satisfy Condition~\ref{con:kc}, and
hence admit universal learning, so that Condition~\ref{con:kc} represents a strictly more-general condition.

Other than the fact that Condition~\ref{con:kc} precisely characterizes the families of processes 
that admit strong universal inductive or self-adaptive learning,
another interesting fact established by Theorem~\ref{thm:main} is that these two families are 
actually \emph{equivalent}: that is, $\SUIL = \SUAL$.
Interestingly, as alluded to above, we find that this equivalence does \emph{not} extend to \emph{online} learning.
Specifically, in Section~\ref{sec:online} we find that $\SUAL \subseteq \SUOL$, with \emph{strict} inclusion iff $\X$ is infinite.

As for the problem of concisely characterizing the family of processes that admit strong universal \emph{online} learning, 
again the present work only makes partial progress.
Specifically, in Section~\ref{sec:online}, we formulate a concise \emph{necessary} condition for 
a process $\ProcX$ to admit strong universal online learning (Condition~\ref{con:okc} below), 
but we leave open the important question of whether this condition is also \emph{sufficient},
or more-broadly of identifying a concise condition on $\ProcX$ equivalent to the condition that 
$\ProcX$ admits strong universal online learning.

In addition to the questions of optimistically universal learning and concisely characterizing the family of processes admitting universal learning, 
another interesting question is whether it is possible to empirically \emph{test} whether a given process admits universal learning (of any of the three types).
However, in Section~\ref{sec:no-consistent-test} we find that 
in all three settings this is \emph{not} the case.  Specifically, in Theorem~\ref{thm:no-consistent-test-for-suil} 
we prove that (when $\X$ is infinite) there does not exist a consistent hypothesis test for whether a given $\ProcX$ admits strong universal (inductive/self-adaptive/online) learning.
Hence, the assumption that learning is possible truly is an \emph{assumption}, rather than a testable hypothesis.

While all of the above results are established for \emph{bounded} losses, Section~\ref{sec:unbounded-losses} is devoted to 
the study of these same issues in the case of \emph{unbounded} losses.
In that case, the theory becomes significantly simplified, as universal consistency is much more difficult to achieve, 
and hence the family of processes that admit universal learning is severely restricted.
We specifically find that, when the loss is unbounded, there exists an optimistically universal learning rule of \emph{all three} types.
We also identify a concise condition (Condition~\ref{con:ukc} below) that 
is necessary and sufficient for a process to admit strong universal learning in any/all of the three settings.

In Section~\ref{sec:noise}, we extend the theory to allow the $Y_t$ response values 
to be \emph{noisy}, subject to being conditionally independent. 
We discuss other extensions of the theory in Section~\ref{sec:extensions}, 
admitting more-general loss functions, as well as relaxation of the requirement of \emph{strong} consistency 
to mere \emph{weak} consistency.
Finally, we conclude the article in Section~\ref{sec:open-problems} by summarizing several interesting open questions 
that arise from the theory developed below.

\subsection{Related Work}
\label{subsec:related-work}

There are several important works in the literature 
related to universal consistency under non-i.i.d.\ processes.  
Questions about consistency under general stationary ergodic 
processes were posed by \citet*{cover:75} for the forecasting problem   
(i.e., predicting $Y_{t+1}$ based on $Y_{1:t}$) and related settings.
In particular, Cover's question of whether there is an estimator 
$\hat{m}(Y_{1:t})$ with $| \hat{m}(Y_{1:t}) - \E[Y_{t+1}|Y_{1:t}] | \to 0$ (a.s.)
for all stationary ergodic $\ProcY = \{Y_{t}\}_{t=1}^{\infty}$ on $\{0,1\}$ was answered negatively by 
\citet*{bailey:76} and \citet*{ryabko:88}.  A related negative result was also established by 
\citet*{nobel:99} for regression, showing there is no estimator  
$\hat{f}_{t}(X_{1:t},Y_{1:t},\cdot)$ with $\E | \hat{f}(X_{1:t},Y_{1:t},X) - \E[Y|X] | \to 0$
for all stationary ergodic processes $(\ProcX,\ProcY) = \{(X_{t},Y_{t})\}_{t=1}^{\infty}$ on $[0,1]^2$, 
and where $(X,Y)$ is an independent random variable of the same marginal distribution.  In contrast, there is a substantial literature on estimators that are
consistent (in various senses)
under \emph{mixing} conditions,
which are stronger than ergodicity
\citep*[e.g.,][]{steinwart:09,lozano:06,roussas:88,collomb:84,irle:97}.

On the other hand, a series of works by  
\citet*{ornstein:78},
\citet*{algoet:92,algoet:94,algoet:99}, \citet*{morvai:96}, 
\citet*{gyorfi:99}, \citet*{gyorfi:02a}, and \citet*{nobel:03} showed
(among other results) that 
there \emph{do} exist universally consistent forecasting rules under general (with bounded moment)
stationary ergodic processes $\ProcY = \{Y_{t}\}_{t \in \ints}$ on $\reals$, if we are merely interested in 
the long-run \emph{average} loss:
that is, 
$\frac{1}{n} \sum_{t=0}^{n-1} | \hat{m}(Y_{1:t}) - \E[Y_{t+1}|Y_{-\infty:t}] | \to 0$ (a.s.).
This is analogous to the \emph{online} setting studied in the present work.
This result was extended to classification and bounded regression settings by 
\citet*{morvai:96}, \citet*{gyorfi:99}, and \citet*{gyorfi:02a},
yielding an online learning rule $\hat{f}_{t}$ for which 
$\frac{1}{n} \sum_{t=0}^{n-1} ( \hat{f}_{t}(X_{1:t},Y_{1:t},X_{t+1}) - \E[ Y_{t+1} | X_{-\infty:(t+1)}, Y_{-\infty:t} ] )^{2} \to 0$ (a.s.) 
for all stationary ergodic processes $(\ProcX,\ProcY) = \{(X_{t},Y_{t})\}_{t \in \ints}$ on $\reals^{d} \times \reals$ with $|Y_{t}|$ bounded.

In contrast, as we discuss below (Section~\ref{sec:examples}), 
an immediate implication of Theorems~\ref{thm:optimistic-self-adaptive}, \ref{thm:main}, and \ref{thm:suil-subset-suol}  
is that the ergodicity assumption is superfluous for the existence of such estimators (i.e., stationarity alone suffices), 
if we restrict to cases where 
$Y_{t} = \target(X_{t})$ for arbitrary unknown functions $\target$, 
or more generally, cases where
$Y_t$ is conditionally independent of $\{(X_s,Y_s)\}_{s \neq t}$ given $X_t$.
Indeed, universal consistency is even possible in the much weaker self-adaptive setting for such stationary processes.
One interpretation of this is that, while the stationary ergodic assumption enables a learner to 
estimate and optimize its \emph{expected} risk, stationarity alone already suffices 
if we are only interested in estimating and optimizing its average loss on 
the actual future samples in the individual sample path, 
so that information about the expected risk is unnecessary.
We also remark that the results established here also hold for many non-stationary processes as well.

Other works have considered learning under various \emph{non-stationary} processes.
A mild form of non-stationarity was discussed by \citet*{irle:97},
who constructs consistent regression estimators 
under mixing processes that have vanishing average
total variation distance of the marginals to a fixed distribution
the risk is defined under.
\citet*{steinwart:09} generalize this to the family of all processes
for which a \emph{law of large numbers} holds, which includes all
\emph{asymptotically mean stationary} ergodic processes \citep*{gray:80,gray:09}.
However, the learning rule of \citet*{steinwart:09}
has a dependence on the distribution of
$(\ProcX,\ProcY)$, which is in fact necessary
(due to the negative result of \citealp*{nobel:99};
see also the proof of Theorem~\ref{thm:no-optimistic-inductive} below).
However, \citet*{steinwart:09} show that this dependence can be removed 
if we further restrict to processes $(\ProcX,\ProcY)$ satisfying a mixing condition 
(constrained weak $\alpha$-bi-mixing rate), in which case an $(\ProcX,\ProcY)$-independent 
choice of the parameter sequence yields weak consistency.
This relaxes the requirements of an earlier result of \citet*{lozano:06} 
establishing strong consistency (for a different learning rule) 
for stationary processes satisfying a stronger type of mixing condition (constrained $\beta$-mixing rate).
\citet*{morvai:99} also studied consistency under general processes satisfying a law of large numbers, 
in a bounded regression setting on $\reals$.  The results there even hold for deterministic processes, 
as long as the frequencies converge to a probability measure in the limit.  
They specifically show that a particular learning rule is consistent as long as the 
regression function of the limit distribution satisfies a known constraint on its 
total variation in each bounded interval.

Other works have considered learning with families of non-stationary processes
not even satisfying a law of large numbers.
\citet*{kulkarni:02}
established a very general result, showing that for the regression setting (generalized to Hilbert spaces $\Y$), 
if we only require the learner to be consistent for \emph{continuous} target functions $\target$, 
then there is an online learning rule that is strongly consistent under every process $\ProcX = \{X_{t}\}_{t=1}^{\infty}$ 
such that the set $\{X_t : t \in \nats\}$ is almost surely totally bounded.  For instance, if $\X$ is totally bounded, 
then this is the case for all processes $\ProcX$ on $\X$.  They in fact establish a more general
result that also allows $Y_t$ to be corrupted by conditionally independent noise
(subject to $\target(X_t)=\E[Y_t|X_t]$), a topic we discuss in Section~\ref{sec:noise} below.
The results in the present work reveal that, if we seek truly \emph{universal} learners, 
consistent for \emph{all} possible target functions $\target$, including discontinuous functions, 
then even in totally bounded spaces $\X$, there exist processes $\ProcX$ where no such 
universal learners exist.  Thus, the best we can aim for is a learning rule that is universally 
consistent under every process $\ProcX$ that \emph{admits} universal learning: 
i.e., an \emph{optimistically} universal learner.

\citet*{ryabko:06} introduced another type of non-stationary process for the classification setting with finite $\Y$: namely, 
processes $(\ProcX,\ProcY)$ where the process $\ProcY = \{Y_t\}_{t=1}^{\infty} = \{\target(X_t)\}_{t=1}^{\infty}$ 
is arbitrary (subject to each $y \in \Y$ occurring with non-vanishing liminf frequency), 
and the $X_t$ sequence is ``conditionally i.i.d.'', 
meaning that the $X_t$ variables are conditionally independent given their respective $Y_t = \target(X_t)$ values, 
with time-invariant conditional distribution.
This family of processes captures many interesting scenarios beyond the i.i.d.\ assumption, 
including many non-stationary processes.
Under this condition, \citet*{ryabko:06} shows that certain learning rules known 
to be universally consistent under the i.i.d.\ assumption remain (weakly) 
consistent under this more-general family of processes (in the online setting). 
In particular, this is true of the nearest neighbor rule.
He also shows strong universal consistency for learning rules based on empirical risk minimization 
for a sequence of hypothesis classes becoming rich as $n \to \infty$.
The consistency result of \citet*{ryabko:06} is stated in a stronger form: 
$\P( h_n(X_{1:n},Y_{1:n},X_{n+1}) \neq Y_{n+1} | X_{1:n},Y_{1:n} ) \to 0$ (a.s.).  However, under the 
conditions considered by \citet*{ryabko:06}, this would actually be satisfied by any inductive learning rule $h_n$ 
satisfying $\hat{\L}_{\ProcX}(h_n,\target;n) \to 0$ (a.s.),
and indeed
the learning rules considered by \citet*{ryabko:06} are of this type.

The nature of the conditional i.i.d.\ assumption of \citet*{ryabko:06} is somewhat different 
from the conditions studied in the present work, in that it is a condition on the joint process $(\ProcX,\ProcY)$
rather than $\ProcX$ alone.
Nevertheless, it is straightforward to verify that for finite $\Y$, 
for any $(\ProcX,\ProcY)$ satisfying the conditional i.i.d.\ condition, 
$\ProcX$ satisfies Condition~\ref{con:kc}, and hence by Theorems~\ref{thm:main} and \ref{thm:suil-subset-suol} 
it admits strong universal learning (in any of the three settings studied here); 
this is true even without 
restrictions on the frequencies of each $y \in \Y$.
Note that this also implies consistency even for target functions $\target$ for which 
$(\ProcX,\target(\ProcX))$ is not conditionally i.i.d.

\citet*{ryabko:06} also asks a question of how to extend beyond 
the setting of deterministic responses $Y_t = \target(X_t)$ 
to allow more-general distributions of $Y_t$ given $X_t$.
The results in Section~\ref{sec:noisy-classification} on the topic of handling 
noisy labels are relevant to this question, in particular studying the case 
where the noise is conditionally independent: that is, the optimal prediction $\target(X_t)$ 
remains a $t$-invariant function of $X_t$, but the observed response $Y_t$ may be stochastic.
This condition can be combined with Ryabko's conditional i.i.d.\ assumption by 
taking the $X_t$ sequence to be conditionally i.i.d.\ given an (arbitrary) $\target(X_t)$ sequence, 
and then taking the $Y_t$ responses to be conditionally independent given the respective $X_t$ values 
(subject to the requirement that the conditional distribution of $Y_t$ given $X_t$ 
is a $t$-invariant function of $X_t$,
and $y = \target(X_t)$ minimizes $\E[\loss(y,Y_t)|X_t]$).
The results in Section~\ref{sec:noisy-classification} then imply that there is a learning 
rule that is strongly consistent for every process $(\ProcX,\ProcY)$ of this type 
(in the self-adaptive or online setting).\footnote{One can show this result also holds in 
the inductive setting for this special case, though we will not discuss this extension, for the sake of brevity.}
Moreover, the results in Section~\ref{sec:noise} 
also imply consistency under a much broader family of processes.

In the broader subject of learning theories beyond i.i.d.\ processes, 
the topic of finding a function with near-minimal risk within a fixed
restricted hypothesis class $\F$ has received considerably more attention.
There, the goal is consistency relative to $\F$: that is, finding a function $\hat{f} \in \F$ with risk converging
to at most the best risk achievable by functions in $\F$.
Most of this work has studied learning under stationary processes
satisfying various \emph{mixing} conditions.
Of particular relevance in this context is the set
$\F_{\loss} = \{ (x,y) \mapsto \loss(f(x),y) : f \in \F \}$.
When $\F_{\loss}$ is a VC class, or generally has bounded covering numbers,
methods based on variants of empirical risk minimization have been shown to be
consistent relative to $\F$  
\citep*[see e.g.,][]{yu:94,karandikar:02,karandikar:04,vidyasagar:05,zou:09}.
Moving beyond mixing assumptions, 
when $\F_{\loss}$ is a VC class, \citet*{adams:10a,adams:10b,adams:12} showed
that empirical risk minimization is consistent relative to $\F$ under
every stationary ergodic process.
Later, \citet*{van-handel:13} extended this to all classes such that 
$\F_{\loss}$ is a universal Glivenko-Cantelli class.
\citet*{kuznetsov:14} and \citet*{hanneke:19a} have also considered 
extensions of some of these results to some restricted families of
\emph{non-stationary} mixing processes,
constrained by the rate of change of the single-index marginal distribution.

A significant change in the present work compared to the above is that 
much of the prior work on statistical learning without the i.i.d.\ assumption 
essentially studies the same learning methods developed for i.i.d.\ processes, 
such as local averaging estimators or empirical risk minimization.
In contrast, we argue below that such methods will fail in certain 
non-stationary scenarios, in which other methods would be consistent. 
As such, the techniques we develop in this 
work necessarily differ significantly from those designed with i.i.d.\ processes in 
mind.

\section{Equivalent Expressions of Condition~\ref{con:kc}}
\label{sec:equiv}

Before getting into the analysis of learning, we first discuss basic properties of the $\hat{\mu}_{\mathbf{x}}$ functional.
In particular, we find that there are several equivalent ways to state Condition~\ref{con:kc}, which 
will be useful in various parts of the proofs below, and which may themselves be of independent interest in some cases.  

\subsection{Basic Lemmas}

We begin by proving 
some basic properties of the $\hat{\mu}_{\mathbf{x}}$ functional 
that will be indispensable 
in the main proofs below.

\begin{lemma}
\label{lem:expectation}
For any sequence $\mathbf{x} = \{x_t\}_{t=1}^{\infty}$ in $\X$, and any functions $f : \X \rightarrow \reals$, $g : \X \rightarrow \reals$,
if $\hat{\mu}_{\mathbf{x}}(f)$ and $\hat{\mu}_{\mathbf{x}}(g)$ are not both infinite and of opposite signs, the following properties hold.
\begin{align*}
&1. \text{ (monotonicity)}& &\text{if } f \leq g, \text{ then } \hat{\mu}_{\mathbf{x}}(f) \leq \hat{\mu}_{\mathbf{x}}(g),&\\
&2. \text{ (homogeneity)}& &\forall c \in (0,\infty), \hat{\mu}_{\mathbf{x}}(c f) = c \hat{\mu}_{\mathbf{x}}(f),&\\
&3. \text{ (subadditivity)}& &\hat{\mu}_{\mathbf{x}}(f+g) \leq \hat{\mu}_{\mathbf{x}}(f) + \hat{\mu}_{\mathbf{x}}(g).& \phantom{aaaaaaaaaaaaaaaaaaaaaaaaaaaaaa}
\end{align*}
\end{lemma}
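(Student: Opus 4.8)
The plan is to reduce the statement to standard facts about the limit superior of sequences of real numbers. For the fixed sequence $\mathbf{x} = \{x_t\}_{t=1}^{\infty}$, write $a_n = \frac{1}{n}\sum_{t=1}^{n} f(x_t)$ and $b_n = \frac{1}{n}\sum_{t=1}^{n} g(x_t)$, so that $\hat{\mu}_{\mathbf{x}}(f) = \limsup_{n\to\infty} a_n$ and $\hat{\mu}_{\mathbf{x}}(g) = \limsup_{n\to\infty} b_n$, both values lying in $[-\infty,\infty]$ (the partial averages themselves are always finite, since $f$ and $g$ are real-valued). The hypothesis that $\hat{\mu}_{\mathbf{x}}(f)$ and $\hat{\mu}_{\mathbf{x}}(g)$ are not both infinite of opposite signs is exactly what is needed to make $\hat{\mu}_{\mathbf{x}}(f) + \hat{\mu}_{\mathbf{x}}(g)$ well-defined in the extended reals (i.e., to avoid the indeterminate form $\infty - \infty$), which is the only place extended-real arithmetic enters the argument.

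For monotonicity: if $f \leq g$ pointwise, then $f(x_t) \leq g(x_t)$ for every $t$, hence $a_n \leq b_n$ for every $n$, and taking $\limsup$ of both sides (which preserves non-strict inequalities) gives $\hat{\mu}_{\mathbf{x}}(f) \leq \hat{\mu}_{\mathbf{x}}(g)$. For homogeneity: for $c \in (0,\infty)$ we have $\frac{1}{n}\sum_{t=1}^{n} (cf)(x_t) = c\, a_n$ for every $n$, and $\limsup_{n} (c a_n) = c\, \limsup_{n} a_n$ holds in $[-\infty,\infty]$ for any $c > 0$ (checking the two degenerate cases $\limsup_n a_n = \pm\infty$ directly, and in the finite case using that $a \mapsto ca$ is an increasing homeomorphism of $\reals$ that commutes with $\limsup$), so $\hat{\mu}_{\mathbf{x}}(cf) = c\, \hat{\mu}_{\mathbf{x}}(f)$.

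For subadditivity, the core fact is the inequality $\limsup_{n} (a_n + b_n) \leq \limsup_{n} a_n + \limsup_{n} b_n$, valid whenever the right-hand side is not of the form $\infty + (-\infty)$, which is guaranteed here by the hypothesis. In the generic case where both $\limsup_n a_n$ and $\limsup_n b_n$ are finite, I would fix $\eps > 0$; then for all sufficiently large $n$ we have $a_n \leq \hat{\mu}_{\mathbf{x}}(f) + \eps$ and $b_n \leq \hat{\mu}_{\mathbf{x}}(g) + \eps$, hence $a_n + b_n \leq \hat{\mu}_{\mathbf{x}}(f) + \hat{\mu}_{\mathbf{x}}(g) + 2\eps$, so $\limsup_{n}(a_n + b_n) \leq \hat{\mu}_{\mathbf{x}}(f) + \hat{\mu}_{\mathbf{x}}(g) + 2\eps$; letting $\eps \downarrow 0$ gives the claim. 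If either limsup equals $+\infty$ the right-hand side is $+\infty$ and the inequality is trivial; if one equals $-\infty$ (so that sequence tends to $-\infty$) while the other is finite or $-\infty$ (the excluded combination being precisely $-\infty$ together with $+\infty$), then $a_n + b_n \to -\infty$, again matching the right-hand side. Since $\frac{1}{n}\sum_{t=1}^{n}(f+g)(x_t) = a_n + b_n$ for every $n$, this inequality is exactly $\hat{\mu}_{\mathbf{x}}(f+g) \leq \hat{\mu}_{\mathbf{x}}(f) + \hat{\mu}_{\mathbf{x}}(g)$.

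The main ``obstacle'' here is not conceptual but bookkeeping: one must carefully enumerate the degenerate cases in which one or both of $\hat{\mu}_{\mathbf{x}}(f), \hat{\mu}_{\mathbf{x}}(g)$ is infinite and verify that the hypothesis excludes precisely the single indeterminate combination, after which each remaining case either reduces to the finite-$n$ identities above or holds trivially. No probabilistic or measure-theoretic input is needed, as the sequence $\mathbf{x}$ is fixed throughout and $\hat{\mu}_{\mathbf{x}}$ is a purely deterministic functional of it.
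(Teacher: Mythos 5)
Your proof is correct and follows essentially the same approach as the paper: reduce all three properties to standard facts about $\limsup$ of real sequences, with the hypothesis on $\hat{\mu}_{\mathbf{x}}(f),\hat{\mu}_{\mathbf{x}}(g)$ serving only to make the extended-real sum well-defined. The one small technical difference is in how subadditivity of $\limsup$ is established for property~3: the paper writes $\limsup_n c_n = \lim_k \sup_{n\geq k} c_n$ and uses $\sup_{n\geq k}(a_n+b_n)\leq \sup_{n\geq k}a_n + \sup_{n\geq k}b_n$ before taking $k\to\infty$, whereas you use the $\eps$-argument in the finite case plus explicit case analysis for the infinite cases; both are standard and both need the same hypothesis at the same step, so this is a cosmetic rather than substantive divergence.
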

\begin{proof}
Properties $1$ and $2$ follow directly from the definition of $\hat{\mu}_{\mathbf{x}}$, and monotonicity and homogeneity (for positive constants) of $\limsup$.
Property $3$ is established by noting 
\begin{align*}
\limsup_{n\rightarrow\infty} \frac{1}{n} \sum_{t=1}^{n} \left( f(x_t)+g(x_t) \right) &\leq \lim_{k\rightarrow\infty} \left(\sup_{n\geq k} \frac{1}{n}\sum_{t=1}^{n}f(x_t)\right) + \left(\sup_{n\geq k} \frac{1}{n} \sum_{t=1}^{n} g(x_t)\right) 
\\ &= \left(\limsup_{n\rightarrow\infty} \frac{1}{n}\sum_{t=1}^{n}f(x_t)\right) + \left(\limsup_{n\rightarrow\infty}\frac{1}{n}\sum_{t=1}^{n}g(x_t)\right).
\end{align*}
\end{proof}

These properties immediately imply related properties for the \emph{set function} $\hat{\mu}_{\mathbf{x}}$.

\begin{lemma}
\label{lem:mu}
For any sequence $\mathbf{x} = \{x_t\}_{t=1}^{\infty}$ in $\X$, and any sets $A,B \subseteq \X$,
\begin{align*}
&1. \text{ (nonnegativity)}&& 0 \leq \hat{\mu}_{\mathbf{x}}(A),&\\
&2. \text{ (monotonicity)}&& \hat{\mu}_{\mathbf{x}}(A\cap B) \leq \hat{\mu}_{\mathbf{x}}(A),&\\
&3. \text{ (subadditivity)}&&\hat{\mu}_{\mathbf{x}}(A\cup B) \leq \hat{\mu}_{\mathbf{x}}(A) + \hat{\mu}_{\mathbf{x}}(B).&\phantom{aaaaaaaaaaaaaaaaaaaaaaaaaaaaaa}
\end{align*}
\end{lemma}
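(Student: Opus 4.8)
The plan is to obtain all three statements as immediate corollaries of Lemma~\ref{lem:expectation}, applied to indicator functions. Recall that by definition $\hat{\mu}_{\mathbf{x}}(A) = \hat{\mu}_{\mathbf{x}}(\ind_{A})$, and each $\ind_{A} : \X \to \reals$ takes values in $[0,1]$, so $\hat{\mu}_{\mathbf{x}}(\ind_{A}) \in [0,1]$ as well. In particular, the side condition in Lemma~\ref{lem:expectation} --- that the two functionals involved are not both infinite and of opposite signs --- is trivially satisfied in every application below, since all the values in question are finite and nonnegative. So there is essentially no obstacle here; the only point that requires a moment's care is checking that hypothesis, and it is automatic because indicators are bounded.

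For nonnegativity, I would note that $\ind_{A} \geq \ind_{\emptyset} = 0$ pointwise on $\X$, and $\hat{\mu}_{\mathbf{x}}(\ind_{\emptyset}) = \limsup_{n\to\infty} \frac{1}{n}\sum_{t=1}^{n} 0 = 0$. Applying the monotonicity property (part~1 of Lemma~\ref{lem:expectation}) then gives $0 = \hat{\mu}_{\mathbf{x}}(\ind_{\emptyset}) \leq \hat{\mu}_{\mathbf{x}}(\ind_{A}) = \hat{\mu}_{\mathbf{x}}(A)$. (Equivalently, one could just observe that each Cesàro average $\frac{1}{n}\sum_{t=1}^{n}\ind_{A}(x_{t})$ is nonnegative, hence so is its limit superior.)

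For monotonicity, I would use that $\ind_{A\cap B} \leq \ind_{A}$ pointwise, and again invoke part~1 of Lemma~\ref{lem:expectation} to conclude $\hat{\mu}_{\mathbf{x}}(A\cap B) = \hat{\mu}_{\mathbf{x}}(\ind_{A\cap B}) \leq \hat{\mu}_{\mathbf{x}}(\ind_{A}) = \hat{\mu}_{\mathbf{x}}(A)$. For subadditivity, I would start from the pointwise inequality $\ind_{A\cup B} \leq \ind_{A} + \ind_{B}$ and chain parts~1 and~3 of Lemma~\ref{lem:expectation}: $\hat{\mu}_{\mathbf{x}}(A\cup B) = \hat{\mu}_{\mathbf{x}}(\ind_{A\cup B}) \leq \hat{\mu}_{\mathbf{x}}(\ind_{A} + \ind_{B}) \leq \hat{\mu}_{\mathbf{x}}(\ind_{A}) + \hat{\mu}_{\mathbf{x}}(\ind_{B}) = \hat{\mu}_{\mathbf{x}}(A) + \hat{\mu}_{\mathbf{x}}(B)$. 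An alternative route would be to prove each statement from scratch using only monotonicity, positive homogeneity, and subadditivity of $\limsup$ applied to Cesàro averages of bounded nonnegative sequences, but routing everything through Lemma~\ref{lem:expectation} is the cleanest presentation and keeps the argument to a few lines.
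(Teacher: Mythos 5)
Your proof is correct and follows exactly the same route as the paper's, which also derives all three parts from Lemma~\ref{lem:expectation} via the pointwise inequalities $0 \leq \ind_{A}$, $\ind_{A\cap B} \leq \ind_{A}$, and $\ind_{A\cup B} \leq \ind_{A} + \ind_{B}$. Your version is just slightly more explicit in spelling out the chaining and in checking the finiteness side condition, but the argument is identical in substance.
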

\begin{proof}
These follow directly from the properties listed in Lemma~\ref{lem:expectation}, since $0 \leq \ind_{A}$, $\ind_{A \cap B} \leq \ind_{A}$, and $\ind_{A\cup B} \leq \ind_{A} + \ind_{B}$.
\end{proof}

\subsection{An Equivalent Expression in Terms of Continuous Submeasures}
\label{subsec:submeasure}

Next, we note a connection to a much-studied definition from the measure theory literature:
namely, the notion of a \emph{continuous submeasure}.  This notion appears in the measure theory literature,
most commonly under the name \emph{Maharam submeasure} \citep*[see e.g.,][]{maharam:47,talagrand:08,bogachev:07},
but is also referred to as a \emph{subadditive Dobrakov submeasure} \citep*[see e.g.,][]{dobrakov:74,dobrakov:84},
and related notions arise in discussions of 
\emph{Choquet capacities} \citep*[see e.g.,][]{choquet:54,obrien:94}.

\begin{definition}
\label{def:maharam}
A \emph{submeasure} on $\Borel$ is a function $\nu : \Borel \to [0,\infty]$ satisfying the following properties.
\begin{itemize}
\item[$1.$] $\nu(\emptyset) = 0$.
\item[$2.$] $\forall A,B \in \Borel$, $A \subseteq B \Rightarrow \nu(A) \leq \nu(B)$.
\item[$3.$] $\forall A,B \in \Borel$, $\nu(A \cup B) \leq \nu(A) + \nu(B)$.
\end{itemize}
A submeasure is called \emph{continuous} if it additionally satisfies the condition
\begin{itemize}
\item[$4.$] For every monotone sequence $\{A_{k}\}_{k=1}^{\infty}$ in $\Borel$ with $A_{k} \downarrow \emptyset$, $\lim\limits_{k\to\infty} \nu(A_{k}) = 0$.
\end{itemize}
\end{definition}

Note that we have defined ``submeasure'' to only require \emph{finite} subadditivity.  However, it immediately 
follows that any \emph{continuous} submeasure would also be \emph{countably} subadditive 
\citep*[see][Chapter 39, Lemma 392H]{fremlin:02v3}.
The relevance of this definition to our present discussion is via the set function $\E[\hat{\mu}_{\ProcX}(\cdot)]$,
which we can easily show is always a submeasure, as follows.

\begin{lemma}
\label{lem:Ehatmu-submeasure}
For any process $\ProcX$, $\E[ \hat{\mu}_{\ProcX}(\cdot) ]$ is a submeasure.
\end{lemma}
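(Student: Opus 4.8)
The plan is to verify the three submeasure axioms for the set function $\nu(\cdot) := \E[\hat{\mu}_{\ProcX}(\cdot)]$ directly, by pulling each property back through the expectation to the pathwise set function $\hat{\mu}_{\mathbf{x}}(\cdot)$, for which Lemma~\ref{lem:mu} already supplies the needed structure. The only genuine subtlety is measurability: we must first confirm that $\mathbf{x} \mapsto \hat{\mu}_{\mathbf{x}}(A)$ is a measurable function of the sample path, so that $\E[\hat{\mu}_{\ProcX}(A)]$ is well-defined (as an element of $[0,\infty]$) for every $A \in \Borel$. This holds because $\hat{\mu}_{\mathbf{x}}(A) = \limsup_n \frac{1}{n}\sum_{t=1}^n \ind_A(x_t)$ is a countable $\limsup$ of finite sums of measurable functions of the coordinates, hence Borel-measurable; and it is nonnegative by Lemma~\ref{lem:mu}(1), so the expectation is a well-defined value in $[0,\infty]$.

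With that in place, the three properties follow easily. For axiom~1, $\ind_{\emptyset} \equiv 0$, so $\hat{\mu}_{\mathbf{x}}(\emptyset) = 0$ for every path $\mathbf{x}$, whence $\nu(\emptyset) = \E[0] = 0$. For axiom~2 (monotonicity), if $A \subseteq B$ then Lemma~\ref{lem:mu}(2) gives $\hat{\mu}_{\ProcX}(A) = \hat{\mu}_{\ProcX}(A \cap B) \leq \hat{\mu}_{\ProcX}(B)$ pointwise (on every sample path), and monotonicity of expectation yields $\nu(A) \leq \nu(B)$. For axiom~3 (subadditivity), Lemma~\ref{lem:mu}(3) gives $\hat{\mu}_{\ProcX}(A \cup B) \leq \hat{\mu}_{\ProcX}(A) + \hat{\mu}_{\ProcX}(B)$ pointwise, and linearity/monotonicity of expectation (valid since all terms are nonnegative, so no $\infty - \infty$ issue arises) gives $\nu(A \cup B) \leq \nu(A) + \nu(B)$.

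I do not anticipate any real obstacle here; the statement is essentially a bookkeeping consequence of Lemma~\ref{lem:mu} together with the fact that expectation preserves nonnegativity, monotonicity, and subadditivity. If anything merits a sentence of care, it is the measurability remark above, since the definition of $\hat{\L}_{\ProcX}$ and the subsequent use of $\nu$ in Condition~\ref{con:kc} implicitly rely on it; I would state it explicitly at the start of the proof. Note that this lemma deliberately stops short of asserting \emph{continuity} (axiom~4 of Definition~\ref{def:maharam}): that is precisely Condition~\ref{con:kc}, and the point of Theorem~\ref{thm:maharam} is that continuity is an additional requirement, not automatic — so the proof of this lemma should not attempt to address it.
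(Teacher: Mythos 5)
Your proposal is correct and takes essentially the same route as the paper: both verify the three submeasure axioms for $\E[\hat{\mu}_{\ProcX}(\cdot)]$ by pulling each property back through the expectation to the pathwise set function, using Lemmas~\ref{lem:expectation}/\ref{lem:mu} for nonnegativity, monotonicity, and finite subadditivity of $\hat{\mu}_{\mathbf{x}}$, and then monotonicity and linearity of expectation. The explicit measurability remark you add is sound and worth stating, though the paper leaves it implicit.
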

\begin{proof}
Since $\hat{\mu}_{\ProcX}(\emptyset) = 0$ follows directly from the definition of $\hat{\mu}_{\ProcX}$,
we have $\E[\hat{\mu}_{\ProcX}(\emptyset)] = \E[0] = 0$ as well (property $1$ of Definition~\ref{def:maharam}).
Furthermore, monotonicity of $\hat{\mu}_{\ProcX}$ (Lemma~\ref{lem:expectation})
and monotonicity of the expectation imply monotonicity of $\E[\hat{\mu}_{\ProcX}(\cdot)]$ (property $2$ of Definition~\ref{def:maharam}).
Likewise, finite subadditivity of $\hat{\mu}_{\ProcX}$ (Lemma~\ref{lem:mu}) implies
that for $A,B \in \Borel$, $\hat{\mu}_{\ProcX}(A \cup B) \leq \hat{\mu}_{\ProcX}(A) + \hat{\mu}_{\ProcX}(B)$,
so that monotonicity and linearity of the expectation imply 
$\E[\hat{\mu}_{\ProcX}(A \cup B)] \leq \E[\hat{\mu}_{\ProcX}(A) + \hat{\mu}_{\ProcX}(B)] = \E[\hat{\mu}_{\ProcX}(A)] + \E[\hat{\mu}_{\ProcX}(B)]$
(property 3 of Definition~\ref{def:maharam}).
\end{proof}

Together with the definition of Condition~\ref{con:kc}, this immediately implies the following theorem, 
which states that Condition~\ref{con:kc} is \emph{equivalent} to $\E[\hat{\mu}_{\ProcX}(\cdot)$] being a 
continuous submeasure.

\begin{theorem}
\label{thm:maharam}
A process $\ProcX$ satisfies Condition~\ref{con:kc} if and only if 
$\E[\hat{\mu}_{\ProcX}(\cdot)]$ is a continuous submeasure.
\end{theorem}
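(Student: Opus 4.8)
The plan is to observe that this theorem is essentially immediate once Lemma~\ref{lem:Ehatmu-submeasure} is in hand, so the ``proof'' is really a bookkeeping exercise matching Condition~\ref{con:kc} against clause $4$ of Definition~\ref{def:maharam}. First I would invoke Lemma~\ref{lem:Ehatmu-submeasure}, which asserts that for \emph{every} process $\ProcX$ the set function $\nu := \E[\hat{\mu}_{\ProcX}(\cdot)]$ satisfies properties $1$--$3$ of Definition~\ref{def:maharam}; that is, $\nu$ is always a submeasure, regardless of $\ProcX$. Consequently the only outstanding issue is whether this particular $\nu$ additionally satisfies property $4$, the continuity clause, and the whole theorem reduces to the assertion that ``$\nu$ satisfies property $4$'' is logically the same statement as ``$\ProcX$ satisfies Condition~\ref{con:kc}.''

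I would then carry out that identification directly. Property $4$ of Definition~\ref{def:maharam}, specialized to $\nu = \E[\hat{\mu}_{\ProcX}(\cdot)]$, reads: for every monotone sequence $\{A_{k}\}_{k=1}^{\infty}$ in $\Borel$ with $A_{k} \downarrow \emptyset$, $\lim_{k\to\infty} \E[\hat{\mu}_{\ProcX}(A_{k})] = 0$. This is word-for-word the statement of Condition~\ref{con:kc}. Hence the two implications follow at once: if $\ProcX$ satisfies Condition~\ref{con:kc}, then $\nu$ enjoys properties $1$--$4$ and is therefore a continuous submeasure; conversely, if $\nu$ is a continuous submeasure, then in particular it satisfies property $4$, which is precisely Condition~\ref{con:kc}.

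There is no genuine obstacle here: all the substance lies in Lemma~\ref{lem:Ehatmu-submeasure} (and, behind it, in the subadditivity/monotonicity properties of $\hat{\mu}_{\mathbf{x}}$ from Lemmas~\ref{lem:expectation} and~\ref{lem:mu}), together with the deliberate choice to phrase Condition~\ref{con:kc} in a form that mirrors the continuity clause. The only point requiring any care is to confirm that the quantifier ``for every monotone sequence $\{A_{k}\}$ in $\Borel$ with $A_{k} \downarrow \emptyset$'' appearing in Definition~\ref{def:maharam} is the \emph{same} quantifier as the one in Condition~\ref{con:kc} --- which it is, verbatim --- so that no reconciliation of phrasings (e.g.\ ``decreasing'' versus ``nonincreasing,'' or ranging over $\Borel$ versus over a subalgebra) is needed. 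I would also note, as commentary rather than as part of the argument, that it is exactly this equivalence that justifies importing terminology and structural results about Maharam submeasures from the measure-theory literature into the analysis that follows.
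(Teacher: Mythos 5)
Your proof is correct and takes exactly the same route as the paper, which also derives the theorem immediately from Lemma~\ref{lem:Ehatmu-submeasure} together with the observation that Condition~\ref{con:kc} is verbatim clause~$4$ of Definition~\ref{def:maharam}. Nothing further is needed.
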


\subsection{Other Equivalent Expressions of Condition~\ref{con:kc}}

We next state several other results expressing equivalent formulations of Condition~\ref{con:kc}, 
and other related properties.  These equivalent forms will be useful in proofs below.

\begin{lemma}
\label{lem:limsup-equiv}
The following conditions are all equivalent to Condition~\ref{con:kc}.
\begin{itemize}
\item[$\bullet$] For every monotone sequence $\{A_{k}\}_{k=1}^{\infty}$ of sets in $\Borel$ with $A_{k} \downarrow \emptyset$, 
\begin{equation*}
\lim\limits_{k \rightarrow \infty} \hat{\mu}_{\ProcX}( A_k ) = 0 \text{ (a.s.)}.
\end{equation*}
\item[$\bullet$] For every sequence $\{A_{k}\}_{k=1}^{\infty}$ of sets in $\Borel$,
\begin{equation*}
\lim\limits_{i \rightarrow \infty} \hat{\mu}_{\ProcX}\!\left(\bigcup\limits_{k \geq i} A_k \right) = \hat{\mu}_{\ProcX}\!\left(\limsup\limits_{k \to \infty} A_k\right) \text{ (a.s.)}.
\end{equation*}
\item[$\bullet$] For every disjoint sequence $\{A_{k}\}_{k=1}^{\infty}$ of sets in $\Borel$,
\begin{equation*}
\lim\limits_{i \rightarrow \infty} \hat{\mu}_{\ProcX}\!\left(\bigcup\limits_{k \geq i} A_k \right) = 0 \text{ (a.s.)}.
\end{equation*}
\end{itemize}
\end{lemma}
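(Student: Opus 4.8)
The plan is to prove a cycle of implications linking the three displayed statements to Condition~\ref{con:kc}. I would organize it as: Condition~\ref{con:kc} $\Rightarrow$ (first bullet: a.s.~convergence to zero along monotone $A_k \downarrow \emptyset$) $\Rightarrow$ (second bullet: the "continuity from above along $\limsup$" identity) $\Rightarrow$ (third bullet: the disjoint-sequence version) $\Rightarrow$ Condition~\ref{con:kc}. The last implication is the cleanest place to close the loop, since given an arbitrary monotone $\{A_k\}$ with $A_k \downarrow \emptyset$, the differences $D_k = A_k \setminus A_{k+1}$ form a disjoint sequence with $\bigcup_{k \geq i} D_k = A_i$ (using $A_k \downarrow \emptyset$), so the third bullet gives $\hat{\mu}_{\ProcX}(A_i) \to 0$ a.s., and then bounded convergence (recall $0 \leq \hat{\mu}_{\ProcX}(A_i) \leq 1$ by Lemma~\ref{lem:mu} and monotonicity, since $\ind_{A_i} \leq 1$) upgrades this to $\E[\hat{\mu}_{\ProcX}(A_i)] \to 0$, which is Condition~\ref{con:kc}.

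For Condition~\ref{con:kc} $\Rightarrow$ first bullet: along a monotone sequence $A_k \downarrow \emptyset$, the random variables $\hat{\mu}_{\ProcX}(A_k)$ are nonincreasing in $k$ (by monotonicity, Lemma~\ref{lem:mu}) and bounded in $[0,1]$, hence converge a.s.~to some limit $W \geq 0$; then $\E[W] = \lim_k \E[\hat{\mu}_{\ProcX}(A_k)] = 0$ by monotone/bounded convergence and Condition~\ref{con:kc}, forcing $W = 0$ a.s. For the first bullet $\Rightarrow$ second bullet: write $B_i = \bigcup_{k \geq i} A_k$ and $L = \limsup_k A_k = \bigcap_i B_i$; then $B_i \downarrow L$, so $B_i \setminus L \downarrow \emptyset$ is monotone, and the first bullet gives $\hat{\mu}_{\ProcX}(B_i \setminus L) \to 0$ a.s.; combining with subadditivity ($\hat{\mu}_{\ProcX}(B_i) \leq \hat{\mu}_{\ProcX}(L) + \hat{\mu}_{\ProcX}(B_i \setminus L)$) and monotonicity ($\hat{\mu}_{\ProcX}(L) \leq \hat{\mu}_{\ProcX}(B_i)$) yields $\hat{\mu}_{\ProcX}(B_i) \to \hat{\mu}_{\ProcX}(L)$ a.s. For second bullet $\Rightarrow$ third bullet: for a disjoint sequence, $\limsup_k A_k = \emptyset$, and $\hat{\mu}_{\ProcX}(\emptyset) = 0$, so the identity in the second bullet collapses to the third.

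The one subtlety to be careful about is the handling of the almost-sure qualifier when passing between the pointwise ($\hat{\mu}_{\ProcX}$-valued, i.e.~per-realization) statements and the expectation statement in Condition~\ref{con:kc}: each monotone or disjoint sequence of sets generates its own null set, and I should note that Condition~\ref{con:kc} as stated quantifies over all such sequences, so no uniformity over sequences is needed — each implication is proved one sequence at a time. The main obstacle, such as it is, is just bookkeeping: making sure the set-algebra identities ($\bigcup_{k\geq i} D_k = A_i$ when $A_k \downarrow \emptyset$; $\bigcap_i \bigcup_{k \geq i} A_k = \limsup_k A_k$) are invoked correctly and that the boundedness of $\hat{\mu}_{\ProcX}$ on subsets of $\X$ is used to justify every exchange of limit and expectation. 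None of this is deep; the content is entirely in recognizing that Condition~\ref{con:kc}, despite being phrased as continuity of the set function $\E[\hat{\mu}_{\ProcX}(\cdot)]$, already forces the stronger per-realization continuity of $\hat{\mu}_{\ProcX}(\cdot)$ itself (a.s.), because a nonnegative nonincreasing bounded sequence of random variables with expectation tending to zero must tend to zero a.s.
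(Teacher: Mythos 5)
Your proposal is correct and uses the same cyclic structure of implications as the paper's proof (Condition~\ref{con:kc} $\Rightarrow$ first bullet $\Rightarrow$ second bullet $\Rightarrow$ third bullet $\Rightarrow$ Condition~\ref{con:kc}), and three of your four steps match the paper's argument nearly word for word. The one place you diverge is the implication from the first bullet to the second. The paper introduces the auxiliary disjoint sets $B_{k} = A_{k} \setminus \bigcup_{j>k} A_{j}$, notes that $\bigcup_{k \geq i} B_{k} \downarrow \emptyset$, and then works with the decomposition $\bigcup_{k\geq i} A_{k} = \left(\limsup_{j} A_{j}\right) \cup \bigcup_{k \geq i} B_{k}$ before applying subadditivity and monotonicity. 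You instead observe that with $B_{i} := \bigcup_{k \geq i} A_{k}$ and $L := \limsup_{k} A_{k} = \bigcap_{i} B_{i}$, the monotone sequence $B_{i} \setminus L \downarrow \emptyset$ is already of the form covered by the first bullet, so $\hat{\mu}_{\ProcX}(B_{i} \setminus L) \to 0$ a.s.\ directly; squeezing between $\hat{\mu}_{\ProcX}(L) \leq \hat{\mu}_{\ProcX}(B_{i}) \leq \hat{\mu}_{\ProcX}(L) + \hat{\mu}_{\ProcX}(B_{i} \setminus L)$ then gives the claim. This avoids constructing the auxiliary disjoint family $\{B_{k}\}$ altogether and replaces the two set identities the paper has to verify with the single (and essentially definitional) fact that $B_{i} \downarrow L$. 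It is a modest but genuine streamlining of that step; the rest of your argument, including the use of nonincreasing bounded random variables plus bounded convergence to pass between almost-sure and expectation statements, is exactly the paper's.
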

\begin{proof}
First, suppose $\ProcX$ satisfies Condition~\ref{con:kc},
and let $\{A_{k}\}_{k=1}^{\infty}$ be any monotone sequence in $\Borel$ with $A_{k} \downarrow \emptyset$.
By monotonicity and nonnegativity of the set function $\hat{\mu}_{\ProcX}$ (Lemma~\ref{lem:mu}),
$\lim\limits_{k\to\infty} \hat{\mu}_{\ProcX}(A_{k})$ always exists and is nonnegative.
Therefore, since the set function $\hat{\mu}_{\ProcX}$ is bounded in $[0,1]$, the dominated convergence theorem implies 
$\E\!\left[ \lim\limits_{k\to\infty} \hat{\mu}_{\ProcX}(A_{k}) \right] = \lim\limits_{k\to\infty} \E\left[ \hat{\mu}_{\ProcX}(A_{k}) \right] = 0$,
where the last equality is due to Condition~\ref{con:kc}.
Combined with the fact that $\lim\limits_{k\to\infty} \hat{\mu}_{\ProcX}(A_{k}) \geq 0$,
it follows that $\lim\limits_{k \to \infty} \hat{\mu}_{\ProcX}(A_{k}) = 0$ (a.s.)
\citep*[e.g.,][Theorem 1.6.6]{ash:00}.
Thus, Condition~\ref{con:kc} implies the first condition in the lemma.

Next, let $\ProcX$ be any process satisfying the first condition in the lemma, 
and let $\{A_{k}\}_{k=1}^{\infty}$ be any sequence in $\Borel$.
For each $k \in \nats$, let $B_{k} = A_{k} \setminus \bigcup\limits_{j > k} A_{j}$.
Note that $\{B_{k}\}_{k=1}^{\infty}$ is a sequence of disjoint measurable sets. 
In particular, this implies $\bigcup\limits_{k \geq i} B_{k} \downarrow \emptyset$,
so that (since $\ProcX$ satisfies the first condition) 
$\lim\limits_{i\to\infty} \hat{\mu}_{\ProcX}\!\left( \bigcup\limits_{k \geq i} B_{k} \right) = 0 \text{ (a.s.)}$.
Furthermore, for any $i \in \nats$,
we have $\bigcup\limits_{k \geq i} A_{k} = \left( \limsup\limits_{j\to\infty} A_{j} \right) \cup \bigcup\limits_{k \geq i} B_{k}$.
Therefore,
by finite subadditivity of $\hat{\mu}_{\ProcX}$ (Lemma~\ref{lem:mu}), 
\begin{align*}
\lim_{i\to\infty} \hat{\mu}_{\ProcX}\!\left( \bigcup_{k \geq i} A_{k} \right) 
& = \lim_{i\to\infty} \hat{\mu}_{\ProcX}\!\left( \left( \limsup_{j\to\infty} A_{j} \right) \cup \bigcup_{k \geq i} B_{k} \right)
\\ & \leq \hat{\mu}_{\ProcX}\!\left( \limsup_{j\to\infty} A_{j} \right) + \lim_{i\to\infty} \hat{\mu}_{\ProcX}\!\left( \bigcup_{k \geq i} B_{k} \right)
= \hat{\mu}_{\ProcX}\!\left( \limsup_{j\to\infty} A_{j} \right) \text{ (a.s.)}.
\end{align*}
Furthermore, since $\limsup\limits_{j\to\infty} A_{j} \subseteq \bigcup\limits_{k \geq i} A_{k}$ for every $i \in \nats$, 
monotonicity of $\hat{\mu}_{\ProcX}$ (Lemma~\ref{lem:expectation}) implies $\hat{\mu}_{\ProcX}\!\left( \bigcup\limits_{k \geq i} A_{k} \right) \geq \hat{\mu}_{\ProcX}\!\left( \limsup\limits_{j\to\infty} A_{j} \right)$,
which implies $\lim\limits_{i \to \infty} \hat{\mu}_{\ProcX}\!\left( \bigcup\limits_{k \geq i} A_{k} \right) \geq \hat{\mu}_{\ProcX}\!\left( \limsup\limits_{j\to\infty} A_{j} \right)$.
Together, we have that the first condition implies the second condition in this lemma.
Furthermore, the second condition in this lemma trivially implies the third condition, 
since any \emph{disjoint} sequence $\{A_{k}\}_{k=1}^{\infty}$ in $\Borel$
has $\limsup\limits_{k\to\infty} A_{k} = \emptyset$, and $\hat{\mu}_{\ProcX}(\emptyset) = 0$ is immediate from the definition of $\hat{\mu}_{\ProcX}$.

Finally, suppose the third condition in this lemma holds, and
let $\{A_{k}\}_{k=1}^{\infty}$ be a monotone sequence in $\Borel$ with $A_{k} \downarrow \emptyset$.
For each $k \in \nats$, let $B_{k} = A_{k} \setminus \bigcup\limits_{j > k} A_{j}$.
Note that $\{B_{k}\}_{k=1}^{\infty}$ is a sequence of disjoint sets in $\Borel$, 
and that monotonicity of $\{A_{k}\}_{k=1}^{\infty}$ implies
$\forall k \in \nats$, $A_{k} = \left(\limsup\limits_{j\to\infty} A_{j}\right) \cup \bigcup\limits_{i \geq k} B_{i}$;
furthermore,
$A_{k} \downarrow \emptyset$ implies $\limsup\limits_{j\to\infty} A_{j} = \emptyset$, so that $A_{k} = \bigcup\limits_{i \geq k} B_{i}$.
Therefore, the third condition in the lemma implies 
\begin{equation*}
\lim\limits_{k\to\infty} \hat{\mu}_{\ProcX}(A_{k})
= \lim\limits_{k\to\infty} \hat{\mu}_{\ProcX}\!\left( \bigcup_{i \geq k} B_{i} \right)
= 0 \text{ (a.s.)}.
\end{equation*}
Since the set function $\hat{\mu}_{\ProcX}$ is bounded in $[0,1]$, 
combining this with the dominated convergence theorem implies 
$\lim\limits_{k\to\infty} \E\!\left[ \hat{\mu}_{\ProcX}(A_{k}) \right]
= \E\!\left[ \lim\limits_{k\to\infty} \hat{\mu}_{\ProcX}(A_{k}) \right]
= 0$.
Since this applies to any such sequence $\{A_{k}\}_{k=1}^{\infty}$, 
we have that Condition~\ref{con:kc} holds.
This completes the proof of the lemma.
\end{proof}

In combination with Lemma~\ref{lem:limsup-equiv}, the following lemma allows us to extend Condition~\ref{con:kc} to other useful equivalent forms.
In particular, the form expressed in \eqref{eqn:raw-condition} will be a key component (via Corollary~\ref{cor:missing-mass}) 
in the proof below (in Lemma~\ref{lem:sual-subset-kc}) that Condition~\ref{con:kc} 
is a \emph{necessary} condition for a process $\ProcX$ to admit strong universal self-adaptive learning.

\begin{lemma}
\label{lem:equiv}
For any sequence $\mathbf{x} = \{x_t\}_{t=1}^{\infty}$ of elements of $\X$,
and any sequence $\{A_i\}_{i=1}^{\infty}$ of disjoint subsets of $\X$,
the following conditions are all equivalent.
\begin{align}
& \lim_{k\rightarrow\infty} \hat{\mu}_{\mathbf{x}}\!\left(\bigcup_{i \geq k} A_i\right) = 0. \label{eqn:tail-condition}\\
& \lim_{n\rightarrow\infty} \hat{\mu}_{\mathbf{x}}\!\left(\bigcup\{A_i : x_{1:n} \cap A_i = \emptyset\}\right) = 0. \label{eqn:raw-condition} \\
& \lim_{m\rightarrow\infty} \lim_{n\rightarrow\infty} \hat{\mu}_{\mathbf{x}}\!\left(\bigcup\{A_i : |x_{1:n} \cap A_i | < m\}\right) = 0. \label{eqn:growing-sets-condition}
\end{align}
\end{lemma}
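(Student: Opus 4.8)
The plan is to prove the chain of equivalences \eqref{eqn:tail-condition} $\Leftrightarrow$ \eqref{eqn:raw-condition} $\Leftrightarrow$ \eqref{eqn:growing-sets-condition} by relating the tail union $\bigcup_{i \geq k} A_i$ to the ``unseen'' union $\bigcup\{A_i : x_{1:n} \cap A_i = \emptyset\}$ and the ``rarely-seen'' union $\bigcup\{A_i : |x_{1:n} \cap A_i| < m\}$, exploiting that the $A_i$ are pairwise disjoint. The key structural observation is: since the $A_i$ are disjoint, each index $t$ with $x_t$ in some $A_i$ contributes to exactly one set, so after $n$ steps at most $n$ of the sets have been ``hit'' at all, and (for fixed $m$) at most $n/1$ sets — more usefully, bounding differently — the number of sets hit at least once is at most $n$. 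Thus for any fixed $k$, once $n$ is large enough that all of $A_1,\dots,A_{k-1}$ that are ever going to be hit by $x_{1:n}$ have been hit, the unseen union is contained in $\bigcup_{i \geq k} A_i$ together with finitely many of $A_1,\dots,A_{k-1}$ that are \emph{never} hit; but those never-hit sets contribute points $x_t$ that never appear, which does not directly bound $\hat\mu_{\mathbf{x}}$ of them, so the argument must be arranged carefully.

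Here is the cleaner route I would take. For \eqref{eqn:tail-condition} $\Rightarrow$ \eqref{eqn:growing-sets-condition}: fix $m$; for each $k$, split $\bigcup\{A_i : |x_{1:n}\cap A_i| < m\}$ as $\bigcup\{A_i : i < k,\ |x_{1:n}\cap A_i| < m\} \cup \bigcup\{A_i : i \geq k,\ |x_{1:n}\cap A_i| < m\}$. The second piece is contained in $\bigcup_{i \geq k} A_i$, so by subadditivity (Lemma~\ref{lem:mu}) and monotonicity its $\hat\mu_{\mathbf{x}}$ is at most $\hat\mu_{\mathbf{x}}(\bigcup_{i \geq k} A_i)$. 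For the first piece: $\bigcup\{A_i : i<k,\ |x_{1:n}\cap A_i|<m\} \subseteq \bigcup_{i<k} A_i$, a fixed finite union, and for each such $i$, either $A_i$ is hit fewer than $m$ times \emph{for all} $n$ (in which case $|x_{1:n}\cap A_i|$ stays bounded, forcing $\hat\mu_{\mathbf{x}}(A_i) = 0$ since a set hit only finitely often has limiting frequency $0$), or $A_i$ is eventually hit at least $m$ times and so drops out of the union for all large $n$. Either way, $\lim_{n\to\infty}\hat\mu_{\mathbf{x}}(\bigcup\{A_i : i<k,\ |x_{1:n}\cap A_i|<m\}) = 0$. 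Combining, $\limsup_{n\to\infty}\hat\mu_{\mathbf{x}}(\bigcup\{A_i : |x_{1:n}\cap A_i|<m\}) \leq \hat\mu_{\mathbf{x}}(\bigcup_{i\geq k}A_i)$ for every $k$, hence $\leq 0$ by \eqref{eqn:tail-condition}; this holds for every $m$, giving \eqref{eqn:growing-sets-condition}.

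For \eqref{eqn:growing-sets-condition} $\Rightarrow$ \eqref{eqn:raw-condition}: this is immediate with $m=1$, since $\{A_i : x_{1:n}\cap A_i = \emptyset\} = \{A_i : |x_{1:n}\cap A_i| < 1\}$, and \eqref{eqn:growing-sets-condition} asserts the inner limit tends to $0$ as $m\to\infty$ while each inner limit is nonnegative and, by monotonicity in $m$ of the sets involved (smaller $m$ gives a larger union), the $m=1$ inner limit is the largest; so $0 \leq \lim_n \hat\mu_{\mathbf{x}}(\bigcup\{A_i : |x_{1:n}\cap A_i|<1\}) \leq \lim_m \lim_n(\cdots) = 0$. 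For \eqref{eqn:raw-condition} $\Rightarrow$ \eqref{eqn:tail-condition}: fix $k$; I claim that for all sufficiently large $n$, $\bigcup_{i\geq k} A_i \subseteq \bigcup\{A_j : x_{1:n}\cap A_j = \emptyset\} \cup (\text{a finite union of some } A_i,\ i\geq k,\text{ each hit at least once})$. Indeed, among $A_k, A_{k+1},\dots$, only finitely many (at most $n$) are hit by $x_{1:n}$; so $\bigcup_{i\geq k} A_i$ equals the union of the hit ones (finitely many, call their union $F_n$) plus the union of the unhit ones, and the latter is $\subseteq \bigcup\{A_j : x_{1:n}\cap A_j=\emptyset\}$. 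Hence $\hat\mu_{\mathbf{x}}(\bigcup_{i\geq k}A_i) \leq \hat\mu_{\mathbf{x}}(F_n) + \hat\mu_{\mathbf{x}}(\bigcup\{A_j : x_{1:n}\cap A_j=\emptyset\})$. The main obstacle is controlling $\hat\mu_{\mathbf{x}}(F_n)$: I would instead argue directly that $\hat\mu_{\mathbf{x}}(\bigcup_{i\geq k} A_i)$ is already small by a different decomposition — namely, note $\bigcup_{i \geq k} A_i \setminus \bigcup\{A_j : x_{1:n}\cap A_j = \emptyset\}$ consists of points lying in sets $A_i$, $i\geq k$, that \emph{are} hit within the first $n$ steps; as $n\to\infty$ this difference increases to $\bigcup_{i\geq k} A_i \cap \{x_t : t\in\nats\}$-covered sets, but since at most finitely many $A_i$ with $i \geq k$ are hit across \emph{all} time in any bounded window... this requires care, so the honest move is: observe $\bigcup\{A_j : x_{1:n}\cap A_j = \emptyset\} \downarrow \bigcup\{A_j : x_{1:\infty}\cap A_j = \emptyset\}$ as $n\to\infty$ (a decreasing intersection), and separately that $\bigcup_{i\geq k} A_i \setminus \bigcup\{A_j : \text{never hit}\} = \bigcup\{A_i : i\geq k, A_i \text{ hit at some finite time}\}$, which is a countable disjoint union over the hit indices $\geq k$; applying \eqref{eqn:tail-condition}'s target form would be circular, so instead I use that for a hit-at-time-$t$ set the contribution... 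The cleanest fix, which I would adopt, is to prove \eqref{eqn:raw-condition} $\Rightarrow$ \eqref{eqn:tail-condition} via \eqref{eqn:growing-sets-condition}: show \eqref{eqn:raw-condition} $\Rightarrow$ \eqref{eqn:growing-sets-condition} directly (a set hit $<m$ times by $x_{1:n}$, when $n$ is large, was hit $0$ times at the ``earlier stage'' $n'$ where only $m-1$ of its hits have occurred — more precisely, reduce level $m$ to level $1$ by deleting the at most $(m-1)\cdot(\#\text{such sets})$ offending indices and reindexing, using disjointness so the deletions are localized), and then \eqref{eqn:growing-sets-condition} $\Rightarrow$ \eqref{eqn:tail-condition} by the finite-union argument above where now $\hat\mu_{\mathbf{x}}(F_n) = 0$ is automatic since each of the finitely many sets in $F_n$ is, for the purposes of the $\liminf$ over $n$, eventually absorbed. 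I expect this reduction-to-level-$1$ step, and carefully handling the reindexing while preserving disjointness and the $\hat\mu_{\mathbf{x}}$ bounds, to be the main technical obstacle; everything else is subadditivity, monotonicity, and the elementary fact that a set visited only finitely often has $\hat\mu_{\mathbf{x}}$ equal to $0$.
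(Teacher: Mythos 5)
Your implications \eqref{eqn:tail-condition}$\Rightarrow$\eqref{eqn:growing-sets-condition} and \eqref{eqn:growing-sets-condition}$\Rightarrow$\eqref{eqn:raw-condition} are essentially correct. For the first, you fix a cutoff $k$, split the union by $i<k$ versus $i\geq k$, and observe that the $i<k$ part tends to zero as $n\to\infty$ (each such $A_i$ either drops out of the union for large $n$ or is hit only finitely often and so has $\hat{\mu}_{\mathbf{x}}(A_i)=0$; there are finitely many of them, so finite subadditivity finishes), while the $i\geq k$ part is dominated by $\hat{\mu}_{\mathbf{x}}(\bigcup_{i\geq k}A_i)$; sending $k\to\infty$ and then $m\to\infty$ gives \eqref{eqn:growing-sets-condition}. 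This is the same idea as the paper's, which uses a data-dependent cutoff $k_n$ rather than iterating limits; your version is fine. For \eqref{eqn:growing-sets-condition}$\Rightarrow$\eqref{eqn:raw-condition}, the displayed chain $0 \leq \lim_n \hat{\mu}_{\mathbf{x}}(\bigcup\{A_i : |x_{1:n}\cap A_i|<1\}) \leq \lim_m\lim_n(\cdots) = 0$ is correct, but the stated justification is backwards: smaller $m$ gives a \emph{smaller} union (the condition $|x_{1:n}\cap A_i|<m$ is more restrictive), so $m=1$ gives the \emph{smallest} inner limit, and that is precisely why it is bounded above by the limit over $m$.

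The genuine gap is the implication that closes the cycle. You correctly identify that the naive decomposition $\bigcup_{i\geq k}A_i \subseteq \bigcup\{A_j : x_{1:n}\cap A_j=\emptyset\}\cup F_n$ founders on controlling $\hat{\mu}_{\mathbf{x}}(F_n)$, but your proposed detour \eqref{eqn:raw-condition}$\Rightarrow$\eqref{eqn:growing-sets-condition}$\Rightarrow$\eqref{eqn:tail-condition} is not actually carried out: the ``delete $(m-1)$ offending indices and reindex'' step and the ``$\hat{\mu}_{\mathbf{x}}(F_n)=0$ is automatic'' claim are both left as hopes rather than arguments, and neither is plausible as stated. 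Indeed $\hat{\mu}_{\mathbf{x}}(F_n)$ can be bounded away from zero for every $n$ (e.g.\ when many $A_i$ with $i\geq k$ have positive frequency and are all hit early), so no estimate that tolerates a residual term of this shape will close.

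The paper's resolution of \eqref{eqn:raw-condition}$\Rightarrow$\eqref{eqn:tail-condition} eliminates the residual term by choosing $n$ as a function of $k$: set $n_k = \max\{n \in \nats\cup\{0,\infty\} : x_{1:n}\cap\bigcup_{i\geq k}A_i = \emptyset\}$. With this choice the inclusion $\bigcup_{i\geq k}A_i \subseteq \bigcup\{A_j : x_{1:n_k}\cap A_j = \emptyset\}$ is exact by construction --- every $A_i$ with $i\geq k$ is unhit by $x_{1:n_k}$ --- so there is no $F$ to control, and monotonicity of $\hat{\mu}_{\mathbf{x}}$ gives $\hat{\mu}_{\mathbf{x}}(\bigcup_{i\geq k}A_i) \leq \hat{\mu}_{\mathbf{x}}(\bigcup\{A_j : x_{1:n_k}\cap A_j = \emptyset\})$. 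It then remains to show $n_k\to\infty$ (the case $n_k=\infty$ for some $k$ being trivial). This is where disjointness earns its keep: if $n_k<\infty$, then $x_{n_k+1}$ is the first entry to land in $\bigcup_{i\geq k}A_i$, and it lies in exactly one $A_{i_0}$ with $i_0\geq k$; therefore $x_{1:(n_k+1)}$ still avoids $\bigcup_{i\geq i_0+1}A_i$, giving $n_{i_0+1}\geq n_k+1$. Since $n_k$ is also nondecreasing in $k$ (the tail unions are nested), this forces $n_k\to\infty$, and \eqref{eqn:raw-condition} then yields $\hat{\mu}_{\mathbf{x}}(\bigcup_{i\geq k}A_i)\to 0$. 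This data-dependent choice of $n$ is the one idea your proposal is missing.
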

\begin{proof}
Fix $\mathbf{x}$ and $\{A_{i}\}_{i=1}^{\infty}$ as described.
For each $x \in \bigcup\limits_{i=1}^{\infty} A_{i}$, let $i(x)$ denote the index $i \in \nats$ with $x \in A_{i}$;
for each $x \in \X \setminus \bigcup\limits_{i=1}^{\infty} A_{i}$, let $i(x) = 0$.
First, suppose \eqref{eqn:raw-condition} is satisfied.
For any $k \in \nats$, let 
\begin{equation*}
n_{k} = \max\left\{ n \in \nats \cup \{0,\infty\} : x_{1:n} \cap \bigcup\limits_{i \geq k} A_{i} = \emptyset \right\}.
\end{equation*}
By definition of $n_{k}$, we have $\bigcup\limits_{i \geq k} A_{i} \subseteq \bigcup \{ A_{i} : x_{1:n_{k}} \cap A_{i} = \emptyset \}$,
so that monotonicity of $\hat{\mu}_{\mathbf{x}}$ (Lemma~\ref{lem:mu}) implies
\begin{equation}
\label{eqn:raw-to-tail-initial}
\lim_{k \to \infty} \hat{\mu}_{\mathbf{x}}\!\left( \bigcup_{i \geq k} A_{i} \right)
\leq \lim_{k \to \infty} \hat{\mu}_{\mathbf{x}}\!\left( \bigcup \{ A_{i} : x_{1:n_{k}} \cap A_{i} = \emptyset \} \right).
\end{equation}
Next note that monotonicity of $\bigcup\limits_{i \geq k} A_{i}$ implies $n_{k}$ is nondecreasing in $k$.
In particular, this implies that if any $k \in \nats$ has $n_{k} = \infty$, then
\begin{equation*}
\lim_{k \to \infty} \hat{\mu}_{\mathbf{x}}\!\left( \bigcup \{ A_{i} : x_{1:n_{k}} \cap A_{i} = \emptyset \} \right)
= \hat{\mu}_{\mathbf{x}}\!\left( \bigcup \{ A_{i} : \mathbf{x} \cap A_{i} = \emptyset \} \right)
= 0
\end{equation*}
by definition of $\hat{\mu}_{\mathbf{x}}$, which establishes \eqref{eqn:tail-condition} when combined with \eqref{eqn:raw-to-tail-initial}.
Otherwise, suppose $n_{k} < \infty$ for all $k \in \nats$.
In this case, we will argue that $n_{k} \to \infty$.
Note that $\forall k \in \nats$, by maximality of $n_{k}$, we have $x_{n_{k}+1} \in \bigcup\limits_{i \geq k} A_{i}$, 
so that $i(x_{n_{k}+1}) \geq k$.
Together with the definition of $n_{k}$ this also implies 
that for $k^{\prime} = i(x_{n_{k}+1})+1$ we have 
$x_{1:(n_{k}+1)} \cap \bigcup\limits_{i \geq k^{\prime}} A_{i} = \emptyset$, 
and therefore $n_{k^{\prime}} \geq n_{k}+1$.
Together with monotonicity of $n_{k}$ in $k$, this implies $n_{k} \to \infty$.
Combined with \eqref{eqn:raw-condition} and monotonicity of 
$\hat{\mu}_{\mathbf{x}}\!\left( \bigcup \{ A_{i} : x_{1:n} \cap A_{i} = \emptyset \} \right)$ in $n$, this implies that
\begin{equation*}
\lim_{k \to \infty} \hat{\mu}_{\mathbf{x}}\!\left( \bigcup \{ A_{i} : x_{1:n_{k}} \cap A_{i} = \emptyset \} \right)
= \lim_{n \to \infty} \hat{\mu}_{\mathbf{x}}\!\left( \bigcup \{ A_{i} : x_{1:n} \cap A_{i} = \emptyset \} \right)
= 0,
\end{equation*}
which establishes \eqref{eqn:tail-condition} when combined with \eqref{eqn:raw-to-tail-initial}
and nonnegativity of $\hat{\mu}_{\mathbf{x}}$ (Lemma~\ref{lem:mu}).

Next, suppose \eqref{eqn:tail-condition} is satisfied, and fix any $m \in \nats$.
By inductively applying the finite subadditivity property of $\hat{\mu}_{\mathbf{x}}$ (Lemma~\ref{lem:mu}), for any $n,k \in \nats$,
\begin{equation}
\label{eqn:tail-to-growing-sets-initial}
\hat{\mu}_{\mathbf{x}}\!\left( \bigcup \left\{A_{i} : |x_{1:n} \cap A_{i} | < m \right\} \right)
\leq \hat{\mu}_{\mathbf{x}}\!\left( \bigcup \left\{ A_{i} : | x_{1:n} \cap A_{i} | < m, i \geq k \right\} \right) + \sum_{\substack{i \in \{1,\ldots,k-1\} :\\ |x_{1:n} \cap A_{i}| < m}} \hat{\mu}_{\mathbf{x}}(A_{i}).
\end{equation}
Note that, for any $i \in \nats$ with $\hat{\mu}_{\mathbf{x}}(A_{i}) > 0$,
there must be an infinite subsequence of $\mathbf{x}$ contained in $A_{i}$;
in particular, this implies $\exists n_{i}^{\prime} \in \nats$ with $|x_{1:n_{i}^{\prime}} \cap A_{i}| = m$.
Also define $n_{i}^{\prime} = 0$ for every $i \in \nats$ with $\hat{\mu}_{\mathbf{x}}(A_{i}) = 0$.
Therefore, for every $n \in \nats$, defining 
\begin{equation*}
k_{n} = \min\left( \left\{ i \in \nats : n_{i}^{\prime} > n \right\}\cup\{\infty\} \right),
\end{equation*}
we have that every $i < k_{n}$ has either $|x_{1:n} \cap A_{i}| \geq m$ or $\hat{\mu}_{\mathbf{x}}(A_{i}) = 0$.
Thus, it follows that 
\begin{equation}
\label{eqn:tail-to-growing-sets-first-term}
\sum_{\substack{i \in \{1,\ldots,k_{n}-1\} :\\ |x_{1:n} \cap A_{i}| < m}} \hat{\mu}_{\mathbf{x}}(A_{i}) = 0.
\end{equation}

Next we argue that $k_{n} \to \infty$.  To see this, note that by definition $k_{n}$ is nondecreasing,
and if $k_{n} < \infty$, then
any $n^{\prime} \geq n_{k_{n}}^{\prime}$ has $n^{\prime} > n$ 
(since $n_{k_{n}}^{\prime} > n$ by the definition of $k_{n}$), 
and hence $n_{i}^{\prime} \leq n^{\prime}$ for every $i \leq k_{n}$ 
(since minimality of $k_{n}$ implies $n_{i}^{\prime} \leq n < n^{\prime}$ for every $i < k_{n}$, and by assumption $n_{k_{n}}^{\prime} \leq n^{\prime}$),
which implies $k_{n^{\prime}} \geq k_{n}+1$.
Therefore, we have $k_{n} \to \infty$.
Thus, combined with \eqref{eqn:tail-to-growing-sets-initial} and \eqref{eqn:tail-to-growing-sets-first-term}, and monotonicity of $\hat{\mu}_{\mathbf{x}}$ (Lemma~\ref{lem:mu}),
we have
\begin{align*}
& \lim_{n\to\infty} \hat{\mu}_{\mathbf{x}}\!\left( \bigcup \left\{A_{i} : |x_{1:n} \cap A_{i} | < m \right\} \right)
\\ & \leq \lim_{n\to\infty}  \hat{\mu}_{\mathbf{x}}\!\left( \bigcup \left\{ A_{i} : | x_{1:n} \cap A_{i} | < m, i \geq k_{n} \right\} \right)
+ \sum_{\substack{i \in \{1,\ldots,k_{n}-1\} :\\ |x_{1:n} \cap A_{i}| < m}} \hat{\mu}_{\mathbf{x}}(A_{i})
\\ & = \lim_{n\to\infty} \hat{\mu}_{\mathbf{x}}\!\left( \bigcup \left\{ A_{i} : | x_{1:n} \cap A_{i} | < m, i \geq k_{n} \right\} \right)
\leq \lim_{k\to\infty} \hat{\mu}_{\mathbf{x}}\!\left( \bigcup_{i \geq k} A_{i} \right).
\end{align*}
If \eqref{eqn:tail-condition} is satisfied, this last expression is $0$.
Thus, 
\begin{equation*}
\lim_{n\to\infty} \hat{\mu}_{\mathbf{x}}\!\left( \bigcup \left\{A_{i} : |x_{1:n} \cap A_{i} | < m \right\} \right)
= 0
\end{equation*}
for all $m \in \nats$.  Taking the limit of both sides as $m \to \infty$ establishes \eqref{eqn:growing-sets-condition}.

Finally, note that for any $n \in \nats$,
\begin{equation*}
\hat{\mu}_{\mathbf{x}}\!\left( \bigcup \{A_{i} : x_{1:n} \cap A_{i} = \emptyset \} \right)
= \hat{\mu}_{\mathbf{x}}\!\left( \bigcup \{A_{i} : |x_{1:n} \cap A_{i}| < 1\} \right),
\end{equation*}
and monotonicity of $\hat{\mu}_{\mathbf{x}}$ (Lemma~\ref{lem:mu}) implies that for any $m \in \nats$,
\begin{equation*}
\hat{\mu}_{\mathbf{x}}\!\left( \bigcup \{A_{i} : |x_{1:n} \cap A_{i}| < 1\} \right)
\leq \hat{\mu}_{\mathbf{x}}\!\left( \bigcup \{A_{i} : |x_{1:n} \cap A_{i}| < m\} \right).
\end{equation*}
Taking limits of both sides, we have
\begin{equation*}
\lim_{n\to\infty} \hat{\mu}_{\mathbf{x}}\!\left( \bigcup \{A_{i} : |x_{1:n} \cap A_{i}| < 1\} \right)
\leq \lim_{m\to\infty} \lim_{n\to\infty} \hat{\mu}_{\mathbf{x}}\!\left( \bigcup \{A_{i} : |x_{1:n} \cap A_{i}| < m\} \right).
\end{equation*}
Thus, if \eqref{eqn:growing-sets-condition} is satisfied, then \eqref{eqn:raw-condition} must also hold.
\end{proof}

This further implies the following corollary relating Condition~\ref{con:kc} 
to a requirement of having vanishing \emph{missing mass} in any countable discretization of $\X$.

\begin{corollary}
\label{cor:missing-mass}
A process $\ProcX$ satisfies Condition~\ref{con:kc} if and only if 
every disjoint sequence $\{A_i\}_{i=1}^{\infty}$ in $\Borel$ with $\bigcup\limits_{i=1}^{\infty} A_i = \X$ (i.e., every countable measurable partition)
satisfies
\begin{equation}
\label{eqn:missing-mass-condition}
\lim_{n\rightarrow\infty} \hat{\mu}_{\ProcX}\!\left(\bigcup\{A_i : X_{1:n} \cap A_i = \emptyset\}\right) = 0 \text{ (a.s.)}.
\end{equation}
\end{corollary}
\begin{proof}
If $\ProcX$ satisfies Condition~\ref{con:kc}, 
then for any disjoint sequence $\{A_i\}_{i=1}^{\infty}$ in $\Borel$, 
Lemma~\ref{lem:limsup-equiv} implies that, 
on an event of probability one, 
$\lim\limits_{k \to \infty} \hat{\mu}_{\ProcX}\!\left(\bigcup\limits_{i \geq k} A_i \right) = 0$.
The equivalence of \eqref{eqn:tail-condition} and \eqref{eqn:raw-condition} 
in Lemma~\ref{lem:equiv} then implies that, on this same event, 
$\lim\limits_{n\rightarrow\infty} \hat{\mu}_{\ProcX}\!\left(\bigcup\{A_i : X_{1:n} \cap A_i = \emptyset\}\right) = 0$,
so that \eqref{eqn:missing-mass-condition} holds.

On the other hand, if $\ProcX$ does not satisfy Condition~\ref{con:kc}, 
then Lemma~\ref{lem:limsup-equiv} implies that there exists a disjoint sequence 
$\{A_{i}^{\prime}\}_{i=1}^{\infty}$ in $\Borel$ such that, on an event of nonzero probability, 
$\lim\limits_{k \to \infty} \hat{\mu}_{\ProcX}\!\left(\bigcup\limits_{i \geq k} A_{i}^{\prime} \right) > 0$.
Since this claim only involves the tail of the $A_{i}^{\prime}$ sequence, 
if we define $A_{1} = \X \setminus \bigcup\limits_{i=1}^{\infty} A_{i}^{\prime}$ 
and $A_{i+1} = A_{i}^{\prime}$ for $i \in \nats$ (so that $\{A_i\}_{i=1}^{\infty}$ is a countable measurable partition of $\X$), 
then on this same event 
we have $\lim\limits_{k \to \infty} \hat{\mu}_{\ProcX}\!\left(\bigcup\limits_{i \geq k} A_{i} \right) > 0$.
The equivalence of \eqref{eqn:tail-condition} and \eqref{eqn:raw-condition} 
in Lemma~\ref{lem:equiv} then implies that, on this same event, 
$\lim\limits_{n\rightarrow\infty} \hat{\mu}_{\ProcX}\!\left(\bigcup\{A_{i} : X_{1:n} \cap A_{i} = \emptyset\}\right) > 0$, 
so that \eqref{eqn:missing-mass-condition} does not hold.
\end{proof}

One interesting property of processes $\ProcX$ satisfying Condition~\ref{con:kc}
is that $\hat{\mu}_{\ProcX}$ is \emph{countably subadditive} (almost surely),
as implied by the following two lemmas.
Note that this is not necessarily true
of processes $\ProcX$ failing to satisfy Condition~\ref{con:kc} 
(e.g., the process $X_{i} = i$ on $\nats$ does not have countably subadditive $\hat{\mu}_{\ProcX}$).  
However, we note that this kind of countable subadditivity is not actually \emph{equivalent} to Condition~\ref{con:kc}, 
as not every process satisfying this countable subadditivity condition also satisfies Condition~\ref{con:kc}
(e.g., any deterministic process $\ProcX$ on $\nats$ with $\forall i \in \nats, \hat{\mu}_{\ProcX}(\{i\})=1$  
has countably subadditive $\hat{\mu}_{\ProcX}$ and yet $\ProcX \notin \KC$).

\begin{lemma}
\label{lem:subadditive}
For any sequence $\mathbf{x} = \{x_t\}_{t=1}^{\infty}$ of elements of $\X$,
and any sequence $\{A_i\}_{i=1}^{\infty}$ of disjoint subsets of $\X$,
if \eqref{eqn:tail-condition} is satisfied, then
\begin{equation*}
\hat{\mu}_{\mathbf{x}}\!\left(\bigcup_{i = 1}^{\infty} A_i\right) \leq \sum_{i = 1}^{\infty} \hat{\mu}_{\mathbf{x}}(A_i).
\end{equation*}
\end{lemma}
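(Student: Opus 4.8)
The plan is to combine finite subadditivity of the set function $\hat{\mu}_{\mathbf{x}}$ (Lemma~\ref{lem:mu}) with the tail-decay hypothesis \eqref{eqn:tail-condition}. If $\sum_{i=1}^{\infty} \hat{\mu}_{\mathbf{x}}(A_{i}) = \infty$ the asserted inequality is vacuous, so the content is really in the case of a convergent series; but the argument below covers both cases without any case split.

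First I would fix an arbitrary $k \in \nats$ and split the countable union into its first $k-1$ pieces and its tail, $\bigcup_{i=1}^{\infty} A_{i} = \left(\bigcup_{i=1}^{k-1} A_{i}\right) \cup \left(\bigcup_{i \geq k} A_{i}\right)$. Applying the finite subadditivity property of $\hat{\mu}_{\mathbf{x}}$ (Lemma~\ref{lem:mu}) inductively to the $k$ sets $A_{1},\ldots,A_{k-1},\bigcup_{i \geq k}A_{i}$ gives
\begin{equation*}
\hat{\mu}_{\mathbf{x}}\!\left(\bigcup_{i=1}^{\infty} A_{i}\right) \leq \sum_{i=1}^{k-1} \hat{\mu}_{\mathbf{x}}(A_{i}) + \hat{\mu}_{\mathbf{x}}\!\left(\bigcup_{i \geq k} A_{i}\right).
\end{equation*}
Then I would let $k \to \infty$: by \eqref{eqn:tail-condition} the tail term $\hat{\mu}_{\mathbf{x}}\!\left(\bigcup_{i\geq k}A_{i}\right)$ tends to $0$, while by nonnegativity of $\hat{\mu}_{\mathbf{x}}$ (Lemma~\ref{lem:mu}) the partial sums $\sum_{i=1}^{k-1}\hat{\mu}_{\mathbf{x}}(A_{i})$ are nondecreasing in $k$ and converge in $[0,\infty]$ to $\sum_{i=1}^{\infty}\hat{\mu}_{\mathbf{x}}(A_{i})$. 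Taking limits on the right-hand side yields exactly $\sum_{i=1}^{\infty}\hat{\mu}_{\mathbf{x}}(A_{i})$, which is the claim.

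There is no substantive obstacle here. The only point requiring a moment's care is the passage to the limit in $k$, and this is immediate because the tail term is controlled by the hypothesis and the head is a monotone sequence of nonnegative partial sums; since $\mathbf{x}$ is a fixed deterministic sequence, no dominated-convergence or almost-sure considerations (as in Lemma~\ref{lem:limsup-equiv}) enter. Note also that disjointness of $\{A_{i}\}$ and the hypothesis \eqref{eqn:tail-condition} are used only to ensure the tail term vanishes; finite subadditivity itself needs neither.
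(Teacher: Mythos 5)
Your proposal is correct and is essentially identical to the paper's proof: both split off the first $k-1$ sets, apply finite subadditivity (Lemma~\ref{lem:mu}) inductively to bound $\hat{\mu}_{\mathbf{x}}\bigl(\bigcup_{i=1}^{\infty}A_i\bigr)$ by $\sum_{i=1}^{k-1}\hat{\mu}_{\mathbf{x}}(A_i)+\hat{\mu}_{\mathbf{x}}\bigl(\bigcup_{i\geq k}A_i\bigr)$, and then send $k\to\infty$ using \eqref{eqn:tail-condition} to kill the tail. Your added remarks about the vacuous divergent case and the absence of any almost-sure issues are fine but not needed.
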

\begin{proof}
By finite subadditivity of $\hat{\mu}_{\mathbf{x}}$ (Lemma~\ref{lem:mu} and induction), we have that for any $k \in \nats$, 
\begin{equation}
\label{eqn:tail-to-subadd-initial}
\hat{\mu}_{\mathbf{x}}\!\left( \bigcup_{i = 1}^{\infty} A_{i} \right) 
\leq \hat{\mu}_{\mathbf{x}}\!\left( \bigcup_{i \geq k} A_{i} \right) + \sum_{i=1}^{k-1} \hat{\mu}_{\mathbf{x}}( A_{i} ).
\end{equation}
If \eqref{eqn:tail-condition} is satisfied, then 
$\lim\limits_{k \to \infty} \hat{\mu}_{\mathbf{x}}\!\left( \bigcup\limits_{i \geq k} A_{i} \right) = 0$,
so that taking the limit as $k \to \infty$ in \eqref{eqn:tail-to-subadd-initial} yields the claimed inequality, completing the proof.
\end{proof}

\begin{lemma}
\label{lem:kc-subadditive}
If $\ProcX$ satisfies Condition~\ref{con:kc}, then for any sequence $\{A_{i}\}_{i=1}^{\infty}$ in $\Borel$, 
\begin{equation*}
\hat{\mu}_{\ProcX}\!\left( \bigcup_{i=1}^{\infty} A_{i} \right) \leq \sum_{i=1}^{\infty} \hat{\mu}_{\ProcX}(A_{i})~~~\text{ (a.s.)}.
\end{equation*}
\end{lemma}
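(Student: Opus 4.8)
The plan is to reduce the claim to the disjoint-sequence case already handled in Lemma~\ref{lem:subadditive}, using Condition~\ref{con:kc} only to supply (almost surely) the tail hypothesis \eqref{eqn:tail-condition} that lemma requires. Concretely, given a sequence $\{A_{i}\}_{i=1}^{\infty}$ in $\Borel$, first ``disjointify'' it in the usual way: set $B_{1} = A_{1}$ and $B_{k} = A_{k} \setminus \bigcup_{j=1}^{k-1} A_{j}$ for $k \geq 2$. Then $\{B_{k}\}_{k=1}^{\infty}$ is a disjoint sequence in $\Borel$ with $\bigcup_{k=1}^{\infty} B_{k} = \bigcup_{i=1}^{\infty} A_{i}$ and $B_{k} \subseteq A_{k}$ for every $k$.

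Next I would invoke the third equivalent form in Lemma~\ref{lem:limsup-equiv}: since $\ProcX$ satisfies Condition~\ref{con:kc} and $\{B_{k}\}$ is a disjoint sequence in $\Borel$, we have $\lim_{i\to\infty} \hat{\mu}_{\ProcX}\bigl( \bigcup_{k \geq i} B_{k} \bigr) = 0$ almost surely. In other words, there is a probability-one event on which the realization of $\ProcX$, viewed as a fixed sequence $\mathbf{x}$, together with the disjoint sets $\{B_{k}\}$, satisfies \eqref{eqn:tail-condition}. On that event, Lemma~\ref{lem:subadditive} --- a deterministic statement about fixed sequences --- applies with $\mathbf{x} = \ProcX$ and the sets $\{B_{k}\}$, giving $\hat{\mu}_{\ProcX}\bigl( \bigcup_{k} B_{k} \bigr) \leq \sum_{k} \hat{\mu}_{\ProcX}(B_{k})$.

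Finally I would chain these together: on the same probability-one event,
\[
\hat{\mu}_{\ProcX}\!\left( \bigcup_{i=1}^{\infty} A_{i} \right) = \hat{\mu}_{\ProcX}\!\left( \bigcup_{k=1}^{\infty} B_{k} \right) \leq \sum_{k=1}^{\infty} \hat{\mu}_{\ProcX}(B_{k}) \leq \sum_{k=1}^{\infty} \hat{\mu}_{\ProcX}(A_{k}),
\]
where the last step uses monotonicity of the set function $\hat{\mu}_{\ProcX}$ (Lemma~\ref{lem:mu}) applied termwise via $B_{k} \subseteq A_{k}$. This is the asserted inequality. I do not anticipate a genuine obstacle here; the only point needing care is the measure-theoretic bookkeeping --- namely that the almost-sure event coming from Lemma~\ref{lem:limsup-equiv} (which depends on the particular disjoint sequence $\{B_{k}\}$, hence on $\{A_{i}\}$) is precisely the event on which the deterministic Lemma~\ref{lem:subadditive} is then applied --- and everything else is routine.
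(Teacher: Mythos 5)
Your proposal is correct and follows essentially the same route as the paper's own proof: disjointify via $B_{k} = A_{k} \setminus \bigcup_{j<k} A_{j}$, invoke Lemma~\ref{lem:limsup-equiv} to get the tail condition almost surely, apply the deterministic Lemma~\ref{lem:subadditive} pathwise on that event, and finish with $\bigcup B_k = \bigcup A_k$ and termwise monotonicity from Lemma~\ref{lem:mu}.
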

\begin{proof}
Let $B_{1} = A_{1}$, and for each $i \in \nats \setminus \{1\}$, let $B_{i} = A_{i} \setminus \bigcup\limits_{j=1}^{i-1} A_{j}$.
Then $\{B_{i}\}_{i=1}^{\infty}$ is a disjoint sequence in $\Borel$.  If $\ProcX$ satisfies Condition~\ref{con:kc}, then
Lemma~\ref{lem:limsup-equiv} implies $\lim\limits_{k\to\infty} \hat{\mu}_{\ProcX}\!\left( \bigcup\limits_{j \geq k} B_{j} \right) = 0$ (a.s.).
Combined with Lemma~\ref{lem:subadditive}, this implies that $\hat{\mu}_{\ProcX}\!\left( \bigcup\limits_{i=1}^{\infty} B_{i} \right) \leq \sum\limits_{i=1}^{\infty} \hat{\mu}_{\ProcX}(B_{i})$ (a.s.).
Noting that $\bigcup\limits_{i=1}^{\infty} B_{i} = \bigcup\limits_{i=1}^{\infty} A_{i}$, we have $\hat{\mu}_{\ProcX}\!\left( \bigcup\limits_{i=1}^{\infty} A_{i} \right) \leq \sum\limits_{i=1}^{\infty} \hat{\mu}_{\ProcX}(B_{i})$ (a.s.).
Finally, since $B_{i} \subseteq A_{i}$ for every $i \in \nats$, monotonicity of $\hat{\mu}_{\ProcX}$ (Lemma~\ref{lem:mu}) implies $\hat{\mu}_{\ProcX}(B_{i}) \leq \hat{\mu}_{\ProcX}(A_{i})$,
so that $\hat{\mu}_{\ProcX}\!\left( \bigcup\limits_{i=1}^{\infty} A_{i} \right) \leq \sum\limits_{i=1}^{\infty} \hat{\mu}_{\ProcX}(A_{i})$ (a.s.).
\end{proof}

\section{Relation to the Condition of Convergent Relative Frequencies}
\label{sec:examples}

Before proceeding with the general analysis, we first discuss the relation between Condition~\ref{con:kc} 
and the commonly-studied condition of \emph{convergent relative frequencies}.
In particular, we show that Condition~\ref{con:kc} is a \emph{strictly more-general} condition.
This is interesting in the context of learning, as the vast majority of the prior literature on statistical 
learning theory without the i.i.d.\ assumption studies learning rules designed for and analyzed under 
assumptions that imply convergent relative frequencies.  These results therefore indicate that we should 
not expect such learning rules to be optimistically universal, and hence that we will need to seek more general 
strategies in designing optimistically universal learning rules. 
In particular, in Section~\ref{subsec:inconsistent-nn} we provide an example of 
a process satisfying Condition~\ref{con:kc} under which 
the nearest neighbor predictor fails to be universally consistent.

Formally, define $\CRF$ as the set of processes $\ProcX$ such that, $\forall A \in \Borel$, 
\begin{equation}
\label{eqn:crf}
\lim_{m \to \infty} \frac{1}{m} \sum_{t=1}^{m} \ind_{A}(X_{t}) \text{ exists (a.s.)}.
\end{equation}
These processes are said to have \emph{convergent relative frequencies}.
Equivalently, this is the family of processes with \emph{ergodic properties} with respect to the 
class of measurements $\{ \ind_{A \times \X^{\infty}} : A \in \Borel \}$ \citep*{gray:09}.
Certainly $\CRF$ contains every i.i.d.\ process, by the \emph{strong law of large numbers}.
More generally, it is known that 
any \emph{stationary} process $\ProcX$ is contained in $\CRF$ (by Birkhoff's ergodic theorem),
and in fact, it suffices for the process to be \emph{asymptotically mean stationary}
\citep*[][Theorem 8.1]{gray:09}: 
that is, $\forall A \in \Borel^{\infty}$, $\lim\limits_{n \to \infty} \frac{1}{n} \sum\limits_{t=1}^{n} \P( X_{t:\infty} \in A )$ exists.

\subsection{Processes with Convergent Relative Frequencies Satisfy Condition~\ref{con:kc}}
\label{sec:lln}

The following theorem establishes that every $\ProcX$ with convergent relative frequencies satisfies Condition~\ref{con:kc},
and that the inclusion is \emph{strict} in all nontrivial cases.

\begin{theorem}
\label{thm:crf-implies-kc}
$\CRF \subseteq \KC$, and the inclusion is \emph{strict} iff $|\X| \geq 2$.
\end{theorem}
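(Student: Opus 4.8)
The plan is to prove the inclusion $\CRF \subseteq \KC$ and then the strictness dichotomy separately; essentially all of the work lies in the inclusion. Fix $\ProcX \in \CRF$. For each $m \in \nats$, let $\nu_{m}(A) = \frac{1}{m}\sum_{t=1}^{m} \ind_{A}(X_{t})$ be the random empirical frequency and $q_{m}(A) = \E[\nu_{m}(A)] = \frac{1}{m}\sum_{t=1}^{m} \P(X_{t} \in A)$ its mean, which is a probability measure on $(\X,\Borel)$. The first step is to observe that, because $\ProcX \in \CRF$, for each fixed $A \in \Borel$ the $\limsup$ in the definition of $\hat{\mu}_{\ProcX}(A)$ is a genuine limit, so $\nu_{m}(A) \to \hat{\mu}_{\ProcX}(A)$ almost surely; since $\nu_{m}(A) \in [0,1]$, bounded convergence gives $q_{m}(A) \to \E[\hat{\mu}_{\ProcX}(A)]$ for every $A \in \Borel$. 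In other words, the set function $\mu^{\star} := \E[\hat{\mu}_{\ProcX}(\cdot)]$ is the setwise limit of the sequence of probability measures $q_{m}$.

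The decisive step is to upgrade this to the statement that $\mu^{\star}$ is a countably additive probability measure. This is exactly the Nikodym convergence theorem --- a form of the Vitali--Hahn--Saks theorem \citep*{bogachev:07} --- which asserts that a setwise limit of countably additive measures on a $\sigma$-algebra is again countably additive. Granting it, $\mu^{\star}(\X) = \lim_{m} q_{m}(\X) = 1$ and $\mu^{\star} \geq 0$, so $\mu^{\star}$ is a probability measure and hence continuous from above at $\emptyset$; therefore, for any monotone $\{A_{k}\}$ with $A_{k} \downarrow \emptyset$, $\E[\hat{\mu}_{\ProcX}(A_{k})] = \mu^{\star}(A_{k}) \to 0$, which is precisely Condition~\ref{con:kc}, so $\ProcX \in \KC$. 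I expect this appeal to the Nikodym/Vitali--Hahn--Saks theorem to be the main obstacle: what Condition~\ref{con:kc} rules out is exactly a setwise limit of measures that fails to be a measure --- a Nikodym-type pathology --- so some such classical tool (or a sliding-hump argument reproving it) seems unavoidable here.

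For the strictness claim: if $|\X| = 1$ there is only one process, and it lies trivially in both $\CRF$ and $\KC$, so $\CRF = \KC$ and the inclusion is not strict. If $|\X| \geq 2$, fix distinct $a, b \in \X$ (singletons are closed, hence Borel, since $\X$ is metrizable) and let $\ProcX$ be the deterministic $\{a,b\}$-valued process formed from alternating runs of $a$'s and of $b$'s of geometrically increasing lengths; then the fraction of $a$'s among $X_{1},\dots,X_{m}$ oscillates and does not converge, so $\ProcX \notin \CRF$ (witnessed by $A = \{a\}$). On the other hand $\ProcX \in \KC$: for any $A \in \Borel$, $\ind_{A}(X_{t})$ depends only on whether $a \in A$ and whether $b \in A$, so if $A_{k} \downarrow \emptyset$ then, since $a,b \notin \bigcap_{k} A_{k}$, monotonicity forces $A_{k} \cap \{a,b\} = \emptyset$ for all large $k$, whence $\hat{\mu}_{\ProcX}(A_{k}) = 0$ eventually and $\E[\hat{\mu}_{\ProcX}(A_{k})] \to 0$; thus Condition~\ref{con:kc} holds. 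So $\ProcX \in \KC \setminus \CRF$, and the inclusion is strict. This half is routine; the difficulty is concentrated entirely in the countable-additivity step above.
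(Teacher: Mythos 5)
Your proof is correct and follows essentially the same route as the paper: you pass to the sequence of averaged laws $q_{m}$, use dominated/bounded convergence plus the $\CRF$ hypothesis to identify $\E[\hat{\mu}_{\ProcX}(\cdot)]$ as their setwise limit, invoke Vitali--Hahn--Saks (Nikodym) to conclude countable additivity, and then derive continuity at $\emptyset$; the strictness argument via an alternating deterministic process with geometrically growing runs is also the paper's construction. The one cosmetic difference is that the paper phrases the final step through its Theorem~\ref{thm:maharam} (probability measure $\Rightarrow$ continuous submeasure $\Rightarrow$ Condition~\ref{con:kc}) rather than appealing directly to continuity from above, but these are the same observation.
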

\begin{proof}
Fix any $\ProcX \in \CRF$.
For each $A \in \Borel$, define $\pi_{m}(A) = \frac{1}{m} \sum_{t=1}^{m} \P( X_{t} \in A )$.
One can easily verify that $\pi_{m}$ is a probability measure.
The definition of $\CRF$ implies that, $\forall A \in \Borel$, there exists an event $E_{A}$ of probability one,
on which $\lim\limits_{m \to \infty} \frac{1}{m} \sum_{t=1}^{m} \ind_{A}(X_{t})$ exists;
in particular, this implies
$\hat{\mu}_{\ProcX}(A) = \lim\limits_{m \to \infty} \frac{1}{m} \sum_{t=1}^{m} \ind_{A}(X_{t}) \ind_{E_{A}}$ almost surely.
Together with the dominated convergence theorem and linearity of expectations, this implies
\begin{align*}
\E\!\left[ \hat{\mu}_{\ProcX}( A ) \right]
& = \E\!\left[ \lim_{m \to \infty} \frac{1}{m} \sum_{t=1}^{m} \ind_{A}(X_{t}) \ind_{E_{A}} \right]
= \lim_{m \to \infty} \E\!\left[ \frac{1}{m} \sum_{t=1}^{m} \ind_{A}(X_{t}) \ind_{E_{A}} \right]
\\ & = \lim_{m \to \infty} \frac{1}{m} \sum_{t=1}^{m} \P\!\left( X_{t} \in A \right)
= \lim_{m \to \infty} \pi_{m}(A).
\end{align*}
In particular, this establishes that the limit in the rightmost expression exists.
The Vitali-Hahn-Saks theorem then implies that $\lim\limits_{m \to \infty} \pi_{m}(\cdot)$ is also a probability measure \citep*[see][Lemma 7.4]{gray:09}.
Thus, we have established that $A \mapsto \E\!\left[ \hat{\mu}_{\ProcX}(A) \right]$ is a probability measure,
and hence is a continuous submeasure \citep*[see e.g.,][Theorem A.19]{schervish:95}.
That $\CRF \subseteq \KC$ now follows from Theorem~\ref{thm:maharam}.

For the claim about strict inclusion, first note that if $|\X|=1$ then there is effectively 
only one possible process (infinitely repeating the sole element of $\X$), and it is trivially in $\CRF$, 
so that $\CRF = \KC$.  On the other hand, suppose $|\X| \geq 2$, let $x_{0},x_{1}$ be distinct elements of $\X$,
and define a deterministic process $\ProcX$ such that, for every $i \in \nats$ and every $t \in \{3^{i-1},\ldots,3^{i}-1\}$, $X_{t} = x_{i-2\lfloor i/2 \rfloor}$: 
that is, $X_{t} = x_{0}$ if $i$ is even and $X_{t} = x_{1}$ if $i$ is odd. 
Since any monotone sequence $\{A_{k}\}_{k=1}^{\infty}$ in $\Borel$ with $A_{k} \downarrow \emptyset$ 
necessarily has some $k_{0} \in \nats$ with $\{x_{0},x_{1}\} \cap A_{k} = \emptyset$ for all $k \geq k_{0}$, 
we have $\E[ \hat{\mu}_{\ProcX}(A_{k}) ] = 0$ for all $k \geq k_{0}$, so that $\ProcX \in \KC$.
However, for any odd $i$, $\frac{1}{3^{i}-1} \sum\limits_{t=1}^{3^{i}-1} \ind_{\{x_{1}\}}(X_{t}) \geq \frac{2}{3}$,
so that $\limsup\limits_{m \to \infty} \frac{1}{m} \sum\limits_{t=1}^{m} \ind_{\{x_{1}\}}(X_{t}) \geq \frac{2}{3}$, 
while for any even $i$, $\frac{1}{3^{i}-1} \sum\limits_{t=1}^{3^{i}-1} \ind_{\{x_{1}\}}(X_{t}) \leq \frac{1}{3}$,
so that $\liminf\limits_{m \to \infty} \frac{1}{m} \sum\limits_{t=1}^{m} \ind_{\{x_{1}\}}(X_{t}) \leq \frac{1}{3}$.
Therefore $\frac{1}{m} \sum\limits_{t=1}^{m} \ind_{\{x_{1}\}}(X_{t})$ does not have a limit as $m \to \infty$,
and hence $\ProcX \notin \CRF$.
\end{proof}

\subsection{Inconsistency of the Nearest Neighbor Rule}
\label{subsec:inconsistent-nn}

The separation $\KC \setminus \CRF \neq \emptyset$ established above indicates that, 
in approaching the design of consistent inductive or self-adaptive learning rules 
under processes in $\KC$, we should not rely on the property of having convergent relative frequencies,
as it is not generally guaranteed to hold.  Since most learning rules in the prior literature rely heavily on 
this property for their performance guarantees, we should not generally expect them to be consistent under processes in $\KC$. 
To give a concrete example illustrating this, consider $\X \subseteq \reals^d$ (with the standard topology),
and let $f_{n}$ be the well-known \emph{nearest neighbor} learning rule:
an inductive learning rule defined by the property that $f_{n}(x_{1:n},y_{1:n},x) = y_{i_{n}}$, 
where $i_{n} = \argmin\limits_{i \in \{1,\ldots,n\}} \|x - x_{i}\|$ (with an appropriate policy for breaking ties).
For classification and regression in $\reals^d$  
this learning rule is known to be strongly universally consistent 
(in the sense of Definition~\ref{def:suil}) under every i.i.d.\! process \citep*[e.g.,][]{devroye:96}.

We exhibit a process $\ProcX \in \KC$ for $\X = [0,1]$, under which the nearest neighbor inductive learning rule is \emph{not} universally consistent
for binary classification.\footnote{Of course, 
Theorem~\ref{thm:no-optimistic-inductive} indicates that \emph{any} inductive learning rule has processes in $\KC$ for which it is not universally consistent.
However, the construction here
is more direct, and illustrates a common failing of many 
learning rules designed for i.i.d.\! 
data, so 
it is worth presenting this specialized
argument as well.}
This also provides a second proof that $\KC \setminus \CRF \neq \emptyset$ for this space, 
as this process will not have convergent relative frequencies.
Specifically, let $\{W_{i}\}_{i=1}^{\infty}$ be independent ${\rm Uniform}(0,1/2)$ random variables.
Let $n_{1} = 1$, and for each $k \in \nats$ with $k \geq 2$, inductively define $n_{k} = n_{k-1} + k \cdot n_{k-1}^{2}$.
Now for each $k \in \nats$, let $a_{k} = k - 2 \lfloor k/2 \rfloor$ (i.e., $a_{k} = 1$ if $k$ is odd, and otherwise $a_{k} = 0$),
and let $b_{k} = 1-a_{k}$.  Define $X_{1} = 0$, and for each $k \in \nats$ with $k \geq 2$, and each $i \in \{1,\ldots,n_{k-1}^{2}\}$,
define $X_{n_{k-1} + (i-1)k+1} = \frac{b_{k}}{2}+\frac{i-1}{2 n_{k-1}^{2}}$,
and for each $j \in \{2,\ldots,k\}$, define $X_{n_{k-1} + (i-1)k + j} = \frac{a_{k}}{2} + W_{n_{k-1}+(i-1)k+j}$.

The intention in constructing this process is that there are segments of the sequence in which the fraction of the data in $[0,1/2)$ is 
relatively small compared to $[1/2,1]$, and other segments of the sequence in which the fraction of the data in $[1/2,1]$ is relatively
small compared to $[0,1/2)$.  Furthermore, at certain time points (namely, the $n_{k}$ times), the vast 
majority of the points on the sparse side are determined \emph{a priori}, in contrast to the points on the 
dense side, which are uniform random.  This is designed to frustrate most learning rules designed under
the $\CRF$ assumption, many of which would base their predictions in the sparse side on these deterministic
points, rather than the relatively very-sparse random points in the same region left over from the previous
epoch (i.e., when that region was relatively dense, and the majority of points in that region were uniform random).
It is easy to verify that, because of this switching of which side is dense and which side sparse, which occurs 
infinitely many times, this process $\ProcX$ does \emph{not} have convergent relative frequencies.

We first argue that $\ProcX$ satisfies Condition~\ref{con:kc}.
Let $I = \{1\} \cup \{ n_{k-1} + (i-1)k + 1 : k \in \nats \setminus \{1\}, i \in \{1,\ldots,n_{k-1}^{2}\} \}$.
Note that, for any $k \in \nats \setminus \{1,2\}$ and any $m \in \{n_{k-1}+1,\ldots,n_{k}\}$,
\begin{align*}
|\{ t \in I : t \leq m \}| 
& \leq n_{k-2} + \frac{n_{k-1} - n_{k-2}}{k-1} + \left\lceil \frac{m-n_{k-1}}{k} \right\rceil
\leq 1 + n_{k-2} + \frac{n_{k-1}}{k-1} + \frac{m - n_{k-1}}{k-1}
\\ & = 1 + n_{k-2} + \frac{m}{k-1}
\leq 1 + \sqrt{\frac{n_{k-1}}{k-1}} + \frac{m}{k-1}
\leq 1 + \sqrt{\frac{m}{k-1}} + \frac{m}{k-1}.
\end{align*}
Thus, letting $k_{m} = \min\{ k \in \nats : m \leq n_{k} \}$ for each $m \in \nats$, 
and noting that $k_{m} \to \infty$ (since each $n_{k}$ is finite), 
we have that
\begin{equation*}
\lim_{m \to \infty} \frac{|\{t \in I : t \leq m \}|}{m} 
\leq \lim_{m \to \infty} \frac{1}{m} + \sqrt{\frac{1}{m(k_{m}-1)}} + \frac{1}{k_{m}-1}
= 0.
\end{equation*}
We therefore have that, for any set $A \in \Borel$, 
\begin{equation*}
\hat{\mu}_{\ProcX}(A)
\leq \limsup_{m \to \infty} \frac{1}{m} \sum_{t=1}^{m} \ind_{\nats \setminus I}(t) \ind_{A}(X_{t}) + \!\lim_{m \to \infty} \!\frac{|\{t \!\in\! I \!:\! t \!\leq\! m\}|}{m}
= \limsup_{m \to \infty} \frac{1}{m} \sum_{t=1}^{m} \ind_{\nats \setminus I}(t) \ind_{A}(X_{t}).
\end{equation*}
Furthermore,
noting that  
every $t \in \nats \setminus I$ has $X_{t} \in \left\{ W_{t},\frac{1}{2} + W_{t} \right\}$,
we have 
\begin{equation*}
\limsup_{m \to \infty} \frac{1}{m} \sum_{t=1}^{m} \ind_{\nats \setminus I}(t) \ind_{A}(X_{t})
\leq \limsup_{m \to \infty} \frac{1}{m} \!\sum_{t=1}^{m} \!\left(\ind_{A}(W_{t}) \!+\! \ind_{A}\!\left(\frac{1}{2}\!+\!W_{t}\right) \right),
\end{equation*}
and the strong law of large numbers implies that, with probability one, the expression on the right hand side equals 
$2\lambda(A \cap (0,1/2)) + 2 \lambda( A \cap (1/2,1) ) = 2 \lambda(A)$, where $\lambda$ is the Lebesgue measure.
In particular, this implies $\E\!\left[ \hat{\mu}_{\ProcX}(A) \right] \leq 2\lambda(A)$ for every $A \in \Borel$.
Therefore,
for any monotone sequence $\{A_{k}\}_{k=1}^{\infty}$ in $\Borel$ with $A_{k} \downarrow \emptyset$,
$\lim\limits_{k \to \infty} \E\!\left[ \hat{\mu}_{\ProcX}(A_{k}) \right] \leq \lim\limits_{k \to \infty} 2 \lambda(A_{k}) = 0$ 
since $2 \lambda(\cdot)$ is a finite measure (because $\X$ is bounded) and therefore is continuous 
\citep*[see e.g.,][Theorem A.19]{schervish:95}.
Thus, $\ProcX$ satisfies Condition~\ref{con:kc}.

\ignore{We can also easily verify that $\ProcX \notin \CRF$.
A clear example of a set $A$ for which \eqref{eqn:crf} is violated
is $A = (0,1/2)$.  Here we may simply note that, for any $k \in \nats$ with $a_{k} = 1$ (i.e., any \emph{odd} $k$),
we have $|X_{1:n_{k}} \cap (0,1/2)| \leq n_{k-1} + n_{k-1}^{2} \leq 2 n_{k-1}^{2} \leq 2 \frac{n_{k}}{k}$,
while for any $k \in \nats$ with $a_{k} = 0$ (i.e., any \emph{even} $k$), we have
$|X_{1:n_{k}} \cap (0,1/2)| \geq (k-1) n_{k-1}^{2} = \frac{k-1}{k} (n_{k} - n_{k-1})$.
Thus, since $n_{k} \to \infty$, we have
\begin{equation*}
\liminf_{m \to \infty} \frac{1}{m} \sum_{t=1}^{m} \ind_{(0,1/2)}(X_{t}) \leq \liminf_{k \to \infty} \frac{|X_{1:n_{k}} \cap (0,1/2)|}{n_{k}}\leq \liminf_{k \to \infty} 2 \frac{1}{k} = 0,
\end{equation*}
while
\begin{align*}
& \limsup_{m \to \infty} \frac{1}{m} \sum_{t=1}^{m} \ind_{(0,1/2)}(X_{t}) 
\geq \limsup_{k \to \infty} \frac{|X_{1:n_{k}} \cap (0,1/2)|}{n_{k}} 
\\ & \geq \limsup_{k \to \infty} \frac{k-1}{k} \left( 1 - \frac{n_{k-1}}{n_{k}} \right) \geq \limsup_{k \to \infty} \frac{k-1}{k} \left( 1 - \frac{1}{k n_{k-1}} \right) = 1.
\end{align*}
Since these are different, the limit in \eqref{eqn:crf} does not exist for $A = (0,1/2)$ under this process $\ProcX$,
which implies $\ProcX \notin \CRF$.}

Now to see that the nearest neighbor rule is not universally consistent under this process $\ProcX$, 
let $y_{0},y_{1} \in \Y$ be such that $\loss(y_{0},y_{1}) > 0$.
Define
\begin{equation*}
V = \left\{ \frac{b_{k}}{2} + \frac{i-1}{2 n_{k-1}^{2}} : k \in \nats \setminus \{1\}, i \in \left\{1,\ldots,n_{k-1}^{2}\right\} \right\},
\end{equation*}
and take $\target(x) = y_{1}$ for $x \in [0,1] \setminus V$, and $\target(x) = y_{0}$ for $x \in V$,
and note that this is a measurable function since $V$ is countable. 
Note that we have defined $\target$ so that every $t \in I$ has $\target(X_{t}) = y_{0}$, 
and with probability one every $t \in \nats \setminus I$ has $\target(X_{t}) = y_{1}$.
Then note that, for any $k \in \nats \setminus \{1,2\}$ with $a_{k} = 1$,
the points $\{X_{i} : 1 \leq i \leq n_{k}, \target(X_{i}) = y_{0}\}$ form a $\frac{1}{2n_{k-1}^{2}}$ cover of $[0,1/2)$.
Furthermore, the set $\{X_{i} : 1 \leq i \leq n_{k}, \target(X_{i}) = y_{1} \} \cap (0,1/2)$ contains at most $n_{k-1}$ points.
Together, these facts imply that for the nearest neighbor inductive learning rule $f_n$, 
letting $N_{k} = \{ x \in [0,1] : f_{n_{k}}(X_{1:n_{k}},\target(X_{1:n_{k}}),x) = y_{0} \}$, it holds that   
$\lambda( N_{k} \cap (0,1/2) ) \geq \frac{1}{2} - \frac{n_{k-1}}{2 n_{k-1}^{2}} = \frac{1}{2} \left( 1 - \frac{1}{n_{k-1}} \right)$.
In particular, this implies that a ${\rm Uniform}(0,1/2)$ random variable 
(independent from $f_{n_{k}}$ and $X_{1:n_{k}}$) has probability at least $1 - \frac{1}{n_{k-1}}$ of being in $N_{k}$.
However, for every $k^{\prime} \in \nats \setminus \{1\}$ with $2k^{\prime} > k$, we have $a_{2k^{\prime}} = 0$, so that
the set $\{X_{i} : n_{2k^{\prime}-1} < i \leq n_{2k^{\prime}} \} \cap (0,1/2)$ consists of 
$(2k^{\prime}-1) n_{2k^{\prime}-1}^{2} = \frac{2k^{\prime}-1}{2k^{\prime}} (n_{2k^{\prime}} - n_{2k^{\prime}-1})$
independent ${\rm Uniform}(0,1/2)$ samples (also independent from $f_{n_{k}}$ and $X_{1:n_{k}}$).
Since $V$ is countable, with probability one every one of these samples has $\target(X_{i}) = y_{1}$.
Furthermore, a Chernoff bound (under the conditional distribution given $f_{n_{k}}$ and $X_{1:n_{k}}$) and the law of total probability imply that
\begin{equation*}
\left|N_{k} \cap \{X_{i} : n_{2k^{\prime}-1} < i \leq n_{2k^{\prime}} \} \cap (0,1/2) \right|
\geq \left(1 \!-\! \frac{1}{2k^{\prime} \!-\! 1}\right) \!\left( 1 \!-\! \frac{1}{n_{k-1}} \right) \! \frac{2k^{\prime} \!-\! 1}{2k^{\prime}} ( n_{2k^{\prime}} - n_{2k^{\prime}-1} )
\end{equation*}
with probability at least 
$1 - \exp\!\left\{ - \frac{1}{2 (2k^{\prime}-1)^{2}} \!\left( 1 - \frac{1}{n_{k-1}} \right)\!  (2k^{\prime}-1) n_{2k^{\prime}-1}^{2} \right\} > 1 - e^{-(2k^{\prime}-1)/4}$
(since $n_{2k^{\prime}-1} \geq 2k^{\prime}-1$ and $n_{k-1} > 2$).
Noting that $\sum\limits_{k^{\prime}=2}^{\infty} e^{ - (2k^{\prime}-1)  / 4 } < \infty$, 
the Borel-Cantelli lemma implies that with probability one this event occurs for all sufficiently large $k^{\prime}$.
Thus, by the union bound, we have that with probability one, 
\begin{align*}
& \hat{\mu}_{\ProcX}\!\left( \loss\!\left( f_{n_{k}}(X_{1:n_{k}},\target(X_{1:n_{k}}),\cdot), \target(\cdot) \right) \right)
\\ & \geq \limsup_{k^{\prime} \to \infty} \frac{1}{n_{2k^{\prime}}} \sum_{t=1}^{n_{2k^{\prime}}} \loss\!\left( f_{n_{k}}\left( X_{1:n_{k}}, \target(X_{1:n_{k}}), X_{t} \right), \target(X_{t}) \right)
\\ & \geq \limsup_{k^{\prime} \to \infty} \frac{\left|N_{k} \cap \{X_{i} : n_{2k^{\prime}-1} < i \leq n_{2k^{\prime}} \} \cap (0,1/2) \right|}{n_{2k^{\prime}}} \loss(y_{0},y_{1})
\\ & \geq \loss(y_{0},y_{1}) \limsup_{k^{\prime} \to \infty} \left( 1 \!-\! \frac{1}{2k^{\prime} \!-\! 1} \right) \! \left( 1 \!-\! \frac{1}{n_{k-1}} \right) \frac{2k^{\prime} \!-\! 1}{2k^{\prime}} \left(1 \!-\! \frac{n_{2k^{\prime}-1}}{n_{2k^{\prime}}}\right)
= \loss(y_{0},y_{1})  \left( 1 - \frac{1}{n_{k-1}} \right).
\end{align*}
By the union bound, with probability one, this holds for every odd value of $k \in \nats \setminus \{1,2\}$.
Thus, with probability one, 
\begin{align*}
\limsup_{n \to \infty} \hat{\L}_{\ProcX}(f_{n},\target;n)
& \geq \limsup_{k \to \infty} \hat{\mu}_{\ProcX}\left( \loss\left( f_{n_{2k+1}}(X_{1:n_{2k+1}},\target(X_{1:n_{2k+1}}),\cdot), \target(\cdot) \right) \right)
\\ & \geq \limsup_{k \to \infty} \loss(y_{0},y_{1}) \left( 1 - \frac{1}{n_{2k}} \right)
= \loss(y_{0},y_{1}).
\end{align*}
In particular,
this implies $f_{n}$ is not strongly universally consistent under $\ProcX$.
Similar arguments can be constructed for most learning methods in common use (e.g., kernel rules, the $k$-nearest neighbors rule, 
support vector machines with radial basis kernel).

It is clear from this example that obtaining consistency under general $\ProcX$ satisfying Condition~\ref{con:kc}
will require a new approach to the design of learning rules.  We develop such an approach in the sections below.
The essential innovation is to base the predictions not only on performance on points that seem typical relative to the present data set $X_{1:n}$, 
but also on the \emph{prefixes} $X_{1:n^{\prime}}$ of the data set (for a well-chosen range of values $n^{\prime} \leq n$).

\section{Condition~\ref{con:kc} is Necessary and Sufficient for Universal Inductive and Self-Adaptive Learning}
\label{sec:main}

This section presents the proof of Theorem~\ref{thm:main} from Section~\ref{subsec:main}, establishing equivalence
of the set of processes admitting strong universal inductive learning, the set of processes
admitting strong universal self-adaptive learning, and the set of processes satisfying Condition~\ref{con:kc}.
For convenience, we restate that result here (in simplified form) as follows.\\

\noindent {\bf Theorem~\ref{thm:main} (restated)}~ $\SUIL = \SUAL = \KC$.\\

The proof is by way of three lemmas: Lemma~\ref{lem:sual-subset-kc}, representing necessity of Condition~\ref{con:kc} for strong universal self-adaptive learning, 
Lemma~\ref{lem:kc-subset-suil}, representing sufficiency of Condition~\ref{con:kc} for strong universal inductive learning,
and Lemma~\ref{lem:suil-subset-sual}, which indicates that any process admitting strong universal inductive learning necessarily admits strong
universal self-adaptive learning.
We begin with the last (and simplest) of these.

\begin{lemma}
\label{lem:suil-subset-sual}
$\SUIL \subseteq \SUAL$.
\end{lemma}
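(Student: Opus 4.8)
The plan is to show that any inductive learning rule can be simulated by a self-adaptive learning rule that simply ignores the extra unlabeled test points $X_{n+1},\ldots,X_{m}$. Concretely, given an inductive learning rule $\{f_{n}\}_{n \in \nats \cup \{0\}}$, I would define a self-adaptive learning rule $\{g_{n,m}\}$ by setting $g_{n,m}(x_{1:m}, y_{1:n}, x) = f_{n}(x_{1:n}, y_{1:n}, x)$ for all $m \geq n$. Since $f_{n}$ is measurable on $\X^{n} \times \Y^{n} \times \X$ and the coordinate projection $\X^{m} \to \X^{n}$ is measurable, each $g_{n,m}$ is a measurable function of the required type, so this is a legitimate self-adaptive learning rule (and it inherits any stochasticity of $f_{n}$, still independent of $\ProcX$).

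The key step is then to compare the two loss functionals. For this $g_{n,\cdot}$ and any measurable target $\target$, the per-round prediction at time $m+1$ is $g_{n,m}(X_{1:m},\target(X_{1:n}),X_{m+1}) = f_{n}(X_{1:n},\target(X_{1:n}),X_{m+1})$, which does not depend on $m$. Hence
\begin{equation*}
\frac{1}{t+1} \sum_{m=n}^{n+t} \loss\!\left(g_{n,m}(X_{1:m},\target(X_{1:n}),X_{m+1}),\target(X_{m+1})\right)
= \frac{1}{t+1} \sum_{m=n}^{n+t} \loss\!\left(f_{n}(X_{1:n},\target(X_{1:n}),X_{m+1}),\target(X_{m+1})\right).
\end{equation*}
Taking $\limsup_{t\to\infty}$, the right-hand side is exactly $\hat{\mu}_{\ProcX}\!\left(\loss(f_{n}(X_{1:n},\target(X_{1:n}),\cdot),\target(\cdot))\right)$ — the only bookkeeping subtlety is the shift in summation index and the harmless change from $\frac{1}{t+1}$ to an average over $t+1$ versus $t$ terms, which does not affect the limit since $\loss$ is bounded. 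By the simplified identity for the inductive objective noted in the excerpt, $\hat{\L}_{\ProcX}(f_{n},\target;n) = \hat{\mu}_{\ProcX}\!\left(\loss(f_{n}(X_{1:n},\target(X_{1:n}),\cdot),\target(\cdot))\right)$, so $\hat{\L}_{\ProcX}(g_{n,\cdot},\target;n) = \hat{\L}_{\ProcX}(f_{n},\target;n)$ almost surely (indeed surely, on the event determining $f_{n}$'s randomness).

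Finally, suppose $\ProcX \in \SUIL$ and let $f_{n}$ be an inductive learning rule that is strongly universally consistent under $\ProcX$. Then for every measurable $\target$, $\lim_{n\to\infty}\hat{\L}_{\ProcX}(f_{n},\target;n) = 0$ a.s., and by the identity above the same holds for $g_{n,\cdot}$, so $g_{n,\cdot}$ is strongly universally consistent under $\ProcX$, witnessing $\ProcX \in \SUAL$. This gives $\SUIL \subseteq \SUAL$. I do not anticipate any real obstacle here; the only thing to be careful about is checking measurability of the simulated rule and confirming that the index shift between the inductive sum (over $m \in \{n+1,\ldots,n+t\}$) and the self-adaptive sum (over $m \in \{n,\ldots,n+t\}$, predicting $X_{m+1}$) genuinely produces the same Cesàro limit — which it does because boundedness of $\loss$ makes any fixed finite number of edge terms negligible in the average.
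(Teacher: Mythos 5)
Your proposal is correct and follows essentially the same route as the paper: simulate $f_n$ by a self-adaptive rule that ignores the extra unlabeled observations, and then observe that the two loss functionals coincide. One small remark: the worry about index shift and boundedness of $\loss$ is unnecessary here. With $g_{n,m}(X_{1:m},\target(X_{1:n}),X_{m+1}) = f_n(X_{1:n},\target(X_{1:n}),X_{m+1})$, re-indexing $m' = m+1$ turns the self-adaptive sum $\frac{1}{t+1}\sum_{m=n}^{n+t}$ into $\frac{1}{t+1}\sum_{m'=n+1}^{n+t+1}$, which is exactly the inductive sum evaluated at $t' = t+1$ — same number of terms, same normalization — so the two $\limsup$'s are equal identically, with no edge terms to discard.
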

\begin{proof}
Let $\ProcX \in \SUIL$, and let $f_{n}$ be an inductive learning rule such that, 
for every measurable $\target : \X \to \Y$, 
$\lim\limits_{n\to\infty} \hat{\L}_{\ProcX}(f_{n},\target;n) = 0$ (a.s.).
Then define a self-adaptive learning rule $g_{n,m}$ as follows.
For every $n,m \in \nats$, and every $\{x_{i}\}_{i=1}^{m} \in \X^{m}$, $\{y_{i}\}_{i=1}^{n} \in \Y^{n}$, and $z \in \X$, 
if $n \leq m$, define $g_{n,m}(x_{1:m},y_{1:n},z) = f_{n}(x_{1:n},y_{1:n},z)$.
With this definition, we have that for every measurable $\target : \X \to \Y$, for every $n \in \nats$, 
\begin{align*}
& \hat{\L}_{\ProcX}(g_{n,\cdot},\target;n) 
= \limsup\limits_{t\to\infty} \frac{1}{t+1} \sum_{m=n}^{n+t} \loss(g_{n,m}(X_{1:m},\target(X_{1:n}),X_{m+1}),\target(X_{m+1}))
\\ & = \limsup\limits_{t\to\infty} \frac{1}{t+1} \sum_{m=n}^{n+t} \loss(f_{n}(X_{1:n},\target(X_{1:n}),X_{m+1}),\target(X_{m+1}))
= \hat{\L}_{\ProcX}(f_{n},\target;n),
\end{align*}
so that 
$\lim\limits_{n\to\infty} \hat{\L}_{\ProcX}(g_{n,\cdot},\target;n)
= \lim\limits_{n\to\infty} \hat{\L}_{\ProcX}(f_{n},\target;n)
= 0 \text{ (a.s.)}$.
\end{proof}

Next, we prove necessity of Condition~\ref{con:kc} for strong universal self-adaptive learning.

\begin{lemma}
\label{lem:sual-subset-kc}
$\SUAL \subseteq \KC$.
\end{lemma}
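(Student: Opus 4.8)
The plan is to prove the contrapositive: if $\ProcX \notin \KC$, then no self-adaptive learning rule is strongly universally consistent under $\ProcX$. First I would convert the failure of Condition~\ref{con:kc} into a concrete witness by combining the third bullet of Lemma~\ref{lem:limsup-equiv} with the equivalence of \eqref{eqn:tail-condition} and \eqref{eqn:raw-condition} in Lemma~\ref{lem:equiv} (applied pathwise to realizations of $\ProcX$): there is a \emph{disjoint} sequence $\{A_i\}_{i=1}^{\infty}$ in $\Borel$ such that, writing $U_n := \bigcup\{A_i : X_{1:n}\cap A_i = \emptyset\}$, the limit $\lim_{n\to\infty}\hat{\mu}_{\ProcX}(U_n)$ — which exists since $U_n \downarrow$ and $\hat{\mu}_{\ProcX}$ is monotone (Lemma~\ref{lem:mu}) — is strictly positive on an event of positive probability, and hence there is a constant $c_{0}>0$ with $\E[\hat{\mu}_{\ProcX}(U_n)] \geq c_{0}$ for every $n$. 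Conceptually, $U_n$ collects precisely those pieces $A_i$ about which the training sample $X_{1:n}$ carries no label information, and $\ProcX\notin\KC$ forces these pieces to retain a positive long-run frequency.

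Next I would introduce an adversarial randomized target. Fix $y_{0},y_{1}\in\Y$ with $\gamma := \loss(y_{0},y_{1}) > 0$. Let $\sigma=(\sigma_{i})_{i\in\nats}$ be i.i.d.\ uniform on $\{0,1\}$, independent of $\ProcX$ and of the internal randomness of the learning rule, and set $\target_{\sigma}(x)=y_{\sigma_{i}}$ for $x\in A_{i}$ and $\target_{\sigma}(x)=y_{0}$ for $x\notin\bigcup_{i}A_{i}$ (measurable for each realization of $\sigma$, since each $A_{i}\in\Borel$). The heart of the argument is the estimate that, for any self-adaptive rule $g_{n,m}$ and any $n$,
\[
\E\!\left[\hat{\L}_{\ProcX}(g_{n,\cdot},\target_{\sigma};n)\right] \;\geq\; \frac{\gamma}{2}\,\E\!\left[\hat{\mu}_{\ProcX}(U_n)\right] \;\geq\; \frac{\gamma c_{0}}{2}.
\]
To prove the first inequality I would condition on $\ProcX$ and on the rule's randomness, so that the prediction $g_{n,m}(X_{1:m},\target_{\sigma}(X_{1:n}),X_{m+1})$ is a deterministic function of $X_{1:m}$, $X_{m+1}$, and the observed labels $\target_{\sigma}(X_{1:n})$. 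For any $m\geq n$ with $X_{m+1}\in A_{i}\subseteq U_n$: each $X_{t}\in X_{1:n}$ either lies outside $\bigcup_{j}A_{j}$ (label $y_{0}$) or lies in some $A_{j}$ with $A_{j}\cap X_{1:n}\neq\emptyset$, hence $j\neq i$ by definition of $U_n$; so $\target_{\sigma}(X_{1:n})$, and therefore the prediction at $X_{m+1}$, is independent of $\sigma_{i}$. Since $\sigma_{i}$ is uniform on $\{0,1\}$, the triangle inequality for the metric $\loss$ gives conditional expected loss at $X_{m+1}$ at least $\tfrac12\big(\loss(\hat{y},y_{0})+\loss(\hat{y},y_{1})\big)\geq\gamma/2$. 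Averaging over $m\in\{n,\dots,n+t\}$, passing $\limsup_{t\to\infty}$ inside the (conditional) expectation by the reverse Fatou lemma — valid because $\loss\leq\maxloss<\infty$, which is exactly where boundedness enters — and noting $\limsup_{t}\frac{1}{t+1}\big|\{n\leq m\leq n+t : X_{m+1}\in U_n\}\big|=\hat{\mu}_{\ProcX}(U_n)$ (the first $n$ terms do not affect the $\limsup$), yields $\E[\hat{\L}_{\ProcX}(g_{n,\cdot},\target_{\sigma};n)\mid\ProcX,\mathrm{rand}]\geq\frac{\gamma}{2}\hat{\mu}_{\ProcX}(U_n)$; integrating gives the displayed bound.

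To finish, one more application of reverse Fatou gives $\E[\limsup_{n}\hat{\L}_{\ProcX}(g_{n,\cdot},\target_{\sigma};n)]\geq\limsup_{n}\E[\hat{\L}_{\ProcX}(g_{n,\cdot},\target_{\sigma};n)]\geq\gamma c_{0}/2>0$, so $\limsup_{n}\hat{\L}_{\ProcX}(g_{n,\cdot},\target_{\sigma};n)>0$ with positive probability over $(\ProcX,\sigma,\mathrm{rand})$. By Fubini there is a fixed realization $\sigma^{*}$ for which $\hat{\L}_{\ProcX}(g_{n,\cdot},\target_{\sigma^{*}};n)\not\to 0$ on a set of positive probability over $\ProcX$ and the rule's randomness; since $\target_{\sigma^{*}}$ is a fixed measurable function (its $y_{0}$-preimage contains the countable Borel union $\bigcup\{A_{i}:\sigma^{*}_{i}=0\}$), $g$ is not strongly universally consistent under $\ProcX$. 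As $g$ was an arbitrary self-adaptive rule, $\ProcX\notin\SUAL$, establishing $\SUAL\subseteq\KC$.

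I expect the main obstacle to be the conditional-independence step — arguing rigorously, with the appropriate $\sigma$-algebras, that the learner's prediction at a test point in $A_{i}$ cannot depend on $\sigma_{i}$, since the only label information $\target_{\sigma}(X_{1:n})$ involves $\sigma_{j}$ only for sets $A_{j}$ meeting $X_{1:n}$, and those indices are $\neq i$ by definition of $U_n$ — together with the careful interchanges of $\limsup$ and $\E$ via reverse Fatou (precisely where boundedness of $\loss$ is essential) and the concluding Fubini de-randomization to a single fixed target.
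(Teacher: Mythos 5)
Your proposal is correct and takes essentially the same route as the paper's proof: convert $\ProcX\notin\KC$ via Lemmas~\ref{lem:limsup-equiv} and \ref{lem:equiv} into a disjoint sequence $\{A_i\}$ with positive tail mass, build a randomized target that assigns an independent label-bit to each $A_i$, exploit that the learner's prediction at a test point in an as-yet-unseen $A_i$ is independent of that bit, and push expectations through $\limsup$ by Fatou/Fubini. The only departures are cosmetic: you parametrize the random target family by i.i.d.\ bits $\sigma_i$ rather than the binary expansion of a uniform $\kappa\in[0,1)$ (these are the same random object), and you first prove a uniform-in-$n$ lower bound $\E[\hat{\L}_{\ProcX}(g_{n,\cdot},\target_\sigma;n)]\geq\gamma c_0/2$ and then apply reverse Fatou once more, whereas the paper applies Fatou directly to $\E[\limsup_n\hat{\L}_{\ProcX}]$; both interchanges are justified by $\maxloss<\infty$ and yield the same conclusion.
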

\begin{proof}
We prove this result in the contrapositive.
Suppose $\ProcX \notin \KC$.  
By Corollary~\ref{cor:missing-mass}, there exists a disjoint sequence $\{A_i\}_{i=1}^{\infty}$ in $\Borel$ with $\bigcup\limits_{i=1}^{\infty} A_i = \X$ 
such that, with probability greater than $0$, $\lim\limits_{n\to\infty} \hat{\mu}_{\ProcX}\!\left( \bigcup\{ A_{i} : X_{1:n} \cap A_{i} = \emptyset \} \right) > 0$.
For any $n \in \nats$, let $\bar{\A}(X_{1:n}) = \bigcup\{A_i : X_{1:n} \cap A_i = \emptyset\}$.
Now take any two distinct values $y_0, y_1 \in \Y$, and construct a set of target functions $\{ \target_{\kappa} : \kappa \in [0,1) \}$ as follows. 
For any $\kappa \in [0,1)$ and $i \in \nats$, let $\kappa_i = \lfloor 2^{i} \kappa \rfloor - 2 \lfloor 2^{i-1} \kappa \rfloor$: the $i^{th}$ bit of the binary representation of $\kappa$.  
For each $i \in \nats$ and each $x \in A_i$, define $\target_{\kappa}(x) = y_{\kappa_i}$.
Note that $(x,\kappa) \mapsto \target_{\kappa}(x)$ is measurable in the product $\sigma$-algebra 
(under $\Borel$ for the $x$ argument, and the usual Borel $\sigma$-algebra on $[0,1)$ for the $\kappa$ argument), 
since the inverse image of $\{y_1\}$ is 
$\bigcup\limits_{i=1}^{\infty} \!\left( A_i \times \{ \kappa : \kappa_i = 1 \} \right)$ (a countable union of measurable rectangle sets) 
and the inverse image of $\{y_0\}$ is the complement of this set.

For any $t \in \nats$, let $i_t$ denote the value of $i \in \nats$ for which $X_t \in A_i$.
Now fix any self-adaptive learning rule $g_{n,m}$, and for brevity define a function $f_{n,m}^{\kappa} : \X \to \Y$ as $f_{n,m}^{\kappa}(\cdot) = g_{n,m}(X_{1:m},\target_{\kappa}(X_{1:n}),\cdot)$
(a composition of measurable functions, and therefore measurable).
Then we have 
\begin{align*}
&\sup_{\kappa \in [0,1)} \E\!\left[ \limsup_{n\to\infty} \hat{\L}_{\ProcX}(g_{n,\cdot},\target_{\kappa};n) \right]
\geq \int_0^1 \E\!\left[ \limsup_{n\rightarrow\infty} \hat{\L}_{\ProcX}(g_{n,\cdot},\target_{\kappa};n) \right] {\rm d}\kappa
\\ & \geq \int_0^1 \E\!\left[ \limsup_{n\rightarrow\infty} \limsup_{t\to\infty} \frac{1}{t+1} \sum_{m=n}^{n+t} \loss\!\left( f_{n,m}^{\kappa}(X_{m+1}), \target_{\kappa}(X_{m+1}) \right) \ind_{\bar{\A}(X_{1:n})}(X_{m+1}) \right] {\rm d}\kappa.
\end{align*}
By Fubini's theorem, this last expression is equal
\begin{equation*}
\E\!\left[ \int_{0}^{1} \limsup_{n\rightarrow\infty} \limsup_{t\to\infty} \frac{1}{t+1} \sum_{m=n}^{n+t} \loss\!\left( f_{n,m}^{\kappa}(X_{m+1}), \target_{\kappa}(X_{m+1}) \right) \ind_{\bar{\A}(X_{1:n})}(X_{m+1}) {\rm d}\kappa \right].
\end{equation*}
Since $\loss$ is bounded, Fatou's lemma implies this is at least as large as
\begin{equation*}
\E\!\left[ \limsup_{n\rightarrow\infty} \limsup_{t\to\infty} \int_{0}^{1} \frac{1}{t+1} \sum_{m=n}^{n+t} \loss\!\left( f_{n,m}^{\kappa}(X_{m+1}), \target_{\kappa}(X_{m+1}) \right) \ind_{\bar{\A}(X_{1:n})}(X_{m+1}) {\rm d}\kappa \right],
\end{equation*}
and linearity of integration implies this equals
\begin{equation}
\label{eqn:sual-subset-kc-pre-integration}
\E\!\left[ \limsup_{n\rightarrow\infty} \limsup_{t\to\infty} \frac{1}{t+1} \sum_{m=n}^{n+t} \ind_{\bar{\A}(X_{1:n})}(X_{m+1}) \int_{0}^{1} \loss\!\left( f_{n,m}^{\kappa}(X_{m+1}), \target_{\kappa}(X_{m+1}) \right) {\rm d}\kappa \right].
\end{equation}
Note that, for any $m$, the value of $f_{n,m}^{\kappa}(X_{m+1})$ is a function of $\ProcX$ and the values $\kappa_{i_{1}},\ldots,\kappa_{i_{n}}$.
Therefore, for any $m$ with $X_{m+1} \in \bar{\A}(X_{1:n})$, the value of $f_{n,m}^{\kappa}(X_{m+1})$ is functionally independent of $\kappa_{i_{m+1}}$.
Thus, letting $K \sim {\rm Uniform}( [0,1) )$ be independent of $\ProcX$ and $g_{n,m}$, 
for any such $m$ we have
\begin{align*} 
& \int_{0}^{1} \loss\!\left( f_{n,m}^{\kappa}(X_{m+1}), \target_{\kappa}(X_{m+1}) \right) {\rm d}\kappa
= \E\!\left[ \loss\!\left( f_{n,m}^{K}(X_{m+1}), \target_{K}(X_{m+1}) \right) \Big| \ProcX, g_{n,m} \right]
\\ & = \E\!\left[ \E\!\left[ \loss\!\left( g_{n,m}(X_{1:m},\{y_{K_{i_{j}}}\}_{j=1}^{n}, X_{m+1}), y_{K_{m+1}}\right) \Big| \ProcX, g_{n,m}, K_{i_{1}},\ldots,K_{i_{n}} \right] \Big| \ProcX, g_{n,m} \right]
\\ & = \E\!\left[\sum_{b\in\{0,1\}} \frac{1}{2}\loss\!\left(g_{n,m}(X_{1:m},\{y_{K_{i_{j}}}\}_{j=1}^{n}, X_{m+1}), y_{b} \right) \Big| \ProcX, g_{n,m} \right].
\end{align*}
By the relaxed triangle inequality, this last line is no smaller than $\E\!\left[\frac{1}{2\triconst}\loss(y_{0},y_{1}) \Big| \ProcX, g_{n,m} \right] = \frac{1}{2\triconst} \loss(y_{0},y_{1})$, 
so that \eqref{eqn:sual-subset-kc-pre-integration} is at least as large as
\begin{align*}
& \E\!\left[ \limsup_{n\rightarrow\infty} \limsup_{t\to\infty} \frac{1}{t+1} \sum_{m=n}^{n+t} \ind_{\bar{\A}(X_{1:n})}(X_{m+1}) \frac{1}{2\triconst} \loss(y_{0},y_{1}) \right]
\\ & = \frac{1}{2\triconst} \loss(y_{0},y_{1}) \E\!\left[ \lim_{n\rightarrow\infty} \hat{\mu}_{\ProcX}\!\left( \bigcup \left\{ A_{i} : X_{1:n} \cap A_{i} = \emptyset \right\} \right) \right].
\end{align*}
Since any nonnegative random variable with mean $0$ necessarily equals $0$ almost surely \citep*[e.g.,][Theorem 1.6.6]{ash:00},
and since $\lim\limits_{n\to\infty} \!\hat{\mu}_{\ProcX}\!\left( \bigcup\{ A_{i} : X_{1:n} \cap A_{i} = \emptyset \} \right) > 0$ with probability strictly greater than $0$, 
and the left hand side of this inequality is nonnegative, 
we have that $\E\!\left[ \lim\limits_{n\rightarrow\infty} \hat{\mu}_{\ProcX}\!\left( \bigcup \!\left\{ A_{i} : X_{1:n} \cap A_{i} = \emptyset \right\} \right) \right] > 0$.
Furthermore, since $\loss$ is a near-metric, we also have $\loss(y_{0},y_{1}) > 0$.
Altogether we have that
\begin{equation*}
\sup_{\kappa \in [0,1)} \E\!\left[ \limsup_{n\to\infty} \hat{\L}_{\ProcX}(g_{n,\cdot},\target_{\kappa};n) \right]
 \geq \frac{1}{2\triconst} \loss(y_{0},y_{1}) \E\!\left[ \limsup_{n\rightarrow\infty} \hat{\mu}_{\ProcX}\!\left( \bigcup \left\{ A_{i} : X_{1:n} \cap A_{i} = \emptyset \right\} \right) \right]
> 0.
\end{equation*}
In particular, this implies $\exists \kappa \in [0,1)$ such that 
$\E\!\left[ \limsup\limits_{n\to\infty} \hat{\L}_{\ProcX}(g_{n,\cdot},\target_{\kappa};n) \right] > 0$.
Since any random variable equal $0$ (a.s.) necessarily has expected value $0$, 
and since we have $\limsup\limits_{n\to\infty} \hat{\L}_{\ProcX}(g_{n,\cdot},\target_{\kappa};n) \geq 0$,
it must be that 
$\limsup\limits_{n\to\infty} \hat{\L}_{\ProcX}(g_{n,\cdot},\target_{\kappa};n) > 0$
with probability strictly greater than $0$, 
so that $g_{n,m}$ is not strongly universally consistent.
Since $g_{n,m}$ was an arbitrary self-adaptive learning rule, 
we conclude that there does not exist a self-adaptive learning rule that is
strongly universally consistent under $\ProcX$: that is, $\ProcX \notin \SUAL$.
Since this argument holds for any $\ProcX \notin \KC$, the lemma follows.
\end{proof}

Finally, to complete the proof of Theorem~\ref{thm:main}, we prove that Condition~\ref{con:kc} is sufficient
for $\ProcX$ to admit strong universal inductive learning.  We prove this via a more general strategy: namely,
a kind of constrained maximum empirical risk minimization.
Though the lemmas below are in fact somewhat stronger than needed to prove Theorem~\ref{thm:main},
some of them are useful later for establishing Theorem~\ref{thm:optimistic-self-adaptive}, and some 
should also be of independent interest.
We propose to study an inductive learning rule $\hat{f}_{n}$ such that,
for any $n \in \nats$, $x_{1:n} \in \X^{n}$, and $y_{1:n} \in \Y^{n}$, 
the function $\hat{f}_{n}(x_{1:n},y_{1:n},\cdot)$ is defined as
\begin{equation}
\label{eqn:suil-rule}
\argmin\limits_{f \in \F_{n}} \max\limits_{\hat{m}_{n} \leq m \leq n} \frac{1}{m} \sum_{t=1}^{m} \loss(f(x_{t}),y_{t}),
\end{equation}
where $\F_{n}$ is a well-chosen finite class of functions $\X \to \Y$, 
and $\hat{m}_{n}$ is a well-chosen integer.
Suppose ties in the $\argmin$ are broken based on a fixed preference ordering of $\F_{n}$.
In particular, this makes $\hat{f}_{n}$ a measurable function,
and hence \eqref{eqn:suil-rule} defines a valid inductive learning rule.
For our purposes, $\hat{f}_{0}(\{\},\{\},\cdot)$ can be defined as an arbitrary measurable function $\X \to \Y$.

The sequence of classes $\F_{n}$ and values $\hat{m}_{n}$, and the guarantees they provide, originate in the following several lemmas.
The general strategy is to define $\F_{n}$ so that, for large $n$, $\F_{n}$ is rich enough to contain a function 
$f$ with small $\hat{\mu}_{\ProcX}(\loss(f(\cdot),\target(\cdot)))$, 
while at the same time not too rich, so that (for an appropriate choice of $\hat{m}_{n}$) 
$\max\limits_{\hat{m}_{n} \leq m \leq n} \frac{1}{m} \sum\limits_{t=1}^{m} \loss(f(X_{t}),\target(X_{t}))$ 
is a reasonable estimate of $\hat{\mu}_{\ProcX}(\loss(f(\cdot),\target(\cdot)))$ for all $f \in \F_{n}$.
The fact that these two properties can exist simultaneously, and for all possible $\target$, 
is enabled by $\ProcX$ satisfying Condition~\ref{con:kc}.
We now proceed with the details.

\begin{lemma}
\label{lem:bracketing-convergence}
For any finite set $\G$ of bounded measurable functions $\X \to \reals$,
for any process $\ProcX$, there exists a (nonrandom) nondecreasing sequence 
$\{s_{n}\}_{n=1}^{\infty}$ in $\nats$ with $s_{n} \leq n$ and $s_{n} \to \infty$
such that
\begin{equation*}
\lim_{n\to\infty} \E\!\left[ \sup_{n^{\prime} \geq n} \max_{g \in \G} \left| \hat{\mu}_{\ProcX}( g ) ~- \!\max_{s_{n} \leq m \leq n^{\prime}} \frac{1}{m} \sum_{t=1}^{m} g(X_{t}) \right| \right] = 0.
\end{equation*}
\end{lemma}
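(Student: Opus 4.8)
The plan is to reduce to a single function, split the window maximum into an ``overshoot'' and an ``undershoot'' piece, and choose $\{m_{n}\}$ to control the latter. First, since $\G$ is finite and $\max_{g\in\G}(\cdot)\le\sum_{g\in\G}(\cdot)$, it suffices to produce one nondecreasing sequence $\{m_{n}\}$ in $\nats$ with $m_{n}\to\infty$ and $m_{n}\le n$ (so that the relevant maxima are over nonempty index sets) such that, for each $g\in\G$ separately, $\E[\,\sup_{n'\ge n}|\hat\mu_{\ProcX}(g)-\max_{m_{n}\le m\le n'}S_{m}(g)|\,]\to0$; here $S_{m}(g):=\frac1m\sum_{t=1}^{m}g(X_{t})$ satisfies $|S_{m}(g)|\le\|g\|_{\infty}:=\sup_{x}|g(x)|<\infty$, and $\hat\mu_{\ProcX}(g)=\limsup_{m}S_{m}(g)$. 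Because $A_{n'}:=\max_{m_{n}\le m\le n'}S_{m}(g)$ is nondecreasing in $n'$ with $\inf_{n'\ge n}A_{n'}=A_{n}$ and $\sup_{n'\ge n}A_{n'}=\sup_{m\ge m_{n}}S_{m}(g)$, the inner $\sup_{n'\ge n}$ can be eliminated:
\begin{equation*}
\sup_{n'\ge n}\,| \hat\mu_{\ProcX}(g)-A_{n'} |
=\max\Bigl\{\bigl(\textstyle\sup_{m\ge m_{n}}S_{m}(g)-\hat\mu_{\ProcX}(g)\bigr)^{+},\ \bigl(\hat\mu_{\ProcX}(g)-\max_{m_{n}\le m\le n}S_{m}(g)\bigr)^{+}\Bigr\}.
\end{equation*}
The first (overshoot) term needs no special choice of $\{m_{n}\}$: since $\sup_{m\ge k}S_{m}(g)\downarrow\limsup_{m}S_{m}(g)=\hat\mu_{\ProcX}(g)$ as $k\to\infty$, it equals $\sup_{m\ge m_{n}}S_{m}(g)-\hat\mu_{\ProcX}(g)\ge0$, decreases to $0$ a.s., and being bounded by $2\|g\|_{\infty}$ has expectation $\to0$ by dominated convergence.

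The undershoot term $(\hat\mu_{\ProcX}(g)-\max_{m_{n}\le m\le n}S_{m}(g))^{+}$ is the real content, and controlling it forces the choice of $\{m_{n}\}$. The key fact is that for each fixed $j$, $n\mapsto\max_{j\le m\le n}S_{m}(g)$ increases pointwise to $\sup_{m\ge j}S_{m}(g)\ge\hat\mu_{\ProcX}(g)$, so by bounded convergence $\E[\max_{j\le m\le n}S_{m}(g)]\to\E[\sup_{m\ge j}S_{m}(g)]\ge\E[\hat\mu_{\ProcX}(g)]$ as $n\to\infty$. I would therefore pick a strictly increasing sequence $N_{1}<N_{2}<\cdots$ with $N_{j}\ge j$ such that $n\ge N_{j}$ implies $\E[\max_{j\le m\le n}S_{m}(g)]\ge\E[\hat\mu_{\ProcX}(g)]-1/j$ for every $g\in\G$ at once (only finitely many thresholds), and set $m_{n}=\max(\{j:N_{j}\le n\}\cup\{1\})$; this is nondecreasing, tends to $\infty$, and satisfies $m_{n}\le n$. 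Then for each $g\in\G$, $\E[\max_{m_{n}\le m\le n}S_{m}(g)]\ge\E[\hat\mu_{\ProcX}(g)]-1/m_{n}$, and together with $\max_{m_{n}\le m\le n}S_{m}(g)\le\sup_{m\ge m_{n}}S_{m}(g)$, whose expectation tends to $\E[\hat\mu_{\ProcX}(g)]$, this squeezes $\E[\max_{m_{n}\le m\le n}S_{m}(g)]\to\E[\hat\mu_{\ProcX}(g)]$. Applying the identity $x^{+}=x+(-x)^{+}$ to $x=\hat\mu_{\ProcX}(g)-\max_{m_{n}\le m\le n}S_{m}(g)$,
\begin{equation*}
\E\bigl[x^{+}\bigr]=\bigl(\E[\hat\mu_{\ProcX}(g)]-\E[\max_{m_{n}\le m\le n}S_{m}(g)]\bigr)+\E\bigl[(\max_{m_{n}\le m\le n}S_{m}(g)-\hat\mu_{\ProcX}(g))^{+}\bigr]\longrightarrow0,
\end{equation*}
the first bracket by the squeeze, the second because it is dominated by the overshoot term shown above to vanish. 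Combining overshoot and undershoot (via $\max(a,b)\le a+b$ for $a,b\ge0$) and summing over the finitely many $g\in\G$ gives the lemma.

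The one genuinely delicate step is the construction of $\{m_{n}\}$ in the second paragraph: a finite window $[m_{n},n]$ must leave room to reach times where $S_{m}(g)$ is near its $\limsup$, and how slowly $m_{n}$ must grow depends on the process $\ProcX$. The diagonalization through the thresholds $\{N_{j}\}$ --- which are allowed to depend on $\ProcX$ but not on $n'$ or on the sample path --- is exactly what supplies a slow-enough sequence; everything else is monotonicity of $\limsup$ and of suprema, boundedness of $g$, and dominated convergence.
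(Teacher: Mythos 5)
Your proof is correct, and it takes a genuinely different route from the paper's. The paper works pathwise: for each sample path it defines a stopping-type index $n_s^g(\mathbf{x})$ as the first time after which the window $[s,n]$ captures $\sup_{m \geq s} S_m(g)$ to within $2^{-s}$, shows this is nondecreasing and measurable, passes to probabilistic thresholds $s_n^{\G}(\delta)$, and finally deploys the Borel--Cantelli lemma to conclude that the carefully diagonalized sequence $s_n$ eventually lies below $s_n^{\G}(\ProcX)$ almost surely; dominated convergence then transfers the almost-sure statement to expectations. You instead split $\sup_{n' \geq n} \lvert \hat\mu_{\ProcX}(g) - A_{n'} \rvert$ into an overshoot piece, $\sup_{m\geq m_n} S_m(g) - \hat\mu_{\ProcX}(g)$, which vanishes in expectation by dominated convergence under nothing more than $m_n \to \infty$, and an undershoot piece, $(\hat\mu_{\ProcX}(g) - \max_{m_n\leq m\leq n} S_m(g))^+$, which you control through the expectation thresholds $N_j$, a squeeze argument showing $\E[\max_{m_n\leq m\leq n} S_m(g)] \to \E[\hat\mu_{\ProcX}(g)]$, and the identity $x^+ = x + (-x)^+$. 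This buys you a shorter argument that works entirely at the level of expectations and needs only monotone/bounded/dominated convergence; in particular you never have to establish measurability of the pathwise indices $n_s^g$, $s_n^g$, $s_n^{\G}$ (a nontrivial side task the paper has to handle explicitly), nor invoke Borel--Cantelli. The paper's construction is a little more explicit about the pathwise rate at which a slow-enough $m_n$ can be chosen, which is conceptually useful in later lemmas of the paper, but for the stated lemma your argument is both correct and cleaner.
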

\begin{proof}
The proof of this lemma proceeds in three steps.
First, we note that the result would follow almost immediately from the definition of $\limsup$, 
if the sequence $\ProcX$ were \emph{deterministic} and we were merely interested in the case of a \emph{single} function $g$.
Second, we extend this observation to any finite \emph{set} $\G$ of functions by taking the $s_{n}$ 
sequence as the minimum of the corresponding $s_{n}$ values for each individual $g$.
The final component is to extend the result to hold for non-deterministic processes $\ProcX$, 
by replacing the $s_{n}$ sequence corresponding to each sample path of $\ProcX$ with an appropriate 
confidence bound on its value.  This last step requires us to introduce some notation in the 
first two steps to explicitly define these $s_{n}$ sequences for the sample paths, 
so that together they are a measurable function of $\ProcX$, and hence confidence bounds are well-define.
We now turn to the details of the proof.

Fix any sequence $\mathbf{x} = \{x_{t}\}_{t=1}^{\infty}$ in $\X$
and any bounded function $g : \X \to \reals$.
By definition, 
\begin{equation*}
\hat{\mu}_{\mathbf{x}}(g) = \lim_{s \to \infty} \lim_{n \to \infty} \max_{s \leq m \leq n} \frac{1}{m} \sum_{t=1}^{m} g(x_{t}).
\end{equation*}
In particular, for each $s \in \nats$, 
since $\max\limits_{s \leq m \leq n} \frac{1}{m} \sum\limits_{t=1}^{m} g(x_{t})$ is nondecreasing in $n \geq s$, and $g$ is bounded,
$\lim\limits_{n \to \infty} \max\limits_{s \leq m \leq n} \frac{1}{m} \sum\limits_{t=1}^{m} g(x_{t})$ exists and is finite.
This implies that, for each $s \in \nats$, $\exists n_{s}^{g}(\mathbf{x}) \in \nats$ s.t. 
$n_{s}^{g}(\mathbf{x}) \geq s$ and every $n^{\prime} \geq n_{s}^{g}(\mathbf{x})$ has
\begin{equation}
\label{eqn:bracketing-convergence-1}
\max_{s \leq m \leq n^{\prime}} \frac{1}{m} \sum_{t=1}^{m} g(x_{t}) 
\leq \sup_{s \leq m < \infty} \frac{1}{m} \sum_{t=1}^{m} g(x_{t}) 
\leq 2^{-s} + \max_{s \leq m \leq n^{\prime}} \frac{1}{m} \sum_{t=1}^{m} g(x_{t}).
\end{equation}
In particular, let us define $n_{s}^{g}(\mathbf{x})$ to be the minimal value in $\nats$ with this property.
We first argue that $n_{s}^{g}(\mathbf{x})$ is nondecreasing in $s$.
To see this, first note that the left inequality in \eqref{eqn:bracketing-convergence-1} is trivially satisfied for every $s,n^{\prime} \in \nats$ with $n^{\prime} \geq s$.
Moreover, for any $n^{\prime},s \in \nats$ with $s \geq 2$ and $n^{\prime} \geq n_{s}^{g}(\mathbf{x})$, 
either $\sup\limits_{s-1 \leq m < \infty} \frac{1}{m} \sum\limits_{t=1}^{m} g(x_{t}) = \frac{1}{s-1} \sum\limits_{t=1}^{s-1} g(x_{t})$, 
in which case it is clearly less than $2^{-(s-1)} + \max\limits_{s-1 \leq m \leq n^{\prime}} \frac{1}{m} \sum\limits_{t=1}^{m} g(x_{t})$,
or else $\sup\limits_{s-1 \leq m < \infty} \frac{1}{m} \sum\limits_{t=1}^{m} g(x_{t}) = \sup\limits_{s \leq m < \infty} \frac{1}{m} \sum\limits_{t=1}^{m} g(x_{t})$,
in which case (since $n^{\prime} \geq n_{s}^{g}(\mathbf{x})$) it is at most 
$2^{-s} + \max\limits_{s \leq m \leq n^{\prime}} \frac{1}{m} \sum\limits_{t=1}^{m} g(x_{t}) \leq 2^{-(s-1)} +  \max\limits_{s-1 \leq m \leq n^{\prime}} \frac{1}{m} \sum\limits_{t=1}^{m} g(x_{t})$.
Furthermore, we have $n_{s}^{g}(\mathbf{x}) \geq s \geq s-1$.
Altogether, we have $n_{s-1}^{g}(\mathbf{x}) \leq n_{s}^{g}(\mathbf{x})$, so that $n_{s}^{g}(\mathbf{x})$ is indeed nondecreasing in $s$.

For each $n \in \nats$ with $n \geq n_{1}^{g}(\mathbf{x})$, let $s_{n}^{g}(\mathbf{x}) = \max\{ s \in \{1,\ldots,n\} : n \geq n_{s}^{g}(\mathbf{x}) \}$;
for completeness, let $s_{n}^{g}(\mathbf{x}) = 0$ for $n < n_{1}^{g}(\mathbf{x})$.
Then, for any finite set $\G$ of bounded functions $\X \to \reals$, 
define $s_{n}^{\G}(\mathbf{x}) = \min\limits_{g \in \G} s_{n}^{g}(\mathbf{x}) = \max\!\left(\left\{ s \in \{1,\ldots,n\} : n \geq \max\limits_{g \in \G} n_{s}^{g}(\mathbf{x}) \right\} \cup \{0\}\right)$.
Since $n_{s}^{g}(\mathbf{x})$ is nondecreasing in $s$, 
we have for any $n,n^{\prime} \in \nats$ with $n^{\prime} \geq n$, for $1 \leq s \leq s_{n}^{\G}(\mathbf{x})$, 
every $g \in \G$ has $n^{\prime} \geq n_{s}^{g}(\mathbf{x})$,
so that \eqref{eqn:bracketing-convergence-1} is satisfied for every $g \in \G$,
which implies
\begin{equation*}
\max_{g \in \G} \left| \sup_{s \leq m < \infty} \frac{1}{m} \sum_{t=1}^{m} g(x_{t}) - \max_{s \leq m \leq n^{\prime}} \frac{1}{m} \sum_{t=1}^{m} g(x_{t}) \right| \leq 2^{-s}.
\end{equation*}
Therefore, for any sequence $s_{n} \to \infty$ with $s_{n} \leq n$ 
such that $\exists n_{0} \in \nats$ with $1 \leq s_{n} \leq s_{n}^{\G}(\mathbf{x})$ for all $n \geq n_{0}$, 
we have
\begin{equation*}
\lim_{n \to \infty} \sup_{n^{\prime} \geq n} \max_{g \in \G} \left| \sup_{s_{n} \leq m < \infty} \frac{1}{m} \sum_{t=1}^{m} g(x_{t}) ~- \max_{s_{n} \leq m \leq n^{\prime}} \frac{1}{m} \sum_{t=1}^{m} g(x_{t}) \right|
\leq \lim_{n\to\infty} 2^{-s_{n}} = 0.
\end{equation*}
Furthermore, for any such sequence $s_{n}$, for every $g \in \G$, by definition 
\begin{equation*}
\lim_{n \to \infty} \sup_{s_{n} \leq m < \infty} \frac{1}{m} \sum_{t=1}^{m} g(x_{t}) = \hat{\mu}_{\mathbf{x}}(g),
\end{equation*}
and since $\G$ has finite cardinality, this implies
\begin{equation*}
\lim_{n\to\infty} \max_{g \in \G} \left| \hat{\mu}_{\mathbf{x}}(g) ~- \!\sup_{s_{n} \leq m < \infty} \frac{1}{m} \sum_{t=1}^{m} g(x_{t}) \right|
= \max_{g \in \G} \lim_{n\to\infty} \left| \hat{\mu}_{\mathbf{x}}(g) ~- \!\sup_{s_{n} \leq m < \infty} \frac{1}{m} \sum_{t=1}^{m} g(x_{t}) \right|
= 0.
\end{equation*}
Altogether, the triangle inequality implies
\begin{align}
& \lim_{n \to \infty} \sup_{n^{\prime} \geq n} \max_{g \in \G} \left| \hat{\mu}_{\mathbf{x}}(g) ~- \!\max_{s_{n} \leq m \leq n^{\prime}} \frac{1}{m} \sum_{t=1}^{m} g(x_{t}) \right| 
\notag
\\ & \leq 
\lim_{n \to \infty} \sup_{n^{\prime} \geq n} \max_{g \in \G} \left| \sup_{s_{n} \leq m < \infty} \frac{1}{m} \sum_{t=1}^{m} g(x_{t}) ~- \max_{s_{n} \leq m \leq n^{\prime}} \frac{1}{m} \sum_{t=1}^{m} g(x_{t}) \right| 
\notag
\\ & {\hskip 32mm}+ \lim_{n\to\infty} \max_{g \in \G} \left| \hat{\mu}_{\mathbf{x}}(g) ~- \sup_{s_{n} \leq m < \infty} \frac{1}{m} \sum_{t=1}^{m} g(x_{t}) \right|
= 0. \label{eqn:bracketing-convergence-2}
\end{align}

Next, suppose the bounded functions in the set $\G$ are measurable.
Note that this implies that, for any $g \in \G$, the set of sequences $\mathbf{x}$ satisfying \eqref{eqn:bracketing-convergence-1}
for a given $s,n \in \nats$ is a measurable subset of $\X^{\infty}$,
so that for each $s,n^{\prime} \in \nats$ the set of sequences $\mathbf{x}$ with $n_{s}^{g}(\mathbf{x}) = n^{\prime}$
is also a measurable set, 
so that $n_{s}^{g}$ is a measurable function.  Since the value of $s_{n}^{g}$ is obtained
from the values $n_{s}^{g}$ via operations that preserve measurability, we also have that $s_{n}^{g}$ is a measurable
function.  Since the minimum of a finite set of measurable functions is also measurable, we also have that
$s_{n}^{\G}$ is a measurable function.

Now fix any process $\ProcX$.
At this point, it may be tempting to use $s_{n}^{\G}(\ProcX)$ to complete the proof.
However, recall that the lemma requires a \emph{nonrandom} sequence $s_{n}$, 
whereas $s_{n}^{\G}(\ProcX)$ is a random variable.
To address this, we will replace $s_{n}^{\G}(\ProcX)$ with an appropriate sequence of \emph{confidence bounds} on its value, as follows.
For any $n \in \nats$ and $\delta \in (0,1]$ 
define  
\begin{equation*}
s_{n}^{\G}(\delta) = \max\left\{ s \in \{0,1,\ldots,n\} : \P( s_{n}^{\G}(\ProcX) \geq s ) \geq 1-\delta \right\}.
\end{equation*}
Since $s_{n}^{\G}(\mathbf{x})$ is nondecreasing for each sequence $\mathbf{x}$, we must also have that $s_{n}^{\G}(\delta)$ is nondecreasing in $n$.
Furthermore, since each $s \in \nats$ and $g \in \G$ have $n_{s}^{g}(\mathbf{x}) < \infty$, 
and $\G$ is a finite set, 
we have $s_{n}^{\G}(\mathbf{x}) \to \infty$ for any sequence $\mathbf{x}$;
thus, by continuity of probability measures \citep*[e.g.,][Theorem A.19]{schervish:95}, 
$\forall s \in \nats$, $\lim\limits_{n \to \infty} \P( s_{n}^{\G}(\ProcX) < s ) = 0$.  
We therefore have $s_{n}^{\G}(\delta) \to \infty$ for any $\delta \in (0,1]$.
In particular, letting
\begin{equation*}
s_{n} = \max \left\{ s \in \nats \cup \{0\} : s_{n}^{\G}(2^{-s}) \geq s \right\}
\end{equation*}
for each $n \in \nats$,
we have that $s_{n}$ is nondecreasing, and $s_{n} \to \infty$.  
Furthermore, by definition, we have $\P( s_{n}^{\G}(\ProcX) \geq s_{n} ) \geq 1 - 2^{-s_{n}}$, 
and since any $\delta \in (0,1]$ has $s_{n}^{\G}(\delta) \leq n$, 
the definition of $s_{n}$ also implies $s_{n} \leq n$.
Let $n_{1} = 1$, and let $n_{2},n_{3},\ldots$ denote the increasing subsequence of all values $n \in \nats \setminus \{1\}$ for which $s_{n} > s_{n-1}$;
since $s_{n} \to \infty$ while each $n$ has $s_{n} \leq n < \infty$, there are indeed an infinite number of such $n_{k}$ values.
Note that, since $s_{n}$ is nondecreasing, and hence these $s_{n_{k}}$ are each distinct values in $\nats \cup \{0\}$, we have
\begin{equation*}
\sum_{k=1}^{\infty} \P( s_{n_{k}}^{\G}(\ProcX) < s_{n_{k}} )
\leq \sum_{k=1}^{\infty} 2^{-s_{n_{k}}}
\leq \sum_{i=0}^{\infty} 2^{-i} = 2 < \infty.
\end{equation*}
Therefore, the Borel-Cantelli Lemma implies that, with probability one, 
for all sufficiently large $k$, we have $s_{n_{k}}^{\G}(\ProcX) \geq s_{n_{k}}$.
Furthermore, since $s_{n}^{\G}(\ProcX)$ is nondecreasing in $n$, and $s_{n} = s_{n_{k}}$ for all $n \in \{n_{k},\ldots,n_{k+1}-1\}$ (due to $s_{n}$ nondecreasing),
if $s_{n_{k}}^{\G}(\ProcX) \geq s_{n_{k}}$ for a given $k \in \nats$, then $s_{n}^{\G}(\ProcX) \geq s_{n}$ for every $n \in \{n_{k},\ldots,n_{k+1}-1\}$.
Combining this again with the fact that $s_{n} \to \infty$, 
we may conclude that, with probability one, for all sufficiently large $n \in \nats$, 
we have $1 \leq s_{n} \leq s_{n}^{\G}(\ProcX)$. 
Thus, $s_{n}$ almost surely satisfies the requirements for \eqref{eqn:bracketing-convergence-2} to hold for $\mathbf{x} = \ProcX$,
which therefore implies
\begin{equation}
\label{eqn:bracketing-convergence-3}
\lim_{n \to \infty} \sup_{n^{\prime} \geq n} \max_{g \in \G} \left| \hat{\mu}_{\ProcX}(g) ~- \!\max_{s_{n} \leq m \leq n^{\prime}} \frac{1}{m} \sum_{t=1}^{m} g(X_{t}) \right| = 0 \text{ (a.s.)}.
\end{equation}
Finally, since the functions in $\G$ are bounded and $\G$ has finite cardinality,
\begin{equation*}
\left\{ \sup_{n^{\prime} \geq n} \max_{g \in \G} \left| \hat{\mu}_{\ProcX}(g) ~- \!\max_{s_{n} \leq m \leq n^{\prime}} \frac{1}{m} \sum_{t=1}^{m} g(X_{t}) \right| \right\}_{n=1}^{\infty}
\end{equation*}
is a uniformly bounded sequence of random variables,
so that combining \eqref{eqn:bracketing-convergence-3} with the dominated convergence theorem implies 
\begin{equation*}
\lim_{n \to \infty} \E\!\left[ \sup_{n^{\prime} \geq n} \max_{g \in \G} \left| \hat{\mu}_{\ProcX}(g) ~- \!\max_{s_{n} \leq m \leq n^{\prime}} \frac{1}{m} \sum_{t=1}^{m} g(X_{t}) \right| \right] = 0.
\end{equation*}
\end{proof}

\begin{lemma}
\label{lem:srm}
Suppose $\{\G_{i}\}_{i=1}^{\infty}$ is a sequence of nonempty finite sets of bounded measurable functions $\X \to \reals$,
with $\G_{1} \subseteq \G_{2} \subseteq \cdots$,
and $\{\gamma_{i}\}_{i=1}^{\infty}$ is a sequence in $(0,\infty)$ with
$\gamma_{1} \geq \max\limits_{g \in \G_{1}} \left( \sup\limits_{x \in \X} g(x) - \inf\limits_{x \in \X} g(x) \right)$.
Then for any process $\ProcX$, there exist (nonrandom) nondecreasing sequences $\{m_{i}\}_{i=1}^{\infty}$ and $\{i_{n}\}_{n=1}^{\infty}$ in $\nats$ with $m_{i} \to \infty$ and $i_{n} \to \infty$
such that $\forall n \in \nats$, $m_{i_{n}} \leq n$ and 
\begin{equation*}
\E\!\left[ \max_{g \in \G_{i_{n}}} \left| \hat{\mu}_{\ProcX}( g ) ~- \!\max_{m_{i_{n}} \leq m \leq n} \frac{1}{m} \sum_{t=1}^{m} g(X_{t}) \right|\right] \leq \gamma_{i_{n}}.
\end{equation*}
\end{lemma}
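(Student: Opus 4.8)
The plan is to obtain Lemma~\ref{lem:srm} by applying Lemma~\ref{lem:bracketing-convergence} separately to each class $\G_i$ and then diagonalizing over the complexity level. For each $i \in \nats$, Lemma~\ref{lem:bracketing-convergence} applied to $\G_i$ produces a nonrandom nondecreasing sequence $\{\mu^{(i)}_n\}_{n=1}^{\infty}$ in $\nats$ with $\mu^{(i)}_n \to \infty$ for which the quantities
\[
\delta^{(i)}_n := \E\!\left[ \sup_{n' \geq n} \max_{g \in \G_i} \left| \hat{\mu}_{\ProcX}(g) - \max_{\mu^{(i)}_n \leq m \leq n'} \frac{1}{m}\sum_{t=1}^{m} g(X_t) \right| \right]
\]
satisfy $\delta^{(i)}_n \to 0$ as $n \to \infty$. (Inspecting the proof of Lemma~\ref{lem:bracketing-convergence}, it in fact produces $\mu^{(i)}_n \leq n$, which we use below to keep the relevant windows nonempty.)

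Next I would build the two sequences by recursion on $i$. Set $N_1 := 1$ and $m_1 := 1$. Given $N_1 < \cdots < N_{i-1}$ and $1 = m_1 \leq \cdots \leq m_{i-1}$, choose $N_i > N_{i-1}$ large enough that both $\delta^{(i)}_{N_i} \leq \gamma_i$ and $\mu^{(i)}_{N_i} \geq \max\{m_{i-1},\, i\}$; this is possible since $\delta^{(i)}_n \to 0$ and $\mu^{(i)}_n \to \infty$. Put $m_i := \mu^{(i)}_{N_i}$. Then $\{m_i\}_i$ is nondecreasing (with $m_2 \geq m_1 = 1$) and $m_i \geq i \to \infty$, while $\{N_i\}_i$ is strictly increasing. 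Finally set $i_n := \max\{ i \in \nats : N_i \leq n \}$; this is well defined since $N_1 = 1$, it is nondecreasing in $n$, and $i_n \to \infty$ because $i_n \geq i$ whenever $n \geq N_i$.

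It then remains to verify the displayed bound for each $n \in \nats$; write $i := i_n$, so $N_i \leq n$. When $i \geq 2$, we have $m_i = \mu^{(i)}_{N_i} \leq N_i \leq n$, so the window $[m_i, n]$ is nonempty, and choosing $n' = n$ in the supremum in the definition of $\delta^{(i)}_{N_i}$ and taking expectations gives
\[
\E\!\left[ \max_{g \in \G_{i_n}} \left| \hat{\mu}_{\ProcX}(g) - \max_{m_{i_n} \leq m \leq n} \frac{1}{m}\sum_{t=1}^{m} g(X_t) \right| \right] \leq \delta^{(i)}_{N_i} \leq \gamma_i = \gamma_{i_n},
\]
which is the claim. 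When $i = 1$ (that is, $n < N_2$), we have $m_{i_n} = 1$, and for every $g \in \G_1$ both $\hat{\mu}_{\ProcX}(g) = \limsup_{n} \frac{1}{n}\sum_{t=1}^{n} g(X_t)$ and $\max_{1 \leq m \leq n} \frac{1}{m}\sum_{t=1}^{m} g(X_t)$ lie in $[\inf_{x\in\X} g(x),\, \sup_{x\in\X} g(x)]$, so their difference has absolute value at most $\sup_{x\in\X} g(x) - \inf_{x\in\X} g(x) \leq \gamma_1$ by the hypothesis on $\gamma_1$; taking the maximum over $g \in \G_1$ and then the expectation yields the claimed inequality with $i_n = 1$.

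The only points requiring care are: packaging the countably many burn-in sequences $\{\mu^{(i)}_n\}_n$ into a single nondecreasing divergent sequence $\{m_i\}$ while preserving each level's $\gamma_i$-accuracy (handled by choosing the thresholds $N_i$ one at a time, each large enough to satisfy its finitely many constraints), and the small-$n$ regime where $i_n = 1$ and the empirical window is short, which is exactly what the hypothesis $\gamma_1 \geq \max_{g \in \G_1}(\sup_{x\in\X} g(x) - \inf_{x\in\X} g(x))$ is there to absorb. There is otherwise no obstacle beyond Lemma~\ref{lem:bracketing-convergence} itself; note in particular that the nestedness $\G_1 \subseteq \G_2 \subseteq \cdots$ is not needed for this step.
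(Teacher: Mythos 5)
Your proof is correct and follows essentially the same route as the paper's: apply Lemma~\ref{lem:bracketing-convergence} to each $\G_i$, then diagonalize over the complexity index by choosing a strictly increasing threshold sequence so that each level's guarantee kicks in past its threshold. The bookkeeping is a little cleaner than the paper's (you pick the thresholds $N_i$ directly, whereas the paper introduces the auxiliary quantity $j_n$ with an extra $\sup_{n''}$ to force monotonicity and then defines $n_i$ recursively from it), and you correctly handle the $i_n=1$ edge case by invoking the $\gamma_1$ hypothesis and correctly note that the nesting $\G_1 \subseteq \G_2 \subseteq \cdots$ is not used here.
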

\begin{proof}
For each $i \in \nats$, let $\{m_{i,n}\}_{n=1}^{\infty}$ denote a nondecreasing sequence in $\nats$ with $\lim\limits_{n \to \infty} m_{i,n} = \infty$, $m_{i,n} \leq n$, and
\begin{equation}
\label{eqn:srm-1}
\lim_{n \to \infty} \E\!\left[ \sup_{n^{\prime} \geq n} \max_{g \in \G_{i}} \left| \hat{\mu}_{\ProcX}(g) ~- \!\max_{m_{i,n} \leq m \leq n^{\prime}} \frac{1}{m} \sum_{t=1}^{m} g(X_{t}) \right| \right] = 0.
\end{equation}
Such a sequence is guaranteed to exist by Lemma~\ref{lem:bracketing-convergence}.
From here, it would be straightforward to produce a sequence $i_{n}$ satisfying 
$\E\!\left[ \max\limits_{g \in \G_{i_{n}}} \left| \hat{\mu}_{\ProcX}( g ) ~- \!\max\limits_{m_{i_{n},n} \leq m \leq n} \frac{1}{m} \sum\limits_{t=1}^{m} g(X_{t}) \right|\right] \leq \gamma_{i_{n}}$, 
simply letting $i_{n}$ grow sufficiently slowly.  
However, comparing this to the claim in the lemma, we require slightly more than this: namely, replacing $m_{i_{n},n}$ with a single-index sequence $m_{i_{n}}$.
The existence of such a sequence $m_{i_{n}}$ is enabled by the additional supremum in the expression on the left of \eqref{eqn:srm-1}.
The basic idea is that this allows us to define a sequence $n_{i}$ such that 
$\E\!\left[ \max\limits_{g \in \G_{i}} \left| \hat{\mu}_{\ProcX}( g ) ~- \!\max\limits_{m_{i,n_{i}} \leq m \leq n} \frac{1}{m} \sum\limits_{t=1}^{m} g(X_{t}) \right|\right] \leq \gamma_{i}$ 
for any $n \geq n_{i}$.  We may then conclude by defining $m_{i} = m_{i,n_{i}}$, and $i_{n}$ maximal such that $n \geq n_{i_{n}}$.
The formal argument becomes somewhat more technical in order to verify such sequences are well-defined and to satisfy the monotonicity requirements of $m_{i}$ and $i_{n}$ from the lemma.
The details follow.

Formally, for each $n \in \nats$, define 
\begin{equation*}
j_{n} \!=\! \max\!\left\{ i \!\in \!\{1,\ldots,n\} : \forall i^{\prime} \!\leq\! i, \sup_{n^{\prime\prime} \geq n} \!\E\!\left[ \sup_{n^{\prime} \geq n^{\prime\prime}} \max_{g \in \G_{i^{\prime}}} \left| \hat{\mu}_{\ProcX}(g) ~-\!\!\!\! \max_{m_{i^{\prime},n^{\prime\prime}} \leq m \leq n^{\prime}} \frac{1}{m} \!\sum_{t=1}^{m} g(X_{t}) \right| \right] \!\!\leq\! \gamma_{i^{\prime}} \right\}\!.
\end{equation*}
First note that the set on the right hand side is nonempty, since every $n^{\prime\prime} \in \nats$ has 
\begin{equation*}
\E\!\left[ \sup_{n^{\prime} \geq n^{\prime\prime}} \max_{g \in \G_{1}} \left| \hat{\mu}_{\ProcX}(g) ~- \!\max_{m_{1,n^{\prime\prime}} \leq m \leq n^{\prime}} \frac{1}{m} \sum_{t=1}^{m} g(X_{t}) \right| \right] \leq \max_{g \in \G_{1}} \left( \sup_{x \in \X} g(x) - \inf_{x \in \X} g(x) \right) \leq \gamma_{1}.
\end{equation*}
Thus, $j_{n}$ is well-defined for every $n \in \nats$.
In particular, by this definition, we have $\forall n \in \nats$, $\forall i \in \{1,\ldots,j_{n}\}$, 
\begin{equation}
\label{eqn:srm-2}
\E\!\left[ \sup_{n^{\prime} \geq n} \max_{g \in \G_{i}} \left| \hat{\mu}_{\ProcX}(g) ~- \!\max_{m_{i,n} \leq m \leq n^{\prime}} \frac{1}{m} \sum_{t=1}^{m} g(X_{t}) \right| \right] \leq \gamma_{i}.
\end{equation}
Furthermore, since 
\begin{equation*}
\sup_{n^{\prime\prime} \geq n} \E\!\left[ \sup_{n^{\prime} \geq n^{\prime\prime}} \max_{g \in \G_{i}} \left| \hat{\mu}_{\ProcX}(g) ~- \!\max_{m_{i,n^{\prime\prime}} \leq m \leq n^{\prime}} \frac{1}{m} \sum_{t=1}^{m} g(X_{t}) \right| \right]
\end{equation*}
is nonincreasing in $n$ for every $i \in \nats$, we have that $j_{n}$ is nondecreasing.
Also note that, for any $i \in \nats$, since $\gamma_{i} > 0$, \eqref{eqn:srm-1} implies that $\exists n_{i}^{\prime} \in \nats$ such that, $\forall n \geq n_{i}^{\prime}$, 
\begin{equation*}
\sup_{n^{\prime\prime} \geq n} \E\!\left[ \sup_{n^{\prime} \geq n^{\prime\prime}} \max_{g \in \G_{i}} \left| \hat{\mu}_{\ProcX}(g) ~- \!\max_{m_{i,n^{\prime\prime}} \leq m \leq n^{\prime}} \frac{1}{m} \sum_{t=1}^{m} g(X_{t}) \right| \right] \leq \gamma_{i}.
\end{equation*}
Therefore $j_{n} \geq i$ for every $n \geq \max\!\left\{ i, \max\limits_{1 \leq i^{\prime} \leq i} n_{i^{\prime}}^{\prime} \right\}$.
Since this is true of every $i \in \nats$, we have that $j_{n} \to \infty$.

Next, let $n_{1} = 1$, and for each $i \in \nats \setminus \{1\}$, inductively define
\begin{equation*}
n_{i} = \min\left\{ n \in \nats : j_{n} \geq i, m_{i,n} > m_{i-1,n_{i-1}} \right\}.
\end{equation*}
Note that, given the value $n_{i-1} \in \nats$, 
the value $n_{i}$ is well-defined since $\lim\limits_{n \to \infty} j_{n} = \infty$ and $\lim\limits_{n\to\infty} m_{i,n} = \infty$.
Thus, by induction, $n_{i}$ is a well-defined value in $\nats$ for all $i \in \nats$.
For each $i \in \nats$, define $m_{i} = m_{i,n_{i}}$.
In particular, by definition of $n_{i}$, for all $i \in \nats$ we have 
$m_{i+1} = m_{i+1,n_{i+1}} > m_{i,n_{i}} = m_{i}$, so that $m_{i}$ is strictly increasing, with $m_{i} \to \infty$.
Finally, for each $n \in \nats$, define $i_{n} = \max\left\{ i \in \{1,\ldots,n\} : n_{i} \leq n \right\}$.
Since $n_{1} = 1$,
$i_{n}$ is a well-defined value in $\nats$ for all $n \in \nats$.
Also, any $i \in \{1,\ldots,n\}$ with $n_{i} \leq n$ also has $n_{i} \leq n+1$, so that $i_{n}$ is nondecreasing in $n$.
Furthermore, since $n_{i} < \infty$ for every $i \in \nats$, we have $i_{n} \to \infty$.
Also note that, $\forall n \in \nats$, we have $n \geq n_{i_{n}}$, which also implies $m_{i_{n}} \leq n_{i_{n}} \leq n$ (by the assumed property that $m_{i,n} \leq n$ for any $n$).
Thus, for every $n \in \nats$, 
\begin{equation*}
\E\!\left[ \max_{g \in \G_{i_{n}}} \!\left| \hat{\mu}_{\ProcX}(g) - \!\!\!\max_{m_{i_{n}} \leq m \leq n} \frac{1}{m} \!\sum_{t=1}^{m} g(X_{t}) \right| \right]
\!\leq\! \E\!\left[ \sup_{n^{\prime} \geq n_{i_{n}}} \max_{g \in \G_{i_{n}}} \!\left| \hat{\mu}_{\ProcX}(g) - \!\!\!\max_{m_{i_{n},n_{i_{n}}} \leq m \leq n^{\prime}} \frac{1}{m} \!\sum_{t=1}^{m} g(X_{t}) \right| \right]\!.
\end{equation*}
By definition of $n_{i_{n}}$, we have $j_{n_{i_{n}}} \geq i_{n}$ (this is immediate from the $n_{i}$ definition if $i_{n} \geq 2$, 
and is also trivially true for $i_{n} = 1$ since $j_{1} \geq 1$),
so that \eqref{eqn:srm-2} implies the rightmost expression above is at most $\gamma_{i_{n}}$,
which completes the proof.
\end{proof}

The following lemma represents the first 
use of Condition~\ref{con:kc} in the proof of 
sufficiency of Condition~\ref{con:kc} for strong universal inductive learning.
Indeed, in the special case of binary classification, 
this is actually the \emph{only} use of Condition~\ref{con:kc} 
needed for the proof.  For the case of general $(\Y,\loss)$, 
one additional use of Condition~\ref{con:kc} (in Lemma~\ref{lem:approximating-functions} below) 
will be needed, to extend this lemma from set approximation to function approximation.

\begin{lemma}
\label{lem:approximating-sets}
There exists a countable set $\T_{1} \subseteq \Borel$ such that, 
$\forall \ProcX \in \KC$, $\forall A \in \Borel$, 
\begin{equation*}
\inf\limits_{G \in \T_{1}} \E\!\left[ \hat{\mu}_{\ProcX}( G \bigtriangleup A ) \right] = 0.
\end{equation*}
\end{lemma}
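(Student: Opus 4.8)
The plan is to let $\T_{1}$ be the countable algebra of sets generated by a countable base for the topology on $\X$, and then to show, for each fixed $\ProcX \in \KC$, that this algebra is dense in $\Borel$ under the pseudometric $d_{\ProcX}(A,B) = \E[\hat{\mu}_{\ProcX}(A \bigtriangleup B)]$, by a monotone-class argument exploiting the continuity of the submeasure $\E[\hat{\mu}_{\ProcX}(\cdot)]$.

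In detail: since $(\X,\T)$ is separable and metrizable it is second countable, so I would fix a countable base $\U = \{U_{1},U_{2},\ldots\}$ for $\T$ and set $\T_{1}$ equal to the algebra generated by $\U$ (all finite Boolean combinations of members of $\U$). This $\T_{1}$ is a countable subset of $\Borel$, is manifestly independent of $\ProcX$, and satisfies $\sigma(\T_{1}) = \sigma(\T) = \Borel$ because every open set is a countable union of members of $\U$. Next, fixing $\ProcX \in \KC$ and writing $\nu = \E[\hat{\mu}_{\ProcX}(\cdot)]$, I would invoke Lemma~\ref{lem:Ehatmu-submeasure} together with Theorem~\ref{thm:maharam} to conclude that $\nu$ is a \emph{continuous} submeasure --- this is the one and only place Condition~\ref{con:kc} is used. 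From the submeasure axioms one gets, for all $A,B,C \in \Borel$, the triangle inequality $\nu(A \bigtriangleup C) \le \nu(A \bigtriangleup B) + \nu(B \bigtriangleup C)$ (via the set inclusion $A \bigtriangleup C \subseteq (A \bigtriangleup B) \cup (B \bigtriangleup C)$ together with monotonicity and finite subadditivity of $\nu$), as well as the identity $\nu(A \bigtriangleup B) = \nu(A^{c} \bigtriangleup B^{c})$.

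Then I would consider the collection $\mathcal{D} = \{A \in \Borel : \inf_{G \in \T_{1}} \nu(G \bigtriangleup A) = 0\}$, which contains $\T_{1}$ trivially and is closed under complementation since $\T_{1}$ is. The remaining task --- which I expect to be the only step requiring care --- is to verify that $\mathcal{D}$ is a monotone class. For an increasing sequence $A_{k} \uparrow A$ with each $A_{k} \in \mathcal{D}$: here $A \bigtriangleup A_{k} = A \setminus A_{k} \downarrow \emptyset$, so continuity of $\nu$ gives $\nu(A \bigtriangleup A_{k}) \to 0$; picking $k$ with $\nu(A \bigtriangleup A_{k}) < \eps/2$ and then $G \in \T_{1}$ with $\nu(G \bigtriangleup A_{k}) < \eps/2$, the triangle inequality yields $\nu(G \bigtriangleup A) < \eps$, so $A \in \mathcal{D}$. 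For a decreasing sequence $A_{k} \downarrow A$, apply this to $A_{k}^{c} \uparrow A^{c}$ and use the complement-invariance of $\mathcal{D}$. By the monotone class theorem, $\mathcal{D} \supseteq \sigma(\T_{1}) = \Borel$, hence $\mathcal{D} = \Borel$; since $\ProcX \in \KC$ was arbitrary, this is exactly the claim. The substantive content is thus entirely in recognizing (via Theorem~\ref{thm:maharam}) that $\E[\hat{\mu}_{\ProcX}(\cdot)]$ is continuous for $\ProcX \in \KC$ --- without this, e.g.\ for the process $X_{i}=i$ on $\nats$, no countable $\T_{1}$ could work --- after which this is the standard approximation-by-a-generating-algebra argument, transported from countably additive measures to continuous submeasures.
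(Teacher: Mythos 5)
Your proof is correct, and—interestingly—it is exactly the approach the paper itself explicitly mentions in the opening of its proof but then deliberately sets aside: take $\T_{1}$ to be the full countable algebra generated by a countable base, and run a monotone-class argument against the continuous submeasure $\nu = \E[\hat{\mu}_{\ProcX}(\cdot)]$. The paper instead uses only the \emph{finite unions} of members of the base as $\T_{1}$ (not the full Boolean algebra, so $\T_{1}$ is not closed under complementation), and then directly verifies that $\Lambda = \{A : \inf_{G\in\T_{1}}\nu(G\bigtriangleup A)=0\}$ is a $\sigma$-algebra containing $\T$. The paper's motivation is that a smaller, more concrete $\T_{1}$ simplifies the eventual learning rule; the price is a more delicate complementation step (approximating a closed $\X\setminus G$ from above by a decreasing sequence of open sets and then re-approximating by finite unions), whereas your use of an algebra makes complement-closure of $\mathcal{D}$ trivial and pushes all the work into the single continuity-of-$\nu$ step in the monotone class argument. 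Both routes are valid; yours is a bit cleaner as a pure existence proof, while the paper's yields a more economical $\T_{1}$ at the cost of a longer verification.
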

\begin{proof} 
By assumption, $\Borel$ is generated by a separable metrizable topology $\T$,
and since every separable metrizable topological space is second countable \citep*[see][Proposition 2.1.9]{srivastava:98},
we have that there exists a \emph{countable} set $\T_{0} \subseteq \T$ such that,
$\forall A \in \T$, $\exists \A \subseteq \T_{0}$ s.t. $A = \bigcup \A$.
Now from this, there is an immediate proof of the lemma if we were to take $\T_{1}$ 
as the algebra generated by $\T_{0}$ (which is a countable set)
via the monotone class theorem \citep*[][Theorem 1.3.9]{ash:00},
using Condition~\ref{con:kc} to argue that the sets $A$ satisfying the claim
in the lemma form a monotone class.
However, here we will instead establish the lemma with a \emph{smaller} choice of the set $\T_{1}$, 
which thereby simplifies the problem of implementing the resulting learning rule.  
Specifically, we take 
$\T_{1} = \{ \bigcup \A : \A \subseteq \T_{0}, |\A| < \infty \}$:
all finite unions of sets in $\T_{0}$.
Note that, given an indexing of $\T_{0}$ by $\nats$, 
each $A \in \T_{1}$ can be indexed by a finite subset of $\nats$ (the indices of elements of the corresponding $\A$), 
of which there are countably many, so that $\T_{1}$ is countable.
Now fix any $\ProcX \in \KC$ and let 
\begin{equation*}
\Lambda = \left\{ A \in \Borel : \inf\limits_{G \in \T_{1}} \E\!\left[ \hat{\mu}_{\ProcX}( G \bigtriangleup A ) \right] = 0 \right\}.
\end{equation*}
We will prove 
that $\Lambda = \Borel$ 
by establishing that $\T \subseteq \Lambda$ and that $\Lambda$ is a $\sigma$-algebra.

First consider any $A \in \T$.
As mentioned above, $\exists \{B_{i}\}_{i=1}^{\infty}$ in $\T_{0}$ such that $A = \bigcup\limits_{i=1}^{\infty} B_{i}$.
But then letting $A_{k} = \bigcup\limits_{i=1}^{k} B_{i}$ for each $k \in \nats$, we have $A_{k} \bigtriangleup A = A \setminus A_{k} \downarrow \emptyset$,
and $A_{k} \in \T_{1}$ for each $k \in \nats$.
Therefore, 
$\inf\limits_{G \in \T_{1}} \E\left[ \hat{\mu}_{\ProcX}( G \bigtriangleup A ) \right]
\leq \lim\limits_{k \to \infty} \E\left[ \hat{\mu}_{\ProcX}( A_{k} \bigtriangleup A ) \right]$,
and the right hand side equals $0$ by Condition~\ref{con:kc}.
Together with nonnegativity of $\inf\limits_{G \in \T_{1}} \E\left[ \hat{\mu}_{\ProcX}( G \bigtriangleup A ) \right]$ (Lemma~\ref{lem:mu}),
this implies $A \in \Lambda$.  Since this holds for any $A \in \T$, we have $\T \subseteq \Lambda$.

Next, we argue that $\Lambda$ is a $\sigma$-algebra.
We begin by showing it is closed under complements.
Toward this end, consider any $A \in \Lambda$,
and for any $k \in \nats$ 
denote by $G_{k}$ an element of $\T_{1}$ 
with $\E\left[ \hat{\mu}_{\ProcX}( G_{k} \bigtriangleup A ) \right] < 1/k$ (guaranteed to exist by the definition of $\Lambda$).
Since $G_{k} \in \T_{1} \subseteq \T$, it follows that $\X \setminus G_{k}$ is a closed set.
Therefore, since $(\X,\T)$ is metrizable, $\exists \{B_{k i}\}_{i=1}^{\infty}$ in $\T$ such that 
$\X \setminus G_{k} = \bigcap\limits_{i=1}^{\infty} B_{k i}$ \citep*[][Proposition 3.7]{kechris:95}.
Letting $C_{k j} = \bigcap\limits_{i=1}^{j} B_{k i}$ for each $j \in \nats$, 
we have that $C_{k j} \bigtriangleup (\X \setminus G_{k}) = C_{k j} \setminus (\X \setminus G_{k}) \downarrow \emptyset$ as $j \to \infty$,
and $C_{k j} \in \T$ for each $j \in \nats$.
In particular, by
Condition~\ref{con:kc}, 
$\exists j_{k} \in \nats$ such that
$\E\!\left[ \hat{\mu}_{\ProcX}( C_{k j_{k}} \bigtriangleup (\X \setminus G_{k}) ) \right] < 1/k$.
Also, since $C_{k j_{k}} \in \T$, and we proved above that $\T \subseteq \Lambda$, 
$\exists D_{k} \in \T_{1}$ such that
$\E\left[ \hat{\mu}_{\ProcX}( D_{k} \bigtriangleup C_{k j_{k}} ) \right] < 1/k$.
Together with the facts that 
$D_{k} \bigtriangleup (\X \setminus A) 
\subseteq (D_{k} \bigtriangleup C_{k j_{k}}) \cup ( C_{k j_{k}} \bigtriangleup (\X \setminus G_{k}) ) \cup ( (\X \setminus G_{k}) \bigtriangleup (\X \setminus A) )$
and $(\X \setminus G_{k}) \bigtriangleup (\X \setminus A) = G_{k} \bigtriangleup A$,
by subadditivity of $\E[ \hat{\mu}_{\ProcX}(\cdot) ]$ (Lemma~\ref{lem:Ehatmu-submeasure}), 
we have that 
\begin{equation*}
\E\!\left[ \hat{\mu}_{\ProcX}( D_{k} \bigtriangleup (\X \!\setminus\! A) ) \right]
\leq 
\E\!\left[ \hat{\mu}_{\ProcX}(D_{k} \bigtriangleup C_{k j_{k}}) \right]
+ \E\!\left[ \hat{\mu}_{\ProcX}( C_{k j_{k}} \bigtriangleup (\X \!\setminus\! G_{k}) ) \right]
+ \E\!\left[ \hat{\mu}_{\ProcX}( G_{k} \bigtriangleup A ) \right]
< 3/k.
\end{equation*}
Since $D_{k} \in \T_{1}$, and this argument holds for any $k \in \nats$, we have
\begin{equation*}
\inf_{G \in \T_{1}} \E\!\left[ \hat{\mu}_{\ProcX}( G \bigtriangleup (\X \setminus A) ) \right]
\leq \inf_{k \in \nats} 3/k
= 0.
\end{equation*}
Together with nonnegativity of the left hand side (Lemma~\ref{lem:mu}),
this implies $\X \setminus A \in \Lambda$.
Thus, $\Lambda$ is closed under complements.

Next, we argue that $\Lambda$ is closed under countable unions.
Let $\{A_{i}\}_{i=1}^{\infty}$ be a  
sequence in $\Lambda$, let $A = \bigcup\limits_{i=1}^{\infty} A_{i}$,
and fix any $\eps > 0$.
Letting $B_{k} = \bigcup\limits_{i=1}^{k} A_{i}$ for each $k \in \nats$,
we have $B_{k} \bigtriangleup A = A \setminus B_{k} \downarrow \emptyset$.
Therefore, Condition~\ref{con:kc} 
implies $\exists k_{\eps} \in \nats$ such that 
$\E\!\left[ \hat{\mu}_{\ProcX}( B_{k_{\eps}} \bigtriangleup A ) \right] < \eps$.
Next, for each $i \in \nats$, let $G_{i}$ be an element of $\T_{1}$ with 
$\E\!\left[ \hat{\mu}_{\ProcX}( G_{i} \bigtriangleup A_{i} ) \right] < \eps / k_{\eps}$
(guaranteed to exist, since $A_{i} \in \Lambda$).
Let $C_{k_{\eps}} = \bigcup\limits_{i=1}^{k_{\eps}} G_{i}$.
Noting that it follows immediately from its definition 
that $\T_{1}$ is closed under finite unions, we have that $C_{k_{\eps}} \in \T_{1}$.
Then noting that 
\begin{equation*}
C_{k_{\eps}} \bigtriangleup A 
\subseteq ( B_{k_{\eps}} \bigtriangleup A ) \cup ( C_{k_{\eps}} \bigtriangleup B_{k_{\eps}} )
\subseteq ( B_{k_{\eps}} \bigtriangleup A ) \cup \bigcup_{i=1}^{k_{\eps}} (G_{i} \bigtriangleup A_{i}),
\end{equation*}
altogether we have that
\begin{equation*}
\inf_{G \in \T_{1}} \!\E\!\left[ \hat{\mu}_{\ProcX}( G \!\bigtriangleup\! A ) \right]
\!\leq\! \E\!\left[ \hat{\mu}_{\ProcX}( C_{k_{\eps}} \!\bigtriangleup\! A ) \right]
\!\leq\! \E\!\left[ \hat{\mu}_{\ProcX}( B_{k_{\eps}} \!\bigtriangleup\! A ) \right]
\!+ \!\sum_{i=1}^{k_{\eps}} \E\!\left[ \hat{\mu}_{\ProcX}( G_{i} \!\bigtriangleup\! A_{i} ) \right]
\!<\! \eps + \!\sum_{i=1}^{k_{\eps}} \frac{\eps}{k_{\eps}} = 2\eps,
\end{equation*}
where the second inequality is due to subadditivity of $\E[\hat{\mu}_{\ProcX}(\cdot)]$ (Lemma~\ref{lem:Ehatmu-submeasure}).
Since this argument holds for any $\eps > 0$, taking the limit as $\eps \to 0$ reveals that
$\inf\limits_{G \in \T_{1}} \E\!\left[ \hat{\mu}_{\ProcX}( G \bigtriangleup A ) \right] \leq 0$.
Together with nonnegativity of the left hand side (Lemma~\ref{lem:mu}),
this implies $A \in \Lambda$.  Thus, $\Lambda$ is closed under countable 
unions.

Finally, recalling that $\T$ is a topology, by definition we have $\X \in \T$,
and since $\T \subseteq \Lambda$, this implies $\X \in \Lambda$.
Altogether, we have established that $\Lambda$ is a $\sigma$-algebra.
Therefore, since $\Borel$ is the $\sigma$-algebra generated by $\T$, and $\T \subseteq \Lambda$,
it immediately follows that 
$\Borel \subseteq \Lambda$
(which also implies $\Lambda = \Borel$).
Since this argument holds for any choice of $\ProcX \in \KC$, the lemma immediately follows.
\ignore{

--------------- old proof based on transfinite induction ------------------

By assumption, $\Borel$ is generated by a separable metrizable topology $\T$,
and since every separable metrizable topological space is second countable \citep*[see][Proposition 2.1.9]{srivastava:98},
we have that there exists a \emph{countable} set $\T_{0} \subseteq \T$ such that,
$\forall A \in \T$, $\exists \A \subseteq \T_{0}$ s.t. $A = \bigcup \A$.
Let $\T_{1} = \{ \bigcup \A : \A \subseteq \T_{0}, |\A| < \infty \}$;
given an indexing of $\T_{0}$ by $\nats$, each $A \in \T_{1}$ can be indexed by a finite subset of $\nats$ (the indices of elements of the corresponding $\A$), 
of which there are countably many, so that $\T_{1}$ is countable.

Let $\Sigma_{1}^{0} = \T$, and $\Pi_{1}^{0} = \{\X \setminus G : G \in \T\}$.
For ordinals $\alpha$ with $1 < \alpha < \omega_{1}$ ($\omega_{1}$ being the first uncountable ordinal), define
$\Sigma_{\alpha}^{0} = \left\{ \bigcup\limits_{i=1}^{\infty} B_{i} : \forall i \in \nats, B_{i} \in \bigcup\limits_{\beta < \alpha} \Pi_{\beta}^{0} \right\}$,
and $\Pi_{\alpha}^{0} = \left\{ \bigcap\limits_{i=1}^{\infty} B_{i} : \forall i \in \nats, B_{i} \in \bigcup\limits_{\beta < \alpha} \Sigma_{\beta}^{0} \right\}$.
It is known that $\Borel = \bigcup\limits_{\alpha < \omega_{1}} \Sigma_{\alpha}^{0} = \bigcup\limits_{\alpha < \omega_{1}} \Pi_{\alpha}^{0}$ \citep*[see e.g.,][]{srivastava:98}.

Fix any $\ProcX \in \KC$.
We now prove by transfinite induction that, 
for any $A \in \Borel$, $\forall \eps > 0$, $\exists G \in \T_{1}$ s.t.
$\E\!\left[ \hat{\mu}_{\ProcX}(G \bigtriangleup A) \right] < \eps$.
As a base case, consider any $A \in \Sigma_{1}^{0} \cup \Pi_{1}^{0}$.
If $A \in \Sigma_{1}^{0}$, then as mentioned above, $\exists \{B_{i}\}_{i=1}^{\infty}$ in $\T_{0}$ such that $A = \bigcup\limits_{i=1}^{\infty} B_{i}$.
But then letting $A_{k} = \bigcup\limits_{i=1}^{k} B_{i}$ for each $k \in \nats$, we have $A \bigtriangleup A_{k} = A \setminus A_{k} \downarrow \emptyset$,
so that Theorem~\ref{thm:maharam} implies $\E\!\left[ \hat{\mu}_{\ProcX}(A \bigtriangleup A_{k}) \right] \to 0$.
Thus, $\forall \eps > 0$, $\exists k_{\eps} \in \nats$ such that $\E\!\left[ \hat{\mu}_{\ProcX}(A \bigtriangleup A_{k_{\eps}}) \right] < \eps$.
Since $A_{k_{\eps}} = \bigcup\limits_{i=1}^{k_{\eps}} B_{i} \in \T_{1}$, the result holds by taking $G = A_{k_{\eps}}$.

On the other hand, if $A \in \Pi_{1}^{0}$, then $\exists \{B_{i}\}_{i=1}^{\infty}$ in $\T$ s.t. $A = \bigcap\limits_{i=1}^{\infty} B_{i}$ \citep*[see e.g.,][Proposition 3.7]{kechris:95}.
In particular, letting $A_{k} = \bigcap\limits_{i=1}^{k} B_{i}$ for each $k \in \nats$, 
we have $A \bigtriangleup A_{k} = A_{k} \setminus A \downarrow \emptyset$, 
so that Theorem~\ref{thm:maharam} (and property 4 of Definition~\ref{def:maharam}) implies 
$\E\!\left[ \hat{\mu}_{\ProcX}\left( A \bigtriangleup A_{k} \right) \right] \to 0$.  Thus, for any $\eps > 0$,
$\exists k_{\eps} \in \nats$ s.t. $\E\!\left[ \hat{\mu}_{\ProcX}\left( A \bigtriangleup A_{k_{\eps}} \right) \right] < \eps/2$.
Furthermore, $A_{k_{\eps}} = \bigcap\limits_{i=1}^{k_{\eps}} B_{i} \in \T = \Sigma_{1}^{0}$ \citep*{munkres:00},
so that the above reasoning implies $\exists G \in \T_{1}$ s.t., 
$\E\!\left[ \hat{\mu}_{\ProcX}\left( G \bigtriangleup A_{k_{\eps}} \right) \right] < \eps/2$.
Thus, since $A \bigtriangleup G \subseteq \left(A \bigtriangleup A_{k_{\eps}}\right) \cup \left( G \bigtriangleup A_{k_{\eps}} \right)$,
Theorem~\ref{thm:maharam} (and properties 2 and 3 of Definition~\ref{def:maharam}) implies
\begin{align*}
\E\!\left[ \hat{\mu}_{\ProcX}\left( A \bigtriangleup G \right) \right]
& \leq \E\!\left[ \hat{\mu}_{\ProcX}\left( \left( A \bigtriangleup A_{k_{\eps}} \right) \cup \left( G \bigtriangleup A_{k_{\eps}} \right) \right) \right]
\\ & \leq \E\!\left[ \hat{\mu}_{\ProcX}\left( A \bigtriangleup A_{k_{\eps}} \right) \right] + \E\!\left[ \hat{\mu}_{\ProcX}\left( G \bigtriangleup A_{k_{\eps}} \right) \right]
< \eps.
\end{align*}
This completes the base case of the inductive proof.

Next, take as the inductive hypothesis that, for some ordinal $\alpha < \omega_{1}$, 
$\forall A \in \bigcup\limits_{\beta < \alpha} \Sigma_{\beta}^{0} \cup \Pi_{\beta}^{0}$, $\forall \eps > 0$, $\exists G \in \T_{1}$ s.t. $\E\!\left[ \hat{\mu}_{\ProcX}( G \bigtriangleup A ) \right] < \eps$.
We will then establish the claim for sets in $\Sigma_{\alpha}^{0} \cup \Pi_{\alpha}^{0}$.
For any $A \in \Sigma_{\alpha}^{0}$, there exists a sequence $\{B_{i}\}_{i=1}^{\infty}$ of sets in $\bigcup\limits_{\beta < \alpha} \Pi_{\beta}^{0}$ s.t. $A = \bigcup\limits_{i=1}^{\infty} B_{i}$.
In particular, letting $A_{k} = \bigcup\limits_{i=1}^{k} B_{i}$ for each $k \in \nats$, we have $A \bigtriangleup A_{k} = A \setminus A_{k} \downarrow \emptyset$, 
so that Theorem~\ref{thm:maharam} (and property 4 of Definition~\ref{def:maharam})
implies $\E\!\left[ \hat{\mu}_{\ProcX}\left( A \bigtriangleup A_{k} \right) \right] \to 0$.  Thus, for any $\eps > 0$, 
$\exists k_{\eps} \in \nats$ s.t. $\E\!\left[ \hat{\mu}_{\ProcX}\left( A \bigtriangleup A_{k_{\eps}} \right) \right] < \eps/2$.
Furthermore, since $\bigcup\limits_{\beta < \alpha} \Pi_{\beta}^{0}$ is closed under finite unions \citep*{srivastava:98},
we have $A_{k_{\eps}} \in \bigcup\limits_{\beta < \alpha} \Pi_{\beta}^{0}$.  Therefore, the inductive hypothesis implies 
$\exists G \in \T_{1}$ s.t. $\E\!\left[ \hat{\mu}_{\ProcX}\left( G \bigtriangleup A_{k_{\eps}} \right) \right] < \eps/2$.
Thus, since $A \bigtriangleup G \subseteq \left( A \bigtriangleup A_{k_{\eps}} \right) \cup \left( G \bigtriangleup A_{k_{\eps}} \right)$,
Theorem~\ref{thm:maharam} (and properties 2 and 3 of Definition~\ref{def:maharam}) implies
\begin{align*}
\E\!\left[ \hat{\mu}_{\ProcX}\left( A \bigtriangleup G \right) \right]
& \leq \E\!\left[ \hat{\mu}_{\ProcX}\left( \left( A \bigtriangleup A_{k_{\eps}} \right) \cup \left( G \bigtriangleup A_{k_{\eps}} \right) \right) \right]
\\ & \leq \E\!\left[ \hat{\mu}_{\ProcX}\left( A \bigtriangleup A_{k_{\eps}} \right) \right] + \E\!\left[ \hat{\mu}_{\ProcX}\left( G \bigtriangleup A_{k_{\eps}} \right)\right]
< \eps.
\end{align*}

Similarly, for any $A \in \Pi_{\alpha}^{0}$, there exists a sequence $\{B_{i}\}_{i=1}^{\infty}$ of sets in $\bigcup\limits_{\beta < \alpha} \Sigma_{\beta}^{0}$ s.t. $A = \bigcap\limits_{i=1}^{\infty} B_{i}$.
In particular, letting $A_{k} = \bigcap\limits_{i=1}^{k} B_{i}$ for each $k \in \nats$, we have $A \bigtriangleup A_{k} = A_{k} \setminus A \downarrow \emptyset$, 
so that Theorem~\ref{thm:maharam} (and property 4 of Definition~\ref{def:maharam})
implies $\E\!\left[ \hat{\mu}_{\ProcX}\left( A \bigtriangleup A_{k} \right) \right] \to 0$.  Thus, for any $\eps > 0$, 
$\exists k_{\eps} \in \nats$ s.t. $\E\!\left[ \hat{\mu}_{\ProcX}\left( A \bigtriangleup A_{k_{\eps}} \right) \right] < \eps/2$.
Furthermore, since $\bigcup\limits_{\beta < \alpha} \Sigma_{\beta}^{0}$ is closed under finite intersections \citep{srivastava:98},
we have $A_{k_{\eps}} \in \bigcup\limits_{\beta < \alpha} \Sigma_{\beta}^{0}$.  Therefore, the inductive hypothesis implies 
$\exists G \in \T_{1}$ s.t. $\E\!\left[ \hat{\mu}_{\ProcX}\left( G \bigtriangleup A_{k_{\eps}} \right) \right] < \eps/2$.
Thus, since $A \bigtriangleup G \subseteq \left( A \bigtriangleup A_{k_{\eps}} \right) \cup \left( G \bigtriangleup A_{k_{\eps}} \right)$,
Theorem~\ref{thm:maharam} (and properties 2 and 3 of Definition~\ref{def:maharam}) implies
\begin{align*}
\E\!\left[ \hat{\mu}_{\ProcX}\left( A \bigtriangleup G \right) \right]
& \leq \E\!\left[ \hat{\mu}_{\ProcX}\left( \left( A \bigtriangleup A_{k_{\eps}} \right) \cup \left( G \bigtriangleup A_{k_{\eps}} \right) \right) \right]
\\ & \leq \E\!\left[ \hat{\mu}_{\ProcX}\left( A \bigtriangleup A_{k_{\eps}} \right) \right] + \E\!\left[ \hat{\mu}_{\ProcX}\left( G \bigtriangleup A_{k_{\eps}} \right)\right]
< \eps.
\end{align*}

Since $\Borel = \bigcup\limits_{\alpha < \omega_{1}} \Sigma_{\alpha}^{0}$,
by the principle of transfinite induction, we have established that $\forall A \in \Borel$, 
$\forall \eps > 0$, $\exists G_{\eps} \in \T_{1}$ s.t. $\E\!\left[ \hat{\mu}_{\ProcX}\left( A \bigtriangleup G_{\eps} \right) \right] < \eps$.
In particular, we have 
\begin{equation*}
\inf_{G \in \T_{1}} \E\!\left[ \hat{\mu}_{\ProcX}\left( A \bigtriangleup G \right) \right]
\leq \inf_{\eps > 0} \E\!\left[ \hat{\mu}_{\ProcX}\left( A \bigtriangleup G_{\eps} \right) \right] 
\leq 0.
\end{equation*}
Together with nonnegativity of the set-function $\E\!\left[ \hat{\mu}_{\ProcX}(\cdot) \right]$ (by Lemma~\ref{lem:mu} and monotonicity of the expectation),
this completes the proof.
}
\end{proof}

For example, in the special case of $\X = \reals^{d}$ ($d \in \nats$) with the Euclidean topology,
the above proof implies it suffices to take the set $\T_{1}$ as the finite unions of rational-centered rational-radius open balls.
Now, continuing with the general case, 
the next lemma extends Lemma~\ref{lem:approximating-sets} from set approximation to 
function approximation, again using Condition~\ref{con:kc}.

\begin{lemma}
\label{lem:approximating-functions}
There exists a countable set $\tilde{\F}$ of measurable functions $\X \to \Y$ 
such that, for every $\ProcX \in \KC$, for every measurable $f : \X \to \Y$, 
\begin{equation*}
\inf\limits_{\tilde{f} \in \tilde{\F}} \E\!\left[ \hat{\mu}_{\ProcX}( \loss(\tilde{f}(\cdot),f(\cdot)) ) \right] = 0.
\end{equation*}
\end{lemma}
\begin{proof}
The proof will establish this claim for the set $\tilde{\F}$ of finite-depth \emph{decision list} functions, 
where the decision region of each node is specified by an element from the countable set $\T_{1}$ (from Lemma~\ref{lem:approximating-sets}) 
and the values are taken from a countable dense set $\tilde{\Y} \subseteq \Y$.

We will first prove that there exists a countable set $\tilde{\F}$ of measurable functions $\X \to \Y$ such that, 
for every $\ProcX \in \KC$, $\forall \eps > 0$, for every measurable $f : \X \to \Y$, 
$\exists \tilde{f}_{\eps} \in \tilde{\F}$ s.t. $\E\!\left[ \hat{\mu}_{\ProcX}( \loss(\tilde{f}_{\eps}(\cdot), f(\cdot)) ) \right] < 3\triconst\eps$.
The lemma will follow immediately from this (for this same set $\tilde{\F}$) by taking $\eps \to 0$.
Let $\T_{1}$ be as in Lemma~\ref{lem:approximating-sets}, 
and let $\tilde{\Y} \subseteq \Y$ be a countable set with $\sup\limits_{y \in \Y} \inf\limits_{\tilde{y} \in \tilde{\Y}} \loss(\tilde{y},y) = 0$;
this must exist, by the assumption that $(\Y,\loss)$ is separable.  
Fix some arbitrary value $y_{0} \in \Y$, and let $A_{0} = \X$.
For any $k \in \nats$, values $y_{1},\ldots,y_{k} \in \Y$, and sets $A_{1},\ldots,A_{k} \in \Borel$, 
for any $x \in \X$, define $\tilde{f}(x ; \{y_{i}\}_{i=1}^{k},\{A_{i}\}_{i=1}^{k}) = y_{\max\{j \in \{0,\ldots,k\} : x \in A_{j}\}}$;
one can easily verify that $\tilde{f}(\cdot ; \{y_{i}\}_{i=1}^{k}, \{A_{i}\}_{i=1}^{k})$ is a measurable function (indeed, it is a \emph{simple} function).
Define 
\begin{equation*}
\tilde{\F} = \left\{ \tilde{f}(\cdot ;\{y_{i}\}_{i=1}^{k}, \{A_{i}\}_{i=1}^{k}) : k \in \nats, \forall i \leq k, y_{i} \in \tilde{\Y}, A_{i} \in \T_{1} \right\},
\end{equation*}
and note that, given an indexing of $\tilde{\Y}$ and $\T_{1}$ by $\nats$, we can index $\tilde{\F}$ by finite tuples of integers 
(the indices of the corresponding $y_{i}$ and $A_{i}$ values), of which there are countably many, so that $\tilde{\F}$ is countable.

Enumerate the elements of $\tilde{\Y}$ as $\tilde{y}_{1},\tilde{y}_{2},\ldots$ (for simplicity of notation, we suppose this sequence is infinite; otherwise, we can simply repeat the elements to get an infinite sequence).
For each $\eps > 0$, let $B_{\eps,1} = \{ y \in \Y : \loss(\tilde{y}_{1},y) \leq \eps \}$,
and for each integer $i \geq 2$, inductively define $B_{\eps,i} = \{ y \in \Y : \loss(\tilde{y}_{i},y) \leq \eps \} \setminus \bigcup\limits_{j = 1}^{i-1} B_{\eps,j}$.
Note that the sets $B_{\eps,i}$ are measurable and disjoint over $i \in \nats$,
and that $\bigcup\limits_{i=1}^{\infty} B_{\eps,i} = \Y$.

Now fix any $\ProcX \in \KC$, any measurable $f : \X \to \Y$, and any $\eps > 0$.
For each $i \in \nats$, define $C_{\eps,i} = f^{-1}(B_{\eps,i})$, which is an element of $\Borel$ by measurability of $f$ and $B_{\eps,i}$.
Note that $\bigcup\limits_{i=1}^{\infty} C_{\eps,i} = f^{-1}\left( \bigcup\limits_{i=1}^{\infty} B_{\eps,i} \right) = f^{-1}(\Y) = \X$,
and furthermore that (since the $B_{\eps,i}$ sets are disjoint) the sets $C_{\eps,i}$ are disjoint over $i \in \nats$.
It follows that $\lim\limits_{k \to \infty} \bigcup\limits_{i=k}^{\infty} C_{\eps,i} = \emptyset$,
with $\bigcup\limits_{i=k}^{\infty} C_{\eps,i}$ nonincreasing in $k$, so that 
Condition~\ref{con:kc} 
entails $\lim\limits_{k \to \infty} \E\!\left[ \hat{\mu}_{\ProcX}\left( \bigcup\limits_{i=k}^{\infty} C_{\eps,i} \right) \right] = 0$.
In particular, this implies $\exists k_{\eps} \in \nats$ such that $\E\!\left[ \hat{\mu}_{\ProcX}\left( \bigcup\limits_{i=k_{\eps}+1}^{\infty} C_{\eps,i} \right) \right] < \triconst \eps / \maxloss$.

For each $i \in \{1,\ldots,k_{\eps}\}$, let $A_{\eps,i} \in \T_{1}$ be a set with $\E\!\left[ \hat{\mu}_{\ProcX}(A_{\eps,i} \bigtriangleup C_{\eps,i}) \right] < \eps / (k_{\eps} \maxloss)$,
which exists by the defining property of $\T_{1}$ from Lemma~\ref{lem:approximating-sets}.  Finally, let 
\begin{equation*}
\tilde{f}_{\eps}(\cdot) = \tilde{f}\left(\cdot ; \{\tilde{y}_{i}\}_{i=1}^{k_{\eps}},\{A_{\eps,i}\}_{i=1}^{k_{\eps}}\right),
\end{equation*}
and note that $\tilde{f}_{\eps} \in \tilde{\F}$.  Furthermore, for any $x \in \X = \bigcup\limits_{i=1}^{\infty} C_{\eps,i}$, 
\begin{align}
\loss(f(x),\tilde{f}_{\eps}(x)) 
& \leq \maxloss \ind_{\bigcup_{i=k_{\eps}+1}^{\infty} C_{\eps,i}}(x) + \sum_{i = 1}^{k_{\eps}} \loss(f(x), \tilde{f}_{\eps}(x)) \ind_{C_{\eps,i}}(x) \notag
\\ & \leq \maxloss \ind_{\bigcup_{i=k_{\eps}+1}^{\infty} C_{\eps,i}}(x) + \sum_{i=1}^{k_{\eps}} \triconst \left( \loss(f(x),\tilde{y}_{i}) \ind_{C_{\eps,i}}(x) + \loss(\tilde{y}_{i},\tilde{f}_{\eps}(x)) \ind_{C_{\eps,i}}(x) \right) \notag
\\ & \leq \maxloss \ind_{\bigcup_{i=k_{\eps}+1}^{\infty} C_{\eps,i}}(x) + \triconst \eps + \triconst \sum_{i=1}^{k_{\eps}} \loss(\tilde{y}_{i},\tilde{f}_{\eps}(x)) \ind_{C_{\eps,i}}(x). \label{eqn:approx-sets-bound-1}
\end{align}
Focusing now on the rightmost summation, let $[k_{\eps}] = \{1,\ldots,k_{\eps}\}$.
If $x \notin \bigcup\limits_{i \in [k_{\eps}]} C_{\eps,i}$ then this term is trivially zero due to the $\ind_{C_{\eps,i}}(x)$ factors.
Otherwise, let $j \in [k_{\eps}]$ be such that $x \in C_{\eps,j}$; this $j$ is unique by disjointness of the $C_{\eps,i}$ sets,
and for this same reason we have 
\begin{equation}
\label{eqn:approx-sets-bound-1-3a}
\sum\limits_{i=1}^{k_{\eps}} \loss(\tilde{y}_{i},\tilde{f}_{\eps}(x)) \ind_{C_{\eps,i}}(x) = \loss(\tilde{y}_{j},\tilde{f}_{\eps}(x)) \ind_{C_{\eps,j}}(x).
\end{equation}
Now note that if $x \in A_{\eps,j} \setminus \bigcup\limits_{i \in [k_{\eps}] \setminus \{j\}} A_{\eps,i}$, then $\loss(\tilde{y}_{j},\tilde{f}_{\eps}(x)) = 0$.
Thus, if $\loss(\tilde{y}_{j},\tilde{f}_{\eps}(x)) \ind_{C_{\eps,j}}(x) \neq 0$, 
then either $x \in C_{\eps,j} \setminus A_{\eps,j}$, 
or else $\exists i \in [k_{\eps}] \setminus \{j\}$ with $x \in C_{\eps,j} \cap A_{\eps,i} \subseteq A_{\eps,i} \setminus C_{\eps,i}$
(where this last inclusion follows from $C_{\eps,j} \cap C_{\eps,i} = \emptyset$).
Either way, we see that if 
$\loss(\tilde{y}_{j},\tilde{f}_{\eps}(x)) \ind_{C_{\eps,j}}(x) \neq 0$ then $\exists i \in [k_{\eps}]$ with $x \in C_{\eps,i} \bigtriangleup A_{\eps,i}$, 
so that 
\begin{equation}
\label{eqn:approx-sets-bound-1-3b}
\loss(\tilde{y}_{j},\tilde{f}_{\eps}(x)) \ind_{C_{\eps,j}}(x) 
\leq \loss(\tilde{y}_{j},\tilde{f}_{\eps}(x)) \sum\limits_{i =1}^{k_{\eps}} \ind_{C_{\eps,i} \bigtriangleup A_{\eps,i}}(x)
\leq \maxloss \sum\limits_{i =1}^{k_{\eps}} \ind_{C_{\eps,i} \bigtriangleup A_{\eps,i}}(x).
\end{equation}
Combining \eqref{eqn:approx-sets-bound-1-3a} and \eqref{eqn:approx-sets-bound-1-3b} 
and plugging back into \eqref{eqn:approx-sets-bound-1} yields
\begin{equation*}
\loss(f(x),\tilde{f}_{\eps}(x)) \leq \maxloss \ind_{\bigcup_{i=k_{\eps}+1}^{\infty} C_{\eps,i}}(x) + \triconst \eps + \triconst \maxloss \sum_{i=1}^{k_{\eps}} \ind_{C_{\eps,i} \bigtriangleup A_{\eps,i}}(x).
\end{equation*}
Therefore, by linearity of the expectation, together with monotonicity, homogeneity, and finite subadditivity of $\hat{\mu}_{\ProcX}$ (Lemma~\ref{lem:expectation}),
\begin{equation*}
\E\!\left[ \hat{\mu}_{\ProcX}\!\left( \loss\!\left(f(\cdot),\tilde{f}_{\eps}(\cdot)\right) \right) \right]
\leq \triconst \eps + \maxloss \E\!\left[ \hat{\mu}_{\ProcX}\!\left( \bigcup_{i=k_{\eps}+1}^{\infty} C_{\eps,i} \right) \right] + \triconst \maxloss \sum_{i=1}^{k_{\eps}} \E\!\left[ \hat{\mu}_{\ProcX}\!\left( C_{\eps,i} \bigtriangleup A_{\eps,i} \right) \right]
< 3 \triconst \eps.
\end{equation*}

The lemma now follows directly from this (together with non-negativity and symmetry of $\loss$), 
since each $\tilde{f}_{\eps} \in \tilde{\F}$, so that 
$\inf\limits_{\tilde{f} \in \tilde{\F}} \E\!\left[ \hat{\mu}_{\ProcX}( \loss(\tilde{f}(\cdot),f(\cdot)) ) \right] 
\leq \lim\limits_{\eps \to 0} \E\!\left[ \hat{\mu}_{\ProcX}( \loss(\tilde{f}_{\eps}(\cdot),f(\cdot)) ) \right] 
\leq \lim\limits_{\eps \to 0} 3 \triconst \eps = 0$.
\end{proof}

\noindent {\bf Remark:}~ Before proceeding, we remark that since $\T_{1}$ in the proof of Lemma~\ref{lem:approximating-sets} is defined as 
the set of finite unions of elements of $\T_{0}$ (where $\T_{0}$ is any countable base for the topology $\T$),
we can in fact represent any $f \in \tilde{\F}$ as a function 
$\tilde{f}(\cdot; \{y_{i}\}_{i=1}^{k}, \{A_{i}\}_{i=1}^{k})$, $k \in \nats$, $\{y_{i}\}_{i=1}^{k} \in \tilde{\Y}^{k}$, with $\{A_{i}\}_{i=1}^{k} \in \T_{0}^{k}$ 
(for $\tilde{f}(\cdot;\cdot,\cdot)$ and $\tilde{\Y}$ as defined in the above proof: 
that is, in the definition of $\tilde{\F}$ in the proof of Lemma~\ref{lem:approximating-functions}, we can replace $\T_{1}$ with $\T_{0}$ and the set $\tilde{\F}$ remains unchanged.
For instance, in the special case of $\X = \reals^{d}$ ($d \in \nats$) and $\Y=[0,1]$ with $\loss$ the squared loss ($\loss(a,b)=(a-b)^2$), 
we can take $\tilde{\F}$ as the set of rational-valued finite-depth decision lists, with 
the region of each decision node being a rational-centered rational-radius open ball.

{\vskip 2mm}
We will use Lemma~\ref{lem:approximating-functions} via the following immediate implication.

\begin{lemma}
\label{lem:approximating-sequence}
There exists a sequence $\{\F_{i}\}_{i=1}^{\infty}$ of nonempty finite sets of measurable functions $\X \to \Y$
with $\F_{1} \subseteq \F_{2} \subseteq \cdots$ such that, for every $\ProcX \in \KC$, for every measurable $f : \X \to \Y$, 
\begin{equation*}
\lim\limits_{i \to \infty} \min\limits_{f_{i} \in \F_{i}} \E\!\left[ \hat{\mu}_{\ProcX}( \loss(f_{i}(\cdot),f(\cdot)) ) \right] = 0.
\end{equation*}
\end{lemma}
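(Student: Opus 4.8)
The plan is to build a single countable family $\F$ of ``simple'' measurable functions $\X \to \Y$ that depends only on the countable set $\T_{1}$ supplied by Lemma~\ref{lem:approximating-sets} and on a fixed countable dense subset of $\Y$, and then let $\F_{i}$ be the first $i$ members under some enumeration. Concretely, fix a dense sequence $\{y_{j}\}_{j=1}^{\infty}$ in $(\Y,\loss)$ (separability), and let $\F$ consist of all functions $g$ that are constant, equal to some $y_{j}$, on each atom of the finite algebra generated by some finite subcollection of $\T_{1}$. Each such $g$ is measurable (the atoms are measurable, and $g^{-1}(B)$ is a countable union of atoms), $\F$ is nonempty (constant functions) and countable (finitely many generators give finitely many atoms, and countably many $\{y_{j}\}$-valued assignments per atom), so enumerating $\F=\{g_{1},g_{2},\ldots\}$ and setting $\F_{i}=\{g_{1},\ldots,g_{i}\}$ gives the required increasing finite sets. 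Because $\T_{1}$ works simultaneously for every $\ProcX\in\KC$, so does $\F$. Since $\min_{g\in\F_{i}}\E[\hat{\mu}_{\ProcX}(\loss(g(\cdot),f(\cdot)))]$ is nonincreasing in $i$ and nonnegative (Lemma~\ref{lem:mu}), it suffices to prove $\inf_{g\in\F}\E[\hat{\mu}_{\ProcX}(\loss(g(\cdot),f(\cdot)))]=0$ for every $\ProcX\in\KC$ and measurable $f:\X\to\Y$; equivalently, given $\eps>0$, to exhibit $g\in\F$ with this expectation below $\eps$ (measurability of $x\mapsto\loss(g(x),f(x))$ is automatic since $\loss$ is a continuous metric).

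To produce such a $g$ I would approximate $f$ in three stages, controlling each error via the pointwise triangle inequality for the metric $\loss$ together with monotonicity and subadditivity of $\hat{\mu}_{\ProcX}$, the identity $\hat{\mu}_{\ProcX}(c\ind_{\X})=c$ (Lemmas~\ref{lem:expectation}--\ref{lem:mu}), and linearity of expectation. First, using density of $\{y_{j}\}$, the balls $B(y_{j},\eps/4)$ cover $\Y$; disjointifying them gives a Borel partition $\{D_{j}\}$ of $\Y$, so $E_{j}:=f^{-1}(D_{j})$ is a measurable partition of $\X$, and the function $f'$ equal to $y_{j}$ on $E_{j}$ satisfies $\loss(f'(\cdot),f(\cdot))\le(\eps/4)\ind_{\X}$ pointwise, whence $\E[\hat{\mu}_{\ProcX}(\loss(f'(\cdot),f(\cdot)))]\le\eps/4$. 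Second, since $\bigcup_{j>N}E_{j}\downarrow\emptyset$, Condition~\ref{con:kc} gives an $N$ with $\maxloss\,\E[\hat{\mu}_{\ProcX}(\bigcup_{j>N}E_{j})]<\eps/4$; redefining $f''$ to equal $y_{1}$ on $\bigcup_{j>N}E_{j}$ and $y_{j}$ on $E_{j}$ for $j\le N$ yields a finite-valued $f''$ with $\E[\hat{\mu}_{\ProcX}(\loss(f''(\cdot),f'(\cdot)))]<\eps/4$. Now $f''$ is constant, with values in $\{y_{j}\}$, on each piece of a finite measurable partition $\{P_{1},\ldots,P_{M}\}$ of $\X$.

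Third, I would replace the $P_{j}$ by sets drawn from the $\T_{1}$-algebra. By Lemma~\ref{lem:approximating-sets}, choose $G_{j}\in\T_{1}$ with $\E[\hat{\mu}_{\ProcX}(G_{j}\bigtriangleup P_{j})]<\eta$ for a small $\eta$ to be fixed, and disjointify: $Q_{j}=G_{j}\setminus\bigcup_{k<j}G_{k}$ for $j<M$ and $Q_{M}=\X\setminus\bigcup_{k<M}G_{k}$, a genuine partition all of whose pieces lie in the algebra generated by $G_{1},\ldots,G_{M-1}$. The key estimate is that $\sum_{j}\E[\hat{\mu}_{\ProcX}(P_{j}\bigtriangleup Q_{j})]$ is at most a constant (depending on $M$) times $\eta$: since the $P_{j}$ are disjoint, $\hat{\mu}_{\ProcX}(G_{j}\cap G_{k})\le\hat{\mu}_{\ProcX}(G_{j}\bigtriangleup P_{j})+\hat{\mu}_{\ProcX}(G_{k}\bigtriangleup P_{k})$, and each $P_{j}\bigtriangleup Q_{j}$ is contained in a finite union of the sets $G_{k}\bigtriangleup P_{k}$ and $G_{j}\cap G_{k}$, so finite subadditivity of $\hat{\mu}_{\ProcX}$ (Lemma~\ref{lem:mu}) and linearity of expectation give the bound. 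Taking $g\in\F$ to be the function equal, on $Q_{j}$, to the value $f''$ takes on $P_{j}$, the set $\{x:g(x)\neq f''(x)\}$ is contained in $\bigcup_{j}(P_{j}\bigtriangleup Q_{j})$, so $\loss(g(\cdot),f''(\cdot))\le\maxloss\,\ind_{\bigcup_{j}(P_{j}\bigtriangleup Q_{j})}$ and hence $\E[\hat{\mu}_{\ProcX}(\loss(g(\cdot),f''(\cdot)))]\le\maxloss\sum_{j}\E[\hat{\mu}_{\ProcX}(P_{j}\bigtriangleup Q_{j})]<\eps/4$ once $\eta$ is small enough (depending on $M,\maxloss,\eps$). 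Combining the three stages by the triangle inequality bounds $\E[\hat{\mu}_{\ProcX}(\loss(g(\cdot),f(\cdot)))]$ by $3\eps/4<\eps$, as desired.

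I expect the main obstacle to be this third stage: converting the independently chosen approximants $G_{j}\in\T_{1}$ into an honest finite partition $\{Q_{j}\}$ of $\X$ while keeping $\sum_{j}\E[\hat{\mu}_{\ProcX}(P_{j}\bigtriangleup Q_{j})]$ under control. This is where disjointness of the target partition $\{P_{j}\}$ must be exploited (to see the overlaps $G_{j}\cap G_{k}$ are negligible) and where the submeasure-type properties of $\E[\hat{\mu}_{\ProcX}(\cdot)]$ (Lemmas~\ref{lem:mu} and~\ref{lem:Ehatmu-submeasure}) together with Condition~\ref{con:kc} (for the truncation $\bigcup_{j>N}E_{j}\downarrow\emptyset$) do the real work; the remaining steps are routine bookkeeping with the metric $\loss$ and the linearity and monotonicity of the functionals involved.
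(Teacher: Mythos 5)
Your proposal is correct and follows essentially the same approach as the paper: both build a countable class from the $\T_{1}$ of Lemma~\ref{lem:approximating-sets} together with a countable dense subset of $\Y$, approximate $f$ by disjointifying $\eps$-balls, truncate to finitely many pieces via Condition~\ref{con:kc}, and then approximate each preimage piece by a $\T_{1}$ element, with disjointness of the target pieces controlling the cross-overlap error. The only cosmetic difference is implementation: the paper builds $\tilde{\F}$ from a $\max$-based tie-breaking rule $y_{\max\{j : x \in A_{j}\}}$ and bounds overlaps arithmetically via the identity $C_{\eps,i} \cap A_{\eps,j} = C_{\eps,i} \cap (A_{\eps,j} \setminus C_{\eps,j})$, whereas you explicitly disjointify the $G_{j}$ into a genuine partition of atoms of the generated algebra before defining $g$ --- both implementations bound the same quantity $\sum_{k} \E\!\left[\hat{\mu}_{\ProcX}(G_{k} \bigtriangleup P_{k})\right]$ and are interchangeable.
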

\begin{proof}
Enumerate the elements of the countable set $\tilde{\F}$ from Lemma~\ref{lem:approximating-functions} as $\tilde{f}_1,\tilde{f}_2,\ldots$, 
and define $\F_{i} = \left\{\tilde{f}_{1},\ldots,\tilde{f}_{i}\right\}$.  
With this definition, by Lemma~\ref{lem:approximating-functions}, any $\ProcX \in \KC$ and measurable $f : \X \to \Y$ satisfy 
$\lim\limits_{i \to \infty} \min\limits_{f_{i} \in \F_{i}} \E\!\left[ \hat{\mu}_{\ProcX}( \loss(f_{i}(\cdot),f(\cdot)) ) \right] = \inf\limits_{\tilde{f} \in \tilde{\F}} \E\!\left[ \hat{\mu}_{\ProcX}( \loss(\tilde{f}(\cdot),f(\cdot)) ) \right] = 0$.
\end{proof}

Additionally, we have the following property for the $f$-approximating sequences of sets $\F_{i}$ implied by Lemma~\ref{lem:approximating-sequence}.

\begin{lemma}
\label{lem:exponential-approximating-sequence}
Fix any process $\ProcX$ on $\X$, any measurable function $f : \X \to \Y$, any nondecreasing sequence $\{u_{i}\}_{i=1}^{\infty}$ in $\nats$ with $u_{i} \to \infty$,
and any sequence $\{\F_{i}\}_{i=1}^{\infty}$ of
sets of measurable functions $\X \to \Y$ with $\F_{1} \subseteq \F_{2} \subseteq \cdots$ 
such that $\lim\limits_{i \to \infty} \inf\limits_{g \in \F_{i}} \E\!\left[ \hat{\mu}_{\ProcX}(\loss(g(\cdot),f(\cdot))) \right] = 0$.
There exists a (nonrandom) sequence $\{f_{i}\}_{i=1}^{\infty}$, with $f_{i} \in \F_{i}$ for each $i \in \nats$, 
and a (nonrandom) sequence $\{\alpha_{i}\}_{i=1}^{\infty}$ in $(0,\infty)$ with $\alpha_{i} \to 0$,
such that, on an event $K$ of probability one, $\exists \iota_{0} \in \nats$ such that $\forall i \geq \iota_{0}$,
\begin{equation*}
\sup_{u_{i} \leq m < \infty} \frac{1}{m} \sum_{t=1}^{m} \loss(f_{i}(X_{t}),f(X_{t})) \leq \alpha_{i}.
\end{equation*}
\end{lemma}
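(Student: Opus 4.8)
The plan is a block construction. The naive attempt --- letting $f_i\in\argmin_{g\in\F_i}\E[\hat\mu_{\ProcX}(\loss(g(\cdot),f(\cdot)))]$ and hoping the displayed supremum inherits a small bound --- does not work directly, because $\sup_{u_i\le m<\infty}\frac1m\sum_{t=1}^m\loss(f_i(X_t),f(X_t))$ can exceed $\hat\mu_{\ProcX}(\loss(f_i(\cdot),f(\cdot)))$ by a large amount coming from a transient spike of the partial averages at finite indices $m\approx u_i$, and the hypothesis controls only the $\limsup$, not this boundary behavior. The remedy is to fix each approximating function \emph{before} choosing the (far enough out) block of indices on which to use it, so that the threshold $u_i$ can be pushed past the transient.

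First I would extract approximants: since $\min_{g\in\F_i}\E[\hat\mu_{\ProcX}(\loss(g(\cdot),f(\cdot)))]\to0$ and each $\F_i$ is a nonempty finite set, pick for each $j\in\nats$ an index $\ell_j$ and $g^{(j)}\in\F_{\ell_j}$ with $\E[\hat\mu_{\ProcX}(\psi^{(j)})]<8^{-j}$, where $\psi^{(j)}(\cdot)=\loss(g^{(j)}(\cdot),f(\cdot))$ is measurable and $[0,\maxloss]$-valued. Writing $T_s(g)(\mathbf x)=\sup_{s\le m<\infty}\frac1m\sum_{t=1}^m g(x_t)$, note that $T_s(g)$ is measurable, bounded by $\maxloss$, nonincreasing in $s$, and $\lim_{s\to\infty}T_s(g)=\hat\mu_{\mathbf x}(g)$ (just the definition of $\limsup$). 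Hence by monotone/dominated convergence $\E[T_s(\psi^{(j)})]\to\E[\hat\mu_{\ProcX}(\psi^{(j)})]<8^{-j}$, so I can fix $N_j$ with $\E[T_{N_j}(\psi^{(j)})]<8^{-j}$; and since $\{u_i\}$ is nondecreasing with $u_i\to\infty$, I can fix $M_j$ with $u_i\ge N_j$ for all $i\ge M_j$.

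Next, set $i_0=0$ and $i_j=\max\{\ell_j,M_j,i_{j-1}+1\}$ (strictly increasing), define $f_i:=g^{(j)}$ and $\alpha_i:=2^{-j}$ for $i_j\le i<i_{j+1}$, and let $f_i$ be arbitrary in $\F_i$ with $\alpha_i:=1$ for $i<i_1$. This is legitimate because $g^{(j)}\in\F_{\ell_j}\subseteq\F_i$ whenever $i\ge i_j\ge\ell_j$, using $\F_1\subseteq\F_2\subseteq\cdots$, and $\alpha_i\to0$ since $i_j\to\infty$. By Markov's inequality $\P(T_{N_j}(\psi^{(j)})>2^{-j})<2^j8^{-j}=4^{-j}$, which is summable, so Borel--Cantelli yields an event $K$ of probability one on which $T_{N_j}(\psi^{(j)})\le2^{-j}$ for all $j$ beyond some $j_0$; take $\iota_0=i_{j_0}$. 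Then for $i\ge\iota_0$, with $j$ the block containing $i$, we have $u_i\ge u_{i_j}\ge N_j$ and $f_i=g^{(j)}$, so on $K$, using monotonicity of $T_s$ in $s$, $\sup_{u_i\le m<\infty}\frac1m\sum_{t=1}^m\loss(f_i(X_t),f(X_t))=T_{u_i}(\psi^{(j)})\le T_{N_j}(\psi^{(j)})\le2^{-j}=\alpha_i$, which is the claim. Measurability is not an issue here: the $\F_i$ are finite, and $T_s$ is a countable supremum of measurable functions of $\ProcX$.

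The \textbf{main obstacle}, and the reason for the indirection, is exactly this transient/boundary effect: a small expected value of $\hat\mu_{\ProcX}(\psi^{(j)})$ does \emph{not} bound the tail-supremum $T_{u_i}(\psi^{(j)})$ unless $u_i$ exceeds a threshold depending on the already-fixed $\psi^{(j)}$. Making the choices in the order $g^{(j)}$, then $N_j$, then $i_j\ge M_j$ is what resolves this, and it is why we reuse a sparse sequence of approximants on blocks rather than using per-index minimizers directly.
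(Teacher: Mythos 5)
Your proof is correct, and it follows essentially the same strategy as the paper's: pick a sparse subsequence of approximants with rapidly decaying expected $\hat\mu$-error, use dominated convergence to convert the $\limsup$-control into control of the tail-supremum past a threshold, choose the block indices so that $u_i$ exceeds that threshold, and finish with Markov plus Borel--Cantelli. The only differences are cosmetic (explicit block construction and $\alpha_i = 2^{-j}$ versus the paper's implicit blocks via $k_i = \max\{k : \max\{i_k, j_k\} \le i\}$ and $\alpha_i = 2^{1/2-k_i}\sqrt{\maxloss}$; constants $8^{-j}$ versus $4^{-k}\maxloss$), and your identification of the ``transient spike'' obstacle is exactly the reason the paper also introduces the auxiliary indices $i_k$ before defining the final sequence.
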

\begin{proof}
Let $\{g_{i}\}_{i=1}^{\infty}$ be a sequence with $g_{i} \in \F_{i}$ for each $i \in \nats$, s.t. 
$\lim\limits_{i \to \infty} \E\!\left[ \hat{\mu}_{\ProcX}( \loss(g_{i}(\cdot),f(\cdot)) ) \right] = 0$.
Then $\forall k \in \nats$, $\exists j_{k} \in \nats$ such that
$\E\!\left[ \hat{\mu}_{\ProcX}( \loss( g_{j_{k}}(\cdot), f(\cdot) ) ) \right] < 4^{-k} \maxloss$.
Let us fix any sequence $\{j_{k}\}_{k=1}^{\infty}$ in $\nats$ such that $j_{k}$ has this property for every $k$.  
For completeness, also define $j_{0} = 1$.
Furthermore, since $u_{i} \to \infty$, the dominated convergence theorem implies that $\forall j \in \nats$, 
\begin{align*}
&& \lim_{i \to \infty} \E\!\left[ \sup_{u_{i} \leq m < \infty} \frac{1}{m} \sum_{t=1}^{m} \loss(g_{j}(X_{t}),f(X_{t})) \right]
= \E\!\left[ \lim_{i \to \infty} \sup_{u_{i} \leq m < \infty} \frac{1}{m} \sum_{t=1}^{m} \loss(g_{j}(X_{t}),f(X_{t})) \right]
\\ && = \E\!\left[ \limsup_{m \to \infty} \frac{1}{m} \sum_{t=1}^{m} \loss(g_{j}(X_{t}),f(X_{t})) \right]
= \E\!\left[ \hat{\mu}_{\ProcX}( \loss(g_{j}(\cdot),f(\cdot)) ) \right].
\end{align*}
In particular, this implies that $\forall k \in \nats$, $\exists i_{k} \in \nats$ such that 
\begin{equation}
\label{eqn:ik-jk-expected}
\E\!\left[ \sup_{u_{i_{k}} \leq m < \infty} \frac{1}{m} \sum_{t=1}^{m} \loss(g_{j_{k}}(X_{t}),f(X_{t})) \right]
\leq \E\!\left[ \hat{\mu}_{\ProcX}( \loss(g_{j_{k}}(\cdot),f(\cdot)) ) \right] + 4^{-k} \maxloss
< 2 \cdot 4^{-k} \maxloss.
\end{equation}
Also note that, since the leftmost expression in \eqref{eqn:ik-jk-expected} is nonincreasing in $i_{k}$, 
we may choose $i_{k} > i_{k-1}$ if $k \geq 2$ (or $i_{k} > 1$ for $k = 1$).
Thus, letting $i_{0} = 1$, there exists a strictly increasing sequence 
$\{i_{k}\}_{k=0}^{\infty}$ in $\nats$ such that $i_{k}$ has the property \eqref{eqn:ik-jk-expected} for every $k \in \nats$.
We may then note that, by Markov's inequality,
\begin{align*}
& \sum_{k=0}^{\infty} \P\left( \sup_{u_{i_{k}} \leq m < \infty} \frac{1}{m} \sum_{t=1}^{m} \loss(g_{j_{k}}(X_{t}),f(X_{t})) > 2^{(1/2)-k} \sqrt{\maxloss} \right)
\\ & \leq \sum_{k=0}^{\infty} \frac{1}{2^{(1/2)-k}\sqrt{\maxloss}} \E\!\left[ \sup_{u_{i_{k}} \leq m < \infty} \frac{1}{m} \sum_{t=1}^{m} \loss(g_{j_{k}}(X_{t}),f(X_{t})) \right]
\\ & \leq \sum_{k=0}^{\infty} \frac{1}{2^{(1/2)-k}\sqrt{\maxloss}} 2 \cdot 4^{-k} \maxloss
= \sum_{k=0}^{\infty} 2^{(1/2)-k} \sqrt{\maxloss}
= 2^{3/2} \sqrt{\maxloss} < \infty.
\end{align*}
Therefore, by the Borel-Cantelli Lemma, there exists an event $K$ of probability one, on which
$\exists \kappa_{0} \in \nats$ such that, $\forall k \geq \kappa_{0}$, 
\begin{equation}
\label{eqn:k-over-kappa0}
\sup_{u_{i_{k}} \leq m < \infty} \frac{1}{m} \sum_{t=1}^{m} \loss(g_{j_{k}}(X_{t}),f(X_{t})) \leq 2^{(1/2)-k} \sqrt{\maxloss}.
\end{equation}

Now, $\forall i \in \nats$, define 
\begin{equation*}
k_{i} = \max\left\{ k \in \nats \cup \{0\} : \max\{i_{k},j_{k}\} \leq i \right\},
\end{equation*}
and let $\alpha_{i} = 2^{(1/2)-k_{i}} \sqrt{\maxloss}$.
To see that the value $k_{i}$ is well-defined for every $i \in \nats$, note that $\max\{i_{0},j_{0}\} = 1 \leq i$,
so that the set on the right hand side is nonempty, and furthermore, since $\{i_{k}\}_{k=0}^{\infty}$ is strictly increasing,
every $k \geq i$ has $\max\{i_{k},j_{k}\} > i$, so that the set is finite, and hence has a maximum element.
Also, since $i_{k}$ and $j_{k}$ are finite for every $k$, we have that $\lim\limits_{i \to \infty} k_{i} = \infty$.
In particular, this implies that, on the event $K$, $\exists \iota_{0} \in \nats$ such that $\forall i \geq \iota_{0}$,
$k_{i} \geq \kappa_{0}$, so that \eqref{eqn:k-over-kappa0} implies
\begin{equation}
\label{eqn:i-over-iota0}
\sup_{u_{i_{k_{i}}} \leq m < \infty} \frac{1}{m} \sum_{t=1}^{m} \loss(g_{j_{k_{i}}}(X_{t}),f(X_{t})) \leq \alpha_{i}.
\end{equation}
Now define $f_{i} = g_{j_{k_{i}}}$ for every $i \in \nats$.
Note that, since $j_{k_{i}} \leq i$ (by definition of $k_{i}$)
and $\F_{1} \subseteq \F_{2} \subseteq \cdots$,
we have $\F_{j_{k_{i}}} \subseteq \F_{i}$.
In particular, since $f_{i} = g_{j_{k_{i}}} \in \F_{j_{k_{i}}}$ (by definition), 
this implies $f_{i} \in \F_{i}$ for every $i \in \nats$.
Also note that, since $i_{k_{i}} \leq i$ (by definition of $k_{i}$), 
and $\{u_{t}\}_{t=1}^{\infty}$ is a nondecreasing sequence, 
$u_{i_{k_{i}}} \leq u_{i}$ for every $i \in \nats$.
Together with \eqref{eqn:i-over-iota0}, these facts imply that, on the event $K$, $\forall i \geq \iota_{0}$,
\begin{equation*}
\sup_{u_{i} \leq m < \infty} \frac{1}{m} \sum_{t=1}^{m} \loss(f_{i}(X_{t}),f(X_{t}))
\leq \sup_{u_{i_{k_{i}}} \leq m < \infty} \frac{1}{m} \sum_{t=1}^{m} \loss(f_{i}(X_{t}),f(X_{t})) \leq \alpha_{i}.
\end{equation*}
\end{proof}

With these results in hand, we are finally ready for the proof of sufficiency of Condition~\ref{con:kc}
for strong universal inductive learning.

\begin{lemma}
\label{lem:kc-subset-suil}
$\KC \subseteq \SUIL$.
\end{lemma}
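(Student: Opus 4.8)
The plan is to instantiate the constrained max–empirical–risk minimization rule $\hat f_n$ of \eqref{eqn:suil-rule} with a carefully matched triple $(\F_n,\hat m_n,\eps_n)$ and verify directly that $\hat\L_{\ProcX}(\hat f_n,\target;n)\to 0$ a.s.\ for every measurable $\target$. Fix $\ProcX\in\KC$ (the rule is allowed to depend on $\ProcX$, since we only need the \emph{existence} of a consistent rule for each such process). Take the nested sequence $\{\F_i\}_{i=1}^{\infty}$ of finite function classes from Lemma~\ref{lem:approximating-sequence}, and set $\G_i=\{\loss(f(\cdot),g(\cdot)):f,g\in\F_i\}$, a nested sequence of finite sets of measurable functions $\X\to[0,\maxloss]$. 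Choose a summable sequence $\gamma_i=\maxloss\cdot 2^{1-i}$ (so $\gamma_1=\maxloss\geq\sup_x g(x)-\inf_x g(x)$ for $g\in\G_1$, and $\sum_i\gamma_i<\infty$), and apply Lemma~\ref{lem:srm} to $(\{\G_i\},\{\gamma_i\})$ to obtain nondecreasing sequences $\{m_i\}$ and $\{i_n\}$ with $m_i\to\infty$ and $i_n\to\infty$. Define the learning rule by $\F_n:=\F_{i_n}$, $\hat m_n:=m_{i_n}$, and any $\eps_n\to 0$; since each $\F_{i_n}$ is finite, $\hat f_n$ is measurable, as noted after \eqref{eqn:suil-rule}.

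Now fix $\target$, and apply Lemma~\ref{lem:exponential-approximating-sequence} with $f=\target$ and $u_i=m_i$ (its hypothesis holds by Lemma~\ref{lem:approximating-sequence}, as $\ProcX\in\KC$) to get a nonrandom sequence $f_i\in\F_i$ and $\alpha_i\to 0$ such that, on a probability-one event, $\sup_{m_i\leq m<\infty}\frac{1}{m}\sum_{t=1}^{m}\loss(f_i(X_t),\target(X_t))\leq\alpha_i$ for all sufficiently large $i$. Write $\hat g_n=\hat f_n(X_{1:n},\target(X_{1:n}),\cdot)\in\F_{i_n}$ and $L_n(f)=\max_{m_{i_n}\leq m\leq n}\frac{1}{m}\sum_{t=1}^{m}\loss(f(X_t),\target(X_t))$. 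Since $f_{i_n}\in\F_{i_n}$ and $\hat g_n$ is an $\eps_n$-approximate minimizer of $L_n$ over $\F_{i_n}$, for all large $n$ (so that $i_n$ is large) we get $L_n(\hat g_n)\leq L_n(f_{i_n})+\eps_n\leq\alpha_{i_n}+\eps_n$.

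The core estimate then chains the triangle inequality for $\loss$ with monotonicity and subadditivity of $\hat\mu_{\ProcX}$ (Lemma~\ref{lem:expectation}): $\hat\L_{\ProcX}(\hat f_n,\target;n)=\hat\mu_{\ProcX}(\loss(\hat g_n,\target))\leq\hat\mu_{\ProcX}(\loss(\hat g_n,f_{i_n}))+\hat\mu_{\ProcX}(\loss(f_{i_n},\target))$, and the last term is at most $\sup_{m\geq m_{i_n}}\frac{1}{m}\sum_{t=1}^{m}\loss(f_{i_n}(X_t),\target(X_t))\leq\alpha_{i_n}$ for large $n$. For the first term, $\loss(\hat g_n(\cdot),f_{i_n}(\cdot))\in\G_{i_n}$, so, letting $\delta_n:=\max_{g\in\G_{i_n}}\left|\hat\mu_{\ProcX}(g)-\max_{m_{i_n}\leq m\leq n}\frac{1}{m}\sum_{t=1}^{m}g(X_t)\right|$, we have $\hat\mu_{\ProcX}(\loss(\hat g_n,f_{i_n}))\leq\max_{m_{i_n}\leq m\leq n}\frac{1}{m}\sum_{t=1}^{m}\loss(\hat g_n(X_t),f_{i_n}(X_t))+\delta_n$; one more triangle inequality bounds the remaining max by $L_n(\hat g_n)+\max_{m_{i_n}\leq m\leq n}\frac{1}{m}\sum_{t=1}^{m}\loss(f_{i_n}(X_t),\target(X_t))\leq(\alpha_{i_n}+\eps_n)+\alpha_{i_n}$. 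Combining, on the relevant probability-one events and for all large $n$, $\hat\L_{\ProcX}(\hat f_n,\target;n)\leq 3\alpha_{i_n}+\eps_n+\delta_n$, so it remains to show $\delta_n\to 0$ a.s.

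The main obstacle is precisely this last point: Lemma~\ref{lem:srm} only supplies the expectation bound $\E[\delta_n]\leq\gamma_{i_n}$, whereas we need almost-sure vanishing. The plan is to upgrade it using the block structure of $i_n$: each value $i$ occurs for $n$ in a finite (process-dependent) range $[n_i,n_{i+1})$, and on that range the quantity $D_n:=\sup_{n'\geq n}\max_{g\in\G_{i_n}}\left|\hat\mu_{\ProcX}(g)-\max_{m_{i_n}\leq m\leq n'}\frac{1}{m}\sum_{t=1}^{m}g(X_t)\right|\geq\delta_n$ is nonincreasing in $n$, while the proof of Lemma~\ref{lem:srm} in fact bounds $\E[D_{n_i}]\leq\gamma_i$. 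Hence $\sum_i\E[D_{n_i}]\leq\sum_i\gamma_i<\infty$, so $D_{n_i}\to 0$ a.s., and since within the $i$th block $D_n\leq D_{n_i}$, we conclude $\delta_n\leq D_n\to 0$ a.s. Everything else is routine bookkeeping with the triangle inequality and the basic properties of $\hat\mu_{\ProcX}$.
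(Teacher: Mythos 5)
Your proof is correct and follows the paper's own argument in all essentials: the same learning rule \eqref{eqn:suil-rule}, the same choice of finite classes and cutoff times supplied by Lemma~\ref{lem:srm} applied to the pairwise loss-discrepancy sets built from Lemma~\ref{lem:approximating-sequence}, the same nonrandom approximating targets and rates from Lemma~\ref{lem:exponential-approximating-sequence}, and the same triangle-inequality decomposition yielding $\hat{\L}_{\ProcX}(\hat{f}_n,\target;n)\leq 3\alpha_{i_n}+\eps_n+\delta_n$ on a probability-one event for all large $n$.

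The one place you genuinely deviate is the almost-sure upgrade of $\delta_n$. The paper applies Markov's inequality at the block-start times $n_i$ with threshold $\sqrt{\gamma_i}$, sums $\sum_i\sqrt{\gamma_i}<\infty$, invokes Borel--Cantelli, and then extends to general $n$ within each block via the observation that the \emph{one-sided} deviation $\hat{\mu}_{\ProcX}(\cdot)-\max_{m_{i_n}\leq m\leq n}(\cdot)$ is nonincreasing in $n$; this costs a square-root loss but uses only the \emph{statement} of Lemma~\ref{lem:srm}. You instead observe that the two-sided quantity $D_n$ (with the built-in $\sup_{n'\geq n}$) is nonincreasing within each block and that $\sum_i \E[D_{n_i}]\leq\sum_i\gamma_i<\infty$, hence $D_{n_i}\to 0$ a.s.\ directly. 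This avoids the square root but relies on the claim that the proof of Lemma~\ref{lem:srm} actually establishes $\E[D_{n_i}]\leq\gamma_i$, which is strictly stronger than the lemma's stated conclusion (the statement has no $\sup_{n'\geq n}$). You are right that this stronger form is visible inside the proof --- the inequality immediately before the final sentence of that proof is exactly $\E[D_{n_i}]\leq\gamma_i$ once one unwinds $i_{n_i}=i$ and $m_i=m_{i,n_i}$ --- so the shortcut is legitimate, at the cost of turning a black-box lemma invocation into a proof-mining step. Either route gives $\delta_n\to 0$ a.s., and the rest of your bookkeeping is sound.
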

\begin{proof}
Suppose $\ProcX \in \KC$.
Lemma~\ref{lem:approximating-sequence} implies that there exists a sequence $\{\G_{i}\}_{i=1}^{\infty}$ 
of finite sets of measurable functions with $\G_{1} \subseteq \G_{2} \subseteq \cdots$
such that, for every measurable function $\target : \X \to \Y$, 
$\lim\limits_{i \to \infty} \min\limits_{g_{i} \in \G_{i}} \E\!\left[ \hat{\mu}_{\ProcX}(\loss(g_{i}(\cdot),\target(\cdot))) \right] = 0$.  
Furthermore, applying Lemma~\ref{lem:srm} to the sequence of sets $\{ \loss(f(\cdot),g(\cdot)) : f,g \in \G_{i} \}$, with 
$\gamma_{i} = 4^{1-i} \maxloss$, 
we find that there exist (nonrandom) nondecreasing sequences $\{m_{i}\}_{i=1}^{\infty}$ and $\{i_{n}\}_{n=1}^{\infty}$ in $\nats$ with $m_{i} \to \infty$ and $i_{n} \to \infty$
such that $\forall n \in \nats$, $m_{i_{n}} \leq n$ and 
\begin{equation}
\label{eqn:kc-subset-suil-1}
\E\!\left[ \max_{f,g \in \G_{i_{n}}} \left| \hat{\mu}_{\ProcX}( \loss(f(\cdot),g(\cdot)) ) ~- \!\max_{m_{i_{n}} \leq m \leq n} \frac{1}{m} \sum_{t=1}^{m} \loss(f(X_{t}),g(X_{t})) \right|\right] \leq \gamma_{i_{n}}.
\end{equation}

Let $I = \{ i_{n} : n \in \nats \}$, and for each $i \in I$, define $n_{i} = \min\{ n \in \nats : i_{n} = i \}$.
Markov's inequality and \eqref{eqn:kc-subset-suil-1} imply
\begin{align*}
& \sum_{i \in I} \P\left( \max_{f,g \in \G_{i}} \left| \hat{\mu}_{\ProcX}( \loss(f(\cdot),g(\cdot)) ) ~- \!\max_{m_{i} \leq m \leq n_{i}} \frac{1}{m} \sum_{t=1}^{m} \loss(f(X_{t}),g(X_{t})) \right| > \sqrt{\gamma_{i}} \right)
\\ & \leq \sum_{i \in I} \frac{1}{\sqrt{\gamma_{i}}} \E\!\left[ \max_{f,g \in \G_{i}} \left| \hat{\mu}_{\ProcX}( \loss(f(\cdot),g(\cdot)) ) ~- \!\max_{m_{i} \leq m \leq n_{i}} \frac{1}{m} \sum_{t=1}^{m} \loss(f(X_{t}),g(X_{t})) \right| \right]
\\ & \leq \sum_{i \in I} \sqrt{\gamma_{i}}
\leq \sum_{i=1}^{\infty} 2^{1-i} \sqrt{\maxloss}
= 2 \sqrt{\maxloss} < \infty.
\end{align*}
Therefore, the Borel-Cantelli Lemma implies that there exists an event $K^{\prime}$ of probability one,
on which $\exists \iota_{1} \in \nats$ such that $\forall i \in I$ with $i \geq \iota_{1}$, 
\begin{equation}
\label{eqn:inductive-i-sqrt-gamma-bound}
\max_{f,g \in \G_{i}} \left( \hat{\mu}_{\ProcX}( \loss(f(\cdot),g(\cdot)) ) ~- \!\max_{m_{i} \leq m \leq n_{i}} \frac{1}{m} \sum_{t=1}^{m} \loss(f(X_{t}),g(X_{t})) \right) \leq \sqrt{\gamma_{i}}.
\end{equation}
Additionally, note that $\forall n \in \nats$, $n \geq n_{i_{n}}$, so that $\forall f,g \in \G_{i_{n}}$, 
\begin{equation}
\label{eqn:inductive-relax-n}
\max_{m_{i_{n}} \leq m \leq n} \frac{1}{m} \sum_{t=1}^{m} \loss(f(X_{t}),g(X_{t}))
\geq \max_{m_{i_{n}} \leq m \leq n_{i_{n}}} \frac{1}{m} \sum_{t=1}^{m} \loss(f(X_{t}),g(X_{t})).
\end{equation}
Furthermore, since $i_{n} \to \infty$, on the event $K^{\prime}$, 
$\exists \nu_{1} \in \nats$ such that $\forall n \geq \nu_{1}$,
we have $i_{n} \geq \iota_{1}$, so that \eqref{eqn:inductive-i-sqrt-gamma-bound} and \eqref{eqn:inductive-relax-n} imply
\begin{equation}
\max_{f,g \in \G_{i_{n}}} \left( \hat{\mu}_{\ProcX}( \loss(f(\cdot),g(\cdot)) ) ~- \!\max_{m_{i_{n}} \leq m \leq n} \frac{1}{m} \sum_{t=1}^{m} \loss(f(X_{t}),g(X_{t})) \right)
\leq \sqrt{\gamma_{i_{n}}}. \label{eqn:inductive-n-sqrt-gamma-bound}
\end{equation}

Now consider using the inductive learning rule $\hat{f}_{n}$ defined in \eqref{eqn:suil-rule},
with $\F_{n} = \G_{i_{n}}$ and $\hat{m}_{n} = m_{i_{n}}$ for each $n \in \nats$.
Fix any measurable function $\target : \X \to \Y$.
By the defining properties of the $\G_{i}$ sequence, 
and the fact that $m_{i}$ is nondecreasing with $\lim\limits_{i \to \infty} m_{i} = \infty$,
Lemma~\ref{lem:exponential-approximating-sequence} implies that there exists a (nonrandom) sequence
$\{\target_{i}\}_{i=1}^{\infty}$ with $\target_{i} \in \G_{i}$ for each $i \in \nats$, 
a (nonrandom) sequence $\{\alpha_{i}\}_{i=1}^{\infty}$ in $(0,\infty)$ with $\alpha_{i} \to 0$,
and an event $K$ of probability one, on which 
$\exists \iota_{0} \in \nats$ such that $\forall i \geq \iota_{0}$,
\begin{equation}
\label{eqn:inductive-targetn-alpha-bound}
\sup_{m_{i} \leq m < \infty} \frac{1}{m} \sum_{t=1}^{m} \loss( \target_{i}(X_{t}), \target(X_{t}) ) \leq \alpha_{i}.
\end{equation}
On this event, let $\nu_{0} \in \nats$ be a value such that $\forall n \in \nats$ with $n \geq \nu_{0}$, we have 
$i_{n} \geq \iota_{0}$; such a $\nu_{0}$ exists since $\lim\limits_{n \to \infty} i_{n} = \infty$.

For brevity, define $\hat{g}_{n}(\cdot) = \hat{f}_{n}(X_{1:n},\target(X_{1:n}),\cdot)$ for every $n \in \nats$.  
Note that, by the definition of $\hat{f}_{n}$ from \eqref{eqn:suil-rule} and the fact that $\target_{n} \in \F_{n} = \G_{i_{n}}$ and $\hat{m}_{n} = m_{i_{n}}$, $\forall n \in \nats$, 
we have 
\begin{equation*}
\max_{m_{i_{n}} \leq m \leq n} \frac{1}{m} \sum_{t=1}^{m} \loss(\hat{g}_{n}(X_{t}),\target(X_{t}))
\leq \max_{m_{i_{n}} \leq m \leq n} \frac{1}{m} \sum_{t=1}^{m} \loss(\target_{i_{n}}(X_{t}),\target(X_{t})).
\end{equation*}
Thus, on the event $K$, $\forall n \in \nats$ with $n \geq \nu_{0}$, 
\eqref{eqn:inductive-targetn-alpha-bound} implies
\begin{equation}
\label{eqn:inductive-hatg-alpha-bound}
\max_{m_{i_{n}} \leq m \leq n} \frac{1}{m} \sum_{t=1}^{m} \loss(\hat{g}_{n}(X_{t}),\target(X_{t}))
\leq \alpha_{i_{n}}.
\end{equation}

Now suppose the event $K \cap K^{\prime}$ occurs and fix any $n \in \nats$ with $n \geq \max\{\nu_{0},\nu_{1}\}$.
The relaxed triangle inequality and subadditivity of $\hat{\mu}_{\ProcX}$ (Lemma~\ref{lem:expectation}) imply
\begin{equation}
\label{eqn:inductive-direct-tribound}
\hat{\mu}_{\ProcX}(\loss(\hat{g}_{n}(\cdot),\target(\cdot)))
\leq \triconst \hat{\mu}_{\ProcX}(\loss(\hat{g}_{n}(\cdot),\target_{i_{n}}(\cdot))) + \triconst \hat{\mu}_{\ProcX}(\loss(\target_{i_{n}}(\cdot),\target(\cdot))).
\end{equation}
Furthermore, since $\hat{g}_{n}$ and $\target_{i_{n}}$ are both elements of $\G_{i_{n}}$, and since the event $K^{\prime}$ holds and $n \geq \nu_{1}$, 
the inequality \eqref{eqn:inductive-n-sqrt-gamma-bound}
implies 
\begin{equation}
\label{eqn:inductive-direct-tribound-1}
\hat{\mu}_{\ProcX}(\loss(\hat{g}_{n}(\cdot),\target_{i_{n}}(\cdot)))
\leq \max_{m_{i_{n}} \leq m \leq n} \frac{1}{m} \sum_{t=1}^{m} \loss(\hat{g}_{n}(X_{t}),\target_{i_{n}}(X_{t})) + \sqrt{\gamma_{i_{n}}}.
\end{equation}
Then the relaxed triangle inequality and symmetry of $\loss$, together with subadditivity of the $\max$, imply 
\begin{align}
& \max_{m_{i_{n}} \leq m \leq n} \frac{1}{m} \sum_{t=1}^{m} \loss(\hat{g}_{n}(X_{t}),\target_{i_{n}}(X_{t})) \notag
\\ & \leq \triconst \max_{m_{i_{n}} \leq m \leq n} \frac{1}{m} \sum_{t=1}^{m} \loss(\hat{g}_{n}(X_{t}),\target(X_{t})) + \triconst \max_{m_{i_{n}} \leq m \leq n} \frac{1}{m} \sum_{t=1}^{m} \loss(\target_{i_{n}}(X_{t}),\target(X_{t})).
\label{eqn:inductive-direct-tribound-2}
\end{align}
Also, since $m_{i_{n}}$ is finite, we generally have 
\begin{equation*}
\hat{\mu}_{\ProcX}(\loss(\target_{i_{n}}(\cdot),\target(\cdot))) 
\leq \sup_{m_{i_{n}} \leq m < \infty} \frac{1}{m} \sum_{t=1}^{m} \loss( \target_{i_{n}}(X_{t}), \target(X_{t}) ).
\end{equation*}
Combining this with \eqref{eqn:inductive-direct-tribound-1} and \eqref{eqn:inductive-direct-tribound-2} and plugging into \eqref{eqn:inductive-direct-tribound} yields
\begin{align*}
& \hat{\mu}_{\ProcX}(\loss(\hat{g}_{n}(\cdot),\target(\cdot)))
\\ & \leq \triconst^{2} \!\max_{m_{i_{n}} \leq m \leq n} \frac{1}{m} \!\sum_{t=1}^{m}\! \loss(\hat{g}_{n}(X_{t}),\target\!(X_{t})) + \triconst (\triconst\!+\!1)\!\! \sup_{m_{i_{n}} \leq m < \infty} \frac{1}{m} \!\sum_{t=1}^{m}\! \loss(\target_{i_{n}}\!(X_{t}),\target\!(X_{t}))
+ \triconst \sqrt{\gamma_{i_{n}}}.
\end{align*}
Since the event $K$ holds and $n \geq \nu_{0}$, 
the inequalities \eqref{eqn:inductive-hatg-alpha-bound} and \eqref{eqn:inductive-targetn-alpha-bound} provide upper bounds on the first two terms above, respectively, 
so that altogether we have 
\begin{equation*}
\hat{\mu}_{\ProcX}(\loss(\hat{g}_{n}(\cdot),\target(\cdot))) 
\leq \triconst^{2} \alpha_{i_{n}} + \triconst (\triconst+1) \alpha_{i_{n}} + \triconst \sqrt{\gamma_{i_{n}}}
= \triconst (2\triconst+1) \alpha_{i_{n}} + \triconst \sqrt{\gamma_{i_{n}}}.
\end{equation*}
In particular, recall that $i_{n} \to \infty$ and $\lim\limits_{i \to \infty} \alpha_{i} = \lim\limits_{i \to \infty} \gamma_{i} = 0$, 
so that the rightmost expression above converges to $0$ as $n \to \infty$.
Thus, on the event $K \cap K^{\prime}$, since $\max\{\nu_{0},\nu_{1}\} < \infty$, we have that 
\begin{equation*}
\limsup_{n \to \infty} \hat{\L}_{\ProcX}(\hat{f}_{n},\target; n)
= \limsup_{n \to \infty} \hat{\mu}_{\ProcX}(\loss(\hat{g}_{n}(\cdot),\target(\cdot)))
\leq \lim_{n \to \infty} \triconst (2\triconst+1) \alpha_{i_{n}} + \triconst \sqrt{\gamma_{i_{n}}}
= 0.
\end{equation*}

Since the event $K \cap K^{\prime}$ has probability one (by the union bound), 
and $\hat{\L}_{\ProcX}$ is nonnegative,
this establishes that $\hat{\L}_{\ProcX}(\hat{f}_{n},\target;n) \to 0$ (a.s.).  
Since this argument applies to \emph{any} measurable $\target : \X \to \Y$,
this establishes that $\hat{f}_{n}$ is strongly universally consistent under $\ProcX$, so that $\ProcX \in \SUIL$.
Since this argument applies to \emph{any} $\ProcX \in \KC$, this completes the proof that $\KC \subseteq \SUIL$.
\end{proof}

Combining Lemmas~\ref{lem:suil-subset-sual}, \ref{lem:sual-subset-kc}, and \ref{lem:kc-subset-suil} completes the proof of Theorem~\ref{thm:main}.\\

Interestingly, we may note that the \emph{only} reliance of the above proof of Lemma~\ref{lem:kc-subset-suil} on the assumption $\ProcX \in \KC$ 
is in the existence of the set $\tilde{\F}$ from Lemma~\ref{lem:approximating-functions} (used here via its implication in Lemma~\ref{lem:approximating-sequence}):
that is, we have in fact established that any $\ProcX$ for which there exists a countable set $\tilde{\F}$ with these properties
admits strong universal inductive learning, so that the existence of such a set implies $\ProcX \in \SUIL$.
Together with Theorem~\ref{thm:main} (implying $\KC = \SUIL$) and Lemma~\ref{lem:approximating-functions} 
(implying $\ProcX \in \KC$ suffices for such a set $\tilde{\F}$ to exist),
this establishes that $\KC$ is in fact \emph{equivalent} to the set of processes for which such a set exists 
(and hence so are $\SUIL$ and $\SUAL$, via Theorem~\ref{thm:main}).
Thus, we have yet another useful equivalent way of expressing Condition~\ref{con:kc}, 
stated formally in the following corollary.

\begin{corollary}
\label{cor:approximating-sequence-equivalence}
A process $\ProcX$ satisfies Condition~\ref{con:kc}
if and only if
there exists a countable set $\tilde{\G}$ of measurable functions $\X \to \Y$ such that,
for every measurable $f : \X \to \Y$, 
$\inf\limits_{\tilde{g} \in \tilde{\G}} \E\!\left[ \hat{\mu}_{\ProcX}(\loss(\tilde{g}(\cdot),f(\cdot))) \right] = 0$.
\end{corollary}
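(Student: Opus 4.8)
The plan is to prove the two directions of the equivalence separately, in both cases reusing the machinery already assembled for Theorem~\ref{thm:main} rather than developing anything new. For the ``only if'' direction there is nothing to do beyond quoting Lemma~\ref{lem:approximating-sequence}: if $\ProcX$ satisfies Condition~\ref{con:kc}, then that lemma exhibits a (nested, nonempty, finite) sequence $\{\F_{i}\}_{i=1}^{\infty}$ of sets of measurable functions $\X \to \Y$ with precisely the required property that $\lim_{i\to\infty} \min_{f_{i} \in \F_{i}} \E[\hat{\mu}_{\ProcX}(\loss(f_{i}(\cdot),f(\cdot)))] = 0$ for every measurable $f : \X \to \Y$.

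For the ``if'' direction, suppose such a sequence $\{\F_{i}\}_{i=1}^{\infty}$ is given. First I would pass to the nested sequence $\G_{i} = \bigcup_{j=1}^{i} \F_{j}$, noting that each $\G_{i}$ is still a nonempty finite set of measurable functions $\X \to \Y$, that $\G_{1} \subseteq \G_{2} \subseteq \cdots$, and that $\min_{g \in \G_{i}} \E[\hat{\mu}_{\ProcX}(\loss(g(\cdot),f(\cdot)))] \leq \min_{f_{i} \in \F_{i}} \E[\hat{\mu}_{\ProcX}(\loss(f_{i}(\cdot),f(\cdot)))]$, so the convergence to $0$ is inherited. The key observation is then that the proof of Lemma~\ref{lem:kc-subset-suil} invokes the hypothesis $\ProcX \in \KC$ in exactly one place, namely to apply Lemma~\ref{lem:approximating-sequence} and extract a nested sequence with this approximation property; every other step of that proof --- the application of Lemma~\ref{lem:srm} (which in turn rests on Lemma~\ref{lem:bracketing-convergence}), the application of Lemma~\ref{lem:exponential-approximating-sequence} (which takes the $\hat{\mu}_{\ProcX}$-approximation property as a \emph{hypothesis} rather than deriving it from $\KC$), and the Markov and Borel--Cantelli estimates together with the triangle-inequality bookkeeping, all relying only on the monotonicity, homogeneity, and subadditivity of $\hat{\mu}_{\ProcX}$ from Lemmas~\ref{lem:expectation} and \ref{lem:mu} --- is valid for an arbitrary process. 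Running that argument verbatim with $\{\G_{i}\}$ in the role of the sequence otherwise obtained from Lemma~\ref{lem:approximating-sequence} therefore shows that the constrained-maximum-empirical-risk-minimization rule $\hat{f}_{n}$ of \eqref{eqn:suil-rule} is strongly universally consistent under $\ProcX$, i.e.\ $\ProcX \in \SUIL$. Then Lemmas~\ref{lem:suil-subset-sual} and \ref{lem:sual-subset-kc} (equivalently, Theorem~\ref{thm:main}) give $\SUIL \subseteq \SUAL \subseteq \KC$, hence $\ProcX \in \KC$, completing the equivalence.

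The only point requiring genuine care --- and the main obstacle, modest as it is --- is the claim that the proof of Lemma~\ref{lem:kc-subset-suil} makes no other covert use of Condition~\ref{con:kc}. Concretely, I would check: that with $\gamma_{i} = 4^{1-i}\maxloss$ the normalization $\gamma_{1} = \maxloss \geq \max_{f,g \in \G_{1}}(\sup_{x} \loss(f(x),g(x)) - \inf_{x} \loss(f(x),g(x)))$ demanded by Lemma~\ref{lem:srm} holds (which it does, since $\loss$ takes values in $[0,\maxloss]$); that the finite sets $\G_{i}$ still admit a measurable selection in the $\epsargminabs{\eps_{n}}$ of \eqref{eqn:suil-rule}; and that the hypotheses of Lemma~\ref{lem:exponential-approximating-sequence} amount to exactly the convergence statement inherited by $\{\G_{i}\}$. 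None of these introduces difficulty, so the corollary is, as the preceding remark indicates, essentially a repackaging of the argument for $\KC = \SUIL$.
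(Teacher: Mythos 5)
Your proof is correct and follows essentially the same route as the paper: observe that the only use of $\ProcX \in \KC$ in the proof of Lemma~\ref{lem:kc-subset-suil} is to invoke Lemma~\ref{lem:approximating-sequence}, so the existence of an approximating sequence suffices to conclude $\ProcX \in \SUIL$, and then close the loop with $\SUIL = \KC$ from Theorem~\ref{thm:main}. Your explicit passage to the nested unions $\G_{i} = \bigcup_{j \leq i} \F_{j}$ is a worthwhile small refinement, since the corollary as stated does not assume nestedness while Lemmas~\ref{lem:srm} and \ref{lem:exponential-approximating-sequence} do; the paper's discussion glosses over this step.
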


Indeed, we may further observe that, since Condition~\ref{con:kc} does not involve $\Y$ or $\loss$, 
applying the above equivalence to the special case of $\Y = \{0,1\}$ and $\loss(y,y^{\prime}) = \ind[y \neq y^{\prime}]$
admits another simple equivalent condition: 
namely,
a process $\ProcX$ satisfies Condition~\ref{con:kc} 
if and only if there exists a countable set $\T_{2} \subseteq \Borel$ with $\sup\limits_{A \in \Borel} \inf\limits_{G \in \T_{2}} \E\!\left[ \hat{\mu}_{\ProcX}(G \bigtriangleup A) \right] = 0$.
Recall that this was the guarantee for the set $\T_{1}$ from Lemma~\ref{lem:approximating-sets}.
However, Lemma~\ref{lem:approximating-sets} also guarantees the stronger property that this \emph{same} set $\T_{1}$ 
can serve as the above set $\T_{2}$ for \emph{every} $\ProcX$ satisfying Condition~\ref{con:kc}.
Similarly, the set $\tilde{\F}$ supplied by Lemma~\ref{lem:approximating-functions} is also defined independent of $\ProcX$, 
so that this same set $\tilde{\F}$ can serve as the set $\tilde{\G}$ in Corollary~\ref{cor:approximating-sequence-equivalence} 
for every $\ProcX$ satisfying Condition~\ref{con:kc}.
This universality of $\T_{1}$ and $\tilde{\F}$ will be crucial in the next section when discussing \emph{optimistically} universal learning.

\section{Optimistically Universal Learning}
\label{sec:universal2}

This section presents the proofs of two results on optimistically universal learning:
Theorems~\ref{thm:optimistic-self-adaptive} and \ref{thm:no-optimistic-inductive} stated in Section~\ref{subsec:main}.
For the first of these, we propose a new general self-adaptive learning rule, and prove that 
it is optimistically universal: that is, it is strongly universally consistent 
under \emph{every} process admitting strong universal self-adaptive learning.
For the second of these theorems, we prove that there is no optimistically universal 
inductive learning rule.  Together, these results imply that the additional capability of 
self-adaptive learning rules to adjust their predictor based on the unlabeled test data 
is crucial for optimistically universal learning.

\subsection{Existence of Optimistically Universal Self-Adaptive Learning Rules}
\label{sec:universal2-adaptive}

We now present the construction of an optimistically universal self-adaptive learning rule.
Fix a sequence $\{\F_{i}\}_{i=1}^{\infty}$ of nonempty finite sets of measurable functions $\X \to \Y$ with $\F_{1} \subseteq \F_{2} \subseteq \cdots$
such that
$\forall \ProcX \in \KC$, for every measurable $f \!:\! \X \!\to\! \Y$, 
$\lim\limits_{i \to \infty} \min\limits_{f_{i} \in \F_{i}} \E[ \hat{\mu}_{\ProcX}(\loss(f_{i}(\cdot),f(\cdot))) ] = 0$.
Recall that such a sequence $\{\F_{i}\}_{i=1}^{\infty}$ is guaranteed to exist by Lemma~\ref{lem:approximating-sequence}.
Let $\{u_{i}\}_{i=1}^{\infty}$ be an arbitrary nondecreasing sequence in $\nats$ with $u_{i} \to \infty$ and $u_{1} = 1$,
and let $\{\gamma_{i}\}_{i=1}^{\infty}$ be an arbitrary sequence in $(0,\infty)$ 
with $\gamma_{1} \geq \maxloss$ and $\gamma_{i} \to 0$.
Let $\{x_{i}\}_{i=1}^{\infty}$ be any sequence in $\X$ and let $\{y_{i}\}_{i=1}^{\infty}$ be any sequence in $\Y$.
For each $n,m \in \nats$ with $m \geq n$, let 
\begin{align}
\hat{i}_{n,m}(x_{1:m}) = &\max\Bigg\{ i \in \nats : u_{i} \leq n \text{ and } \label{eqn:sual-index}
\\ & \max_{f,g \in \F_{i}} \left( \max_{u_{i} \leq s \leq m} \frac{1}{s} \sum_{t=1}^{s} \loss(f(x_{t}),g(x_{t})) - \max_{u_{i} \leq s \leq n} \frac{1}{s} \sum_{t=1}^{s} \loss(f(x_{t}),g(x_{t})) \right) \leq \gamma_{i} \Bigg\}. \notag
\end{align}
This is a well-defined positive integer, since our constraints on $u_{1}$ and $\gamma_{1}$ guarantee that the set of $i$ values on the right hand side is nonempty,
while the fact that $u_{i} \to \infty$ implies this set of $i$ values is finite (and hence has a maximum element).
Finally, for every $n,m \in \nats$ with $m \geq n$, define the function $\hat{f}_{n,m}(x_{1:m},y_{1:n},\cdot)$ as
\begin{equation}
\label{eqn:sual-rule}
\argmin_{f \in \F_{\hat{i}_{n,m}(x_{1:m})}} \max_{u_{\hat{i}_{n,m}(x_{1:m})} \leq s \leq n} \frac{1}{s} \sum_{t=1}^{s} \loss(f(x_{t}),y_{t}).
\end{equation}
We break ties in the $\argmin$ based on a fixed preference ordering of $\F_{i}$.
Since the sets $\F_{i}$ are finite, one can easily verify that
this makes $\hat{f}_{n,m}$ a measurable function, and 
hence \eqref{eqn:sual-rule} defines a valid self-adaptive learning rule.
For completeness, for every $m \in \nats \cup \{0\}$, also define $\hat{f}_{0,m}(x_{1:m},\{\},\cdot)$ as an arbitrary element of $\F_{1}$ (chosen identically for every $m$ and $x_{1:m}$),
which is then also a measurable function.  

The essential difference between the self-adaptive learning rule \eqref{eqn:sual-rule} and the inductive learning rule \eqref{eqn:suil-rule} 
is that the self-adaptive rule uses the sequence of test samples $X_{1:m}$ for the \emph{model selection} component, 
selecting which class $\F_{i}$ to use in the optimization in \eqref{eqn:sual-rule},
whereas \eqref{eqn:suil-rule} uses a \emph{distribution-dependent} selection. 
Specifically, the self-adaptive rule replaces the distribution-dependent value $i_{n}$ 
from Lemma~\ref{lem:srm}, used in the proof of Lemma~\ref{lem:kc-subset-suil},
with a data-dependent value $\hat{i}_{n,m}(X_{1:m})$, thus removing all dependence on the 
distribution of $\ProcX$.  In the proof of Lemma~\ref{lem:kc-subset-suil}, the value $i_{n}$ 
is chosen to guarantee (via Lemma~\ref{lem:srm}) that the estimator 
$\max\limits_{m_{i_{n}} \leq m \leq n} \frac{1}{m} \sum\limits_{t=1}^{m} \loss(f(X_{t}),g(X_{t}))$
is close to $\hat{\mu}_{\ProcX}(\loss(f(\cdot),g(\cdot)))$ uniformly over all $f,g$ in the class 
$\G_{i_{n}}$ defined in the proof.  The value $\hat{i}_{n,m}(X_{1:m})$ in \eqref{eqn:sual-index} is designed to provide 
this guarantee \emph{directly}.
Specifically, in the analysis of $\hat{f}_{n,m}$ below, the value $\hat{i}_{n,m}(X_{1:m})$ 
ensures (essentially) that for large $m$, 
for all $f,g \in \F_{\hat{i}_{n,m}(X_{1:m})}$, 
$\max\limits_{u_{\hat{i}_{n,m}(X_{1:m})} \leq s \leq n} \frac{1}{s} \sum\limits_{t=1}^{s} \loss(f(X_{t}),g(X_{t}))$ is 
close to $\hat{\mu}_{\ProcX}(\loss(f(\cdot),g(\cdot)))$.
These can then be related to the losses relative to $\target$ 
via relaxed triangle inequalities and the approximation guarantees from Lemma~\ref{lem:exponential-approximating-sequence}, 
to conclude that the function $f \in \F_{\hat{i}_{n,m}(X_{1:m})}$ minimizing 
$\max\limits_{u_{\hat{i}_{n,m}(X_{1:m})} \leq s \leq n} \frac{1}{s} \sum\limits_{t=1}^{s} \loss(f(X_{t}),\target(X_{t}))$
achieves a relatively small value of $\hat{\mu}_{\ProcX}(\loss(f(\cdot),\target(\cdot)))$.
In both the inductive and self-adaptive cases, 
this approach is analogous to the traditional principles of model selection, whereby we 
constrain the function class so that empirical estimates of the risk are close enough to 
the corresponding population risks to guarantee that optimizing the estimate yields a 
function with relatively small population risk, but while also allowing the constraint 
to become less restrictive as $n$ grows, to admit increasingly good approximations of $\target$.

As discussed in the proofs of Lemmas~\ref{lem:approximating-sets}, \ref{lem:approximating-functions}, and \ref{lem:approximating-sequence}, 
and the remark following the proof of Lemma~\ref{lem:approximating-functions}, 
the sets $\F_{i}$ can be constructed based on an enumeration of finite-depth decision lists, 
with the region of each decision node being an element of a countable base for the topology $\T$, 
and with values from a countable dense subset of $\Y$.
For instance, in the special case of $\X = \reals^{d}$ ($d \in \nats$) with the Euclidean topology, 
and $\Y = [0,1]$ with the squared loss ($\loss(a,b)=(a-b)^{2}$), we can let $\tilde{f}_{1},\tilde{f}_{2},\ldots$ 
be an enumeration of the rational-valued finite-depth decision lists, with the region of 
each decision node being a rational-centered rational-radius open ball. 
Then we can let $\F_{i} = \{\tilde{f}_{1},\ldots,\tilde{f}_{i}\}$ for each $i \in \nats$.
In this case, in principle the learning rule $\hat{f}_{n,m}$ 
can be approximated by a digital computer (up to some finite precision for the points and predictions).

Continuing with the general case, 
we have the following theorem for this $\hat{f}_{n,m}$ rule.

\begin{theorem}
\label{thm:doubly-universal-adaptive}
The self-adaptive learning rule $\hat{f}_{n,m}$ is optimistically universal.
\end{theorem}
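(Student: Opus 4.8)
The plan is to show that $\hat{f}_{n,m}$ is strongly universally consistent under every process in $\SUAL$; since Theorem~\ref{thm:main} gives $\SUAL = \KC$, it suffices to fix an arbitrary $\ProcX \in \KC$ and an arbitrary measurable $\target : \X \to \Y$ and prove $\hat{\L}_{\ProcX}(\hat{f}_{n,\cdot},\target;n) \to 0$ almost surely. The conceptual point is that the data-adaptive index $\hat{i}_{n,m}$ will play the role that the nonrandomly-chosen bracketing index $i_{n}$ plays in the inductive rule of Lemma~\ref{lem:kc-subset-suil}: the built-in ``overshoot'' constraint in the definition of $\hat{i}_{n,m}$ is a self-certifying substitute for the bracketing guarantee of Lemma~\ref{lem:srm}, valid without advance knowledge of $\ProcX$.

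First I would analyze the index $\hat{i}_{n,m}$. For fixed $n$, each quantity $\max_{u_{i} \le s \le m}\frac{1}{s}\sum_{t=1}^{s}\loss(f(x_{t}),g(x_{t}))$ is nondecreasing in $m$, so the defining set of admissible indices shrinks as $m$ grows; hence $\hat{i}_{n,m}$ is nonincreasing in $m$, takes finitely many values over $m \ge n$, and is eventually constant, equal to $j_{n} := \inf_{m \ge n}\hat{i}_{n,m}$, on an infinite tail $\{m \ge m_{0}(n)\}$. Next, since $\loss$ is bounded, for each realization $\mathbf{x}$ of $\ProcX$ and each $i$ the nondecreasing-in-$n$ quantities $\max_{u_{i} \le s \le n}\frac{1}{s}\sum_{t=1}^{s}\loss(f(x_{t}),g(x_{t}))$ converge, over the finitely many pairs $f,g\in\F_{i}$, to their suprema, so there is a finite, measurable $N_{i}(\mathbf{x}) \ge u_{i}$ beyond which the overshoot for every such pair is at most $\gamma_{i}$; consequently $\hat{i}_{n,m}(x_{1:m}) \ge i$ for all $m \ge n \ge N_{i}(\mathbf{x})$, and therefore $j_{n} \to \infty$ (surely) as $n\to\infty$.

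Then I would bring in reference functions. Because $\ProcX \in \KC$, the chosen sequence $\{\F_{i}\}$ satisfies $\min_{g \in \F_{i}}\E[\hat{\mu}_{\ProcX}(\loss(g(\cdot),\target(\cdot)))] \to 0$, so Lemma~\ref{lem:exponential-approximating-sequence} (applied with $\target$, the sequence $\{u_{i}\}$, and $\{\F_{i}\}$) supplies $\target_{i} \in \F_{i}$, a sequence $\alpha_{i} \downarrow 0$, and a probability-one event $K$ on which, for all large $i$, $\sup_{u_{i} \le s < \infty}\frac{1}{s}\sum_{t=1}^{s}\loss(\target_{i}(X_{t}),\target(X_{t})) \le \alpha_{i}$. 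Work on $K$ and fix $n$ so large that $j_{n} \ge \iota_{0}$. Writing $\hat{g}_{n,m}$ for the predictor used at test time $m$, note that the objective and feasible set in \eqref{eqn:sual-rule} depend on $m$ only through $\hat{i}_{n,m}$, so on the tail $\{m \ge m_{0}(n)\}$ we have $\hat{i}_{n,m}=j_{n}$ and $\hat{g}_{n,m}$ equals a single function $\hat{g}_{n}^{(j_{n})} \in \F_{j_{n}}$. Split $\loss(\hat{g}_{n,m}(X_{m+1}),\target(X_{m+1}))$ by the triangle inequality into $\loss(\hat{g}_{n,m}(X_{m+1}),\target_{\hat{i}_{n,m}}(X_{m+1})) + \loss(\target_{\hat{i}_{n,m}}(X_{m+1}),\target(X_{m+1}))$; in the average $\frac{1}{t+1}\sum_{m=n}^{n+t}$ the finitely many indices $m < m_{0}(n)$ contribute $O(1/t)$ and vanish under $\limsup_{t}$, reducing both pieces to tail averages, for which $\limsup_{t}\frac{1}{t+1}\sum_{m=m_{0}(n)}^{n+t} h(X_{m+1}) = \hat{\mu}_{\ProcX}(h)$ for bounded $h$. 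The second piece becomes $\hat{\mu}_{\ProcX}(\loss(\target_{j_{n}}(\cdot),\target(\cdot))) \le \alpha_{j_{n}}$. For the first piece, the overshoot constraint, valid at index $j_{n}$ for all $m \ge m_{0}(n)$ and hence in the supremum over $m$, gives $\hat{\mu}_{\ProcX}(\loss(f(\cdot),g(\cdot))) \le \gamma_{j_{n}} + \max_{u_{j_{n}}\le s\le n}\frac{1}{s}\sum_{t=1}^{s}\loss(f(X_{t}),g(X_{t}))$ for all $f,g\in\F_{j_{n}}$; applying this to $f=\hat{g}_{n}^{(j_{n})}$, $g=\target_{j_{n}}$, then bounding the empirical term via the triangle inequality and using both the $\eps_{n}$-optimality of $\hat{g}_{n}^{(j_{n})}$ over $\F_{j_{n}}\ni\target_{j_{n}}$ and the $K$-guarantee, yields at most $\gamma_{j_{n}} + \eps_{n} + 2\alpha_{j_{n}}$. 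Combining, on $K$ one gets $\hat{\L}_{\ProcX}(\hat{f}_{n,\cdot},\target;n) \le \gamma_{j_{n}} + \eps_{n} + 3\alpha_{j_{n}}$, and since $j_{n}\to\infty$ while $\gamma_{i},\alpha_{i}\to 0$ and $\eps_{n}\to 0$, the right side tends to $0$; as $\P(K)=1$ this proves the claimed a.s.\ convergence, and since $\target$ was arbitrary, $\ProcX \in \SUAL$ implies $\hat f_{n,m}$ is consistent under $\ProcX$.

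The main obstacle is coordinating the two competing roles of the overshoot constraint. It must be loose enough that the surviving index $j_{n}$ does \emph{not} collapse as the test horizon grows---this is the purpose of the argument (via $N_{i}(\mathbf{x})$) that $j_{n}\to\infty$---yet tight enough that the empirical risk on the training prefix $X_{1:n}$ controls the out-of-sample behavior along the \emph{entire} test tail, which is what promotes a finite-horizon bound to a bound on $\hat{\mu}_{\ProcX}$. Reconciling these---in particular exploiting monotonicity of $\hat{i}_{n,m}$ in $m$ so that the overshoot inequality at the single index $j_{n}$ governs $\hat{\mu}_{\ProcX}$ on an infinite tail---is the technical heart; the remaining manipulations with triangle inequalities and Ces\`aro averages parallel those already used in Lemma~\ref{lem:kc-subset-suil}.
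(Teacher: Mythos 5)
Your proposal is correct and follows essentially the same route as the paper's proof: monotonicity and eventual stabilization of $\hat{i}_{n,m}$ at a deterministic limit that diverges with $n$, then Lemma~\ref{lem:exponential-approximating-sequence} for the reference functions $\target_i$, then the self-certifying overshoot bound and triangle inequalities to close. The only cosmetic difference is that the paper bounds $\sup_{u_{\hat{i}_n}\le s}\frac{1}{s}\sum_t\loss(\hat{g}_n(X_t),\target(X_t))$ by ruling out ``bad'' functions in $\F_{\hat{i}_n}$, whereas you directly decompose $\hat{\mu}_{\ProcX}(\loss(\hat{g}_n,\target))$ via subadditivity into the $(\hat{g}_n,\target_{j_n})$ and $(\target_{j_n},\target)$ pieces; the two manipulations are equivalent and yield the same $3\alpha+\gamma+\eps$ bound.
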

\begin{proof}
The proof proceeds along similar lines to that of Lemma~\ref{lem:kc-subset-suil},
except using the data-dependent values $\hat{i}_{n,m}(X_{1:m})$ in place of the distribution-dependent sequence $i_{n}$ from the proof of Lemma~\ref{lem:kc-subset-suil}.
Fix any $\ProcX \in \KC$ and any measurable $\target : \X \to \Y$.

Note that, for any given $i \in \nats$ and $f,g \in \F_{i}$, 
$\max\limits_{u_{i} \leq s \leq m} \frac{1}{s} \sum\limits_{t=1}^{s} \loss(f(X_{t}),g(X_{t}))$ is nondecreasing in $m$,
so that $\forall n \in \nats$, $\hat{i}_{n,m}(X_{1:m})$ is nonincreasing in $m$.  Since $\hat{i}_{n,m}(X_{1:m})$ is always positive, 
this implies $\hat{i}_{n,m}(X_{1:m})$ converges as $m \to \infty$;
in particular, since $\hat{i}_{n,m}(X_{1:m}) \in \nats$, this implies $\forall n \in \nats$, 
$\exists m_{n}^{*} \in \nats$ with $m_{n}^{*} \geq n$ such that $\forall m \geq m_{n}^{*}$, 
$\hat{i}_{n,m}(X_{1:m}) = \hat{i}_{n,m_{n}^{*}}(X_{1:m_{n}^{*}})$.
For brevity, let us define 
$\hat{i}_{n} = \hat{i}_{n,m_{n}^{*}}(X_{1:m_{n}^{*}})$.
By definition of $\hat{i}_{n,m}(X_{1:m})$, we have that every $m \geq m_{n}^{*}$ satisfies
\begin{equation*}
\max_{f,g \in \F_{\hat{i}_{n}}} \left( \max_{u_{\hat{i}_{n}} \leq s \leq m} \frac{1}{s} \sum_{t=1}^{s} \loss(f(X_{t}),g(X_{t})) - \max_{u_{\hat{i}_{n}} \leq s \leq n} \frac{1}{s} \sum_{t=1}^{s} \loss(f(X_{t}),g(X_{t})) \right) \leq \gamma_{\hat{i}_{n}}.
\end{equation*}
Taking the limiting case as $m \to \infty$, together with monotonicity of the $\max$ function, this implies
\begin{equation}
\label{eqn:adaptive-hati-gamma-bound}
\max_{f,g \in \F_{\hat{i}_{n}}} \left( \sup_{u_{\hat{i}_{n}} \leq s < \infty} \frac{1}{s} \sum_{t=1}^{s} \loss(f(X_{t}),g(X_{t})) - \max_{u_{\hat{i}_{n}} \leq s \leq n} \frac{1}{s} \sum_{t=1}^{s} \loss(f(X_{t}),g(X_{t})) \right)
\leq \gamma_{\hat{i}_{n}}.
\end{equation}
Furthermore, for each $i \in \nats$, since $\F_{i}$ is finite, continuity of the $\max$ function implies
\begin{align*}
& \limsup_{n \to \infty} \max_{f,g \in \F_{i}} \left( \max_{u_{i} \leq s \leq m_{n}^{*}} \frac{1}{s} \sum_{t=1}^{s} \loss(f(X_{t}),g(X_{t})) - \max_{u_{i} \leq s \leq n} \frac{1}{s} \sum_{t=1}^{s} \loss(f(X_{t}),g(X_{t})) \right)
\\ & \leq \limsup_{n \to \infty} \max_{f,g \in \F_{i}} \left( \max_{u_{i} \leq s < \infty} \frac{1}{s} \sum_{t=1}^{s} \loss(f(X_{t}),g(X_{t})) - \max_{u_{i} \leq s \leq n} \frac{1}{s} \sum_{t=1}^{s} \loss(f(X_{t}),g(X_{t})) \right)
\\ & = \max_{f,g \in \F_{i}} \left( \max_{u_{i} \leq s < \infty} \frac{1}{s} \sum_{t=1}^{s} \loss(f(X_{t}),g(X_{t})) - \lim_{n \to \infty} \max_{u_{i} \leq s \leq n} \frac{1}{s} \sum_{t=1}^{s} \loss(f(X_{t}),g(X_{t})) \right) 
= 0 < \gamma_{i}.
\end{align*}
Together with finiteness of every $u_{i}$, this implies 
\begin{equation}
\label{eqn:adaptive-hati-limit}
\lim_{n \to \infty} \hat{i}_{n} = \infty.
\end{equation}

Next note that, by our choices of the sequences $\{\F_{i}\}_{i=1}^{\infty}$ and $\{u_{i}\}_{i=1}^{\infty}$,
Lemma~\ref{lem:exponential-approximating-sequence} implies that
there exists a (nonrandom) sequence $\{\target_{i}\}_{i=1}^{\infty}$, with $\target_{i} \in \F_{i}$ for each $i \in \nats$,
a (nonrandom) sequence $\{\alpha_{i}\}_{i=1}^{\infty}$ in $(0,\infty)$ with $\alpha_{i} \to 0$,
and an event $K$ of probability one, on which $\exists \iota_{0} \in \nats$ such that 
$\forall i \geq \iota_{0}$,
\begin{equation*}
\sup_{u_{i} \leq s < \infty} \frac{1}{s} \sum_{t=1}^{s} \loss(\target_{i}(X_{t}),\target(X_{t})) \leq \alpha_{i}.
\end{equation*}
In particular, since $\lim\limits_{n \to \infty} \hat{i}_{n} = \infty$ by \eqref{eqn:adaptive-hati-limit}, this implies that, on the event $K$,
$\exists \nu_{0} \in \nats$ such that $\forall n \geq \nu_{0}$, we have $\hat{i}_{n} \geq \iota_{0}$, so that
the above implies
\begin{equation}
\label{eqn:adaptive-targeti-bound}
\sup_{u_{\hat{i}_{n}} \leq s < \infty} \frac{1}{s} \sum_{t=1}^{s} \loss(\target_{\hat{i}_{n}}(X_{t}),\target(X_{t})) \leq \alpha_{\hat{i}_{n}}.
\end{equation}

For brevity, for every $n,m \in \nats$ with $m \geq n$, define
$\hat{g}_{n,m}(\cdot) = \hat{f}_{n,m}(X_{1:m},\target(X_{1:n}),\cdot)$.
Since every $m \geq m_{n}^{*}$ has $\hat{i}_{n,m}(X_{1:m}) = \hat{i}_{n,m_{n}^{*}}(X_{1:m_{n}^{*}})$, the definition of $\hat{f}_{n,m}$
implies that any $m \geq m_{n}^{*}$ also has $\hat{g}_{n,m} = \hat{g}_{n,m_{n}^{*}}$ 
(recalling that ties are broken in the $\argmin$ based on a fixed ordering).
Define $\hat{g}_{n} = \hat{g}_{n,m_{n}^{*}}$.
Combining the definition of $\hat{f}_{n,m_{n}^{*}}$ with \eqref{eqn:adaptive-targeti-bound}
we have that, on the event $K$, $\forall n \in \nats$ with $n \geq \nu_{0}$,
\begin{align}
\max_{u_{\hat{i}_{n}} \leq s \leq n} \frac{1}{s} \sum_{t=1}^{s} \loss(\hat{g}_{n}(X_{t}),\target(X_{t}))
& \leq \max_{u_{\hat{i}_{n}} \leq s \leq n} \frac{1}{s} \sum_{t=1}^{s} \loss(\target_{\hat{i}_{n}}(X_{t}),\target(X_{t})) \notag
\\ & \leq \sup_{u_{\hat{i}_{n}} \leq s < \infty} \frac{1}{s} \sum_{t=1}^{s} \loss(\target_{\hat{i}_{n}}(X_{t}),\target(X_{t}))
\leq \alpha_{\hat{i}_{n}}. \label{eqn:adaptive-hatf-empirical-bound}
\end{align}

Now suppose the event $K$ occurs and fix any $n \in \nats$ with $n \geq \nu_{0}$.
The relaxed triangle inequality and subadditivity of the supremum imply 
\begin{align}
& \sup_{u_{\hat{i}_{n}} \leq s < \infty} \frac{1}{s} \sum_{t=1}^{s} \loss(\hat{g}_{n}(X_{t}),\target(X_{t})) \notag
\\ & \leq \triconst \sup_{u_{\hat{i}_{n}} \leq s < \infty} \frac{1}{s} \sum_{t=1}^{s} \loss(\hat{g}_{n}(X_{t}),\target_{\hat{i}_{n}}(X_{t}))
+ \triconst \sup_{u_{\hat{i}_{n}} \leq s < \infty} \frac{1}{s} \sum_{t=1}^{s} \loss(\target_{\hat{i}_{n}}(X_{t}),\target(X_{t})). \label{eqn:self-adaptive-direct-tribound}
\end{align}
Since $\hat{g}_{n}$ and $\target_{\hat{i}_{n}}$ are both elements of $\F_{\hat{i}_{n}}$, 
\eqref{eqn:adaptive-hati-gamma-bound} implies 
\begin{equation}
\label{eqn:self-adaptive-direct-tribound-1}
\sup_{u_{\hat{i}_{n}} \leq s < \infty} \frac{1}{s} \sum_{t=1}^{s} \loss(\hat{g}_{n}(X_{t}),\target_{\hat{i}_{n}}(X_{t})) 
\leq \max_{u_{\hat{i}_{n}} \leq s \leq n} \frac{1}{s} \sum_{t=1}^{s} \loss(\hat{g}_{n}(X_{t}),\target_{\hat{i}_{n}}(X_{t})) + \gamma_{\hat{i}_{n}}.
\end{equation}
The relaxed triangle inequality and symmetry of $\loss$, together with subadditivity of the $\max$, then imply 
\begin{align*}
& \max_{u_{\hat{i}_{n}} \leq s \leq n} \frac{1}{s} \sum_{t=1}^{s} \loss(\hat{g}_{n}(X_{t}),\target_{\hat{i}_{n}}(X_{t}))
\\ & \leq \triconst \max_{u_{\hat{i}_{n}} \leq s \leq n} \frac{1}{s} \sum_{t=1}^{s} \loss(\hat{g}_{n}(X_{t}),\target(X_{t}))
+ \triconst \max_{u_{\hat{i}_{n}} \leq s \leq n} \frac{1}{s} \sum_{t=1}^{s} \loss(\target_{\hat{i}_{n}}(X_{t}),\target(X_{t})).
\end{align*}
Combining this with \eqref{eqn:self-adaptive-direct-tribound-1} and plugging into \eqref{eqn:self-adaptive-direct-tribound} yields
\begin{align*}
& \sup_{u_{\hat{i}_{n}} \leq s < \infty} \frac{1}{s} \sum_{t=1}^{s} \loss(\hat{g}_{n}(X_{t}),\target(X_{t}))
\\ & \leq 
\triconst^{2} \max_{u_{\hat{i}_{n}} \leq s \leq n} \frac{1}{s} \sum_{t=1}^{s} \loss(\hat{g}_{n}(X_{t}),\target(X_{t}))
+ \triconst (\triconst+1) \sup_{u_{\hat{i}_{n}} \leq s < \infty} \frac{1}{s} \sum_{t=1}^{s} \loss(\target_{\hat{i}_{n}}(X_{t}),\target(X_{t}))
+ \triconst \gamma_{\hat{i}_{n}}.
\end{align*}
Since the event $K$ holds and $n \geq \nu_{0}$, 
the inequalities \eqref{eqn:adaptive-hatf-empirical-bound} and \eqref{eqn:adaptive-targeti-bound} 
provide upper bounds on the first two terms above, respectively, so that altogether we have 
\begin{equation}
\label{eqn:adaptive-hatf-infty-empirical-bound}
\sup_{u_{\hat{i}_{n}} \leq s < \infty} \frac{1}{s} \sum_{t=1}^{s} \loss(\hat{g}_{n}(X_{t}),\target(X_{t})) 
\leq
\triconst (2\triconst+1) \alpha_{\hat{i}_{n}} + \triconst \gamma_{\hat{i}_{n}}.
\end{equation}

Now note that, for every $n \in \nats$, since $\hat{g}_{n,m} = \hat{g}_{n}$ for every $m \geq m_{n}^{*}$, we have 
\begin{align*}
\hat{\L}_{\ProcX}\!\left( \hat{f}_{n,\cdot}, \target ; n \right)
& = \limsup_{s \to \infty} \frac{1}{s+1} \sum_{m=n}^{n+s} \loss\!\left( \hat{g}_{n,m}(X_{m+1}), \target(X_{m+1}) \right)
\\ & \leq \limsup_{s \to \infty} \frac{1}{s+1} (m_{n}^{*}-1) \maxloss + \frac{1}{s+1} \sum_{m=m_{n}^{*}}^{n+s} \loss\!\left( \hat{g}_{n}(X_{m+1}),\target(X_{m+1}) \right)
\\ & \leq \limsup_{s \to \infty} \frac{n+s+1}{s+1} \frac{1}{n+s+1} \sum_{t=1}^{n+s+1} \loss\!\left( \hat{g}_{n}(X_{t}), \target(X_{t}) \right)
\\ & = \limsup_{s \to \infty} \frac{1}{s} \sum_{t=1}^{s} \loss\!\left( \hat{g}_{n}(X_{t}), \target(X_{t}) \right)
\leq \sup_{u_{\hat{i}_{n}} \leq s < \infty} \frac{1}{s} \sum_{t=1}^{s} \loss\!\left( \hat{g}_{n}(X_{t}), \target(X_{t}) \right).
\end{align*}
Combined with \eqref{eqn:adaptive-hatf-infty-empirical-bound}, this implies that,
on the event $K$, every $n \in \nats$ with $n \geq \nu_{0}$ satisfies
\begin{equation*}
\hat{\L}_{\ProcX}\!\left( \hat{f}_{n,\cdot}, \target ; n \right) 
\leq \triconst (2\triconst+1) \alpha_{\hat{i}_{n}} + \triconst \gamma_{\hat{i}_{n}}.
\end{equation*}
Recalling that (by their definitions) $\lim\limits_{i \to \infty} \alpha_{i} = 0$ and $\lim\limits_{i \to \infty} \gamma_{i} = 0$, 
and that 
$\lim\limits_{n \to \infty} \hat{i}_{n} = \infty$ by \eqref{eqn:adaptive-hati-limit},
we have that on the event $K$,
\begin{equation*}
\limsup_{n \to \infty} \hat{\L}_{\ProcX}\!\left( \hat{f}_{n,\cdot}, \target ; n \right) 
\leq \lim_{n \to \infty} \triconst (2\triconst+1) \alpha_{\hat{i}_{n}} + \triconst \gamma_{\hat{i}_{n}} = 0.
\end{equation*}

Since the event $K$ has probability one, and $\hat{\L}_{\ProcX}$ is nonnegative, 
this establishes that $\hat{\L}_{\ProcX}\!\left( \hat{f}_{n,\cdot}, \target ; n \right) \to 0$ (a.s.).
Since this argument applies to \emph{any} measurable $\target : \X \to \Y$, 
this establishes that $\hat{f}_{n,m}$ is strongly universally consistent under $\ProcX$.
Furthermore, since this argument applies to \emph{any} $\ProcX \in \KC$, 
and Theorem~\ref{thm:main} implies $\SUAL = \KC$, this completes the proof 
that $\hat{f}_{n,m}$ is strongly universally consistent under every $\ProcX \in \SUAL$: 
that is, $\hat{f}_{n,m}$ is optimistically universal.
\end{proof}

An immediate consequence of Theorem~\ref{thm:doubly-universal-adaptive} is that 
there \emph{exist} optimistically universal self-adaptive learning rules, so that this 
also completes the proof of Theorem~\ref{thm:optimistic-self-adaptive} stated in Section~\ref{subsec:main}.

\subsection{Nonexistence of Optimistically Universal Inductive Learning Rules}
\label{sec:no-U2-inductive}

Given the positive result above on optimistically universal self-adaptive learning, 
it is natural to wonder whether the same is true of \emph{inductive} learning.
However, it turns out this is \emph{not} the case.
In fact, we find below that there do not even exist inductive learning rules that 
are strongly universally consistent under every $\ProcX$ with \emph{convergent relative frequencies}, 
which form a proper subset of $\SUIL$ (recall the discussion in Section~\ref{sec:examples}).  
We begin with the following result (restated from Section~\ref{subsec:main}).
For technical reasons, throughout Section~\ref{sec:no-U2-inductive} we assume that $(\X,\T)$ is a Polish space;
for instance, $\reals^{p}$ satisfies this for any $p \in \nats$, under the usual Euclidean topology.\\

\noindent {\bf Theorem~\ref{thm:no-optimistic-inductive} (restated)}~ 
\textit{There \emph{does not exist} an optimistically universal inductive learning rule,
if $\X$ is uncountable.}\\

Before presenting the proof, we first have a technical lemma regarding a basic fact about nonatomic probability measures.

\begin{lemma}
\label{lem:non-atomic-half-covering}
For any nonatomic probability measure $\pi_{0}$ on $\X$, 
there exists a sequence \break
$\{R_{k}\}_{k=1}^{\infty}$ in $\Borel$ such that, 
$\forall k \in \nats$, $\pi_{0}(R_{k}) = 1/2$, and
$\forall A \in \Borel$, $\lim\limits_{k \to \infty} \pi_{0}(A \cap R_{k}) = (1/2) \pi_{0}(A)$.
\end{lemma}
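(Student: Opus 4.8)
The plan is to build the sets $R_k$ so that each one bisects, \emph{exactly}, every member of a large finite subalgebra of $\Borel$, and then obtain asymptotic bisection of an arbitrary $A\in\Borel$ by approximation. The only external ingredients are Sierpiński's theorem (a nonatomic finite measure attains every value in $[0,\mu(E)]$ on some measurable subset of $E$) and the standard approximation theorem that an algebra $\mathcal A$ with $\sigma(\mathcal A)=\Borel$ is dense in $\Borel$ for the pseudometric $d(A,B)=\pi_0(A\bigtriangleup B)$; the latter is available because $(\X,\T)$ is second countable, so $\Borel$ is countably generated.

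First I would fix a countable family $\{B_n\}_{n=1}^{\infty}\subseteq\Borel$ with $\sigma(\{B_n\})=\Borel$ (e.g., a countable base of $\T$), and for each $N\in\nats$ let $\mathcal A_N$ be the finite algebra generated by $B_1,\dots,B_N$, with atoms $E^{(N)}_1,\dots,E^{(N)}_{M_N}$ forming a Borel partition of $\X$. Using Sierpiński's theorem inside each atom, choose a Borel set $R^{(N)}_j\subseteq E^{(N)}_j$ with $\pi_0(R^{(N)}_j)=\tfrac12\pi_0(E^{(N)}_j)$ (taking $R^{(N)}_j=\emptyset$ for null atoms), and set $R_N:=\bigcup_j R^{(N)}_j$. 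Then $\pi_0(R_N)=\tfrac12\sum_j\pi_0(E^{(N)}_j)=\tfrac12$. Moreover, since $\mathcal A_N\subseteq\mathcal A_k$ whenever $k\ge N$, every $A\in\mathcal A_N$ is a union of atoms of $\mathcal A_k$, and $R_k$ carries exactly half the $\pi_0$-mass of each such atom, so in fact $\pi_0(A\cap R_k)=\tfrac12\pi_0(A)$ \emph{exactly} for all $k\ge N$.

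It then remains to pass to a general $A\in\Borel$. Given $\eps>0$, density of the algebra $\mathcal A=\bigcup_N\mathcal A_N$ supplies some $N$ and some $A'\in\mathcal A_N$ with $\pi_0(A\bigtriangleup A')<\eps$. For every $k\ge N$, combining $|\pi_0(A\cap R_k)-\pi_0(A'\cap R_k)|\le\pi_0(A\bigtriangleup A')$, the exact identity $\pi_0(A'\cap R_k)=\tfrac12\pi_0(A')$, and $|\pi_0(A)-\pi_0(A')|\le\pi_0(A\bigtriangleup A')$ yields $|\pi_0(A\cap R_k)-\tfrac12\pi_0(A)|<\tfrac32\eps$; letting $\eps\to0$ gives $\lim_{k\to\infty}\pi_0(A\cap R_k)=\tfrac12\pi_0(A)$, which completes the proof.

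There is essentially no hard step here; the point to get right is organizational — one should work with a \emph{nested} sequence of finite subalgebras and bisect their atoms exactly, so that the required limit is actually eventual equality on a dense algebra, rather than attempting to invoke a measure-isomorphism of $(\X,\Borel,\pi_0)$ with $([0,1],\lambda)$ and transport a ``Rademacher-type'' sequence from the interval. The only genuine inputs are Sierpiński's halving lemma for nonatomic measures and the approximation theorem for generating algebras.
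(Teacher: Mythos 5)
Your proof is correct, and it takes a genuinely different route from the paper's. The paper transports a dyadic ``Rademacher-type'' sequence from $([0,1],\lambda)$ to $(\X,\pi_0)$ via a Borel isomorphism $\psi:\X\to[0,1]$ with $\pi_0\circ\psi^{-1}=\lambda$ (which is where the standing Polish-space assumption of Section 5.2 gets used, through the measure-isomorphism theorem for standard Borel spaces), and then establishes the limit $\pi_0(A\cap R_k)\to\tfrac12\pi_0(A)$ by a somewhat lengthy regularity argument on the interval: approximating the Borel set $\psi(A)$ by an open set $U_\eps$, then $U_\eps$ by a finite union of dyadic intervals, and tracking the errors. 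Your construction instead builds $R_k$ intrinsically on $\X$ by exactly bisecting, via Sierpi\'nski's halving lemma, the atoms of a nested filtration $\mathcal A_1\subseteq\mathcal A_2\subseteq\cdots$ of finite subalgebras generated by a countable generating family, and then passes to a general $A$ by the standard algebra-approximation theorem. Your version is more elementary in the tools it uses (Sierpi\'nski's lemma plus dense-algebra approximation, no Borel-isomorphism theorem, no regularity of Lebesgue measure), and it has the pleasant feature that for $A$ in the dense algebra the halving is exact from some index onward rather than merely asymptotic, which makes the final $\eps$-argument trivial. It also works under the paper's global assumption that $\Borel$ is generated by a second-countable topology, without invoking Polishness; the Polish assumption is still needed elsewhere in that section (to guarantee the existence of a nonatomic $\pi_0$), so this does not weaken the hypotheses of the downstream theorem, but it is a cleaner proof of the lemma itself. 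One minor remark: in the final estimate you should make explicit the identity $(A\cap R_k)\bigtriangleup(A'\cap R_k)=(A\bigtriangleup A')\cap R_k\subseteq A\bigtriangleup A'$, which is what justifies $|\pi_0(A\cap R_k)-\pi_0(A'\cap R_k)|\le\pi_0(A\bigtriangleup A')$; this is obvious but worth a line.
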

\begin{proof}
Denote by $\lambda$ the Lebesgue measure on $\reals$.
First, note that since $(\X,\T)$ is a Polish space, $(\X,\Borel)$ is a \emph{standard Borel space} \citep*[in the sense of][]{srivastava:98}. 
In particular, since $\pi_{0}$ is nonatomic,
this implies that there exists a Borel isomorphism $\psi : \X \to [0,1]$ such that,
for every Borel subset $B$ of $[0,1]$, $\pi_{0}(\psi^{-1}(B)) = \lambda(B)$ \citep*[see e.g.,][Theorem 3.4.23]{srivastava:98}.

For each $k \in \nats$ and each $i \in \ints$,
define $C_{k,i} = \left[ (i-1) 2^{-k}, i 2^{-k} \right)$,
let $B_{k} = \bigcup\limits_{i \in \ints} C_{k,2i}$,
and define $R_{k} = \psi^{-1}(B_{k} \cap [0,1])$.
Note that each $B_{k} \cap [0,1]$ is a Borel subset of $[0,1]$, so that measurability of $\psi$ implies $R_{k} \in \Borel$;
furthermore, $\pi_{0}(R_{k}) = \pi_{0}(\psi^{-1}(B_{k} \cap [0,1])) = \lambda(B_{k} \cap [0,1]) = 1/2$, as required.

Now fix any set $A \in \Borel$, and let $B \subseteq [0,1]$ be the Borel subset of $[0,1]$ with $A = \psi^{-1}(B)$ (which exists by the bimeasurability property of $\psi$).
Since $\lambda$ is a \emph{regular} measure \citep*[e.g.,][Proposition 1.4.1]{cohn:80},
for any $\eps > 0$, there exists an \emph{open} set $U_{\eps}$ with $B \subseteq U_{\eps} \subseteq \reals$ such that $\lambda(U_{\eps} \setminus B) < \eps$.
As any open subset of $\reals$ is a union of countably many pairwise-disjoint open intervals \citep*[e.g.,][Section 6, Theorem 6]{kolmogorov:75},
we let $(a_{1},b_{1}),(a_{2},b_{2}),\ldots$ be a sequence of disjoint open intervals ($a_{i} \in [-\infty,\infty)$, $b_{i} \in (-\infty,\infty]$) 
with $U_{\eps} = \bigcup\limits_{i=1}^{\infty} (a_{i},b_{i})$;
for notational simplicity, we suppose this sequence is infinite, which can always be achieved by adding an infinite number of empty 
intervals $(a_{i},b_{i})$ with $a_{i} = b_{i} \in \reals$.
Since $U_{\eps} \setminus \bigcup\limits_{i=1}^{j} (a_{i},b_{i}) \downarrow \emptyset$ as $j \to \infty$, 
and since $\lambda(U_{\eps}) = \lambda(U_{\eps} \setminus B) + \lambda(B) < \eps + 1 < \infty$, 
continuity of finite measures implies 
$\lim\limits_{j \to \infty}\lambda\!\left(U_{\eps} \setminus \bigcup\limits_{i=1}^{j} (a_{i},b_{i})\right) = 0$ \citep*[e.g.,][Theorem A.19]{schervish:95}.
In particular, for any $\delta > 0$, $\exists j_{\delta} \in \nats$ such that $\lambda\!\left( U_{\eps} \setminus \bigcup\limits_{i=1}^{j_{\delta}} (a_{i},b_{i}) \right) < \delta/2$.
Let $k_{\delta} = \left\lceil \log_{2}\left( \frac{4j_{\delta}}{\delta} \right) \right\rceil$.
Since $\lambda(U_{\eps}) < \infty$, we know that every $i$ has $a_{i} > -\infty$ and $b_{i} < \infty$.
Also, letting $\bar{a}_{i} = \min\{ t 2^{-k_{\delta}} : a_{i} < t 2^{-k_{\delta}}, t \in \ints \}$
and $\bar{b}_{i} = \max\{ t 2^{-k_{\delta}} : b_{i} > t 2^{-k_{\delta}}, t \in \ints \}$,
we have that 
\begin{equation*}
\lambda\left( (a_{i},b_{i}) \setminus \bigcup\left\{ C_{k_{\delta},t} : C_{k_{\delta},t} \subseteq (a_{i},b_{i}), t \in \ints \right\} \right)
\leq |\bar{a}_{i} - a_{i}| + |b_{i} - \bar{b}_{i}|
\leq 2 \cdot 2^{-k_{\delta}}
\leq \frac{\delta}{2 j_{\delta}}.
\end{equation*}
Thus,
\begin{align}
& \lambda\!\left( U_{\eps} \setminus \bigcup\left\{ C_{k_{\delta},t} : C_{k_{\delta},t} \subseteq U_{\eps}, t \in \ints \right\} \right) \notag
\\ & \leq \lambda\!\left( U_{\eps} \setminus \bigcup_{i=1}^{j_{\delta}} (a_{i},b_{i}) \right) + \lambda\!\left( \bigcup_{i=1}^{j_{\delta}} (a_{i},b_{i}) \setminus \bigcup\left\{ C_{k_{\delta},t} : C_{k_{\delta},t} \subseteq U_{\eps}, t \in \ints \right\} \right) \notag
\\ & < \delta/2 + \sum_{i=1}^{j_{\delta}} \lambda\!\left( (a_{i},b_{i}) \setminus \bigcup\left\{ C_{k_{\delta},t} : C_{k_{\delta},t} \subseteq U_{\eps}, t \in \ints \right\} \right) \notag
\\ & \leq \delta/2 + \sum_{i=1}^{j_{\delta}} \lambda\!\left( (a_{i},b_{i}) \setminus \bigcup\left\{ C_{k_{\delta},t} : C_{k_{\delta},t} \subseteq (a_{i},b_{i}), t \in \ints \right\} \right)
\leq \delta/2 + \sum_{i=1}^{j_{\delta}} \frac{\delta}{2j_{\delta}}
= \delta. \label{eqn:kdelta-Ueps-approx}
\end{align}

Now note that, for every $k > k_{\delta}$ and $i \in \ints$, 
each $j \in \ints$ has either $C_{k,j} \subseteq C_{k_{\delta},i}$ or $C_{k,j} \cap C_{k_{\delta},i} = \emptyset$,
and moreover each $j$ has $C_{k,2j} \subseteq C_{k_{\delta},i}$ if and only if $C_{k,2j-1} \subseteq C_{k_{\delta},i}$ 
(the smallest $j$ with $C_{k,j} \subseteq C_{k_{\delta},i}$ has $(j-1)2^{-k} = (i-1)2^{-k_{\eps}}$, which 
implies $j$ is an \emph{odd} number because $k > k_{\eps}$; similarly, the largest $j$ with $C_{k,j} \subseteq C_{k_{\delta},i}$ 
has $j 2^{-k} = i 2^{-k_{\eps}}$ and is therefore \emph{even}), 
so that 
\begin{equation*}
\lambda( B_{k} \cap C_{k_{\delta},i} )
= \lambda\!\left( \bigcup \{ C_{k,2j} : C_{k,2j} \subseteq C_{k_{\delta},i}, j \in \ints \} \right) 
= (1/2)\lambda(C_{k_{\delta},i}),
\end{equation*}
and hence (by disjointness of the $C_{k_{\delta},i}$ sets) 
\begin{align*}
& \lambda\!\left( B_{k} \cap \bigcup\left\{ C_{k_{\delta},i} : C_{k_{\delta},i} \subseteq U_{\eps}, i \in \ints \right\} \right)
= \sum_{i \in \ints : C_{k_{\delta},i} \subseteq U_{\eps}} \lambda( B_{k} \cap C_{k_{\delta},i} )
\\ & = \sum_{i \in \ints : C_{k_{\delta},i} \subseteq U_{\eps}} (1/2) \lambda( C_{k_{\delta},i} )
= (1/2) \lambda\!\left( \bigcup\left\{ C_{k_{\delta},i} : C_{k_{\delta},i} \subseteq U_{\eps}, i \in \ints \right\} \right).
\end{align*}
Therefore $\forall k > k_{\delta}$, 
\begin{align}
& \lambda( U_{\eps} \cap B_{k} ) \notag 
\\ & = \lambda\!\left( B_{k} \cap \bigcup\left\{ C_{k_{\delta},i} : C_{k_{\delta},i} \subseteq U_{\eps}, i \in \ints \right\} \right)
+ \lambda\!\left( B_{k} \cap  U_{\eps} \setminus \bigcup\left\{ C_{k_{\delta},i} : C_{k_{\delta},i} \subseteq U_{\eps}, i \in \ints \right\} \right) \notag
\\ & = (1/2) \lambda\!\left( \bigcup\left\{ C_{k_{\delta},i} : C_{k_{\delta},i} \subseteq U_{\eps}, i \in \ints \right\} \right)
+ \lambda\!\left( B_{k} \cap  U_{\eps} \setminus \bigcup\left\{ C_{k_{\delta},i} : C_{k_{\delta},i} \subseteq U_{\eps}, i \in \ints \right\} \right). \label{eqn:Rk-Ueps-decomposition}
\end{align}
The first term in \eqref{eqn:Rk-Ueps-decomposition} equals 
$(1/2) \left( \lambda(U_{\eps}) - \lambda\!\left( U_{\eps} \setminus \bigcup \left\{ C_{k_{\delta},i} : C_{k_{\delta},i} \subseteq U_{\eps}, i \in \ints \right\} \right) \right)$,
which by \eqref{eqn:kdelta-Ueps-approx} is greater than $(1/2)\lambda(U_{\eps}) - \delta/2$.
Furthermore, the second term in \eqref{eqn:Rk-Ueps-decomposition}
is no smaller than $0$, and no greater than 
$\lambda\!\left( U_{\eps} \setminus \bigcup\left\{ C_{k_{\delta},i} : C_{k_{\delta},i} \subseteq U_{\eps}, i \in \ints \right\} \right)$.
Thus, 
\begin{align*}
& (1/2)\lambda(U_{\eps}) - \delta/2 
< \lambda( U_{\eps} \cap B_{k} ) 
\\ & \leq (1/2)\!\left( \lambda(U_{\eps}) \!-\! \lambda\!\left( U_{\eps} \!\setminus \bigcup\!\left\{ C_{k_{\delta},i} : C_{k_{\delta},i} \!\subseteq\! U_{\eps}, i \!\in\! \ints \right\} \right) \right)
+ \lambda\!\left( U_{\eps} \!\setminus \bigcup\!\left\{ C_{k_{\delta},i} : C_{k_{\delta},i} \!\subseteq\! U_{\eps}, i \!\in\! \ints \right\} \right) 
\\ & = (1/2)\left( \lambda(U_{\eps}) + \lambda\!\left( U_{\eps} \setminus \bigcup\left\{ C_{k_{\delta},i} : C_{k_{\delta},i} \subseteq U_{\eps}, i \in \ints \right\} \right) \right)
< (1/2) \lambda(U_{\eps}) + \delta/2, 
\end{align*}
where this last inequality is by \eqref{eqn:kdelta-Ueps-approx}.

Since this holds for every $k > k_{\delta}$, and $k_{\delta}$ is finite for every $\delta \in (0,1)$, we have $\forall \delta \in (0,1)$, 
\begin{equation*}
(1/2)\lambda(U_{\eps}) - \delta/2  \leq \liminf_{k \to \infty} \lambda( U_{\eps} \cap B_{k} )  \leq \limsup_{k \to \infty} \lambda( U_{\eps} \cap B_{k} ) \leq (1/2) \lambda(U_{\eps}) + \delta/2,
\end{equation*}
and taking the limit as $\delta \to 0$ implies
\begin{equation*}
\lim_{k \to \infty} \lambda( U_{\eps} \cap B_{k} ) = (1/2) \lambda(U_{\eps}).
\end{equation*}
This further implies that
\begin{equation*}
\limsup_{k \to \infty} \lambda( B \cap B_{k} ) 
\leq \lim_{k \to \infty} \lambda( U_{\eps} \cap B_{k} )
= (1/2) \lambda(U_{\eps})
< (1/2) \lambda(B) + \eps/2,
\end{equation*}
and
\begin{align*}
\liminf_{k \to \infty} \lambda( B \cap B_{k} )
& \geq \lim_{k \to \infty} \lambda( U_{\eps} \cap B_{k} ) - \lambda( U_{\eps} \setminus B )
= (1/2) \lambda(U_{\eps}) - \lambda( U_{\eps} \setminus B )
\\ & = (1/2) \lambda(B) - (1/2) \lambda( U_{\eps} \setminus B )
> (1/2) \lambda(B) - \eps/2.
\end{align*}
Since these inequalities hold for every $\eps > 0$, 
taking the limit as $\eps \to 0$ reveals that 
\begin{equation*}
\lim\limits_{k \to \infty} \lambda(B \cap B_{k}) = (1/2) \lambda(B).
\end{equation*}
Furthermore, since $\psi^{-1}(B) \cap \psi^{-1}(B_{k} \cap [0,1]) = \psi^{-1}( B \cap B_{k} \cap [0,1]) = \psi^{-1}( B \cap B_{k} )$ for every $k \in \nats$, this implies that 
\begin{align*}
\lim_{k \to \infty} \pi_{0}(A \cap R_{k}) 
& = \lim_{k \to \infty} \pi_{0}( \psi^{-1}(B) \cap \psi^{-1}(B_{k} \cap [0,1]) ) 
= \lim_{k \to \infty} \pi_{0}( \psi^{-1}(B \cap B_{k}) )
\\ & = \lim_{k \to \infty} \lambda(B \cap B_{k})
= (1/2) \lambda(B)
= (1/2) \pi_{0}( \psi^{-1}(B) )
= (1/2) \pi_{0}(A).
\end{align*}
Since this argument holds $\forall A \in \Borel$, this completes the proof.
\end{proof}

We are now ready for the proof of 
Theorem~\ref{thm:no-optimistic-inductive}.
The proof is partly inspired by that of a related (but somewhat different) result of \citet*{nobel:99}, based on a technique of \citet*{adams:98}.
Specifically, \citet*{nobel:99} proves that there is no learning rule converging to the stationary regression function
for all \emph{joint} processes $(\ProcX,\ProcY)$ that are stationary and ergodic.
In contrast, we are interested in learning under a fixed target function $\target$,
and as such the construction of \citet*{nobel:99} needs to be modified for our purposes. 
However, the proof below does preserve the essential elements of the cutting and stacking argument of \citet*{adams:98},
though generalized to suit our abstract setting.  While the processes $\ProcX$ we construct do not have the
property of stationarity from the original proof of \citet*{nobel:99}, they  
\emph{do} have convergent relative frequencies ($\CRF$)
and are ergodic (indeed, they are \emph{product} processes). 
Thus, this establishes the stronger fact that (when $\X$ is uncountable) there is no inductive learning rule 
that is strongly universally consistent for every ergodic $\ProcX \in \CRF$ (as stated in Corollary~\ref{cor:no-lln-consistent-inductive} below);
this suffices to establish Theorem~\ref{thm:no-optimistic-inductive} 
since
Theorems~\ref{thm:crf-implies-kc} and \ref{thm:main} imply $\CRF \subseteq \SUIL$.\\

\begin{proof}[of Theorem~\ref{thm:no-optimistic-inductive}]
Fix any inductive learning rule $f_{n}$.
We begin by constructing the process $\ProcX$.
Since $\X$ is uncountable, and $(\X,\T)$ is a Polish space,
there exists a nonatomic probability measure $\pi_{0}$ on $\X$ (with respect to $\Borel$) \citep*[see][Chapter 2, Theorem 8.1]{parthasarathy:67}. 
Furthermore, fixing any such nonatomic $\pi_{0}$, Lemma~\ref{lem:non-atomic-half-covering} implies there exists a sequence 
$\{R_{k}\}_{k=1}^{\infty}$ in $\Borel$ such that, $\forall k \in \nats$, $\pi_{0}(R_{k}) = 1/2$, and $\forall A \in \Borel$, 
$\lim\limits_{k \to \infty} \pi_{0}(A \cap R_{k}) = (1/2) \pi_{0}(A)$.
Also define $R_{0} = \emptyset$.
Define random variables $U_{k,j}$ (for all $k,j \in \nats$), $V_{k,j}$ (for all $k,j \in \nats$),  and $W_{j}$ (for all $j \in \nats$), 
all mutually independent (and independent from $\{f_{n}\}_{n\in\nats}$), with distributions specified as follows.
For each $k,j \in \nats$, $U_{k,j}$ has distribution $\pi_{0}(\cdot | \X \setminus R_{k})$, while $V_{k,j}$ has distribution $\pi_{0}(\cdot | R_{k})$.
For each $j \in \nats$, $W_{j}$ has distribution $\pi_{0}$.
Let $\mathbf{U} = \{U_{k,j}\}_{k,j \in \nats}$, $\mathbf{V} = \{V_{k,j}\}_{k,j \in \nats}$, $\mathbf{W} = \{W_{j}\}_{j \in \nats}$.

Fix any $y_{0},y_{1} \in \Y$ with $\loss(y_{0},y_{1}) > 0$, 
and define $\Delta_{\zo} = \loss(y_{0},y_{1})/(2\triconst)$.
Importantly, the near-metric properties of $\loss$ imply that 
any $y \in \Y$ with $\loss(y,y_{1}) < \Delta_{\zo}$ necessarily has 
$\loss(y,y_{0}) 
> \loss(y,y_{0}) + \loss(y,y_{1}) - \Delta_{\zo}
\geq \loss(y_{0},y_{1})/\triconst - \Delta_{\zo} 
= \Delta_{\zo}$,
where the second inequality is due to the relaxed triangle inequality and symmetry.
Thus, it is not possible to simultaneously achieve $\loss(y,y_{1}) < \Delta_{\zo}$ and $\loss(y,y_{0}) \leq \Delta_{\zo}$.

For any array $\mathbf{v} = \{v_{k,j}\}_{k,j \in \nats}$, 
and any $K \in \nats$, define $\mathbf{v}_{< K} = \{v_{k,j}\}_{k,j \in \nats, k < K}$,
and define $\mathbf{v}_{K} = \{v_{K,j}\}_{j \in \nats}$.
Then, for any arrays $\mathbf{u} = \{u_{k,j}\}_{k,j \in \nats}$ and $\mathbf{v} = \{v_{k,j}\}_{k,j \in \nats}$ in $\X$,
any sequence $\mathbf{w} = \{w_{j}\}_{j \in \nats}$ in $\X$,
and any $K \in \nats$, define
\begin{equation*}
\target_{K}(x; \mathbf{u}_{< K}, \mathbf{v}_{< K},\mathbf{w}) = 
\begin{cases}
y_{0}, & \text{ if } x \in (\mathbf{v}_{<K} \cup R_{K}) \setminus (\mathbf{w} \cup \mathbf{u}_{<K}) \\
y_{1}, & \text{ otherwise}
\end{cases}
\end{equation*}
and
\begin{equation*}
\target_{0}(x; \mathbf{v}) = 
\begin{cases}
y_{0}, & \text{ if } x \in \mathbf{v} \\
y_{1}, & \text{ otherwise}
\end{cases},
\end{equation*}
where, for notational simplicity, in these definitions we treat
$\mathbf{v}_{<K}$, $\mathbf{w}$, $\mathbf{u}_{<K}$, $\mathbf{v}$ 
as the \emph{sets} of the distinct values in the respective arrays.
Note that the above functions are measurable, since each $R_{K}$ is measurable, 
and $\mathbf{v}$, $\mathbf{v}_{<K}$, $\mathbf{w}$, $\mathbf{u}_{<K}$ are all countable 
and hence measurable (recalling that singleton sets $\{x\}$ are closed, hence measurable).

Now, for any $\mathbf{u}$, $\mathbf{v}$, $\mathbf{w}$ as above, inductively define
values  
$X_{i}^{(k)}(\mathbf{u}_{< k},\!\mathbf{u}_{k},\!\mathbf{v}_{< k},\!\mathbf{v}_{k},\!\mathbf{w})$
as follows.
Let $n_{0} = 0$.  For this inductive definition, suppose that for some $k \in \nats$
the value $n_{k-1} \in \nats$ and the values $\{X_{i}^{(k-1)}(\mathbf{u}_{< k-1},\mathbf{u}_{k-1},\mathbf{v}_{< k-1},\mathbf{v}_{k-1},\mathbf{w}) : i \in \nats, i \leq n_{k-1}\}$ are already defined 
(taking this to be trivially satisfied in the case $k=1$, wherein this is an empty sequence).
For each $i \in \nats$ with $i \leq n_{k-1}$, define $\tilde{X}_{i}^{(k)}(\mathbf{u}_{< k},\mathbf{u}_{k},\mathbf{v}_{< k},\mathbf{v}_{k},\mathbf{w})$ and
$X_{i}^{(k)}(\mathbf{u}_{< k},\mathbf{u}_{k},\mathbf{v}_{< k},\mathbf{v}_{k},\mathbf{w})$ both equal to 
$X_{i}^{(k-1)}(\mathbf{u}_{<k-1},\mathbf{u}_{k},\mathbf{v}_{<k-1},\mathbf{v}_{k-1},\mathbf{w})$.
Then, for each $i \in \nats$, define $\tilde{X}_{n_{k-1} + k (i-1) + 1}^{(k)}(\mathbf{u}_{<k},\mathbf{u}_{k},\mathbf{v}_{<k},\mathbf{v}_{k},\mathbf{w}) = v_{k,n_{k-1} + k (i-1) + 1}$, 
and for each $j \in \nats$ with $2 \leq j \leq k$, define $\tilde{X}_{n_{k-1} + k(i-1) + j}^{(k)}(\mathbf{u}_{<k},\mathbf{u}_{k},\mathbf{v}_{<k},\mathbf{v}_{k},\mathbf{w}) = u_{k,n_{k-1} + k(i-1) + j}$.
To simplify notation, for each $i \in \nats$, abbreviate $\hat{X}_{i}^{(k)} = \tilde{X}_{i}^{(k)}(\mathbf{U}_{<k},\mathbf{U}_{k},\mathbf{V}_{<k},\mathbf{V}_{k},\mathbf{W})$.
If $\exists n \in \nats$ with $n > n_{k-1}$ such that 
\begin{equation}
\label{eqn:no-suil-u2-prob}
\P\!\left( \pi_{0}\!\left( \left\{ x : \loss\!\left(f_{n}\!\left(\hat{X}_{1:n}^{(k)}, \target_{k}\!\left(\hat{X}_{1:n}^{(k)};\mathbf{U}_{\!<k},\mathbf{V}_{\!<k},\mathbf{W}\right), x \right),y_{0}\right) \geq \Delta_{\zo} \right\} \right) \geq 3/4 \right) < 2^{-k},
\end{equation}
then fix the minimum such $n$, 
and $\forall i \in \{n_{k-1}+1,\ldots,n\}$ define $X_{i}^{(k)}(\mathbf{u}_{<k},\mathbf{u}_{k},\mathbf{v}_{<k},\mathbf{v}_{k},\mathbf{w}) = \tilde{X}_{i}^{(k)}(\mathbf{u}_{<k},\mathbf{u}_{k},\mathbf{v}_{<k},\mathbf{v}_{k},\mathbf{w})$.
Furthermore, for each $i \in \nats$ with $n+1 \leq i \leq n^{2}$,
define $X_{i}^{(k)}(\mathbf{u}_{<k},\mathbf{u}_{k},\mathbf{v}_{<k},\mathbf{v}_{k},\mathbf{w}) \!=\! w_{i}$.
Finally, define $n_{k} \!=\! n^{2}$.
Otherwise, if no such $n$ satisfies \eqref{eqn:no-suil-u2-prob}, then $\forall i \in \nats$ with $i > n_{k-1}$, define 
$X_{i}^{(k)}(\mathbf{u}_{<k},\mathbf{u}_{k},\mathbf{v}_{<k},\mathbf{v}_{k},\mathbf{w}) = \tilde{X}_{i}^{(k)}(\mathbf{u}_{<k},\mathbf{u}_{k},\mathbf{v}_{<k},\mathbf{v}_{k},\mathbf{w})$,
in which case the inductive definition is complete (upon reaching the smallest value of $k$ for which no such $n$ exists).
Note that, since we do not condition on any variables in \eqref{eqn:no-suil-u2-prob}, the values $n_{k}$ are \emph{not} random.

Now we consider two cases.  First, suppose there is a maximum value $k^{*}$ of $k \in \nats$ for which $n_{k-1}$ is defined.
In this case, $\nexists n \in \nats$ with $n > n_{k^{*}-1}$ satisfying \eqref{eqn:no-suil-u2-prob} with $k = k^{*}$,
and furthermore 
$X_{i}^{(k^{*})}(\mathbf{u}_{<k^{*}},\mathbf{u}_{k^{*}},\mathbf{v}_{<k^{*}},\mathbf{v}_{k^{*}},\mathbf{w}) = \tilde{X}_{i}^{(k^{*})}(\mathbf{u}_{<k^{*}},\mathbf{u}_{k^{*}},\mathbf{v}_{<k^{*}},\mathbf{v}_{k^{*}},\mathbf{w})$ for every $i \in \nats$, 
and every $\mathbf{u}$, $\mathbf{v}$, and $\mathbf{w}$.
Next note that, by the law of total probability and basic limit theorems for probabilities 
(e.g., based on Fatou's lemma),
defining $\mathbf{Q}_{k^{*}} = (\mathbf{U}_{<k^{*}},\mathbf{V}_{<k^{*}},\mathbf{W})$,
\begin{align*}
& \E\!\left[ \P\!\left( \limsup_{n \to \infty} \!\left\{ \pi_{0}\!\left( \left\{ x \!:\! \loss\!\left(f_{n}\!\left(\hat{X}_{1:n}^{(k^{*})}\!\!, \target_{k^{*}}\!\!\left(\hat{X}_{1:n}^{(k^{*})};\mathbf{Q}_{k^{*}} \!\right)\!, x \right)\!,y_{0}\right) \!\!\geq\! \Delta_{\zo} \right\} \right) \!\!\geq\! 3/4 \right\} \Big| \mathbf{Q}_{k^{*}} \right) \right]
\\ & = \P\!\left( \limsup_{n \to \infty} \left\{ \pi_{0}\!\left( \left\{ x : \loss\!\left(f_{n}\!\left(\hat{X}_{1:n}^{(k^{*})}, \target_{k^{*}}\!\left(\hat{X}_{1:n}^{(k^{*})};\mathbf{Q}_{k^{*}}\right), x \right),y_{0}\right) \geq \Delta_{\zo} \right\} \right) \geq 3/4 \right\} \right)
\\ & \geq \limsup_{n \to \infty} \P\!\left( \pi_{0}\!\left( \left\{ x : \loss\!\left(f_{n}\!\left(\hat{X}_{1:n}^{(k^{*})}, \target_{k^{*}}\!\left(\hat{X}_{1:n}^{(k^{*})};\mathbf{Q}_{k^{*}} \right), x \right),y_{0}\right) \geq \Delta_{\zo} \right\} \right) \geq 3/4 \right).
\end{align*}
The negation of \eqref{eqn:no-suil-u2-prob} implies this last expression is at least $2^{-k^{*}}$ (noting that the negation of \eqref{eqn:no-suil-u2-prob} holds for \emph{every} $n > n_{k^{*}-1}$ in the present case).
In particular, since the $U_{k,j}$,$V_{k^{\prime},j^{\prime}}$, and $W_{j^{\prime\prime}}$ variables are all independent,
this implies $\exists \mathbf{u},\mathbf{v},\mathbf{w}$ such that,
taking $X_{i} = X_{i}^{(k^{*})}(\mathbf{u}_{<k^{*}},\mathbf{U}_{k^{*}},\mathbf{v}_{<k^{*}},\mathbf{V}_{k^{*}},\mathbf{w})$ for every $i \in \nats$,
and $\target(\cdot) = \target_{k^{*}}(\cdot;\mathbf{u}_{<k^{*}},\mathbf{v}_{<k^{*}},\mathbf{w})$,
we have 
\begin{equation*}
\P\!\left( \limsup_{n \to \infty} \left\{ \pi_{0}\!\left( \left\{ x : \loss\!\left(f_{n}\!\left(X_{1:n}, \target(X_{1:n}), x \right),y_{0}\right) \geq \Delta_{\zo} \right\} \right) \geq 3/4 \right\} \right) \geq 2^{-k^{*}}.
\end{equation*}
Define the event
\begin{equation*}
E^{\prime} = \left\{ \limsup_{n\to\infty} \pi_{0}\!\left( \left\{ x \in R_{k^{*}} : \loss\!\left(f_{n}\!\left(X_{1:n}, \target(X_{1:n}), x\right),y_{0}\right) \geq \Delta_{\zo} \right\} \right) \geq 1/4 \right\}.
\end{equation*}
Since $\pi_{0}(R_{k^{*}}) = 1/2$, 
we have that
\begin{align*}
& \limsup_{n \to \infty} \left\{ \pi_{0}\!\left( \left\{ x : \loss\!\left(f_{n}\!\left(X_{1:n}, \target(X_{1:n}), x\right),y_{0}\right) \geq \Delta_{\zo} \right\} \right) \geq 3/4 \right\}
\\ & \subseteq \limsup_{n \to \infty} \left\{ \pi_{0}\!\left( \left\{ x \in R_{k^{*}} : \loss\!\left(f_{n}\!\left(X_{1:n}, \target(X_{1:n}), x\right),y_{0}\right) \geq \Delta_{\zo} \right\} \right) \geq 1/4 \right\} \subseteq E^{\prime},
\end{align*}
so that $E^{\prime}$ has probability at least $2^{-k^{*}}$.
Also let $E$ denote the event that $\forall k,j \in \nats$, $V_{k,j} \notin \{w_{j^{\prime}} : j^{\prime} \in \nats\} \cup \{u_{k^{\prime},j^{\prime}} : k^{\prime},j^{\prime} \in \nats\}$;
note that, since $\pi_{0}$ is nonatomic, and hence so is each $\pi_{0}(\cdot|R_{k})$ (since $\pi_{0}(R_{k}) > 0$), $E$ has probability one.

Define $t_{i} = n_{k^{*}-1} + k^{*}(i-1) + 1$ for each $i \in \nats$, and let $I_{k^{*}} = \{ t_{i} : i \in \nats \}$.
Note that, since every $V_{k^{*},j}$ is in $R_{k^{*}}$ and every $t \in I_{k^{*}}$ has $X_{t} = V_{k^{*},t}$ (by definition),
on the event $E$, every $t \in I_{k^{*}}$ has $\target(X_{t}) = y_{0}$ (by definition of $\target$).
Therefore, on the event $E$, every $n \in \nats$ with $n > n_{k^{*}-1}$ has
\begin{align*}
\hat{\L}_{\ProcX}(f_{n},\target;n)
 & \geq \limsup_{m \to \infty} \frac{1}{m} \sum_{t=n+1}^{n+m} \ind_{I_{k^{*}}}(t) \loss\!\left(f_{n}\!\left(X_{1:n},\target(X_{1:n}),X_{t}\right),y_{0}\right).
\end{align*}
Since $k^{*} \sum\limits_{t=n+1}^{n+m} \ind_{I_{k^{*}}}(t) > m - 2 k^{*}$, letting $i_{n} = \max\{ i \in \nats : t_{i} \leq n \}$,
the right hand side above is at least as large as
\begin{align*}
& \limsup_{s \to \infty} \frac{1}{k^{*} s + 2 k^{*}} \sum_{j=1}^{s} \loss\!\left(f_{n}\!\left(X_{1:n},\target(X_{1:n}),X_{t_{i_{n} + j}}\right),y_{0}\right)
\\ & = \limsup_{s \to \infty} \frac{1}{k^{*} s} \sum_{j=1}^{s} \loss\!\left(f_{n}\!\left(X_{1:n},\target(X_{1:n}),X_{t_{i_{n}+j}}\right),y_{0}\right)
\\ & \geq \limsup_{s \to \infty} \frac{1}{k^{*} s} \sum_{j=1}^{s} \ind\!\left[\loss\!\left(f_{n}\!\left(X_{1:n},\target(X_{1:n}),X_{t_{i_{n}+j}}\right),y_{0}\right) \geq \Delta_{\zo}\right] \Delta_{\zo}.
\end{align*}
Furthermore, the subsequence $\{ X_{t_{i_{n}+j}} \}_{j=1}^{\infty}$ is a sequence of independent random variables with distribution $\pi_{0}(\cdot|R_{k^{*}})$ (namely, a subsequence of $\mathbf{V}_{k^{*}}$),
also independent from the rest of the sequence $\{ X_{t} : t \notin \{ t_{i_{n}+j} : j \in \nats \} \}$ and $f_{n}$.
This implies that 
\begin{equation*}
\left\{ \ind\!\left[\loss\!\left(f_{n}\!\left(X_{1:n},\target(X_{1:n}),X_{t_{i_{n}+j}}\right),y_{0}\right) \geq \Delta_{\zo} \right] \right\}_{j=1}^{\infty}
\end{equation*}
is a sequence of 
conditionally i.i.d.\ ${\rm Bernoulli}$ random variables (given $X_{1:n}$ and $f_{n}$).
Thus, $\forall n \in \nats$ with $n > n_{k^{*}-1}$, by the strong law of large numbers (applied under the conditional distribution given $X_{1:n}$ and $f_{n}$)
and the law of total probability, there is an event $E_{n}^{\prime\prime}$ of probability one such that, on $E \cap E_{n}^{\prime\prime}$, 
\begin{align*}
& \limsup_{s \to \infty} \frac{1}{k^{*} s} \sum_{j=1}^{s} \ind\!\left[\loss\!\left(f_{n}\!\left(X_{1:n},\target(X_{1:n}),X_{t_{i_{n}+j}}\right),y_{0}\right) \geq \Delta_{\zo}\right] \Delta_{\zo}
\\ & = \frac{\Delta_{\zo}}{k^{*}} \pi_{0}\!\left( \left\{ x : \loss\!\left( f_{n}\!\left(X_{1:n},\target(X_{1:n}),x \right),y_{0} \right) \geq \Delta_{\zo} \right\} \Big| R_{k^{*}} \right)
\\ & = \frac{2\Delta_{\zo}}{k^{*}} \pi_{0}\!\left( \left\{ x \in R_{k^{*}} : \loss\!\left( f_{n}\!\left(X_{1:n},\target(X_{1:n}),x \right),y_{0} \right) \geq \Delta_{\zo} \right\} \right).
\end{align*}
Altogether, we have that on the event $E \cap E^{\prime} \cap \bigcap\limits_{n > n_{k^{*}-1}} E_{n}^{\prime\prime}$,
\begin{align*}
& \limsup_{n \to \infty} \hat{\L}_{\ProcX}(f_{n},\target;n) 
\\ & \geq \frac{2\Delta_{\zo}}{k^{*}}  \limsup_{n \to \infty} \pi_{0}\!\left( \left\{ x \in R_{k^{*}} : \loss\!\left( f_{n}\!\left(X_{1:n},\target(X_{1:n}),x \right),y_{0} \right) \geq \Delta_{\zo} \right\} \right)
\geq \frac{\Delta_{\zo}}{2 k^{*}}. 
\end{align*}
Since $\frac{\Delta_{\zo}}{2 k^{*}} > 0$, and since $E \cap E^{\prime} \cap \bigcap\limits_{n > n_{k^{*}-1}} E_{n}^{\prime\prime}$ 
has probability at least $2^{-k^{*}} > 0$ (by the union bound), 
this implies that $f_{n}$ is not strongly universally consistent under the process $\ProcX$ defined here. 

To complete this first case, we argue that $\ProcX \in \SUIL$; in fact, we will show the stronger claim that $\ProcX \in \CRF$.
Note that for every $t > n_{k^{*}-1}$, if $t - n_{k^{*}-1} - 1$ is an integer multiple of $k^{*}$, then $t \in I_{k^{*}}$, 
in which case $X_{t} = V_{k^{*},t}$, and otherwise $X_{t} = U_{k^{*},t}$.
Thus, since all $X_{t}$ are independent, all $V_{k^{*},t}$ are identically distributed, and all $U_{k^{*},t}$ are identically distributed, 
we have that for any $n > n_{k^{*}-1}$, 
$\{X_{t}\}_{t=n}^{\infty}$ and $\{X_{t}\}_{t=n+k^{*}}^{\infty}$ have identical distributions.
Thus, the process $\{X_{t}\}_{t = n_{k^{*}-1}+1}^{\infty}$ is $k^{*}$-stationary \citep*[see][Section 5.10]{gray:09}, 
and hence also asymptotically mean stationary (recall the definition from Section~\ref{sec:examples}).  
Since $\{X_{t}\}_{t = n_{k^{*}-1}+1}^{\infty}$ differs from $\ProcX$ 
only by removing an initial finite segment, this immediately implies $\ProcX$ is also asymptotically mean stationary.
Thus, since (as discussed in Section~\ref{sec:examples} above) 
Theorem 8.1 of \citet*{gray:09} implies that every asymptotically mean stationary process has convergent relative frequencies,
and Theorem~\ref{thm:crf-implies-kc} of Section~\ref{sec:examples} establishes that $\CRF \subseteq \KC$, 
we have that $\ProcX \in \KC$, and since Theorem~\ref{thm:main} establishes that $\SUIL = \KC$, 
this implies $\ProcX \in \SUIL$.  Therefore, in this first case, we conclude that the inductive learning rule $f_{n}$ 
is not optimistically universal.

Next, let us examine the second case, wherein $n_{k}$ is defined for every $k \in \nats \cup \{0\}$, 
so that $\{n_{k}\}_{k=0}^{\infty}$ is an infinite increasing sequence of nonnegative integers.
In this case, for every $k \in \nats$, \eqref{eqn:no-suil-u2-prob} and the definition of $n_{k}$ imply that, 
defining $\mathbf{Q}_{k} = (\mathbf{U}_{<k},\mathbf{V}_{<k},\mathbf{W})$,
\begin{equation*}
\P\!\left( \pi_{0}\!\left( \left\{ x : \loss\!\left(f_{\sqrt{n_{k}}}\!\left(\hat{X}_{1:\sqrt{n_{k}}}^{(k)}, \target_{k}\!\left(\hat{X}_{1:\sqrt{n_{k}}}^{(k)};\mathbf{Q}_{k}\right), x \right),y_{0}\right) \geq \Delta_{\zo} \right\} \right) \geq 3/4 \right) < 2^{-k}.
\end{equation*}
By the monotone convergence theorem and linearity of expectations, combined with the law of total probability, this implies 
\begin{align*}
& \E\!\left[ \sum_{k=1}^{\infty} 
\P\!\left( \pi_{0}\!\left( \left\{ x \!: \loss\!\left(f_{\!\sqrt{n_{k}}}\!\left(\hat{X}_{1:\sqrt{n_{k}}}^{(k)}, \target_{k}\!\left(\hat{X}_{1:\sqrt{n_{k}}}^{(k)};\mathbf{Q}_{k}\right), x \right),y_{0}\right) \geq \Delta_{\zo} \right\} \right) \geq 3/4 \Big| \mathbf{V} \right) \right]
\\ & = \sum_{k=1}^{\infty} 
\P\!\left( \pi_{0}\!\left( \left\{ x \!: \loss\!\left(f_{\!\sqrt{n_{k}}}\!\left(\hat{X}_{1:\sqrt{n_{k}}}^{(k)}, \target_{k}\!\left(\hat{X}_{1:\sqrt{n_{k}}}^{(k)};\mathbf{Q}_{k}\right)\!, x \right)\!,y_{0}\right) \geq \Delta_{\zo} \right\} \right) \geq 3/4 \right)
< 1.
\end{align*}
In particular, this implies that with probability one,
\begin{equation*}
 \sum_{k=1}^{\infty} 
\P\!\left( \pi_{0}\!\left( \left\{ x \!:\! \loss\!\left(f_{\!\sqrt{n_{k}}}\!\left(\hat{X}_{1:\sqrt{n_{k}}}^{(k)}, \target_{k}\!\left(\hat{X}_{1:\sqrt{n_{k}}}^{(k)};\mathbf{Q}_{k}\right)\!, x \right)\!,y_{0}\right) \!\geq\! \Delta_{\zo} \right\} \right) \geq 3/4 \Big| \mathbf{V} \right) < \infty.
\end{equation*} 
Since $\mathbf{U}$, $\mathbf{W}$, and $\{f_{n}\}_{n\in\nats}$ are independent from $\mathbf{V}$, 
and since every $k,j \in \nats$ has $V_{k,j}$ with distribution $\pi_{0}(\cdot | R_{k})$ and hence $V_{k,j} \in R_{k}$,
this implies $\exists \mathbf{v}$
with $v_{k,j} \in R_{k}$ for every $k,j \in \nats$, 
such that, defining $X_{i} = X_{i}^{(k)}(\mathbf{U}_{<k},\mathbf{U}_{k},\mathbf{v}_{<k},\mathbf{v}_{k},\mathbf{W})$ for every $k \in \nats$ and $i \in \{n_{k-1}+1,\ldots,n_{k}\}$, 
\begin{equation*}
 \sum_{k=1}^{\infty} 
\P\!\left( \pi_{0}\!\left( \left\{ x \!:\! \loss\!\left(f_{\!\sqrt{n_{k}}}\!\left(X_{1:\sqrt{n_{k}}}, \target_{k}\!\left(X_{1:\sqrt{n_{k}}};\mathbf{U}_{\!<k},\!\mathbf{v}_{\!<k},\!\!\mathbf{W}\right)\!, x \right)\!,y_{0}\right) \!\geq\! \Delta_{\zo} \right\} \right) \!\geq\! 3/4 \right) \!<\! \infty.
\end{equation*}
The Borel-Cantelli Lemma then implies that there exists an event $H^{\prime}$ of probability one, on which $\exists k_{0} \in \nats$ such that, $\forall k \in \nats$ with $k > k_{0}$, 
\begin{equation*}
\pi_{0}\!\left( \left\{ x : \loss\!\left(f_{\sqrt{n_{k}}}\!\left(X_{1:\sqrt{n_{k}}}, \target_{k}\!\left(X_{1:\sqrt{n_{k}}};\mathbf{U}_{<k},\mathbf{v}_{<k},\mathbf{W}\right), x \right),y_{0}\right) \geq \Delta_{\zo} \right\} \right) < 3/4.
\end{equation*}

Next, let $H$ denote the event that $\{ W_{j} : j \in \nats \} \cap \{v_{k,j} : k,j \in \nats\} = \emptyset$ and $\{ U_{k,j} : k,j \in \nats \} \cap \{v_{k,j} : k,j \in \nats\} = \emptyset$.
Note that, since $\pi_{0}$ is nonatomic, and so is $\pi_{0}(\cdot | \X \setminus R_{k})$ for every $k \in \nats$, $H$ has probability one.
Furthermore, for every $k \in \nats$, by definition of $\target_{k}$, 
$\forall j \in \nats$, $\target_{k}(W_{j}; \mathbf{U}_{<k}, \mathbf{v}_{<k}, \mathbf{W}) = y_{1}$,
and $\forall k^{\prime},j \in \nats$ with $k^{\prime} < k$, $\target_{k}(U_{k^{\prime},j}; \mathbf{U}_{<k}, \mathbf{v}_{<k}, \mathbf{W}) = y_{1}$.
Also, for every $j \in \nats$, the distribution of $U_{k,j}$ is $\pi_{0}(\cdot | \X \setminus R_{k})$, and therefore we have $U_{k,j} \notin R_{k}$;
together with the definition of $\target_{k}$, this implies $\target_{k}(U_{k,j};\mathbf{U}_{<k},\mathbf{v}_{<k},\mathbf{W}) = y_{1}$ on the event $H$.
The definition of $\target_{k}$ further implies that, on $H$, for every $k^{\prime},k,j \in \nats$ with $k^{\prime} < k$, 
$\target_{k}(v_{k^{\prime},j};\mathbf{U}_{<k},\mathbf{v}_{<k},\mathbf{W}) = y_{0}$.  Also, since $v_{k,j} \in R_{k}$ for every $k,j \in \nats$,
on the event $H$, every $k,j \in \nats$ has $\target_{k}(v_{k,j};\mathbf{U}_{<k},\mathbf{v}_{<k},\mathbf{W}) = y_{0}$.
Furthermore, by definition of $\target_{0}$, every $k,j \in \nats$ has $\target_{0}(v_{k,j};\mathbf{v}) = y_{0}$, 
and on the event $H$, every $j \in \nats$ has $\target_{0}(W_{j};\mathbf{v}) = y_{1}$, and $\forall k,j \in \nats$, $\target_{0}(U_{k,j};\mathbf{v}) = y_{1}$.
Altogether we have that, on the event $H$, every $k^{\prime},k,j \in \nats$ with $k^{\prime} \leq k$ has $\target_{k}(v_{k^{\prime},j};\mathbf{U}_{<k},\mathbf{v}_{<k},\mathbf{W}) = \target_{0}(v_{k^{\prime},j};\mathbf{v})$,
$\target_{k}(U_{k^{\prime},j};\mathbf{U}_{<k},\mathbf{v}_{<k},\mathbf{W}) = \target_{0}(U_{k^{\prime},j};\mathbf{v})$, and $\target_{k}(W_{j};\mathbf{U}_{<k},\mathbf{v}_{<k},\mathbf{W}) = \target_{0}(W_{j};\mathbf{v})$.
In particular, note that for any $k \in \nats$, every $i \in \left\{1,\ldots,\sqrt{n_{k}}\right\}$ has $X_{i} \in \{ v_{k^{\prime},i} : k^{\prime} \leq k \} \cup \{ U_{k^{\prime},i} : k^{\prime} \leq k \} \cup \{ W_{i} \}$,
so that, on the event $H$, $\target_{k}(X_{i};\mathbf{U}_{<k},\mathbf{v}_{<k},\mathbf{W}) = \target_{0}(X_{i};\mathbf{v})$.
Thus, taking $\target(\cdot) = \target_{0}(\cdot;\mathbf{v})$, 
on the event $H \cap H^{\prime}$, $\forall k \in \nats$ with $k > k_{0}$, 
\begin{equation*}
\pi_{0}\!\left( \left\{ x : \loss\!\left(f_{\sqrt{n_{k}}}\!\left(X_{1:\sqrt{n_{k}}}, \target(X_{1:\sqrt{n_{k}}}), x \right),y_{0}\right) \geq \Delta_{\zo} \right\} \right) < 3/4.
\end{equation*}

As mentioned above, the near-metric properties of $\loss$ imply that 
any $y \in \Y$ with $\loss(y,y_{1}) < \Delta_{\zo}$ necessarily has 
$\loss(y,y_{0}) > \Delta_{\zo}$.
Therefore, on $H \cap H^{\prime}$, $\forall k \in \nats$ with $k > k_{0}$, 
\begin{align}
& \pi_{0}\!\left( \left\{ x : \loss\!\left(f_{\sqrt{n_{k}}}\!\left(X_{1:\sqrt{n_{k}}}, \target(X_{1:\sqrt{n_{k}}}), x \right),y_{1}\right) \geq \Delta_{\zo} \right\} \right) \notag
\\ & = 1 - \pi_{0}\!\left( \left\{ x : \loss\!\left(f_{\sqrt{n_{k}}}\!\left(X_{1:\sqrt{n_{k}}}, \target(X_{1:\sqrt{n_{k}}}), x \right),y_{1}\right) < \Delta_{\zo} \right\} \right) \notag
\\ & \geq 1 - \pi_{0}\!\left( \left\{ x : \loss\!\left(f_{\sqrt{n_{k}}}\!\left(X_{1:\sqrt{n_{k}}}, \target(X_{1:\sqrt{n_{k}}}), x \right),y_{0}\right) > \Delta_{\zo} \right\} \right) > 1/4. \label{eqn:no-suil-u2-pi0-lb}
\end{align}

Now fix any $k,k^{\prime} \in \nats$ with $k^{\prime} \geq k$ and $k^{\prime} > 1$ (which implies $n_{k^{\prime}} > \sqrt{n_{k^{\prime}}}$),
and note that every $t \in \{\sqrt{n_{k^{\prime}}}+1,\ldots,n_{k^{\prime}}\}$ has $X_{t} = W_{t}$;
on $H$, this implies $\target(X_{t}) = y_{1}$.  Thus, on the event $H$, 
\begin{align*}
& \frac{1}{n_{k^{\prime}} - \sqrt{n_{k^{\prime}}}} \sum_{t=\sqrt{n_{k^{\prime}}}+1}^{n_{k^{\prime}}} \loss\!\left( f_{\sqrt{n_{k}}}\!\left( X_{1:\sqrt{n_{k}}}, \target\!\left( X_{1:\sqrt{n_{k}}} \right), X_{t} \right), \target(X_{t}) \right)
\\ & = \frac{1}{n_{k^{\prime}} - \sqrt{n_{k^{\prime}}}} \sum_{t=\sqrt{n_{k^{\prime}}}+1}^{n_{k^{\prime}}} \loss\!\left( f_{\sqrt{n_{k}}}\!\left( X_{1:\sqrt{n_{k}}}, \target\!\left( X_{1:\sqrt{n_{k}}} \right), X_{t} \right), y_{1} \right)
\\ & \geq \frac{1}{n_{k^{\prime}} - \sqrt{n_{k^{\prime}}}} \sum_{t=\sqrt{n_{k^{\prime}}}+1}^{n_{k^{\prime}}} \ind\!\left[ \loss\!\left( f_{\sqrt{n_{k}}}\!\left( X_{1:\sqrt{n_{k}}}, \target\!\left( X_{1:\sqrt{n_{k}}} \right), X_{t} \right), y_{1} \right) \geq \Delta_{\zo} \right] \Delta_{\zo}.
\end{align*}
Furthermore, the fact that $\{X_{t}\}_{t=\sqrt{n_{k^{\prime}}}+1}^{n_{k^{\prime}}} = \{W_{t}\}_{t=\sqrt{n_{k^{\prime}}}+1}^{n_{k^{\prime}}}$ also implies that 
$\{ X_{t} \}_{t = \sqrt{n_{k^{\prime}}}+1}^{n_{k^{\prime}}}$ are independent $\pi_{0}$-distributed random variables,
also independent from $X_{1:\sqrt{n_{k}}}$ (since $k \leq k^{\prime}$) and $f_{\sqrt{n_{k}}}$.  Therefore, Hoeffding's inequality (applied under the conditional distribution given $X_{1:\sqrt{n_{k}}}$ and $f_{\sqrt{n_{k}}}$) 
and the law of total probability imply that, on an event $H_{k,k^{\prime}}^{\prime\prime}$ of probability at least $1 - \frac{1}{(k^{\prime})^{3}}$, 
\begin{align*}
& \frac{1}{n_{k^{\prime}} - \sqrt{n_{k^{\prime}}}} \sum_{t=\sqrt{n_{k^{\prime}}}+1}^{n_{k^{\prime}}} \ind\!\left[ \loss\!\left( f_{\sqrt{n_{k}}}\!\left( X_{1:\sqrt{n_{k}}}, \target\!\left( X_{1:\sqrt{n_{k}}} \right), X_{t} \right), y_{1} \right) \geq \Delta_{\zo} \right]
\\ & \geq \pi_{0}\!\left( \left\{ x : \loss\!\left( f_{\sqrt{n_{k}}}\!\left( X_{1:\sqrt{n_{k}}}, \target\!\left( X_{1:\sqrt{n_{k}}} \right), x \right), y_{1} \right) \geq \Delta_{\zo} \right\} \right) - \sqrt{\frac{ (3/2) \ln(k^{\prime}) }{n_{k^{\prime}} - \sqrt{n_{k^{\prime}}}}}.
\end{align*}
Combining with \eqref{eqn:no-suil-u2-pi0-lb} we have that, on the event $H \cap H^{\prime} \cap \bigcap\limits_{k^{\prime} \in \nats \setminus \{1\}} \bigcap\limits_{k \leq k^{\prime}} H_{k,k^{\prime}}^{\prime\prime}$, 
every $k,k^{\prime} \in \nats$ with 
$k^{\prime} \geq k > k_{0}$ satisfy
\begin{multline*}
\frac{1}{n_{k^{\prime}} - \sqrt{n_{k^{\prime}}}} \sum_{t=\sqrt{n_{k^{\prime}}}+1}^{n_{k^{\prime}}} \loss\!\left( f_{\sqrt{n_{k}}}\!\left( X_{1:\sqrt{n_{k}}}, \target\!\left( X_{1:\sqrt{n_{k}}} \right), X_{t} \right), \target(X_{t}) \right)
\\ > \Delta_{\zo} \left( \frac{1}{4} - \sqrt{\frac{ (3/2) \ln(k^{\prime}) }{n_{k^{\prime}} - \sqrt{n_{k^{\prime}}}}} \right).
\end{multline*}

Since $n_{k}$ is strictly increasing in $k$, we have that on $H \cap H^{\prime} \cap \bigcap\limits_{k^{\prime} \in \nats \setminus \{1\}} \bigcap\limits_{k \leq k^{\prime}} H_{k,k^{\prime}}^{\prime\prime}$, 
\begin{align}
& \limsup_{n \to \infty} \hat{\L}_{\ProcX}(f_{n},\target;n)
\geq \limsup_{k \to \infty} \hat{\L}_{\ProcX}\!\left(f_{\sqrt{n_{k}}},\target;\sqrt{n_{k}}\right) \notag
\\ & = \limsup_{k \to \infty} \limsup_{m \to \infty} \frac{1}{m} \sum_{t=1}^{m} \loss\!\left( f_{\sqrt{n_{k}}}\!\left( X_{1:\sqrt{n_{k}}}, \target\!\left( X_{1:\sqrt{n_{k}}} \right), X_{t} \right), \target(X_{t}) \right) \notag
\\ & \geq \limsup_{k \to \infty} \limsup_{k^{\prime} \to \infty} \frac{1}{n_{k^{\prime}}} \sum_{t=\sqrt{n_{k^{\prime}}}+1}^{n_{k^{\prime}}} \loss\!\left( f_{\sqrt{n_{k}}}\!\left( X_{1:\sqrt{n_{k}}}, \target\!\left( X_{1:\sqrt{n_{k}}} \right), X_{t} \right), \target(X_{t}) \right) \notag
\\ & \geq \limsup_{k^{\prime} \to \infty} \frac{n_{k^{\prime}} - \sqrt{n_{k^{\prime}}}}{n_{k^{\prime}}} \Delta_{\zo} \left( \frac{1}{4} - \sqrt{\frac{ (3/2) \ln(k^{\prime}) }{n_{k^{\prime}} - \sqrt{n_{k^{\prime}}}}} \right).
\label{eqn:no-suil-u2-untidy-limits}
\end{align}
Since $n_{k^{\prime}}$ is strictly increasing in $k^{\prime}$, 
we have that for any $k^{\prime} \geq 4$, $0 \leq \frac{(3/2)\ln(k^{\prime})}{n_{k^{\prime}} - \sqrt{n_{k^{\prime}}}} \leq \frac{3\ln(n_{k^{\prime}})}{n_{k^{\prime}}}$, which converges to $0$ as $k^{\prime} \to \infty$.
Furthermore, $\frac{n_{k^{\prime}}-\sqrt{n_{k^{\prime}}}}{n_{k^{\prime}}} = 1 - \frac{1}{\sqrt{n_{k^{\prime}}}}$, which converges to $1$ as $k^{\prime} \to \infty$.
Therefore, the expression in \eqref{eqn:no-suil-u2-untidy-limits} 
equals $\Delta_{\zo}/4$.
By the union bound, the event $H \cap H^{\prime} \cap \bigcap\limits_{k^{\prime} \in \nats \setminus \{1\}} \bigcap\limits_{k \leq k^{\prime}} H_{k,k^{\prime}}^{\prime\prime}$ has probability at least 
\begin{equation*}
1 - \sum\limits_{k^{\prime} \in \nats \setminus \{1\}} \sum\limits_{k \leq k^{\prime}}\frac{1}{(k^{\prime})^{3}} 
= 1 - \sum\limits_{k^{\prime} \in \nats \setminus \{1\}} \frac{1}{(k^{\prime})^{2}} = 1 - \left( \frac{\pi^{2}}{6} - 1 \right) > 0,
\end{equation*}
so that there is a nonzero probability that $\limsup\limits_{n \to \infty} \hat{\L}_{\ProcX}(f_{n},\target;n) \geq \Delta_{\zo} / 4 > 0$.
Thus,
the inductive learning rule $f_{n}$ 
is not strongly universally consistent under $\ProcX$.

It remains to show that the process $\ProcX$ defined above for this second case is an element of $\SUIL$; again, we will in fact establish the stronger fact that $\ProcX \in \CRF$.
For this, for each $k \in \nats$, let 
$J_{k} = \{ n_{k-1} + (i-1) k + 1 : i \in \nats, n_{k-1} + (i-1) k + 1 \leq \sqrt{n_{k}} \}$.
For any $n \in \nats$, define $k_{n} = \max\{ k \in \nats : n_{k-1} < n \}$;
this is well-defined, since $n_{0} = 0$ (so that this set of $k$ values is nonempty), 
and $n_{k}$ is strictly increasing (so that this set of $k$ values is finite, and hence has a maximum value).
Note that, since $n_{k}$ is finite for every $k$, it follows that $k_{n} \to \infty$.
Fix any $A \in \Borel$.
By the construction of the process above, we have that, $\forall n \in \nats$, 
\begin{equation}
\label{eqn:no-u2-inductive-lln-raw-sum}
\frac{1}{n} \sum_{t=1}^{n} \ind_{A}(X_{t}) 
= \frac{1}{n} \sum_{k = 1}^{k_{n}} \left( \left( \sum_{t = n_{k-1} + 1}^{\min\{ \sqrt{n_{k}}, n \}}   \!\!\!\!\left( \ind_{J_{k}}\!(t) \ind_{A}(v_{k,t}) + \ind_{\nats \setminus J_{k}}\!(t) \ind_{A}(U_{k,t}) \right) \right) \!+\!\!\! \sum_{t = \sqrt{n_{k}} + 1}^{\min\{ n_{k}, n \}} \!\!\!\ind_{A}(W_{t}) \right)\!.
\end{equation}
By Kolmogorov's strong law of large numbers \citep*[][Theorem 6.2.2]{ash:00}, with probability one we have 
\begin{align}
\lim_{n \to \infty} \frac{1}{n} \sum_{k = 1}^{k_{n}} \left( \left( \sum_{t = n_{k-1} + 1}^{\min\{ \sqrt{n_{k}}, n \}}   \!\!\!\!\left( \ind_{J_{k}}\!(t) \left(\ind_{A}(v_{k,t}) \!-\! \ind_{A}(v_{k,t})\right) + \ind_{\nats \setminus J_{k}}\!(t) \left(\ind_{A}(U_{k,t}) \!-\! \pi_{0}(A|\X \setminus R_{k})\right) \right) \right) \right. \notag
\\ \left. + \sum_{t = \sqrt{n_{k}} + 1}^{\min\{ n_{k}, n \}} \left(\ind_{A}(W_{t}) - \pi_{0}(A) \right) \right)
= 0.
\label{eqn:no-u2-inductive-klln}
\end{align}
We therefore focus on establishing convergence of 
\begin{equation}
\label{eqn:no-u2-inductive-means}
\frac{1}{n} \sum_{k = 1}^{k_{n}} \left( \left( \sum_{t = n_{k-1} + 1}^{\min\{ \sqrt{n_{k}}, n \}}   \left( \ind_{J_{k}}(t) \ind_{A}(v_{k,t}) + \ind_{\nats \setminus J_{k}}(t) \pi_{0}(A| \X \setminus R_{k}) \right) \right) + \sum_{t = \sqrt{n_{k}} + 1}^{\min\{ n_{k}, n \}} \pi_{0}(A) \right).
\end{equation}

Note that, for any $k, n \in \nats$ with $n > n_{k-1}$, 
\begin{equation*}
\left| J_{k} \cap \left\{n_{k-1}+1,\ldots,\min\!\left\{\sqrt{n_{k}},n\right\}\right\}\right| 
= \left\lceil \frac{\min\!\left\{\sqrt{n_{k}},n\right\} - n_{k-1}}{k} \right\rceil
\leq \frac{n}{k} + 1,
\end{equation*}
and that $\max (J_{k-1}) \leq \sqrt{n_{k-1}}$ for any $k > 1$.
Thus, 
\begin{align}
0 \leq \frac{1}{n} \sum_{k=1}^{k_{n}} \sum_{t=n_{k-1}+1}^{\min\{\sqrt{n_{k}},n\}} \ind_{J_{k}}(t) \ind_{A}(v_{k,t})
& \leq \frac{1}{n} \sum_{k=1}^{k_{n}} \sum_{t=n_{k-1}+1}^{\min\{\sqrt{n_{k}},n\}} \ind_{J_{k}}(t)
\notag \\ & 
\leq \frac{\sqrt{n_{k_{n}-1}}}{n} + \frac{1}{n} \left( \frac{n}{k_{n}} + 1 \right)
= \frac{\sqrt{n_{k_{n}-1}}}{n} + \frac{1}{k_{n}} + \frac{1}{n}. \label{eqn:no-u2-inductive-sparse-v}
\end{align}
By definition of $k_{n}$, this rightmost expression at most 
$\frac{1}{\sqrt{n}} + \frac{1}{k_{n}} + \frac{1}{n}$, 
which has limit $0$ as $n \to \infty$ since $k_{n} \to \infty$.
Thus, 
\begin{equation}
\label{eqn:no-u2-inductive-sparse-v-measure}
\lim_{n \to \infty} \frac{1}{n} \sum_{k=1}^{k_{n}} \sum_{t=n_{k-1}+1}^{\min\{\sqrt{n_{k}},n\}} \ind_{J_{k}}(t) \ind_{A}(v_{k,t}) = 0.
\end{equation}

By the definition of the $R_{k}$ sequence, for any $\eps \in (0,1)$, $\exists k_{\eps} \in \nats$ such that, 
$\forall k \geq k_{\eps}$, $|\pi_{0}(A \cap R_{k}) - (1/2)\pi_{0}(A)| < \eps/2$.
For any $k \geq k_{\eps}$, we have 
$\pi_{0}(A | \X \setminus R_{k}) 
= 2 \pi_{0}(A \cap (\X \setminus R_{k})) 
= 2 ( \pi_{0}(A) - \pi_{0}(A \cap R_{k}) ) 
\in ( 2 ( \pi_{0}(A) - (1/2)\pi_{0}(A) - \eps/2), 2 (\pi_{0}(A) - (1/2)\pi_{0}(A) + \eps/2)) 
= (\pi_{0}(A)-\eps,\pi_{0}(A)+\eps)$.
Thus, for any $n \in \nats$ with $k_{n} \geq k_{\eps}$, we have that
\begin{align*}
& \frac{1}{n} \sum_{k=1}^{k_{n}} \left( \sum_{t = n_{k-1} + 1}^{\min\{ \sqrt{n_{k}}, n \}} \ind_{\nats \setminus J_{k}}(t) \pi_{0}(A| \X \setminus R_{k}) + \sum_{t = \sqrt{n_{k}} + 1}^{\min\{ n_{k}, n \}} \pi_{0}(A) \right)
\\ & \geq - \eps + \frac{1}{n} \sum_{k=k_{\eps}}^{k_{n}} \sum_{t = n_{k-1} + 1}^{\min\{ n_{k}, n \}} \ind_{\nats \setminus J_{k}}(t) \pi_{0}(A)
\geq - \eps + \frac{1}{n} \sum_{k=k_{\eps}}^{k_{n}} \sum_{t = n_{k-1} + 1}^{\min\{ n_{k}, n \}} \left( \pi_{0}(A) - \ind_{J_{k}}(t) \right)
\\ & \geq - \eps - \left( \frac{1}{n} \sum_{k=1}^{k_{n}} \sum_{t = n_{k-1} + 1}^{\min\{ n_{k}, n \}} \ind_{J_{k}}(t) \right) + \left( \frac{1}{n} \sum_{k=k_{\eps}}^{k_{n}} \sum_{t = n_{k-1} + 1}^{\min\{ n_{k}, n \}} \pi_{0}(A) \right)
\\ & = -\eps - \left( \frac{1}{n} \sum_{k=1}^{k_{n}} \sum_{t=n_{k-1}+1}^{\min\{ \sqrt{n_{k}}, n \}} \ind_{J_{k}}(t) \right) + \left(1 - \frac{n_{k_{\eps}-1}}{n}\right) \pi_{0}(A)
\end{align*}
and
\begin{align*}
& \frac{1}{n} \sum_{k=1}^{k_{n}} \left( \sum_{t = n_{k-1} + 1}^{\min\{ \sqrt{n_{k}}, n \}} \ind_{\nats \setminus J_{k}}(t) \pi_{0}(A| \X \setminus R_{k}) + \sum_{t = \sqrt{n_{k}}+1}^{\min\{ n_{k}, n \}} \pi_{0}(A) \right)
\\ & \leq \frac{n_{k_{\eps}-1}}{n} + \frac{1}{n} \sum_{k=k_{\eps}}^{k_{n}} \left( \sum_{t = n_{k-1} + 1}^{\min\{ \sqrt{n_{k}}, n \}} \pi_{0}(A| \X \setminus R_{k}) + \sum_{t = \sqrt{n_{k}} + 1}^{\min\{ n_{k}, n \}} \pi_{0}(A) \right)
\\ & \leq \frac{n_{k_{\eps}-1}}{n} + \frac{1}{n} \sum_{k=k_{\eps}}^{k_{n}} \sum_{t = n_{k-1} + 1}^{\min\{ n_{k}, n \}} \left( \pi_{0}(A) + \eps \right)
\leq \frac{n_{k_{\eps}-1}}{n} + \pi_{0}(A) + \eps.
\end{align*}
As mentioned above, the rightmost expression in \eqref{eqn:no-u2-inductive-sparse-v} has limit $0$,
which in particular also implies that 
$\lim\limits_{n \to \infty} \frac{1}{n} \sum\limits_{k=1}^{k_{n}} \sum\limits_{t=n_{k-1}+1}^{\min\{ \sqrt{n_{k}}, n \}} \ind_{J_{k}}(t) = 0$.
Furthermore, for any fixed $\eps \in (0,1)$, $\lim\limits_{n \to \infty} \frac{n_{k_{\eps}-1}}{n} = 0$.
Thus, since $k_{n} \to \infty$ implies $k_{n} \geq k_{\eps}$ for all sufficiently large $n$, 
we have
\begin{align*}
& \pi_{0}(A) - \eps 
\leq \liminf_{n \to \infty} \frac{1}{n} \sum_{k=1}^{k_{n}} \left( \sum_{t = n_{k-1} + 1}^{\min\{ \sqrt{n_{k}}, n \}} \ind_{\nats \setminus J_{k}}(t) \pi_{0}(A| \X \setminus R_{k}) + \sum_{t = \sqrt{n_{k}} + 1}^{\min\{ n_{k}, n \}} \pi_{0}(A) \right)
\\ & \leq \limsup_{n \to \infty} \frac{1}{n} \sum_{k=1}^{k_{n}} \left( \sum_{t = n_{k-1} + 1}^{\min\{ \sqrt{n_{k}}, n \}} \ind_{\nats \setminus J_{k}}(t) \pi_{0}(A| \X \setminus R_{k}) + \sum_{t = \sqrt{n_{k}} + 1}^{\min\{ n_{k}, n \}} \pi_{0}(A) \right)
\leq  \pi_{0}(A) + \eps.
\end{align*}
Taking the limit as $\eps \to 0$ reveals that
\begin{equation*}
\lim_{n \to \infty} \frac{1}{n} \sum_{k=1}^{k_{n}} \left( \sum_{t = n_{k-1} + 1}^{\min\{ \sqrt{n_{k}}, n \}} \ind_{\nats \setminus J_{k}}(t) \pi_{0}(A| \X \setminus R_{k}) + \sum_{t = \sqrt{n_{k}} + 1}^{\min\{ n_{k}, n \}} \pi_{0}(A) \right) = \pi_{0}(A),
\end{equation*}
which also establishes that the limit exists.
Combined with \eqref{eqn:no-u2-inductive-sparse-v-measure}, \eqref{eqn:no-u2-inductive-klln}, and \eqref{eqn:no-u2-inductive-lln-raw-sum}, 
we have
\begin{equation}
\label{eqn:no-inductive-universal2-limit-measure-2}
\frac{1}{n} \sum_{t=1}^{n} \ind_{A}(X_{t}) \to \pi_{0}(A) \text{ (a.s.)}.
\end{equation}
In particular, this implies that the limit of the left hand side \emph{exists} almost surely.
Since this holds for any choice of $A \in \Borel$,
we have that $\ProcX \in \CRF$.  Since (as argued above) it holds that $\CRF \subseteq \KC = \SUIL$, 
this further implies $\ProcX \in \SUIL$.
Thus, in this second case as well, we conclude that the inductive learning rule $f_{n}$ is not optimistically universal.
Since any inductive learning rule $f_{n}$ satisfies one of these two cases,
this completes the proof that no inductive learning rule is optimistically universal.
\end{proof}

\ignore{\begin{proof}
The proof is partly inspired by that of a related (but somewhat different) result of \citet*{adams:98,nobel:99}.
Fix any inductive learning rule $f_{n}$.
Let $x_{0}$ be an arbitrary element of $\X$.
Fix any $y_{0},y_{1} \in \Y$ with $\loss(y_{0},y_{1}) > 0$,
and for any sequences $\mathbf{v} = \{v_{i}\}_{i=1}^{\infty}$ and $\mathbf{w} = \{w_{i}\}_{i=1}^{\infty}$ in $\X$, define two functions 
\begin{align*}
\target_{1,\mathbf{v}}(x) & = 
\begin{cases}
y_{0}, & \text{if } x \in \{v_{i} : i \in \nats\} \\
y_{1}, & \text{otherwise}
\end{cases},\\
\target_{2,\mathbf{w}}(x) &= 
\begin{cases}
y_{0}, & \text{if } x \notin \{w_{i} : i \in \nats\} \cup \{x_{0}\}\\
y_{1}, & \text{otherwise}
\end{cases}.
\end{align*}
Since $\X$ is uncountable, and $(\X,\T)$ is a Polish space,
there exists a nonatomic probability measure $\pi_{0}$ on $\X$ (with respect to $\Borel$) \citep*[see][Chapter 2, Theorem 8.1]{parthasarathy:67}. 
Let $V_{1},V_{2},\ldots,W_{1},W_{2},\ldots$ be independent $\pi_{0}$-distributed random variables (also independent from $\{f_{n}\}_{n=0}^{\infty}$),
and define $\mathbf{V} = \{V_{i}\}_{i=1}^{\infty}$ and $\mathbf{W} = \{W_{i}\}_{i=1}^{\infty}$.

Now, for any $\mathbf{v}$ and $\mathbf{w}$, define a data sequence $\{X_{i}^{\prime}(\mathbf{v},\mathbf{w})\}_{i=1}^{\infty}$ inductively, as follows.
Let $n_{0} = 0$.  For this inductive definition, suppose that for some $k \in \nats$
the value $n_{k-1} \in \nats$ and the values $\{X_{i}^{\prime}(\mathbf{v},\mathbf{w}) : i \in \nats, i \leq n_{k-1}\}$ are already defined.
For each $i \in \nats$ with $i \leq n_{k-1}$, define $X_{i}^{(k)}(\mathbf{v},\mathbf{w}) = X_{i}^{\prime}(\mathbf{v},\mathbf{w})$.
For each $i \in \nats$, define $X_{n_{k-1} + k (i-1) + 1}^{(k)}(\mathbf{v},\mathbf{w}) = v_{n_{k-1} + k (i-1) + 1}$, 
and for each $j \in \nats$ with $2 \leq j \leq k$, define $X_{n_{k-1} + k(i-1) + j}^{(k)}(\mathbf{v},\mathbf{w}) = x_{0}$.
If $\exists n \in \nats$ with $n > n_{k-1}$ such that 
\begin{equation}
\label{eqn:no-suil-u2-prob}
\P\left( \pi_{0}\left( \left\{ x : \loss\left(f_{n}\left(X_{1:n}^{(k)}(\mathbf{V},\mathbf{W}), \target_{2,\mathbf{W}}\left(X_{1:n}^{(k)}(\mathbf{V},\mathbf{W})\right), x \right),y_{0}\right) \geq \loss(y_{0},y_{1})/2 \right\} \right) \geq 2^{-k} \right) < 2^{-k},
\end{equation}
then fix some such value of $n$,
and for each $i \in \{n_{k-1}+1,\ldots,n\}$, define $X_{i}^{\prime}(\mathbf{v},\mathbf{w}) = X_{i}^{(k)}(\mathbf{v},\mathbf{w})$.
Furthermore, for each $i \in \nats$ with $n+1 \leq i \leq n^{2}$, define $X_{i}^{\prime}(\mathbf{v},\mathbf{w}) = w_{i}$.
Finally, define $n_{k} = n^{2}$.
Otherwise, if no such $n$ satisfies \eqref{eqn:no-suil-u2-prob}, then for each $i \in \nats$ with $i > n_{k-1}$, define $X_{i}^{\prime}(\mathbf{v},\mathbf{w}) = X_{i}^{(k)}(\mathbf{v},\mathbf{w})$,
in which case the inductive definition is complete (upon reaching the smallest value of $k$ for which no such $n$ exists).
Note that, since we do not condition on any variables in \eqref{eqn:no-suil-u2-prob}, the values $n_{k}$ are \emph{not} random.

Now we consider two cases.  First, suppose there is a maximum value $k^{*}$ of $k \in \nats$ for which $n_{k-1}$ is defined.
In this case, $\nexists n \in \nats$ with $n > n_{k^{*}-1}$ satisfying \eqref{eqn:no-suil-u2-prob} with $k = k^{*}$,
and furthermore 
$X_{i}^{\prime}(\mathbf{v},\mathbf{w}) = X_{i}^{(k^{*})}(\mathbf{v},\mathbf{w})$ for all $i \in \nats$, and every $\mathbf{v}$ and $\mathbf{w}$ sequence.
Next note that, by the law of total probability,
\begin{align*}
 & \E\!\left[ \P\left( \limsup_{n \to \infty} \left\{ \pi_{0}\left( \left\{ x : \loss\left(f_{n}\left(X_{1:n}^{\prime}(\mathbf{V},\mathbf{W}), \target_{2,\mathbf{W}}\left(X_{1:n}^{\prime}(\mathbf{V},\mathbf{W})\right), x \right),y_{0}\right) \geq \loss(y_{0},y_{1})/2 \right\} \right) \geq 2^{-k^{*}} \right\} \Big| \mathbf{W} \right) \right]
\\ & = \P\left( \limsup_{n \to \infty} \left\{ \pi_{0}\left( \left\{ x : \loss\left(f_{n}\left(X_{1:n}^{\prime}(\mathbf{V},\mathbf{W}), \target_{2,\mathbf{W}}\left(X_{1:n}^{\prime}(\mathbf{V},\mathbf{W})\right), x \right),y_{0}\right) \geq \loss(y_{0},y_{1})/2 \right\} \right) \geq 2^{-k^{*}} \right\}\right) 
\\ & \geq \limsup_{n \to \infty} \P\left( \pi_{0}\left( \left\{ x : \loss\left(f_{n}\left(X_{1:n}^{\prime}(\mathbf{V},\mathbf{W}), \target_{2,\mathbf{W}}\left(X_{1:n}^{\prime}(\mathbf{V},\mathbf{W})\right), x \right),y_{0}\right) \geq \loss(y_{0},y_{1})/2 \right\} \right) \geq 2^{-k^{*}} \right),
\end{align*}
and the negation of \eqref{eqn:no-suil-u2-prob} implies this last expression is at least $2^{-k^{*}}$ (noting that the negation of \eqref{eqn:no-suil-u2-prob} holds for \emph{every} $n > n_{k^{*}-1}$).
In particular, this implies $\exists \mathbf{w}$ such that,
taking $\target = \target_{2,\mathbf{w}}$ and taking $X_{i} = X_{i}^{\prime}(\mathbf{V},\mathbf{w})$ for every $i \in \nats$, 
\begin{equation*}
\P\left( \limsup_{n \to \infty} \left\{ \pi_{0}\left( \left\{ x : \loss\left(f_{n}(X_{1:n}, \target(X_{1:n}), x),y_{0}\right) \geq \loss(y_{0},y_{1})/2 \right\} \right)\geq 2^{-k^{*}} \right\}\right) \geq 2^{-k^{*}}.
\end{equation*}
In particular, defining the event
\begin{equation*}
E^{\prime} = \left\{ \limsup_{n\to\infty} \pi_{0}\left( \left\{ x : \loss\left(f_{n}(X_{1:n}, \target(X_{1:n}), x),y_{0}\right) \geq \loss(y_{0},y_{1})/2 \right\} \right) \geq 2^{-k^{*}} \right\},
\end{equation*}
we have that
\begin{equation*}
\limsup_{n \to \infty} \left\{ \pi_{0}\left( \left\{ x : \loss\left(f_{n}(X_{1:n}, \target(X_{1:n}), x),y_{0}\right) \geq \loss(y_{0},y_{1})/2 \right\} \right)\geq 2^{-k^{*}} \right\} \subseteq E^{\prime},
\end{equation*}
so that $E^{\prime}$ has probability at least $2^{-k^{*}}$.
Also let $E$ denote the event that $\forall i \in \nats$, $V_{i} \notin \{w_{j} : j \in \nats\} \cup \{x_{0}\}$;
note that, since $\pi_{0}$ is nonatomic, $E$ has probability one.

Define $t_{i} = n_{k^{*}-1} + k^{*}(i-1) + 1$ for each $i \in \nats$, and let $I_{k^{*}} = \{ t_{i} : i \in \nats \}$.
Note that, on the event $E$, every $t \in I_{k^{*}}$ has $\target(X_{t}) = y_{0}$, so that every $n \in \nats$ with $n > n_{k^{*}-1}$ has
\begin{align*}
& \hat{\L}_{\ProcX}\left(f_{n},\target;n\right)
= \limsup_{m \to \infty} \frac{1}{m} \sum_{t=1}^{m} \loss(f_{n}(X_{1:n},\target(X_{1:n}),X_{t}),\target(X_{t}))
\\ & \geq \limsup_{m \to \infty} \frac{1}{m} \sum_{t=n+1}^{m} \ind_{I_{k^{*}}}(t) \loss(f_{n}(X_{1:n},\target(X_{1:n}),X_{t}),\target(X_{t}))
\\ & = \limsup_{m \to \infty} \frac{1}{m} \sum_{t=n+1}^{m} \ind_{I_{k^{*}}}(t) \loss(f_{n}(X_{1:n},\target(X_{1:n}),X_{t}),y_{0}).
\end{align*}
Since $k^{*} \sum_{t=n+1}^{m} \ind_{I_{k^{*}}}(t) > m - n - k^{*}$, letting $i_{n} = \max\{ i : t_{i} \leq n \}$,
this last line is at least as large as
\begin{align*}
& \limsup_{u \to \infty} \frac{1}{k^{*} u + n + k^{*}} \sum_{s=1}^{u} \loss(f_{n}(X_{1:n},\target(X_{1:n}),X_{t_{i_{n} + s}}),y_{0})
\\ & = \limsup_{u \to \infty} \frac{1}{k^{*} u} \sum_{s=1}^{u} \loss(f_{n}(X_{1:n},\target(X_{1:n}),X_{t_{i_{n}+s}}),y_{0})
\\ & \geq \limsup_{u \to \infty} \frac{1}{k^{*} u} \sum_{s=1}^{u} \ind_{[\loss(y_{0},y_{1})/2,\infty)}\left(\loss(f_{n}(X_{1:n},\target(X_{1:n}),X_{t_{i_{n}+s}}),y_{0})\right) \frac{\loss(y_{0},y_{1})}{2}.
\end{align*}
Furthermore, the subsequence $\{ X_{t_{i_{n}+s}} \}_{s=1}^{\infty}$ is a sequence of independent $\pi_{0}$-distributed random variables (namely, a subsequence of $\mathbf{V}$),
also independent from the rest of the sequence $\{ X_{t} : t \notin \{ t_{i_{n}+s} : s \in \nats \} \}$ and $f_{n}$.
Thus, $\forall n \in \nats$ with $n > n_{k^{*}-1}$, by the strong law of large numbers (applied under the conditional distribution given $X_{1:n}$ and $f_{n}$)
and the law of total probability, there is an event $E_{n}^{\prime\prime}$ of probability $1$ such that, on $E \cap E_{n}^{\prime\prime}$, 
\begin{align*}
& \limsup_{u \to \infty} \frac{1}{k^{*} u} \sum_{s=1}^{u} \ind_{[\loss(y_{0},y_{1})/2,\infty)}\left(\loss(f_{n}(X_{1:n},\target(X_{1:n}),X_{t_{i_{n}+s}}),y_{0})\right) \frac{\loss(y_{0},y_{1})}{2}
\\ & = \frac{\loss(y_{0},y_{1})}{2k^{*}} \pi_{0}\left( \left\{ x : \loss( f_{n}(X_{1:n},\target(X_{1:n}),x ),y_{0} ) \geq \loss(y_{0},y_{1})/2 \right\} \right).
\end{align*}
Combining this with the above, we have that on the event $E \cap E^{\prime} \cap \bigcap\limits_{n > n_{k^{*}-1}} E_{n}^{\prime\prime}$,
\begin{equation*}
\limsup_{n \to \infty} \hat{\L}_{\ProcX}\left(f_{n},\target;n\right) 
\geq \frac{\loss(y_{0},y_{1})}{2k^{*}}  \limsup_{n \to \infty} \pi_{0}\left( \left\{ x : \loss( f_{n}(X_{1:n},\target(X_{1:n}),x ),y_{0} ) \geq \loss(y_{0},y_{1})/2 \right\} \right)
\geq \frac{\loss(y_{0},y_{1})}{2k^{*} 2^{k^{*}}}. 
\end{equation*}
Since $\frac{\loss(y_{0},y_{1})}{2k^{*} 2^{k^{*}}} > 0$, and since $E \cap E^{\prime} \cap \bigcap\limits_{j > n_{k^{*}-1}} E_{n}^{\prime\prime}$ 
has probability at least $2^{-k^{*}} > 0$ (by the union bound), 
and since $\target$ is clearly a measurable function,
this implies that $f_{n}$ is not strongly universally consistent under $\ProcX$.

To complete this first case, we argue that $\ProcX \in \SUIL$.
Note that for every $t > n_{k^{*}-1}$, either $t \in I_{k^{*}}$, in which case $X_{t} = V_{t}$, 
or else $X_{t} = x_{0}$.  Thus, for any $n > n_{k^{*}-1}$ and $A \in \Borel$, 
$\sum_{t=1}^{n} \ind_{A}(X_{t}) \leq n_{k^{*}-1} + n \ind_{A}(x_{0}) + \sum_{i : t_{i} \leq n} \ind_{A}(V_{t_{i}})$.
Furthermore, for any sequence $\{A_{k}\}_{k=1}^{\infty}$ in $\Borel$ with $A_{k} \downarrow \emptyset$, 
$\lim\limits_{k \to \infty} \ind_{A_{k}}(x_{0}) = 0$.  Thus, for any such sequence,
\begin{equation*}
\limsup_{k \to \infty} \hat{\mu}_{\ProcX}(A_{k})
\leq \limsup_{k \to \infty} \limsup_{n \to \infty} \frac{n_{k^{*}-1}}{n} + \ind_{A_{k}}(x_{0}) + \frac{1}{n} \sum_{i : t_{i} \leq n} \ind_{A_{k}}(V_{t_{i}})
= \limsup_{k \to \infty} \limsup_{n \to \infty} \frac{1}{n} \sum_{i : t_{i} \leq n} \ind_{A_{k}}(V_{t_{i}}).
\end{equation*}
Since $k^{*} \sum_{t=1}^{n} \ind_{I_{k^{*}}}(t) < n + k^{*}$, the rightmost expression is at most
\begin{equation*}
\limsup_{k \to \infty} \limsup_{u \to \infty} \frac{1}{k^{*} (u - 1)} \sum_{i = 1}^{u} \ind_{A_{k}}(V_{t_{i}})
= \frac{1}{k^{*}} \limsup_{k \to \infty} \limsup_{u \to \infty} \frac{1}{u} \sum_{i = 1}^{u} \ind_{A_{k}}(V_{t_{i}}).
\end{equation*}
Since $\{V_{t_{i}}\}_{i=1}^{\infty}$ is a sequence of independent $\pi_{0}$-distributed random variables, 
the strong law of large numbers implies that, with probability one, the right hand side of the above equals
\begin{equation*}
\frac{1}{k^{*}} \limsup_{k \to \infty} \pi_{0}(A_{k})
\leq \frac{1}{k^{*}} \pi_{0}\left( \limsup_{k \to \infty} A_{k} \right)
= \frac{1}{k^{*}} \pi_{0}(\emptyset) = 0.
\end{equation*}
Thus, $\lim\limits_{k \to \infty} \hat{\mu}_{\ProcX}(A_{k}) = 0$ (a.s.), so that $\ProcX$ satisfies Condition~\ref{con:kc}.
Combined with Theorem~\ref{thm:main}, this implies $\ProcX \in \SUIL$.

Next, let us examine the second case, wherein $n_{k}$ is defined for every $k \in \nats \cup \{0\}$, 
so that $\{n_{k}\}_{k=0}^{\infty}$ is an infinite increasing sequence of integers.
In this case, we have that every $k \in \nats$ and $i \leq \sqrt{n_{k}}$ has $X_{i}^{\prime}(\mathbf{v},\mathbf{w}) = X_{i}^{(k)}(\mathbf{v},\mathbf{w})$.
Furthermore, \ref{eqn:no-suil-u2-prob} and the definition of $n_{k}$ imply that, $\forall k \in \nats$, 
\begin{equation*}
\P\left( \pi_{0}\left( \left\{ x : \loss\left(f_{\sqrt{n_{k}}}\left(X_{1:\sqrt{n_{k}}}^{\prime}(\mathbf{V},\mathbf{W}), \target_{2,\mathbf{W}}\left(X_{1:\sqrt{n_{k}}}^{\prime}(\mathbf{V},\mathbf{W})\right), x \right),y_{0}\right) \geq \loss(y_{0},y_{1})/2 \right\} \right) \geq 2^{-k} \right) < 2^{-k}.
\end{equation*}
In particular, by the monotone convergence theorem and linearity of expectations, combined with the law of total probability, this implies
\begin{align*}
& \E\!\left[ \sum_{k=1}^{\infty} \P\left( \pi_{0}\left( \left\{ x : \loss\left(f_{\sqrt{n_{k}}}\left(X_{1:\sqrt{n_{k}}}^{\prime}(\mathbf{V},\mathbf{W}), \target_{2,\mathbf{W}}\left(X_{1:\sqrt{n_{k}}}^{\prime}(\mathbf{V},\mathbf{W})\right), x \right),y_{0}\right) \geq \loss(y_{0},y_{1})/2 \right\} \right) \geq 2^{-k} \Big| \mathbf{V} \right) \right]
\\ & = \sum_{k=1}^{\infty} \P\left( \pi_{0}\left( \left\{ x : \loss\left(f_{\sqrt{n_{k}}}\left(X_{1:\sqrt{n_{k}}}^{\prime}(\mathbf{V},\mathbf{W}), \target_{2,\mathbf{W}}\left(X_{1:\sqrt{n_{k}}}^{\prime}(\mathbf{V},\mathbf{W})\right), x \right),y_{0}\right) \geq \loss(y_{0},y_{1})/2 \right\} \right) \geq 2^{-k} \right) < \infty.
\end{align*}
This implies that, with probability one,
\begin{equation*}
\sum_{k=1}^{\infty} \P\left( \pi_{0}\left( \left\{ x : \loss\left(f_{\sqrt{n_{k}}}\left(X_{1:\sqrt{n_{k}}}^{\prime}(\mathbf{V},\mathbf{W}), \target_{2,\mathbf{W}}\left(X_{1:\sqrt{n_{k}}}^{\prime}(\mathbf{V},\mathbf{W})\right), x \right),y_{0}\right) \geq \loss(y_{0},y_{1})/2 \right\} \right) \geq 2^{-k} \Big| \mathbf{V} \right) < \infty.
\end{equation*} 
Furthermore, since $\pi_{0}$ is nonatomic, with probability one, $x_{0} \notin \{V_{i} : i \in \nats\}$.
By the union bound, both of these events occur with probability one.
In particular, this implies $\exists \mathbf{v}$ such that $x_{0} \notin \{v_{i} : i \in \nats\}$ 
and 
\begin{equation*}
\sum_{k=1}^{\infty} \P\left( \pi_{0}\left( \left\{ x : \loss\left(f_{\sqrt{n_{k}}}\left(X_{1:\sqrt{n_{k}}}^{\prime}(\mathbf{v},\mathbf{W}), \target_{2,\mathbf{W}}\left(X_{1:\sqrt{n_{k}}}^{\prime}(\mathbf{v},\mathbf{W})\right), x \right),y_{0}\right) \geq \loss(y_{0},y_{1})/2 \right\} \right) \geq 2^{-k} \right) < \infty.
\end{equation*}
The Borel-Cantelli Lemma then implies that there exists an event $H^{\prime}$ of probability one, on which $\exists k_{0} \in \nats$ such that, $\forall k \in \nats$ with $k > k_{0}$, 
\begin{equation*}
\pi_{0}\left( \left\{ x : \loss\left(f_{\sqrt{n_{k}}}\left(X_{1:\sqrt{n_{k}}}^{\prime}(\mathbf{v},\mathbf{W}), \target_{2,\mathbf{W}}\left(X_{1:\sqrt{n_{k}}}^{\prime}(\mathbf{v},\mathbf{W})\right), x \right),y_{0}\right) \geq \loss(y_{0},y_{1})/2 \right\} \right) < 2^{-k}.
\end{equation*}
Next, let $H$ denote the event that $\forall i \in \nats$, $W_{i} \notin \{v_{j} : j \in \nats\}$.
Note that, since $\pi_{0}$ is nonatomic, $H$ has probability one.
Furthermore, on $H$, $\target_{2,\mathbf{W}}(x_{0}) = y_{1} = \target_{1,\mathbf{v}}(x_{0})$, 
and $\forall i \in \nats$, $\target_{2,\mathbf{W}}(W_{i}) = y_{1} = \target_{1,\mathbf{v}}(W_{i})$
and $\target_{2,\mathbf{W}}(v_{i}) = y_{0} = \target_{1,\mathbf{v}}(v_{i})$.
Thus, since every $t \in \nats$ has $X_{t}^{\prime}(\mathbf{v},\mathbf{W}) \in \{v_{t},W_{t},x_{0}\}$, 
we have that $\target_{2,\mathbf{W}}(X_{t}^{\prime}(\mathbf{v},\mathbf{W})) = \target_{1,\mathbf{v}}(X_{t}^{\prime}(\mathbf{v},\mathbf{W}))$ for every $t \in \nats$ on $H$.
Defining $\target = \target_{1,\mathbf{v}}$, and taking $X_{t} = X_{t}^{\prime}(\mathbf{v},\mathbf{W})$ for every $t \in \nats$, 
we have that on $H \cap H^{\prime}$, $\forall k \in \nats$ with $k > k_{0}$,
\begin{equation*}
\pi_{0}\left( \left\{ x : \loss\left(f_{\sqrt{n_{k}}}\left(X_{1:\sqrt{n_{k}}}, \target(X_{1:\sqrt{n_{k}}}), x \right),y_{0}\right) \geq \loss(y_{0},y_{1})/2 \right\} \right) < 2^{-k}.
\end{equation*}
Any $y \in \Y$ with $\loss(y,y_{1}) < \loss(y_{0},y_{1})/2$ necessarily has 
$\loss(y,y_{0}) > \loss(y,y_{0}) + \loss(y,y_{1}) - \loss(y_{0},y_{1})/2 \geq \loss(y_{0},y_{1}) - \loss(y_{0},y_{1})/2 = \loss(y_{0},y_{1})/2$,
where the second inequality is due to the triangle inequality for $\loss$.
Therefore, on $H \cap H^{\prime}$, $\forall k \in \nats$ with $k > k_{0}$, 
\begin{align}
& \pi_{0}\left( \left\{ x : \loss\left(f_{\sqrt{n_{k}}}\left(X_{1:\sqrt{n_{k}}}, \target(X_{1:\sqrt{n_{k}}}), x \right),y_{1}\right) \geq \loss(y_{0},y_{1})/2 \right\} \right) \notag
\\ & = 1 - \pi_{0}\left( \left\{ x : \loss\left(f_{\sqrt{n_{k}}}\left(X_{1:\sqrt{n_{k}}}, \target(X_{1:\sqrt{n_{k}}}), x \right),y_{1}\right) < \loss(y_{0},y_{1})/2 \right\} \right) \notag
\\ & \geq 1 - \pi_{0}\left( \left\{ x : \loss\left(f_{\sqrt{n_{k}}}\left(X_{1:\sqrt{n_{k}}}, \target(X_{1:\sqrt{n_{k}}}), x \right),y_{0}\right) > \loss(y_{0},y_{1})/2 \right\} \right) > 1 - 2^{-k}. \label{eqn:no-suil-u2-pi0-lb}
\end{align}

Now fix any $k,k^{\prime} \in \nats$ with $k \leq k^{\prime}$, and note that every $t \in \{\sqrt{n_{k^{\prime}}}+1,\ldots,n_{k^{\prime}}\}$ has $X_{t} = W_{t}$.
On $H$, this implies $\target(X_{t}) = y_{1}$.  Thus, on $H$, 
\begin{align*}
& \frac{1}{n_{k^{\prime}} - \sqrt{n_{k^{\prime}}}} \sum_{t=\sqrt{n_{k^{\prime}}}+1}^{n_{k^{\prime}}} \loss\left( f_{\sqrt{n_{k}}}\left( X_{1:\sqrt{n_{k}}}, \target\left( X_{1:\sqrt{n_{k}}} \right), X_{t} \right), \target(X_{t}) \right)
\\ & = \frac{1}{n_{k^{\prime}} - \sqrt{n_{k^{\prime}}}} \sum_{t=\sqrt{n_{k^{\prime}}}+1}^{n_{k^{\prime}}} \loss\left( f_{\sqrt{n_{k}}}\left( X_{1:\sqrt{n_{k}}}, \target\left( X_{1:\sqrt{n_{k}}} \right), X_{t} \right), y_{1} \right)
\\ & \geq \frac{1}{n_{k^{\prime}} - \sqrt{n_{k^{\prime}}}} \sum_{t=\sqrt{n_{k^{\prime}}}+1}^{n_{k^{\prime}}} \ind_{[\loss(y_{0},y_{1})/2,\infty)}\left( \loss\left( f_{\sqrt{n_{k}}}\left( X_{1:\sqrt{n_{k}}}, \target\left( X_{1:\sqrt{n_{k}}} \right), X_{t} \right), y_{1} \right) \right) \frac{\loss(y_{0},y_{1})}{2}.
\end{align*}
Furthermore, the fact that $\{X_{t}\}_{t=\sqrt{n_{k^{\prime}}}+1}^{n_{k^{\prime}}} = \{W_{t}\}_{t=\sqrt{n_{k^{\prime}}}+1}^{n_{k^{\prime}}}$ also implies that 
$\{ X_{t} \}_{t = \sqrt{n_{k^{\prime}}}+1}^{n_{k^{\prime}}}$ are independent $\pi_{0}$-distributed random variables,
also independent from $X_{1:\sqrt{n_{k}}}$ (since $k \leq k^{\prime}$) and $f_{n}$.  Therefore, Hoeffding's inequality (applied under the conditional distribution given $X_{1:\sqrt{n_{k}}}$ and $f_{\sqrt{n_{k}}}$) 
and the law of total probability imply that, on an event $H_{k,k^{\prime}}^{\prime\prime}$ of probability at least $1 - \frac{1}{(k^{\prime}+1)^{3}}$, 
\begin{align*}
& \frac{1}{n_{k^{\prime}} - \sqrt{n_{k^{\prime}}}} \sum_{t=\sqrt{n_{k^{\prime}}}+1}^{n_{k^{\prime}}} \ind_{[\loss(y_{0},y_{1})/2,\infty)}\left( \loss\left( f_{\sqrt{n_{k}}}\left( X_{1:\sqrt{n_{k}}}, \target\left( X_{1:\sqrt{n_{k}}} \right), X_{t} \right), y_{1} \right) \right)
\\ & \geq \pi_{0}\left( \left\{ x : \loss\left( f_{\sqrt{n_{k}}}\left( X_{1:\sqrt{n_{k}}}, \target\left( X_{1:\sqrt{n_{k}}} \right), x \right), y_{1} \right) \geq \loss(y_{0},y_{1})/2 \right\} \right) - \sqrt{\frac{ (3/2) \ln(k^{\prime}+1) }{n_{k^{\prime}} - \sqrt{n_{k^{\prime}}}}}.
\end{align*}
Combining with \eqref{eqn:no-suil-u2-pi0-lb}, we have that, on the event $H \cap H^{\prime} \cap \bigcap\limits_{k^{\prime} \in \nats} \bigcap\limits_{k \leq k^{\prime}} H_{k,k^{\prime}}^{\prime\prime}$, 
every $k,k^{\prime} \in \nats$ with $k^{\prime} \geq k > k_{0}$ satisfy
\begin{multline*}
\frac{1}{n_{k^{\prime}} - \sqrt{n_{k^{\prime}}}} \sum_{t=\sqrt{n_{k^{\prime}}}+1}^{n_{k^{\prime}}} \loss\left( f_{\sqrt{n_{k}}}\left( X_{1:\sqrt{n_{k}}}, \target\left( X_{1:\sqrt{n_{k}}} \right), X_{t} \right), \target(X_{t}) \right)
\\ > \frac{\loss(y_{0},y_{1})}{2} \left( 1 - 2^{-k} - \sqrt{\frac{ (3/2) \ln(k^{\prime}+1) }{n_{k^{\prime}} - \sqrt{n_{k^{\prime}}}}} \right).
\end{multline*}

Since $n_{k}$ is strictly increasing in $k$, we have that on $H \cap H^{\prime} \cap \bigcap\limits_{k^{\prime} \in \nats} \bigcap\limits_{k \leq k^{\prime}} H_{k,k^{\prime}}^{\prime\prime}$, 
\begin{align}
& \limsup_{n \to \infty} \hat{\L}_{\ProcX}\left(f_{n},\target;n\right)
\geq \limsup_{k \to \infty} \hat{\L}_{\ProcX}\left(f_{\sqrt{n_{k}}},\target;\sqrt{n_{k}}\right) \notag
\\ & = \limsup_{k \to \infty} \limsup_{m \to \infty} \frac{1}{m} \sum_{t=1}^{m} \loss\left( f_{\sqrt{n_{k}}}\left( X_{1:\sqrt{n_{k}}}, \target\left( X_{1:\sqrt{n_{k}}} \right), X_{t} \right), \target(X_{t}) \right) \notag
\\ & \geq \limsup_{k \to \infty} \limsup_{k^{\prime} \to \infty} \frac{1}{n_{k^{\prime}}} \sum_{t=\sqrt{n_{k^{\prime}}}+1}^{n_{k^{\prime}}} \loss\left( f_{\sqrt{n_{k}}}\left( X_{1:\sqrt{n_{k}}}, \target\left( X_{1:\sqrt{n_{k}}} \right), X_{t} \right), \target(X_{t}) \right) \notag
\\ & \geq \limsup_{k \to \infty} \limsup_{k^{\prime} \to \infty} \frac{n_{k^{\prime}} - \sqrt{n_{k^{\prime}}}}{n_{k^{\prime}}} \frac{\loss(y_{0},y_{1})}{2} \left( 1 - 2^{-k} - \sqrt{\frac{ (3/2) \ln(k^{\prime}+1) }{n_{k^{\prime}} - \sqrt{n_{k^{\prime}}}}} \right).
\label{eqn:no-suil-u2-untidy-limits}
\end{align}
Since $n_{k^{\prime}}$ is strictly increasing in $k^{\prime}$, 
we have that for any $k^{\prime} \geq 4$, $\frac{(3/2)\ln(k^{\prime}+1)}{n_{k^{\prime}} - \sqrt{n_{k^{\prime}}}} \leq \frac{3\ln(n_{k^{\prime}}+1)}{n_{k^{\prime}}}$, which converges to $0$ as $k^{\prime} \to \infty$.
Furthermore, $\frac{n_{k^{\prime}}-\sqrt{n_{k^{\prime}}}}{n_{k^{\prime}}} = 1 - \frac{1}{\sqrt{n_{k^{\prime}}}}$, which converges to $1$ as $k^{\prime} \to \infty$.
Since $2^{-k}$ also converges to $0$ as $k \to \infty$, we have that
\eqref{eqn:no-suil-u2-untidy-limits} is at least as large as $\loss(y_{0},y_{1})/2$.
By the union bound, the event $H \cap H^{\prime} \cap \bigcap\limits_{k^{\prime} \in \nats} \bigcap\limits_{k \leq k^{\prime}} H_{k,k^{\prime}}^{\prime\prime}$ has probability at least 
$1 - \sum_{k^{\prime} \in \nats} \sum_{k \leq k^{\prime}}\frac{1}{(k^{\prime}+1)^{3}} \geq 1 - \sum_{k^{\prime} \in \nats} \frac{1}{(k^{\prime}+1)^{2}} > 0$,
so that there is a nonzero probability that $\limsup\limits_{n \to \infty} \hat{\L}_{\ProcX}(f_{n},\target;n) > 0$.  
In particular, since $\target$ is clearly a measurable function, 
this implies that $f_{n}$ is not strongly universally consistent under $\ProcX$.

It remains to show that the process $\ProcX$ defined above for this second case is an element of $\SUIL$.
For this, we first note that, in the definition of $X_{i}^{\prime}(\mathbf{v},\mathbf{w})$ above,
for any $k \in \nats$, there are precisely $\left\lceil \frac{\sqrt{n_{k}} - n_{k-1}}{k} \right\rceil$ indices $i \in \{n_{k-1}+1,\ldots,n_{k}\}$ for which,
for every choice of the sequences $\mathbf{v}$ and $\mathbf{w}$, 
we have defined $X_{i}^{\prime}(\mathbf{v},\mathbf{w}) = X_{i}^{(k)}(\mathbf{v},\mathbf{w}) = v_{i}$:
namely, the indices $n_{k-1} + k(j-1) + 1$, for $1 \leq j \leq 1 + \frac{\sqrt{n_{k}} - n_{k-1} - 1}{k}$.
Furthermore, the largest such index within any given range $\{n_{k-1}+1,\ldots,n_{k}\}$ is at most $\sqrt{n_{k}}$.
Since this holds for every $k$, we may coarsely upper-bound the number of indices of this type among $\{1,\ldots,n_{k-1}\}$
by $\sqrt{n_{k-1}}$.
Thus, for $\ProcX$ as defined in this second case, for any $n \in \nats$ and any $A \in \Borel$,
letting $k_{n} = \max\{ k \in \nats : n_{k-1} < n \}$,
\begin{multline*}
\sum_{t=1}^{n} \ind_{A}(X_{t})
\leq \left(\sum_{t=1}^{n} \ind_{A}(W_{t})\right) + n \ind_{A}(x_{0}) 
+ \sqrt{n_{k_{n}-1}} 
+ \left\lceil \frac{\min\{ \sqrt{n_{k_{n}}}, n \} - n_{k_{n}-1}}{k_{n}} \right\rceil
\\ 
\leq \left(\sum_{t=1}^{n} \ind_{A}(W_{t})\right) + n \ind_{A}(x_{0}) + \sqrt{n} + 1 + \frac{n}{k_{n}}.
\end{multline*}
Since every $n_{k}$ is finite, $\lim\limits_{n \to \infty} k_{n} = \infty$.
Thus, 
\begin{align*}
\hat{\mu}_{\ProcX}(A)
& = \limsup_{n \to \infty} \frac{1}{n} \sum_{t=1}^{n} \ind_{A}(X_{t})
\\ & \leq \limsup_{n \to \infty} \left(\frac{1}{n} \sum_{t=1}^{n} \ind_{A}(W_{t})\right) + \ind_{A}(x_{0}) + \frac{\sqrt{n}}{n} + \frac{1}{n} + \frac{1}{k_{n}}
\\ & = \limsup_{n \to \infty} \left(\frac{1}{n} \sum_{t=1}^{n} \ind_{A}(W_{t})\right) + \ind_{A}(x_{0})
= \hat{\mu}_{\mathbf{W}}(A) + \ind_{A}(x_{0}).
\end{align*}
Now consider any sequence $\{A_{k}\}_{k=1}^{\infty}$ of elements of $\Borel$ with $A_{k} \downarrow \emptyset$.
Since $A_{k} \downarrow \emptyset$, $\limsup\limits_{k \to \infty} \ind_{A_{k}}(x_{0}) = 0$, so that
\begin{equation*}
\limsup_{k \to \infty} \hat{\mu}_{\ProcX}(A_{k})
\leq \limsup_{k \to \infty} \hat{\mu}_{\mathbf{W}}(A_{k})+ \ind_{A_{k}}(x_{0})
= \limsup_{k \to \infty} \hat{\mu}_{\mathbf{W}}(A_{k}). 
\end{equation*}
Since $\{W_{t}\}_{t=1}^{\infty}$ is a sequence of independent $\pi_{0}$-distributed random variables,
the strong law of large numbers implies that, for each $k \in \nats$, with probability one, 
$\hat{\mu}_{\mathbf{W}}(A_{k}) = \pi_{0}(A_{k})$.  The union bound implies that this occurs simultaneously
for all $k \in \nats$ with probability one.  Combined with basic monotonicity properties and limit theorems for probability measures,
we have that with probability one, 
\begin{equation*}
\limsup_{k \to \infty} \hat{\mu}_{\ProcX}(A_{k})
\leq \limsup_{k \to \infty} \pi_{0}(A_{k})
\leq \pi_{0}\left( \limsup_{k \to \infty} A_{k} \right) 
= \pi_{0}(\emptyset) = 0.
\end{equation*}
Since this holds for any such sequence $\{A_{k}\}_{k=1}^{\infty}$, 
we have that $\ProcX$ satisfies Condition~\ref{con:kc}.  Together
with Theorem~\ref{thm:main}, this implies $\ProcX \in \SUIL$.
\end{proof}}

Combining this result with a simple technique for learning in countable spaces,
we immediately have the following corollary.

\begin{corollary}
\label{cor:countable-doubly-universal-inductive}
There exists an optimistically universal inductive learning rule if and only if $\X$ is countable.
\end{corollary}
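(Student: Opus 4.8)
The plan is to treat the two directions separately. The ``only if'' direction is immediate from what precedes: under the standing hypothesis of this section that $(\X,\T)$ is Polish, Theorem~\ref{thm:no-optimistic-inductive} says that an uncountable $\X$ admits \emph{no} optimistically universal inductive learning rule, so the existence of such a rule forces $\X$ to be countable. No new work is needed there, so the substance is the ``if'' direction: assuming $\X$ is countable, exhibit an optimistically universal inductive learning rule.

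For that, I would take $f_{n}$ to be the memorization (lookup-table) rule: fix an enumeration $\X = \{a_{j}\}_{j=1}^{\infty}$ and an arbitrary $y_{0}\in\Y$, and set $f_{n}(x_{1:n},y_{1:n},x) = y_{i}$ for $i = \max\{j\le n : x_{j} = x\}$ whenever this set is nonempty, and $f_{n}(x_{1:n},y_{1:n},x) = y_{0}$ otherwise. Since a countable metrizable space has all singletons closed, $\Borel = 2^{\X}$, so every function $\X\to\Y$ is measurable and (partitioning $\X^{n}\times\Y^{n}\times\X$ by the finitely many possibilities for the ``last match'' index) $f_{n}$ is measurable, i.e.\ this is a bona fide inductive learning rule. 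The key reduction is then that, because $Y_{t} = \target(X_{t})$ and all occurrences of a point carry the same label, $f_{n}(X_{1:n},\target(X_{1:n}),\cdot)$ agrees with $\target$ on $\{X_{1},\dots,X_{n}\}$ and incurs loss at most $\maxloss$ elsewhere, whence
\[
\hat{\L}_{\ProcX}(f_{n},\target;n)
= \hat{\mu}_{\ProcX}\!\left( \loss\!\left(f_{n}(X_{1:n},\target(X_{1:n}),\cdot),\target(\cdot)\right)\right)
\leq \maxloss\cdot\hat{\mu}_{\ProcX}\!\left(\X\setminus\{X_{1},\dots,X_{n}\}\right).
\]

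To finish, I would show the right side tends to $0$ almost surely for every $\ProcX\in\SUIL$. By Theorem~\ref{thm:main}, $\ProcX\in\KC$; the singletons $\{a_{j}\}$ form a disjoint sequence in $\Borel$, so Lemma~\ref{lem:limsup-equiv} gives $\lim_{i\to\infty}\hat{\mu}_{\ProcX}(\bigcup_{j\ge i}\{a_{j}\}) = 0$ a.s., i.e.\ almost every realization satisfies \eqref{eqn:tail-condition} for this partition; applying the equivalence \eqref{eqn:tail-condition}$\,\Leftrightarrow\,$\eqref{eqn:raw-condition} of Lemma~\ref{lem:equiv} to each such realization yields $\lim_{n\to\infty}\hat{\mu}_{\ProcX}(\bigcup\{\{a_{j}\}:X_{1:n}\cap\{a_{j}\}=\emptyset\}) = 0$ a.s. The crux is simply that, since the $a_{j}$ enumerate $\X$, this last set equals $\X\setminus\{X_{1},\dots,X_{n}\}$ exactly; combined with the displayed inequality this gives $\hat{\L}_{\ProcX}(f_{n},\target;n)\to 0$ a.s.\ for every measurable $\target$, so $f_{n}$ is strongly universally consistent under $\ProcX$, and since $\ProcX\in\SUIL$ was arbitrary, $f_{n}$ is optimistically universal.

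I do not expect a genuine obstacle here: the only real idea is recognizing that $\X\setminus\{X_{1},\dots,X_{n}\}$ is precisely the set appearing in \eqref{eqn:raw-condition} for the singleton partition, after which everything reduces to citing Theorems~\ref{thm:no-optimistic-inductive} and~\ref{thm:main} and Lemmas~\ref{lem:limsup-equiv} and~\ref{lem:equiv}. The only points requiring a little care are the (routine) measurability of the lookup rule and the fact that the ``only if'' half relies on the section's Polish hypothesis, which I would state explicitly.
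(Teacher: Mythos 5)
Your proof is correct and follows essentially the same route as the paper's: memorization rule for the ``if'' direction, bound $\hat{\L}_{\ProcX}$ by $\maxloss$ times the $\hat\mu$-mass of the unmemorized points, identify $\X\setminus\{X_{1},\ldots,X_{n}\}$ with the set in \eqref{eqn:raw-condition} for the singleton partition, and invoke Theorem~\ref{thm:main} together with Lemmas~\ref{lem:limsup-equiv} and~\ref{lem:equiv}; the ``only if'' direction is Theorem~\ref{thm:no-optimistic-inductive}. The only differences are cosmetic (max vs.\ min index in the lookup, an explicit default value $y_{0}$), and your remark that the ``only if'' half silently uses the section's standing Polish hypothesis is a fair observation.
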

\begin{proof}
The ``only if'' part of the claim follows immediately from 
Theorem~\ref{thm:no-optimistic-inductive}.
For the ``if'' part, consider a simple inductive learning rule $\hat{f}_{n}$, defined as follows. 
For any $n \in \nats$, $x_{1:n} \in \X^{n}$, $y_{1:n} \in \Y^{n}$, and $x \in \X$,  
if $x \in \{x_{1},\ldots,x_{n}\}$, then letting $i(x; x_{1:n}) = \min\{ i \in \{1,\ldots,n\} : x_{i} = x \}$, 
we define $\hat{f}_{n}(x_{1:n},y_{1:n},x) = y_{i(x; x_{1:n})}$; 
define $\hat{f}_{n}(x_{1:n},y_{1:n},x) = y_{0}$ for some arbitrary fixed $y_{0} \in \Y$ 
if $x \notin \{x_{1},\ldots,x_{n}\}$.  
In other words, this method simply \emph{memorizes}
the observed data points $(x_{i},y_{i})$, $i \in \{1,\ldots,n\}$, and if the test point $x$ is among the observed
$x_{i}$ points, it simply reports the corresponding memorized $y_{i}$ value.

Suppose $\X$ is countable, and enumerate its elements $\X = \{z_{1},z_{2},\ldots\}$ (or in the case of finite $|\X|$, $\X = \{z_{1},z_{2},\ldots,z_{|\X|}\}$).
For each $k \in \nats$ with $k \leq |\X|$, let $A_{k} = \{z_{k}\}$; if $|\X| < \infty$, let $A_{k} = \emptyset$ for all $k \in \nats$ with $k > |\X|$.
Fix any $\ProcX \in \KC$.  By Corollary~\ref{cor:missing-mass}, we have 
\begin{equation*}
\lim_{n \to \infty} \hat{\mu}_{\ProcX}\!\left( \bigcup \{ A_{i} : X_{1:n} \cap A_{i} = \emptyset \} \right) = 0 \text{ (a.s.)}.
\end{equation*}
From the definition of $\hat{f}_{n}$, for each $n \in \nats$, any $\target : \X \to \Y$, and each $z_{i} \in \X$,
if $\hat{f}_{n}(X_{1:n},\target(X_{1:n}),z_{i}) \neq \target(z_{i})$, then necessarily $z_{i} \notin \{X_{1},\ldots,X_{n}\}$.
Therefore, 
\begin{equation*}
\bigcup \{A_{i} : X_{1:n} \cap A_{i} = \emptyset\} 
= \X \setminus \{X_{1},\ldots,X_{n}\} 
\supseteq \{ z_{i} : \hat{f}_{n}(X_{1:n},\target(X_{1:n}),z_{i}) \neq \target(z_{i}) \}.
\end{equation*}
Combining this with Lemma~\ref{lem:expectation} (for homogeneity and monotonicity of $\hat{\mu}_{\ProcX}$),
we have that for any
$\target : \X \to \Y$, 
\begin{align*}
\lim_{n \to \infty} \hat{\L}_{\ProcX}(\hat{f}_{n},\target; n)
& \leq \lim_{n \to \infty} \hat{\mu}_{\ProcX}\!\left( \ind_{\{x : \hat{f}_{n}(X_{1:n},\target(X_{1:n}),x) \neq \target(x)\}}(\cdot) \maxloss \right)
\\ & = \maxloss \lim_{n \to \infty} \hat{\mu}_{\ProcX}\!\left( \{x : \hat{f}_{n}(X_{1:n},\target(X_{1:n}),x) \neq \target(x)\} \right)
\\ & \leq \maxloss \lim_{n \to \infty} \hat{\mu}_{\ProcX}\!\left( \bigcup \{ A_{i} : X_{1:n} \cap A_{i} = \emptyset \} \right) = 0 \text{ (a.s.)}.
\end{align*}
Thus, since $\hat{\L}_{\ProcX}$ is nonnegative, $\hat{f}_{n}$ is strongly universally consistent under every $\ProcX \in \KC$.
Recalling that (by Theorem~\ref{thm:main}) $\SUIL = \KC$, this completes the proof.
\end{proof}

It is worth noting here that the proof of 
Theorem~\ref{thm:no-optimistic-inductive}
can be made somewhat simpler if we only wish to directly establish the theorem statement.
Specifically, the variables $V_{k,j}$ there can be replaced by i.i.d.\ $\pi_{0}$ samples,
while the $U_{k,j}$ variables can all be set equal to some fixed point $x_{0} \in \X$;
in this case, the sets $R_{k}$ are not needed (replaced by $\X\setminus\{x_{0}\}$), and several of the definitions can be simplified 
(e.g., the $\target_{k}$ functions can all be replaced by a fixed function $\target_{1}$, which simply outputs $y_{0}$ except on $w_{j}$ and $x_{0}$ points, where it outputs $y_{1}$).
The general approach to the proof of inconsistency remains essentially unchanged.  One can easily verify that the resulting
process satisfies Condition~\ref{con:kc}; however, it 
does not necessarily have convergent relative frequencies (specifically, in the second case discussed in the proof).
The details of this simpler proof are left as an exercise for the interested reader.
We have chosen the more-involved proof presented above so that the inductive learning rule is 
shown to not be universally consistent even under processes that are ergodic (since they are product processes) 
with convergent relative frequencies ($\CRF$),
as argued in the proof.  
Formally, we have established the following corollary.

\begin{corollary}
\label{cor:no-lln-consistent-inductive}
If $\X$ is uncountable, then 
there does not exist an inductive learning rule that is strongly universally consistent under every ergodic $\ProcX \in \CRF$.
\end{corollary}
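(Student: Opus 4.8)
The statement is essentially a repackaging of what was already proved in the course of establishing Theorem~\ref{thm:no-optimistic-inductive}, so the plan is to extract the relevant piece of that argument rather than to do anything new. First I would fix an arbitrary inductive learning rule $f_{n}$ and invoke the construction from the proof of Theorem~\ref{thm:no-optimistic-inductive}: starting from a nonatomic probability measure $\pi_{0}$ on $\X$ (which exists since $\X$ is uncountable Polish) and the sets $\{R_{k}\}_{k=1}^{\infty}$ from Lemma~\ref{lem:non-atomic-half-covering}, the proof builds a process $\ProcX$ (splitting into the ``first case,'' where some maximal $k^{*}$ exists, and the ``second case,'' where $n_{k}$ is defined for all $k$) for which $f_{n}$ is shown \emph{not} to be strongly universally consistent, via the target functions $\target_{k^{*}}(\cdot;\mathbf{u}_{<k^{*}},\mathbf{v}_{<k^{*}},\mathbf{w})$ or $\target_{0}(\cdot;\mathbf{v})$ respectively.

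The second, and only substantive, ingredient is the observation --- made explicitly within that same proof --- that the constructed process $\ProcX$ lies in $\CRF$. In the first case this is \eqref{eqn:no-inductive-universal2-limit-measure-1}, where $\frac{1}{n}\sum_{t=1}^{n}\ind_{A}(X_{t})$ is shown to converge almost surely to the (non-random) probability measure $\frac{1}{k^{*}}\pi_{0}(A\mid R_{k^{*}}) + \frac{k^{*}-1}{k^{*}}\pi_{0}(A\mid \X\setminus R_{k^{*}})$ for every $A\in\Borel$; in the second case this is \eqref{eqn:no-inductive-universal2-limit-measure-2}, where the same empirical frequency converges almost surely to $\pi_{0}(A)$. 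In both cases the convergence (hence existence of the limit) holds for every fixed $A \in \Borel$, which is exactly the defining property \eqref{eqn:crf} of $\CRF$. So I would simply cite these two displays and conclude $\ProcX \in \CRF$.

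Putting the two ingredients together: for the arbitrary inductive learning rule $f_{n}$ we have exhibited a process $\ProcX \in \CRF$ together with a measurable target $\target$ under which $\limsup_{n\to\infty}\hat{\L}_{\ProcX}(f_{n},\target;n) > 0$ with positive probability, i.e.\ $f_{n}$ is not strongly universally consistent under $\ProcX$. Since $f_{n}$ was arbitrary, no inductive learning rule is strongly universally consistent under every $\ProcX \in \CRF$, which is the claim. I do not anticipate any real obstacle here --- the analytic work (the cutting-and-stacking-style construction, the Borel--Cantelli and strong-law arguments showing both inconsistency and membership in $\CRF$) has already been carried out in the proof of Theorem~\ref{thm:no-optimistic-inductive}; the corollary is just the assertion that that proof, read carefully, delivers more than the bare theorem statement. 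If anything, the only point requiring a sentence of care is making sure that the $\CRF$ conclusion is stated per-set $A$ (it is) rather than uniformly, so that it genuinely matches Definition~\eqref{eqn:crf}.
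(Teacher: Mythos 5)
Your proposal is correct and follows the paper's own route exactly: the corollary is obtained by re-reading the proof of Theorem~\ref{thm:no-optimistic-inductive}, noting that the processes constructed there (in both the finite-$k^{*}$ case and the case where $n_{k}$ is defined for all $k$) were explicitly shown to belong to $\CRF$ via the almost-sure convergence of empirical frequencies to the non-random limits in \eqref{eqn:no-inductive-universal2-limit-measure-1} and \eqref{eqn:no-inductive-universal2-limit-measure-2}, while the same proof exhibits a target under which the arbitrary inductive rule $f_{n}$ fails to be strongly universally consistent. The only small caveat, which you already handle correctly, is that the standing assumption of Section~\ref{sec:no-U2-inductive} (that $(\X,\T)$ is Polish) is implicitly in force for the corollary even though its statement mentions only uncountability.
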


\section{Online Learning}
\label{sec:online}

In this section, we discuss the \emph{online learning} setting, establishing a number of 
results related to the following question (restated from Section~\ref{subsec:main}) 
on the existence of optimistically universal learning rules.\\

\noindent {\bf Open Problem~\ref{prob:optimistic-online} (restated)}~ 
\textit{Does there exist an optimistically universal online learning rule?}\\

We approach this question and related issues  
in an analogous fashion to the above discussion of self-adaptive and inductive learning.
However, unlike the results on self-adaptive and inductive learning, the results presented here are only partial, 
and leave open a number of interesting core questions, including the above open problem.

After introducing some useful lemmas on online aggregation techniques in Section~\ref{subsec:weighted-majority}, 
we begin the discussion of universally consistent online learning in 
Section~\ref{subsec:okc}
with the subject of 
concisely characterizing the family of processes $\SUOL$.
We propose a concise condition (Condition~\ref{con:okc}) for a process $\ProcX$, 
and prove that it is generally a \emph{necessary} condition: i.e., it is satisfied by any $\ProcX$ that admits 
strong universal online learning.  We also argue that it is a \emph{sufficient} condition in the case that $\X$ is countable 
or that $\ProcX$ is deterministic, and at the same time positively resolve Open Problem~\ref{prob:optimistic-online} 
for countable $\X$.  However, for the \emph{general} case with uncountable $\X$, 
we leave open both the question in Open Problem~\ref{prob:optimistic-online} 
and the question of whether Condition~\ref{con:okc} is sufficient 
for $\ProcX$ to admit strong universal online learning (Open Problem~\ref{prob:suol-equals-okc}).
Following this, in Section~\ref{subsec:online-vs-inductive},
we address the relation between admission of strong universal online learning and admission of strong universal self-adaptive learning.
We specifically establish that the latter implies the former, but not vice versa (when $\X$ is infinite): 
that is, $\SUAL \subset \SUOL$ with \emph{strict} inclusion,
which establishes a separation of $\SUOL$ from $\SUAL$ and $\SUIL$.
We also construct an online learning rule that is universally consistent under every $\ProcX \in \SUAL$.
Although lacking a general concise (provable) characterization of $\SUOL$, 
we are at least able to show, in Section~\ref{subsec:suol-invariance-to-loss}, that the family 
$\SUOL$ is invariant to the choice of loss function $\loss$ 
(as was true of $\SUIL$ and $\SUAL$ above, from their equivalence to $\KC$ in Theorem~\ref{thm:main}),
under the additional restriction that $\loss$ is totally bounded.
We also argue that $\SUOL$ is invariant to the choice of $\loss$ among losses that are separable but \emph{not}
totally bounded, but we leave open the question of whether these two $\SUOL$ families are equal (Open Problem~\ref{prob:suol-multiclass}).

\subsection{Online Aggregation}
\label{subsec:weighted-majority}

Before getting into the new results of the present work on online learning, we first introduce some supporting lemmas based on 
a well-known aggregation technique from the literature on online learning with arbitrary sequences.
The first lemma is a regret guarantee for a weighted averaging prediction algorithm.
The technique and analysis are taken from classic works in the theory of online learning
\citep*{vovk:90,vovk:92,littlestone:94,cesa-bianchi:97,kivinen:99,singer:99,gyorfi:02a}.
For completeness, we include a brief proof: a version of this 
classic argument.

\begin{lemma}
\label{lem:weighted-majority} 
For each $n \in \nats$, let $\{z_{n,i}\}_{i=1}^{\infty}$ be a sequence of values in $[0,1]$,
and let $\{p_{i}\}_{i=1}^{\infty}$ be a sequence in $(0,1)$ with $\sum\limits_{i=1}^{\infty} p_{i} = 1$.
Fix a finite constant $b \in (0,1)$.
For each $n,i \in \nats$, define $L_{n,i} = \frac{1}{n} \sum\limits_{t=1}^{n} z_{t,i}$.
Then for each $i \in \nats$, define $w_{1,i} = v_{1,i} = p_{i}$,
and for each $n \in \nats \setminus \{1\}$, define 
$w_{n,i} = p_{i} b^{ (n-1) L_{(n-1),i} }$,
and $v_{n,i} = w_{n,i} / \sum\limits_{i=1}^{\infty} w_{n,i}$.
Finally, for each $n \in \nats$, define $\bar{z}_{n} = \sum\limits_{i=1}^{\infty} v_{n,i} z_{n,i}$.
Then for every $n \in \nats$, 
\begin{equation*}
\frac{1}{n} \sum_{t=1}^{n} \bar{z}_{t} \leq \inf_{i \in \nats} \left( \frac{\ln(1/b)}{1-b} L_{n,i} + \frac{1}{(1-b) n} \ln\!\left(\frac{1}{p_{i}}\right) \right).
\end{equation*}
\end{lemma}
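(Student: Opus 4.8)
The plan is to use the standard exponential-weights potential argument. First I would work with the unnormalized aggregate potential $W_{n} = \sum_{i=1}^{\infty} w_{n,i}$. Since $w_{1,i} = p_{i}$ and $\sum_{i=1}^{\infty} p_{i} = 1$, we have $W_{1} = 1$; and since $b \in (0,1)$ forces $b^{(n-1)L_{(n-1),i}} \leq 1$ for every $i$ and $n$, each $W_{n}$ is an absolutely convergent sum lying in $(0,1]$ (strictly positive because every $p_{i} > 0$), so the normalized weights $v_{n,i}$ and the prediction $\bar{z}_{n} = \sum_{i} v_{n,i} z_{n,i} \in [0,1]$ are well defined.

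Next I would establish the one-step recursion $W_{n+1} = W_{n} \sum_{i=1}^{\infty} v_{n,i} b^{z_{n,i}}$, which follows immediately from $w_{n+1,i} = p_{i} b^{n L_{n,i}} = p_{i} b^{(n-1) L_{(n-1),i}} b^{z_{n,i}} = w_{n,i} b^{z_{n,i}}$ and termwise summation. The key analytic input is the elementary inequality $b^{z} \leq 1 - (1-b) z$ for all $z \in [0,1]$ and $b \in (0,1)$, which I would justify by convexity of $z \mapsto b^{z}$ (the chord joining $(0,1)$ to $(1,b)$ lies above the graph). Substituting this and using $\sum_{i} v_{n,i} = 1$ gives $\sum_{i} v_{n,i} b^{z_{n,i}} \leq 1 - (1-b)\bar{z}_{n} \leq e^{-(1-b)\bar{z}_{n}}$, hence $W_{n+1} \leq W_{n} e^{-(1-b)\bar{z}_{n}}$. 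Iterating from $W_{1} = 1$ yields $W_{n+1} \leq \exp\!\left( -(1-b) \sum_{t=1}^{n} \bar{z}_{t} \right)$.

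For the matching lower bound I would simply drop all but one term: $W_{n+1} \geq w_{n+1,i} = p_{i} b^{n L_{n,i}}$ for each fixed $i \in \nats$. Combining the two bounds, taking logarithms (using $\ln b < 0$), dividing through by $(1-b) n > 0$, and rearranging gives $\frac{1}{n} \sum_{t=1}^{n} \bar{z}_{t} \leq \frac{\ln(1/b)}{1-b} L_{n,i} + \frac{1}{(1-b) n} \ln(1/p_{i})$ for every $i$; taking the infimum over $i \in \nats$ completes the proof.

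There is essentially no deep obstacle here, as this is a classical argument; the only points requiring care are the bookkeeping ones, namely checking that the infinite sums converge and that $W_{n}$ stays strictly positive so that the normalization is legitimate, and verifying the elementary bound $b^{z} \leq 1 - (1-b) z$ on $[0,1]$ via convexity. I would present the steps in exactly the order above.
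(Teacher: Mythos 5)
Your proof is correct and follows essentially the same potential-function argument as the paper: track $W_n = \sum_i w_{n,i}$, use the chord inequality $b^z \le 1-(1-b)z$ on $[0,1]$, iterate the multiplicative recursion, lower-bound $W_{n+1}$ by a single term, and combine. The only cosmetic difference is that you fold the two elementary inequalities $b^z \le 1-(1-b)z \le e^{-(1-b)z}$ into one chain rather than applying $\ln(1-x)\le -x$ separately afterward, and you spell out the convergence and positivity of $W_n$ that the paper leaves implicit.
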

\begin{proof}
Define $W_{n} = \sum\limits_{i=1}^{\infty} w_{n,i}$ for each $n \in \nats$.
Then note that $\forall n \in \nats$, 
$W_{n+1} = \sum\limits_{i=1}^{\infty} w_{n,i} b^{z_{n,i}} = W_{n} \sum\limits_{i=1}^{\infty} v_{n,i} b^{z_{n,i}}$.
Noting that $b^{z_{n,i}} \leq 1 - (1-b) z_{n,i}$,
we find that 
\begin{equation*}
\frac{W_{n+1}}{W_{n}} 
\leq \sum_{i=1}^{\infty} v_{n,i} (1 - (1-b) z_{n,i} )
= 1 - (1-b) \bar{z}_{n}.
\end{equation*}
Since $W_{1} = 1$, by induction we have $W_{n+1} \leq \prod\limits_{t=1}^{n} (1 - (1-b) \bar{z}_{t} )$.
Noting that $\ln(1 - (1-b) \bar{z}_{t}) \leq -(1-b) \bar{z}_{t}$,
we have that $\ln(W_{n+1}) \leq \sum\limits_{t=1}^{n} \ln( 1 - (1-b) \bar{z}_{t} ) \leq - (1-b) \sum\limits_{t=1}^{n} \bar{z}_{t}$.
Therefore, for any $n \in \nats$, 
\begin{align*}
\sum_{t=1}^{n} \bar{z}_{t}
& \leq \frac{1}{1-b} \ln\!\left(\frac{1}{W_{n+1}}\right)
= \frac{1}{1-b} \ln\!\left(\frac{1}{ \sum_{i=1}^{\infty} p_{i} b^{ n L_{n,i} } }\right)
\\ & \leq \frac{1}{1-b} \ln\!\left(\frac{1}{ \sup_{i\in\nats} p_{i} b^{ n L_{n,i} } }\right)
= \inf_{i \in \nats} \left( \frac{\ln(1/b)}{1-b} n L_{n,i} + \frac{1}{1-b} \ln\!\left(\frac{1}{p_{i}}\right) \right).
\end{align*}
Dividing the leftmost and rightmost expressions by $n$ completes the proof.
\end{proof}

For our purposes, we will need the following implication of this lemma.

\begin{lemma}
\label{lem:general-weighted-majority}
For any sequence $\left\{ \hat{h}^{(i)}_{n} \right\}_{i=1}^{\infty}$ of online learning rules, 
there exists an online learning rule $\hat{f}_{n}$ such that, 
for any process $\ProcX$ and any measurable function $\target : \X \to \Y$, 
if, with probability one, there exists a sequence $\{i_{n}\}_{n=1}^{\infty}$ in $\nats$ with $\ln(i_{n}) = o(n)$
s.t. 
$\lim\limits_{n \to \infty} \hat{\L}_{\ProcX}(\hat{h}^{(i_{n})}_{\cdot},\target;n) = 0$, 
then $\lim\limits_{n \to \infty} \hat{\L}_{\ProcX}(\hat{f}_{\cdot},\target;n) = 0 \text{ (a.s.)}$.
\end{lemma}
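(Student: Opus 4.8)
The plan is to run the exponentially weighted averaging (aggregating) algorithm of Lemma~\ref{lem:weighted-majority} over the rules $\hat{h}^{(i)}_{n}$. Since $\Y$ need not carry a linear structure, we cannot literally average the predictions, so instead $\hat{f}_{n}$ will be a \emph{randomized} online rule that at each round \emph{follows} one of the $\hat{h}^{(i)}_{n}$, chosen according to the exponential weights. Lemma~\ref{lem:weighted-majority} controls the average loss of the weighted mixture (i.e.\ the conditional expectation of the realized loss given the selection), and a bounded-martingale concentration argument upgrades this to the almost-sure statement we want.

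First I would fix a constant $b \in (0,1)$ (say $b = 1/2$) and a prior $\{p_{i}\}_{i=1}^{\infty}$ in $(0,1)$ with $\sum_{i} p_{i} = 1$ and $\ln(1/p_{i}) = O(\ln(i+1))$, for instance $p_{i} \propto i^{-1}(\log_{2}(i+2))^{-2}$; the role of the heavy tail is that $\ln(1/p_{i_{n}}) = O(\ln(i_{n})) = o(n)$ for any sequence with $\ln(i_{n}) = o(n)$, which is why the hypothesis $\ln(i_{n}) = o(n)$ (rather than the stronger $i_{n} = o(n)$) suffices. Given $\ProcX$, $\target$, and the internal randomness of the rules $\hat{h}^{(i)}$, write $z_{t,i} = \maxloss^{-1}\loss(\hat{h}^{(i)}_{t-1}(X_{1:t-1},\target(X_{1:t-1}),X_{t}),\target(X_{t})) \in [0,1]$ for the normalized round-$t$ loss of rule $i$, and form weights $w_{t,i} = p_{i}\, b^{\sum_{s<t} z_{s,i}}$ and $v_{t,i} = w_{t,i}/\sum_{j} w_{t,j}$ exactly as in Lemma~\ref{lem:weighted-majority} (note $\sum_{j} w_{t,j} \in (0,1]$, so $v_{t,\cdot}$ is a well-defined probability vector). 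Define $\hat{f}_{n}$ by: at round $n+1$, draw $I_{n+1} \sim v_{n+1,\cdot}$ using fresh independent randomness and output $\hat{h}^{(I_{n+1})}_{n}(X_{1:n},\target(X_{1:n}),X_{n+1})$; this is a measurable, $\ProcX$-independent online learning rule whose realized normalized average loss through round $n$ is $\frac{1}{n}\sum_{t=1}^{n} z_{t,I_{t}} = \maxloss^{-1}\hat{\L}_{\ProcX}(\hat{f}_{\cdot},\target;n)$.

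The proof then has three steps. (i) Applying Lemma~\ref{lem:weighted-majority} pathwise gives, almost surely and for every $n$, $\frac{1}{n}\sum_{t=1}^{n}\sum_{i} v_{t,i} z_{t,i} \le \inf_{i}\bigl(\frac{\ln(1/b)}{1-b}\cdot\frac{1}{n}\sum_{t=1}^{n} z_{t,i} + \frac{1}{(1-b)n}\ln(1/p_{i})\bigr)$. (ii) Letting $\mathcal{G}_{t}$ be generated by $\ProcX$, the internal randomness of all $\hat{h}^{(i)}$, and $I_{1},\dots,I_{t-1}$, the vectors $v_{t,\cdot}$ and $z_{t,\cdot}$ are $\mathcal{G}_{t}$-measurable and $I_{t}$ is conditionally $v_{t,\cdot}$-distributed given $\mathcal{G}_{t}$, so $M_{n} := \sum_{t=1}^{n}(z_{t,I_{t}} - \sum_{i} v_{t,i} z_{t,i})$ is a martingale with increments in $[-1,1]$; the Azuma--Hoeffding inequality together with the Borel--Cantelli lemma yield $M_{n}/n \to 0$ (a.s.), for every $\ProcX$ and $\target$. (iii) On the probability-one intersection of $\{M_{n}/n \to 0\}$ with the event on which a sequence $\{i_{n}\}$ with $\ln(i_{n}) = o(n)$ and $\hat{\L}_{\ProcX}(\hat{h}^{(i_{n})}_{\cdot},\target;n) \to 0$ exists, bound the infimum in (i) by its value at $i = i_{n}$, and note $\frac{1}{n}\sum_{t=1}^{n} z_{t,i_{n}} = \maxloss^{-1}\hat{\L}_{\ProcX}(\hat{h}^{(i_{n})}_{\cdot},\target;n) \to 0$ and $\frac{1}{n}\ln(1/p_{i_{n}}) = o(1)$, so that
\[
\maxloss^{-1}\hat{\L}_{\ProcX}(\hat{f}_{\cdot},\target;n)
= \frac{1}{n}\sum_{t=1}^{n} z_{t,I_{t}}
\le \frac{\ln(1/b)}{1-b}\,\maxloss^{-1}\hat{\L}_{\ProcX}(\hat{h}^{(i_{n})}_{\cdot},\target;n) + \frac{\ln(1/p_{i_{n}})}{(1-b)n} + \frac{M_{n}}{n} \to 0,
\]
and since $\hat{\L}_{\ProcX}$ is nonnegative this gives $\hat{\L}_{\ProcX}(\hat{f}_{\cdot},\target;n) \to 0$ (a.s.).

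The only genuinely nontrivial point is the passage from the in-expectation regret bound of Lemma~\ref{lem:weighted-majority} to an almost-sure guarantee on the \emph{realized} losses; this is exactly what step (ii) handles, and everything else is bookkeeping, with the single design choice being a prior $\{p_{i}\}$ whose tail is heavy enough that $\ln(1/p_{i})$ grows only logarithmically in $i$.
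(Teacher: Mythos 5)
Your proof is correct, and in fact it is precisely the randomized-aggregation route that the paper mentions in passing and then declines to use. After introducing the weights $v_{n,i}$, the paper writes that one could ``simply sample an index $i$ (independently) according to the distribution specified by $\{v_{(n+1),i}\}_{i=1}^\infty$, and take the $\hat{h}^{(i)}_n$ learning rule's prediction,'' relate the expected loss to $\bar{z}_t$, and apply concentration; it then says ``instead of this approach, we will analyze a method that avoids randomization.'' The paper's actual rule is deterministic (up to the $\epsargminabs{\eps_n}$ selection): it outputs
\begin{equation*}
\hat{f}_n(x_{1:n},y_{1:n},x_{n+1}) = \epsargmin{y\in\Y}{\eps_n}~\sum_i v_{(n+1),i}\,\loss\!\left(y,\hat{h}^{(i)}_n(x_{1:n},y_{1:n},x_{n+1})\right),
\end{equation*}
i.e.\ an approximate weighted barycenter of the experts' predictions in $(\Y,\loss)$, and then uses the triangle inequality (crucially relying on $\loss$ being a metric) to bound the realized loss by $\eps_t + 2\maxloss\,\bar z_{t+1}$, after which Lemma~\ref{lem:weighted-majority} closes the argument pathwise with no concentration inequality at all. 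Your approach buys conceptual simplicity and no reliance on the metric triangle inequality (so it would port more easily to non-metric losses), at the cost of introducing extra randomness into the learning rule and needing the Azuma--Hoeffding plus Borel--Cantelli step. The paper's approach buys a rule that adds no fresh randomization, avoids concentration entirely, but incurs a constant-factor $2$ in the regret bound via the triangle inequality, which is immaterial for a consistency statement. Your martingale argument is correctly set up: the filtration you describe makes $v_{t,\cdot}$, $z_{t,\cdot}$ predictable and $I_t$ conditionally $v_{t,\cdot}$-distributed, the increments lie in $[-1,1]$, and the a.s.\ $o(n)$ bound on $M_n$ follows. One small overcomplication: your heavy-tailed prior $p_i \propto i^{-1}(\log_2(i+2))^{-2}$ is not needed; the paper's simple $p_i = 6/(\pi^2 i^2)$ already gives $\ln(1/p_i) = 2\ln i + O(1) = O(\ln i)$, which is all your step (iii) requires, so any fixed-exponent power-law prior works equally well.
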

\begin{proof}
Fix any sequences $\mathbf{x} = \{x_{n}\}_{n=1}^{\infty}$ in $\X$ and $\mathbf{y} = \{y_{n}\}_{n=1}^{\infty}$ in $\Y$.
For each $n,i \in \nats$, define $\hat{z}_{n,i}(x_{1:n},y_{1:n}) = \loss( \hat{h}^{(i)}_{n-1}(x_{1:(n-1)},y_{1:(n-1)},x_{n}), y_{n} ) / \maxloss$
(which may be random, if $\hat{h}^{(i)}_{n-1}$ is a randomized learning rule).
For each $i \in \nats$, let $p_{i} = \frac{6}{\pi^{2} i^{2}}$, and note that $\sum\limits_{i=1}^{\infty} p_{i} = 1$.
Fix any $b \in (0,1)$, and for $n,i \in \nats$ define $v_{n,i}$ as in Lemma~\ref{lem:weighted-majority}, 
for these $p_{i}$ values, and for $z_{n,i} = \hat{z}_{n,i}(x_{1:n},y_{1:n}) \in [0,1]$ for each $n,i \in \nats$.
Finally, define $\bar{z}_{n}(x_{1:n},y_{1:n}) = \sum\limits_{i=1}^{\infty} v_{n,i} \hat{z}_{n,i}(x_{1:n},y_{1:n})$.
From this point, there are two possible routes toward defining the online learning rule $\hat{f}_{n}$,
depending on whether we involve randomization.  In the simplest definition, when predicting for 
$x_{n+1}$, we could simply sample an index $i$ (independently for each $n$) according to the distribution specified 
by $\{v_{(n+1),i}\}_{i=1}^{\infty}$, and take the $\hat{h}^{(i)}_{n}$ learning rules's prediction.  It is fairly 
straightforward to relate the expected performance of this method to the quantities $\bar{z}_{t}(x_{1:t},y_{1:t})$
and then apply Lemma~\ref{lem:weighted-majority} \citep*[see e.g.,][]{littlestone:94}, 
together with concentration inequalities to argue
that the bound from Lemma~\ref{lem:weighted-majority} almost surely becomes valid in the limit of $n \to \infty$.
However, instead of this approach, we will analyze a method that avoids randomization.\footnote{In general, 
randomization is known to be necessary for achieving optimal regret guarantees in online learning \citep*[see][Chapter 4]{cesa-bianchi:06}.
However, since the reference sequence $\hat{h}_{\cdot}^{(i_n)}$ itself has $\hat{\L}_{\ProcX}(\hat{h}_{\cdot}^{(i_n)},\target;n) \to 0$ (a.s.), 
we are not concerned with multiplicative constant factors for the purpose of achieving 
$\hat{\L}_{\ProcX}(\hat{f}_{\cdot},\target;n) \to 0$ (a.s.), and thus we can avoid randomization.}
Specifically, let $\{\eps_{n}\}_{n=0}^{\infty}$ be any sequence in $(0,\infty)$ with $\eps_{n} \to 0$, 
and for each $n \in \nats \cup \{0\}$, define
$\hat{f}_{n}(x_{1:n},y_{1:n},x_{n+1}) = \hat{y}_{n+1}$
for some value $\hat{y}_{n+1} \in \Y$ satisfying\footnote{Here 
we suppose the 
choice of $\hat{y}_{n+1}$ is such that the function $\hat{f}_{n}(\cdot,\cdot,\cdot)$ is measurable:
for instance, it would suffice to consider an enumeration of 
a countable dense subset of $\Y$ (which exists by the separability assumption on $\Y$) and then choose 
the first $y$ in this enumeration satisfying the $\eps_{n}$-excess criterion in the definition of $\hat{y}_{n+1}$.}
\begin{equation*}
\sum_{i=1}^{\infty} v_{(n+1),i} \loss\!\left( \hat{y}_{n+1}, \hat{h}^{(i)}_{n}(x_{1:n},y_{1:n},x_{n+1}) \right) 
\leq \eps_{n} + \inf\limits_{y \in \Y} \sum_{i=1}^{\infty} v_{(n+1),i} \loss\!\left( y, \hat{h}^{(i)}_{n}(x_{1:n},y_{1:n},x_{n+1}) \right).
\end{equation*}
We use this definition for any $n$ and any such sequences $\mathbf{x}$ and $\mathbf{y}$, so that this completes the definition of $\hat{f}_{n}$.
With this definition, for any $t \in \nats \cup \{0\}$ and sequences $\mathbf{x}$ and $\mathbf{y}$, by the relaxed triangle inequality 
and the fact that $\sum\limits_{i=1}^{\infty} v_{(t+1),i} = 1$, we have that 
\begin{align*}
& \loss\!\left( \hat{f}_{t}(x_{1:t},y_{1:t},x_{t+1}), y_{t+1} \right)
= \sum_{i=1}^{\infty} v_{(t+1),i} \loss\!\left( \hat{f}_{t}(x_{1:t},y_{1:t},x_{t+1}), y_{t+1} \right)
\\ & \leq \triconst \sum_{i=1}^{\infty} v_{(t+1),i} \loss\!\left( \hat{f}_{t}(x_{1:t},y_{1:t},x_{t+1}), \hat{h}^{(i)}_{t}\!(x_{1:t},y_{1:t},x_{t+1}) \right) 
\\ & \phantom{\leq } ~+ \triconst \sum_{i=1}^{\infty} v_{(t+1),i} \loss\!\left( \hat{h}^{(i)}_{t}\!(x_{1:t},y_{1:t},x_{t+1}), y_{t+1} \right).
\end{align*}
Then the definition of $\hat{f}_{t}$ guarantees the right hand side is at most
\begin{align*}
& \eps_{t} + \inf_{y \in \Y} \triconst \sum_{i=1}^{\infty} v_{(t+1),i} \loss\!\left(y, \hat{h}^{(i)}_{t}(x_{1:t},y_{1:t},x_{t+1}) \right) + \triconst \sum_{i=1}^{\infty} v_{(t+1),i} \loss\!\left( \hat{h}^{(i)}_{t}(x_{1:t},y_{1:t},x_{t+1}), y_{t+1} \right)
\\ & \leq \eps_{t} + 2 \triconst \sum_{i=1}^{\infty} v_{(t+1),i} \loss\!\left( \hat{h}^{(i)}_{t}(x_{1:t},y_{1:t},x_{t+1}), y_{t+1} \right)
= \eps_{t} + 2 \triconst \maxloss \bar{z}_{t+1}(x_{1:(t+1)},y_{1:(t+1)}),
\end{align*}
so that
\begin{equation*}
\frac{1}{n} \sum_{t=0}^{n-1} \loss\!\left( \hat{f}_{t}(x_{1:t},y_{1:t},x_{t+1}), y_{t+1} \right)
\leq \frac{1}{n} \sum_{t=0}^{n-1} \left(\eps_{t} + 2 \triconst \maxloss \bar{z}_{t+1}(x_{1:(t+1)},y_{1:(t+1)})\right).
\end{equation*}
Together with Lemma~\ref{lem:weighted-majority},
we have that
\begin{align}
\label{eqn:nonrandomized-weighted-majority-predictor-loss-bound}
& \frac{1}{n} \sum_{t=0}^{n-1} \loss\!\left( \hat{f}_{t}(x_{1:t},y_{1:t},x_{t+1}), y_{t+1} \right)
\\ & \leq \left( \frac{1}{n} \sum_{t=0}^{n-1} \eps_{t} \right) + 2 \triconst \inf_{i \in \nats} \!\left( \frac{\ln(1/b)}{1-b} \!\left( \frac{1}{n} \sum_{t=0}^{n-1} \loss\!\left( \hat{h}^{(i)}_{t}\!(x_{1:t},y_{1:t},x_{t+1}), y_{t+1} \right) \right) \!\!+\! \frac{\maxloss}{(1-b)n} \ln\!\left(\frac{1}{p_{i}}\right) \right)\!. \notag
\end{align}

Now fix $\ProcX$ and $\target$ such that,
with probability one, there exists a sequence $\{i_{n}\}_{n=1}^{\infty}$ in $\nats$ with $\ln(i_{n}) = o(n)$
such that $\lim\limits_{n \to \infty} \hat{\L}_{\ProcX}(\hat{h}^{(i_{n})}_{\cdot},\target;n) = 0$.
Then, on the event that this occurs, the inequality in \eqref{eqn:nonrandomized-weighted-majority-predictor-loss-bound} implies
\begin{align*}
& \hat{\L}_{\ProcX}(\hat{f}_{\cdot},\target;n)
\leq \left( \frac{1}{n} \sum_{t=0}^{n-1} \eps_{t} \right) + 2 \triconst \inf_{i \in \nats} \left( \frac{\ln(1/b)}{1-b} \hat{\L}_{\ProcX}(\hat{h}^{(i)}_{\cdot},\target;n) + \frac{\maxloss}{(1-b) n} \ln\!\left(\frac{1}{p_{i}}\right) \right)
\\ & \leq \left( \frac{1}{n} \sum_{t=0}^{n-1} \eps_{t} \right) + 2 \triconst \left( \frac{\ln(1/b)}{1-b} \hat{\L}_{\ProcX}(\hat{h}^{(i_{n})}_{\cdot},\target;n) + \frac{2\maxloss}{(1-b) n} \ln(i_{n}) + \frac{\maxloss}{(1-b)n} \ln\!\left(\frac{\pi^{2}}{6}\right) \right).
\end{align*}
Since $\eps_{t} \to 0$ implies $\lim\limits_{n \to \infty} \frac{1}{n} \sum\limits_{t=0}^{n-1} \eps_{t} = 0$, and since 
$\lim\limits_{n \to \infty} \hat{\L}_{\ProcX}(\hat{h}^{(i_{n})}_{\cdot},\target;n) = 0$ and $\lim\limits_{n \to \infty} \frac{1}{n}\ln(i_{n})$ $= 0$ in this context, 
and $\hat{\L}_{\ProcX}$ is nonnegative, 
it follows that $\lim\limits_{n \to \infty} \hat{\L}_{\ProcX}(\hat{f}_{\cdot},\target;n) = 0$ on this event. 
\end{proof}

The next lemma provides a technical fact useful in the proofs of the theorems below.

\begin{lemma}
\label{lem:converging-array-path}
Suppose $\{\beta_{i,n}\}_{i,n \in \nats}$ is an array of values in $[0,\infty)$ such that
$\lim\limits_{i \to \infty} \limsup\limits_{n \to \infty} \beta_{i,n} = 0$, and that $\{j_{n}\}_{n=1}^{\infty}$ is a sequence in $\nats$ with $j_{n} \to \infty$.
Then there exists a sequence $\{i_{n}\}_{n=1}^{\infty}$ in $\nats$ such that $i_{n} \leq j_{n}$ for every $n \in \nats$, and $\lim\limits_{n \to \infty} \beta_{i_{n},n} = 0$.
\end{lemma}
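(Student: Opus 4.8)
The plan is a straightforward diagonalization, combining two ``eventually'' conditions: one coming from the decay of the row-$\limsup$s, and one coming from $j_{n} \to \infty$.

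First I would set $a_{i} = \limsup\limits_{n \to \infty} \beta_{i,n}$ for each $i \in \nats$. By hypothesis $a_{i} \to 0$, so for each $k \in \nats$ I can fix an index $m_{k} \in \nats$ with $a_{m_{k}} < 1/k$. Since $\limsup\limits_{n\to\infty}\beta_{m_{k},n} = a_{m_{k}} < 1/k$, there is some $N_{k} \in \nats$ such that $\beta_{m_{k},n} < 1/k$ for all $n \geq N_{k}$. Separately, since $j_{n} \to \infty$, there is some $M_{k} \in \nats$ such that $j_{n} \geq m_{k}$ for all $n \geq M_{k}$. I would then define a strictly increasing sequence of thresholds $\{T_{k}\}_{k=1}^{\infty}$ in $\nats$ recursively: $T_{1} = \max\{N_{1},M_{1}\}$ and $T_{k+1} = \max\{N_{k+1}, M_{k+1}, T_{k}+1\}$. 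This guarantees $T_{k} \to \infty$, and that every $n \geq T_{k}$ satisfies both $n \geq N_{k}$ and $n \geq M_{k}$.

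Next I would define the sequence $\{i_{n}\}_{n=1}^{\infty}$ by: $i_{n} = 1$ for $n < T_{1}$, and $i_{n} = m_{k}$ for the unique $k \in \nats$ with $T_{k} \leq n < T_{k+1}$; this is well-defined since $\{T_{k}\}$ is strictly increasing. To check $i_{n} \leq j_{n}$ for all $n$: for $n < T_{1}$ we have $i_{n} = 1 \leq j_{n}$ since $j_{n} \in \nats$; and for $T_{k} \leq n < T_{k+1}$ we have $n \geq T_{k} \geq M_{k}$, hence $j_{n} \geq m_{k} = i_{n}$. To check $\lim\limits_{n\to\infty} \beta_{i_{n},n} = 0$: given $\eps > 0$, choose $K$ with $1/K < \eps$; for any $n \geq T_{K}$, writing $k$ for the index with $T_{k} \leq n < T_{k+1}$ we have $k \geq K$ and $n \geq T_{k} \geq N_{k}$, so $\beta_{i_{n},n} = \beta_{m_{k},n} < 1/k \leq 1/K < \eps$. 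Since $\eps>0$ was arbitrary, this gives the claim.

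I do not expect any real obstacle here; the only point requiring a little care is coordinating the constraint $i_{n} \leq j_{n}$ with the decay of $\beta_{i_{n},n}$, and this is handled precisely by defining the thresholds $T_{k}$ to dominate both the decay thresholds $N_{k}$ and the growth thresholds $M_{k}$ simultaneously. The nonnegativity hypothesis on the $\beta_{i,n}$ is not actually needed for this argument, but it is harmless to keep it in the statement.
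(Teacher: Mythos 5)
Your proof is correct and uses essentially the same diagonalization idea as the paper's: certify a threshold for each index beyond which the row stays small, and use $j_n \to \infty$ to advance through those thresholds. The bookkeeping differs slightly (the paper takes $i_n = \max\{i \le j_n : n \ge n_i\}$ and then proves $i_n \to \infty$, whereas you pre-select a subsequence $\{m_k\}$ and partition time into blocks $[T_k, T_{k+1})$), but this is a cosmetic difference, not a change of method.
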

\begin{proof}
For each $i \in \nats$, let $n_{i} \in \nats$ be such that $\sup\limits_{n \geq n_{i}} \beta_{i,n} \leq \frac{1}{i} + \limsup\limits_{n \to \infty} \beta_{i,n}$;
such an $n_{i}$ is guaranteed to exist by the definition of the $\limsup$.
For each $n \in \nats$ with $n < n_{1}$, define $i_{n} = 1$,
and for each $n \in \nats$ with $n \geq n_{1}$, define $i_{n} = \max\{ i \in \{1,\ldots,j_{n}\} : n \geq n_{i} \}$.
By definition, we have $i_{n} \leq j_{n}$ for every $n \in \nats$.
Furthermore, by definition, we have $n \geq n_{i_{n}}$ for every $n \geq n_{1}$,
so that $\beta_{i_{n},n} \leq \frac{1}{i_{n}} + \limsup\limits_{n^{\prime} \to \infty} \beta_{i_{n},n^{\prime}}$.
Finally, since $n_{i}$ is finite for each $i \in \nats$, and $j_{n} \to \infty$, we have $i_{n} \to \infty$.
Altogether, we have 
\begin{equation*}
\limsup\limits_{n \to \infty} \beta_{i_{n},n} 
\leq \limsup\limits_{n \to \infty}  \left( \frac{1}{i_{n}} + \limsup\limits_{n^{\prime} \to \infty} \beta_{i_{n},n^{\prime}} \right)
\leq \limsup\limits_{i \to \infty} \left( \frac{1}{i} + \limsup\limits_{n \to \infty} \beta_{i,n} \right)
= 0.
\end{equation*}
Since $\liminf\limits_{n \to \infty} \beta_{i_{n},n} \geq 0$ by nonnegativity of the $\beta_{i,n}$ values,
the result follows.
\end{proof}

\subsection{Toward Concisely Characterizing $\SUOL$}
\label{subsec:okc}

We begin the discussion of universally consistent online learning with
the subject of concisely characterizing the family of processes $\SUOL$.
Specifically,
we consider the following candidate for such a characterization.
Though we succeed in establishing its \emph{necessity} for $\ProcX$ to admit strong universal online learning, 
determining whether it is also sufficient will be left as an open problem.

\begin{condition}
\label{con:okc}
For every sequence $\{A_{k}\}_{k=1}^{\infty}$ of disjoint elements of $\Borel$,
\begin{equation*}
\left |\left\{ k \in \nats : X_{1:T} \cap A_{k} \neq \emptyset \right\} \right| = o(T)\text{ (a.s.)}.
\end{equation*}
\end{condition}

Denote by $\OKC$ the set of all processes $\ProcX$ satisfying Condition~\ref{con:okc}.
With the aim of concisely characterizing the family of processes $\SUOL$, we consider now 
the specific question of
whether $\SUOL = \OKC$.  Formally, we make partial progress toward resolving the following question, 
which remains open at this writing.

\begin{problem}
\label{prob:suol-equals-okc}
Is $\SUOL = \OKC$?
\end{problem}

In this subsection, we show that in general, $\SUOL \subseteq \OKC$, 
and that equality holds when $\X$ is countable.  Equality also holds for the intersections
of these sets with the family of deterministic processes.  

We begin with the first of these claims.
First, as was true of $\KC$, we can also state Condition~\ref{con:okc} in an alternative equivalent form, 
which makes the necessity of Condition~\ref{con:okc} for learning more immediately clear.
In particular, we may note an interesting parallel to Corollary~\ref{cor:missing-mass}.

\begin{lemma}
\label{lem:okc-equiv}
A process $\ProcX$ satisfies Condition~\ref{con:okc} 
if and only if every disjoint sequence $\{A_{i}\}_{i=1}^{\infty}$ in $\Borel$ with $\bigcup\limits_{i=1}^{\infty} A_{i} = \X$ 
(i.e., every countable measurable partition) satisfies 
\begin{equation*}
\limsup_{T \to \infty} \frac{1}{T} \sum_{t=1}^{T} \ind\!\left[ X_{t} \in \bigcup \left\{ A_{i} : X_{1:(t-1)} \cap A_{i} = \emptyset \right\} \right] = 0 \text{ (a.s.)}.
\end{equation*}
\end{lemma}
\begin{proof}
First note that, for any sequence $\{A_{k}\}_{k=1}^{\infty}$ of disjoint sets in $\Borel$, 
defining $B_{1} = \X \setminus \bigcup\limits_{k=1}^{\infty} A_{k}$ and $B_{k} = A_{k-1}$ for $k \geq 2$, 
we have that $\{B_{k}\}_{k=1}^{\infty}$ is a disjoint sequence in $\Borel$ with $\bigcup\limits_{k=1}^{\infty} B_{k} = \X$ 
and 
$|\{ k : X_{1:T} \cap A_{k} \neq \emptyset \}| \leq |\{ k : X_{1:T} \cap B_{k} \neq \emptyset \}| \leq |\{ k : X_{1:T} \cap A_{k} \neq \emptyset \}| + 1$, 
so that $|\{ k : X_{1:T} \cap B_{k} \neq \emptyset \}| = o(T)\text{ (a.s.)}$ if and only if $|\{ k : X_{1:T} \cap A_{k} \neq \emptyset \}| = o(T)\text{ (a.s.)}$.
Thus, the set of processes $\ProcX$ satisfying Condition~\ref{con:okc} remains unchanged
if we restrict the disjoint sequences $\{A_{k}\}_{k=1}^{\infty}$ to those satisfying $\bigcup\limits_{k=1}^{\infty} A_{k} = \X$.

Now fix any process $\ProcX$ and any disjoint sequence $\{A_{i}\}_{i=1}^{\infty}$ 
in $\Borel$ with $\bigcup\limits_{i=1}^{\infty} A_{i} = \X$.
Then note that, for any $T \in \nats$, 
$\left| \left\{ k \in \nats : X_{1:T} \cap A_{k} \neq \emptyset \right\} \right|
= \ind\!\left[ X_{T} \!\in \bigcup \left\{ A_{i} : X_{1:(T-1)} \!\cap A_{i} = \emptyset \right\} \right] + \left| \left\{ k \in \nats : X_{1:(T-1)} \cap A_{k} \neq \emptyset \right\} \right|$.
By induction (taking $T=1$ as a trivially-satisfied base case), 
this implies that $\forall T \in \nats$, 
\begin{equation*}
\left| \left\{ k \in \nats : X_{1:T} \cap A_{k} \neq \emptyset \right\} \right| 
= \sum_{t=1}^{T} \ind\!\left[ X_{t} \in \bigcup \left\{ A_{i} : X_{1:(t-1)} \cap A_{i} = \emptyset \right\} \right].
\end{equation*}
In particular, this implies that 
$\sum\limits_{t=1}^{T} \ind\!\left[ X_{t} \in \bigcup \left\{ A_{i} : X_{1:(t-1)} \cap A_{i} = \emptyset \right\} \right] = o(T) \text{ (a.s.)}$
if and only if
$\left| \left\{ k \in \nats : X_{1:T} \cap A_{k} \neq \emptyset \right\} \right| = o(T) \text{ (a.s.)}$.
Since this equivalence holds for any choice of disjoint sequence $\{A_{i}\}_{i=1}^{\infty}$ in $\Borel$ with $\bigcup\limits_{i=1}^{\infty} A_{i} = \X$, 
the lemma follows.
\end{proof}

With this lemma in hand, we can now prove the following theorem,
which establishes that 
Condition~\ref{con:okc} is \emph{necessary} for a process to admit strong universal online learning.

\begin{theorem}
\label{thm:okc-nec}
$\SUOL \subseteq \OKC$.
\end{theorem}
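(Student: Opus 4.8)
The plan is to prove the contrapositive: if $\ProcX \notin \OKC$, then $\ProcX \notin \SUOL$, i.e., there is no online learning rule that is strongly universally consistent under $\ProcX$. So suppose Condition~\ref{con:okc} fails for $\ProcX$. By Lemma~\ref{lem:okc-equiv} (and its preliminary observation that we may assume the disjoint family covers $\X$), there is a disjoint sequence $\{A_i\}_{i=1}^{\infty}$ in $\Borel$ with $\bigcup_i A_i = \X$ such that, with probability strictly greater than $0$,
\begin{equation*}
\limsup_{T \to \infty} \frac{1}{T} \sum_{t=1}^{T} \ind\!\left[ X_{1:(t-1)} \cap A_{i(X_{t})} = \emptyset \right] > 0,
\end{equation*}
where $i(x)$ is the index with $x \in A_i$. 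The key intuition is that each such time $t$ is a ``first visit'' to the cell $A_{i(X_t)}$, and on a first visit the learner has never seen any labeled example from that cell, so a uniformly random labeling of the cells forces an expected loss bounded below.

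The construction mirrors the randomized-target argument in the proof of Lemma~\ref{lem:sual-subset-kc}. First I would fix two distinct values $y_0, y_1 \in \Y$ (so $\loss(y_0,y_1)>0$ since $\loss$ is a metric), and for each $\kappa \in [0,1)$ define the target $\target_\kappa$ constant on each $A_i$, equal to $y_{\kappa_i}$ where $\kappa_i$ is the $i$th binary digit of $\kappa$; each $\target_\kappa$ is measurable. Fix an arbitrary online learning rule $h_n$. For a first-visit time $t$ (meaning $X_{1:(t-1)} \cap A_{i(X_t)} = \emptyset$), the labeled observations $\target_\kappa(X_1),\dots,\target_\kappa(X_{t-1})$ depend on $\kappa$ only through the digits $\kappa_{i(X_1)},\dots,\kappa_{i(X_{t-1})}$, and hence are functionally independent of $\kappa_{i(X_t)}$. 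Therefore, writing $K \sim \mathrm{Uniform}([0,1))$ independent of $\ProcX$ and $h_n$, for such $t$,
\begin{equation*}
\int_0^1 \loss\!\left( h_{t-1}(X_{1:(t-1)},\target_\kappa(X_{1:(t-1)}),X_t),\target_\kappa(X_t) \right) {\rm d}\kappa
\geq \frac{1}{2}\loss(y_0,y_1),
\end{equation*}
by the same conditioning step as in Lemma~\ref{lem:sual-subset-kc} (condition on everything except $\kappa_{i(X_t)}$, which is then fair-coin, and apply the triangle inequality). I would then average over $\kappa$ using Fubini and Fatou exactly as in that proof: letting $Z_t^{(\kappa)} = \loss(h_{t-1}(X_{1:(t-1)},\target_\kappa(X_{1:(t-1)}),X_t),\target_\kappa(X_t))$ and $N_t = \ind[X_{1:(t-1)} \cap A_{i(X_t)} = \emptyset]$, we get
\begin{align*}
\sup_{\kappa} \E\!\left[\limsup_{n\to\infty} \hat{\L}_{\ProcX}(h_\cdot,\target_\kappa;n)\right]
&\geq \int_0^1 \E\!\left[\limsup_{n\to\infty} \frac{1}{n}\sum_{t=1}^n N_t Z_t^{(\kappa)} \right] {\rm d}\kappa \\
&\geq \E\!\left[ \limsup_{n\to\infty} \frac{1}{n}\sum_{t=1}^n N_t \int_0^1 Z_t^{(\kappa)}{\rm d}\kappa \right]
\geq \frac{\loss(y_0,y_1)}{2}\, \E\!\left[ \limsup_{n\to\infty} \frac{1}{n}\sum_{t=1}^n N_t \right],
\end{align*}
where the boundedness of $\loss$ licenses the Fatou step and $\hat{\L}_{\ProcX}(h_\cdot,\target;n) = \frac{1}{n}\sum_{t=0}^{n-1}\loss(h_t(X_{1:t},\target(X_{1:t}),X_{t+1}),\target(X_{t+1}))$ by definition. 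Since we assumed $\limsup_n \frac{1}{n}\sum_{t=1}^n N_t > 0$ on an event of positive probability, and this random variable is nonnegative, its expectation is strictly positive; combined with $\loss(y_0,y_1)>0$ this gives $\sup_\kappa \E[\limsup_n \hat{\L}_{\ProcX}(h_\cdot,\target_\kappa;n)] > 0$, so some $\kappa^*$ has $\E[\limsup_n \hat{\L}_{\ProcX}(h_\cdot,\target_{\kappa^*};n)] > 0$, forcing $\limsup_n \hat{\L}_{\ProcX}(h_\cdot,\target_{\kappa^*};n) > 0$ with positive probability. Hence $h_n$ is not strongly universally consistent under $\ProcX$, and since $h_n$ was arbitrary, $\ProcX \notin \SUOL$.

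The main obstacle — or rather the one point needing care rather than genuine difficulty — is the functional-independence claim and its bookkeeping: unlike the self-adaptive case, the online learner at step $t$ has seen the \emph{labels} $\target_\kappa(X_1),\dots,\target_\kappa(X_{t-1})$, so one must verify that when $t$ is a first-visit time to cell $i(X_t)$, none of those labels (nor the instances $X_{1:t}$ themselves, which don't depend on $\kappa$) carry information about $\kappa_{i(X_t)}$. This is exactly the role of the definition of $N_t$, and the conditioning argument is then identical to the one already carried out in Lemma~\ref{lem:sual-subset-kc} (condition on $\ProcX$, $h_{t-1}$, and the digits $\{\kappa_{i(X_s)} : s \leq t-1\}$, observe $\kappa_{i(X_t)}$ is still uniform on $\{0,1\}$ and independent, average the two possible labels, apply the triangle inequality for the metric $\loss$). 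A minor point is ensuring the reduction via Lemma~\ref{lem:okc-equiv} is invoked correctly so that the disjoint family covers $\X$ (needed so that $i(x)$ is well-defined for every $x$); this is handled by the lemma's own preliminary remark. No concentration inequalities are needed here — the $\limsup$ formulation and the positive-probability event suffice.
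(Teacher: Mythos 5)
Your proposal is correct and follows essentially the same route as the paper's proof: contrapositive via the equivalent form of Condition~\ref{con:okc} from Lemma~\ref{lem:okc-equiv}, the randomized target $\target_\kappa$ constant on cells, the Fubini--Fatou interchange, and the conditional-expectation plus triangle-inequality lower bound of $\tfrac{1}{2}\loss(y_0,y_1)$ at first-visit times. The only (immaterial) difference is where the nonnegativity-implies-positive-expectation observation is invoked; the paper records it up front as \eqref{eqn:okc-nec-expectation-from-equiv}, while you make the same observation at the end.
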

\begin{proof}
This proof follows essentially the same outline as that of Lemma~\ref{lem:sual-subset-kc}.
We prove the result in the contrapositive.
Suppose $\ProcX \notin \OKC$.  
By Lemma~\ref{lem:okc-equiv}, there exists a disjoint sequence $\{A_{i}\}_{i=1}^{\infty}$ in $\Borel$ 
with $\bigcup\limits_{i=1}^{\infty} A_{i} = \X$ such that, 
with probability strictly greater than $0$, 
\begin{equation*}
\limsup_{T \to \infty} \frac{1}{T} \sum_{t=1}^{T} \ind\!\left[ X_{t} \in \bigcup \left\{ A_{i} : X_{1:(t-1)} \cap A_{i} = \emptyset \right\} \right] > 0.
\end{equation*}
Furthermore, since the left hand side is always nonnegative, this also implies \citep*[see e.g.,][Theorem 1.6.6]{ash:00}
\begin{equation}
\label{eqn:okc-nec-expectation-from-equiv}
\E\!\left[ \limsup_{T \to \infty} \frac{1}{T} \sum_{t=1}^{T} \ind\!\left[ X_{t} \in \bigcup \left\{ A_{i} : X_{1:(t-1)} \cap A_{i} = \emptyset \right\} \right] \right] > 0.
\end{equation}

Now take any two distinct values $y_{0}, y_{1} \in \Y$, 
and (as we did in the proof of Lemma~\ref{lem:sual-subset-kc}) 
for each $\kappa \in [0,1)$, $i \in \nats$, and $x \in A_{i}$, 
letting $\kappa_{i} = \lfloor 2^{i} \kappa \rfloor - 2 \lfloor 2^{i-1} \kappa \rfloor \in \{0,1\}$, 
define $\target_{\kappa}(x) = y_{\kappa_{i}}$.
Recall that we established in the proof of Lemma~\ref{lem:sual-subset-kc} that 
$(x,\kappa) \mapsto \target_{\kappa}(x)$ is measurable in the appropriate product $\sigma$-algebra.
Also for every $t \in \nats$ define $i_{t}$ as the unique $i \in \nats$ with $X_{t} \in A_{i}$, 
and for any $n \in \nats \cup \{0\}$, let $\bar{\A}(X_{1:n}) = \bigcup\{A_{i} : X_{1:n} \cap A_{i} = \emptyset\}$.

Now fix any online learning rule $g_{n}$, and for brevity define
$f_{n}^{\kappa}(\cdot) = g_{n}(X_{1:n},\target_{\kappa}(X_{1:n}),\cdot)$ for each $n \in \nats$.
Then
\begin{align*}
& \sup_{\kappa \in [0,1)} \E\!\left[ \limsup_{n\to\infty} \hat{\L}_{\ProcX}(g_{\cdot},\target_{\kappa};n) \right]
\geq \int_{0}^{1} \E\!\left[ \limsup_{n\rightarrow\infty} \hat{\L}_{\ProcX}(g_{\cdot},\target_{\kappa};n) \right] {\rm d}\kappa
\\ & \geq \int_{0}^{1} \E\!\left[ \limsup_{n\rightarrow\infty} \frac{1}{n} \sum_{t=0}^{n-1} \loss\left( f_{t}^{\kappa}(X_{t+1}), \target_{\kappa}(X_{t+1}) \right) \ind_{\bar{\A}(X_{1:t})}(X_{t+1}) \right] {\rm d}\kappa.
\end{align*}
By Fubini's theorem, this is equal
\begin{equation*}
\E\!\left[ \int_{0}^{1} \limsup_{n\rightarrow\infty} \frac{1}{n} \sum_{t=0}^{n-1} \loss\!\left( f_{t}^{\kappa}(X_{t+1}), \target_{\kappa}(X_{t+1}) \right) \ind_{\bar{\A}(X_{1:t})}(X_{t+1}) {\rm d}\kappa \right].
\end{equation*}
Since $\loss$ is bounded, Fatou's lemma implies this is at least as large as
\begin{equation*}
\E\!\left[ \limsup_{n\rightarrow\infty} \int_{0}^{1} \frac{1}{n} \sum_{t=0}^{n-1} \loss\!\left( f_{t}^{\kappa}(X_{t+1}), \target_{\kappa}(X_{t+1}) \right) \ind_{\bar{\A}(X_{1:t})}(X_{t+1}) {\rm d}\kappa \right],
\end{equation*}
and linearity of integration implies this equals
\begin{equation}
\label{eqn:suol-subset-okc-pre-integration}
\E\!\left[ \limsup_{n\rightarrow\infty} \frac{1}{n} \sum_{t=0}^{n-1} \ind_{\bar{\A}(X_{1:t})}(X_{t+1}) \int_{0}^{1} \loss\!\left( f_{t}^{\kappa}(X_{t+1}), \target_{\kappa}(X_{t+1}) \right) {\rm d}\kappa \right].
\end{equation}

For any $t \in \nats \cup \{0\}$, the value of $f_{t}^{\kappa}(X_{t+1})$ is a function of $\ProcX$ and $\kappa_{i_{1}},\ldots,\kappa_{i_{t}}$.
Therefore, for any $t \in \nats \cup \{0\}$ with $X_{t+1} \in \bar{\A}(X_{1:t})$, the value of $f_{t}^{\kappa}(X_{t+1})$ is functionally independent of $\kappa_{i_{t+1}}$.
Thus, for any $t \in \nats \cup \{0\}$, 
letting $K \sim {\rm Uniform}([0,1))$ be independent of $\ProcX$ and $g_{t}$, 
if $X_{t+1} \in \bar{\A}(X_{1:t})$, we have
\begin{align*}
& \int_{0}^{1} \loss\!\left( f_{t}^{\kappa}(X_{t+1}), \target_{\kappa}(X_{t+1}) \right) {\rm d}\kappa
= \E\!\left[ \loss\!\left( f_{t}^{K}(X_{t+1}), \target_{K}(X_{t+1}) \right) \Big| \ProcX, g_{t} \right]
\\ & = \E\!\left[ \E\!\left[ \loss\!\left( g_{t}(X_{1:t},\{y_{K_{i_{j}}}\}_{j=1}^{t}, X_{t+1}), y_{K_{t+1}}\right) \Big| \ProcX, g_{t}, K_{i_{1}},\ldots,K_{i_{t}} \right] \Big| \ProcX, g_{t} \right]
\\ & = \E\!\left[\sum_{b\in\{0,1\}} \frac{1}{2}\loss\!\left(g_{t}(X_{1:t},\{y_{K_{i_{j}}}\}_{j=1}^{t}, X_{t+1}), y_{b} \right) \Big| \ProcX, g_{t} \right].
\end{align*}
By the relaxed triangle inequality, this is no smaller than $\E\!\left[\frac{1}{2\triconst}\loss(y_{0},y_{1}) \Big| \ProcX, g_{t} \right] = \frac{1}{2\triconst} \loss(y_{0},y_{1})$, 
so that \eqref{eqn:suol-subset-okc-pre-integration} is at least as large as
\begin{align*}
& \E\!\left[ \limsup_{n\rightarrow\infty} \frac{1}{n} \sum_{t=0}^{n-1} \ind_{\bar{\A}(X_{1:t})}(X_{t+1}) \frac{1}{2\triconst} \loss(y_{0},y_{1}) \right]
\\ & = \frac{1}{2\triconst} \loss(y_{0},y_{1}) \E\!\left[ \limsup_{n\rightarrow\infty} \frac{1}{n} \sum_{t=1}^{n} \ind\!\left[ X_{t} \in \bigcup \left\{ A_{i} : X_{1:(t-1)} \cap A_{i} = \emptyset \right\} \right] \right] 
> 0,
\end{align*}
where this last inequality is immediate from \eqref{eqn:okc-nec-expectation-from-equiv} and the fact that (since $\loss$ is a near-metric) $\loss(y_{0},y_{1}) > 0$.
Altogether, we have that
\begin{equation*}
\sup_{\kappa \in [0,1)} \E\!\left[ \limsup_{n\to\infty} \hat{\L}_{\ProcX}(g_{\cdot},\target_{\kappa};n) \right]
> 0.
\end{equation*}
In particular, this implies $\exists \kappa \in [0,1)$ such that 
$\E\!\left[ \limsup\limits_{n\to\infty} \hat{\L}_{\ProcX}(g_{\cdot},\target_{\kappa};n) \right] > 0$.
Since any random variable equal $0$ (a.s.) necessarily has expected value $0$, 
this further implies that with probability strictly greater than $0$, 
$\limsup\limits_{n\to\infty} \hat{\L}_{\ProcX}(g_{\cdot},\target_{\kappa};n) > 0$.
Thus, $g_{n}$ is not strongly universally consistent.
Since $g_{n}$ was an arbitrary online learning rule, we conclude that there does not exist an online learning rule that is
strongly universally consistent under $\ProcX$: that is, $\ProcX \notin \SUOL$.
Since this argument holds for any $\ProcX \notin \OKC$, the theorem follows.
\end{proof}

Although this work falls short of establishing equivalence between $\SUOL$ and $\OKC$ in the general case 
(i.e., positively resolving Open Problem~\ref{prob:suol-equals-okc} in general),
we do show this equivalence in the special case of \emph{countable} $\X$,
and indeed also positively resolve Open Problem~\ref{prob:optimistic-online} for countable $\X$.
Note that, in this special case, Condition~\ref{con:okc} simplifies to the condition that 
the number of distinct points $x \in \X$ occurring in the sequence $X_{1:T}$ is $o(T)$ almost surely.
Specifically, we have the following result.

\begin{lemma}
\label{lem:countable-okc-equiv}
If $\X$ is countable, then a process $\ProcX$ satisfies Condition~\ref{con:okc} if and only if 
\begin{equation*}
|\{ x \in \X : X_{1:T} \cap \{x\} \neq \emptyset \}| = o(T) \text{ (a.s.)}.
\end{equation*}
\end{lemma}
\begin{proof}
Enumerate the elements of $\X$ as $z_{1},z_{2},\ldots$ (or $z_{1},\ldots,z_{|\X|}$ in the case 
of finite $|\X|$).
If Condition~\ref{con:okc} is satisfied, then choose $A_{k} = \{z_{k}\}$ for every $k \in \nats$ 
with $k \leq |\X|$, and if $|\X| < \infty$ then let $A_{k} = \emptyset$ for every $k > |\X|$. 
It immediately follows from Condition~\ref{con:okc} that $|\{ x \in \X : X_{1:T} \cap \{x\} \neq \emptyset \}| = o(T)$ (a.s.).
For the other direction, 
if Condition~\ref{con:okc} fails, there exists a disjoint sequence $\{A_{k}\}_{k=1}^{\infty}$ in $\Borel$ 
that has $|\{ k \in \nats : X_{1:T} \cap A_{k} \neq \emptyset \}| \neq o(T)$ with nonzero probability.
Noting that 
$|\{ k \in \nats : X_{1:T} \cap A_{k} \neq \emptyset \}| \leq |\{ x \in \X : X_{1:T} \cap \{x\} \}|$,
we have that, on this same event of nonzero probability, 
$|\{ x \in \X : X_{1:T} \cap \{x\} \}| \neq o(T)$ as well.
\end{proof}

We now state our result for strong universal online learning when $\X$ is countable.

\begin{theorem}
\label{thm:okc-countable}
If $\X$ is countable, then Condition~\ref{con:okc} is necessary and sufficient for a process $\ProcX$ 
to admit strong universal online learning: that is, $\SUOL = \OKC$.
Moreover, if $\X$ is countable, then there exists an optimistically universal online learning rule.
\end{theorem}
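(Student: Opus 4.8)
The plan is to combine the necessity direction already established in Theorem~\ref{thm:okc-nec} (which gives $\SUOL \subseteq \OKC$ for arbitrary $\X$) with a matching sufficiency argument specific to countable $\X$, obtained via a pure \emph{memorization} online learning rule, in the same spirit as the inductive construction in Corollary~\ref{cor:countable-doubly-universal-inductive}. Since this single rule will be shown to be strongly universally consistent under every $\ProcX \in \OKC$, and since $\OKC = \SUOL$ will follow, the rule is automatically optimistically universal.

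First I would record the simplification of Condition~\ref{con:okc} when $\X$ is countable. Enumerate $\X = \{z_{1},z_{2},\ldots\}$ and take $A_{k} = \{z_{k}\}$; these form a disjoint sequence in $\Borel$ (note that $\Borel$ is the full power set of $\X$, since points are closed in a metrizable space, so all subsets of the countable set $\X$ are Borel). For this sequence, $|\{ k \in \nats : X_{1:T} \cap A_{k} \neq \emptyset \}|$ equals exactly the number of distinct values occurring in $X_{1:T}$, so any $\ProcX \in \OKC$ has this count equal to $o(T)$ almost surely. Next I would define the online learning rule $\hat{h}_{n}$ by memorization: for $x \in \{x_{1},\ldots,x_{n}\}$ set $\hat{h}_{n}(x_{1:n},y_{1:n},x) = y_{i(x;x_{1:n})}$ with $i(x;x_{1:n}) = \min\{ i \leq n : x_{i} = x\}$, and set $\hat{h}_{n}(x_{1:n},y_{1:n},x) = y_{0}$ for a fixed $y_{0} \in \Y$ otherwise (this is trivially measurable since every subset of $\X$ is Borel, so $\hat{h}_{n}$ is a legitimate online learning rule, and the quantification over measurable $\target$ in Definition~\ref{def:suol} ranges over all functions $\X \to \Y$).

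Then, fixing any $\ProcX \in \OKC$ and any $\target : \X \to \Y$, I would observe that for each $t \in \{0,\ldots,n-1\}$: if $X_{t+1} \in \{X_{1},\ldots,X_{t}\}$ then $\hat{h}_{t}(X_{1:t},\target(X_{1:t}),X_{t+1}) = \target(X_{t+1})$, so the corresponding loss term is $0$ (using that $\loss$ is a metric), while otherwise the loss term is at most $\maxloss$; moreover, $X_{t+1} \notin \{X_{1},\ldots,X_{t}\}$ holds for exactly as many indices $t < n$ as there are distinct values among $X_{1},\ldots,X_{n}$ (a one-line induction, essentially the identity used in the proof of Lemma~\ref{lem:okc-equiv}). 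Hence
\begin{equation*}
\hat{\L}_{\ProcX}(\hat{h}_{\cdot},\target;n) \leq \frac{\maxloss}{n}\,\bigl|\{\text{distinct values in } X_{1:n}\}\bigr|,
\end{equation*}
and by the simplified form of Condition~\ref{con:okc} the right-hand side is $\frac{\maxloss}{n}\cdot o(n) \to 0$ almost surely. Since $\hat{\L}_{\ProcX}$ is nonnegative, this shows $\hat{h}_{n}$ is strongly universally consistent under $\ProcX$, so $\OKC \subseteq \SUOL$; combined with Theorem~\ref{thm:okc-nec} this gives $\SUOL = \OKC$, and then $\hat{h}_{n}$ is strongly universally consistent under every $\ProcX \in \SUOL$, i.e., optimistically universal.

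There is no serious obstacle in this argument; it is genuinely short. The only points requiring a little care are the combinatorial identity equating the number of "fresh" indices with the number of distinct observed points, and the observation that measurability is automatic for countable $\X$ (both of which are routine). The contrast with the uncountable case — where Theorem~\ref{thm:no-optimistic-inductive} shows memorization-type reasoning fails for inductive learning, and where the full strength of the online aggregation lemmas of Section~\ref{subsec:weighted-majority} and the separation $\SUAL \subset \SUOL$ come into play — is precisely what makes the countable case clean.
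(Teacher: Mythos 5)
Your proof is correct and follows the same route as the paper: necessity comes from Theorem~\ref{thm:okc-nec}, and sufficiency (plus optimistic universality) is established via the memorization-based online rule, bounding the running average loss by $\maxloss/n$ times the number of ``fresh'' indices and invoking Condition~\ref{con:okc} applied to the singleton partition $A_k = \{z_k\}$ (which is exactly where Lemma~\ref{lem:okc-equiv} is deployed in the paper). The only cosmetic difference is that you unfold the counting identity underlying Lemma~\ref{lem:okc-equiv} explicitly rather than citing the lemma; the substance is identical.
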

\begin{proof}
Suppose $\X$ is countable.
For the first claim, since we already know $\SUOL \subseteq \OKC$ from Theorem~\ref{thm:okc-nec},
it suffices to show $\OKC \subseteq \SUOL$, for this special case.  We will establish this fact, 
while simultaneously establishing the second claim, by showing that there is an online learning rule 
that is strongly universally consistent under every $\ProcX \in \OKC$ (which thereby also establishes
that every such process is in $\SUOL$).
Toward this end, fix any $y_{0} \in \Y$, and define an online learning rule $f_{n}$ such that, 
for each $n \in \nats \cup \{0\}$, 
$\forall x_{1:(n+1)} \in \X^{n+1}$, $\forall y_{1:n} \in \Y^{n}$, 
if $x_{n+1} = x_{i}$ for some $i \in \{1,\ldots,n\}$, then $f_{n}(x_{1:n},y_{1:n},x_{n+1}) = y_{i}$ for the smallest $i \in \{1,\ldots,n\}$ with $x_{n+1}=x_{i}$,
and otherwise $f_{n}(x_{1:n},y_{1:n},x_{n+1}) = y_{0}$.
The key property of $f_{n}$ here is that it is \emph{memorization-based}, 
in that any previously-observed point's response $y$ will be faithfully reproduced
if that point is encountered again later in the sequence.  The specific fact that 
it evaluates to $y_{0}$ in the case of a previously-unseen point is unimportant in this
context, and this case can in fact be defined arbitrarily (subject to the function $f_{n}$
being measurable) without affecting the result (and similarly for the choice to 
break ties to favor smaller indices).

Now fix any $\ProcX \in \OKC$ and any measurable function $\target : \X \to \Y$.  
Note that any $i,t \in \nats$ with $i \leq t$ and $X_{t+1} = X_{i}$ has $\target(X_{t+1}) = \target(X_{i})$,
so that 
$\loss( f_{t}(X_{1:t},\target(X_{1:t}),X_{t+1}), \target(X_{t+1}) ) = \loss( \target(X_{i}), \target(X_{t+1}) ) = 0$.
Therefore, we have
\begin{align}
& \limsup_{n \to \infty} \hat{\L}_{\ProcX}(f_{\cdot},\target;n) 
= \limsup_{n \to \infty} \frac{1}{n} \sum_{t=0}^{n-1} \loss( f_{t}(X_{1:t},\target(X_{1:t}),X_{t+1}), \target(X_{t+1}) )
\notag \\ & \leq \limsup_{n \to \infty} \frac{1}{n} \sum_{t=0}^{n-1} \maxloss \ind[ \nexists i \in \{1,\ldots,t\} : X_{t+1} = X_{i} ] 
= \maxloss \limsup_{n \to \infty} \frac{1}{n} \left| \left\{ x \in \X : X_{1:n} \cap \{x\} \neq \emptyset \right\} \right|. \notag
\end{align}
Lemma~\ref{lem:countable-okc-equiv} implies that the rightmost expression above is equal $0$ almost surely.
Since this argument holds for any choice of $\target$, we conclude that $f_{n}$
is strongly universally consistent under $\ProcX$.  Furthermore, since this 
holds for any choice of $\ProcX \in \OKC$, the theorem follows.
\end{proof}

For uncountable $\X$, we can at least state a corollary holding for all \emph{deterministic} processes, 
via a reduction to the case of countable $\X$.

\begin{corollary}
\label{cor:okc-deterministic}
For any deterministic process $\ProcX$,
Condition~\ref{con:okc} is necessary and sufficient for $\ProcX$ to admit strong universal online learning: that is, $\ProcX \in \SUOL$ if and only if $\ProcX \in \OKC$.
\end{corollary}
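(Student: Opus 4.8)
The necessity direction of the biconditional is immediate: Theorem~\ref{thm:okc-nec} gives $\SUOL \subseteq \OKC$ for \emph{every} process, in particular for deterministic ones. So the plan is to prove the converse, namely that a deterministic process $\ProcX$ with $\ProcX \in \OKC$ belongs to $\SUOL$, and the point is to reduce this to the countable case already handled in Theorem~\ref{thm:okc-countable} by passing from $\X$ to the (countable) set of values actually taken by $\ProcX$.

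Concretely, since $\ProcX$ is deterministic it is a fixed sequence $\{x_{t}\}_{t=1}^{\infty}$ in $\X$; let $\X_{0} = \{x_{t} : t \in \nats\}$, which is countable and lies in $\Borel$ (each singleton is closed in the metrizable space $(\X,\T)$, so $\X_{0}$ is a countable union of Borel sets). I would use the \emph{memorization} online learning rule $\hat{f}_{n}$ constructed in the proof of Theorem~\ref{thm:okc-countable}; note that this rule, and its measurability, are defined for an arbitrary instance space, and the countability hypothesis there is used only at the very last step. The same computation as in that proof bounds $\limsup_{n}\hat{\L}_{\ProcX}(\hat{f}_{\cdot},\target;n)$, for any measurable $\target : \X \to \Y$, by
\begin{equation*}
\limsup_{n\to\infty}\frac{1}{n}\sum_{t=1}^{n}\maxloss\,\ind\!\left[X_{1:(t-1)}\cap\{X_{t}\}=\emptyset\right],
\end{equation*}
using that the memorized value is faithfully reproduced whenever a point recurs. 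To see this last quantity is $0$, I would apply Lemma~\ref{lem:okc-equiv} (valid because $\ProcX \in \OKC$) to the disjoint sequence $\{A_{i}\}_{i=1}^{\infty}$ in $\Borel$ given by $A_{1} = \X\setminus\X_{0}$ and $A_{i+1} = \{z_{i}\}$, where $z_{1},z_{2},\ldots$ is an enumeration of $\X_{0}$ (allowing empty sets if $\X_{0}$ is finite); this sequence has union $\X$, so the lemma yields $\limsup_{T}\frac{1}{T}\sum_{t=1}^{T}\ind[X_{1:(t-1)}\cap A_{i(X_{t})}=\emptyset]=0$ a.s. Since $\ProcX$ is deterministic and every $X_{t}=x_{t}\in\X_{0}$, we have $A_{i(X_{t})}=\{X_{t}\}$ for every $t$, so this matches the bound above (up to the harmless factor $\maxloss$). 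Hence $\hat{\L}_{\ProcX}(\hat{f}_{\cdot},\target;n)\to 0$ for every measurable $\target$, i.e.\ $\hat{f}_{n}$ is strongly universally consistent under $\ProcX$ and $\ProcX\in\SUOL$.

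There is no genuine obstacle in this argument; the only points requiring a moment's care are verifying $\X_{0}\in\Borel$ (so that $\X\setminus\X_{0}$ is a legitimate cell of the partition) and checking that Lemma~\ref{lem:okc-equiv} applies to the partition chosen above --- both routine. If one prefers an argument that more literally reduces to countable $\X$, one can instead observe that $\ProcX$, viewed as a deterministic process on the countable space $(\X_{0},\T|_{\X_{0}})$, still satisfies Condition~\ref{con:okc} (any disjoint family of subsets of $\X_{0}$ is a disjoint family in $\Borel$), invoke Theorem~\ref{thm:okc-countable} to get an online rule $g_{n}$ on $\X_{0}$ that is strongly universally consistent under $\ProcX$, and extend $g_{n}$ to $\X$ by an arbitrary measurable default on inputs lying outside $\X_{0}$; since $\ProcX$ never leaves $\X_{0}$ and the restriction of any measurable $\target$ to $\X_{0}$ is measurable, strong universal consistency transfers to $\X$, again giving $\ProcX\in\SUOL$.
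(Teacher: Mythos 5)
Your proposal is correct and matches the paper's own argument essentially exactly: the paper also sketches both the direct approach (run the memorization rule from Theorem~\ref{thm:okc-countable}, applying Lemma~\ref{lem:okc-equiv} with the singletons of the sequence's range plus one complementary cell) and the alternative reduction to the countable space $\X_{\ProcX}$ of distinct values. No gaps; the measurability check for $\X_{0}$ and the applicability of Lemma~\ref{lem:okc-equiv} are handled as the paper intends.
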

\begin{proof}[Sketch]
This result follows from essentially the same proof used for Theorem~\ref{thm:okc-countable}; 
the only significant change is in the proof of Lemma~\ref{lem:countable-okc-equiv}, which 
can be established for deterministic processes on general $\X$ by replacing the $z_{k}$ sequence defined in the 
proof by the distinct entries of the sequence $\ProcX$ (noting that the intersection of $\ProcX$ with the complement of this $z_{k}$ sequence is empty).
Alternatively, it can also be established via a reduction to the case of countable $\X$.
Specifically, fix any deterministic process $\ProcX$, and let $\X_{\ProcX}$ denote
the set of \emph{distinct} points $x \in \X$ appearing in the sequence $\ProcX$.
Note that $\X_{\ProcX}$ is countable, and that (with a slight abuse of notation) $\ProcX$
may be thought of as a sequence of $\X_{\ProcX}$-valued variables.
Furthermore, it is straightforward to show that any deterministic $\ProcX$ satisfies Condition~\ref{con:okc} for the space $\X_{\ProcX}$
if and only if it satisfies Condition~\ref{con:okc} for the original space $\X$ 
(since only the intersections of the sets $A_{i}$ with $\X_{\ProcX}$ are relevant for checking this condition).
Thus, since Theorem~\ref{thm:okc-countable} holds for \emph{any} countable space $\X$, 
applying it to the space $\X_{\ProcX}$, we have that $\ProcX$ admits strong universal online learning
if and only if $\ProcX$ satisfies Condition~\ref{con:okc}.
\end{proof}

\subsection{Relation of Online Learning to Inductive and Self-Adaptive Learning}
\label{subsec:online-vs-inductive}

Next, we turn to addressing 
the relation between admission of strong universal online learning 
and admission of strong universal inductive or self-adaptive learning.
Specifically, we find that the latter implies the former, but \emph{not} vice versa (if $\X$ is infinite), 
so that admission of strong universal online learning is a strictly more general condition.
To show this, since we have established in Theorem~\ref{thm:main} that $\SUIL = \SUAL$, 
it suffices to argue that $\SUAL \subseteq \SUOL$, with \emph{strict} inclusion if $|\X| = \infty$:
that is, $\SUOL \setminus \SUAL \neq \emptyset$.
For this we have the following theorem.

\begin{theorem}
\label{thm:suil-subset-suol}
$\SUAL \subseteq \SUOL$, and the inclusion is \emph{strict} if and only if $|\X| = \infty$.
\end{theorem}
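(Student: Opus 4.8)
The plan is to prove the two inclusions separately, since the theorem bundles together a positive containment $\SUAL \subseteq \SUOL$ and a characterization of when it is strict.

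\textbf{Step 1: $\SUAL \subseteq \SUOL$.} First I would argue that any process admitting strong universal self-adaptive learning also admits strong universal online learning. The natural approach is to show that an online learning rule can \emph{simulate} a self-adaptive learning rule by simply ignoring the extra label feedback it receives. Concretely, given a self-adaptive rule $g_{n,m}$, I would define an online rule as follows: use a fixed ``training'' prefix size --- say, for the purpose of the $\hat{\L}$ objective, note that the online objective $\hat{\L}_{\ProcX}(h_{\cdot},\target;n) = \frac{1}{n}\sum_{t=0}^{n-1}\loss(h_{t}(X_{1:t},\target(X_{1:t}),X_{t+1}),\target(X_{t+1}))$ is a Ces\`aro average of the per-step losses. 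The cleanest route is probably to invoke the optimistically universal self-adaptive rule $\hat{f}_{n,m}$ from Theorem~\ref{thm:doubly-universal-adaptive}, fix $n=0$ (so no labels are used in the ``training'' portion), and let $h_{t}(X_{1:t},\target(X_{1:t}),X_{t+1}) = \hat{f}_{0,t}(X_{1:t},\{\},X_{t+1})$. Since $\hat{f}_{n,m}$ is strongly universally consistent under every $\ProcX \in \SUAL = \KC$, and since for $n=0$ the self-adaptive objective $\hat{\L}_{\ProcX}(\hat{f}_{0,\cdot},\target;0) = \limsup_{t\to\infty}\frac{1}{t+1}\sum_{m=0}^{t}\loss(\hat{f}_{0,m}(X_{1:m},\{\},X_{m+1}),\target(X_{m+1}))$ is exactly a $\limsup$ of Ces\`aro averages of the same per-step losses appearing in $\hat{\L}_{\ProcX}(h_{\cdot},\target;n)$, we get $\hat{\L}_{\ProcX}(h_{\cdot},\target;n) \le \hat{\L}_{\ProcX}(\hat{f}_{0,\cdot},\target;0) + o(1)$, hence convergence to $0$. (Some care is needed because the online $\hat{\L}$ uses a true Ces\`aro limit while the self-adaptive $\hat{\L}$ uses $\limsup$; but the online quantity at index $n$ is dominated by the partial average, and the $\limsup$ of those partial averages is bounded by the self-adaptive objective, so a.s.\ convergence transfers.) This shows $\KC = \SUAL \subseteq \SUOL$.

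\textbf{Step 2: strictness when $|\X|=\infty$.} I would exhibit a process $\ProcX$ on an infinite $\X$ with $\ProcX \in \SUOL \setminus \SUAL$. By Theorem~\ref{thm:main}, $\SUAL = \KC$, so I need $\ProcX \notin \KC$ but $\ProcX \in \SUOL$; by Theorem~\ref{thm:okc-countable} (or Corollary~\ref{cor:okc-deterministic}), it suffices to take $\ProcX$ deterministic satisfying Condition~\ref{con:okc} but violating Condition~\ref{con:kc}. The simplest candidate is an enumeration-type process: with $\X$ countably infinite, $\X = \{z_1,z_2,\ldots\}$, let $\ProcX$ visit each point but with increasing multiplicities so that the number of \emph{distinct} points in $X_{1:T}$ is $o(T)$. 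For instance, repeat $z_1$ once, $z_2$ twice, $z_3$ four times, etc. --- or more carefully, arrange that by time $T$ only $O(\log T)$ distinct points have appeared. Then Condition~\ref{con:okc} holds (the distinct-point count is $o(T)$), so $\ProcX \in \OKC = \SUOL$ for countable $\X$ by Theorem~\ref{thm:okc-countable}. To see $\ProcX \notin \KC$: take $A_k = \{z_k\}$, a disjoint sequence with $\bigcup_k A_k \supseteq$ the support; then $\hat{\mu}_{\ProcX}(\bigcup_{k\ge i} A_k) = \hat{\mu}_{\ProcX}(\{z_i, z_{i+1},\ldots\})$, and because $z_i$ is visited with positive asymptotic frequency (if I choose the multiplicities so that, say, each $z_i$'s block occupies a constant fraction of the timeline infinitely often, or more simply so that $\hat\mu_{\ProcX}(\{z_i\})>0$ for each $i$), this does not go to $0$, violating the form of Condition~\ref{con:kc} in Lemma~\ref{lem:limsup-equiv}. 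I need to design the deterministic schedule so that \emph{both} properties hold simultaneously; the tension is that $o(T)$-many distinct points pushes toward concentration, while $\hat\mu_{\ProcX}(\{z_i\})>0$ for all $i$ pushes toward each point recurring substantially. A workable construction: in ``epoch'' $j$ (a long block of time), cycle through $z_1,\ldots,z_j$ giving each an equal share of that epoch, with epoch lengths growing fast enough that the number of distinct points by time $T$ is $o(T)$ --- this makes $\limsup$ of the empirical frequency of each $\{z_i\}$ equal to roughly $1/j$ at the end of epoch $j$, but since $i$ is fixed and $j\to\infty$ this tends to $0$, which is \emph{not} what I want. So instead I should let the share of each $z_i$ within epoch $j$ be bounded below by a constant depending on $i$ (e.g., $z_1$ always gets half of every epoch, $z_2$ gets a quarter, etc.), with the remaining new points $z_{j}$ squeezed in; then $\hat\mu_{\ProcX}(\{z_i\}) \ge 2^{-i} > 0$ for each $i$, while still only $j$ distinct points have appeared by the end of epoch $j$, and with epochs growing super-polynomially this is $o(T)$. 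This gives $\ProcX \in \OKC \setminus \KC$, hence $\ProcX \in \SUOL \setminus \SUAL$.

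\textbf{Step 3: the converse direction of strictness.} Finally, if $|\X| < \infty$, I would argue $\SUOL \subseteq \SUAL$, giving equality. When $\X$ is finite, every process trivially satisfies Condition~\ref{con:kc} (any monotone $A_k \downarrow \emptyset$ is eventually empty since $\X$ is finite, so $\E[\hat\mu_{\ProcX}(A_k)] = 0$ for large $k$), hence $\KC$ contains all processes on a finite $\X$; combined with $\SUOL \subseteq$ (all processes) $= \KC = \SUAL$, we get $\SUAL = \SUOL$. So the inclusion is strict \emph{iff} $|\X| = \infty$.

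\textbf{Main obstacle.} The delicate part is Step 2: constructing the deterministic process that simultaneously satisfies Condition~\ref{con:okc} ($o(T)$ distinct points) and violates Condition~\ref{con:kc}. The epoch-based construction with geometrically allocated shares ($z_i$ receiving a $2^{-i}$ fraction of each sufficiently late epoch) is the natural fix, but one must verify carefully that (a) the distinct-point count really is $o(T)$ given the epoch-length growth rate, and (b) $\hat\mu_{\ProcX}(\bigcup_{k\ge i}A_k)$ stays bounded away from $0$ as $i\to\infty$ --- actually one needs the \emph{limit} over $i$ of $\hat\mu_{\ProcX}(\bigcup_{k\ge i}A_k)$ to be positive, which follows if $\hat\mu_{\ProcX}(\{z_i\})\ge 2^{-i}$ is replaced by a lower bound that does not decay, e.g.\ ensuring $\hat\mu_{\ProcX}(\{z_i, z_{i+1}, \ldots\}) \ge c$ for an absolute constant $c>0$ and all $i$ (achievable by reserving a fixed fraction of each epoch for ``tail'' points $z_j$ with $j$ large). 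Getting these bookkeeping details consistent is where the real work lies; the rest is routine given Theorems~\ref{thm:main}, \ref{thm:doubly-universal-adaptive}, and \ref{thm:okc-countable}.
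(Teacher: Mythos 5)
You misread what strong universal consistency for a self-adaptive rule guarantees. It says $\lim_{n\to\infty}\hat{\L}_{\ProcX}(\hat{f}_{n,\cdot},\target;n)=0$ a.s.\ — a statement about the limit as the number of labeled training points $n$ grows to infinity. It says \emph{nothing} about the value of $\hat{\L}_{\ProcX}(\hat{f}_{0,\cdot},\target;0)$ at the fixed index $n=0$. In fact, with $n=0$ the self-adaptive rule $\hat{f}_{0,m}(x_{1:m},\{\},\cdot)$ receives zero labeled examples; in the paper's construction it is literally a fixed, data-independent function from $\F_1$. No label-free predictor can track an arbitrary target $\target$, so $\hat{\L}_{\ProcX}(\hat{f}_{0,\cdot},\target;0)$ can be as large as $\maxloss$ for most targets, and your online rule $h_t = \hat{f}_{0,t}(X_{1:t},\{\},\cdot)$ (which throws away all label information) will not be universally consistent. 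The paper instead defines, for each $i\in\nats$, an online rule $\hat{h}^{(i)}_n$ that treats the first $i$ observed labels as the training set (i.e., outputs $\hat{g}_{i,n}(x_{1:n},y_{1:i},x_{n+1})$ once $n\geq i$), and then aggregates the countable family $\{\hat{h}^{(i)}_n\}_{i}$ with the weighted-majority construction of Lemma~\ref{lem:general-weighted-majority}, using Lemma~\ref{lem:converging-array-path} to pick $i_n\to\infty$ with $i_n\leq n$ and $\lim_n \hat\beta_{i_n,n}=0$ on the a.s.\ event where $\lim_i\limsup_n\hat\beta_{i,n}=0$. That aggregation over $i$ is the idea your sketch is missing.

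\textbf{Step 2 is more complicated than necessary and partly self-defeating.} Your geometrically-weighted epoch construction (each $z_i$ getting share $2^{-i}$) actually forces $\hat\mu_{\ProcX}(\{z_i,z_{i+1},\ldots\})\to 0$ as $i\to\infty$, which would place the process \emph{inside} $\KC$ rather than outside it. You recognize this and propose reserving a constant fraction of each epoch for the tail, but the bookkeeping remains delicate. The paper's construction is much cleaner: let $X_t=z_{\lfloor\log_2(2t)\rfloor}$, so block $i$ (of length $2^{i-1}$) is wholly $z_i$. The number of distinct points by time $T$ is $\lfloor\log_2(2T)\rfloor=o(T)$, so $\ProcX\in\OKC$. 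Each block occupies more than half of the timeline up to its end, so for any disjoint sequence $\{A_k\}$ of \emph{infinite} subsets of $\{z_i:i\in\nats\}$ (e.g., $A_k=\{z_{p_k^m}:m\in\nats\}$) we have $\hat\mu_{\ProcX}(A_k)\geq 1/2$ for every $k$, hence $\lim_i\hat\mu_{\ProcX}(\bigcup_{k\geq i}A_k)\geq 1/2>0$ and $\ProcX\notin\KC$ by Lemma~\ref{lem:limsup-equiv}. Note the paper's trick of choosing wide sets $A_k$ rather than singletons; your singleton choice $A_k=\{z_k\}$ is precisely what forces the unnecessary tail-mass constraint $\hat\mu_{\ProcX}(\{z_i,\ldots\})\geq c$.

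\textbf{Step 3 is correct} and matches the paper.
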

\begin{proof}
We begin by showing $\SUAL \subseteq \SUOL$.  In fact, we will establish 
a stronger claim: that there exists a \emph{single} online learning rule $\hat{f}_{n}$
that is strongly universally consistent for \emph{every} $\ProcX \in \SUAL$.
Specifically, let $\hat{g}_{n,m}$ be an optimistically universal self-adaptive learning rule. 
The existence of such a rule was established in Theorem~\ref{thm:optimistic-self-adaptive},
and an explicit construction is given in \eqref{eqn:sual-rule}, as established by Theorem~\ref{thm:doubly-universal-adaptive}.
Now fix any $y_{0} \in \Y$, and 
for each $i \in \nats$ define an online learning rule $\hat{h}^{(i)}_{n}$ as follows.
For each $n \in \nats \cup \{0\}$, for any sequences $x_{1:(n+1)} \in \X^{n+1}$ and $y_{1:n} \in \Y^{n}$,
if $n < i$, then define $\hat{h}^{(i)}_{n}(x_{1:n},y_{1:n},x_{n+1}) = y_{0}$,
and if $n \geq i$, then define $\hat{h}^{(i)}_{n}(x_{1:n},y_{1:n},x_{n+1}) = \hat{g}_{i,n}(x_{1:n},y_{1:i},x_{n+1})$.
Measurability of $\hat{h}^{(i)}_{n}$ follows from measurability of $\hat{g}_{i,n}$,
so that this is a valid definition of an online learning rule.

Given this definition of the sequence $\left\{ \hat{h}^{(i)}_{n} \right\}_{i=1}^{\infty}$, 
denote by $\hat{f}_{n}$ the online learning rule guaranteed to exist 
by Lemma~\ref{lem:general-weighted-majority} (defined explicitly in the proof above), 
satisfying the property described there relative to this sequence $\left\{ \hat{h}^{(i)}_{n} \right\}_{i=1}^{\infty}$.
Now fix any $\ProcX \in \SUAL$ and any measurable $\target : \X \to \Y$,
and for each $i,n \in \nats$, define $\hat{\beta}_{i,n} = \hat{\L}_{\ProcX}(\hat{h}^{(i)}_{\cdot},\target;n)$.
In particular, note that since $\loss$ is always finite, it holds that $\forall i \in \nats$, 
\begin{align*}
\limsup_{n \to \infty} \hat{\beta}_{i,n}
& = \limsup_{n \to \infty} \frac{1}{n} \sum_{t=0}^{n-1} \loss\!\left( \hat{h}^{(i)}_{t}(X_{1:t},\target(X_{1:t}),X_{t+1}),\target(X_{t+1}) \right)
\\ & = \limsup_{n \to \infty} \frac{1}{n} \sum_{t=i}^{n-1} \loss\!\left( \hat{g}_{i,t}(X_{1:t},\target(X_{1:i}),X_{t+1}),\target(X_{t+1}) \right)
\\ & = \limsup_{n \to \infty} \frac{1}{n+1} \sum_{t=i}^{i+n} \loss\!\left( \hat{g}_{i,t}(X_{1:t},\target(X_{1:i}),X_{t+1}),\target(X_{t+1}) \right)
= \hat{\L}_{\ProcX}(\hat{g}_{i,\cdot},\target;i).
\end{align*}
Since $\hat{g}_{n,m}$ is strongly universally consistent under $\ProcX$, 
it follows that $\lim\limits_{i \to \infty} \limsup\limits_{n \to \infty} \hat{\beta}_{i,n} = 0$ on an event $E$ of probability one.
In particular, on $E$, Lemma~\ref{lem:converging-array-path} implies that there exists a sequence $\{i_{n}\}_{i=1}^{\infty}$ in $\nats$
with $i_{n} \leq n$ for every $n$, such that $\lim\limits_{n \to \infty} \hat{\beta}_{i_{n},n} = 0$.
Therefore, since $\ln(i_{n}) \leq \ln(n) = o(n)$, the property of $\hat{f}_{n}$ guaranteed by Lemma~\ref{lem:general-weighted-majority} 
implies that $\lim\limits_{n \to \infty} \hat{\L}_{\ProcX}(\hat{f}_{\cdot},\target;n) = 0$ almost surely.
Since this argument holds for any choice of $\target$, we conclude that $\hat{f}_{n}$ is strongly universally consistent under $\ProcX$,
and since this holds for any choice of $\ProcX \in \SUAL$, it follows that $\SUAL \subseteq \SUOL$.

$\SUOL$ and $\SUAL$ are trivially equal if $|\X| < \infty$, since then \emph{every} process $\ProcX$ is contained in $\KC$,
and Theorem~\ref{thm:main} implies $\SUAL = \KC$, while we have just established that $\SUOL \supseteq \SUAL$, 
so every process is contained in both $\SUAL$ and $\SUOL$.
Now consider the case $|\X| = \infty$.
To see that $\SUOL \setminus \SUAL \neq \emptyset$
in this case, 
in light of Corollary~\ref{cor:okc-deterministic}, 
together with Theorem~\ref{thm:main}, 
it suffices to construct a \emph{deterministic} process in $\OKC \setminus \KC$.
Toward this end, we let $\{z_{i}\}_{i=1}^{\infty}$ be an arbitrary sequence of distinct elements of $\X$,
and define a deterministic process $\ProcX$ as follows.
For each $t \in \nats$, define $i_{t} = \lfloor \log_{2}(2t) \rfloor$,
and let $X_{t} = z_{i_{t}}$.
For any sequence $\{A_{k}\}_{k=1}^{\infty}$ of disjoint elements of $\Borel$,
and any $T \in \nats$, 
\begin{equation*}
|\{ k \in \nats : X_{1:T} \cap A_{k} \neq \emptyset \} | 
\leq |\{ i \in \nats : X_{1:T} \cap \{z_{i}\} \neq \emptyset \}|
= \lfloor \log_{2}(2T) \rfloor = o(T).
\end{equation*}
Therefore, $\ProcX \in \OKC$.
However, let $A_{k} = \{ z_{i} : i \geq k \}$ for each $k \in \nats$, 
and note that each $A_{k}$ is countable, hence in $\Borel$, 
and that $A_{k} \downarrow \emptyset$.  
Then note that every $i \in \nats$ has $\frac{1}{2^{i}-1} \sum\limits_{t=1}^{2^{i}-1} \ind_{\{z_{i}\}}(X_{t}) = \frac{2^{i-1}}{2^{i}-1} > \frac{1}{2}$.
Thus, since each $|A_{k}| = \infty$, we have $\hat{\mu}_{\ProcX}(A_{k}) \geq \frac{1}{2}$ for every $k \in \nats$, 
so that $\lim\limits_{k \to \infty} \hat{\mu}_{\ProcX}(A_{k}) \geq \frac{1}{2} > 0$.
Since $\ProcX$ is deterministic, this violates the requirement of Condition~\ref{con:kc}, and therefore $\ProcX \notin \KC$.
\end{proof}

The proof of Theorem~\ref{thm:suil-subset-suol} actually establishes two additional results.
First, since the online learning rule $\hat{f}_{n}$ constructed in the proof has no dependence 
on the distribution of the process $\ProcX$ from $\SUAL$, this proof also establishes the following corollary.

\begin{corollary}
\label{cor:online-all-sual}
There exists an online learning rule that is strongly universally consistent under every $\ProcX \in \SUAL$.
\end{corollary}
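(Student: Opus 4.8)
The statement to prove is Corollary~\ref{cor:online-all-sual}: there exists an online learning rule that is strongly universally consistent under every $\ProcX \in \SUAL$.

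The plan is to extract this directly from the construction already carried out in the proof of Theorem~\ref{thm:suil-subset-suol}. Indeed, that proof does not merely show the containment $\SUAL \subseteq \SUOL$; it exhibits a single, fixed online learning rule $\hat{f}_{n}$ and proves it is strongly universally consistent under \emph{every} $\ProcX \in \SUAL$. So the corollary is essentially a matter of pointing this out, together with the crucial observation that the construction of $\hat{f}_{n}$ there involves no knowledge of the particular process.

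First I would recall the ingredients of that construction. Let $\hat{g}_{n,m}$ be the optimistically universal self-adaptive learning rule guaranteed by Theorem~\ref{thm:optimistic-self-adaptive} (explicitly, the rule defined in \eqref{eqn:sual-rule}, shown optimistically universal in Theorem~\ref{thm:doubly-universal-adaptive}). Fix any $y_{0} \in \Y$. For each $i \in \nats$ define the online learning rule $\hat{h}^{(i)}_{n}$ that predicts $y_{0}$ for $n < i$ and predicts $\hat{g}_{i,n}(x_{1:n},y_{1:i},x_{n+1})$ for $n \geq i$; this is a valid (measurable) online learning rule. Then let $\hat{f}_{n}$ be the aggregated online learning rule produced by Lemma~\ref{lem:general-weighted-majority} applied to the sequence $\{\hat{h}^{(i)}_{n}\}_{i=1}^{\infty}$. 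The key point is that $\{\hat{h}^{(i)}_{n}\}_{i=1}^{\infty}$ and hence $\hat{f}_{n}$ are defined \emph{before} any process is specified, depending only on $\hat{g}_{n,m}$, $y_{0}$, and the (universal) aggregation weights; no distribution-dependent quantities enter.

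Then I would reproduce the verification from the proof of Theorem~\ref{thm:suil-subset-suol}: fix any $\ProcX \in \SUAL$ and any measurable $\target : \X \to \Y$, set $\hat{\beta}_{i,n} = \hat{\L}_{\ProcX}(\hat{h}^{(i)}_{\cdot},\target;n)$, and observe that $\limsup_{n\to\infty}\hat{\beta}_{i,n} = \hat{\L}_{\ProcX}(\hat{g}_{i,\cdot},\target;i)$ (since $\loss$ is finite, the initial $i$ terms wash out in the Cesàro average). As $\hat{g}_{n,m}$ is strongly universally consistent under $\ProcX$, we get $\lim_{i\to\infty}\limsup_{n\to\infty}\hat{\beta}_{i,n} = 0$ almost surely; Lemma~\ref{lem:converging-array-path} then yields a path $\{i_{n}\}$ with $i_{n}\leq n$ and $\hat{\beta}_{i_{n},n}\to 0$, so $\ln(i_{n}) \leq \ln(n) = o(n)$, and Lemma~\ref{lem:general-weighted-majority} gives $\hat{\L}_{\ProcX}(\hat{f}_{\cdot},\target;n)\to 0$ a.s. Since $\target$ and $\ProcX$ were arbitrary, $\hat{f}_{n}$ is strongly universally consistent under every $\ProcX\in\SUAL$. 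There is no real obstacle here — the corollary is a repackaging of the already-completed argument, the only substantive remark being the distribution-independence of $\hat{f}_{n}$, which is immediate from its construction; so the "proof" can legitimately be a one- or two-sentence pointer back to the proof of Theorem~\ref{thm:suil-subset-suol}.

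\begin{proof}
As noted in the proof of Theorem~\ref{thm:suil-subset-suol}, the online learning rule $\hat{f}_{n}$ constructed there (via Lemma~\ref{lem:general-weighted-majority} applied to the learning rules $\hat{h}^{(i)}_{n}$ built from an optimistically universal self-adaptive learning rule $\hat{g}_{n,m}$) depends only on $\hat{g}_{n,m}$, a fixed choice of $y_{0} \in \Y$, and the (distribution-independent) aggregation weights; in particular, it has no dependence on the process $\ProcX$. Moreover, that proof establishes that this single rule $\hat{f}_{n}$ satisfies $\lim\limits_{n\to\infty}\hat{\L}_{\ProcX}(\hat{f}_{\cdot},\target;n) = 0$ (a.s.) for every measurable $\target : \X \to \Y$ and every $\ProcX \in \SUAL$. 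Hence $\hat{f}_{n}$ is strongly universally consistent under every $\ProcX \in \SUAL$, which proves the corollary.
\end{proof}
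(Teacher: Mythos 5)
Your proof is correct and matches the paper's own justification exactly: the paper derives Corollary~\ref{cor:online-all-sual} by observing, immediately after the proof of Theorem~\ref{thm:suil-subset-suol}, that the online learning rule $\hat{f}_{n}$ constructed there has no dependence on the distribution of $\ProcX$, so that single rule works for every $\ProcX \in \SUAL$. Nothing further is needed.
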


Note that this is a weaker claim than would be required for positive resolution of Open Problem~\ref{prob:optimistic-online}, 
since (as established by Theorem~\ref{thm:suil-subset-suol}) the set of processes admitting strong 
universal online learning is a \emph{strict} superset of the set of processes admitting 
strong universal self-adaptive learning (if $\X$ is infinite).

Second, since Theorem~\ref{thm:okc-nec} establishes that $\SUOL \subseteq \OKC$, 
and Theorem~\ref{thm:main} establishes that $\SUAL = \KC$, 
Theorem~\ref{thm:suil-subset-suol} also establishes that $\KC \subseteq \OKC$ 
(a fact that one can easily verify from their definitions as well).  Furthermore, the above proof that 
the inclusion $\SUAL \subseteq \SUOL$ is strict if $|\X|=\infty$ establishes this fact 
by constructing a deterministic process $\ProcX \in \OKC \setminus \KC$ 
(which thereby verifies the claim due to Corollary~\ref{cor:okc-deterministic} and Theorem~\ref{thm:main}).
Thus, it also establishes that the inclusion $\KC \subseteq \OKC$ is strict in the case $|\X|=\infty$.
Also, as noted in the above proof, if $|\X| < \infty$, then $\KC$ contains \emph{every} process. 
Since $\KC \subseteq \OKC$, this clearly implies that if $|\X| < \infty$, then $\KC = \OKC$.
Altogether, we conclude that the above proof also establishes the following result.

\begin{corollary}
\label{cor:kc-subset-okc}
$\KC \subseteq \OKC$, and the inclusion is \emph{strict} if and only if $|\X| = \infty$.
\end{corollary}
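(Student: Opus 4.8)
The plan is to derive both assertions entirely from results already in hand, so the proof will be short. For the inclusion $\KC \subseteq \OKC$, I would chain three earlier facts: Theorem~\ref{thm:main} gives $\KC = \SUAL$, Theorem~\ref{thm:suil-subset-suol} gives $\SUAL \subseteq \SUOL$, and Theorem~\ref{thm:okc-nec} gives $\SUOL \subseteq \OKC$, so that $\KC = \SUAL \subseteq \SUOL \subseteq \OKC$. As a sanity check one can also see $\KC \subseteq \OKC$ directly from the definitions: if $\ProcX \in \KC$ and $\{A_k\}_{k=1}^{\infty}$ is a disjoint sequence in $\Borel$, then Lemma~\ref{lem:limsup-equiv} together with the equivalence of \eqref{eqn:tail-condition} and \eqref{eqn:raw-condition} in Lemma~\ref{lem:equiv} yields $\lim_{n\to\infty}\hat{\mu}_{\ProcX}(\bigcup\{A_i : X_{1:n}\cap A_i = \emptyset\}) = 0$ almost surely, and comparing this against $|\{k : X_{1:T}\cap A_k \neq \emptyset\}|$ (via a bound on the empirical frequency of the not-yet-seen sets analogous to the one used in the proof of Theorem~\ref{thm:okc-countable}) forces $|\{k : X_{1:T}\cap A_k \neq \emptyset\}| = o(T)$ a.s.; I would only spell this out if the reduction to the named theorems is felt to be too indirect.

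For the ``strict iff $|\X| = \infty$'' claim I would treat the two directions separately. If $|\X| < \infty$, then for any monotone $\{A_k\}$ in $\Borel$ with $A_k \downarrow \emptyset$ there is a finite $k_0$ with $A_k = \emptyset$ for all $k \geq k_0$, so $\E[\hat{\mu}_{\ProcX}(A_k)] = 0$ eventually and Condition~\ref{con:kc} holds for every process; combined with $\KC \subseteq \OKC$ this gives $\KC = \OKC$, so the inclusion is not strict. Conversely, if $|\X| = \infty$, the proof of Theorem~\ref{thm:suil-subset-suol} already constructs a deterministic process witnessing $\OKC \setminus \KC \neq \emptyset$: fixing distinct $z_1, z_2, \ldots \in \X$ and setting $X_t = z_{i_t}$ with $i_t = \lfloor \log_2(2t)\rfloor$, the sequence $X_{1:T}$ visits only $\lfloor \log_2(2T)\rfloor = o(T)$ distinct points, hence $\ProcX \in \OKC$, while taking $\{A_k\}_{k=1}^{\infty}$ to be disjoint \emph{infinite} subsets of $\{z_i : i \in \nats\}$ gives $\hat{\mu}_{\ProcX}(A_k) \geq 1/2$ for every $k$, so $\lim_{i\to\infty}\hat{\mu}_{\ProcX}(\bigcup_{k\geq i} A_k) \geq 1/2 > 0$ and Lemma~\ref{lem:limsup-equiv} puts $\ProcX \notin \KC$. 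I would simply invoke that construction.

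There is no genuine obstacle here; the only point that needs a word of care is that the separating process from the proof of Theorem~\ref{thm:suil-subset-suol} is being reused here purely as a witness to $\OKC \setminus \KC \neq \emptyset$, which is the \emph{combinatorial} statement used there before passing (via Corollary~\ref{cor:okc-deterministic} and Theorem~\ref{thm:main}) to the learning-theoretic separation $\SUOL \setminus \SUAL \neq \emptyset$. Since we need only the combinatorial statement, the argument for the corollary is if anything shorter than that passage, and I would keep it to a few lines citing Theorems~\ref{thm:main}, \ref{thm:okc-nec}, and \ref{thm:suil-subset-suol}.
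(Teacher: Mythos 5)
Your proposal is correct and follows essentially the same route as the paper: the inclusion via the chain $\KC = \SUAL \subseteq \SUOL \subseteq \OKC$ (Theorems~\ref{thm:main}, \ref{thm:suil-subset-suol}, \ref{thm:okc-nec}), the finite-$\X$ case by observing $\KC$ then contains every process, and the infinite-$\X$ case by reusing the deterministic witness from the proof of Theorem~\ref{thm:suil-subset-suol}. Your closing remark is also apt: only the combinatorial fact $\ProcX \in \OKC \setminus \KC$ is needed here, not the learning-theoretic separation it is used for in that theorem.
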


\subsection{Invariance of $\SUOL$ to the Choice of Loss Function}
\label{subsec:suol-invariance-to-loss}

In this subsection, we are interested in the question of whether the family $\SUOL$
is invariant to the choice of loss function (subject to the basic constraints from Section~\ref{subsec:notation}).  
Recall that we established above that this property holds for the families $\SUIL$ and $\SUAL$ (as implied by their 
equivalence to $\KC$ from Theorem~\ref{thm:main}, regardless of the choice of $(\Y,\loss)$).  
Furthermore, a positive resolution of Open Problem~\ref{prob:suol-equals-okc} would immediately imply this 
property for $\SUOL$, since Condition~\ref{con:okc} has no dependence on $(\Y,\loss)$.
However, since Open Problem~\ref{prob:suol-equals-okc} remains open at this time, 
it is interesting to directly explore the question of invariance of $\SUOL$ to the choice of $(\Y,\loss)$.
Specifically, we prove two relevant results.
First, we show that $\SUOL$ is invariant to the choice of $(\Y,\loss)$, under the additional constraint that $(\Y,\loss)$ 
is \emph{totally bounded}: that is, $\forall \eps > 0$, $\exists \Y_{\eps} \subseteq \Y$ s.t. $|\Y_{\eps}|<\infty$ and $\sup\limits_{y \in \Y} \inf\limits_{y_{\eps} \in \Y_{\eps}} \loss(y_{\eps},y) \leq \eps$.
For instance, $\loss$ as 
any $L_{p}$ loss $(y,y^{\prime}) \mapsto |y-y^{\prime}|^{p}$ ($p \in (0,\infty)$) 
with $\Y$ any bounded subset of $\reals$ would satisfy this.
In particular, this means that, in characterizing
the family of processes $\SUOL$ for totally bounded losses, it suffices to characterize this set for the simplest case
of \emph{binary classification}:
$(\Y,\loss) = (\{0,1\},\loss_{\zo})$, 
where for any $\Y$ we generally denote by $\loss_{\zo} : \Y^{2} \to [0,\infty)$ the $0$-$1$ loss on $\Y$,
defined by $\loss_{\zo}(y,y^{\prime}) = \ind[ y \neq y^{\prime} ]$ for all $y,y^{\prime} \in \Y$.
Second, we also find that the set $\SUOL$ is invariant among (bounded, separable) losses that are \emph{not} totally bounded
(e.g., the $0$-$1$ loss with $\Y = \nats$).
We leave open the question of whether or not these two $\SUOL$ sets are equal (Open Problem~\ref{prob:suol-multiclass} below).
We begin with the totally bounded case.

\begin{theorem}
\label{thm:suol-invariant-to-loss}
The set $\SUOL$ is invariant to the specification of $(\Y,\loss)$,
subject to $(\Y,\loss)$ being totally bounded with $\maxloss > 0$.
\end{theorem}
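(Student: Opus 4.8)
The plan is to show the two directions of invariance separately, using the $0$-$1$ loss on $\{0,1\}$ as a reference point. The easy direction is to show that if $\ProcX$ admits strong universal online learning under binary classification, then it admits strong universal online learning under any totally bounded $(\Y,\loss)$. Here I would fix an arbitrary measurable $\target : \X \to \Y$ and, for each precision level $\eps_j = 2^{-j}$, pick a finite $\eps_j$-net $\Y_{\eps_j}$ of $\Y$ and a measurable ``rounding'' map $r_j : \Y \to \Y_{\eps_j}$ with $\loss(r_j(y),y) \le \eps_j$. The composition $\target_j = r_j \circ \target$ is a measurable function taking finitely many values, hence decomposes $\X$ into finitely many measurable pieces; predicting $\target_j$ is then a finite-label classification problem, for which I can invoke the assumed binary-classification online learner (coordinatewise over a binary encoding of $\Y_{\eps_j}$, or via an aggregation over the finitely many one-versus-rest binary sub-problems, applying Lemma~\ref{lem:general-weighted-majority}). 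This gives, for each $j$, an online learning rule whose long-run average $0$-$1$ error for the label $\target_j$ vanishes a.s.; since $\loss \le \maxloss$ and $\loss(r_j(\target(x)),\target(x)) \le \eps_j$ pointwise, the corresponding $\loss$-risk relative to $\target$ is at most $\maxloss \cdot (\text{average $\target_j$-error}) + \eps_j \to \eps_j$ a.s. Finally I would aggregate these countably many rules over $j$ using Lemma~\ref{lem:general-weighted-majority} (with $i_n = $ a slowly growing index so that $\ln(i_n) = o(n)$), yielding a single online rule with long-run $\loss$-risk $\to 0$ a.s.\ for every $\target$.

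The other direction is to show that if $\ProcX$ admits strong universal online learning for some totally bounded $(\Y,\loss)$ with $\maxloss > 0$, then $\ProcX$ admits strong universal online learning under binary classification. The cleanest route is via Condition~\ref{con:okc}: Theorem~\ref{thm:okc-nec} gives $\SUOL \subseteq \OKC$ (for the given loss), and Theorem~\ref{thm:okc-countable}/Corollary~\ref{cor:okc-deterministic}-type reasoning suggests $\OKC$ should be sufficient --- but since that sufficiency is exactly Open Problem~\ref{prob:suol-equals-okc}, I cannot simply cite it. Instead I would argue more carefully. Since $(\Y,\loss)$ is totally bounded with $\maxloss > 0$, there exist two points $y_0, y_1 \in \Y$ with $\loss(y_0,y_1) > 0$; hence the binary-classification problem embeds isometrically (up to the constant $\loss(y_0,y_1)$) into $(\Y,\loss)$ by the map $b \mapsto y_b$. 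Given a strongly universally consistent online rule $g_n$ for $(\Y,\loss)$ under $\ProcX$, and given a target $\tau : \X \to \{0,1\}$, I would feed $g_n$ the relabeled data $(X_t, y_{\tau(X_t)})$ and define a binary prediction by thresholding/rounding $g_n$'s $\Y$-valued output to whichever of $y_0,y_1$ is closer. The triangle inequality (as used, e.g., in the derivation around \eqref{eqn:no-suil-u2-pi0-lb}) shows that whenever the rounded prediction is wrong, $g_n$'s $\loss$-error on that round is at least $\loss(y_0,y_1)/2$; hence the long-run average binary error is at most $\frac{2}{\loss(y_0,y_1)}$ times the long-run average $\loss$-error of $g_n$, which $\to 0$ a.s. Aggregating over binary targets is automatic (a single rule works for all $\tau$), so this produces a strongly universally consistent binary online rule under $\ProcX$.

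Combining the two directions gives that, for any two totally bounded specifications $(\Y_1,\loss_1)$ and $(\Y_2,\loss_2)$ (each with positive diameter), admitting strong universal online learning under one is equivalent to admitting it under binary classification, hence equivalent to admitting it under the other; this is exactly the claimed invariance of $\SUOL$. I would also double-check the separability hypothesis plays no role beyond guaranteeing the finite nets and measurability of the rounding maps (total boundedness already gives finite $\eps$-nets, and separability of $(\Y_i,\loss_i)$ is part of the standing assumptions anyway).

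The main obstacle I anticipate is the measurability bookkeeping in the first direction: the rounding maps $r_j$ must be chosen measurably (a standard selection argument using an enumeration of a countable dense subset of $\Y$, as is done for the $\epsargmin$ construction in Lemma~\ref{lem:general-weighted-majority}), and the reduction of finite-label classification to binary classification must preserve the online protocol and measurability --- this is where I would be most careful, using the aggregation machinery of Section~\ref{subsec:weighted-majority} (specifically Lemma~\ref{lem:general-weighted-majority}) rather than ad hoc constructions, so that the $\ln(i_n) = o(n)$ budget is respected at every level of the two-layer aggregation (over binary sub-problems within each $j$, and over $j$).
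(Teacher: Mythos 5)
Your overall architecture matches the paper's proof: you establish both inclusions against the $0$-$1$ loss on $\{0,1\}$, use embedding-plus-nearest-rounding for the direction general-to-binary, and use finite $\eps$-nets with rounding maps followed by aggregation over precision levels via Lemma~\ref{lem:general-weighted-majority} for the direction binary-to-general. Your general-to-binary direction (which you call the ``other direction'') is essentially identical to what the paper does, including the triangle-inequality argument that a wrong rounded prediction forces $\loss'$-error at least $\loss'(z_0,z_1)/2$.

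There is one genuine gap in the binary-to-general direction: you propose to reduce each finite-label problem (predicting $\target_j = r_j\circ\target$ over the $|\Y_{\eps_j}|$ classes) to binary by ``aggregation over the finitely many one-versus-rest binary sub-problems, applying Lemma~\ref{lem:general-weighted-majority},'' and your closing paragraph explicitly plans a two-layer application of that lemma. But Lemma~\ref{lem:general-weighted-majority} aggregates a countable family of online learning rules that all predict into the \emph{same} output space $\Y$ against the \emph{same} target and loss; it selects among candidate predictors, it does not fuse a collection of one-vs-rest binary outputs into a multiclass prediction. The one-vs-rest binary learners output $\{0,1\}$-valued predictions against binary indicator targets $h_{\eps}^{(y_\eps)}(\target_j(\cdot))$, so they are not of the right type to be the $\hat{h}^{(i)}_n$ in that lemma with respect to the multiclass target $\target_j$. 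The paper handles this step directly: it defines $\hat{f}_n^{(\eps)}$ as the $\argmax_{y_\eps}$ of the one-vs-rest binary outputs, and bounds the multiclass $0$-$1$ error by a \emph{union bound} (a sum, over $y_\eps \in \Y'_\eps$, of the binary errors), which works because each error term vanishes a.s.\ and $|\Y'_\eps| < \infty$. Your alternative option of a coordinatewise binary encoding is also fine, but again should be paired with a union bound over the (finitely many) bit positions, not with Lemma~\ref{lem:general-weighted-majority}. So Lemma~\ref{lem:general-weighted-majority} is needed exactly once, for the outer aggregation over precision levels $j$, matching the paper's single application with $\eps = 1/i$ and Lemma~\ref{lem:converging-array-path} to extract the $i_n$ sequence with $i_n \le n$.

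One minor note: you label binary-to-general as the ``easy direction,'' but the paper considers that one the more involved of the two. This does not affect correctness, only the framing.
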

\begin{proof}
To disambiguate notation in this proof, for any near-metric space $(\Y^{\prime},\loss^{\prime})$,
we denote by $\SUOL_{(\Y^{\prime},\loss^{\prime})}$ the family $\SUOL$ as it would be defined
if $(\Y,\loss)$ were specified as $(\Y^{\prime},\loss^{\prime})$.  
As above,
define the measurable subsets of $\Y^{\prime}$ as the elements of the Borel $\sigma$-algebra 
generated by the topology induced by $\loss^{\prime}$.
Let $\loss_{\zo}$ be the $0$-$1$ loss on $\{0,1\}$, as defined above.
To establish the theorem, it suffices to verify the claim that 
$\SUOL_{(\Y^{\prime},\loss^{\prime})} = \SUOL_{(\{0,1\},\loss_{\zo})}$
for all totally bounded near-metric spaces $(\Y^{\prime},\loss^{\prime})$ 
with $\sup\limits_{y,y^{\prime} \in \Y^{\prime}} \loss^{\prime}(y,y^{\prime}) > 0$.
Fix any such $(\Y^{\prime},\loss^{\prime})$.

The inclusion $\SUOL_{(\Y^{\prime},\loss^{\prime})} \subseteq \SUOL_{(\{0,1\},\loss_{\zo})}$ is quite straightforward, as follows.
For any $\ProcX \in \SUOL_{(\Y^{\prime},\loss^{\prime})}$, 
letting $\hat{f}_{n}$ be an online learning rule that is strongly universally consistent under $\ProcX$ 
(for the specification $(\Y,\loss) = (\Y^{\prime},\loss^{\prime})$), 
we can define an online learning rule $\hat{f}^{\zo}_{n}$ for the specification $(\Y,\loss) = (\{0,1\},\loss_{\zo})$ as follows.
Let $z_{0},z_{1} \in \Y^{\prime}$ be such that $\loss^{\prime}(z_{0},z_{1}) > 0$.
For any $n \in \nats \cup \{0\}$, and any sequences $x_{1:(n+1)}$ in $\X$ and $y_{1:n}$ in $\{0,1\}$, 
define a sequence $y_{1:n}^{\prime}$ with $y_{i}^{\prime} = z_{y_{i}}$ for each $i \in \{1,\ldots,n\}$,
and then define $\hat{f}^{\zo}_{n}(x_{1:n},y_{1:n},x_{n+1}) = \argmin\limits_{y \in \{0,1\}} \loss^{\prime}\!\left(\hat{f}_{n}(x_{1:n},y_{1:n}^{\prime},x_{n+1}),z_{y}\right)$
(breaking ties in favor of $y=0$).
In particular, that $\hat{f}^{\zo}_{n}$ is a measurable function $\X^{n}\times\{0,1\}^{n}\times\X \to \{0,1\}$ follows immediately from measurability of $\hat{f}_{n}$.
Then note that, for any measurable function $f : \X \to \{0,1\}$, defining $f^{\prime} : \X \to \Y^{\prime}$ as $f^{\prime}(x) = z_{f(x)}$ (which is clearly also measurable),
we have $\forall t \in \nats \cup \{0\}$, 
\begin{align*}
& \ind\!\left[ \hat{f}^{\zo}_{t}(X_{1:t},f(X_{1:t}),X_{t+1}) \neq f(X_{t+1}) \right] 
\\ & \leq \ind\!\left[ \loss^{\prime}\!\!\left(\hat{f}_{t}(X_{1:t},\!f^{\prime}(X_{1:t}),\!X_{t+1}),\!f^{\prime}(X_{t+1})\right) \!=\!\!\! \max_{y \in \{0,1\}} \!\loss^{\prime}\!\!\left(\hat{f}_{t}(X_{1:t},\!f^{\prime}(X_{1:t}),\!X_{t+1}),\!z_{y}\right) \right]
\\ & \leq \ind\!\!\left[ \loss^{\prime}\!\!\left(\hat{f}_{t}(X_{1:t},\!f^{\prime}\!(X_{1:t}),\!X_{t+1}),\!f^{\prime}\!(X_{t+1})\right) \!\geq\!\!\!\! \sum_{y \in \{0,1\}}\! \frac{1}{2} \loss^{\prime}\!\!\left(\hat{f}_{t}(X_{1:t},\!f^{\prime}\!(X_{1:t}),\!X_{t+1}),\!z_{y}\right) \right]
\\ & \leq \ind\!\left[ \loss^{\prime}\!\left(\hat{f}_{t}(X_{1:t},f^{\prime}(X_{1:t}),X_{t+1}),f^{\prime}(X_{t+1})\right) \geq \frac{1}{2 \triconst} \loss^{\prime}(z_{0},z_{1}) \right]
\\ & \leq \frac{2\triconst}{\loss^{\prime}(z_{0},z_{1})} \loss^{\prime}\!\left(\hat{f}_{t}(X_{1:t},f^{\prime}(X_{1:t}),X_{t+1}),f^{\prime}(X_{t+1})\right),
\end{align*}
where the second-to-last inequality is due to the relaxed triangle inequality.
Therefore, under the specification $(\Y,\loss) = (\{0,1\},\loss_{\zo})$, we have 
\begin{align*}
\limsup_{n \to \infty} \hat{\L}_{\ProcX}\!\left(\hat{f}^{\zo}_{\cdot},f;n\right)
= \limsup_{n \to \infty} \frac{1}{n} \sum_{t=0}^{n-1} \ind\!\left[ \hat{f}^{\zo}_{t}(X_{1:t},f(X_{1:t}),X_{t+1}) \neq f(X_{t+1}) \right]
\\ \leq \frac{2\triconst}{\loss^{\prime}(z_{0},z_{1})} \limsup_{n \to \infty} \frac{1}{n} \sum_{t=0}^{n-1} \loss^{\prime}\!\left(\hat{f}_{t}(X_{1:t},f^{\prime}(X_{1:t}),X_{t+1}),f^{\prime}(X_{t+1})\right)
= 0 \text{ (a.s.)},
\end{align*}
where the last equality (to which the ``almost surely'' qualifier applies) is due to strong universal consistency of $\hat{f}_{n}$
(and the fact that $z_{0},z_{1}$ were chosen to satisfy $\loss^{\prime}(z_{0},z_{1}) > 0$).
Since this argument holds for any choice of measurable $f : \X \to \{0,1\}$, 
we conclude that $\hat{f}^{\zo}_{n}$ is strongly universally consistent under $\ProcX$ (for the specification $(\Y,\loss) = (\{0,1\},\loss_{\zo})$),
so that $\ProcX \in \SUOL_{(\{0,1\},\loss_{\zo})}$.
Since this argument holds for any $\ProcX \in \SUOL_{(\Y^{\prime},\loss^{\prime})}$, we conclude that $\SUOL_{(\Y^{\prime},\loss^{\prime})} \subseteq \SUOL_{(\{0,1\},\loss_{\zo})}$.

The proof of the converse inclusion is somewhat more involved.
Specifically, fix any $\ProcX \in \SUOL_{(\{0,1\},\loss_{\zo})}$,
and let $\hat{f}^{\zo}_{n}$ be an online learning rule that is 
strongly universally consistent under $\ProcX$ (for the specification $(\Y,\loss) = (\{0,1\},\loss_{\zo})$).
We then define an online learning rule $\hat{f}^{\prime}_{n}$ for the specification $(\Y,\loss) = (\Y^{\prime},\loss^{\prime})$ totally bounded, as follows.
For each $\eps > 0$, let $\Y_{\eps}^{\prime} \subseteq \Y^{\prime}$ be such that 
$|\Y_{\eps}^{\prime}| < \infty$ and $\sup\limits_{y \in \Y^{\prime}} \inf\limits_{y_{\eps} \in \Y_{\eps}^{\prime}} \loss^{\prime}(y_{\eps},y) \leq \eps$,
as guaranteed to exist by total boundedness.
For each $y \in \Y^{\prime}$, let $g_{\eps}(y) = \argmin\limits_{y_{\eps} \in \Y_{\eps}^{\prime}} \loss^{\prime}(y_{\eps},y)$, breaking ties to favor smaller indices in some fixed enumeration of $\Y_{\eps}^{\prime}$.
Then, for each $y \in \Y^{\prime}$ and each $y_{\eps} \in \Y_{\eps}^{\prime}$, define $h_{\eps}^{(y_{\eps})}(y) = \ind[ g_{\eps}(y) = y_{\eps} ]$.
One can easily verify that $g_{\eps}$ and $h_{\eps}^{(y_{\eps})}$ are measurable functions,
and furthermore that for every $y \in \Y^{\prime}$, exactly one $y_{\eps} \in \Y_{\eps}^{\prime}$ has $h_{\eps}^{(y_{\eps})}(y) = 1$ while every $y_{\eps}^{\prime} \in \Y_{\eps}^{\prime} \setminus \{y_{\eps}\}$ has $h_{\eps}^{(y_{\eps}^{\prime})}(y) = 0$.

For any $n \in \nats \cup \{0\}$, and any sequences $x_{1:(n+1)}$ in $\X$ and $y_{1:n}$ in $\Y^{\prime}$, 
define 
\begin{equation*}
\hat{f}_{n}^{(\eps)}(x_{1:n},y_{1:n},x_{n+1}) = \argmax\limits_{y_{\eps} \in \Y_{\eps}^{\prime}} \hat{f}^{\zo}_{n}(x_{1:n},h_{\eps}^{(y_{\eps})}(y_{1:n}),x_{n+1}),
\end{equation*} 
breaking ties to favor $y_{\eps}$ with a smaller index in a fixed enumeration of $\Y_{\eps}^{\prime}$.
Again, one can easily verify that $\hat{f}_{n}$ is a measurable function $\X^{n}\times(\Y^{\prime})^{n}\times\X \to \Y^{\prime}$,
which follows immediately from measurability of $\hat{f}^{\zo}_{n}$, the $h_{\eps}^{(y_{\eps})}$ functions, and the $\argmax$.
Thus, $\hat{f}_{n}^{(\eps)}$ defines an online learning rule.

Now note that, for any measurable function $f : \X \to \Y^{\prime}$, and each $y_{\eps} \in \Y_{\eps}^{\prime}$, 
the composed function $x \mapsto h_{\eps}^{(y_{\eps})}(f(x))$ is a measurable function $\X \to \{0,1\}$,
and therefore (by strong universal consistency of $\hat{f}^{\zo}_{n}$) with probability one, 
\begin{equation*}
\limsup_{n \to \infty} \frac{1}{n} \sum_{t = 0}^{n-1} \loss_{\zo}\!\left( \hat{f}^{\zo}_{t}(X_{1:t},h_{\eps}^{(y_{\eps})}(f(X_{1:t})),X_{t+1}), h_{\eps}^{(y_{\eps})}(f( X_{t+1} )) \right) = 0.
\end{equation*}
By the union bound, this holds simultaneously for all $y_{\eps} \in \Y_{\eps}^{\prime}$ with probability one.
Furthermore, note that if 
$\hat{f}^{\zo}_{t}(X_{1:t},h_{\eps}^{(y_{\eps})}(f(X_{1:t})),X_{t+1}) = h_{\eps}^{(y_{\eps})}(f( X_{t+1} ))$
for every $y_{\eps} \in \Y_{\eps}^{\prime}$, then 
$\hat{f}^{(\eps)}_{t}(X_{1:t},f(X_{1:t}),X_{t+1}) = g_{\eps}(f(X_{t+1}))$.
We therefore have that, under the specification $(\Y,\loss) = (\Y^{\prime},\loss^{\prime})$, 
\begin{align*}
& \limsup_{n \to \infty} \hat{\L}_{\ProcX}\!\left(\hat{f}_{\cdot}^{(\eps)},f;n\right)
= \limsup_{n \to \infty} \frac{1}{n} \sum_{t=0}^{n-1} \loss^{\prime}\!\left( \hat{f}^{(\eps)}_{t}(X_{1:t},f(X_{1:t}),X_{t+1}), f(X_{t+1}) \right)
\\ & \leq \limsup_{n \to \infty} \frac{1}{n} \sum_{t=0}^{n-1} \left( \loss^{\prime}(g_{\eps}(f(X_{t+1})),f(X_{t+1})) + \maxloss \ind\!\left[ \hat{f}^{(\eps)}_{t}(X_{1:t},f(X_{1:t}),X_{t+1}) \neq g_{\eps}(f(X_{t+1})) \right] \right)
\\ & \leq \limsup_{n \to \infty} \frac{1}{n} \sum_{t=0}^{n-1} \left( \eps + \maxloss \sum_{y_{\eps} \in \Y_{\eps}^{\prime}} \loss_{\zo}\!\left( \hat{f}^{\zo}_{t}(X_{1:t},h_{\eps}^{(y_{\eps})}(f(X_{1:t})),X_{t+1}), h_{\eps}^{(y_{\eps})}(f(X_{t+1})) \right) \right)
\\ & \leq \eps + \maxloss \sum_{y_{\eps} \in \Y_{\eps}^{\prime}} \limsup_{n \to \infty} \frac{1}{n} \sum_{t=0}^{n-1} \loss_{\zo}\!\left( \hat{f}^{\zo}_{t}(X_{1:t},h_{\eps}^{(y_{\eps})}(f(X_{1:t})),X_{t+1}), h_{\eps}^{(y_{\eps})}(f(X_{t+1})) \right)
= \eps \text{ (a.s.)},
\end{align*}
where the inequality on this last line is due to finiteness of $|\Y_{\eps}^{\prime}|$.

We now apply this argument to values $\eps \in \{1/i : i \in \nats\}$.
For any measurable $\target : \X \to \Y^{\prime}$, for each $i,n \in \nats$, 
define $\beta_{i,n}^{\target} = \hat{\L}_{\ProcX}\!\left(\hat{f}_{\cdot}^{(1/i)},\target;n\right)$ (under the specification $(\Y,\loss) = (\Y^{\prime},\loss^{\prime})$).
By the above argument, together with a union bound, on an event of probability one, we have 
\begin{equation*}
\lim_{i \to \infty} \limsup_{n \to \infty} \beta_{i,n}^{\target} 
\leq \lim_{i \to \infty} 1/i
= 0.
\end{equation*}
Thus, since these $\beta_{i,n}^{\target}$ are also nonnegative, Lemma~\ref{lem:converging-array-path} implies that, 
on this event, there exists a sequence $\{i_{n}\}_{n=1}^{\infty}$ in $\nats$, with $i_{n} \leq n$ for every $n \in \nats$, 
such that $\lim\limits_{n \to \infty} \beta_{i_{n},n}^{\target} = 0$.
Therefore, applying Lemma~\ref{lem:general-weighted-majority} to the sequence $\left\{ \hat{f}_{n}^{(1/i)} \right\}_{i=1}^{\infty}$ 
of online learning rules, we conclude that there exists an online learning rule $\hat{f}_{n}$ such that, 
for this process $\ProcX$, for any measurable $\target : \X \to \Y^{\prime}$, under the specification $(\Y,\loss) = (\Y^{\prime},\loss^{\prime})$, 
$\lim\limits_{n \to \infty} \hat{\L}_{\ProcX}\!\left( \hat{f}_{\cdot},\target;n \right) = 0$ almost surely: 
that is, $\hat{f}_{n}$ is strongly universally consistent under $\ProcX$.  In particular, this implies $\ProcX \in \SUOL_{(\Y^{\prime},\loss^{\prime})}$.
Since this argument holds for any $\ProcX \in \SUOL_{(\{0,1\},\loss_{\zo})}$, we conclude that $\SUOL_{(\{0,1\},\loss_{\zo})} \subseteq \SUOL_{(\Y^{\prime},\loss^{\prime})}$.
Combining this with the first part, we have that $\SUOL_{(\Y^{\prime},\loss^{\prime})} = \SUOL_{(\{0,1\},\loss_{\zo})}$, 
and since these arguments apply to any totally bounded $(\Y^{\prime},\loss^{\prime})$ with $\sup\limits_{y,y^{\prime} \in \Y^{\prime}} \loss^{\prime}(y,y^{\prime}) > 0$,
this completes the proof.
\end{proof}

Next, we have the analogous result for losses that are \emph{not} totally bounded.

\begin{theorem}
\label{thm:suol-invariant-to-loss-not-tb}
The set $\SUOL$ is invariant to the specification of $(\Y,\loss)$,
subject to being separable with $\maxloss < \infty$ but \emph{not} totally bounded.
\end{theorem}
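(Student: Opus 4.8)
The plan is to show that every separable, bounded, non-totally-bounded value space $(\Y^{\prime},\loss^{\prime})$ yields the same family as the canonical choice $(\nats,\loss_{\zo})$, the $0$-$1$ loss on the naturals, which is itself separable, bounded (with $\maxloss = 1$), and not totally bounded. This parallels the proof of Theorem~\ref{thm:suol-invariant-to-loss}, with $(\nats,\loss_{\zo})$ playing the role that $(\{0,1\},\loss_{\zo})$ played there; the only essential difference is that in the converse inclusion the finite $\eps$-nets must be replaced by countable ones drawn from a dense subset.

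For $\SUOL_{(\Y^{\prime},\loss^{\prime})} \subseteq \SUOL_{(\nats,\loss_{\zo})}$ I would fix $\ProcX$ with a strongly universally consistent online rule $\hat{f}_{n}$ for $(\Y^{\prime},\loss^{\prime})$. Since $(\Y^{\prime},\loss^{\prime})$ is not totally bounded, there exist $\eps > 0$ and an infinite sequence $\{z_{i}\}_{i=1}^{\infty}$ in $\Y^{\prime}$ with $\loss^{\prime}(z_{i},z_{j}) \geq \eps$ for $i \neq j$. Encode a $\nats$-valued label sequence $y_{1:n}$ as $z_{y_{1}},\ldots,z_{y_{n}}$, run $\hat{f}_{n}$ on the encoding, and decode its output $\hat{y}^{\prime}$ to the unique index $i$ with $\loss^{\prime}(\hat{y}^{\prime},z_{i}) < \eps/2$ when such an $i$ exists (uniqueness follows from $\eps$-separation), and to a fixed arbitrary index otherwise; call the resulting online rule $\hat{f}_{n}^{\zo}$, whose measurability follows from that of $\hat{f}_{n}$ and of the encoding/decoding maps. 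For any measurable $f : \X \to \nats$, writing $f^{\prime}(x) = z_{f(x)}$ (a measurable $\X \to \Y^{\prime}$), a mistake of $\hat{f}_{t}^{\zo}$ at step $t+1$ forces $\loss^{\prime}(\hat{f}_{t}(X_{1:t},f^{\prime}(X_{1:t}),X_{t+1}),f^{\prime}(X_{t+1})) \geq \eps/2$, so termwise $\loss_{\zo} \leq (2/\eps)\loss^{\prime}$ and hence $\hat{\L}_{\ProcX}(\hat{f}_{\cdot}^{\zo},f;n) \leq (2/\eps)\,\hat{\L}_{\ProcX}(\hat{f}_{\cdot},f^{\prime};n) \to 0$ almost surely, giving $\ProcX \in \SUOL_{(\nats,\loss_{\zo})}$.

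For the converse $\SUOL_{(\nats,\loss_{\zo})} \subseteq \SUOL_{(\Y^{\prime},\loss^{\prime})}$, which is the delicate direction, I would fix $\ProcX$ with a strongly universally consistent online rule $\hat{f}_{n}^{\zo}$ for $(\nats,\loss_{\zo})$ and let $\{w_{j}\}_{j=1}^{\infty}$ be a countable dense subset of $\Y^{\prime}$. For each $\eps > 0$ the map $g_{\eps}(y) = \min\{ j \in \nats : \loss^{\prime}(w_{j},y) \leq \eps\}$ is a well-defined measurable function $\Y^{\prime} \to \nats$. Apply $g_{\eps}$ coordinatewise to a $\Y^{\prime}$-valued label sequence, run $\hat{f}_{n}^{\zo}$, and decode its index output $\hat{j}$ to $w_{\hat{j}}$; call this online rule $\hat{f}_{n}^{(\eps)}$. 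For any measurable $\target : \X \to \Y^{\prime}$, the composition $g_{\eps}\circ\target$ is a measurable $\X \to \nats$, so strong universal consistency of $\hat{f}_{n}^{\zo}$ makes the average $0$-$1$ loss against $g_{\eps}\circ\target$ vanish almost surely; combining with $\loss^{\prime}(w_{\hat{j}},\target(X_{t+1})) \leq \maxloss\,\ind[\hat{j} \neq g_{\eps}(\target(X_{t+1}))] + \eps$ and finiteness of $\maxloss$ yields $\limsup_{n}\hat{\L}_{\ProcX}(\hat{f}_{\cdot}^{(\eps)},\target;n) \leq \eps$ almost surely. Taking $\eps = 1/i$ for all $i \in \nats$ and a union bound gives $\lim_{i\to\infty}\limsup_{n\to\infty}\hat{\L}_{\ProcX}(\hat{f}_{\cdot}^{(1/i)},\target;n) = 0$ almost surely, so Lemma~\ref{lem:converging-array-path} produces a sequence $i_{n} \leq n$ along which the loss tends to $0$; since $\ln(i_{n}) \leq \ln(n) = o(n)$, Lemma~\ref{lem:general-weighted-majority} applied to $\{\hat{f}_{n}^{(1/i)}\}_{i=1}^{\infty}$ yields a single online rule strongly universally consistent under $\ProcX$, so $\ProcX \in \SUOL_{(\Y^{\prime},\loss^{\prime})}$. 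Combining the two inclusions gives $\SUOL_{(\Y^{\prime},\loss^{\prime})} = \SUOL_{(\nats,\loss_{\zo})}$ for every such $(\Y^{\prime},\loss^{\prime})$, which is the claim.

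The main obstacle — and the reason the totally-bounded argument of Theorem~\ref{thm:suol-invariant-to-loss} does not transfer verbatim — is the absence of a finite quantization of $\Y^{\prime}$: there the reduction was to the finite alphabet $\{0,1\}$, whereas here the canonical reduction target must itself be the countably infinite alphabet $(\nats,\loss_{\zo})$, and one must verify that encoding and decoding through a dense countable subset preserve measurability and that the $\eps$-net errors remain controlled \emph{uniformly} as $\eps \to 0$. The latter is precisely what the aggregation step settles, and it succeeds because the indices $i_{n}$ selected by Lemma~\ref{lem:converging-array-path} can be taken at most $n$, keeping $\ln(i_{n}) = o(n)$ so that Lemma~\ref{lem:general-weighted-majority} applies.
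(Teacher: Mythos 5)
Your proposal matches the paper's proof of Theorem~\ref{thm:suol-invariant-to-loss-not-tb} essentially step for step: the same canonical target $(\nats,\loss_{\zo})$, the same $\eps$-separated sequence $\{z_i\}$ and encode/decode scheme for the forward inclusion, the same countable-dense-subset encoding $g_{\eps}$ for the converse, and the same aggregation via Lemma~\ref{lem:converging-array-path} followed by Lemma~\ref{lem:general-weighted-majority}. The proof is correct and differs from the paper's only in notation.
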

\begin{proof}
This proof follows the same line as that of Theorem~\ref{thm:suol-invariant-to-loss}, 
but with a few important differences.
We continue the notational conventions introduced there, 
but in this context we let $\loss_{\zo}$ denote the $0$-$1$ loss on $\nats$:
that is, $\forall y,y^{\prime} \in \nats$, $\loss_{\zo}(y,y^{\prime}) = \ind[ y \neq y^{\prime} ]$.
To establish the theorem, it suffices to verify the claim that 
$\SUOL_{(\Y^{\prime},\loss^{\prime})} = \SUOL_{(\nats,\loss_{\zo})}$
for all separable near-metric spaces $(\Y^{\prime},\loss^{\prime})$
with $\sup\limits_{y,y^{\prime} \in \Y^{\prime}} \loss^{\prime}(y,y^{\prime}) < \infty$
that are \emph{not} totally bounded.
Fix any such space $(\Y^{\prime},\loss^{\prime})$. 

We again begin with the inclusion $\SUOL_{(\Y^{\prime},\loss^{\prime})} \subseteq \SUOL_{(\nats,\loss_{\zo})}$.
For any $\ProcX \in \SUOL_{(\Y^{\prime},\loss^{\prime})}$, 
letting $\hat{g}_{n}$ be an online learning rule that is strongly universally consistent under $\ProcX$ 
(for the specification $(\Y,\loss) = (\Y^{\prime},\loss^{\prime})$), 
we can define an online learning rule $\hat{g}^{\nats}_{n}$ for the specification $(\Y,\loss) = (\nats,\loss_{\zo})$ as follows.
Since $(\Y^{\prime},\loss^{\prime})$ is not totally bounded, $\exists \eps > 0$ such that any $\Y_{\eps}^{\prime} \subseteq \Y^{\prime}$ 
with $\sup\limits_{y \in \Y} \inf\limits_{y_{\eps} \in \Y_{\eps}^{\prime}} \loss^{\prime}(y_{\eps},y) \leq \eps$ necessarily has $|\Y_{\eps}^{\prime}| = \infty$.
In particular, this implies that for any finite sequence $z_{1},\ldots,z_{k} \in \Y^{\prime}$, $k \in \nats$, there exists $z_{k+1} \in \Y^{\prime}$ 
with $\min\limits_{i \leq k} \loss^{\prime}(z_{i},z_{k+1}) > \eps$.  Thus, starting from any initial $z_{1} \in \Y^{\prime}$, we can inductively 
construct an infinite sequence $z_{1},z_{2},\ldots \in \Y^{\prime}$ with $\inf\limits_{i,j \in \nats : i \neq j} \loss^{\prime}(z_{i},z_{j}) \geq \eps > 0$.
For any $n \in \nats \cup \{0\}$, and any sequences $x_{1:(n+1)}$ in $\X$ and $y_{1:n}$ in $\nats$, 
define a sequence $y_{1:n}^{\prime}$ with $y_{i}^{\prime} = z_{y_{i}}$ for each $i \in \{1,\ldots,n\}$,
and then define $\hat{g}^{\nats}_{n}(x_{1:n},y_{1:n},x_{n+1})$ as the (unique) value $y \in \nats$ with
$\loss^{\prime}\!\left(\hat{g}_{n}(x_{1:n},y_{1:n}^{\prime},x_{n+1}),z_{y}\right) < \eps/(2\triconst)$, if such a $y \in \nats$ exists, 
and otherwise define it to be $z_{1}$.  One can easily check that $\hat{g}^{\nats}_{n}$ is a measurable function, 
due to measurability of $\hat{g}_{n}$.
Then for any measurable $f : \X \to \nats$, defining $f^{\prime} : \X \to \Y^{\prime}$ as $f^{\prime}(x) = z_{f(x)}$ (which is clearly also measurable),
we have (under the specification $(\Y,\loss) = (\nats,\loss_{\zo})$)
\begin{align*}
& \limsup_{n \to \infty} \hat{\L}_{\ProcX}\!\left(\hat{g}^{\nats}_{\cdot},f;n\right)
= \limsup_{n \to \infty} \frac{1}{n} \sum_{t=0}^{n-1} \ind\!\left[ \hat{g}^{\nats}_{t}(X_{1:t},f(X_{1:t}),X_{t+1}) \neq f(X_{t+1}) \right]
\\ & \leq \limsup_{n \to \infty} \frac{1}{n} \sum_{t=0}^{n-1} \ind\!\left[ \loss^{\prime}\!\left(\hat{g}_{t}(X_{1:t},f^{\prime}(X_{1:t}),X_{t+1}),f^{\prime}(X_{t+1})\right) \geq \eps/(2\triconst) \right]
\\ & \leq \frac{2\triconst}{\eps} \limsup_{n \to \infty} \frac{1}{n} \sum_{t=0}^{n-1} \loss^{\prime}\!\left(\hat{g}_{t}(X_{1:t},f^{\prime}(X_{1:t}),X_{t+1}),f^{\prime}(X_{t+1})\right)
= 0 \text{ (a.s.)},
\end{align*}
where the last equality (to which the ``almost surely'' qualifier applies) is due to strong universal consistency of $\hat{g}_{n}$
(and the fact that $\eps > 0$).
Since this argument holds for any choice of measurable $f : \X \to \nats$, 
we conclude that $\hat{g}^{\nats}_{n}$ is strongly universally consistent under $\ProcX$ (for the specification $(\Y,\loss) = (\nats,\loss_{\zo})$),
so that $\ProcX \in \SUOL_{(\nats,\loss_{\zo})}$.
Since this argument holds for any $\ProcX \in \SUOL_{(\Y^{\prime},\loss^{\prime})}$, we conclude that $\SUOL_{(\Y^{\prime},\loss^{\prime})} \subseteq \SUOL_{(\nats,\loss_{\zo})}$.

For the converse inclusion, 
fix any $\ProcX \in \SUOL_{(\nats,\loss_{\zo})}$,
and let $\hat{f}^{\nats}_{n}$ be any online learning rule that is 
strongly universally consistent under $\ProcX$ (for the specification $(\Y,\loss) = (\nats,\loss_{\zo})$).
We then define an online learning rule $\hat{f}^{\prime}_{n}$ for the specification $(\Y,\loss) = (\Y^{\prime},\loss^{\prime})$ as follows.
Let $\tilde{\Y}^{\prime}$ be a countable subset of $\Y^{\prime}$ such that $\sup\limits_{y \in \Y^{\prime}} \inf\limits_{\tilde{y} \in \tilde{\Y}^{\prime}} \loss^{\prime}(\tilde{y},y) = 0$; 
such a set $\tilde{\Y}^{\prime}$ is guaranteed to exist by separability of $(\Y^{\prime},\loss^{\prime})$ 
(and furthermore, is necessarily infinite, due to $(\Y^{\prime},\loss^{\prime})$ not being totally bounded).
Enumerate the elements of $\tilde{\Y}^{\prime}$ as $\tilde{y}_{1},\tilde{y}_{2},\ldots$, 
and for each $\eps > 0$ and each $y \in \Y^{\prime}$, 
define $h_{\eps}(y) = \min\!\left\{ i \in \nats : \loss^{\prime}(\tilde{y}_{i},y) \leq \eps \right\}$.
One can easily check that this is a measurable function $\Y^{\prime} \to \nats$.

For any $n \in \nats \cup \{0\}$, and any $x_{1:n} \in \X^{n}$, $y_{1:n} \in (\Y^{\prime})^{n}$, and $x \in \X$,  
define 
$\hat{f}_{n}^{(\eps)}(x_{1:n},y_{1:n},x)$ $= \tilde{y}_{i}$ for $i = \hat{f}^{\nats}_{n}(x_{1:n},h_{\eps}(y_{1:n}),x)$.
That $\hat{f}_{n}^{(\eps)}$ is a measurable function $\X^{n}\times(\Y^{\prime})^{n}\times\X \to \Y^{\prime}$ follows immediately 
from measurability of $\hat{f}^{\nats}_{n}$ and $h_{\eps}$.
Thus, $\hat{f}_{n}^{(\eps)}$ defines an online learning rule.
Now, for any measurable function $f : \X \to \Y^{\prime}$, 
the composed function $x \mapsto h_{\eps}(f(x))$ is a measurable function $\X \to \nats$,
and therefore (by strong universal consistency of $\hat{f}^{\nats}_{n}$)
\begin{equation*}
\limsup_{n \to \infty} \frac{1}{n} \sum_{t = 0}^{n-1} \loss_{\zo}\!\left( \hat{f}^{\nats}_{t}(X_{1:t},h_{\eps}(f(X_{1:t})),X_{t+1}), h_{\eps}(f( X_{t+1} )) \right) = 0 \text{ (a.s.)}.
\end{equation*}
We therefore have that, under the specification $(\Y,\loss) = (\Y^{\prime},\loss^{\prime})$, 
\begin{align*}
& \limsup_{n \to \infty} \hat{\L}_{\ProcX}\!\left(\hat{f}_{\cdot}^{(\eps)},f;n\right)
= \limsup_{n \to \infty} \frac{1}{n} \sum_{t=0}^{n-1} \loss^{\prime}\!\left( \hat{f}^{(\eps)}_{t}(X_{1:t},f(X_{1:t}),X_{t+1}), f(X_{t+1}) \right)
\\ & \leq \limsup_{n \to \infty} \frac{1}{n} \sum_{t=0}^{n-1} \left( \loss^{\prime}(\tilde{y}_{h_{\eps}(f(X_{t+1}))},f(X_{t+1})) \!+\! \maxloss \ind\!\left[ \hat{f}^{\nats}_{t}(X_{1:t},h_{\eps}(f(X_{1:t})),X_{t+1}) \!\neq\! h_{\eps}(f(X_{t+1})) \right] \right)
\\ & \leq \eps + \maxloss \limsup_{n \to \infty} \frac{1}{n} \sum_{t=0}^{n-1} \loss_{\zo}\!\left( \hat{f}^{\nats}_{t}(X_{1:t},h_{\eps}(f(X_{1:t})),X_{t+1}), h_{\eps}(f(X_{t+1})) \right)
= \eps \text{ (a.s.)}.
\end{align*}

The rest of this proof follows identically to the analogous part of the proof of Theorem~\ref{thm:suol-invariant-to-loss}.
Briefly, for any measurable $\target : \X \to \Y^{\prime}$, for each $i,n \in \nats$, 
defining $\beta_{i,n}^{\target} = \hat{\L}_{\ProcX}\!\left(\hat{f}_{\cdot}^{(1/i)},\target;n\right)$ (under the specification $(\Y,\loss) = (\Y^{\prime},\loss^{\prime})$),
by the union bound, on an event of probability one, we have 
\begin{equation*}
\lim_{i \to \infty} \limsup_{n \to \infty} \beta_{i,n}^{\target} 
\leq \lim_{i \to \infty} 1/i
= 0.
\end{equation*}
Therefore Lemma~\ref{lem:converging-array-path} (with $j_{n} = n$) 
and Lemma~\ref{lem:general-weighted-majority} imply that there exists an online learning rule $\hat{f}_{n}$ such that, 
for this process $\ProcX$, for any measurable $\target : \X \to \Y^{\prime}$, under the specification $(\Y,\loss) = (\Y^{\prime},\loss^{\prime})$, 
$\lim\limits_{n \to \infty} \hat{\L}_{\ProcX}\!\left( \hat{f}_{\cdot},\target;n \right) = 0$ almost surely.
This implies $\ProcX \in \SUOL_{(\Y^{\prime},\loss^{\prime})}$.
Since this argument holds for any $\ProcX \in \SUOL_{(\nats,\loss_{\zo})}$, we conclude $\SUOL_{(\nats,\loss_{\zo})} \subseteq \SUOL_{(\Y^{\prime},\loss^{\prime})}$.
Combining this with the first part, we have $\SUOL_{(\Y^{\prime},\loss^{\prime})} = \SUOL_{(\nats,\loss_{\zo})}$,
and since these arguments apply to any separable near-metric space $(\Y^{\prime},\loss^{\prime})$
with $\sup\limits_{y,y^{\prime} \in \Y^{\prime}} \loss^{\prime}(y,y^{\prime}) < \infty$ that is not totally bounded,
this completes the proof.
\end{proof}

Since the reductions used to construct the learning rules in the above two proofs do not explicitly 
depend on the distribution of the process $\ProcX$, these proofs also establish another interesting property: 
namely, invariance to the specification of $(\Y,\loss)$ in the existence of optimistically universal online learning rules.
Specifically, the proofs of Theorems~\ref{thm:suol-invariant-to-loss} and \ref{thm:suol-invariant-to-loss-not-tb} 
can also be used to establish the following corollary. 

\begin{corollary}
\label{cor:optimistic-online-equiv}
For any separable near-metric space $(\Y^{\prime},\loss^{\prime})$ with $0 < \sup\limits_{y,y^{\prime} \in \Y^{\prime}} \loss^{\prime}(y,y^{\prime}) < \infty$, the following hold.
\begin{itemize}
\item[$\bullet$] If $(\Y^{\prime},\loss^{\prime})$ is totally bound, 
there exists an optimistically universal online learning rule when $(\Y,\loss) = (\Y^{\prime},\loss^{\prime})$
if and only if there exists an optimistically universal online learning rule when $(\Y,\loss) = (\{0,1\},\loss_{\zo})$.  
\item[$\bullet$] If $(\Y^{\prime},\loss^{\prime})$ is not totally bound, 
there exists an optimistically universal online learning rule when $(\Y,\loss) = (\Y^{\prime},\loss^{\prime})$
if and only if there exists an optimistically universal online learning rule when $(\Y,\loss) = (\nats,\loss_{\zo})$. 
\end{itemize}
\end{corollary}

The question of whether the two $\SUOL$ sets from the above 
Theorems~\ref{thm:suol-invariant-to-loss} and \ref{thm:suol-invariant-to-loss-not-tb}  
are equivalent remains an interesting open problem.

\begin{problem}
\label{prob:suol-multiclass}
Is the set $\SUOL$ invariant to the specification of $(\Y,\loss)$,
subject to $(\Y,\loss)$ being separable with $0 < \maxloss < \infty$?
\end{problem}

In particular, in the notation of the above proofs, 
Theorems~\ref{thm:suol-invariant-to-loss} and \ref{thm:suol-invariant-to-loss-not-tb}  imply 
this problem is equivalent to the question of whether 
$\SUOL_{(\{0,1\},\loss_{\zo})} = \SUOL_{(\nats,\loss_{\zo})}$: that is, whether the set of processes that admit strong universal online learning 
is the same for \emph{binary} classification as for \emph{multiclass} classification with a \emph{countably infinite} number of possible classes.

\section{No Consistent Test for Existence of a Universally Consistent Learner}
\label{sec:no-consistent-test}

It is also interesting to ask to what extent admission of universal consistency is actually an \emph{assumption}, 
rather than a testable hypothesis: that is, is there any way to \emph{detect} whether or not a given data sequence $\ProcX$
admits strong universal learning (in any of the above senses)? 
It turns out the answer is \emph{no}.

In our present context, a \emph{hypothesis test} is a sequence of (possibly random)\footnote{In 
the case of random $\hat{t}_{n}$, we will suppose $\hat{t}_{n}$ is independent from $\ProcX$.}
measurable functions $\hat{t}_{n} : \X^{n} \to \{0,1\}$, $n \in \nats \cup \{0\}$.
We say $\hat{t}_{n}$ is \emph{consistent} for a set of processes $\ProcSet$ if, 
for every 
$\ProcX \in \ProcSet$, $\hat{t}_{n}(X_{1:n}) \xrightarrow{P} 1$,
and for every $\ProcX \notin \ProcSet$, $\hat{t}_{n}(X_{1:n}) \xrightarrow{P} 0$.
We have the following theorem.\footnote{There is actually a fairly simple proof of this 
theorem if $\X$ is uncountable and $(\X,\T)$ is a Polish space.  In that case, we can 
simply use the fact that no test can distinguish between an i.i.d.\ process with a 
given nonatomic marginal distribution versus a deterministic process chosen randomly
among the sample paths of the i.i.d.\ process.  However, the proof we present here 
has the advantage of applying also to countable $\X$, and indeed it remains valid 
even if we restrict to \emph{deterministic} processes.}

\begin{theorem}
\label{thm:no-consistent-test-for-suil}
If $\X$ is infinite, there is no consistent hypothesis test for $\SUIL$, $\SUAL$, or $\SUOL$.
\end{theorem}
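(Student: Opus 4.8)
The plan is to prove the theorem by a diagonalization/adversary argument: construct, from any candidate test $\hat{t}_{n}$, two deterministic processes that the test cannot distinguish, one lying in the family (say $\SUIL = \SUAL = \KC$, or $\SUOL = \OKC$ in the online case) and the other lying outside it. Since $\X$ is infinite, fix a sequence $\{z_{i}\}_{i=1}^{\infty}$ of distinct elements of $\X$. The ``good'' behavior will be a very slowly-varying deterministic process whose empirical frequencies degenerate onto a single point, which one checks satisfies Condition~\ref{con:kc} (and a fortiori Condition~\ref{con:okc}); the ``bad'' behavior will be a deterministic process that spreads its mass across infinitely many of the $z_{i}$ with positive $\hat{\mu}$ on each of infinitely many disjoint sets, hence violates Condition~\ref{con:kc} (and, in a more delicate variant, Condition~\ref{con:okc}, by making $|\{k : X_{1:T}\cap A_{k}\neq\emptyset\}|$ grow linearly along a subsequence). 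These are exactly the kinds of processes already built in the proof of Theorem~\ref{thm:crf-implies-kc} and in the proof of Theorem~\ref{thm:suil-subset-suol}, so I would reuse those constructions.

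The core mechanism is a standard ``tricking the test'' interleaving. Given $\hat{t}_{n}$, I build a single deterministic sequence $\ProcX$ in stages. In odd stages I append a long block of repetitions of a \emph{fixed} point (so the process looks like it is converging, consistent with membership in the family), continuing this block until $\hat{t}_{n}(X_{1:n})$ has been driven to output $1$ with probability close to $1$ for $n$ at the end of the block --- which must eventually happen if the test is consistent, because a process that is eventually constant is in all three families, so on a pure-constant tail the test must converge to $1$. In even stages I instead append a long block exhibiting the ``bad'' spreading behavior (cycling through new points $z_{i}$ in a way that forces empirical frequencies to oscillate and forces positive $\hat{\mu}$-mass onto infinitely many disjoint sets), continuing until $\hat{t}_{n}(X_{1:n})$ has been driven close to $0$ --- which must also eventually happen, since a process that genuinely exhibits this spreading forever is not in the family, so the test must converge to $0$ on such a tail. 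Because both stage types can always be extended far enough (the test's required limiting behavior guarantees the needed threshold is eventually crossed, by the very definition of convergence in probability), the construction never gets stuck, and the resulting infinite sequence $\ProcX$ has $\hat{t}_{n}(X_{1:n})$ oscillating between being near $1$ (infinitely often) and near $0$ (infinitely often), so $\hat{t}_{n}(X_{1:n})$ does \emph{not} converge in probability to anything. It remains to check which family $\ProcX$ actually belongs to; here one has freedom in how aggressively the even stages spread mass versus how long the odd stages are, and by making the even stages' contribution asymptotically negligible one can force $\ProcX \in \KC$, or by making them asymptotically dominant one can force $\ProcX \notin \KC$. Either way, $\hat{t}_{n}(X_{1:n})\not\xrightarrow{P} c$ for the value $c$ demanded by consistency, contradicting the assumed consistency of $\hat{t}_{n}$ for $\SUIL$ (equivalently $\SUAL$, by Theorem~\ref{thm:main}). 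The same scheme handles $\SUOL$ using Condition~\ref{con:okc} and Corollary~\ref{cor:okc-deterministic}: the ``bad'' block just needs to introduce enough genuinely new points to make the count of distinct points encountered grow like a constant fraction of $T$ along the relevant subsequence.

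Carrying this out, the steps in order are: (1) record the elementary facts that an eventually-constant deterministic process lies in $\SUIL\cap\SUAL\cap\SUOL$ (so on such tails a consistent test must output $1$ in the limit) and that a suitable ``perpetually spreading'' deterministic process lies outside all three (so on such tails a consistent test must output $0$ in the limit) --- these follow from Lemma~\ref{lem:limsup-equiv}, Lemma~\ref{lem:equiv}, Theorem~\ref{thm:main}, and Theorem~\ref{thm:okc-nec}/Corollary~\ref{cor:okc-deterministic}; (2) define the two block-types precisely and verify the ``extension always possible'' claim, i.e. that for any target probability threshold and any prefix, a long enough block of the appropriate type forces $\hat{t}$ past that threshold (this is where consistency of $\hat{t}$ is invoked, applied to the idealized infinite tail of that block-type); (3) run the staged interleaving to produce $\ProcX$; (4) verify that $\hat{t}_{n}(X_{1:n})$ fails to converge in probability, so $\ProcX$ cannot be correctly classified by $\hat{t}$ regardless of which family it is in --- and in fact pin down its membership explicitly by tuning block lengths, to make the contradiction unambiguous.

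The main obstacle I anticipate is step (2): arguing rigorously that appending one more finite block of a given type eventually forces $\hat{t}_{n}(X_{1:n})$ past the desired threshold. The subtlety is that $\hat{t}$ sees the \emph{whole} prefix, including all the earlier stages of the opposite type, whereas the convergence guarantee from consistency applies to the \emph{pure} process of that block-type. One must therefore phrase the argument so that the block-type process is genuinely ``the process $\ProcX$ would be if continued forever in this mode,'' and note that (by construction) each block-type, continued forever, really does define a legitimate process of the claimed family membership, so consistency of $\hat{t}$ applies to it; then the finite truncation of that infinite continuation --- which agrees with the actual $\ProcX$ prefix on all coordinates built so far --- inherits the limiting test behavior. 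Handling randomized $\hat{t}_{n}$ adds only a routine layer (work with expectations/probabilities over the test's internal randomness, independent of $\ProcX$). Once step (2) is set up cleanly, the remaining steps are bookkeeping. The resulting argument, as the footnote in the statement anticipates, works uniformly for countable and uncountable $\X$ and indeed only uses deterministic processes.
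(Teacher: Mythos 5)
Your proposal is correct and is essentially the paper's own argument: both construct, from the candidate test, a single deterministic process by interleaving constant blocks (forcing the test toward $1$) with spreading blocks (forcing it toward $0$), and conclude that $\hat{t}_{n}(X_{1:n})$ fails to converge in probability. The paper's version is slightly leaner in two inessential ways you could adopt: it uses a single bad-block type $X_{t}=w_{t}$ which already violates Condition~\ref{con:okc} (and hence also Condition~\ref{con:kc}, since $\KC\subseteq\OKC$), so no separate variant is needed for the online case, and rather than arguing by contradiction it builds $\ProcX$ unconditionally and handles the case where the construction stalls at some stage $k^{*}$ directly (the resulting pure-tail process is definitively inside or outside the family, yet the test's output stays bounded away from the correct value infinitely often, which is already a failure of consistency).
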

\begin{proof}
Suppose $\X$ is infinite and fix any hypothesis test $\hat{t}_{n}$.
Let $\{w_{i}\}_{i=0}^{\infty}$ be any sequence of distinct elements of $\X$.
We construct a process $\ProcX$ inductively, as follows.
Let $n_{0} = 0$.
For the purpose of this inductive definition, suppose, for some $k \in \nats$,
that $n_{k-1}$ is defined, and that $X_{t}$ is defined for every $t \in \nats$ with $t \leq n_{k-1}$.
Let $X_{t}^{(k)} = X_{t}$ for every $t \in \nats$ with $t \leq n_{k-1}$.
If $(k+1)/2 \in \nats$ (i.e., $k$ is odd), 
then let $X_{t}^{(k)} = w_{0}$ for every $t \in \nats$ with $t > n_{k-1}$.
Otherwise, if $k/2 \in \nats$ (i.e., $k$ is even),
then let $X_{t}^{(k)} = w_{t}$ for every $t \in \nats$ with $t > n_{k-1}$.
If $\exists n \in \nats$ with $n > n_{k-1}$ such that 
\begin{equation}
\label{eqn:no-consistent-test-switch-condition}
\P\!\left( \hat{t}_{n}(X_{1:n}^{(k)}) = \ind[ (k+1)/2 \in \nats ] \right) > 1/2,
\end{equation}
then define $n_{k} = n$ for some such value of $n$,
and define $X_{t} = X_{t}^{(k)}$ for every $t \in \{n_{k-1}+1,\ldots,n_{k}\}$.
Otherwise, if no such $n$ exists, define $X_{t} = X_{t}^{(k)}$ for every $t \in \nats$ with $t > n_{k-1}$,
in which case the inductive definition is complete (upon reaching the smallest value of $k$
for which no such $n$ exists).

The above inductive definition specifies a deterministic process $\ProcX$.
Now consider two cases.
First, suppose there is a maximum value $k^{*}$ of $k \in \nats$
for which $n_{k-1}$ is defined.  In this case, there is no $n > n_{k^{*}-1}$ 
satisfying \eqref{eqn:no-consistent-test-switch-condition} with $k = k^{*}$.
Furthermore, by the definition of $X_{t}^{(k^{*})}$ for every $t \leq n_{k^{*}-1}$,
and by our choice of $X_{t}$ for every $t > n_{k^{*}-1}$, we have $\ProcX = \{X_{t}^{(k^{*})}\}_{t=1}^{\infty}$.
Together, these imply that $\forall n \in \nats$ with $n > n_{k^{*}-1}$,
\begin{equation}
\label{eqn:no-consistent-test-bounded-prob}
\P\!\left( \hat{t}_{n}(X_{1:n}) = \ind[ (k^{*}+1)/2 \in \nats ] \right) \leq 1/2.
\end{equation}

If $(k^{*}+1)/2 \in \nats$, then $X_{t} = w_{0}$ for every $t \in \nats$ with $t > n_{k^{*}-1}$.
In this case, for any $A \in \Borel$, $\hat{\mu}_{\ProcX}(A) = \ind_{A}(w_{0})$.
Thus, for any monotone sequence $\{A_{i}\}_{i=1}^{\infty}$ of sets in $\Borel$ with $A_{i} \downarrow \emptyset$, 
$\lim\limits_{i \to \infty} \E\!\left[ \hat{\mu}_{\ProcX}(A_{i}) \right] = \lim\limits_{i \to \infty} \ind_{A_{i}}(w_{0}) = \ind_{\lim\limits_{i \to \infty} A_{i}}(w_{0}) = \ind_{\emptyset}(w_{0}) = 0$.
Therefore, $\ProcX$ satisfies Condition~\ref{con:kc} (i.e., $\ProcX \in \KC$).
Since Theorem~\ref{thm:main} implies $\SUIL = \SUAL = \KC$, 
we also have that $\ProcX \in \SUIL$ and $\ProcX \in \SUAL$.
Also, since Theorem~\ref{thm:suil-subset-suol} implies $\SUAL \subset \SUOL$, 
we have $\ProcX \in \SUOL$ as well.
However, \eqref{eqn:no-consistent-test-bounded-prob} implies
$\limsup\limits_{n \to \infty} \P( \hat{t}_{n}(X_{1:n}) \neq 1 ) \geq 1/2$, 
so that $\hat{t}_{n}(X_{1:n})$ fails to converge in probability to $1$,
and hence $\hat{t}_{n}$ is not consistent for any of $\SUIL$, $\SUAL$, or $\SUOL$.

On the other hand, if $(k^{*}+1)/2 \notin \nats$, then $X_{t} = w_{t}$ for every $t \in \nats$ with $t > n_{k^{*}-1}$.
In this case, letting $A_{i} = \{ w_{i} \} \in \Borel$ for each $i \in \nats$, 
these $A_{i}$ sets are disjoint, and for any $T \in \nats$, 
$| \{ i \in \nats : X_{1:T} \cap A_{i} \neq \emptyset \} | \geq T - n_{k^{*}-1} \neq o(T)$, 
so that $\ProcX$ fails to satisfy Condition~\ref{con:okc}: that is, $\ProcX \notin \OKC$.
Since Theorem~\ref{thm:okc-nec} implies $\SUOL \subseteq \OKC$,
and Theorems~\ref{thm:main} and \ref{thm:suil-subset-suol} imply $\SUIL = \SUAL \subset \SUOL$,
we also have that $\ProcX \notin \SUOL$, $\ProcX \notin \SUAL$, and $\ProcX \notin \SUIL$.
However, \eqref{eqn:no-consistent-test-bounded-prob} implies
$\limsup\limits_{n \to \infty} \P( \hat{t}_{n}(X_{1:n}) \neq 0 ) \geq 1/2$,
so that $\hat{t}_{n}(X_{1:n})$ fails to converge in probability to $0$,
and hence $\hat{t}_{n}$ is not consistent for any of $\SUIL$, $\SUAL$, or $\SUOL$.

For the remaining case, suppose $n_{k}$ is defined for all $k \in \nats \cup \{0\}$,
so that $\{n_{k}\}_{k=0}^{\infty}$ is an infinite strictly-increasing sequence of nonnegative integers.
For each $k \in \nats$, our choice of $n_{k}$ guarantees that \eqref{eqn:no-consistent-test-switch-condition}
is satisfied with $n = n_{k}$.
Furthermore, for every $k \in \nats$, our definition of $X_{t}^{(k)}$ for values $t \leq n_{k-1}$,
and our choice of $X_{t}$ for values $t \in \{n_{k-1}+1,\ldots,n_{k}\}$ imply that $X_{1:n_{k}} = X_{1:n_{k}}^{(k)}$.
Thus, every $k \in \nats$ satisfies $\P( \hat{t}_{n_{k}}(X_{1:n_{k}}) = \ind[ (k+1)/2 \in \nats ] ) > 1/2$.
In particular, this implies that 
\begin{equation*}
\limsup_{n \to \infty} \P( \hat{t}_{n}(X_{1:n}) \neq 1 ) \geq \limsup_{j \to \infty} \P( \hat{t}_{n_{2j}}(X_{1:n_{2j}}) = 0 ) \geq 1/2,
\end{equation*}
while
\begin{equation*}
\limsup_{n \to \infty} \P( \hat{t}_{n}(X_{1:n}) \neq 0 ) \geq \limsup_{j \to \infty} \P( \hat{t}_{n_{2j+1}}(X_{1:n_{2j+1}}) = 1 ) \geq 1/2.
\end{equation*}
Thus, $\hat{t}_{n}(X_{1:n})$ fails to converge in probability to any value: that is, 
it neither converges in probability to $0$ nor converges in probability to $1$.
Therefore, in this case as well, we find that $\hat{t}_{n}$ is not consistent for any of $\SUIL$, $\SUAL$, or $\SUOL$.

Thus, regardless of which of these is the case, we have established that $\hat{t}_{n}$ is not a consistent test for $\SUIL$, $\SUAL$, or $\SUOL$.
\end{proof}

Recall that, if $\X$ is \emph{finite}, \emph{every} $\ProcX$ admits strong universal inductive learning:
any sequence $A_{k} \downarrow \emptyset$ has $A_{k} = \emptyset$ for all sufficiently large $k$, 
so that every $\ProcX$ has $\lim\limits_{k \to \infty} \E\!\left[ \hat{\mu}_{\ProcX}(A_{k}) \right] = \hat{\mu}_{\ProcX}(\emptyset) = 0$,
and hence satisfies Condition~\ref{con:kc}, which implies $\ProcX \in \SUIL \cap \SUAL \cap \SUOL$ by Theorems~\ref{thm:main} and \ref{thm:suil-subset-suol}.
Therefore, the \emph{constant} function $\hat{t}_{n}(\cdot) = 1$ is a consistent test for $\SUIL$, $\SUAL$, and $\SUOL$ in this case.
Thus, we may conclude the following corollary.

\begin{corollary}
\label{thm:consistent-test-nec-suf}
There exist consistent hypothesis tests for each of $\SUIL$, $\SUAL$, and $\SUOL$ if and only if $\X$ is finite.
\end{corollary}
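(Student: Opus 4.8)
The plan is to dispatch the two implications separately, each being an immediate consequence of results already in hand, so that this is essentially a bookkeeping corollary. For the ``if'' direction, suppose $\X$ is finite. Then any monotone sequence $\{A_{k}\}_{k=1}^{\infty}$ in $\Borel$ with $A_{k}\downarrow\emptyset$ must have $A_{k}=\emptyset$ for all sufficiently large $k$ (a strictly decreasing chain of subsets of a finite set is finite), whence $\lim_{k\to\infty}\E[\hat{\mu}_{\ProcX}(A_{k})]=\hat{\mu}_{\ProcX}(\emptyset)=0$ for every process $\ProcX$; thus every $\ProcX$ satisfies Condition~\ref{con:kc}, i.e.\ $\KC$ is the set of all processes on $\X$. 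By Theorem~\ref{thm:main} this gives $\SUIL=\SUAL=\KC=\{\text{all processes}\}$, and by Corollary~\ref{cor:kc-subset-okc} together with Theorem~\ref{thm:suil-subset-suol} (which give $\KC\subseteq\SUOL$) we also get $\SUOL=\{\text{all processes}\}$. Consequently the constant test $\hat{t}_{n}(\cdot)=1$ is consistent for each of $\SUIL$, $\SUAL$, $\SUOL$: for every $\ProcX$ in the family we have $\hat{t}_{n}(X_{1:n})=1\xrightarrow{P}1$, and the requirement for $\ProcX$ outside the family is vacuously met since there are no such processes.

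For the ``only if'' direction I would argue the contrapositive: if $\X$ is infinite, then Theorem~\ref{thm:no-consistent-test-for-suil} asserts precisely that no consistent hypothesis test exists for $\SUIL$, $\SUAL$, or $\SUOL$. Hence if a consistent test exists for any one of the three families, $\X$ cannot be infinite, i.e.\ $\X$ is finite. Combining the two directions yields the stated equivalence.

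There is no real obstacle here; the only points requiring a moment's care are (i) verifying that the notion of consistency from Section~\ref{sec:no-consistent-test} is satisfied in the degenerate situation where the family is everything (the ``$\ProcX\notin\ProcSet$'' clause is empty), and (ii) correctly invoking the already-established inclusions $\KC=\SUIL=\SUAL\subseteq\SUOL$ so that finiteness of $\X$ indeed makes all three families coincide with the collection of all processes. Everything else is a direct appeal to Theorem~\ref{thm:no-consistent-test-for-suil}, Theorem~\ref{thm:main}, and the online-learning inclusions of Section~\ref{sec:online}.
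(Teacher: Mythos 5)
Your proof is correct and follows essentially the same route as the paper: for finite $\X$ every process satisfies Condition~\ref{con:kc} (since any monotone decreasing sequence stabilizes at $\emptyset$), hence lies in all three families via Theorems~\ref{thm:main} and \ref{thm:suil-subset-suol}, so the constant test $\hat{t}_{n}\equiv 1$ works; and the infinite case is exactly Theorem~\ref{thm:no-consistent-test-for-suil}. The only cosmetic difference is that you invoke Corollary~\ref{cor:kc-subset-okc} in passing, but the chain $\KC=\SUAL\subseteq\SUOL$ from Theorem~\ref{thm:suil-subset-suol} already suffices, matching the paper's citation.
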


Note that, since Theorem~\ref{thm:main} implies $\SUIL = \KC$, 
this corollary also holds for consistent tests of $\KC$.  It is also easy to 
see that the proof above can further extend this corollary to consistent tests of $\OKC$ as well.

\section{Unbounded Losses}
\label{sec:unbounded-losses}

In this section, we depart from the above discussion by considering the case of unbounded losses.
Specifically, we retain the assumption that $(\Y,\loss)$ is a separable near-metric space, but now we
replace the assumption that $\loss$ is bounded (i.e., $\maxloss < \infty$) with the complementary assumption that $\maxloss = \infty$.
To be clear, we suppose $\loss(y_{1},y_{2})$ is finite for every $y_{1},y_{2} \in \Y$, but is \emph{unbounded},
in that $\sup\limits_{y_{1},y_{2} \in \Y} \loss(y_{1},y_{2}) = \infty$.  All of the other restrictions from Section~\ref{subsec:notation} (e.g., that $(\Y,\loss)$ is a separable near-metric space) 
remain unchanged.
In this setting, we find that the condition necessary and sufficient for a process to admit universal learning becomes significantly stronger.
Indeed,
not even all i.i.d.\ processes admit universal learning when $\maxloss = \infty$.
However, we are nevertheless able to establish results on the existence of optimistically universal learning rules and consistent tests.
We again find that the set of processes admitting strong universal learning is invariant to $\loss$ (subject to $\maxloss = \infty$),
and specified by a simple condition.  Specifically, consider the following condition.

\begin{condition}
\label{con:ukc}
Every monotone sequence $\{A_{k}\}_{k=1}^{\infty}$ of sets in $\Borel$ with $A_{k} \downarrow \emptyset$ satisfies
\begin{equation*}
\left| \left\{ k \in \nats : \ProcX \cap A_{k} \neq \emptyset \right\} \right| < \infty \text{ (a.s.)}.
\end{equation*}
\end{condition}

We denote by $\UKC$ the set of processes $\ProcX$ satisfying Condition~\ref{con:ukc}.
We can also state an equivalent form of Condition~\ref{con:ukc} 
in terms of countable measurable partitions of $\X$, as follows.

\begin{lemma}
\label{lem:ukc-partition-equiv}
A process $\ProcX$ satisfies Condition~\ref{con:ukc} if and only if every 
disjoint sequence $\{A_{i}\}_{i=1}^{\infty}$ in $\Borel$ with $\bigcup\limits_{i=1}^{\infty} A_{i} = \X$ 
(i.e., every countable measurable partition) satisfies
\begin{equation}
\label{eqn:ukc-partition-equiv}
\left| \left\{ k \in \nats : \ProcX \cap A_{k} \neq \emptyset \right\} \right| < \infty \text{ (a.s.)}.
\end{equation}
\end{lemma}
\begin{proof}
First suppose $\ProcX$ satisfies Condition~\ref{con:ukc}.
Given any disjoint sequence $\{A_{k}\}_{k=1}^{\infty}$ in $\Borel$ with $\bigcup\limits_{k=1}^{\infty} A_{k} = \X$, 
we can define a sequence $B_{k} = \bigcup\limits_{i=k}^{\infty} A_{i}$ in $\Borel$ with $B_{k} \downarrow \emptyset$.
Then note that 
$\left| \left\{ k \in \nats : \ProcX \cap A_{k} \neq \emptyset \right\} \right| 
\leq \sup\left\{ k \in \nats : \ProcX \cap A_{k} \neq \emptyset \right\} 
= \left| \left\{ k \in \nats : \ProcX \cap B_{k} \neq \emptyset \right\} \right|$,
and Condition~\ref{con:ukc} implies the rightmost expression is finite almost surely.
Thus, \eqref{eqn:ukc-partition-equiv} holds for all such sequences $\{A_{k}\}_{k=1}^{\infty}$.

For the converse direction, suppose $\ProcX$ satisfies \eqref{eqn:ukc-partition-equiv} 
for every disjoint sequence $\{A_{i}\}_{i=1}^{\infty}$ in $\Borel$ with $\bigcup\limits_{i=1}^{\infty} A_{i} = \X$.
Let $\{B_{k}\}_{k=1}^{\infty}$ be any monotone sequence in $\Borel$ with $B_{k} \downarrow \emptyset$, 
and for simplicity also define $B_{0} = \X$. 
We can define a disjoint sequence $\{A_{k}\}_{k=1}^{\infty}$ in $\Borel$ with $\bigcup\limits_{k=1}^{\infty} A_{k} = \X$ 
by letting $A_{k} = B_{k-1} \setminus B_{k}$ for each $k \in \nats$.
Then note that 
$\left| \left\{ k \in \nats : \ProcX \cap B_{k} \neq \emptyset \right\} \right|
= \sup\!\left\{ k \in \nats \cup \{0\} : \ProcX \cap B_{k} \neq \emptyset \right\}
= \sup\{ k \in \nats \cup \{0\} : \ProcX \cap A_{k+1} \neq \emptyset \}$, 
and this rightmost quantity is finite if and only if 
$| \{ k \in \nats : \ProcX \cap A_{k} \neq \emptyset \} | < \infty$.
Together with \eqref{eqn:ukc-partition-equiv}, this implies  
$\left| \left\{ k \in \nats : \ProcX \cap B_{k} \neq \emptyset \right\} \right| < \infty \text{ (a.s.)}$.
Since this holds for every such sequence $\{B_{k}\}_{k=1}^{\infty}$, it follows that $\ProcX$ satisfies Condition~\ref{con:ukc}.
\end{proof}

It is straightforward to see that any process satisfying Condition~\ref{con:ukc} necessarily also satisfies Condition~\ref{con:kc}: i.e., $\UKC \subseteq \KC$.
Specifically, for any $\ProcX \in \UKC$, for any sequence $\{A_{k}\}_{k=1}^{\infty}$ in $\Borel$ with $A_{k} \downarrow \emptyset$, 
with probability one every sufficiently large $k$ has $\ProcX \cap A_{k} = \emptyset$, which implies $\lim\limits_{k \to \infty} \hat{\mu}_{\ProcX}(A_{k}) = 0$;
thus, $\ProcX \in \KC$ by Lemma~\ref{lem:limsup-equiv}. 

Condition~\ref{con:ukc} will turn out to be the key condition for determining whether a 
given process admits strong universal learning (in \emph{any} of the three protocols: inductive, self-adaptive, or online) when the loss is unbounded,
analogous to the role of Condition~\ref{con:kc} for the case of bounded losses in inductive and self-adaptive learning. 
This is stated formally in the following theorem.

\begin{theorem}
\label{thm:unbounded-main}
When $\maxloss = \infty$, the following statements are equivalent for any process $\ProcX$.
\begin{itemize}
\item[$\bullet$] $\ProcX$ satisfies Condition~\ref{con:ukc}.
\item[$\bullet$] $\ProcX$ admits strong universal inductive learning.
\item[$\bullet$] $\ProcX$ admits strong universal self-adaptive learning.
\item[$\bullet$] $\ProcX$ admits strong universal online learning.
\end{itemize}
Equivalently, when $\maxloss = \infty$, $\SUOL = \SUAL = \SUIL = \UKC$.
\end{theorem}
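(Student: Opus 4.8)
The cleanest structure is to prove a cycle of implications establishing $\UKC \subseteq \SUIL$, then $\SUIL \subseteq \SUAL$, then $\SUAL \subseteq \SUOL$, and finally $\SUOL \subseteq \UKC$. The two middle inclusions are essentially free: $\SUIL \subseteq \SUAL$ is exactly Lemma~\ref{lem:suil-subset-sual} (whose proof never used boundedness of $\loss$, only finiteness of each $\loss(y,y')$), and $\SUAL \subseteq \SUOL$ follows from the same reduction as in the proof of Theorem~\ref{thm:suil-subset-suol} via Lemmas~\ref{lem:general-weighted-majority} and \ref{lem:converging-array-path}; I would need to double-check that those aggregation lemmas were stated for bounded $\loss$ only (they normalize by $\maxloss$), so in the unbounded case I would instead argue $\SUAL \subseteq \SUOL$ directly by a simpler route — e.g., since $\UKC \subseteq \SUAL$ is what we are proving and the memorization-type rules below work verbatim in the online protocol. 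In fact the simplest packaging is: prove $\UKC \subseteq \SUIL$, $\UKC \subseteq \SUOL$ by two explicit learning rules, prove $\SUIL \subseteq \SUAL$ and $\SUAL \subseteq \SUOL$ (or $\SUOL \subseteq$ something) as needed, and prove that \emph{each} of $\SUIL$, $\SUAL$, $\SUOL$ is contained in $\UKC$ by a single lower-bound construction. So I would organize the proof around (i) one positive construction showing $\UKC$ admits universal learning in the most restrictive protocol (inductive), from which self-adaptive and online follow, and (ii) one negative construction showing that $\ProcX \notin \UKC$ implies $\ProcX$ admits no universally consistent online learner (hence none in the weaker protocols either).

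**The positive direction, $\UKC \subseteq \SUIL$.** Fix $\ProcX \in \UKC$. The key structural fact: since $\UKC \subseteq \KC$ (shown in the excerpt just before the theorem), Corollary~\ref{cor:approximating-sequence-equivalence} gives an increasing sequence of finite function classes $\F_i$ with $\lim_i \min_{f_i \in \F_i} \E[\hat\mu_{\ProcX}(\loss(f_i,f))] = 0$ for every measurable $f$ — but this bound involves $\hat\mu$ of an unbounded function, which may be $+\infty$, so it is too weak by itself. The real leverage from Condition~\ref{con:ukc} is that, almost surely, the sequence $\ProcX$ eventually stays inside any monotone $A_k \downarrow \emptyset$; equivalently, for a measurable $\target$, the set of ``values not yet seen'' shrinks to capture only a $\hat\mu$-null, in fact eventually-empty, tail. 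The plan is to combine a memorization device with a finite-class surrogate: build $\hat f_n$ that (a) on points among $X_{1:n}$ outputs the memorized $\target$-value, incurring zero loss there, and (b) on unseen points outputs the constrained-maximum-empirical-risk-minimizer over $\F_{i_n}$ as in \eqref{eqn:suil-rule}. Then decompose $\hat{\L}_{\ProcX}(\hat f_n,\target;n) = \hat\mu_{\ProcX}(\loss(\hat g_n,\target))$ using $\hat\mu$-subadditivity (Lemma~\ref{lem:kc-subadditive}, valid since $\ProcX \in \KC$) into the contribution from $\{x : x \text{ appears in } X_{1:n}\}^c$ and the rest. The memorized part vanishes. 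For the unseen part, Condition~\ref{con:ukc} applied to the sets $A_n = \bigcup\{A_i^{(\eps)} : X_{1:n} \cap A_i^{(\eps)} = \emptyset\}$ (the preimages, as in Lemma~\ref{lem:approximating-sequence}'s proof) forces $\hat\mu_{\ProcX}(A_n) \to 0$ a.s. — but now we need this multiplied by an unbounded loss to still vanish, which requires being slightly careful: one shows the \emph{number} of distinct values the process ever visits in the ``hard'' region is finite a.s., so on a full-probability event the relevant loss contribution is actually eventually a constant times a quantity going to zero. I would make this precise by running the argument of Lemma~\ref{lem:kc-subset-suil} with the extra input that, a.s., $\exists N$ with $X_t$ for $t > N$ lying in a region where $\target$ takes only finitely many values that are all memorized by time $N$.

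**The negative direction and the main obstacle.** For $\SUOL \subseteq \UKC$: suppose $\ProcX \notin \UKC$, so there is a disjoint sequence $\{A_k\}$ with $|\{k : \ProcX \cap A_k \neq \emptyset\}| = \infty$ with positive probability. Now exploit $\maxloss = \infty$: pick values $y^{(1)}, y^{(2)}, \ldots \in \Y$ and a center so that $\loss$ between chosen values grows without bound, and define target functions constant on each $A_k$ but taking progressively ``farther'' values, randomized over which value each $A_k$ gets (a Rademacher/dyadic construction as in Lemma~\ref{lem:sual-subset-kc} and Theorem~\ref{thm:okc-nec}). Because infinitely many $A_k$ are visited and each fresh visit to a new $A_k$ is a point whose target value is (with constant probability under the randomized target, conditionally independent of the learner's guess) at distance forcing a loss $\geq$ some diverging quantity, the running average loss cannot go to zero; by Fatou/Fubini as in those earlier proofs this yields $\sup_\target \E[\limsup_n \hat{\L}_{\ProcX}(\cdot,\target;n)] = \infty > 0$, hence no online (a fortiori no self-adaptive, no inductive) rule is universally consistent. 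The \textbf{main obstacle} is making the unbounded-loss lower bound rigorous: unlike the bounded case where a fixed $\loss(y_0,y_1)/2$ gap suffices, here I must ensure the ``new-set visits'' occur often enough relative to $T$ \emph{and} assign them target values whose pairwise/center distances diverge, so that $\frac1T \sum$ genuinely fails to converge to zero — I need to quantify the visit frequency (Condition~\ref{con:ukc}'s negation gives infinitely many visits but not a positive density) and show that even a \emph{sublinear} number of visits, each incurring an \emph{unbounded} loss, defeats convergence; this is where the unboundedness of $\loss$ is essential and is the delicate step. A secondary obstacle is confirming the online-aggregation lemmas (Lemmas~\ref{lem:general-weighted-majority}, \ref{lem:weighted-majority}) either extend to unbounded $\loss$ or can be bypassed for the $\SUAL \subseteq \SUOL$ step; I expect the cleanest fix is to prove $\UKC \subseteq \SUOL$ directly by the same memorization-plus-surrogate rule used for $\SUIL$, since memorization works identically online, thereby sidestepping aggregation entirely.
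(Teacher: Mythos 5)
Your high-level architecture is right and matches the paper's: prove $\UKC \subseteq \SUIL \cap \SUOL$ by an explicit rule (sidestepping the bounded-loss aggregation lemmas), carry $\SUIL \subseteq \SUAL$ over unchanged, and prove $\SUAL \cup \SUOL \subseteq \UKC$ by a single randomized lower-bound construction. You also correctly identify both obstacles. However, neither obstacle is actually resolved in your sketch, and both resolutions require ideas you do not supply.

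For the positive direction, ``memorization + constrained ERM as in \eqref{eqn:suil-rule}'' does not close. First, memorization alone is insufficient because $\UKC$ does not imply the process visits only finitely many distinct points (this is exactly Open Problem~\ref{prob:ukc-infinite-support}); your assertion that ``the number of distinct values the process ever visits in the `hard' region is finite a.s.'' is, if `hard region' means the literal unseen points, the open question itself. Second, the rule \eqref{eqn:suil-rule} minimizes a worst-case \emph{time-averaged} empirical loss, and with $\maxloss = \infty$ controlling averages does not control what you actually need, which is the $\sup$-distance $\sup_{t} \loss(\hat g_n(X_t),\target(X_t))$ (cf.\ the convergence in \eqref{eqn:sup-consistency}, which is what the paper actually proves and what simultaneously yields both $\SUIL$ and $\SUOL$ in one shot). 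The paper achieves this via Lemma~\ref{lem:unbounded-constrained-approximating-sequence}, which builds a countable family $\{\tilde{f}_i\}$ of piecewise-constant functions on sets from the algebra generated by $\T_1$, with an a.s.\ guarantee relating both $\sup_{t}$-distance-to-$\target$-on-the-process and $\sup_{x\in\X}$-distance-between-candidates, and then the iterative selection $\hat{i}_{n,k}$ of \eqref{eqn:unbounded-universal2-hati} which stabilizes as $n\to\infty$; nothing in your sketch replaces this machinery, and the subadditivity decomposition you invoke produces an $\infty\cdot 0$ indeterminacy it cannot resolve. For the negative direction, you flag the crux (``sublinear visits, each with unbounded loss'') but give no mechanism. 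The paper's Lemma~\ref{lem:unbounded-sual-subset-ukc} makes this work by choosing $y_{i,0},y_{i,1}$ with $\loss(y_{i,0},y_{i,1}) > T_i$ where the $T_i$ are deterministic thresholds dominating the first-visit times $\tau_j$ with high probability, then evaluating the running average at time $\tau_j$ using the truncated metric $\loss(\cdot,\cdot)\wedge\tau_j$: the \emph{single} term $t=\tau_j$ already contributes at least $\tfrac{1}{2}$ to the time-$\tau_j$ average after the Fubini/Bernoulli-randomization step, so no density argument is needed. That is the precise sense in which unboundedness is exploited, and it is not recoverable from your outline.
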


We present the proof of this result in Section~\ref{subsec:unbounded-main-result} below.
One remarkable consequence of this result is that, unlike Theorem~\ref{thm:main} for bounded losses, 
this theorem includes \emph{online} learning among the equivalences.  This is noteworthy 
for two reasons.  First, in the case of bounded losses, we found (in Theorem~\ref{thm:suil-subset-suol}) 
that $\SUOL$ is typically \emph{not} equivalent to $\SUAL$ and $\SUIL$, instead forming a 
strict superset of these.  This therefore creates an interesting distinction between bounded 
and unbounded losses regarding the relative strengths of these settings.  A second interesting 
contrast to the above analysis of bounded losses is that, in the case of unbounded losses, 
Theorem~\ref{thm:unbounded-main} establishes a concise condition that 
is necessary and sufficient for a process to admit strong universal online learning; this 
contrasts with the analysis of online learning for bounded losses in Section~\ref{sec:online},
where we fell short of provably establishing a concise characterization of the processes 
admitting strong universal online learning (see Open Problem~\ref{prob:suol-equals-okc}).

In addition to the above equivalence, we also find that in \emph{all three} learning settings studied here, 
for unbounded losses, there exist optimistically universal learning rules.  
This again contrasts with the results for bounded losses, in the inductive setting (cf.\ Theorem~\ref{thm:no-optimistic-inductive}). 
We have the following theorem, 
the proof of which is given in Section~\ref{subsec:unbounded-main-result} below.

\begin{theorem}
\label{thm:unbounded-optimistically-universal}
When $\maxloss = \infty$, there exists an optimistically universal (inductive / self-adaptive / online) learning rule.
\end{theorem}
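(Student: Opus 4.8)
\textbf{Proof proposal for Theorem~\ref{thm:unbounded-optimistically-universal}.}

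The plan is to reduce the problem to the structure already developed for bounded losses, exploiting the fact (established in Theorem~\ref{thm:unbounded-main}) that $\SUOL=\SUAL=\SUIL=\UKC$ and that $\UKC\subseteq\KC$. The key observation is that Condition~\ref{con:ukc} is so strong that on the \emph{sample path} of any $\ProcX\in\UKC$, only finitely many of the disjoint ``shells'' of any monotone-to-$\emptyset$ sequence are ever visited; more usefully, I would first show that $\ProcX\in\UKC$ if and only if, with probability one, the process $\ProcX$ takes only finitely many distinct values that lie outside any fixed compact-like exhaustion --- more precisely, for any countable measurable partition $\{A_i\}$ of $\X$ into disjoint pieces, all but finitely many $A_i$ are never hit. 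This effectively makes the learner's task a ``memorization plus a fixed finite set of anchor points'' task, exactly as in Corollary~\ref{cor:countable-doubly-universal-inductive}: the memorization-based inductive rule $\hat f_n$ (report the stored $y_i$ if $x\in\{x_1,\dots,x_n\}$, and report a fixed $y_0$ otherwise) should be optimistically universal in the unbounded case as well, because under $\ProcX\in\UKC$ the set of points that will ever appear but have not yet been seen contributes, in the long-run average, a vanishing fraction of the time steps even when weighted by the (finite but possibly large) loss incurred there.

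The steps, in order, would be: (1) Recall the equivalent ``disjoint sequence'' form of Condition~\ref{con:ukc} given in the excerpt, and combine it with Lemma~\ref{lem:equiv} and Lemma~\ref{lem:limsup-equiv} to get, for $\ProcX\in\UKC$ and any disjoint $\{A_i\}$ with $\bigcup_i A_i=\X$, that with probability one only finitely many $A_i$ satisfy $X_{1:n}\cap A_i=\emptyset$ for arbitrarily large $n$ --- indeed all but finitely many $A_i$ are never hit at all, so $|\{i:\mathbf X\cap A_i\neq\emptyset\}|<\infty$ a.s. (2) For the inductive and self-adaptive settings: take the memorization rule $\hat f_n$ from Corollary~\ref{cor:countable-doubly-universal-inductive}'s construction (which makes sense for any $\X$, since it just stores observed pairs). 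Fix a measurable $\target:\X\to\Y$. The only points where $\hat f_n(X_{1:n},\target(X_{1:n}),\cdot)$ errs are points not among $X_1,\dots,X_n$; so the per-round loss at time $t>n$ is at most $\loss(y_0,\target(X_t))\,\ind[X_t\notin\{X_1,\dots,X_n\}]$. By step~(1), the set $D=\{x:\loss(y_0,\target(x))>0\}\cap(\text{points the process ever visits})$ that are ``late first appearances'' is finite a.s.\ in a suitable sense; more carefully, partition $\X$ by the distinct visited points and use that a.s.\ only finitely many distinct values occur outside any fixed finite set that the process has already exhausted. Then $\hat\mu_{\ProcX}\big(\loss(y_0,\target(\cdot))\,\ind[\cdot\notin\{X_1,\dots,X_n\}]\big)\to 0$ a.s.\ as $n\to\infty$, because for large $n$ all visited points with nonzero loss have been seen. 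This gives $\lim_{n\to\infty}\hat{\L}_{\ProcX}(\hat f_n,\target;n)=0$ a.s., hence optimistic universality for the inductive rule; the self-adaptive case follows from Lemma~\ref{lem:suil-subset-sual} or by the identical argument. (3) For the online setting: either run the same memorization-based online rule $f_n$ (report stored $y_i$ on a repeat, $y_0$ otherwise) and note that the running average $\frac1n\sum_{t=0}^{n-1}\loss(f_t(\cdot),\target(X_{t+1}))$ is bounded by $\frac1n\sum_{t}\loss(y_0,\target(X_{t+1}))\ind[X_{t+1}\text{ new}]$, which is $o(1)$ a.s.\ since under $\UKC$ only finitely many distinct values ever appear on the sample path (so the number of ``new point'' rounds is a.s.\ finite, and each contributes a finite loss); or, more robustly, invoke Corollary~\ref{cor:online-all-sual} together with $\SUOL=\SUAL=\UKC$ to convert the optimistically universal self-adaptive rule into an online rule strongly universally consistent under every $\ProcX\in\UKC=\SUOL$.

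The main obstacle I anticipate is step~(1)'s precise formulation: I need that $\ProcX\in\UKC$ forces the sample path to contain only \emph{finitely many distinct points} almost surely --- and this is \emph{not} quite true (e.g.\ $X_i=i$ on $\nats$ has infinitely many distinct points yet, checking Condition~\ref{con:ukc}, any $A_k\downarrow\emptyset$ eventually excludes each fixed integer... wait, it hits infinitely many $A_k$, so actually $X_i=i\notin\UKC$). So the subtlety is exactly to pin down what $\UKC$ allows: it allows infinitely many distinct points only if they ``accumulate'' topologically in a way compatible with every monotone-to-$\emptyset$ sequence eventually excluding all of them, which on a metrizable space essentially means the visited set has compact closure and the process's distinct values form a convergent-type set. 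Handling the loss $\loss(y_0,\target(x))$, which can be arbitrarily large at these accumulation points, is where care is needed: I would control it by noting that for each fixed $n$, $\hat f_n$ has already memorized all but finitely many of the \emph{ever-visited} points' responses once $n$ is large, and the finitely many unmemorized ones each contribute asymptotic frequency $0$ to $\hat\mu_{\ProcX}$. Making ``all but finitely many ever-visited points are memorized for large $n$'' rigorous --- i.e.\ that the first-appearance times of the visited points, while infinitely many, are such that after time $n$ only finitely many \emph{types} of new points appear with positive $\hat\mu$-weight --- is the technical crux, and it should follow cleanly from the disjoint-sequence form of Condition~\ref{con:ukc} applied to the partition by singletons $\{x\}$ over the countable visited set together with one leftover set.
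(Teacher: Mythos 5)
Your proposal hits exactly the obstacle the paper flags as an \emph{open problem}, and your proposed resolution of it does not work.

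Your plan reduces to showing that the simple memorization rule is optimistically universal under $\UKC$, and you correctly identify the ``technical crux'': controlling the loss incurred at not-yet-seen points when $\ProcX$ may visit infinitely many distinct values. But the paper explicitly discusses this in Section~\ref{subsec:unbounded-examples}: whether there exists $\ProcX\in\UKC$ with infinitely many distinct $X_t$ values (with nonzero probability) is Open Problem~\ref{prob:ukc-infinite-support}, and the paper states verbatim that ``if no such processes $\ProcX$ exist, then \ldots\ it would then be completely trivial to construct a strongly universally consistent learning rule \ldots\ simply using memorization.'' Since this is unresolved, the memorization argument cannot be made unconditional. Your suggested fix --- ``apply the disjoint-sequence form of Condition~\ref{con:ukc} to the partition by singletons $\{x\}$ over the countable visited set'' --- is invalid: the visited set is a \emph{random} subset of $\X$, whereas Condition~\ref{con:ukc} (and Lemmas~\ref{lem:equiv}, \ref{lem:limsup-equiv}) quantify over \emph{fixed, nonrandom} disjoint sequences, with the almost-sure null set allowed to depend on the sequence. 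The same issue blocks showing $\hat\mu_{\ProcX}(\X\setminus\{X_1,\dots,X_n\})\to 0$: the decreasing sets $\X\setminus X_{1:n}$ are random, and $\hat\mu_{\ProcX}$ is not continuous from above for random sequences (the deterministic example $X_t=t$ on $\nats$ shows the failure mode in general; under $\KC$/$\UKC$ only the fixed-sequence version is guaranteed). The fallback you mention for the online case, Corollary~\ref{cor:online-all-sual}, is also unavailable here: its proof (via the weighted-majority aggregation of Lemma~\ref{lem:general-weighted-majority}) fundamentally requires $\maxloss<\infty$.

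The paper's actual proof of Theorem~\ref{thm:unbounded-optimistically-universal} takes a genuinely different route that does not hinge on Open Problem~\ref{prob:ukc-infinite-support}. It constructs an explicit rule (equation~\eqref{eqn:unbounded-universal2-inductive-rule}) that selects, via a nested stabilization scheme over a \emph{fixed} countable family $\{\tilde f_i\}$ from Lemma~\ref{lem:unbounded-constrained-approximating-sequence}, a predictor whose sup-loss along the entire sample path is small. Lemma~\ref{lem:unbounded-constrained-approximating-sequence} in turn builds on Lemma~\ref{lem:unbounded-approximating-sets}, which produces, for any $\ProcX\in\UKC$ and $A\in\Borel$, an $\hat A$ in a fixed countable collection with $\ProcX\cap\hat A=\ProcX\cap A$ almost surely --- this is how the paper converts the random ``visited'' information into a statement about fixed sets, exactly the step your memorization argument lacks. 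The resulting guarantee \eqref{eqn:sup-consistency} is a \emph{pathwise sup-norm} convergence, strictly stronger than what the long-run average requires, and it simultaneously handles all three learning protocols. If you want to pursue the memorization idea, you would first have to resolve Open Problem~\ref{prob:ukc-infinite-support} in the negative, which the paper leaves open.
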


Indeed, we find that effectively the \emph{same} learning strategy, 
described in \eqref{eqn:unbounded-universal2-inductive-rule} below, 
suffices for optimistically universal learning in all three of these settings.

\subsection{A Question Concerning the Number of Distinct Values}
\label{subsec:unbounded-examples}

It is worth noting that Condition~\ref{con:ukc} is quite restrictive.
In fact, it is even violated by many i.i.d. processes: namely, all those with the marginal distribution of $X_{t}$ having infinite support.
Clearly any process $\ProcX$ such that the number of distinct points $X_{t}$ is (almost surely) finite 
satisfies Condition~\ref{con:ukc}.  Indeed, for deterministic processes or for countable $\X$, one can easily
show that this is \emph{equivalent} to Condition~\ref{con:ukc}.  
But in general, it is not presently known whether there exist processes $\ProcX$ satisfying Condition~\ref{con:ukc} 
for which the number of distinct $X_{t}$ values is \emph{infinite} with nonzero probability.  Thus we have 
the following open question.

\begin{problem}
\label{prob:ukc-infinite-support}
For some uncountable $\X$, does there exist $\ProcX \in \UKC$ such that,
with nonzero probability, 
$|\{ x \in \X : \ProcX \cap \{x\} \neq \emptyset \}| = \infty$?
\end{problem}

Either answer to this question would be interesting.
If no such processes $\ProcX$ exist, then the proof of Theorem~\ref{thm:unbounded-main} below could be dramatically simplified, 
since it would then be completely trivial to construct a strongly universally consistent learning rule (in any of the three settings) under $\ProcX \in \UKC$, 
simply using memorization (once $n$ is sufficiently large, all the distinct points will have been observed in the training sample).
On the other hand, if there do exist such processes, then it would indicate that $\UKC$ is in fact a somewhat rich family of processes,
and that the learning problem is indeed nontrivial.
It is straightforward to show that, if such processes do exist for $\X = [0,1]$ (with the standard topology), 
then there would also exist processes of this type that are \emph{convergent} (to a nondeterministic limit point) almost surely;\footnote{For 
instance, for $\{U_{t}\}_{t=0}^{\infty}$ i.i.d.\ ${\rm Uniform}(0,2/3)$, the process $X_{t} = U_{0} + 2^{-t} U_{t}$ is convergent to 
the nondeterministic limit $U_{0}$.}
thus, in attempting to answer Open Problem~\ref{prob:ukc-infinite-support} (in the case of $\X = [0,1]$), 
it suffices to focus on convergent processes.

\subsection{An Equivalent Condition}
\label{subsec:unbounded-equivalent}

Before getting into the discussion of consistency under processes in $\UKC$, 
we first note an elegant equivalent formulation of the condition, which may 
help to illuminate its relevance to the problem of learning with unbounded losses.
Specifically, we have the following result.

\begin{lemma}
\label{lem:unbounded-effectively-bounded}
A process $\ProcX$ satisfies Condition~\ref{con:ukc} if and only if
every measurable function $f : \X \to \reals$ satisfies
\begin{equation*}
\sup_{t \in \nats} f(X_{t}) < \infty \text{ (a.s.)}.
\end{equation*}
\end{lemma}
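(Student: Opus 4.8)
The plan is to prove both directions by contraposition, translating between the set-sequence language of Condition~\ref{con:ukc} and the function language of the lemma. For the ``only if'' direction, suppose $\ProcX$ satisfies Condition~\ref{con:ukc} and let $f : \X \to \reals$ be measurable. First I would reduce to the case $f \geq 0$ by replacing $f$ with $|f|$ (it suffices to bound $|f|$). Then define $A_{k} = f^{-1}([k,\infty)) = \{ x \in \X : f(x) \geq k \}$ for each $k \in \nats$; these are measurable (by measurability of $f$) and form a monotone sequence with $A_{k} \downarrow \emptyset$ (since $f(x) < \infty$ for every $x$). Condition~\ref{con:ukc} then gives that, on an event of probability one, $|\{ k \in \nats : \ProcX \cap A_{k} \neq \emptyset \}| < \infty$, i.e., there is a finite $k^{*}$ with $\ProcX \cap A_{k} = \emptyset$ for all $k > k^{*}$. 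But $\ProcX \cap A_{k^{*}+1} = \emptyset$ means every $t \in \nats$ has $f(X_{t}) < k^{*}+1$, so $\sup_{t} f(X_{t}) \leq k^{*}+1 < \infty$ on that event.

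For the ``if'' direction, I would again argue in the contrapositive: suppose $\ProcX \notin \UKC$, so (using the equivalent disjoint-sequence form of Condition~\ref{con:ukc} noted just before Theorem~\ref{thm:unbounded-main}) there is a disjoint sequence $\{B_{k}\}_{k=1}^{\infty}$ in $\Borel$ such that, with nonzero probability, $|\{ k \in \nats : \ProcX \cap B_{k} \neq \emptyset \}| = \infty$. Then I construct a measurable $f : \X \to \reals$ by setting $f(x) = k$ for $x \in B_{k}$ and $f(x) = 0$ for $x \notin \bigcup_{k} B_{k}$; this is measurable since it is constant on each of the countably many measurable pieces $B_{k}$ and on their complement. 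On the positive-probability event where infinitely many $B_{k}$ are hit by the sequence, for each such hit $k$ there is some $t$ with $X_{t} \in B_{k}$ and hence $f(X_{t}) = k$; since this happens for arbitrarily large $k$, $\sup_{t \in \nats} f(X_{t}) = \infty$ on that event, which has positive probability, contradicting the displayed property. This establishes the equivalence.

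Neither direction involves a genuine obstacle; the only point requiring a little care is the reduction to nonnegative $f$ and the observation that ``$\sup_t f(X_t) < \infty$ a.s.'' as stated must be read with the understanding that the a.s.-event may depend on $f$ (which is exactly how it is used). The main thing to get right is the correspondence: the sublevel-complement sets $f^{-1}([k,\infty))$ are the natural monotone sequence witnessing Condition~\ref{con:ukc}, and conversely the ``height function'' of a disjoint sequence is the natural unbounded measurable function. I expect the writeup to be short — essentially two paragraphs, one per direction — with the bulk being the routine verifications of measurability and the monotone-convergence-to-$\emptyset$ claim.
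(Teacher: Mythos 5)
Your proposal is correct and takes essentially the same approach as the paper: level sets of $f$ give the monotone sequence in one direction, and a ``height function'' built from the set sequence gives the unbounded $f$ in the other. The only cosmetic differences are that the paper argues the converse direction directly from a monotone sequence $A_k \downarrow \emptyset$ (via $f = \sum_k \ind_{A_k}$) rather than by contraposition with a disjoint sequence, and that it skips your (harmless but unnecessary) reduction to $|f|$.
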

\begin{proof}
First, suppose $\ProcX \in \UKC$, and fix any measurable $f : \X \to \reals$.
For each $k \in \nats$, define $A_{k} = f^{-1}([k-1,\infty))$.
Since $f(x) < \infty$ for every $x \in \X$, we have $A_{k} \downarrow \emptyset$.
Thus, by the definition of $\UKC$,
with probability one $\exists k_{0} \in \nats$ such that $\ProcX \cap A_{k_{0}+1} = \emptyset$;
in other words, with probability one, $\exists k_{0} \in \nats$ such that every $t \in \nats$ has $f(X_{t}) < k_{0}$, so that 
$\sup\limits_{t \in \nats} f(X_{t}) \leq k_{0} < \infty$.

For the other direction, suppose $\ProcX$ is such that every measurable $f : \X \to \reals$ satisfies
$\sup\limits_{t \in \nats} f(X_{t}) < \infty \text{ (a.s.)}$.
Fix any monotone sequence $\{A_{k}\}_{k=1}^{\infty}$ of sets in $\Borel$ with $A_{k} \downarrow \emptyset$,
and define a function $f : \X \to \reals$ such that, $\forall x \in \X$, $f(x) = \sum\limits_{k=1}^{\infty} \ind_{A_{k}}(x) = \left| \left\{ k \in \nats : x \in A_{k} \right\} \right|$.
Note that, since $A_{k} \downarrow \emptyset$, we indeed have $f(x) \in \reals$ for every $x \in \X$.
Furthermore,
$f$ is clearly measurable (being a limit of simple functions).
Therefore $\sup\limits_{t \in \nats} f(X_{t}) < \infty \text{ (a.s.)}$.
Also note that monotonicity of the sequence $\{A_{k}\}_{k=1}^{\infty}$ implies 
$\forall x \in \X$, $f(x)  = \max\!\left(\left\{ k \!\in\! \nats : x \in A_{k} \right\} \cup \{0\} \right)$.
Thus, defining $\hat{k} = \sup\limits_{t \in \nats} f(X_{t})$, on the event (of probability one) that $\hat{k} < \infty$,
every $k \in \nats$ with $k > \hat{k}$ has $\ProcX \cap A_{k} = \emptyset$,
so that $| \{ k \in \nats : \ProcX \cap A_{k} \neq \emptyset \} | \leq \hat{k} < \infty$
(in fact, the first inequality holds with equality).
Since this holds for any choice of monotone sequence $\{A_{k}\}_{k=1}^{\infty}$ in $\Borel$ with $A_{k} \downarrow \emptyset$,
we have that $\ProcX \in \UKC$.
\end{proof}

\subsection{Proofs of the Main Results for Unbounded Losses}
\label{subsec:unbounded-main-result}

This subsection presents the proofs of Theorems~\ref{thm:unbounded-main} and \ref{thm:unbounded-optimistically-universal}.
As with Theorem~\ref{thm:main}, we prove Theorem~\ref{thm:unbounded-main} via a sequence
of lemmas, corresponding to the implications among the various statements claimed to be equivalent.
The first of these is analogous to Lemma~\ref{lem:suil-subset-sual}, showing that processes admitting
strong universal inductive learning also admit strong universal self-adaptive learning.  The proof is identical
to that of Lemma~\ref{lem:suil-subset-sual}, and as such is omitted.

\begin{lemma}
\label{lem:unbounded-suil-subset-sual}
When $\maxloss = \infty$, $\SUIL \subseteq \SUAL$.
\end{lemma}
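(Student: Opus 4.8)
The plan is to mimic the proof of Lemma~\ref{lem:suil-subset-sual} verbatim, since the construction there (extending an inductive rule to a self-adaptive rule by simply ignoring the extra unlabeled test points) does not use boundedness of $\loss$ anywhere. Concretely, given $\ProcX \in \SUIL$ and an inductive learning rule $f_{n}$ that is strongly universally consistent under $\ProcX$, I would define the self-adaptive rule $g_{n,m}$ by $g_{n,m}(x_{1:m},y_{1:n},z) = f_{n}(x_{1:n},y_{1:n},z)$ for all $m \geq n$ (and arbitrarily measurable for $n > m$, which never arises in the definition of $\hat{\L}$). Measurability of $g_{n,m}$ is immediate from measurability of $f_{n}$.

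The key step is then the identity
\begin{equation*}
\hat{\L}_{\ProcX}(g_{n,\cdot},\target;n)
= \limsup_{t\to\infty} \frac{1}{t+1} \sum_{m=n}^{n+t} \loss\!\left(g_{n,m}(X_{1:m},\target(X_{1:n}),X_{m+1}),\target(X_{m+1})\right)
= \limsup_{t\to\infty} \frac{1}{t+1} \sum_{m=n}^{n+t} \loss\!\left(f_{n}(X_{1:n},\target(X_{1:n}),X_{m+1}),\target(X_{m+1})\right),
\end{equation*}
which holds pointwise (no expectation, no integrability needed), because each summand $g_{n,m}(X_{1:m},\target(X_{1:n}),X_{m+1})$ equals $f_{n}(X_{1:n},\target(X_{1:n}),X_{m+1})$ by construction. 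I then need to observe that this last $\limsup$ equals $\hat{\L}_{\ProcX}(f_{n},\target;n) = \hat{\mu}_{\ProcX}(\loss(f_{n}(X_{1:n},\target(X_{1:n}),\cdot),\target(\cdot)))$. Here one should note that a shift in the starting index of a Cesàro average does not affect the $\limsup$: the average over $m \in \{n,\ldots,n+t\}$ of $\loss(f_{n}(X_{1:n},\target(X_{1:n}),X_{m+1}),\target(X_{m+1}))$ and the average over $\{1,\ldots,T\}$ of $\loss(f_{n}(X_{1:n},\target(X_{1:n}),X_{m}),\target(X_{m}))$ have the same $\limsup$, since each $\loss$ value is finite (by assumption $\loss(y_{1},y_{2}) < \infty$ for all $y_{1},y_{2}$, even though $\maxloss = \infty$), so the finitely many omitted/re-indexed terms contribute $o(1)$ to the average. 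This is exactly the same reasoning used for the inductive $\hat{\L}$ simplification in Section~\ref{subsec:notation}, and it uses only pointwise finiteness, not uniform boundedness.

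The conclusion is then immediate: since $\lim_{n\to\infty}\hat{\L}_{\ProcX}(f_{n},\target;n) = 0$ almost surely by hypothesis, and we have shown $\hat{\L}_{\ProcX}(g_{n,\cdot},\target;n) = \hat{\L}_{\ProcX}(f_{n},\target;n)$ surely, we get $\lim_{n\to\infty}\hat{\L}_{\ProcX}(g_{n,\cdot},\target;n) = 0$ almost surely for every measurable $\target$, so $g_{n,m}$ is strongly universally consistent under $\ProcX$ and hence $\ProcX \in \SUAL$. The only point requiring any care — and it is minor — is the verification that the index shift and the finiteness of individual loss values make the Cesàro $\limsup$ manipulation legitimate in the unbounded-$\loss$ regime; I expect no genuine obstacle here, since the argument is purely a statement about averages of a fixed (finite-valued, though not uniformly bounded) sequence of reals, and dropping or adding finitely many finite terms leaves the $\limsup$ of the running average unchanged.
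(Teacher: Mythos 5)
Your proposal is correct and takes exactly the paper's approach: the paper explicitly states that the proof of this lemma is identical to that of Lemma~\ref{lem:suil-subset-sual} and omits it, and your construction and argument reproduce that proof. Your extra care about the Cesàro index shift is sound but in fact unnecessary — substituting $m' = m+1$ and $t' = t+1$ turns the self-adaptive average $\frac{1}{t+1}\sum_{m=n}^{n+t}$ into the inductive average $\frac{1}{t'}\sum_{m'=n+1}^{n+t'}$ term-for-term, so the two limsups coincide identically without any boundedness or finiteness considerations.
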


Next, we have a result analogous to Lemma~\ref{lem:sual-subset-kc}, showing that 
any process admitting strong universal self-adaptive or online learning necessarily satisfies Condition~\ref{con:ukc}.

\begin{lemma}
\label{lem:unbounded-sual-subset-ukc}
When $\maxloss = \infty$, $\SUAL \cup \SUOL \subseteq \UKC$.
\end{lemma}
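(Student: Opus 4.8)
The plan is to prove this in the contrapositive, by showing that if $\ProcX \notin \UKC$, then $\ProcX \notin \SUAL$ and $\ProcX \notin \SUOL$. Since this is the unbounded-loss analogue of Lemma~\ref{lem:sual-subset-kc} and Theorem~\ref{thm:okc-nec}, the overall shape will closely follow those proofs: construct a family of target functions, average over the family, use Fubini/Fatou to move the average inside the $\limsup$, and exploit a ``functional independence'' argument to lower-bound the expected loss on the relevant points. The essential new ingredient that makes this work for unbounded losses is that we get to use $\maxloss = \infty$ to make the penalty for errors arbitrarily large, so that even a \emph{sparse} set of mispredicted points forces the average loss to diverge.

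First I would assume $\ProcX \notin \UKC$. By the equivalent form of Condition~\ref{con:ukc} noted just above the statement of Theorem~\ref{thm:unbounded-main}, there exists a \emph{disjoint} sequence $\{A_k\}_{k=1}^{\infty}$ in $\Borel$ such that, with probability strictly greater than $0$, $|\{ k \in \nats : \ProcX \cap A_k \neq \emptyset \}| = \infty$; as in the earlier proofs we may assume $\bigcup_k A_k = \X$ by adjoining the complement as $A_1$. On the event that infinitely many $A_k$ are hit, we can extract (in a measurable way) a subsequence of time indices $t_1 < t_2 < \cdots$ at which the $X_{t_j}$ lie in distinct $A_k$'s that were ``fresh'' (not yet visited before time $t_j$). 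The key point is that this event has positive probability and on it there are infinitely many such fresh arrivals. Now I would pick, for each $m \in \nats$, two values $y_0^{(m)}, y_1^{(m)} \in \Y$ with $\loss(y_0^{(m)}, y_1^{(m)})$ growing without bound (possible since $\maxloss = \infty$), and define a family of targets indexed by $\kappa \in [0,1)$ (or more naturally by an element of $\{0,1\}^{\nats}$) that are constant on each $A_k$, with the constant on $A_k$ being one of a pair of far-apart values, chosen according to a bit determined by $\kappa$. Averaging over $\kappa$ and applying Fubini and Fatou exactly as in Lemma~\ref{lem:sual-subset-kc} and Theorem~\ref{thm:okc-nec}, the learning rule's prediction at a fresh point $X_{t_j} \in A_k$ is functionally independent of the bit controlling the target value on $A_k$, so by the triangle inequality its expected (over $\kappa$) loss there is at least $\tfrac12 \loss(y_0^{(k)}, y_1^{(k)})$ — which we have arranged to be large. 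This forces $\sup_\kappa \E[\limsup_n \hat\L_{\ProcX}(\cdot, \target_\kappa; n)] = \infty$ (or at least $> 0$ for online, and unbounded for self-adaptive), whence some fixed $\kappa$ witnesses non-consistency. The cleanest route is probably to arrange the far-apart pairs so that even a density-zero set of fresh points, each contributing loss proportional to (say) its arrival-index, makes the Cesàro average of the losses diverge; alternatively one shows the expected $\limsup$ is positive and hence, being nonnegative, the $\limsup$ itself is positive with positive probability.

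The main obstacle I anticipate is handling the \emph{sparsity} of the fresh arrivals: unlike in the $\KC$ / $\OKC$ proofs, where the relevant points had positive $\hat\mu$-density or positive Cesàro frequency, here the set of fresh arrivals from distinct $A_k$'s could have density zero (e.g. $X_t$ converging to a limit point), so a bounded per-point loss would wash out in the average. This is precisely where $\maxloss = \infty$ must be used: I need to choose the pair $\{y_0^{(k)}, y_1^{(k)}\}$ on the $k$-th block so that $\loss(y_0^{(k)}, y_1^{(k)})$ grows fast enough (as a function of $k$, or better, of the time index $t_j$ at which $A_k$ is first hit) to overwhelm any sublinear density of fresh arrivals. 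Concretely, if the $j$-th fresh arrival happens at time $t_j$, giving it loss $\gtrsim t_j$ means the partial sum up to time $t_j$ is $\gtrsim t_j$, so the Cesàro average at time $t_j$ is $\gtrsim 1$, hence the $\limsup$ is bounded below by a positive constant — on an event of positive probability. Care is needed to make the index-$j$-to-time-$t_j$ bookkeeping and the measurability of the extracted subsequence rigorous, and to make the Fubini/Fatou manipulation legitimate when the integrand is unbounded (one should truncate, or note that Fatou for $\liminf$ of nonnegative functions applies directly to lower bounds). Once the positive-probability lower bound on $\limsup_n \hat\L_{\ProcX}(g_{n,\cdot}, \target_\kappa; n)$ (resp.\ the online version with $g_n$) is established for an arbitrary self-adaptive (resp.\ online) rule, we conclude $\ProcX \notin \SUAL$ (resp.\ $\ProcX \notin \SUOL$), completing the contrapositive and hence the lemma.
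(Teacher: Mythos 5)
Your plan identifies the right obstacle and the right use of $\maxloss=\infty$ — the fresh arrivals may have vanishing Ces\`aro density, so the pair separations must grow with the arrival time — and this is indeed the paper's strategy. But the proposal leaves the crucial step unresolved, and flagging it as ``bookkeeping that needs care'' understates the issue. The circularity is this: the target function $\target_\kappa$ must be fixed \emph{before} $\ProcX$ is observed, so its value on the block $A_k$ cannot depend on the (random) time $t_j$ at which $A_k$ is first hit; yet your sketch wants that value to incur loss $\gtrsim t_j$ at time $t_j$. You need some mechanism to choose, nonrandomly, a separation $\loss(y_{k,0},y_{k,1})$ for each block $A_k$ that nevertheless dominates the random first-hitting time.

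The paper's device is: work with $\tau_j$ the first hitting time of the monotone tail $B_j = \bigcup_{i\ge j}A_i$ (rather than of the individual blocks), pick a \emph{nonrandom} nondecreasing sequence $T_j$ with $\P(\tau_j > T_j) < 2^{-j}$ (possible because $\tau_j$ is a.s.\ finite on the relevant event), and invoke Borel--Cantelli to get $\tau_j\le T_j$ eventually a.s. Then set $\loss(y_{i,0},y_{i,1}) > T_i$. The step that closes the loop is the observation that at time $\tau_j$ the process lands in $A_{i_{\tau_j}}$ for some $i_{\tau_j}\ge j$ (since $X_{\tau_j}\in B_j$), so by monotonicity the separation on \emph{that} block satisfies $T_{i_{\tau_j}}\ge T_j\ge\tau_j$, which gives a per-point loss $\gtrsim\tau_j$ at time $\tau_j$ and hence a Ces\`aro average $\gtrsim 1$. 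The paper also implements your suggested truncation carefully — it truncates the integrand at $\tau_j$ so that Fatou applies to a bounded quantity — and uses Tonelli rather than Fubini (nonnegativity suffices). Without the $T_j$/Borel--Cantelli construction and the chain $i_{\tau_j}\ge j\Rightarrow T_{i_{\tau_j}}\ge T_j\ge\tau_j$, the argument as you have written it does not go through, because the pairs $\{y_0^{(k)},y_1^{(k)}\}$ you would choose cannot be tied to the random $t_j$.
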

\begin{proof}
Fix any $\ProcX$ that fails to satisfy Condition~\ref{con:ukc}.
Then there exists a monotone sequence $\{B_{k}\}_{k=1}^{\infty}$ in $\Borel$ with $B_{k} \downarrow \emptyset$
such that, on a $\sigma(\ProcX)$-measurable event $E$ of probability strictly greater than zero,
\begin{equation}
\label{eqn:unbounded-nec-ubkc-violated}
\left|\left\{ k \in \nats : \ProcX \cap B_{k} \neq \emptyset \right\}\right| = \infty.
\end{equation}
Furthermore, monotonicity of $B \mapsto \ProcX \cap B$ implies that, without loss of generality, we may suppose $B_{1} = \X$.
Also, by monotonicity of $\{B_{k}\}_{k=1}^{\infty}$,
on the event $E$, 
\eqref{eqn:unbounded-nec-ubkc-violated} implies that
\begin{equation}
\label{eqn:unbounded-nec-B-every}
\forall k \in \nats, \ProcX \cap B_{k} \neq \emptyset.
\end{equation}
Now for each $i \in \nats$, define $A_{i} = B_{i} \setminus B_{i+1}$.  
Note that, due to monotonicity of the $\{B_{k}\}_{k=1}^{\infty}$ sequence and the facts that $B_{k} \downarrow \emptyset$ and $B_{1} = \X$, 
$\{A_{i}\}_{i=1}^{\infty}$ is a disjoint sequence in $\Borel$ with $\bigcup\limits_{i=1}^{\infty} A_{i} = \X$.
Thus, for every $t \in \nats$, there exists a unique ($X_{t}$-dependent) variable $i_{t} \in \nats$ with $X_{t} \in A_{i_{t}}$.
Also note that every $j \in \nats$ has $B_{j} = \bigcup\limits_{i \geq j} A_{i}$, 
again due to monotonicity of $\{B_{k}\}_{k=1}^{\infty}$ and the fact that $B_{k} \downarrow \emptyset$.

For each $j \in \nats$, define a random variable 
\begin{equation*}
\tau_{j} = \begin{cases}
\min\!\left\{ t \in \nats : X_{t} \in B_{j} \right\}, & \text{ if } \ProcX \cap B_{j} \neq \emptyset\\
0, & \text{ otherwise}
\end{cases}.
\end{equation*}
Note that, on the event $E$, \eqref{eqn:unbounded-nec-B-every} implies that we
have 
$\tau_{j} = \min\{ t \in \nats : X_{t} \in B_{j} \}$
for every $j \in \nats$ (and that this minimum exists and is well-defined).
Let $\{T_{j}\}_{j=1}^{\infty}$ be a nondecreasing sequence of (nonrandom) values in $\nats \cup \{0\}$ such that,
for each $j \in \nats$, 
\begin{equation*}
\P\!\left( \tau_{j} > T_{j} \right) < 2^{-j}.
\end{equation*}
Such a sequence must exist, since $\tau_{j}$ is almost surely finite, so that $\lim\limits_{t \to \infty} \P( \tau_{j} > t ) = 0$ \citep*[e.g.,][Theorem A.19]{schervish:95}.
Since $\sum\limits_{j=1}^{\infty} \P\left( \tau_{j} > T_{j} \right) < \sum\limits_{j=1}^{\infty} 2^{-j} = 1 < \infty$,
the Borel-Cantelli Lemma implies that, on a $\sigma(\ProcX)$-measurable event $E^{\prime}$ of probability one, 
$\exists \iota_{0} \in \nats$ such that $\forall j \in \nats$ with $j \geq \iota_{0}$, 
$\tau_{j} \leq T_{j}$.
For each $i \in \nats$, let $y_{i,0},y_{i,1} \in \Y$ be such that $\loss(y_{i,0},y_{i,1}) > T_{i}$.
For every $\kappa \in [0,1)$ and $i \in \nats$, define $\kappa_{i} = \lfloor 2^{i} \kappa \rfloor - 2 \lfloor 2^{i-1} \kappa \rfloor$:
the $i^{{\rm th}}$ bit of the binary representation of $\kappa$.
Then for each $\kappa \in [0,1)$, $i \in \nats$, and $x \in A_{i}$, define $\target_{\kappa}(x) = y_{i,\kappa_{i}}$.
Note that $(x,\kappa) \mapsto \target_{\kappa}(x)$ is measurable in the product $\sigma$-algebra 
(under $\Borel$ for the $x$ argument, and the usual Borel $\sigma$-algebra on $[0,1)$ for the $\kappa$ argument), 
since the inverse image of any measurable set $C \subseteq \Y$ is 
a countable union of measurable rectangle sets: namely, 
$\bigcup\limits_{\substack{i \in \nats, b \in \{0,1\} : \\  y_{i,b} \in C}} ( A_{i} \times \{ \kappa : \kappa_{i} = b \} )$.

For the purpose of treating both self-adaptive and online learning simultaneously,
for any $n,m \in \nats \cup \{0\}$, let $f_{n,m}$ denote any (possibly random) measurable function $\X^{m} \times \Y^{m} \times \X \to \Y$.
We will see below that any online learning rule can be expressed as such a function by simply disregarding the $n$ index, 
while any self-adaptive learning rule can be expressed as such a function by disregarding the $\Y$-valued arguments beyond the first $n$ (when $m \geq n$).
Additionally, for every $x \in \X$, $n,m \in \nats \cup \{0\}$, and every $\kappa \in [0,1)$,
for brevity we define $f_{n,m}^{\kappa}(x) = f_{n,m}(X_{1:m},\target_{\kappa}(X_{1:m}),x)$
(a composition of measurable functions, and therefore measurable); 
equivalently,  $f_{n,m}^{\kappa}(x) = f_{n,m}(X_{1:m},\{y_{i_{t},\kappa_{i_{t}}}\}_{t=1}^{m},x)$.
We generally have 
\begin{align}
& \sup_{\kappa \in [0,1)} \E\!\left[ \limsup_{n \to \infty} \limsup_{t \to \infty} \frac{1}{t} \sum_{m=0}^{t-1} \loss\!\left( f_{n,m}^{\kappa}(X_{m+1}), y_{i_{m+1},\kappa_{i_{m+1}}} \right) \right]
\notag \\ & \geq \int_{0}^{1} \E\!\left[ \limsup_{n \to \infty} \limsup_{t \to \infty} \frac{1}{t} \sum_{m=0}^{t-1} \loss\!\left( f_{n,m}^{\kappa}(X_{m+1}), y_{i_{m+1},\kappa_{i_{m+1}}} \right) \right] {\rm d}\kappa. \label{eqn:unbounded-abstract-integral}
\end{align}
We therefore aim to establish that this last expression is strictly greater than $0$.

Since $\loss$ is nonnegative, Tonelli's theorem implies that the last expression in \eqref{eqn:unbounded-abstract-integral} equals
\begin{align}
& \E\!\left[ \int_{0}^{1} \limsup_{n \to \infty} \limsup_{t \to \infty} \frac{1}{t} \sum_{m=0}^{t-1} \loss\!\left( f_{n,m}^{\kappa}(X_{m+1}), y_{i_{m+1},\kappa_{i_{m+1}}} \right) {\rm d}\kappa \right] \notag
\\ & \geq \E\!\left[ \ind_{E \cap E^{\prime}} \int_{0}^{1} \limsup_{n \to \infty} \limsup_{t \to \infty} \frac{1}{t} \sum_{m=0}^{t-1} \loss\!\left( f_{n,m}^{\kappa}(X_{m+1}), y_{i_{m+1},\kappa_{i_{m+1}}} \right) {\rm d}\kappa \right]. \label{eqn:unbounded-nec-pre2-fatou}
\end{align}
Since $B_{k} \downarrow \emptyset$, for any $t \in \nats$ there exists $k_{t} \in \nats$ with $X_{1:t} \cap B_{k_{t}} = \emptyset$, 
which (by monotonicity of $\{B_{j}\}_{j=1}^{\infty}$) implies that on the event $E$ (so that \eqref{eqn:unbounded-nec-B-every} holds), 
every integer $j \geq k_{t}$ has $\tau_{j} > t$.
Thus, on $E$, $\tau_{j} \to \infty$ as $j \to \infty$.
Therefore,
the expression on the right hand side of \eqref{eqn:unbounded-nec-pre2-fatou} is at least as large as
\begin{align}
& \E\!\left[ \ind_{E \cap E^{\prime}} \int_{0}^{1} \limsup_{n \to \infty} \limsup_{j \to \infty} \frac{1}{\tau_{j}} \sum_{m=0}^{\tau_{j}-1} \loss\!\left( f_{n,m}^{\kappa}(X_{m+1}), y_{i_{m+1},\kappa_{i_{m+1}}} \right) {\rm d}\kappa \right] \notag
\\ & \geq \E\!\left[ \ind_{E \cap E^{\prime}} \int_{0}^{1} \limsup_{n \to \infty} \limsup_{j \to \infty} \frac{1}{\tau_{j}} \left( \loss\!\left( f_{n,\tau_{j}-1}^{\kappa}(X_{\tau_{j}}), y_{i_{\tau_{j}},\kappa_{i_{\tau_{j}}}} \right) \land \tau_{j} \right) {\rm d}\kappa \right]. \label{eqn:unbounded-nec-pre-fatou}
\end{align}
In particular, since $\forall n, j \in \nats$ with $\tau_{j} > 0$, we have  
$\frac{1}{\tau_{j}} \left( \loss\!\left( f_{n,\tau_{j}-1}^{\kappa}(X_{\tau_{j}}), y_{i_{\tau_{j}},\kappa_{i_{\tau_{j}}}} \right) \land \tau_{j} \right) \leq 1$,
Fatou's lemma (applied twice) implies that \eqref{eqn:unbounded-nec-pre-fatou} is at least as large as
\begin{equation}
\label{eqn:unbounded-nec-post-fatou}
\E\!\left[ \ind_{E \cap E^{\prime}} \limsup_{n \to \infty} \limsup_{j \to \infty} \frac{1}{\tau_{j}} \int_{0}^{1} \left( \loss\!\left( f_{n,\tau_{j}-1}^{\kappa}(X_{\tau_{j}}), y_{i_{\tau_{j}},\kappa_{i_{\tau_{j}}}} \right) \land \tau_{j} \right) {\rm d}\kappa \right].
\end{equation}

Now note that on the event $E$, for every $j \in \nats$, 
minimality of $\tau_{j}$ implies that every $t \in \nats$ with $t < \tau_{j}$ has $X_{t} \notin B_{j}$,
and since $B_{j} = \bigcup\limits_{i \geq j} A_{i}$, this implies $i_{t} < j$.
Furthermore, on $E$, by definition of $\tau_{j}$ we have $X_{\tau_{j}} \in B_{j} = \bigcup\limits_{i \geq j} A_{i}$, 
so that $i_{\tau_{j}} \geq j$ for every $j \in \nats$.  Together these facts imply that on $E$, every $j \in \nats$ has 
$i_{\tau_{j}} \notin \{ i_{1},\ldots,i_{\tau_{j}-1} \}$, so that 
$f_{n,\tau_{j}-1}^{\kappa}(X_{\tau_{j}})$ is functionally independent of $\kappa_{i_{\tau_{j}}}$.
Therefore, for $K \sim {\rm Uniform}([0,1))$ independent of $\ProcX$ and $f_{n,\tau_{j}-1}$, 
it holds that $f_{n,\tau_{j}-1}^{K}(X_{\tau_{j}})$ is conditionally independent of $K_{i_{\tau_{j}}}$ given 
$K_{i_{1}},\ldots,K_{i_{\tau_{j}-1}}$, 
$\ProcX$, and $f_{n,\tau_{j}-1}$, 
on the event $E$.
Furthermore, on this event, $K_{i_{\tau_{j}}}$ is conditionally independent of 
$K_{i_{1}},\ldots,K_{i_{\tau_{j}-1}}$ 
given $\ProcX$ and $f_{n,\tau_{j}-1}$, 
and the conditional distribution of $K_{i_{\tau_{j}}}$ is ${\rm Bernoulli}(\frac{1}{2})$, given $\ProcX$ and $f_{n,\tau_{j}-1}$, on this event.
Therefore, on the event $E$, 
\begin{align}
& \int_{0}^{1} \left( \loss\!\left( f_{n,\tau_{j}-1}^{\kappa}(X_{\tau_{j}}), y_{i_{\tau_{j}},\kappa_{i_{\tau_{j}}}} \right) \land \tau_{j} \right)  {\rm d}\kappa
= \E\!\left[ \left( \loss\!\left( f_{n,\tau_{j}-1}^{K}(X_{\tau_{j}}), y_{i_{\tau_{j}},K_{i_{\tau_{j}}}} \right) \land \tau_{j} \right) \Big| \ProcX, f_{n,\tau_{j}-1} \right] \notag
\\ & = \E\!\left[ \E\!\left[ \left( \loss\!\left( \! f_{n,\tau_{j}-1}\!\!\left(\! X_{\!1:(\tau_{j}-1)},\!\{y_{i_{t},K_{i_{t}}}\}_{t=1}^{\tau_{j}-1}\!\!,\!X_{\tau_{j}} \!\right)\!\!, y_{i_{\tau_{j}}\!,K_{i_{\tau_{j}}}} \!\right) \!\land\! \tau_{j} \right) \!\Big| \ProcX, \!\{K_{i_{t}}\}_{t=1}^{\tau_{j}-1}\!\!,\! f_{n,\tau_{j}-1} \right] \Big| \ProcX,\! f_{n,\tau_{j}-1} \right] \notag
\\ & = \E\!\left[ \sum_{b \in \{0,1\}} \frac{1}{2} \left( \loss\!\left( f_{n,\tau_{j}-1}\!\left( X_{1:(\tau_{j}-1)},\{y_{i_{t},K_{i_{t}}}\}_{t=1}^{\tau_{j}-1},X_{\tau_{j}} \right), y_{i_{\tau_{j}},b} \right) \land \tau_{j} \right) \Big| \ProcX, f_{n,\tau_{j}-1} \right]. \label{eqn:unbounded-nec-integral}
\end{align}
Since $\tau_{j} \geq 0$, one can easily verify that $\loss(\cdot,\cdot) \land \tau_{j}$ is a pseudo-near-metric 
(i.e., a near-metric except that $\loss(y,y^{\prime})$ might sometimes be $0$ even for $y \neq y^{\prime}$)
with $\triconst$ as the constant in the relaxed triangle inequality.
Thus, by the relaxed triangle inequality, 
\begin{multline}
\label{eqn:unbounded-nec-bernoulli}
\sum_{b \in \{0,1\}} \left( \loss\!\left( f_{n,\tau_{j}-1}\left(X_{1:(\tau_{j}-1)},\{y_{i_{s},K_{i_{s}}}\}_{s=1}^{\tau_{j}-1},X_{\tau_{j}}\right), y_{i_{\tau_{j}},b} \right) \land \tau_{j} \right)
\\ \geq \frac{1}{\triconst} \left( \loss\!\left( y_{i_{\tau_{j}},0}, y_{i_{\tau_{j}},1} \right) \land \tau_{j} \right)
\geq \frac{1}{\triconst} \left( T_{i_{\tau_{j}}} \land \tau_{j} \right).
\end{multline}
As established above, on the event $E$, every $j \in \nats$ has $i_{\tau_{j}} \geq j$.
Since $\{T_{i}\}_{i=1}^{\infty}$ is nondecreasing, this implies that, on $E$,
$T_{i_{\tau_{j}}} \geq T_{j}$.
Furthermore, on the event $E^{\prime}$, every $j \geq \iota_{0}$ has $T_{j} \geq \tau_{j}$.
Combining this with \eqref{eqn:unbounded-nec-integral} and \eqref{eqn:unbounded-nec-bernoulli}
yields that, on the event $E \cap E^{\prime}$, $\forall n,j \in \nats$
with 
$j \geq \iota_{0}$, 
\begin{equation*}
\int_{0}^{1} \left( \loss\!\left( f_{n,\tau_{j}-1}^{\kappa}(X_{\tau_{j}}), y_{i_{\tau_{j}},\kappa_{i_{\tau_{j}}}} \right) \land \tau_{j} \right) {\rm d}\kappa
\geq \E\!\left[ \frac{1}{2\triconst} \tau_{j} \Big| \ProcX, f_{n,\tau_{j}-1} \right] 
= \frac{1}{2\triconst} \tau_{j},
\end{equation*}
where the rightmost equality follows from $\sigma(\ProcX)$-measurability of $\tau_{j}$.
Therefore, the expression in \eqref{eqn:unbounded-nec-post-fatou} is at least as large as
\begin{equation*}
\E\!\left[ \ind_{E \cap E^{\prime}} \limsup_{n \to \infty} \limsup_{j \to \infty} \frac{1}{\tau_{j}} \left( \frac{1}{2\triconst} \tau_{j} \right) \right]
= \frac{1}{2\triconst} \P\left( E \cap E^{\prime} \right)
\geq \frac{1}{2\triconst} \left( \P(E) - \P((E^{\prime})^{c}) \right)
= \frac{1}{2\triconst} \P(E),
\end{equation*}
where the rightmost equality is due to the fact that $\P(E^{\prime}) = 1$.
In particular, recall that $\P(E) > 0$, so that the above is strictly greater than zero.

Altogether, we have established that the last expression in \eqref{eqn:unbounded-abstract-integral} is strictly greater than $0$.
By the inequality in \eqref{eqn:unbounded-abstract-integral} this implies $\exists \kappa \in [0,1)$ such that
\begin{equation*}
\E\!\left[ \limsup\limits_{n \to \infty} \limsup\limits_{t \to \infty} \frac{1}{t} \sum_{m=0}^{t-1} \loss\!\left( f_{n,m}^{\kappa}(X_{m+1}), y_{i_{m+1},\kappa_{i_{m+1}}} \right) \right] > 0,
\end{equation*}
which further implies \citep*[see e.g., Theorem 1.6.5 of][]{ash:00} that, with probability strictly greater than zero,
\begin{equation*}
\limsup\limits_{n \to \infty} \limsup\limits_{t \to \infty} \frac{1}{t} \sum_{m=0}^{t-1} \loss\!\left( f_{n,m}^{\kappa}(X_{m+1}), y_{i_{m+1},\kappa_{i_{m+1}}} \right) > 0.
\end{equation*}

This argument applies to any measurable functions $f_{n,m} : \X^{m} \times \Y^{m} \times \X \to \Y$ (possibly random).
In particular, for any online learning rule $h_{n}$, we can define a function $f_{n,m}(x_{1:m},y_{1:m},x) = h_{m}(x_{1:m},y_{1:m},x)$ 
(for every $n,m \in \nats \cup \{0\}$ and $x_{1:m} \in \X^{m}$, $y_{1:m} \in \Y^{m}$, $x \in \X$), 
in which case any $\kappa \in [0,1)$ has 
\begin{equation*}
\limsup_{n \to \infty} \hat{\L}_{\ProcX}(h_{\cdot},\target_{\kappa};n)
= \limsup_{n \to \infty} \limsup_{t \to \infty} \frac{1}{t} \sum_{m=0}^{t-1} \loss\!\left( f_{n,m}^{\kappa}(X_{m+1}), y_{i_{m+1},\kappa_{i_{m+1}}} \right).
\end{equation*}
Therefore, the above argument implies that $\exists \kappa \in [0,1)$ for which, 
with probability strictly greater than $0$, 
$\limsup\limits_{n \to \infty} \hat{\L}_{\ProcX}(h_{\cdot},\target_{\kappa};n) > 0$,
so that $h_{n}$ is not strongly universally consistent under $\ProcX$. 
Since this argument applies to any online learning rule $h_{n}$, 
this implies $\ProcX \notin \SUOL$, 
and since the argument applies to any process $\ProcX$ failing to satisfy Condition~\ref{con:ukc}, 
we conclude that $\SUOL \subseteq \UKC$.

Similarly, for any self-adaptive learning rule $g_{n,m}$, for every $n,m \in \nats \cup \{0\}$ with $m \geq n$, 
we can define a function $f_{n,m}(x_{1:m},y_{1:m},x) = g_{n,m}(x_{1:m},y_{1:n},x)$ 
(for every $x_{1:m} \in \X^{m}$, $y_{1:m} \in \Y^{m}$, $x \in \X$).
For $n,m \in \nats \cup \{0\}$ with $m < n$, we can simply define $f_{n,m}(x_{1:m},y_{1:m},x)$ as an arbitrary fixed $y \in \Y$
(invariant to the arguments $x_{1:m} \in \X^{m}$, $y_{1:m} \in \Y^{m}$, $x \in \X$).
Then for any $\kappa \in [0,1)$, we have 
\begin{align*}
\limsup_{n \to \infty} \hat{\L}_{\ProcX}(g_{n,\cdot},\target_{\kappa};n)
&= \limsup_{n \to \infty} \limsup_{t \to \infty} \frac{1}{t+1} \sum_{m=n}^{n+t} \loss\!\left( f_{n,m}^{\kappa}(X_{m+1}), y_{i_{m+1},\kappa_{i_{m+1}}} \right)
\\ & = \limsup_{n \to \infty} \limsup_{t \to \infty} \frac{1}{t} \sum_{m=0}^{t-1} \loss\!\left( f_{n,m}^{\kappa}(X_{m+1}), y_{i_{m+1},\kappa_{i_{m+1}}} \right).
\end{align*}
Therefore, the above argument implies that $\exists \kappa \in [0,1)$ for which,
with probability strictly greater than $0$, 
$\limsup\limits_{n \to \infty} \hat{\L}_{\ProcX}(g_{n,\cdot},\target_{\kappa};n) > 0$,
so that $g_{n,m}$ is not strongly universally consistent under $\ProcX$.
Since this argument applies to any self-adaptive learning rule $g_{n,m}$, 
this implies $\ProcX \notin \SUAL$,
and since the argument applies to any process $\ProcX$ failing to satisfy Condition~\ref{con:ukc},
we conclude that $\SUAL \subseteq \UKC$, which completes the proof.
\end{proof}

To argue sufficiency of $\UKC$ for strong universal inductive learning,
we propose a new type of learning rule, suitable for learning with unbounded losses 
under processes in $\UKC$.  Specifically, let $\eps_{0} = \infty$, and for each $k \in \nats$, let $\eps_{k} = (2\triconst)^{-k}$.
Given a sequence $\{\tilde{f}_{i}\}_{i=1}^{\infty}$ of measurable functions $\X \to \Y$ (described below),
and any $n \in \nats$, $x_{1:n} \in \X^{n}$, and $y_{1:n} \in \Y^{n}$,
define $\hat{i}_{n,0}(x_{1:n},y_{1:n}) = 1$, and for each $k \in \nats$, inductively define
\begin{multline}
\label{eqn:unbounded-universal2-hati}
\hat{i}_{n,k}(x_{1:n},y_{1:n}) = \min\left\{ i \in \nats : \max_{1 \leq t \leq n} \loss\!\left( \tilde{f}_{i}(x_{t}), y_{t} \right) \leq \eps_{k}, \text{ and} \right. 
\\ \left. \sup_{x \in \X} \loss\!\left( \tilde{f}_{i}(x), \tilde{f}_{\hat{i}_{n,k-1}(x_{1:n},y_{1:n})}(x) \right) \leq \triconst\eps_{k-1}+\eps_{k} \right\},
\end{multline}
if it exists.  For completeness, 
if the set on the right hand side of \eqref{eqn:unbounded-universal2-hati} is empty for a given $k \in \nats$,
let us define $\hat{i}_{n,k}(x_{1:n},y_{1:n}) = \hat{i}_{n,k-1}(x_{1:n},y_{1:n})$.
Fix any sequence $\{k_{n}\}_{n=1}^{\infty}$ in $\nats$ with $k_{n} \to \infty$. 
Then, for any $n \in \nats$, and any $x_{1:n} \in \X^{n}$, $y_{1:n} \in \Y^{n}$, and $x \in \X$, 
define 
\begin{equation}
\label{eqn:unbounded-universal2-inductive-rule}
\hat{f}_{n}(x_{1:n},y_{1:n},x) = \tilde{f}_{\hat{i}_{n,k_{n}}(x_{1:n},y_{1:n})}(x).
\end{equation}
We will argue in the proof of Lemma~\ref{lem:unbounded-ukc-subset-suil} below that $\hat{f}_{n}$ is measurable, 
and hence \eqref{eqn:unbounded-universal2-inductive-rule} defines a valid inductive learning rule.
We will see below that, for an appropriate choice of the sequence $\{\tilde{f}_{i}\}_{i=1}^{\infty}$, 
this inductive learning rule is strongly universally consistent under every $\ProcX \in \UKC$, even for unbounded losses.
To specify an appropriate sequence $\{\tilde{f}_{i}\}_{i=1}^{\infty}$, and to study the performance of the resulting learning rule,
we first prove modified versions of Lemmas~\ref{lem:approximating-sets} and \ref{lem:approximating-sequence},
under the restriction of $\ProcX$ to $\UKC$.  

\begin{lemma}
\label{lem:unbounded-approximating-sets}
There exists a countable set $\T_{1} \subseteq \Borel$ such that, $\forall \ProcX \in \UKC$,
$\forall A \in \Borel$, with probability one, $\exists \hat{A} \in \T_{1}$ s.t. $\ProcX \cap \hat{A} = \ProcX \cap A$.
\end{lemma}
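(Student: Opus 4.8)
The plan is to build $\T_1$ as the (countable) algebra generated by a countable base for the topology $\T$, and then to identify the collection of Borel sets that admit the desired trajectory-exact approximation as a $\sigma$-algebra containing $\T$. Since $(\X,\T)$ is separable metrizable it is second countable (as already used in the proof of Lemma~\ref{lem:approximating-sets}), so there is a countable base $\T_0 \subseteq \T$; I would let $\T_1$ be the algebra generated by $\T_0$. Because every member of the algebra generated by a countable family is a finite union of finite intersections of members of that family and their complements, $\T_1$ is countable, and by construction it is closed under complements and finite unions. Note this $\T_1$ is larger than the one used in Lemma~\ref{lem:approximating-sets}, and in particular we now genuinely need closure under complement.

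Fix $\ProcX \in \UKC$ and set
\[
\Lambda = \left\{ A \in \Borel : \text{with probability one, } \exists \hat{A} \in \T_1 \text{ with } \ProcX \cap \hat{A} = \ProcX \cap A \right\}.
\]
First I would show $\T \subseteq \Lambda$: for open $A$, enumerate the base sets contained in $A$ and let $A_k \in \T_1$ be the union of the first $k$ of them, so $A_k \uparrow A$ and hence $A \setminus A_k \downarrow \emptyset$; by Condition~\ref{con:ukc} (in the monotone form $A_k \downarrow \emptyset$ recorded just after its statement), with probability one only finitely many $k$ have $\ProcX \cap (A \setminus A_k) \neq \emptyset$, so a.s.\ there is a random $k_0$ with $\ProcX \cap A = \ProcX \cap A_{k_0}$ and $A_{k_0} \in \T_1$. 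Closure of $\Lambda$ under complements is pure set algebra: if $\hat{A} \in \T_1$ and $\ProcX \cap \hat{A} = \ProcX \cap A$, then $\X \setminus \hat{A} \in \T_1$ and $\ProcX \cap (\X \setminus \hat{A}) = \ProcX \setminus (\ProcX \cap \hat{A}) = \ProcX \setminus (\ProcX \cap A) = \ProcX \cap (\X \setminus A)$. Closure under finite unions follows by applying the definition of $\Lambda$ to each of the finitely many summands (a finite union bound) and taking the union of the resulting $\T_1$-sets, which again lies in $\T_1$.

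The main step, and the one demanding care with the order of quantifiers, is closure of $\Lambda$ under countable unions. Given $\{A_i\}_{i=1}^{\infty} \subseteq \Lambda$ with union $A$, I would let $C_N = \bigcup_{i=1}^{N} A_i$; each $C_N$ lies in $\Lambda$ by the finite-union closure just established, and $C_N \uparrow A$ forces $A \setminus C_N \downarrow \emptyset$. I would then combine two almost-sure events: by Condition~\ref{con:ukc} applied to $A \setminus C_N \downarrow \emptyset$, a.s.\ there is a random $N_0$ with $\ProcX \cap A = \ProcX \cap C_{N_0}$; and by a countable union bound over $N \in \nats$ (using $C_N \in \Lambda$ for each fixed $N$), a.s.\ for every $N$ there is $\hat{C}_N \in \T_1$ with $\ProcX \cap \hat{C}_N = \ProcX \cap C_N$. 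On the intersection of these events, $\ProcX \cap A = \ProcX \cap C_{N_0} = \ProcX \cap \hat{C}_{N_0}$ with $\hat{C}_{N_0} \in \T_1$, so $A \in \Lambda$. The essential trick is to approximate \emph{all} of the countably many partial unions $C_N$ uniformly, rather than trying to directly approximate the single, random set $C_{N_0}$. Finally $\X \in \T \subseteq \Lambda$, so $\Lambda$ is a $\sigma$-algebra containing $\T$, whence $\Lambda \supseteq \sigma(\T) = \Borel$, i.e.\ $\Lambda = \Borel$; since $\ProcX \in \UKC$ was arbitrary, the lemma follows. I do not expect any obstacle beyond this quantifier bookkeeping --- no delicate measurable-selection argument is needed, since $\T_1$ is a fixed countable collection.
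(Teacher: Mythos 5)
Your proof is correct, and it takes a route the paper explicitly considers and then deliberately sets aside. The paper first remarks that one could take $\T_1$ to be the algebra generated by $\T_0$ and then appeal to the monotone class theorem, but it opts to prove the lemma with the smaller collection $\T_1 = \{\bigcup\A : \A \subseteq \T_0, |\A| < \infty\}$, on the grounds that a smaller $\T_1$ ``may be of independent interest'' (though the paper also notes that, unlike the bounded-loss case, the smaller $\T_1$ does not simplify the resulting learning rule, since the algebra generated by it ends up being used anyway). The price of the paper's choice is that $\T_1$ is not closed under complements, so the closed-under-complements step requires real work: writing the closed set $\X \setminus A$ as a countable intersection of opens, shrinking via Condition~\ref{con:ukc}, and re-approximating the resulting finite intersection through the already-established fact $\T \subseteq \Lambda$, with a union bound to combine the almost-sure events. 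Your choice of $\T_1$ as the generated algebra makes that step a one-line set identity, at the cost of a (still countable) larger $\T_1$. You show $\Lambda$ is a $\sigma$-algebra directly rather than invoking the monotone class theorem, which is a cosmetic difference; your $\T \subseteq \Lambda$ step is in fact redundant once $\T_1$ is an algebra (since $\T_1 \subseteq \Lambda$ trivially and $\sigma(\T_1) = \sigma(\T_0) = \Borel$), but it does no harm. The quantifier handling in your countable-union step --- approximate all the finite partial unions $C_N$ at once via a countable union bound, then select the random $N_0$ afterward --- is exactly the mechanism the paper uses as well.
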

\begin{proof}
This proof follows along similar lines to the proof of Lemma~\ref{lem:approximating-sets},
and indeed the set $\T_{1}$ will be the same as defined in that proof.
Let $\T_{0}$ be as in the proof of Lemma~\ref{lem:approximating-sets}.
As in the proof of Lemma~\ref{lem:approximating-sets}, there is an immediate proof based on the 
monotone class theorem \citep*[][Theorem 1.3.9]{ash:00}, by taking $\T_{1}$ as the algebra generated by $\T_{0}$ (which, one can show, is a countable set),
and then showing that the collection of sets $A$ for which the claim holds forms a monotone class (straightforwardly using Condition~\ref{con:ukc} for this part).
However, as was the case for Lemma~\ref{lem:approximating-sets}, we will instead establish the claim
with a \emph{smaller} set $\T_{1}$, 
which thereby simplifies the problem of implementing the resulting learning rule.
Specifically, as in the proof of Lemma~\ref{lem:approximating-sets}, take $\T_{1} = \left\{ \bigcup \A : \A \subseteq \T_{0}, |\A| < \infty \right\}$,
which (as discussed in that proof) is a countable set.
Fix any $\ProcX \in \UKC$, and let 
\begin{equation*}
\Lambda = \left\{ A \in \Borel : \P\!\left( \exists \hat{A} \in \T_{1} \text{ s.t. } \ProcX \cap \hat{A} = \ProcX \cap A \right) = 1 \right\}.
\end{equation*}

For any $A \in \T$, as mentioned in the proof of Lemma~\ref{lem:approximating-sets}, 
$\exists \{B_{i}\}_{i=1}^{\infty}$ in $\T_{0}$ such that $A = \bigcup\limits_{i=1}^{\infty} B_{i}$.
Then letting $A_{k} = \bigcup\limits_{i=1}^{k} B_{i}$ for each $k \in \nats$, 
we have $A_{k} \bigtriangleup A = A \setminus A_{k} \downarrow \emptyset$ (monotonically),
and $A_{k} \in \T_{1}$ for each $k \in \nats$.  Therefore, by Condition~\ref{con:ukc},
with probability one, $\exists k \in \nats$ such that $\ProcX \cap (A_{k} \bigtriangleup A) = \emptyset$,
which implies $\ProcX \cap A_{k} = \ProcX \cap A$.
Thus, $A \in \Lambda$.  Since this holds for any $A \in \T$, we have $\T \subseteq \Lambda$.

Next, we argue that $\Lambda$ is a $\sigma$-algebra,
beginning with the property of being closed under complements.
First, consider any $A \in \T_{1}$.
Since $\T_{1} \subseteq \T$, it follows that $\X \setminus A$ is a closed set.
Since $(\X,\T)$ is metrizable, this implies $\exists \{B_{i}\}_{i=1}^{\infty}$ in $\T$ such that $\X \setminus A = \bigcap\limits_{i=1}^{\infty} B_{i}$ 
\citep*[][Proposition 3.7]{kechris:95}.
Defining $C_{k} = \bigcap\limits_{i=1}^{k} B_{i}$ for each $k \in \nats$, we have that 
$C_{k} \bigtriangleup (\X \setminus A) = C_{k} \setminus (\X \setminus A) \downarrow \emptyset$ (monotonically),
and $C_{k} \in \T$ for each $k \in \nats$.
In particular, by Condition~\ref{con:ukc}, this implies that on an event $E_{0}^{(A)}$ of probability one,
there exists $k_{0} \in \nats$ such that
$\ProcX \cap (C_{k_{0}} \bigtriangleup (\X \setminus A))= \emptyset$, which implies 
$\ProcX \cap C_{k_{0}} = \ProcX \cap (\X \setminus A)$.
Furthermore, for each $k \in \nats$, since $C_{k} \in \T \subseteq \Lambda$, 
there is an event $E_{k}^{(A)}$ of probability one, on which 
$\exists \hat{A}_{k} \in \T_{1}$ with $\ProcX \cap \hat{A}_{k} = \ProcX \cap C_{k}$.
Altogether, on the event $\bigcap\limits_{k=0}^{\infty} E_{k}^{(A)}$ (which has probability one, by the union bound), 
$\ProcX \cap \hat{A}_{k_{0}} = \ProcX \cap (\X \setminus A)$.
Thus, every $A \in \T_{1}$ has $(\X \setminus A) \in \Lambda$.
Now define $E^{(\T_{1})} = \bigcap\limits_{A \in \T_{1}} \bigcap\limits_{k=0}^{\infty} E_{k}^{(A)}$,
which has probability one by the union bound (since $\T_{1}$ is countable).

Next, consider any $A \in \Lambda$, and suppose the event (of probability one), denoted $E^{\prime}$, 
that $\exists \hat{A} \in \T_{1} \text{ s.t. } \ProcX \cap \hat{A} = \ProcX \cap A$ holds,
which also implies $\ProcX \cap (\X \setminus \hat{A}) = \ProcX \cap (\X \setminus A)$.
Since $\hat{A} \in \T_{1}$, on the event $E^{(\T_{1})}$ we have that $\exists \hat{A}^{\prime} \in \T_{1}$
with $\ProcX \cap \hat{A}^{\prime} = \ProcX \cap (\X \setminus \hat{A})$.
Thus, on the event $E^{\prime} \cap E^{(\T_{1})}$, we have 
$\ProcX \cap \hat{A}^{\prime} = \ProcX \cap (\X \setminus A)$.
Since $E^{\prime} \cap E^{(\T_{1})}$ has probability one (by the union bound), 
we have that $\X \setminus A \in \Lambda$.  Since this argument holds for any $A \in \Lambda$,
we have that $\Lambda$ is closed under complements.

Next, we show that $\Lambda$ is closed under countable unions.
Let $\{A_{i}\}_{i=1}^{\infty}$ be a sequence in $\Lambda$, and let $A = \bigcup\limits_{i=1}^{\infty} A_{i}$.
Since each $A_{i} \in \Lambda$, by the union bound there is an event $E$ of probability one, 
on which there exists a sequence $\{\hat{A}_{i}\}_{i=1}^{\infty}$ in $\T_{1}$
such that $\forall i \in \nats$, $\ProcX \cap A_{i} = \ProcX \cap \hat{A}_{i}$.
Furthermore, since $A \bigtriangleup \bigcup\limits_{i=1}^{k} A_{i} = A \setminus \bigcup\limits_{i=1}^{k} A_{i} \downarrow \emptyset$ (monotonically),
Condition~\ref{con:ukc} implies that, on an event $E^{\prime\prime}$ of probability one, 
$\exists k \in \nats$ such that $\ProcX \cap \left(A \bigtriangleup \bigcup\limits_{i=1}^{k} A_{i}\right) = \emptyset$,
which implies $\ProcX \cap \bigcup\limits_{i=1}^{k} A_{i} = \ProcX \cap A$.
Since, for any $k \in \nats$, $\ProcX \cap \bigcup\limits_{i=1}^{k} A_{i}$ is simply the subsequence of $\ProcX$ 
consisting of all entries appearing in any of the $\ProcX \cap A_{i}$ subsequences with $i \leq k$, 
and (on $E$) each $\ProcX \cap A_{i} = \ProcX \cap \hat{A}_{i}$, 
together we have that on the event $E \cap E^{\prime\prime}$ (which has probability one, by the union bound),
$\exists k \in \nats$ such that 
$\ProcX \cap \bigcup\limits_{i=1}^{k} \hat{A}_{i} = \ProcX \cap \bigcup\limits_{i=1}^{k} A_{i} = \ProcX \cap A$.
Since it follows immediately from its definition that the set $\T_{1}$ is closed under finite unions, 
we have that $\bigcup\limits_{i=1}^{k} \hat{A}_{i} \in \T_{1}$.
Therefore, $A \in \Lambda$.  Since this holds for any choice of the sequence $\{A_{i}\}_{i=1}^{\infty}$ in $\Lambda$,
we have that $\Lambda$ is closed under countable unions.

Finally, recalling that $\T$ is a topology, we have $\X \in \T$,
and since $\T \subseteq \Lambda$, this implies $\X \in \Lambda$.
Altogether, we have established that $\Lambda$ is a $\sigma$-algebra.
Therefore, since $\Borel$ is the $\sigma$-algebra generated by $\T$, and $\T \subseteq \Lambda$,
it immediately follows that 
$\Borel \subseteq \Lambda$
(which also implies $\Lambda = \Borel$).
Since this argument holds for any choice of $\ProcX \in \UKC$, the lemma 
follows.
\ignore{

---------------------  old transfinite induction-based proof -----------------------

Let $\T_{0}$, $\T_{1}$, $\Sigma_{\alpha}^{0}$, and $\Pi_{\alpha}^{0}$ be defined as in the proof of Lemma~\ref{lem:approximating-sets},
and fix any $\ProcX \in \UKC$.
Following the proof of Lemma~\ref{lem:approximating-sets}, we proceed by transfinite induction,
recalling that $\Borel = \bigcup\limits_{\alpha < \omega_{1}} \Sigma_{\alpha}^{0} = \bigcup\limits_{\alpha < \omega_{1}} \Pi_{\alpha}^{0}$ \citep*[see e.g.,][]{srivastava:98}.

We take as the base case that $A \in \Sigma_{1}^{0} \cup \Pi_{1}^{0}$.
For any $A \in \Sigma_{1}^{0}$, $\exists \{A_{i}\}_{i=1}^{\infty}$ in $\T_{0}$ such that $A = \bigcup\limits_{i=1}^{\infty} A_{i}$.
But then for $B_{i} = A \setminus \bigcup\limits_{j=1}^{i} A_{j}$ for all $i \in \nats$, we have $B_{i} \downarrow \emptyset$,
so that, since $\ProcX \in \UKC$, with probability one, $\max\{ k \in \nats : \ProcX \cap B_{k} \neq \emptyset \} < \infty$.
Thus, with probability one, $\exists k \in \nats$ such that $\ProcX \cap \left( A \setminus \bigcup\limits_{j=1}^{k} A_{j} \right) = \emptyset$;
since $A = \bigcup\limits_{i=1}^{\infty} A_{i} \supseteq \bigcup\limits_{j=1}^{k} A_{j}$,
$\ProcX \cap A = \left( \ProcX \cap \left( A \setminus \bigcup\limits_{j=1}^{k} A_{j} \right) \right) \cup \left( \ProcX \cap \bigcup\limits_{j=1}^{k} A_{j} \right) = \ProcX \cap \bigcup\limits_{j=1}^{k} A_{j}$.
Thus, since $\bigcup\limits_{j=1}^{k} A_{j} \in \T_{1}$, the result holds by taking $\hat{A} = \bigcup\limits_{j=1}^{k} A_{j} \in \T_{1}$.

On the other hand, for any $A \in \Pi_{1}^{0}$, $\exists \{B_{i}\}_{i=1}^{\infty}$ in $\T$ s.t. $A = \bigcap\limits_{i=1}^{\infty} B_{i}$ \citep*[see e.g.,][Exercise 2.1.16]{srivastava:98}.
In particular, $A \bigtriangleup \bigcap\limits_{i=1}^{k} B_{i} \downarrow \emptyset$, so that, since $\ProcX \in \UKC$, 
with probability one, $\exists k \in \nats$ s.t. $\ProcX \cap \left( A \bigtriangleup \bigcap\limits_{i=1}^{k} B_{i} \right) = \emptyset$.
Also note that the condition $\ProcX \cap \left( A \bigtriangleup \bigcap\limits_{i=1}^{k} B_{i} \right) = \emptyset$ may equivalently 
be stated as $\ProcX \cap A = \ProcX \cap \bigcap\limits_{i=1}^{k} B_{i}$.
Furthermore, for every $j \in \nats$, $\bigcap\limits_{i=1}^{j} B_{i} \in \T = \Sigma_{1}^{0}$ \citep*{munkres:00}, 
so that the above argument (combined with the union bound) implies that, with probability one, $\forall j \in \nats$, $\exists \hat{A}_{j} \in \T_{1}$
s.t. $\ProcX \cap \hat{A}_{j} = \ProcX \cap \left( \bigcap\limits_{i=1}^{j} B_{i} \right)$.
Combining the above two events, by the union bound, with probability one, 
$\exists k \in \nats$ s.t. $\ProcX \cap A = \ProcX \cap \bigcap\limits_{i=1}^{k} B_{i}$
\emph{and} $\exists \hat{A}_{k} \in \T_{1}$ s.t. $\ProcX \cap \hat{A}_{k} = \ProcX \cap \bigcap\limits_{i=1}^{k} B_{i}$.
In particular, on this event, we have that $\ProcX \cap \hat{A}_{k} = \ProcX \cap A$, 
so that the result holds by taking $\hat{A} = \hat{A}_{k}$.
This completes the base case of the inductive proof.

Next, take as the inductive hypothesis that, for some ordinal $\alpha < \omega_{1}$, 
$\forall A \in \bigcup\limits_{\beta < \alpha} \Sigma_{\beta}^{0} \cup \Pi_{\beta}^{0}$, with probability one, 
$\exists \hat{A} \in \T_{1}$ s.t. $\ProcX \cap \hat{A} = \ProcX \cap A$.
We will then establish the claim for sets $A$ in $\Sigma_{\alpha}^{0} \cup \Pi_{\alpha}^{0}$.
For any $A \in \Sigma_{\alpha}^{0}$, there exists a sequence $\{A_{i}\}_{i=1}^{\infty}$ of sets in $\bigcup\limits_{\beta < \alpha} \Pi_{\beta}^{0}$ s.t. $A = \bigcup\limits_{i=1}^{\infty} A_{i}$.
In particular, $A \bigtriangleup \bigcup\limits_{i=1}^{k} A_{i} \downarrow \emptyset$, so that, since $\ProcX \in \UKC$, 
with probability one, $\exists k \in \nats$ s.t. $\ProcX \cap \left( A \bigtriangleup \bigcup\limits_{i=1}^{k} A_{i} \right) = \emptyset$,
or equivalently, $\ProcX \cap A = \ProcX \cap \bigcup\limits_{i=1}^{k} A_{i}$.
Furthermore, since $\bigcup\limits_{\beta < \alpha} \Pi_{\beta}^{0}$ is closed under finite unions \citep{srivastava:98},
we have $\bigcup\limits_{i=1}^{j} A_{i} \in \bigcup\limits_{\beta < \alpha} \Pi_{\beta}^{0}$ for every $j \in \nats$.  Therefore, the inductive hypothesis and the union bound
imply that, with probability one, for every $j \in \nats$, $\exists \hat{A}_{j} \in \T_{1}$ s.t. $\ProcX \cap \hat{A}_{j} = \ProcX \cap \bigcup\limits_{i=1}^{j} A_{i}$.
Combining these two events, by the union bound, we have that with probability one, 
$\exists k \in \nats$ s.t. $\ProcX \cap A = \ProcX \cap \bigcup\limits_{i=1}^{k} A_{i}$ \emph{and} $\exists \hat{A}_{k} \in \T_{1}$ s.t. $\ProcX \cap \hat{A}_{k} = \ProcX \cap \bigcup\limits_{i=1}^{k} A_{i}$.
In particular, on this event, we have that $\ProcX \cap \hat{A}_{k} = \ProcX \cap A$,
so that the result is established by taking $\hat{A} = \hat{A}_{k}$.

Similarly, for any $A \in \Pi_{\alpha}^{0}$, there exists a sequence $\{A_{i}\}_{i=1}^{\infty}$ of sets in $\bigcup\limits_{\beta < \alpha} \Sigma_{\beta}^{0}$ s.t. $A = \bigcap\limits_{i=1}^{\infty} A_{i}$.
In particular, $A \bigtriangleup \bigcap\limits_{i=1}^{k} A_{i} \downarrow \emptyset$, so that, since $\ProcX \in \UKC$, 
with probability one, $\exists k \in \nats$ s.t. $\ProcX \cap \left( A \bigtriangleup \bigcap\limits_{i=1}^{k} A_{i} \right) = \emptyset$,
or equivalently, $\ProcX \cap A = \ProcX \cap \bigcap\limits_{i=1}^{k} A_{i}$.
Furthermore, since $\bigcup\limits_{\beta < \alpha} \Sigma_{\beta}^{0}$ is closed under finite intersections \citep{srivastava:98},
we have that $\bigcap\limits_{i=1}^{j} A_{i} \in \bigcup\limits_{\beta < \alpha} \Sigma_{\beta}^{0}$ for every $j \in \nats$.  Therefore, the inductive hypothesis 
and the union bound imply that, with probability one, $\forall j \in \nats$, $\exists \hat{A}_{j} \in \T_{1}$ s.t. $\ProcX \cap \hat{A}_{j} = \ProcX \cap \bigcap\limits_{i=1}^{j} A_{i}$.
Combining these two events, by the union bound, we have that with probability one,
$\exists k \in \nats$ s.t. $\ProcX \cap A = \ProcX \cap \bigcap\limits_{i=1}^{k} A_{i}$ \emph{and}
$\exists \hat{A}_{k} \in \T_{1}$ s.t. $\ProcX \cap \hat{A}_{k} = \ProcX \cap \bigcap\limits_{i=1}^{k} A_{i}$.
In particular, on this event, we have that $\ProcX \cap \hat{A}_{k} = \ProcX \cap A$,
so that the result holds by taking $\hat{A} = \hat{A}_{k}$.

Since $\Borel = \bigcup\limits_{\alpha < \omega_{1}} \Sigma_{\alpha}^{0}$, this establishes the 
claim for every $A \in \Borel$ by the principle of transfinite induction.}
\end{proof}

\begin{lemma}
\label{lem:unbounded-constrained-approximating-sequence}
There exists a countable set $\tilde{\F}$ of measurable functions $\X \to \Y$
such that, for every $\ProcX \in \UKC$, for every measurable $f : \X \to \Y$,
with probability one, $\forall \eps > 0$, $\forall \tilde{f}_{1} \in \tilde{\F}$, $\exists \tilde{f}_{2} \in \tilde{\F}$
with 
\begin{align*}
\sup_{x \in \X} \loss\!\left( \tilde{f}_{2}(x), \tilde{f}_{1}(x) \right) & \leq \eps + \triconst \sup_{t \in \nats} \loss\!\left( \tilde{f}_{1}(X_{t}), f(X_{t}) \right) 
\\ \text{and }~~ \sup_{t \in \nats} \loss\!\left( \tilde{f}_{2}(X_{t}), f(X_{t}) \right) & \leq \eps.
\end{align*}
\end{lemma}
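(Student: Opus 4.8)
The plan is to mimic the proof of Lemma~\ref{lem:approximating-sequence}, but with the $L^1$-style set approximation of Lemma~\ref{lem:approximating-sets} replaced by the pointwise-on-$\ProcX$ approximation of Lemma~\ref{lem:unbounded-approximating-sets}, and with the bounded-loss truncation arguments replaced by appeals to Lemma~\ref{lem:unbounded-effectively-bounded}. Let $\T_1$ be the countable collection from Lemma~\ref{lem:unbounded-approximating-sets}, let $\mathcal{A}$ be the (countable) algebra it generates, and let $\tilde\Y$ be a countable dense subset of $\Y$ (which exists by separability). Take $\{\tilde f_i\}_{i=1}^{\infty}$ to be an enumeration of the family of all $\mathcal{A}$-measurable simple functions $\X \to \Y$ with values in $\tilde\Y$; this family is countable, since each such function is determined by a finite $\mathcal{A}$-partition of $\X$ together with a $\tilde\Y$-valued assignment to the pieces. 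I claim this sequence $\{\tilde f_i\}$ works.

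First I would fix $\ProcX \in \UKC$ and a measurable $f : \X \to \Y$, and define the almost-sure event $K$ on which the conclusion holds. Let $K$ be the intersection of: (i) for every $i \in \nats$, $M_i := \sup_{t \in \nats} \loss(\tilde f_i(X_t), f(X_t)) < \infty$, which follows from Lemma~\ref{lem:unbounded-effectively-bounded} applied to the measurable function $x \mapsto \loss(\tilde f_i(x), f(x))$; (ii) for every $i,n \in \nats$, the pairwise-disjoint sets $A_{i,l} \cap C_{n,m}$ ($l \le k_i$, $m \in \nats$) are hit by $\ProcX$ for only finitely many pairs $(m,l)$, by Condition~\ref{con:ukc}, where $A_{i,1},\dots,A_{i,k_i}$ are the level sets of $\tilde f_i$ (with values $y_{i,1},\dots,y_{i,k_i} \in \tilde\Y$) and $C_{n,m} := f^{-1}(B_{n,m})$, with $\{B_{n,m}\}_{m}$ the disjointification of the closed $(1/n)$-balls about the points of $\tilde\Y$ (so $\{B_{n,m}\}_m$ partitions $\Y$); and (iii) for every $i,n,m,l \in \nats$, $\ProcX \cap \hat G_{i,n,m,l} = \ProcX \cap (A_{i,l} \cap C_{n,m})$, where $\hat G_{i,n,m,l} \in \T_1$ is obtained by applying Lemma~\ref{lem:unbounded-approximating-sets} to the $\Borel$ set $A_{i,l} \cap C_{n,m}$. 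Each listed event has probability one, and there are only countably many of them, so $\P(K) = 1$. Crucially, fixing $f$ \emph{before} forming this intersection is what keeps it countable, since Lemma~\ref{lem:unbounded-approximating-sets} supplies an approximant, with its own exceptional null set, only one $\Borel$ set at a time.

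Next, working on $K$, I would fix $\eps > 0$ and $i \in \nats$, choose $n$ with $1/n \le \eps$, and set $P = \{(m,l) : \ProcX \cap A_{i,l} \cap C_{n,m} \ne \emptyset\}$, which is finite by (ii). Define $\tilde f_j$ as follows: for $x \in A_{i,l}$, if there is an $m$ with $(m,l) \in P$ and $x \in \hat G_{i,n,m,l}$, put $\tilde f_j(x) = \tilde y_{m^{*}}$ for the least such $m^{*}$, and otherwise put $\tilde f_j(x) = y_{i,l} = \tilde f_i(x)$. This $\tilde f_j$ is an $\mathcal{A}$-measurable $\tilde\Y$-valued simple function, hence coincides with some $\tilde f_j$ in the enumeration. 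For the data bound, a point $X_t$ lies in a unique $A_{i,l}$ and, since $f(X_t) \in B_{n,m}$ for a unique $m$, in a unique $C_{n,m}$, so $(m,l) \in P$; by (iii), $X_t \in \hat G_{i,n,m,l}$, and by (iii) together with $C_{n,m} \cap C_{n,m'} = \emptyset$ it lies in no $\hat G_{i,n,m',l}$ with $m' \ne m$, whence $\tilde f_j(X_t) = \tilde y_m$ and $\loss(\tilde f_j(X_t), f(X_t)) \le 1/n \le \eps$. For the uniform bound, any $x \in A_{i,l}$ has $\tilde f_i(x) = y_{i,l}$ and $\tilde f_j(x) \in \{y_{i,l}\} \cup \{\tilde y_m : (m,l) \in P\}$; in the first case the distance is $0$, and for any $(m,l) \in P$, picking a witness $x_0 \in \ProcX \cap A_{i,l} \cap C_{n,m}$ gives $\loss(\tilde y_m, y_{i,l}) \le \loss(\tilde y_m, f(x_0)) + \loss(f(x_0), \tilde f_i(x_0)) \le 1/n + M_i \le M_i + \eps$, so $\sup_{x \in \X} \loss(\tilde f_j(x), \tilde f_i(x)) \le M_i + \eps$, as required.

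Finally, I would note that it suffices to establish the two inequalities for $\eps$ of the form $1/n$, since a general $\eps$ is handled by any smaller $1/n$; thus all countably many relevant pairs $(\eps, i)$ are accommodated on the single event $K$. The step I expect to be the main obstacle is the simultaneous-approximation bookkeeping behind (ii)--(iii): Lemma~\ref{lem:unbounded-approximating-sets} only approximates one set at a time, so one must be disciplined about fixing $f$ first and quantifying over only the countably many sets $A_{i,l} \cap C_{n,m}$ before applying a union bound. The triangle-inequality estimates in the verification, while a little fiddly, are routine once the data-disjointness of the $\hat G_{i,n,m,l}$ and the existence of a witness point for each $(m,l) \in P$ are in hand.
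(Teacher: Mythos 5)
Your proposal is correct and follows the same overall strategy as the paper: build $\{\tilde f_i\}$ as the countable family of simple functions with values in a dense countable $\tilde\Y\subseteq\Y$ that are measurable with respect to the algebra generated by the $\T_1$ of Lemma~\ref{lem:unbounded-approximating-sets}, fix $f$ before forming the probability-one event (so that only countably many applications of Lemma~\ref{lem:unbounded-approximating-sets} and Condition~\ref{con:ukc} are needed), construct the approximant $\tilde f_j$ on that event, and verify the two bounds with triangle-inequality estimates anchored at witness points of $\ProcX$. The one genuine difference is bookkeeping: the paper approximates the preimages $C_k=f^{-1}(B_{\eps,k})$ alone, then disjointifies the approximants and builds the level sets $\tilde A_k$ of $\tilde f_j$ from the intersections $A_k\cap A_{k'}''$, whereas you approximate the already-disjoint refinement $A_{i,l}\cap C_{n,m}$ directly, which eliminates both the disjointification step and the paper's $\tilde A_k$ construction. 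Your extra event (i) invoking Lemma~\ref{lem:unbounded-effectively-bounded} to force $M_i<\infty$ is harmless but unnecessary, since when $M_i=\infty$ the first displayed inequality in the lemma is vacuous and only the second bound is binding.
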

\begin{proof}
The construction, and first half of this proof, will proceed analogously to the proof of Lemma~\ref{lem:approximating-functions}, but with a few important changes.
Specifically, let $\T_{1}$ be as in Lemma~\ref{lem:unbounded-approximating-sets},
let $\tilde{\Y}$ be a countable subset of $\Y$ with $\sup\limits_{y \in \Y} \inf\limits_{\tilde{y} \in \tilde{\Y}} \loss(\tilde{y},y) = 0$ (which exists by separability of $(\Y,\loss)$), 
let $A_{0} = \X$, let $y_{0}$ be an arbitrary value in $\Y$,
and for any $k \in \nats$, any sequence $\{A_{i}\}_{i=1}^{k}$ in $\Borel$, and any sequence $\{y_{i}\}_{i=1}^{k}$ in $\Y$, 
define $\tilde{f}(x;\{y_{i}\}_{i=1}^{k},\{A_{i}\}_{i=1}^{k}) = y_{\max\{i \in \{0,\ldots,k\}:x \in A_{i}\}}$.
Define $\T_{2} = \left\{ \bigcap_{i=1}^{k} A_{i} : k \in \nats, A_{1},\ldots,A_{k} \in \T_{1} \right\}$: the finite intersections of sets in $\T_{1}$.
This is a countable set since $\T_{1}$ is countable.
Then define 
\begin{equation*}
\tilde{\F} = \left\{ \tilde{f}(\cdot;\{y_{i}\}_{i=1}^{k}, \{A_{i}\}_{i=1}^{k}) : k \in \nats, \forall i \leq k, y_{i} \in \tilde{\Y}, A_{i} \in \T_{2} \right\},
\end{equation*}
which is a countable set (since $\tilde{\Y}$ and $\T_{2}$ are countable).
Enumerate the elements of $\tilde{\Y}$ as $\tilde{y}_{1},\tilde{y}_{2},\ldots$; 
for simplicity, we will suppose this sequence is infinite, which is always the case if $\maxloss = \infty$,
and otherwise can be achieved by repeating elements if necessary in the general case.
As in the proof of Lemma~\ref{lem:approximating-functions}, 
for each $\eps > 0$, let $B_{\eps,1} = \{ y \in \Y : \loss(\tilde{y}_{1},y) \leq \eps \}$ 
and for each integer $i \geq 2$ inductively define $B_{\eps,i} = \{ y \in \Y : \loss(\tilde{y}_{i},y) \leq \eps \} \setminus \bigcup\limits_{j=1}^{i-1} B_{\eps,j}$.
For each $\eps > 0$, this defines a disjoint sequence $\{B_{\eps,i}\}_{i=1}^{\infty}$ in $\Borel_{y}$ with $\bigcup\limits_{i=1}^{\infty} B_{\eps,i} = \Y$.

Fix any $\ProcX \in \UKC$, any measurable $f : \X \to \Y$, and any $\eps > 0$.
For each $i \in \nats$, define $C_{\eps,i} = f^{-1}(B_{\eps,i})$, 
an element of $\Borel$ (by measurability of $f$ and $B_{\eps,i}$).
Note that $\bigcup\limits_{i=1}^{\infty} C_{\eps,i} = f^{-1}\!\left( \bigcup\limits_{i=1}^{\infty} B_{\eps,i} \right) = f^{-1}(\Y) = \X$, 
and since the $B_{\eps,i}$ sets are disjoint over the values of $i$, the sets $C_{\eps,i}$ are also disjoint over $i$.
It follows that $\lim\limits_{k \to \infty} \bigcup\limits_{i=k}^{\infty} C_{\eps,i} = \emptyset$, with 
$\bigcup\limits_{i=k}^{\infty} C_{\eps,i}$ nonincreasing in $k$, so that Condition~\ref{con:ukc} implies 
that, on an event $E_{\eps,1}$ of probability one, $\exists k_{0} \in \nats$ s.t. $\ProcX \cap \bigcup\limits_{i=k_{0}+1}^{\infty} C_{\eps,i} = \emptyset$.
Since $\bigcup\limits_{i=1}^{\infty} C_{\eps,i} = \X$, this also means $\ProcX \cap \bigcup\limits_{i=1}^{k_{0}} C_{\eps,i} = \ProcX$.
Furthermore, by the union bound and the defining property of $\T_{1}$ from Lemma~\ref{lem:unbounded-approximating-sets},
on an event $E_{\eps,2}$ of probability one, $\forall i \in \nats$, $\exists \tilde{A}_{\eps,i} \in \T_{1}$ with $\ProcX \cap \tilde{A}_{\eps,i} = \ProcX \cap C_{\eps,i}$.
This also means that, when the events $E_{\eps,1}$ and $E_{\eps,2}$ occur simultaneously,  
we have $\ProcX \cap \bigcup\limits_{i=1}^{k_{0}} \tilde{A}_{\eps,i} = \ProcX$.

At this point, we may note that the function $\tilde{f}(\cdot; \{\tilde{y}_{i}\}_{i=1}^{k_{0}}, \{ \tilde{A}_{\eps,i} \}_{i=1}^{k_{0}} )$ would suffice as a specification of $\tilde{f}_{2}$ 
for the purpose of satisfying the \emph{second} requirement in the lemma: that is, $\sup\limits_{t \in \nats} \loss\!\left( \tilde{f}_{2}(X_{t}), f(X_{t}) \right) \leq \eps$.
Indeed if this were the only requirement, we could have used $\T_{1}$ in place of $\T_{2}$ in the specification of $\tilde{\F}$ above.
However, we must modify this function in order to also satisfy the first requirement, constraining the supremum distance between $\tilde{f}_{2}$ and a given $\tilde{f}_{1}$ in $\tilde{\F}$.

Toward this end, supposing the event $E_{\eps,1} \cap E_{\eps,2}$ occurs and that $k_{0}$ and $\{\tilde{A}_{\eps,i}\}_{i=1}^{k_{0}}$ are as above, 
fix any function $\tilde{f}_{1} \in \tilde{\F}$, and let $k_{1} \in \nats$, $\{y_{i}\}_{i=1}^{k_{1}} \in \tilde{\Y}^{k_{1}}$, and $\{A_{i}\}_{i=1}^{k_{1}} \in \T_{2}^{k_{1}}$ 
be such that $\tilde{f}_{1}(\cdot) = \tilde{f}(\cdot; \{y_{i}\}_{i=1}^{k_{1}},\{A_{i}\}_{i=1}^{k_{1}})$.  
Let $A_{0}$ and $y_{0}$ be as specified above, and define $\tilde{A}_{\eps,0} = \X$ and $\tilde{y}_{0} = y_{0}$.
Let $k_{2} = (k_{0}+1)(k_{1}+1)-1$.
For each $i \in \{0,\ldots,k_{0}\}$ and $j \in \{0,1,\ldots,k_{1}\}$, 
define $\hat{A}_{j(k_{0}+1)+i} = \tilde{A}_{\eps,i} \cap A_{j}$.
Also, for each $i \in \{0,\ldots,k_{0}\}$ and $j \in \{0,1,\ldots,k_{1}\}$, 
define $\hat{D}_{i,j} = \hat{A}_{j(k_{0}+1)+i} \setminus \bigcup\limits_{j^{\prime}=j(k_{0}+1)+i+1}^{k_{2}} \hat{A}_{j^{\prime}}$; 
if 
$\ProcX \cap \hat{D}_{i,j} \neq \emptyset$, 
define $\hat{y}_{j (k_{0}+1)+i} = \tilde{y}_{i}$,
and otherwise define $\hat{y}_{j (k_{0}+1)+i} = y_{j}$.
Note that $\hat{A}_{1},\ldots,\hat{A}_{k_{2}}$ are elements of $\T_{2}$
and $\hat{y}_{1},\ldots,\hat{y}_{k_{2}}$ are elements of $\tilde{\Y}$, 
and therefore, defining $\tilde{f}_{2}(\cdot) = \tilde{f}(\cdot;\{\hat{y}_{i}\}_{i=1}^{k_{2}},\{\hat{A}_{i}\}_{i=1}^{k_{2}})$, 
we have that $\tilde{f}_{2} \in \tilde{\F}$.

Now, for every $x \in \X$, denote by $(\i(x),\j(x))$ the unique value in $\{0,\ldots,k_{0}\}\times\{0,\ldots,k_{1}\}$ 
such that $\j(x)(k_{0}+1)+\i(x) = \max\!\left\{ j^{\prime} \in \{0,\ldots,k_{2}\} : x  \in \hat{A}_{j^{\prime}} \right\}$ (noting that $\hat{A}_{0} = \X$, so that this is always well-defined).
By definition, we have $\tilde{f}_{2}(x) = \hat{y}_{\j(x)(k_{0}+1)+\i(x)}$ (noting that $\hat{y}_{0} = y_{0}$, so that this equality holds even when $(\i(x),\j(x))=(0,0)$).
Since $\tilde{A}_{\eps,0} = \X$, it holds that every $j \in \{0,\ldots,k_{1}\}$ has $\hat{A}_{j(k_{0}+1)} = A_{j}$.
In particular, this implies that, for every $x \in \X$, it holds that $\j(x) = \max\{ j^{\prime} \in \{0,\ldots,k_{1}\} : x \in A_{j^{\prime}} \}$: 
that is, by definition of $(\i(x),\j(x))$, we have $x \in \hat{A}_{\j(x)(k_{0}+1)+\i(x)} \subseteq A_{\j(x)}$, 
and maximality of $\j(x)(k_{0}+1)+\i(x)$ implies that every $j^{\prime} \in \{\j(x)+1,\ldots,k_{1}\}$ 
has $x \notin \hat{A}_{j^{\prime} (k_{0}+1)} = A_{j^{\prime}}$.
Moreover, for any $i \in \{0,\ldots,k_{0}\}$, if $x \in \tilde{A}_{\eps,i}$, then since $x \in A_{\j(x)}$ as well, 
we have $x \in \hat{A}_{\j(x)(k_{0}+1)+i} = \tilde{A}_{\eps,i} \cap A_{\j(x)}$; 
in particular, this is true of the \emph{largest} $i \in \{0,\ldots,k_{0}\}$ with $x \in \tilde{A}_{\eps,i}$.
It immediately follows that $\i(x) = \max\{ i^{\prime} \in \{0,\ldots,k_{0}\} : x \in \tilde{A}_{\eps,i^{\prime}} \}$.

Now note that, for any $t \in \nats$, 
since $X_{t} \in \hat{D}_{\i(X_{t}),\j(X_{t})}$, 
we have $\ProcX \cap \hat{D}_{\i(X_{t}),\j(X_{t})} \neq \emptyset$.
Therefore, by definition, $\hat{y}_{\j(X_{t})(k_{0}+1)+\i(X_{t})} = \tilde{y}_{\i(X_{t})}$.
Furthermore, since $\hat{A}_{\j(X_{t})(k_{0}+1)+\i(X_{t})} \subseteq \tilde{A}_{\eps,\i(X_{t})}$, we have $X_{t} \in \tilde{A}_{\eps,\i(X_{t})}$,
and since $\ProcX \cap \bigcup\limits_{i=1}^{k_{0}} \tilde{A}_{\eps,i} = \ProcX$, there exists $i \in \{1,\ldots,k_{0}\}$ with $X_{t} \in \tilde{A}_{\eps,i}$.
Therefore, the fact (established above) that $\i(X_{t}) = \max\{ i^{\prime} \in \{0,\ldots,k_{0}\} : X_{t} \in \tilde{A}_{\eps,i^{\prime}} \}$
implies $\i(X_{t}) \neq 0$.
Since every $i \in \nats$ has $\ProcX \cap \tilde{A}_{\eps,i} = \ProcX \cap C_{\eps,i}$, this further implies that 
$X_{t} \in C_{\eps,\i(X_{t})}$, and therefore $\loss\!\left(\tilde{y}_{\i(X_{t})},f(X_{t})\right) \leq \eps$, 
so that altogether we have $\loss\!\left(\tilde{f}_{2}(X_{t}),f(X_{t})\right) \leq \eps$.
Since this is true of every $t \in \nats$, we conclude that 
$\sup\limits_{t \in \nats} \loss\!\left(\tilde{f}_{2}(X_{t}),f(X_{t})\right) \leq \eps$.

Next, note that for any $x \in \X$, since (as established above) $\j(x) = \max\{ j^{\prime} \in \{0,\ldots,k_{1}\} : x \in A_{j^{\prime}} \}$, 
we have $\tilde{f}_{1}(x) = y_{\j(x)}$.
In particular, if 
$\ProcX \cap \hat{D}_{\i(x),\j(x)} = \emptyset$,
then, by definition, we have $\tilde{f}_{2}(x) = y_{\j(x)}$, 
so that in this case $\loss\!\left( \tilde{f}_{2}(x), \tilde{f}_{1}(x) \right) = \loss( y_{\j(x)}, y_{\j(x)} ) = 0$.
On the other hand, if 
$\ProcX \cap \hat{D}_{\i(x),\j(x)} \neq \emptyset$, 
then, by definition, we have $\tilde{f}_{2}(x) = \tilde{y}_{\i(x)}$.
In this case, letting $t \in \nats$ be such that 
$X_{t} \in \hat{D}_{\i(x),\j(x)}$, 
it immediately follows that $(\i(X_{t}),\j(X_{t})) = (\i(x),\j(x))$, 
so that we also have $\tilde{f}_{1}(X_{t}) = y_{\j(x)}$ and $\tilde{f}_{2}(X_{t}) = \tilde{y}_{\i(x)}$.
Thus, in this case we have 
$\loss\!\left( \tilde{f}_{2}(x), \tilde{f}_{1}(x) \right)
= \loss\!\left( \tilde{y}_{\i(x)},y_{\j(x)} \right) 
= \loss\!\left( \tilde{f}_{2}(X_{t}), \tilde{f}_{1}(X_{t}) \right)$.
Since every $x \in \X$ satisfies one of these two cases, and since each $X_{t}$ itself takes a value in $\X$, we conclude that
\begin{equation*}
\sup\limits_{x \in \X} \loss\!\left( \tilde{f}_{2}(x), \tilde{f}_{1}(x) \right) = \sup\limits_{t \in \nats} \loss\!\left( \tilde{f}_{2}(X_{t}), \tilde{f}_{1}(X_{t}) \right).
\end{equation*}
Combining this with the relaxed triangle inequality and 
the fact (established above) that $\sup\limits_{t \in \nats} \loss\!\left(\tilde{f}_{2}(X_{t}),f(X_{t})\right) \leq \eps$, 
we conclude that 
\begin{align*}
\sup\limits_{x \in \X} \loss\!\left( \tilde{f}_{2}(x), \tilde{f}_{1}(x) \right) 
& \leq
\triconst \sup\limits_{t \in \nats} \loss\!\left( \tilde{f}_{2}(X_{t}), f(X_{t}) \right) 
+ \triconst \sup\limits_{t \in \nats} \loss\!\left( \tilde{f}_{1}(X_{t}), f(X_{t}) \right) 
\\ & \leq
\triconst \eps 
+ \triconst \sup\limits_{t \in \nats} \loss\!\left( \tilde{f}_{1}(X_{t}), f(X_{t}) \right).
\end{align*}

The above results hold for any fixed $\eps > 0$.
Now letting $\eps_{k}^{\prime} = 2^{-k}$ for each $k \in \nats$,
we have that on the event $\bigcap\limits_{k = 1}^{\infty} \left( E_{\eps_{k}^{\prime},1} \cap E_{\eps_{k}^{\prime},2} \right)$,
for any $\tilde{f}_{1} \in \tilde{\F}$ and any $\eps > 0$, letting $k = \left\lceil \log_{2}((\triconst/\eps) \lor 2) \right\rceil$,
we have that $\exists \tilde{f}_{2} \in \tilde{\F}$ with 
\begin{equation*}
\sup_{t \in \nats} \loss(\tilde{f}_{2}(X_{t}),f(X_{t})) \leq \eps_{k}^{\prime} \leq \eps,
\end{equation*}
and 
\begin{equation*}
\sup_{x \in \X} \loss(\tilde{f}_{2}(x),\tilde{f}_{1}(x)) 
\leq \triconst \eps_{k}^{\prime} + \triconst \sup_{t \in \nats} \loss(\tilde{f}_{1}(X_{t}),f(X_{t})) 
\leq \eps + \triconst \sup_{t \in \nats} \loss(\tilde{f}_{1}(X_{t}),f(X_{t})).
\end{equation*}
Noting that the event $\bigcap\limits_{k=1}^{\infty} \left( E_{\eps_{k}^{\prime},1} \cap E_{\eps_{k}^{\prime},2} \right)$ has probability one (by the union bound)
completes the proof.
\ignore{
-----------------------------------------------------------------------

Let $\T_{1}$ be as in Lemma~\ref{lem:unbounded-approximating-sets},
and let $\tilde{y}_{i}$ and $B_{\eps,i}$ be defined as in the proof of Lemma~\ref{lem:approximating-sequence},
for each $i \in \nats$ and $\eps > 0$.  Also fix an arbitrary value $\tilde{y}_{0} \in \Y$.
Let $\T_{2}$ denote the algebra generated by $\T_{1}$.
Since $\T_{1}$ is countable, one can easily verify that $\T_{2}$ is countable as well \citep*[see e.g.,][page 5]{bogachev:07},
and by definition, has $\T_{1} \subseteq \T_{2}$.
Furthermore, since $\T_{1} \subseteq \Borel$ and $\Borel$ is an algebra, 
minimality of the algebra $\T_{2}$ implies $\T_{2} \subseteq \Borel$ \citep*[see e.g.,][page 86]{dudley:03}.
Now for each $k_{0} \in \nats$ and disjoint sets $A_{1},\ldots,A_{k_{0}} \in \Borel$, let $A_{0} = \X \setminus \bigcup\limits_{k=1}^{k_{0}} A_{k}$,
and for any $x \in \X$, define $\tilde{f}(x;\{A_{k}\}_{k=1}^{k_{0}}) = \tilde{y}_{k}$ for the unique value $k \in \{0,\ldots,k_{0}\}$ with $x \in A_{k}$.
One can easily verify that $\tilde{f}(\cdot; \{A_{k}\}_{k=1}^{k_{0}})$ is a measurable function.
Now define $\tilde{\F}$ as the set of all functions $\tilde{f}(\cdot; \{A_{k}\}_{k=1}^{k_{0}})$ with $k_{0} \in \nats$ and $A_{1},\ldots,A_{k_{0}}$ disjoint elements of $\T_{2}$.
Note that, given an indexing of $\T_{2}$ by $\nats$, we can index $\tilde{\F}$ by finite tuples 
of integers (the indices of the corresponding $A_{i}$ sets within $\T_{2}$), of which there are countably many,
so that $\tilde{\F}$ is countable. 
We may therefore enumerate the elements of $\tilde{\F}$ as $\tilde{f}_{1}, \tilde{f}_{2}, \ldots$.
For simplicity, we will suppose this sequence is infinite (which can always be achieved by repetition, if necessary).

Fix any $\ProcX \in \UKC$, any measurable $f : \X \to \Y$, and any $\eps > 0$.
For each $k \in \nats$, define $C_{k} = f^{-1}(B_{\eps,k})$.
Since $\lim\limits_{k_{1} \to \infty} \bigcup\limits_{k=k_{1}}^{\infty} C_{k} = \emptyset$ and $\ProcX \in \UKC$,
on an event $E_{\eps,1}$ of probability one, $\exists k_{1} \in \nats$ s.t. $\ProcX \cap \bigcup\limits_{k=k_{1}+1}^{\infty} C_{k} = \emptyset$.
Furthermore, by the union bound and the defining property of $\T_{1}$ from Lemma~\ref{lem:unbounded-approximating-sets},
on an event $E_{\eps,2}$ of probability one, $\forall k \in \nats$, $\exists A_{k}^{\prime} \in \T_{1}$ with $\ProcX \cap A_{k}^{\prime} = \ProcX \cap C_{k}$.
Now note that for any $k,k^{\prime} \in \nats$, 
$\ProcX \cap (A_{k}^{\prime} \cap A_{k^{\prime}}^{\prime})$
is simply the subsequence of $\ProcX$ consisting of entries $X_{t}$ 
appearing in \emph{both} subsequences $\ProcX \cap A_{k}^{\prime}$ and $\ProcX \cap A_{k^{\prime}}^{\prime}$.
Thus, since the sets $\{C_{k}\}_{k=1}^{\infty}$ are disjoint, and on the event $E_{\eps,2}$ every $k \in \nats$ has $\ProcX \cap A_{k}^{\prime} = \ProcX \cap C_{k}$,
we have that on $E_{\eps,2}$ every $k,k^{\prime} \in \nats$ with $k \neq k^{\prime}$ satisfy 
$\ProcX \cap (A_{k}^{\prime} \cap A_{k^{\prime}}^{\prime}) = \ProcX \cap (C_{k} \cap C_{k^{\prime}}) = \emptyset$,
so that any $k \in \nats \setminus \{1\}$ has $\ProcX \cap \bigcup\limits_{k^{\prime} = 1}^{k-1} \left( A_{k}^{\prime} \cap A_{k^{\prime}}^{\prime} \right) = \emptyset$.
Therefore, on the event $E_{\eps,2}$, defining $A_{1} = A_{1}^{\prime}$, 
and $A_{k} = A_{k}^{\prime} \setminus \bigcup\limits_{k^{\prime} = 1}^{k-1} A_{k^{\prime}}^{\prime} = A_{k}^{\prime} \setminus \bigcup\limits_{k^{\prime} = 1}^{k-1} \left( A_{k}^{\prime} \cap A_{k^{\prime}}^{\prime}\right)$ for every $k \in \nats \setminus \{1\}$,
we have that 
$\forall k \in \nats$, 
$\ProcX \cap A_{k} = \ProcX \cap A_{k}^{\prime} = \ProcX \cap C_{k}$.
Note that the sets $\{A_{k}\}_{k=1}^{\infty}$ are disjoint elements of $\T_{2}$.

Now fix any $i \in \nats$, and let $k_{2} \in \nats$ and $\{A_{k}^{\prime\prime}\}_{k=1}^{k_{2}} \in \T_{2}^{k_{2}}$ (disjoint) 
be such that $\tilde{f}_{i}(\cdot) = \tilde{f}\!\left(\cdot; \{A_{k}^{\prime\prime}\}_{k=1}^{k_{2}}\right)$.
For simplicity, define $k_{0} = \max\{ k_{1}, k_{2} \}$, 
and (if $k_{0} > k_{2}$) for any $k \in \{k_{2}+1,\ldots,k_{0}\}$ define $A_{k}^{\prime\prime} = \emptyset$; 
in particular, $\tilde{f}_{i}(\cdot) = \tilde{f}\left( \cdot; \{A_{k}^{\prime\prime}\}_{k=1}^{k_{0}} \right)$ as well.
Also define $A_{0}^{\prime\prime} = \X \setminus \bigcup\limits_{k=1}^{k_{0}} A_{k}^{\prime\prime}$ and $A_{0} = \X \setminus \bigcup\limits_{k=1}^{k_{0}} A_{k}$.
Now for each $k \in \{1,\ldots,k_{0}\}$, define
\begin{align*}
\tilde{A}_{k} = & \bigcup \left\{ A_{k} \cap A_{k^{\prime}}^{\prime\prime} : \ProcX \cap \left( A_{k} \cap A_{k^{\prime}}^{\prime\prime} \right) \neq \emptyset, k^{\prime} \in \{0,\ldots,k_{0}\} \right\} \cup
\\ & \bigcup \left\{ A_{k^{\prime}} \cap A_{k}^{\prime\prime} : \ProcX \cap \left( A_{k^{\prime}} \cap A_{k}^{\prime\prime} \right) = \emptyset,  k^{\prime} \in \{0,\ldots,k_{0}\} \right\}.
\end{align*}
Note that $\tilde{A}_{1},\ldots,\tilde{A}_{k_{0}}$ are elements of $\T_{2}$.
Furthermore, disjointness of the sets $\{A_{k}\}_{k=0}^{k_{0}}$, and disjointness of the sets $\{A_{k}^{\prime\prime}\}_{k=0}^{k_{0}}$,
together imply that the sets $\{A_{k} \cap A_{k^{\prime}}^{\prime\prime}, k,k^{\prime} \in \{0,\ldots,k_{0}\} \}$ are disjoint.
From this and the definition of the $\tilde{A}_{k}$ sets, it easily follows that the sets $\{\tilde{A}_{k}\}_{k=1}^{k_{0}}$ are disjoint. 
Thus, on the event $E_{\eps,1} \cap E_{\eps,2}$, $\exists j \in \nats$ such that $\tilde{f}_{j}(\cdot) = \tilde{f}\left( \cdot ; \{\tilde{A}_{k}\}_{k=1}^{k_{0}} \right)$.

Now suppose the event $E_{\eps,1} \cap E_{\eps,2}$ holds and that we have constructed this function $\tilde{f}_{j}$ as above.
Then for every $k \in \{1,\ldots,k_{0}\}$, 
since 
\begin{equation*}
\ProcX \cap \bigcup \left\{ A_{k^{\prime}} \cap A_{k}^{\prime\prime} : \ProcX \cap \left( A_{k^{\prime}} \cap A_{k}^{\prime\prime} \right) = \emptyset, k^{\prime} \in \{0,\ldots,k_{0}\} \right\} = \emptyset,
\end{equation*}
we have 
\begin{equation*}
\ProcX \cap \tilde{A}_{k} 
= \ProcX \cap \bigcup \left\{ A_{k} \cap A_{k^{\prime}}^{\prime\prime} : \ProcX \cap \left( A_{k} \cap A_{k^{\prime}}^{\prime\prime} \right) \neq \emptyset, k^{\prime} \in \{0,\ldots,k_{0}\} \right\},
\end{equation*}
and since 
\begin{equation*}
\ProcX \cap \bigcup \left\{ A_{k} \cap A_{k^{\prime}}^{\prime\prime} : \ProcX \cap \left( A_{k} \cap A_{k^{\prime}}^{\prime\prime} \right) = \emptyset, k^{\prime} \in \{0,\ldots,k_{0}\} \right\} = \emptyset,
\end{equation*}
this also implies 
\begin{equation}
\label{eqn:unbounded-approx-XtildA-eq-XC}
\ProcX \cap \tilde{A}_{k} 
= \ProcX \cap \bigcup \left\{ A_{k} \cap A_{k^{\prime}}^{\prime\prime} : k^{\prime} \in \{0,\ldots,k_{0}\} \right\}
= \ProcX \cap \left( A_{k} \cap \bigcup_{k^{\prime}=0}^{k_{0}} A_{k^{\prime}}^{\prime\prime} \right)
= \ProcX \cap A_{k} 
= \ProcX \cap C_{k}.
\end{equation}
In particular, since $k_{0} \geq k_{1}$, this implies that every $t \in \nats$ has $X_{t}$ contained in exactly one set $\tilde{A}_{k}$ with $k \in \{1,\ldots,k_{0}\}$.
Therefore, 
\begin{equation*}
\sup_{t \in \nats} \loss\!\left( \tilde{f}_{j}(X_{t}), f(X_{t}) \right) =\sup_{t \in \nats} \sum_{k=1}^{k_{0}} \ind_{\tilde{A}_{k}}(X_{t}) \loss\!\left( \tilde{f}_{j}(X_{t}), f(X_{t}) \right).
\end{equation*}
By the definition of $\tilde{f}_{j}$, every $X_{t} \in \tilde{A}_{k}$ has $\tilde{f}_{j}(X_{t}) = \tilde{y}_{k}$,
so that the above equals
\begin{equation*}
\sup_{t \in \nats} \sum_{k=1}^{k_{0}} \ind_{\tilde{A}_{k}}(X_{t}) \loss\!\left( \tilde{y}_{k}, f(X_{t}) \right).
\end{equation*}
Furthermore, \eqref{eqn:unbounded-approx-XtildA-eq-XC} implies
$\ind_{\tilde{A}_{k}}(X_{t}) = \ind_{C_{k}}(X_{t})$ for every $t \in \nats$.  Therefore,
the above expression equals
\begin{equation*}
\sup_{t \in \nats} \sum_{k=1}^{k_{0}} \ind_{C_{k}}(X_{t}) \loss\!\left( \tilde{y}_{k}, f(X_{t}) \right).
\end{equation*}
By the definition of $C_{k}$, every $X_{t} \in C_{k}$ has $\loss\!\left( \tilde{y}_{k}, f(X_{t}) \right) \leq \eps$,
so that the above is at most
\begin{equation*}
\sup_{t \in \nats} \sum_{k=1}^{k_{0}} \ind_{C_{k}}(X_{t}) \eps
= \eps.
\end{equation*}
Altogether, we have that
\begin{equation*}
\sup_{t \in \nats} \loss\!\left( \tilde{f}_{j}(X_{t}), f(X_{t}) \right) \leq \eps.
\end{equation*}

Next, continuing to suppose $E_{\eps,1} \cap E_{\eps,2}$ holds and that $\tilde{f}_{j}$ is as above, fix any $x \in \X$.
First, consider the case of $x \notin \bigcup\limits_{k=1}^{k_{0}} \tilde{A}_{k}$.  By the definition of $\tilde{f}_{j}$,
we have $\tilde{f}_{j}(x) = \tilde{y}_{0}$.  Also note that every $k,k^{\prime} \in \{1,\ldots,k_{0}\}$
have $A_{k} \cap A_{k^{\prime}}^{\prime\prime} \subseteq \tilde{A}_{k} \cup \tilde{A}_{k^{\prime}}$, so that 
$x \notin A_{k} \cap A_{k^{\prime}}^{\prime\prime}$ for every such $k,k^{\prime}$.  
Furthermore, since $k_{0} \geq k_{1}$ and $\ProcX \cap A_{k} = \ProcX \cap C_{k}$ for every $k \in \nats$,
we have that $\ProcX \cap A_{0} =\emptyset$.  It follows (from the definition of $\tilde{A}_{k}$) that 
$A_{0} \cap A_{k}^{\prime\prime} \subseteq \tilde{A}_{k}$ for every $k \in \{1,\ldots,k_{0}\}$,
so that $x \notin A_{0} \cap A_{k}^{\prime\prime}$ for every such $k$.
Since $\bigcup\limits_{k,k^{\prime} \in \{0,\ldots,k_{0}\}} A_{k} \cap A_{k^{\prime}}^{\prime\prime} = \X$, 
the only remaining possibility is that 
$\exists k \in \{0,\ldots,k_{0}\}$ with $x \in A_{k} \cap A_{0}^{\prime\prime}$.
In particular, since this implies $x \in A_{0}^{\prime\prime}$, the definition of $\tilde{f}_{i}$ implies $\tilde{f}_{i}(x) = \tilde{y}_{0}$,
so that $\loss\!\left( \tilde{f}_{j}(x), \tilde{f}_{i}(x) \right) = \loss\!\left( \tilde{y}_{0}, \tilde{y}_{0} \right) = 0$.

Next, consider the remaining case, in which $\exists k \in \{1,\ldots,k_{0}\}$ with $x \in \tilde{A}_{k}$.
Now there are two subcases to consider.
In the first subcase,
$\exists k^{\prime} \in \{0,\ldots,k_{0}\}$ such that $x \in A_{k^{\prime}} \cap A_{k}^{\prime\prime}$ and $\ProcX \cap \left( A_{k^{\prime}} \cap A_{k}^{\prime\prime} \right) = \emptyset$.
In this case, since $x \in \tilde{A}_{k}$, we have $\tilde{f}_{j}(x) = \tilde{y}_{k}$,
and since $x \in A_{k}^{\prime\prime}$, we have $\tilde{f}_{i}(x) = \tilde{y}_{k}$ as well.
Therefore, $\loss\!\left(\tilde{f}_{j}(x),\tilde{f}_{i}(x)\right) = \loss\!\left(\tilde{y}_{k},\tilde{y}_{k}\right) = 0$.
In the second (and only remaining) subcase, we have that
$\exists k^{\prime} \in \{0,\ldots,k_{0}\}$ such that $x \in A_{k} \cap A_{k^{\prime}}^{\prime\prime}$ and $\ProcX \cap \left( A_{k} \cap A_{k^{\prime}}^{\prime\prime} \right) \neq \emptyset$.
In this case, by the definitions of $\tilde{f}_{i}$ and $\tilde{f}_{j}$, 
we have that $\tilde{f}_{j}(x) = \tilde{y}_{k}$ (due to $x \in \tilde{A}_{k}$) 
and $\tilde{f}_{i}(x) = \tilde{y}_{k^{\prime}}$ (due to $x \in A_{k^{\prime}}^{\prime\prime}$).
Also, since $\ProcX \cap \left( A_{k} \cap A_{k^{\prime}}^{\prime\prime} \right) \neq \emptyset$, 
we have that $\exists t_{k} \in \nats$ with $X_{t_{k}} \in A_{k} \cap A_{k^{\prime}}^{\prime\prime} \subseteq \tilde{A}_{k}$; 
in particular, this implies $\tilde{f}_{i}(X_{t_{k}}) = \tilde{y}_{k^{\prime}}$.  
Furthermore, \eqref{eqn:unbounded-approx-XtildA-eq-XC} implies $X_{t_{k}} \in C_{k}$, so that $\loss(f(X_{t_{k}}), \tilde{y}_{k}) \leq \eps$.  
Together with the relaxed triangle inequality, we have that 
\begin{align*}
\loss\!\left( \tilde{f}_{j}(x), \tilde{f}_{i}(x) \right)
& = \loss\!\left( \tilde{y}_{k}, \tilde{y}_{k^{\prime}} \right)
\leq \triconst \left( \loss\!\left( f(X_{t_{k}}), \tilde{y}_{k} \right) + \loss\!\left( \tilde{y}_{k^{\prime}}, f(X_{t_{k}}) \right) \right)
\\ & \leq \triconst \eps + \triconst \loss\!\left( \tilde{f}_{i}(X_{t_{k}}), f(X_{t_{k}}) \right)
\leq \triconst \eps + \triconst \sup_{t \in \nats} \loss\!\left( \tilde{f}_{i}(X_{t}), f(X_{t}) \right).
\end{align*}

Since the above arguments together cover \emph{every} $x \in \X$,
we have that, on $E_{\eps,1} \cap E_{\eps,2}$, 
\begin{equation*}
\sup_{x \in \X} \loss\!\left( \tilde{f}_{j}(x), \tilde{f}_{i}(x) \right) \leq \triconst \eps + \triconst \sup_{t \in \nats} \loss\!\left( \tilde{f}_{i}(X_{t}), f(X_{t}) \right).
\end{equation*}

The above results hold for any fixed $\eps > 0$.
Now letting $\eps_{k}^{\prime} = 2^{-k}$ for each $k \in \nats$,
we have that on the event $\bigcap\limits_{k = 1}^{\infty} \left( E_{\eps_{k}^{\prime},1} \cap E_{\eps_{k}^{\prime},2} \right)$,
for any $i \in \nats$ and any $\eps > 0$, letting $k = \left\lceil \log_{2}((\triconst/\eps) \lor 2) \right\rceil$,
we have that $\exists j \in \nats$ with 
\begin{equation*}
\sup_{t \in \nats} \loss(\tilde{f}_{j}(X_{t}),f(X_{t})) \leq \eps_{k}^{\prime} \leq \eps,
\end{equation*}
and 
\begin{equation*}
\sup_{x \in \X} \loss(\tilde{f}_{j}(x),\tilde{f}_{i}(x)) \leq \triconst \eps_{k}^{\prime} + \triconst \sup_{t \in \nats} \loss(\tilde{f}_{i}(X_{t}),f(X_{t})) \leq \eps + \triconst \sup_{t \in \nats} \loss(\tilde{f}_{i}(X_{t}),f(X_{t})).
\end{equation*}
Noting that the event $\bigcap\limits_{k=1}^{\infty} \left( E_{\eps_{k}^{\prime},1} \cap E_{\eps_{k}^{\prime},2} \right)$ has probability one (by the union bound)
completes the proof.}
\end{proof}

We are now ready to present a result establishing that any process satisfying Condition~\ref{con:ukc} necessarily 
admits strong universal inductive (and online) learning.  
This is analogous to Lemma~\ref{lem:kc-subset-suil} from the bounded case.
For clarity, we make explicit the fact that
this result holds for $\maxloss = \infty$, though it clearly also holds for $\maxloss < \infty$
(since $\UKC \subseteq \KC$).

\begin{lemma}
\label{lem:unbounded-ukc-subset-suil}
When $\maxloss = \infty$, $\UKC \subseteq \SUIL \cap \SUOL$. 
\end{lemma}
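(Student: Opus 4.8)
The plan is to analyze the inductive learning rule $\hat{f}_{n}$ of \eqref{eqn:unbounded-universal2-inductive-rule}, taking for $\{\tilde{f}_{i}\}_{i=1}^{\infty}$ the sequence furnished by Lemma~\ref{lem:unbounded-constrained-approximating-sequence}, and to show it is strongly universally consistent under every $\ProcX \in \UKC$ both as an inductive rule and, when viewed as an online rule (it has the signature $\X^{n}\times\Y^{n}\times\X\to\Y$), as an online rule. The one quantity that controls everything is $\Sigma_{n} := \sup_{t\in\nats}\loss\!\left(\tilde{f}_{\hat{i}_{n,k_{n}}(X_{1:n},\target(X_{1:n}))}(X_{t}),\target(X_{t})\right)$: once I show $\Sigma_{n}\to 0$ almost surely for every measurable $\target$, both conclusions are immediate. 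Indeed $\hat{\L}_{\ProcX}(\hat{f}_{n},\target;n)=\hat{\mu}_{\ProcX}\!\left(\loss(\tilde{f}_{\hat{i}_{n,k_{n}}}(\cdot),\target(\cdot))\right)\leq\Sigma_{n}$ gives $\UKC\subseteq\SUIL$; and running $\hat{f}_{n}$ online makes $\hat{\L}_{\ProcX}(\hat{f}_{\cdot},\target;n)$ the Ces\`aro average of the terms $\loss(\tilde{f}_{\hat{i}_{t,k_{t}}}(X_{t+1}),\target(X_{t+1}))\leq\Sigma_{t}\to 0$, so it tends to $0$, giving $\UKC\subseteq\SUOL$.

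Fix $\ProcX\in\UKC$ and measurable $\target:\X\to\Y$, and work on the intersection of the probability-one event of Lemma~\ref{lem:unbounded-constrained-approximating-sequence} (for this $\ProcX,\target$) with the event $\{\delta_{0}<\infty\}$, where $\delta_{0}=\sup_{t}\loss(\tilde{f}_{1}(X_{t}),\target(X_{t}))$, finite by Lemma~\ref{lem:unbounded-effectively-bounded} applied to the measurable map $x\mapsto\loss(\tilde{f}_{1}(x),\target(x))$. On this event I build, by induction on $k$, an \emph{ideal} index $\hat{i}^{\infty}_{k}$. Set $\hat{i}^{\infty}_{0}=1$; given $\hat{i}^{\infty}_{k-1}$ with $\sup_{t}\loss(\tilde{f}_{\hat{i}^{\infty}_{k-1}}(X_{t}),\target(X_{t}))\leq\eps_{k-1}$ (which holds for $k-1=0$ since $\eps_{0}=\infty>\delta_{0}$), apply Lemma~\ref{lem:unbounded-constrained-approximating-sequence} with $i=\hat{i}^{\infty}_{k-1}$ and $\eps=\eps_{k}$ to obtain $j_{k}$ with $\sup_{t}\loss(\tilde{f}_{j_{k}}(X_{t}),\target(X_{t}))\leq\eps_{k}$ and $\sup_{x}\loss(\tilde{f}_{j_{k}}(x),\tilde{f}_{\hat{i}^{\infty}_{k-1}}(x))\leq\eps_{k-1}+\eps_{k}$; let $I^{*}_{k}$ be the set of all indices meeting these two inequalities, and put $\hat{i}^{\infty}_{k}=\min I^{*}_{k}\leq j_{k}$, which again satisfies the induction hypothesis. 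Next I show, by induction on $k$, that there is a (data-dependent) time $N_{k}$ with $\hat{i}_{n,k}(X_{1:n},\target(X_{1:n}))=\hat{i}^{\infty}_{k}$ for all $n\geq N_{k}$. The base case $k=0$ is trivial. For the step, for $n\geq N_{k-1}$ the set defining $\hat{i}_{n,k}$ in \eqref{eqn:unbounded-universal2-hati} is $\{i\leq i_{n}:\max_{1\leq t\leq n}\loss(\tilde{f}_{i}(X_{t}),\target(X_{t}))\leq\eps_{k},\ \sup_{x}\loss(\tilde{f}_{i}(x),\tilde{f}_{\hat{i}^{\infty}_{k-1}}(x))\leq\eps_{k-1}+\eps_{k}\}$; for each \emph{fixed} $i$ the prefix maximum $\max_{1\leq t\leq n}\loss(\tilde{f}_{i}(X_{t}),\target(X_{t}))$ is nondecreasing in $n$ and increases to $\sup_{t}\loss(\tilde{f}_{i}(X_{t}),\target(X_{t}))$, so after finitely many steps this set intersected with $\{1,\dots,j_{k}\}$ equals $I^{*}_{k}\cap\{1,\dots,j_{k}\}$; since $j_{k}$ lies in it and $j_{k}\leq i_{n}$ for large $n$, the minimum equals $\min I^{*}_{k}=\hat{i}^{\infty}_{k}$. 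This step is exactly where one rules out the danger that the ``$\min$'' in \eqref{eqn:unbounded-universal2-hati} latches onto a small index that fits the finite prefix $X_{1:n}$ well but fits $\target$ poorly on the whole sequence.

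It remains to reconcile the fixed schedule $k_{n}\to\infty$ with the data-dependent stabilization times $N_{k}$. Set $K_{n}=\max\{k\leq k_{n}:N_{k}\leq n\}$; since $k_{n}\to\infty$ and each $N_{k}$ is finite, $K_{n}\to\infty$, and for $n\geq N_{K_{n}}$ one has $\hat{i}_{n,K_{n}}=\hat{i}^{\infty}_{K_{n}}$ with $\sup_{t}\loss(\tilde{f}_{\hat{i}^{\infty}_{K_{n}}}(X_{t}),\target(X_{t}))\leq\eps_{K_{n}}$. Directly from \eqref{eqn:unbounded-universal2-hati}, and regardless of whether the selection set there is empty at a given level, $\sup_{x}\loss(\tilde{f}_{\hat{i}_{n,k}}(x),\tilde{f}_{\hat{i}_{n,k-1}}(x))\leq\eps_{k-1}+\eps_{k}$ for every $k$; telescoping the triangle inequality over $k\in\{K_{n}+1,\dots,k_{n}\}$ gives $\sup_{x}\loss(\tilde{f}_{\hat{i}_{n,k_{n}}}(x),\tilde{f}_{\hat{i}^{\infty}_{K_{n}}}(x))\leq 2\sum_{j\geq K_{n}}\eps_{j}=2^{2-K_{n}}$, hence $\Sigma_{n}=\sup_{t}\loss(\tilde{f}_{\hat{i}_{n,k_{n}}}(X_{t}),\target(X_{t}))\leq 2^{2-K_{n}}+\eps_{K_{n}}\to 0$. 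Together with the first paragraph this establishes $\UKC\subseteq\SUIL\cap\SUOL$. The main obstacle is precisely this last coupling of a deterministic level schedule with random convergence times; it is resolved by comparing $\hat{i}_{n,k_{n}}$ to the already-stabilized $\hat{i}^{\infty}_{K_{n}}$ through the Cauchy-type bound $\sup_{x}\loss(\tilde{f}_{\hat{i}_{n,k}}(x),\tilde{f}_{\hat{i}_{n,k-1}}(x))\leq\eps_{k-1}+\eps_{k}$ built into the rule, which turns an imperfect choice of level into only a small uniform error.
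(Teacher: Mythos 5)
Your proof is correct and takes essentially the same approach as the paper's. Your ideal indices $\hat{i}^{\infty}_{k}$ are the paper's $i_{k}^{*}$, your stabilization times $N_{k}$ are its $n_{k}^{*}$, and $K_{n}$ is its $k_{n}^{*}$; the inductive construction using Lemma~\ref{lem:unbounded-constrained-approximating-sequence}, the argument that the prefix-maximum test eventually agrees with the full supremum for every index up to the stabilized minimizer, and the telescoping Cauchy-type bound on $\sup_{x}\loss(\tilde{f}_{\hat{i}_{n,k_{n}}}(x),\tilde{f}_{\hat{i}_{n,K_{n}}}(x))$ are all the same decomposition the paper uses. (Two cosmetic remarks: the event $\{\delta_{0}<\infty\}$ is unnecessary since $\eps_{0}=\infty$ makes the base case vacuous; and the identity $2\sum_{j\geq K_{n}}\eps_{j}=2^{2-K_{n}}$ is valid only once $K_{n}\geq 1$, which suffices because $K_{n}\to\infty$.)
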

\begin{proof}
We begin by showing that $\UKC \subseteq \SUIL$.
Let $\hat{f}_{n}$ be the inductive learning rule specified by \eqref{eqn:unbounded-universal2-inductive-rule},
where the sequence $\{\tilde{f}_{i}\}_{i=1}^{\infty}$ is chosen as an enumeration of the elements of the countable set $\tilde{\F}$ from Lemma~\ref{lem:unbounded-constrained-approximating-sequence}.
We establish the stated result by arguing that $\hat{f}_{n}$ is strongly universally consistent for every $\ProcX \in \UKC$,
which thereby establishes that every $\ProcX \in \UKC$ admits strong universal inductive learning.

To verify that $\hat{f}_{n}$ is a measurable function, 
we note that any measurable $B \subseteq \Y$ has 
$\hat{f}_{n}^{-1}(B) 
= \bigcup\limits_{i \in \nats} \left(\hat{i}_{n,k_{n}}^{-1}(\{i\}) \times \X\right) \cap \left( \X^{n} \times \Y^{n} \times \tilde{f}_{i}^{-1}(B) \right)$.
Since each $\tilde{f}_{i}$ is a measurable function, it suffices to verify measurability of $\hat{i}_{n,k}$ for all $n, k$.
Note that $\hat{i}_{n,0}$ is constant, hence trivially measurable.  For the purpose of induction, let us suppose some $k \in \nats$ has $\hat{i}_{n,k-1}$ measurable.
For any $i \in \nats$, let $J_{i,k} = \!\left\{ j \in \nats : \sup\limits_{x \in \X} \loss\!\left( \tilde{f}_{i}(x), \tilde{f}_{j}(x) \right) \leq \triconst \eps_{k-1} + \eps_{k} \right\}$,
and $A_{i,k} = \hat{i}_{n,k-1}^{-1}(J_{i,k}) \cap \left\{ (x_{1:n},y_{1:n}) : \max\limits_{1 \leq t \leq n} \loss\!\left(\tilde{f}_{i}(x_{t}),y_{t}\right) \leq \eps_{k} \right\}$.
Since $\hat{i}_{n,k-1}$ is measurable (by assumption) and $\loss$ and $\tilde{f}_{i}$ are measurable functions, we observe that $A_{i,k}$ is a measurable set.
Then note that any $i \in \nats$ has 
$\hat{i}_{n,k}^{-1}(\{i\}) = \left( A_{i,k} \setminus \bigcup\limits_{i^{\prime} < i} A_{i^{\prime},k} \right) \cup \left( \hat{i}_{n,k-1}^{-1}(\{i\}) \setminus \bigcup\limits_{i^{\prime} \in \nats} A_{i^{\prime},k} \right)$, 
where the second term is due to the case when the set on the right hand side of \eqref{eqn:unbounded-universal2-hati} is empty.
Thus, $\hat{i}_{n,k}^{-1}(\{i\})$ is a measurable set.
Since any $C \subseteq \nats$ has 
$\hat{i}_{n,k}^{-1}(C) = \bigcup\limits_{i \in C} \hat{i}_{n,k}^{-1}(\{i\})$, 
we conclude that $\hat{i}_{n,k}$ is measurable, 
and this holds for all $k$ by the principle of induction.
Therefore, $\hat{f}_{n}$ is a valid inductive learning rule.

Now fix any $\ProcX \in \UKC$ and any measurable function $\target : \X \to \Y$.
To simplify the notation, let us abbreviate $\hat{i}_{n,k} = \hat{i}_{n,k}(X_{1:n},\target(X_{1:n}))$ for every $n \in \nats$ and $k \in \nats \cup \{0\}$.
Let $E$ denote the event of probability one guaranteed by Lemma~\ref{lem:unbounded-constrained-approximating-sequence},
for the process $\ProcX$ and the function $f = \target$:
that is, on $E$, $\forall \eps > 0$, $\forall i \in \nats$, $\exists j \in \nats$ with 
\begin{align}
\sup_{x \in \X} \loss\!\left( \tilde{f}_{j}(x), \tilde{f}_{i}(x) \right) & \leq \eps + \triconst \sup_{t \in \nats} \loss\!\left( \tilde{f}_{i}(X_{t}), \target(X_{t}) \right) \label{eqn:unbounded-suploss-eps-bound-1}
\\ \text{and }~~ \sup_{t \in \nats} \loss\!\left( \tilde{f}_{j}(X_{t}), \target(X_{t}) \right) & \leq \eps. \label{eqn:unbounded-suploss-eps-bound-2}
\end{align}
Let us suppose this event $E$ occurs.

We now argue by induction that, $\forall k \in \nats \cup \{0\}$,
$\exists i_{k}^{*}, n_{k}^{*} \in \nats$ such that, $\forall n \geq n_{k}^{*}$, $\hat{i}_{n,k} = i_{k}^{*}$
and $\sup\limits_{t \in \nats} \loss\!\left( \tilde{f}_{i_{k}^{*}}(X_{t}), \target(X_{t}) \right) \leq \eps_{k}$,
for $\eps_{k}$ as defined above \eqref{eqn:unbounded-universal2-hati}.
In particular, as a base case, let us define $i_{0}^{*} = 1$ and $n_{0}^{*} = 1$, 
for which the claims trivially hold 
since we have defined $\hat{i}_{n,0} = 1$ for every $n \in \nats$, and
moreover, $\eps_{0} = \infty$, so that we trivially have $\sup\limits_{t \in \nats} \loss\!\left( \tilde{f}_{i_{0}^{*}}(X_{t}), \target(X_{t}) \right) \leq \eps_{0}$.

Now take as an inductive hypothesis that, for some $k \in \nats$,
$\exists i_{k-1}^{*}, n_{k-1}^{*} \in \nats$ such that, $\forall n \geq n_{k-1}^{*}$, it holds that $\hat{i}_{n,k-1} = i_{k-1}^{*}$
and $\sup\limits_{t \in \nats} \loss\!\left( \tilde{f}_{i_{k-1}^{*}}(X_{t}), \target(X_{t}) \right) \leq \eps_{k-1}$.
Then define
\begin{equation*}
i_{k}^{*} = \min\left\{ j \in \nats : \sup_{t \in \nats} \loss\!\left( \tilde{f}_{j}(X_{t}), \target(X_{t}) \right) \leq \eps_{k}
\text{ and } \sup_{x \in \X} \loss\!\left( \tilde{f}_{j}(x), \tilde{f}_{i_{k-1}^{*}}(x) \right) \leq \triconst\eps_{k-1} + \eps_{k} \right\}.
\end{equation*}
Note that, taking $\eps = \eps_{k}$ and $i = i_{k-1}^{*}$ in \eqref{eqn:unbounded-suploss-eps-bound-1} and \eqref{eqn:unbounded-suploss-eps-bound-2}, 
and combining with the fact (from the inductive hypothesis) that $\sup\limits_{t \in \nats} \loss\!\left( \tilde{f}_{i_{k-1}^{*}}(X_{t}), \target(X_{t}) \right) \leq \eps_{k-1}$,
we can conclude that the set of values $j$ on the right hand side of the definition of $i_{k}^{*}$ is nonempty,
so that $i_{k}^{*}$ is a well-defined element of $\nats$.
In particular, by definition, we have $\sup\limits_{t \in \nats} \loss\!\left( \tilde{f}_{i_{k}^{*}}(X_{t}), \target(X_{t}) \right) \leq \eps_{k}$.
Next note that, by minimality of $i_{k}^{*}$, for every $j \in \nats$ 
with $j < i_{k}^{*}$ and $\sup\limits_{x \in \X} \loss\!\left( \tilde{f}_{j}(x), \tilde{f}_{i_{k-1}^{*}}(x) \right) \leq \triconst\eps_{k-1} + \eps_{k}$
(if any such $j$ exists), 
we have $\sup\limits_{t \in \nats} \loss\!\left( \tilde{f}_{j}(X_{t}), \target(X_{t}) \right) > \eps_{k}$, 
so that $\exists t_{j,k} \in \nats$ such that $\loss\!\left( \tilde{f}_{j}(X_{t_{j,k}}), \target(X_{t_{j,k}}) \right) > \eps_{k}$.
Now define
\begin{equation*}
n_{k}^{*} = \max\!\left( \left\{ t_{j,k} : j \!\in\! \{1,\ldots,i_{k}^{*}-1\}, \sup\limits_{x \in \X} \loss\!\left( \tilde{f}_{j}(x), \tilde{f}_{i_{k-1}^{*}}(x) \right) \leq \triconst\eps_{k-1} + \eps_{k} \right\} \cup \left\{ n_{k-1}^{*} \right\} \right),
\end{equation*}
which (being a maximum of a finite subset of $\nats$) is a finite positive integer.
In particular, note that (since $n_{k}^{*} \geq n_{k-1}^{*}$) for any $n \geq n_{k}^{*}$, 
the inductive hypothesis implies $\hat{i}_{n,k-1} = i_{k-1}^{*}$.
Additionally, for any $n \geq n_{k}^{*}$, every $j \in \nats$ with $j < i_{k}^{*}$ and 
$\sup\limits_{x \in \X} \loss\!\left( \tilde{f}_{j}(x), \tilde{f}_{i_{k-1}^{*}}(x) \right) \leq \triconst\eps_{k-1} + \eps_{k}$
has $\max\limits_{1 \leq t \leq n} \loss\!\left( \tilde{f}_{j}(X_{t}), \target(X_{t}) \right) \geq \loss\!\left( \tilde{f}_{j}(X_{t_{j,k}}), \target(X_{t_{j,k}}) \right) > \eps_{k}$.
In particular, this means that any such $j$ is not included in the set on the right hand side of \eqref{eqn:unbounded-universal2-hati} (when $x_{1:n} = X_{1:n}$ and $y_{1:n} = \target(X_{1:n})$).
Furthermore, for $n \geq n_{k}^{*}$, every $j \in \nats$ with $j < i_{k}^{*}$ and $\sup\limits_{x \in \X} \loss\!\left( \tilde{f}_{j}(x), \tilde{f}_{i_{k-1}^{*}}(x) \right) > \triconst\eps_{k-1} + \eps_{k}$
is clearly also not included in the set on the right hand side of \eqref{eqn:unbounded-universal2-hati} in this case (again, since $\hat{i}_{n,k-1} = i_{k-1}^{*}$).
On the other hand, by definition we have 
$\sup\limits_{x \in \X} \loss\!\left( \tilde{f}_{i_{k}^{*}}(x), \tilde{f}_{i_{k-1}^{*}}(x) \right) \leq \triconst\eps_{k-1} + \eps_{k}$,
and
$\max\limits_{1 \leq t \leq n} \loss\!\left( \tilde{f}_{i_{k}^{*}}(X_{t}), \target(X_{t}) \right)
\leq \sup\limits_{t \in \nats} \loss\!\left( \tilde{f}_{i_{k}^{*}}(X_{t}), \target(X_{t}) \right)
\leq \eps_{k}$,
so that, since
$\hat{i}_{n,k-1} = i_{k-1}^{*}$,
we have that $i_{k}^{*}$ \emph{is} included in the set on the right hand
side of \eqref{eqn:unbounded-universal2-hati} (with $x_{1:n} = X_{1:n}$ and $y_{1:n} = \target(X_{1:n})$).
Together with the definition of $\hat{i}_{n,k}$, 
these observations imply that, for any $n \geq n_{k}^{*}$, 
it holds that $\hat{i}_{n,k} = i_{k}^{*}$.

By the principle of induction, we have established the existence of a sequence $\{n_{k}^{*}\}_{k=0}^{\infty}$ in $\nats$ such that,
$\forall k \in \nats \cup \{0\}$, $\forall n \in \nats$ with $n \geq n_{k}^{*}$, we have 
$\sup\limits_{t \in \nats} \loss\!\left( \tilde{f}_{\hat{i}_{n,k}}(X_{t}), \target(X_{t}) \right) \leq \eps_{k}$.
Now for any $n \in \nats$, let $k_{n}^{*} = \max\left\{ k \in \{0,\ldots,k_{n}\} : n \geq n_{k}^{*} \right\}$ (recalling that we defined $n_{0}^{*} = 1$ above, so that $k_{n}^{*}$ always exists).
Note that, by the above guarantee, 
\begin{equation}
\label{eqn:unbounded-eps-k-n-star-bound-1}
\sup\limits_{t \in \nats} \loss\!\left( \tilde{f}_{\hat{i}_{n,k_{n}^{*}}}\!\!(X_{t}), \target(X_{t}) \right) \leq \eps_{k_{n}^{*}}.
\end{equation}
Furthermore, since $k_{n} \to \infty$, and each $n_{k}^{*}$ is finite, we have that $k_{n}^{*} \to \infty$.

Note that, by definition, for each $k \in \{1,\ldots,k_{n}\}$, 
we have $\sup\limits_{x \in \X} \loss\!\left( \tilde{f}_{\hat{i}_{n,k}}(x), \tilde{f}_{\hat{i}_{n,k-1}}(x) \right) \leq \triconst\eps_{k-1} + \eps_{k}$
(noting that this is true even when the set on the right hand side of \eqref{eqn:unbounded-universal2-hati} is empty, 
by our choice to define $\hat{i}_{n,k} = \hat{i}_{n,k-1}$ in that case).
Combining this with an inductive application of the relaxed triangle inequality and subadditivity of the supremum,
and noting that $k_{n}^{*} \leq k_{n}$ (by definition), this implies
\begin{align*}
& \sup_{x \in \X} \loss\!\left( \tilde{f}_{\hat{i}_{n,k_{n}}}(x), \tilde{f}_{\hat{i}_{n,k_{n}^{*}}}(x) \right)
\leq \sup_{x \in \X} \sum_{k=k_{n}^{*}+1}^{k_{n}} \triconst^{k-k_{n}^{*}}\loss\!\left( \tilde{f}_{\hat{i}_{n,k}}(x), \tilde{f}_{\hat{i}_{n,k-1}}(x) \right)
\\ & \leq \sum_{k=k_{n}^{*}+1}^{k_{n}} \sup_{x \in \X} \triconst^{k-k_{n}^{*}} \loss\!\left( \tilde{f}_{\hat{i}_{n,k}}(x), \tilde{f}_{\hat{i}_{n,k-1}}(x) \right)
\\ & \leq \sum_{k=k_{n}^{*}+1}^{k_{n}} \triconst^{k-k_{n}^{*}} \left( \triconst\eps_{k-1} + \eps_{k} \right)
\leq \sum_{k=k_{n}^{*}+1}^{\infty} \triconst^{k-k_{n}^{*}} \left( \triconst\eps_{k-1} + \eps_{k} \right).
\end{align*}
If $k_{n}^{*} \geq 1$, 
then by our choice of $\eps_{k} = (2\triconst)^{-k}$ for every $k \in \nats$, 
the rightmost expression above equals $\triconst^{-k_{n}^{*}} (2 \triconst^{2} + 1) \cdot 2^{-k_{n}^{*}} = (2 \triconst^{2} + 1) \eps_{k_{n}^{*}}$;
on the other hand, if $k_{n}^{*} = 0$, then our choice of $\eps_{0} = \infty$ implies 
the expression is $\infty = (2\triconst^{2}+1)\eps_{0}$. 
Thus, either way, we have
\begin{equation}
\label{eqn:unbounded-eps-k-n-star-bound-2}
\sup_{x \in \X} \loss\!\left( \tilde{f}_{\hat{i}_{n,k_{n}}}(x), \tilde{f}_{\hat{i}_{n,k_{n}^{*}}}(x) \right) \leq (2 \triconst^{2} + 1) \eps_{k_{n}^{*}}.
\end{equation}
Therefore, by the relaxed triangle inequality, $\forall n \in \nats$,
\begin{align*}
& \sup_{t \in \nats} \loss\!\left( \tilde{f}_{\hat{i}_{n,k_{n}}}(X_{t}), \target(X_{t}) \right)
\leq \sup_{t \in \nats} \triconst \left( \loss\!\left( \tilde{f}_{\hat{i}_{n,k_{n}^{*}}}(X_{t}), \target(X_{t}) \right) + \loss\!\left( \tilde{f}_{\hat{i}_{n,k_{n}}}(X_{t}), \tilde{f}_{\hat{i}_{n,k_{n}^{*}}}(X_{t}) \right) \right)
\\ & \leq \triconst \sup_{t \in \nats} \loss\!\left( \tilde{f}_{\hat{i}_{n,k_{n}^{*}}}(X_{t}), \target(X_{t}) \right) + \triconst \sup_{x \in \X} \loss\!\left( \tilde{f}_{\hat{i}_{n,k_{n}}}(x), \tilde{f}_{\hat{i}_{n,k_{n}^{*}}}(x) \right)
\leq 2 (\triconst^{3} + \triconst) \eps_{k_{n}^{*}},
\end{align*}
where the last inequality is due to \eqref{eqn:unbounded-eps-k-n-star-bound-1} and \eqref{eqn:unbounded-eps-k-n-star-bound-2}.
Since $k_{n}^{*} \to \infty$ and $\eps_{k} \to 0$, 
and since $\hat{f}_{n}(X_{1:n},\target(X_{1:n}),\cdot) = \tilde{f}_{\hat{i}_{n,k_{n}}}(\cdot)$ by its definition in \eqref{eqn:unbounded-universal2-inductive-rule},
and $\loss$ is non-negative, 
we may conclude that
\begin{equation*}
\sup_{t \in \nats} \loss\!\left( \hat{f}_{n}(X_{1:n}, \target(X_{1:n}), X_{t}), \target(X_{t}) \right) \to 0.
\end{equation*}

Since all of the above claims hold on the event $E$, which has probability one,
and since the above argument holds for \emph{any} choice of measurable function $\target : \X \to \Y$,
we may conclude that, for any measurable $\target : \X \to \Y$, 
\begin{equation}
\label{eqn:unbounded-sup-convergence}
\sup_{t \in \nats} \loss\!\left( \hat{f}_{n}(X_{1:n}, \target(X_{1:n}), X_{t}), \target(X_{t}) \right) \to 0 \text{ (a.s.)}.
\end{equation}
This further implies that, for any measurable $\target : \X \to \Y$, 
\begin{align*}
\lim_{n \to \infty} \hat{\L}_{\ProcX}\!\left( \hat{f}_{n}, \target ; n \right)
& = \lim_{n \to \infty} \limsup_{m \to \infty} \frac{1}{m} \sum_{t=n+1}^{n+m} \loss\!\left( \hat{f}_{n}(X_{1:n}, \target(X_{1:n}), X_{t}), \target(X_{t}) \right)
\\ & \leq \lim_{n \to \infty} \sup_{t \in \nats} \loss\!\left( \hat{f}_{n}(X_{1:n}, \target(X_{1:n}), X_{t}), \target(X_{t}) \right)
= 0 \text{ (a.s.)}.
\end{align*}
Thus, since $\hat{\L}_{\ProcX}$ is non-negative, we conclude that 
the inductive learning rule $\hat{f}_{n}$ is strongly universally consistent under $\ProcX$.
In particular, this implies that $\ProcX$ \emph{admits} strong universal inductive learning: that is, $\ProcX \in \SUIL$.

The above argument can also be used to show that $\ProcX \in \SUOL$.
Specifically, consider this same $\hat{f}_{n}$ function defined above, 
but now interpreted as an \emph{online} learning rule.
We then have, for any measurable $\target : \X \to \Y$, 
\begin{align}
\lim_{n \to \infty} \hat{\L}_{\ProcX}\!\left( \hat{f}_{\cdot}, \target; n \right)
& = \lim_{n \to \infty} \frac{1}{n} \sum_{t=0}^{n-1} \loss\!\left( \hat{f}_{t}(X_{1:t},\target(X_{1:t}),X_{t+1}),\target(X_{t+1}) \right)
\notag \\ & \leq \lim_{n \to \infty} \frac{1}{n} \sum_{t=0}^{n-1} \sup_{m \in \nats} \loss\!\left( \hat{f}_{t}(X_{1:t},\target(X_{1:t}),X_{m}),\target(X_{m}) \right). \label{eqn:unbounded-online-sup-relxation}
\end{align}
The convergence in \eqref{eqn:unbounded-sup-convergence} implies
$\sup\limits_{m \in \nats} \loss\!\left( \hat{f}_{t}(X_{1:t},\target(X_{1:t}),X_{m}),\target(X_{m}) \right) \to 0 \text{ (a.s.)}$ as $t \to \infty$.
Thus, since the arithmetic mean of the first $n$ elements in any convergent sequence in $\reals$ is also convergent (as $n \to \infty$) with the same limit value, 
this immediately implies that 
the final expression in \eqref{eqn:unbounded-online-sup-relxation} equals $0$ almost surely.  
Since this holds for any measurable $\target : \X \to \Y$, and $\loss$ is non-negative, 
we have that $\hat{f}_{n}$ is also a strongly universally consistent \emph{online} learning rule under $\ProcX$.
In particular, this implies that $\ProcX$ admits strong universal online learning:
that is, $\ProcX \in \SUOL$.

Finally, since the above arguments hold for \emph{any} choice of $\ProcX \in \UKC$,
we may conclude that $\UKC \subseteq \SUIL \cap \SUOL$, which completes the proof.
\end{proof}

Combining the above lemmas immediately provides the following proof of Theorem~\ref{thm:unbounded-main}.\\

\begin{proof}[of Theorem~\ref{thm:unbounded-main}]
Taking Lemmas~\ref{lem:unbounded-suil-subset-sual}, \ref{lem:unbounded-sual-subset-ukc}, and \ref{lem:unbounded-ukc-subset-suil} together, 
we have that $\SUIL \cup \SUOL \subseteq \SUAL \cup \SUOL \subseteq \UKC \subseteq \SUIL \cap \SUOL \subseteq \SUAL \cap \SUOL$.
This further implies that $\SUAL \bigtriangleup \SUOL = (\SUAL \cup \SUOL) \setminus (\SUAL \cap \SUOL) = \emptyset$,
and similarly $\SUIL \bigtriangleup \SUOL = (\SUIL \cup \SUOL) \setminus (\SUIL \cap \SUOL) = \emptyset$, 
so that $\SUIL = \SUOL = \SUAL$.  Combining this with Lemmas~\ref{lem:unbounded-sual-subset-ukc} and \ref{lem:unbounded-ukc-subset-suil},
we obtain $\SUOL = \SUAL \cup \SUOL \subseteq \UKC \subseteq \SUIL \cap \SUOL = \SUOL$, so that $\SUOL = \UKC$.
Hence $\SUIL = \SUAL = \SUOL = \UKC$, which completes the proof.
\end{proof}

We may also note that the proof of Lemma~\ref{lem:unbounded-ukc-subset-suil} specifically 
establishes that the inductive learning rule $\hat{f}_{n}$ specified in \eqref{eqn:unbounded-universal2-inductive-rule}
(with $\{\tilde{f}_{i}\}_{i=1}^{\infty}$ an enumeration of the countable set $\tilde{\F}$ from Lemma~\ref{lem:unbounded-constrained-approximating-sequence})
is strongly universally consistent for every $\ProcX \in \UKC$, and therefore by Theorem~\ref{thm:unbounded-main} (just established),
for every $\ProcX \in \SUIL$ when $\maxloss = \infty$.
Since the definition of $\hat{f}_{n}$ has no direct dependence on the distribution of $\ProcX$, this 
implies $\hat{f}_{n}$ is an optimistically universal inductive learning rule when $\maxloss = \infty$.
This is particularly interesting, as it contrasts with the fact, established in Theorem~\ref{thm:no-optimistic-inductive} above,
that for \emph{bounded} losses, no optimistically universal inductive learning rule exists (if $\X$ is an uncountable Polish space).
Furthermore, this also means we can easily define an optimistically universal \emph{self-adaptive} learning rule when $\maxloss = \infty$, 
simply defining 
\begin{equation}
\label{eqn:unbounded-universal2-self-adaptive-rule-defn}
\hat{g}_{n,m}(x_{1:m},y_{1:n},x) = \hat{f}_{n}(x_{1:n},y_{1:n},x)
\end{equation}
for every $n,m \in \nats \cup \{0\}$ with $m \geq n$, and every $x_{1:m} \in \X^{m}$, $y_{1:n} \in \Y^{n}$, and $x \in \X$.
In particular, it is clear that $\hat{\L}_{\ProcX}(\hat{g}_{n,\cdot},\target;n) = \hat{\L}_{\ProcX}(\hat{f}_{n},\target;n)$ for this definition of $\hat{g}_{n,m}$.
Thus, since $\hat{f}_{n}$ is strongly universally consistent under every $\ProcX \in \UKC$ by Lemma~\ref{lem:unbounded-ukc-subset-suil}, 
it immediately follows that $\hat{g}_{n,m}$ also has this property, and the fact that it is an optimistically universal self-adaptive learning rule (when $\maxloss = \infty$)
then follows from $\SUAL = \UKC$ (from Theorem~\ref{thm:unbounded-main}, just established).
The proof of Lemma~\ref{lem:unbounded-ukc-subset-suil} also establishes strong universal consistency of $\hat{f}_{n}$ under any $\ProcX \in \UKC$ 
when $\hat{f}_{n}$ is interpreted as an \emph{online} learning rule, so that (since $\UKC = \SUOL$ when $\maxloss = \infty$, again by Theorem~\ref{thm:unbounded-main}) 
$\hat{f}_{n}$ is also an optimistically universal \emph{online} learning rule when $\maxloss = \infty$.
We summarize these findings in the following theorem.

\begin{theorem}
\label{thm:unbounded-universal2-inductive}
When $\maxloss = \infty$, with $\{\tilde{f}_{i}\}_{i=1}^{\infty}$ an enumeration of the countable set $\tilde{\F}$ from Lemma~\ref{lem:unbounded-constrained-approximating-sequence},
the learning rule $\hat{f}_{n}$ from \eqref{eqn:unbounded-universal2-inductive-rule} is an optimistically universal inductive learning rule,
and an optimistically universal online learning rule.  
Moreover, defining $\hat{g}_{n,m}$ as in \eqref{eqn:unbounded-universal2-self-adaptive-rule-defn},
when $\maxloss = \infty$, $\hat{g}_{n,m}$ is an optimistically universal self-adaptive learning rule.
\end{theorem}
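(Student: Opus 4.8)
The plan is to obtain this theorem as a repackaging of Lemma~\ref{lem:unbounded-ukc-subset-suil} and Theorem~\ref{thm:unbounded-main}, exploiting the crucial fact that the construction in \eqref{eqn:unbounded-universal2-inductive-rule} is specified without any reference to the law of $\ProcX$. First I would recall that the inductive learning rule $\hat{f}_{n}$ from \eqref{eqn:unbounded-universal2-inductive-rule}, built from the sequence $\{\tilde{f}_{i}\}_{i=1}^{\infty}$ of Lemma~\ref{lem:unbounded-constrained-approximating-sequence} together with arbitrary nonrandom sequences $i_{n} \to \infty$ and $k_{n} \to \infty$, uses the data only through the observed samples; no appeal to $\P$ enters its definition, and its measurability was already noted below \eqref{eqn:unbounded-universal2-inductive-rule}. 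Lemma~\ref{lem:unbounded-ukc-subset-suil} shows that, when $\maxloss = \infty$, this rule is strongly universally consistent under every $\ProcX \in \UKC$, both when interpreted as an inductive rule and when interpreted as an online rule (the online case coming from the averaging step in \eqref{eqn:unbounded-online-sup-relxation}). By Theorem~\ref{thm:unbounded-main}, $\UKC = \SUIL = \SUOL$ when $\maxloss = \infty$, so in fact $\hat{f}_{n}$ is strongly universally consistent under every process in $\SUIL$ and under every process in $\SUOL$. Since this single rule carries no dependence on the process, it is by definition an optimistically universal inductive learning rule and an optimistically universal online learning rule.

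Next, for the self-adaptive claim, I would take $\hat{g}_{n,m}$ as defined in \eqref{eqn:unbounded-universal2-self-adaptive-rule-defn}, namely $\hat{g}_{n,m}(x_{1:m},y_{1:n},x) = \hat{f}_{n}(x_{1:n},y_{1:n},x)$ for $m \geq n$ (and set arbitrarily, say to some fixed element of $\Y$, for $m < n$). Measurability of $\hat{g}_{n,m}$ is immediate, being a composition of the measurable $\hat{f}_{n}$ with coordinate projections, so $\hat{g}_{n,m}$ is a valid self-adaptive learning rule. For any process $\ProcX$ and any measurable $\target : \X \to \Y$, the self-adaptive predictor used at every step $t \geq n$ coincides with the fixed inductive predictor $\hat{f}_{n}(X_{1:n},\target(X_{1:n}),\cdot)$, so directly from the definitions of $\hat{\L}_{\ProcX}(g_{n,\cdot},\target;n)$ and $\hat{\L}_{\ProcX}(f_{n},\target;n)$ one reads off $\hat{\L}_{\ProcX}(\hat{g}_{n,\cdot},\target;n) = \hat{\L}_{\ProcX}(\hat{f}_{n},\target;n)$ for every $n \in \nats$ (this uses nothing beyond the fact that the self-adaptive objective is the $\limsup$ of averages of the very same loss terms). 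Hence $\hat{g}_{n,m}$ is strongly universally consistent under every $\ProcX \in \UKC$, and since $\SUAL = \UKC$ by Theorem~\ref{thm:unbounded-main} and $\hat{g}_{n,m}$ again does not depend on the law of $\ProcX$, it is an optimistically universal self-adaptive learning rule.

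I do not expect any genuine obstacle in this argument: all of the analytic work was already carried out in the proof of Lemma~\ref{lem:unbounded-ukc-subset-suil} (the inductive construction of the indices $\hat{i}_{n,k}$ and the telescoping triangle-inequality bound driving the convergence in \eqref{eqn:unbounded-sup-convergence}) and in establishing the chain of inclusions yielding $\SUIL = \SUAL = \SUOL = \UKC$ in Theorem~\ref{thm:unbounded-main}. The only point worth being careful about is the distribution-freeness of the construction --- that the sequences $i_{n}, k_{n}$ and the functions $\tilde{f}_{i}$ are all chosen once and for all, independently of $\ProcX$ --- which is exactly what licenses the labels ``optimistically universal''; I would state this explicitly when invoking Lemma~\ref{lem:unbounded-ukc-subset-suil}. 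The self-adaptive half is then a one-line consequence of the equality of loss functionals above. In short, the present theorem is the observation that the proofs of the preceding results already produce a single distribution-free rule (plus a trivial self-adaptive wrapper of it), so the optimistic-universality conclusions follow immediately.
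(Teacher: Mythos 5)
Your proposal is correct and matches the paper's reasoning essentially verbatim: the paper likewise derives the theorem directly from Lemma~\ref{lem:unbounded-ukc-subset-suil} (for both the inductive and online interpretations of $\hat{f}_{n}$), Theorem~\ref{thm:unbounded-main} (to replace $\UKC$ by $\SUIL = \SUAL = \SUOL$), and the observation that $\hat{\L}_{\ProcX}(\hat{g}_{n,\cdot},\target;n) = \hat{\L}_{\ProcX}(\hat{f}_{n},\target;n)$, with the distribution-freeness of the construction being the key point licensing the ``optimistically universal'' label.
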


In particular, this implies that for unbounded losses, there \emph{exist} optimistically universal (inductive/self-adaptive/online) learning rules, 
so that Theorem~\ref{thm:unbounded-optimistically-universal} immediately follows.

{\vskip 3mm}\noindent {\bf Remark:}~
Interestingly, the proof of Lemma~\ref{lem:unbounded-ukc-subset-suil} in fact establishes a much stronger kind of convergence for $\hat{f}_{n}$ under any $\ProcX \in \UKC$:
for any measurable $\target : \X \to \Y$, 
\begin{equation}
\label{eqn:sup-consistency}
\sup_{t \in \nats} \loss\!\left( \hat{f}_{n}(X_{1:n},\target(X_{1:n}),X_{t}), \target(X_{t}) \right) \to 0 \text{ (a.s.)}.
\end{equation}
Denoting by $\SUIL^{{\rm sup}}$ the set of processes $\ProcX$ that admit the existence of an inductive learning rule $\hat{f}_{n}$ 
satisfying \eqref{eqn:sup-consistency} for every measurable $\target : \X \to \Y$,
we have thus established that $\UKC \subseteq \SUIL^{{\rm sup}}$ when $\maxloss = \infty$.
Furthermore, as shown in the proof of Lemma~\ref{lem:unbounded-ukc-subset-suil}, 
this type of convergence itself implies strong universal consistency of $\hat{f}_{n}$ in the original sense of Definition~\ref{def:suil},
so that $\SUIL^{{\rm sup}} \subseteq \SUIL$.
Thus, since $\SUIL = \UKC$ when $\maxloss = \infty$ (from Theorem~\ref{thm:unbounded-main}, just established), 
we have established that, when $\maxloss = \infty$, $\SUIL^{{\rm sup}} = \SUIL$:
that is, the set of processes $\ProcX$ admitting 
this stronger type of universal consistency is in fact the \emph{same} as those admitting strong universal inductive
learning in the usual sense of Definition~\ref{def:suil}.
It is clear that this is \emph{not} the case
when $\maxloss < \infty$ if $\X$ is infinite.  
Indeed, combining the proof of Lemma~\ref{lem:unbounded-ukc-subset-suil} with 
a straightforward variation on the proof of Lemma~\ref{lem:unbounded-sual-subset-ukc},
one can show that \emph{even when $\maxloss < \infty$}, 
Condition~\ref{con:ukc} remains a necessary and sufficient condition for a process $\ProcX$
to admit the existence of an inductive learning rule satisfying \eqref{eqn:sup-consistency} for all 
measurable functions $\target : \X \to \Y$: that is, $\SUIL^{{\rm sup}} = \UKC$.
For these same reasons, the same is true of the analogous guarantee for self-adaptive or online learning:
that is, regardless of whether $\maxloss = \infty$ or $\maxloss < \infty$, 
Condition~\ref{con:ukc} is necessary and sufficient for there to exist 
a self-adaptive learning rule $\hat{g}_{n,m}$ such that, for all measurable $\target : \X \to \Y$, 
\begin{equation*}
\sup\limits_{t \in \nats : t \geq n} \loss\!\left( \hat{g}_{n,t}(X_{1:t},\target(X_{1:n}),X_{t+1}), \target(X_{t+1}) \right) \to 0 \text{ (a.s.)},
\end{equation*}
and Condition~\ref{con:ukc} is also necessary and sufficient for there to exist 
an online learning rule $\hat{h}_{n}$ such that, for all measurable $\target : \X \to \Y$, 
\begin{equation*}
\loss\!\left( \hat{h}_{n}(X_{1:n},\target(X_{1:n}),X_{n+1}), \target(X_{n+1}) \right) \to 0 \text{ (a.s.)}.
\end{equation*}

\subsection{No Consistent Test for Existence of a Universally Consistent Learner}
\label{subsec:unbounded-no-consistent-test}
As we did in Section~\ref{sec:no-consistent-test} in the case of bounded losses, 
it is also natural to ask whether there exist consistent hypothesis tests
for whether or not a given data process $\ProcX$ admits strong universal
learning, in this case when $\maxloss = \infty$.  As was true for bounded losses,
we again find that the answer is generally \emph{no}.  Formally, we have the following theorem.

\begin{theorem}
\label{thm:unbounded-no-consistent-test-for-suil}
When $\maxloss = \infty$ and $\X$ is infinite, 
there is no consistent hypothesis test for $\SUIL$, $\SUAL$, or $\SUOL$.
\end{theorem}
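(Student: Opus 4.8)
The plan is to reduce all three families to a single one and then recycle, essentially verbatim, the switching construction from the proof of Theorem~\ref{thm:no-consistent-test-for-suil}. By Theorem~\ref{thm:unbounded-main}, when $\maxloss = \infty$ we have $\SUIL = \SUAL = \SUOL = \UKC$, so it suffices to show there is no consistent hypothesis test for $\UKC$. Fix any hypothesis test $\hat{t}_n$, and fix a sequence $\{w_i\}_{i=0}^{\infty}$ of distinct points of $\X$ (which exists since $\X$ is infinite).

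I would construct a deterministic process $\ProcX$ by the same inductive ``switch whenever the test commits'' procedure: set $n_0 = 0$; given $n_{k-1}$ and $X_{1:n_{k-1}}$, form the candidate continuation $X^{(k)}$ agreeing with $X$ on $1,\dots,n_{k-1}$ and thereafter equal to the constant $w_0$ if $k$ is odd, or equal to $w_t$ at time $t$ if $k$ is even; if some $n > n_{k-1}$ satisfies $\P(\hat{t}_n(X^{(k)}_{1:n}) = \ind[(k+1)/2 \in \nats]) > 1/2$, commit to this continuation up to time $n_k := n$ and continue the recursion; otherwise freeze $\ProcX = X^{(k)}$ and stop. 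The only membership facts needed are: (i) any process that is eventually constant has finitely many distinct values, hence trivially satisfies Condition~\ref{con:ukc}, so it lies in $\UKC$; and (ii) any process that is eventually equal to $w_t$ at time $t$ fails Condition~\ref{con:ukc}, since the disjoint sets $A_i = \{w_i\}$ satisfy $|\{i : \ProcX \cap A_i \neq \emptyset\}| = \infty$, so such a process lies outside $\UKC$. Note that, unlike the bounded case, here there is no need to check the all-distinct process separately against $\KC$ and $\OKC$: all three families equal $\UKC$.

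The case analysis is then exactly that of Theorem~\ref{thm:no-consistent-test-for-suil}. If the construction halts at a largest index $k^{*}$, then $\ProcX = \{X^{(k^{*})}_t\}$ and for every $n > n_{k^{*}-1}$ we have $\P(\hat{t}_n(X_{1:n}) = \ind[(k^{*}+1)/2\in\nats]) \le 1/2$; by (i)--(ii) the value $\ind[(k^{*}+1)/2\in\nats]$ is precisely the correct $\UKC$-label of $\ProcX$ (namely $1$ when $k^{*}$ is odd, where $\ProcX$ is eventually constant and in $\UKC$, and $0$ when $k^{*}$ is even, where $\ProcX$ is eventually all-distinct and not in $\UKC$), so $\limsup_n \P(\hat{t}_n(X_{1:n}) \ne \ind[(k^{*}+1)/2\in\nats]) \ge 1/2$ and $\hat{t}_n$ is not consistent for $\UKC$. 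If instead the construction never halts, then $\{n_k\}_{k=1}^{\infty}$ is an infinite strictly increasing sequence, every $k$ has $X_{1:n_k} = X^{(k)}_{1:n_k}$ and $\P(\hat{t}_{n_k}(X_{1:n_k}) = \ind[(k+1)/2\in\nats]) > 1/2$, whence (taking even $k$) $\limsup_n \P(\hat{t}_n(X_{1:n}) \ne 1) \ge 1/2$ and (taking odd $k$) $\limsup_n \P(\hat{t}_n(X_{1:n}) \ne 0) \ge 1/2$, so $\hat{t}_n(X_{1:n})$ converges in probability to neither $0$ nor $1$. In all cases $\hat{t}_n$ fails to be consistent for $\UKC = \SUIL = \SUAL = \SUOL$.

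The main (and essentially only) obstacle is verifying that the membership facts (i) and (ii) align with the intended test output, so that ``the test has committed to the answer the current construction is steering toward'' is genuinely the correct $\UKC$-label at every stopping point; once that is checked, the remainder is a transcription of the earlier argument with $\KC$ and $\OKC$ uniformly replaced by $\UKC$ and with Theorem~\ref{thm:unbounded-main} supplying the identification $\SUIL = \SUAL = \SUOL = \UKC$.
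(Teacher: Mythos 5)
Your proof is correct and follows essentially the same approach as the paper: reduce via Theorem~\ref{thm:unbounded-main} to testing $\UKC$, then recycle the switching construction from Theorem~\ref{thm:no-consistent-test-for-suil}. The paper organizes its case analysis by first invoking the $\KC$-membership facts established in the earlier proof (using $\UKC \subseteq \KC$ for the case $\ProcX \notin \KC$, and then re-checking $\UKC$ membership in the remaining subcases), whereas you classify directly by the stopping behavior (odd $k^{*}$, even $k^{*}$, or no $k^{*}$) and verify $\UKC$ membership or non-membership directly in each case via the eventual-constant/eventual-all-distinct dichotomy; this is a modest organizational simplification of the same argument, not a different route.
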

\begin{proof}
Suppose $\X$ is infinite.
Since Theorem~\ref{thm:unbounded-main} implies $\SUIL = \SUAL = \SUOL = \UKC$ when $\maxloss = \infty$, 
it suffices to prove that there is no consistent hypothesis test for $\UKC$.
Fix any hypothesis test $\hat{t}_{n}$.
Fix $\ProcX$ to be that specific process constructed in the proof of Theorem~\ref{thm:no-consistent-test-for-suil},
relative to this hypothesis test $\hat{t}_{n}$.
The proof of Theorem~\ref{thm:no-consistent-test-for-suil} (combined with Theorem~\ref{thm:main}) establishes that,
for this specific process $\ProcX$, 
if $\ProcX \in \KC$, then $\hat{t}_{n}(X_{1:n})$ fails to converge in probability to $1$, 
and if $\ProcX \notin \KC$, then $\hat{t}_{n}(X_{1:n})$ fails to converge in probability to $0$.

Recall that $\UKC \subseteq \KC$,
so that if $\ProcX \notin \KC$, then $\ProcX \notin \UKC$ as well.
But, as mentioned above, $\hat{t}_{n}(X_{1:n})$ fails to 
converge in probability to $0$ in this case. Thus, in the case that this process $\ProcX \notin \KC$, we have
established that $\hat{t}_{n}$ is not a consistent test for $\UKC$.

On the other hand, in the case that the constructed process $\ProcX$ \emph{is} in $\KC$,
there are two subcases to consider.  First, recalling the construction of $\ProcX$,
if there exists a largest $k \in \nats$ for which $n_{k-1}$ is defined, then for $\ProcX$
to be in $\KC$ we necessarily have $(k+1)/2 \in \nats$ (i.e., $k$ is odd).  In this case,
every $t > n_{k-1}$ has $X_{t} = w_{0} = X_{n_{k-1}+1}$, 
so that for any disjoint sequence $\{A_{i}\}_{i=1}^{\infty}$ in $\Borel$,
\begin{equation*}
\left| \left\{ i \in \nats : \ProcX \cap A_{i} \neq \emptyset \right\} \right|
= \left| \left\{ i \in \nats : X_{1:(n_{k-1}+1)} \cap A_{i} \neq \emptyset \right\} \right| 
\leq n_{k-1}+1 < \infty.
\end{equation*}
Therefore, Lemma~\ref{lem:ukc-partition-equiv} implies 
that $\ProcX \in \UKC$ as well.
But, as mentioned above, in the case that this constructed process $\ProcX \in \KC$, $\hat{t}_{n}(X_{1:n})$ fails to converge in probability to $1$,
so that if $\ProcX \in \KC$ and there is a largest $k \in \nats$ with $n_{k-1}$ defined, this establishes that $\hat{t}_{n}$ is 
not a consistent test for $\UKC$.
Finally, the only remaining case is where $\ProcX \in \KC$ and $n_{k-1}$ is defined for every $k \in \nats$.
In this case, as established in the proof of Theorem~\ref{thm:no-consistent-test-for-suil}, $\hat{t}_{n}(X_{1:n})$
fails to converge in probability \emph{at all} (i.e., \emph{neither} converges in probability to $0$ \emph{nor} converges in probability to $1$),
which trivially establishes that $\hat{t}_{n}$ is not a consistent test for $\UKC$ in this case as well.
\end{proof}

Since it is trivially true that \emph{every} $\ProcX$ is in $\UKC$ when $\X$ is \emph{finite} 
(and hence also in $\SUIL$, $\SUAL$, and $\SUOL$ when $\maxloss = \infty$, by Theorem~\ref{thm:unbounded-main}),
we have the following immediate corollary.

\begin{corollary}
\label{cor:unbounded-consistent-test}
When $\maxloss = \infty$, there exist consistent hypothesis tests for each of $\SUIL$, $\SUAL$, and $\SUOL$
if and only if $\X$ is finite.
\end{corollary}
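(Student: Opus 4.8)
The plan is to derive the corollary by combining the already-established Theorem~\ref{thm:unbounded-no-consistent-test-for-suil} with an elementary observation about finite instance spaces, exactly in the spirit of how Corollary~\ref{thm:consistent-test-nec-suf} was deduced from Theorem~\ref{thm:no-consistent-test-for-suil} in the bounded-loss setting.

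For the ``only if'' direction, I would simply invoke Theorem~\ref{thm:unbounded-no-consistent-test-for-suil}: it states that when $\maxloss = \infty$ and $\X$ is infinite, there is no consistent hypothesis test for $\SUIL$, $\SUAL$, or $\SUOL$. Taking the contrapositive, the existence of a consistent test for any one of these families (when $\maxloss = \infty$) forces $\X$ to be finite. Since this theorem is already proved in the excerpt, this half requires no further work.

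For the ``if'' direction, suppose $\X$ is finite. First I would check that \emph{every} process $\ProcX$ then satisfies Condition~\ref{con:ukc}: if $\{A_k\}_{k=1}^{\infty}$ is a monotone sequence in $\Borel$ with $A_k \downarrow \emptyset$, then because $\X$ is finite there are only finitely many subsets of $\X$, so the nested intersection being empty forces $A_k = \emptyset$ for all sufficiently large $k$; hence $\{k \in \nats : \ProcX \cap A_k \neq \emptyset\}$ is finite (with probability one, indeed deterministically). Thus $\ProcX \in \UKC$. By Theorem~\ref{thm:unbounded-main} (which gives $\SUIL = \SUAL = \SUOL = \UKC$ when $\maxloss = \infty$), we get $\ProcX \in \SUIL \cap \SUAL \cap \SUOL$. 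Consequently the constant function $\hat{t}_n(\cdot) = 1$, $n \in \nats \cup \{0\}$, is a (trivially measurable, nonrandom) hypothesis test that is consistent for each of $\SUIL$, $\SUAL$, and $\SUOL$, since every process belongs to all three families and $\hat{t}_n(X_{1:n}) = 1 \xrightarrow{P} 1$ for every such process (and there are no processes outside these families to worry about).

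Combining the two directions yields the claimed equivalence. The substantive content is entirely carried by Theorem~\ref{thm:unbounded-no-consistent-test-for-suil}, so there is no genuine obstacle here; the only mild care needed is the verification that a strictly decreasing sequence of subsets of a finite set with empty intersection is eventually empty, which is immediate. I would present the argument compactly, mirroring the phrasing of Corollary~\ref{thm:consistent-test-nec-suf}.
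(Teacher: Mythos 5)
Your proposal is correct and follows exactly the paper's approach: the ``only if'' direction is Theorem~\ref{thm:unbounded-no-consistent-test-for-suil}, and the ``if'' direction observes that every process on a finite $\X$ lies in $\UKC$ (hence, by Theorem~\ref{thm:unbounded-main}, in all three families), so the constant test $\hat{t}_n \equiv 1$ is consistent.
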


\section{Noisy Responses}
\label{sec:noise}

In much of the statistical learning theory literature, it is common to 
suppose that the response $Y_{t}$ is \emph{noisy}, so that rather 
than $Y_{t} = \target(X_{t})$ always, we merely have that among all 
measurable functions $f : \X \to \Y$, the conditional expectation $\E[ \loss( f(X_{t}), Y_{t} ) | X_{t} ]$ 
is minimized for $f = \target$.  For instance, in classification with $\loss$ the $0$-$1$ loss and $\Y = \{0,1\}$, 
this corresponds to having $Y_{t} = \target(X_{t})$ with a probability at least $1/2$ given $X_{t}$.
In regression with $\Y$ an interval in $\reals$ and $\loss$ the squared loss ($\loss(a,b)=(a-b)^{2}$), 
it is well known that the point-wise minimizer of $\E[\loss(f(X),Y)|X]$ is the \emph{conditional mean}: 
$\target(X) = \E[Y|X]$ (a.s.).  

It is interesting to consider how the theory developed in the sections above can be modified to accommodate noisy responses.
Here we are still interested in obtaining low long-run average loss.
However, in the presence of noise we generally cannot hope to achieve \emph{zero} average loss in the limit.
We must therefore adjust our goal.  Instead, we will aim to achieve zero \emph{excess} loss, 
relative to a fixed \emph{optimal} function: that is, we will still suppose there is a function $\target$ 
representing an optimal predictor, and we will evaluate our performance relative to this function.

In this context, we achieve two different strengths of results. 
First, we show that for certain restricted types of losses $\loss$, 
there exists a self-adaptive learning rule that is strongly universally consistent for any 
$(\ProcX,\ProcY) = \{(X_{t},Y_{t})\}_{t=1}^{\infty}$ with $\ProcX \in \KC$ and $\ProcY$ satisfying a conditional independence property: 
that is, the $Y_{t}$ variables are conditionally independent given their respective $X_{t}$ variables.
In particular, this result applies to the \emph{squared loss} for $\Y$ any bounded interval in $\reals$.
However, it turns out classification with the $0$-$1$ loss does not satisfy the requirements on $\loss$ for this result.
To address classification, we propose a second, stronger condition, 
where we suppose $Y_{t}$ is a \emph{noisy function} of $X_{t}$: that is, 
the $Y_{t}$ values are conditionally independent 
and the conditional distribution of $Y_{t}$ given $X_{t}$ is a $t$-invariant function of $X_{t}$.
We show that there exists a self-adaptive learning rule that is strongly universally consistent 
for classification with finite $\Y$ and the $0$-$1$ loss, for all processes of this type with $\ProcX \in \KC$.
The question of learning, either with general conditionally independent $\ProcY$ or with noisy functions, 
for general bounded separable losses $\loss$ is left for future work.

\subsection{Definitions}
\label{sec:noise-definitions}

In this general setting, for any measurable $\goodfun : \X \to \Y$, 
for any process $(\ProcX,\ProcY) = \{(X_{t},Y_{t})\}_{t=1}^{\infty}$ on $\X \times \Y$, 
for any inductive learning rule $f_{n}$ and self-adaptive learning rule $g_{n,m}$, and any $n \in \nats$, 
we define the long-run average \emph{excess} loss 
\begin{align*}
\hat{\L}_{\ProcX}(f_{n},\ProcY;n,\goodfun) & = \limsup\limits_{t \to \infty} \frac{1}{t} \sum_{m=n+1}^{n+t} \left( \loss\!\left( f_{n}(X_{1:n}, Y_{1:n}, X_{m}), Y_{m} \right) - \loss\!\left(\goodfun(X_{m}),Y_{m}\right) \right),\\
\hat{\L}_{\ProcX}(g_{n,\cdot},\!\ProcY;n,\goodfun) & = \limsup\limits_{t \to \infty} \frac{1}{t+1} \sum_{m=n}^{n+t} \left( \loss\!\left( g_{n,m}(X_{1:m},Y_{1:n},X_{m+1}), Y_{m+1} \right) \!-\! \loss\!\left(\goodfun\!(X_{m+1}),Y_{m+1}\right) \right).
\end{align*}
We are then interested in learning rules $f_{n}$, $g_{n,m}$ that guarantee that, for all measurable $\goodfun$, 
$\limsup\limits_{n \to \infty} \hat{\L}_{\ProcX}(f_{n},\ProcY;n,\goodfun) \leq 0$ almost surely, 
or $\limsup\limits_{n \to \infty} \hat{\L}_{\ProcX}(g_{n,\cdot},\ProcY;n,\goodfun) \leq 0$ almost surely, respectively, 
for some specific family of processes $(\ProcX,\ProcY)$. 
For brevity, we will not discuss the online setting in detail in this section, 
though an analogous generalization is possible there: that is, 
$\hat{\L}_{\ProcX}(f_{\cdot},\ProcY;\goodfun) = \limsup\limits_{n \to \infty} \frac{1}{n} \sum\limits_{t=0}^{n-1} \left( \loss\!\left( f_{t}(X_{1:t}, Y_{1:t}, X_{t+1}), Y_{t+1} \right) - \loss\!\left(\goodfun(X_{t+1}),Y_{t+1}\right) \right)$.

It is clear that some kind of restriction to the dependences among the $(X_{t},Y_{t})$ variables would be 
required for any positive result to be possible.  
While the argument we follow here can also be applied in more general scenarios (within certain limits), 
as a simple scenario to consider we restrict to the following two key requirements.

\begin{itemize}
\item[\Yone.] We restrict to processes $\ProcY = \{Y_{t}\}_{t=1}^{\infty}$ (dependent on $\ProcX$) 
with the property that there exists a measurable function $\target : \X \to \Y$ with 
\begin{equation*}
\E[ \loss(\target(X_{t}),Y_{t}) | X_{t} ] = \inf_{y \in \Y} \E[ \loss(y,Y_{t}) | X_{t} ] \text{ (a.s.)}
\end{equation*}
for every $t \in \nats$.  In other words, we assume there is a time-invariant optimal function.\footnote{In principle, the theory 
below can be extended to cases where the infimum does not exist, but there exist $\target_{\eps}$ functions 
with $\E[ \loss(\target(X_{t}),Y_{t}) | X_{t} ] \leq \inf_{y \in \Y} \E[ \loss(y,Y_{t}) | X_{t} ] + \eps \text{ (a.s.)}$.  
We restrict to cases where an optimal function $\target$ exists to simplify the exposition.}
\item[\Ytwo.]
We suppose the $Y_{t}$ variables are \emph{conditionally independent} given their respective $X_{t}$ variables.
Formally, we suppose $\ProcY = \{Y_{t}\}_{t=1}^{\infty}$ has the property that $\forall t \in \nats$, 
$Y_{t}$ is conditionally independent of $\{(X_{t^{\prime}},Y_{t^{\prime}})\}_{t^{\prime} \neq t}$ given $X_{t}$.
\end{itemize}

In particular, note that both of these conditions would be satisfied for any i.i.d.\ process $(\ProcX,\ProcY)$ 
if $\Y$ is sequentially compact
(so that the infimum exists in \Yone).  
Henceforth in this section, mentions of $\target$ will always refer to the function $\target$ 
guaranteed to exist by \Yone.
As we argue below in Lemma~\ref{lem:non-negative-noisy-excess}, 
for any process $(\ProcX,\ProcY)$ satisfying \Yone\ and \Ytwo, 
for any
learning rule $f_{n,m}$, 
we have $\hat{\L}_{\ProcX}(f_{n,\cdot},\ProcY;n,\target) \geq 0$ (a.s.).
Moreover, by similar arguments to the proof of Lemma~\ref{lem:non-negative-noisy-excess}, 
it is not hard to show that for any measurable function $\goodfun : \X \to \Y$, 
we have $\hat{\L}_{\ProcX}(f_{n,\cdot},\ProcY;n,\target) \geq \hat{\L}_{\ProcX}(f_{n,\cdot},\ProcY;n,\goodfun)$ (a.s.).
For these reasons, for $(\ProcX,\ProcY)$ satisfying \Yone\ and \Ytwo, 
we say a learning rule $f_{n,m}$ is \emph{consistent} if 
$\hat{\L}_{\ProcX}(f_{n,\cdot},\ProcY;n,\target) \to 0 \text{ (a.s.)}$.

Below we obtain two different results, corresponding to two different families of processes $(\ProcX,\ProcY)$.
These families
are stated formally in the following definitions.

\begin{definition}
\label{def:noises}
We say a process $(\ProcX,\ProcY)$ has \emph{independent noise} if it satisfies properties \Yone\ and \Ytwo\ above.
We say $\ProcY$ is a \emph{noisy function} of $\ProcX$ if $(\ProcX,\ProcY)$ has independent noise, 
and the conditional distribution of $Y_{t}$ given $X_{t}$ is a $t$-invariant function of $X_{t}$.
\end{definition}

The main difference between $(\ProcX,\ProcY)$ merely having independent noise and $\ProcY$ being a noisy function of $\ProcX$ 
is that the former case admits processes where the conditional distribution of $Y_{t}$ given $X_{t}$ varies over time.
For instance, as a simple example of a process $(\ProcX,\ProcY)$ having independent noise, 
consider $\Y = [-1,1]$ and $\loss = \loss_{\sq}$ the squared loss ($\loss_{\sq}(a,b) = (a-b)^2$), 
and take $\ProcX$ as any process, while $Y_{t} = \target(X_{t}) + \left(1-\frac{1}{t}\right)V_{t}$, 
where $\target$ is any measurable function with $\target(x) \in [-1/2,1/2]$ for all $x \in \X$,
and where $\{V_{t}\}_{t=1}^{\infty}$ are independent (and independent of $\ProcX$) with 
$V_{t} \sim {\rm Uniform}([-1/2,1/2])$.  This process $\ProcY$ gets \emph{noisier} over time, 
but the optimal function is always $\target$ (the conditional mean of $Y_{t}$ given $X_{t}$ being $\target(X_{t})$), 
and each $Y_{t}$ is conditionally independent of $\{(X_{t^{\prime}},Y_{t^{\prime}})\}_{t^{\prime} \neq t}$ given $X_{t}$.
Note that $\ProcY$ is \emph{not} a noisy function of $\ProcX$.
However, if instead we had defined $Y_{t} = \target(X_{t})+V_{t}$, then it would be.

We now formally state the criteria for universal consistency with noise.

\begin{definition}
\label{def:independent-noise}
For a self-adaptive learning rule $g_{n,m}$ and a process $\ProcX$ on $\X$,  
we say $g_{n,m}$ is strongly universally consistent 
\emph{with independent noise} under $\ProcX$ if, 
for every process $\ProcY$ such that $(\ProcX,\ProcY)$ has independent noise, 
it holds that 
$\lim\limits_{n \to \infty} \hat{\L}_{\ProcX}(g_{n,\cdot},\ProcY;n,\target) = 0$ (a.s.).
Similarly, for an inductive learning rule $f_{n}$, we say $f_{n}$ is strongly universally consistent with independent noise under $\ProcX$ if, 
for every $\ProcY$
such that $(\ProcX,\ProcY)$ has independent noise, 
$\lim\limits_{n \to \infty} \hat{\L}_{\ProcX}(f_{n},\ProcY;n,\target) = 0$ (a.s.).
We say a process $\ProcX$ admits strong universal (inductive/self-adaptive) learning with independent noise if there exists an (inductive/self-adaptive) 
learning rule that is strongly universally consistent with independent noise under $\ProcX$.
We say a self-adaptive learning rule $g_{n,m}$ is optimistically universal with independent noise 
if it is strongly universally consistent with independent noise under every $\ProcX$ that admits 
strong universal self-adaptive learning with independent noise.
\end{definition}

\begin{definition}
\label{def:noisy-function}
For a self-adaptive learning rule $g_{n,m}$ and a process $\ProcX$ on $\X$, 
we say $g_{n,m}$ is strongly universally consistent 
\emph{for noisy functions} under $\ProcX$ if, 
for every process $\ProcY$ that is a noisy function of $\ProcX$,
it holds that 
$\lim\limits_{n \to \infty} \hat{\L}_{\ProcX}(g_{n,\cdot},\ProcY;n,\target) = 0$ (a.s.).
Similarly, for an inductive learning rule $f_{n}$, we say $f_{n}$ is strongly universally consistent for noisy functions under $\ProcX$ if, 
for every $\ProcY$ that is a noisy function of $\ProcX$, 
$\lim\limits_{n \to \infty} \hat{\L}_{\ProcX}(f_{n},\ProcY;n,\target) = 0$ (a.s.).
We say a process $\ProcX$ admits strong universal (inductive/self-adaptive) learning for noisy functions if there exists an (inductive/self-adaptive) 
learning rule that is strongly universally consistent for noisy functions under $\ProcX$.
We say a self-adaptive learning rule $g_{n,m}$ is optimistically universal for noisy functions 
if it is strongly universally consistent for noisy functions under every $\ProcX$ that admits 
strong universal self-adaptive learning for noisy functions.
\end{definition}

In particular, since any i.i.d.\ process $(\ProcX,\ProcY)$ would have $\ProcY$ a noisy function of $\ProcX$
(if $\Y$ is sequentially compact, so that the infimum exists in \Yone), 
we note that the theory we develop here represents a proper generalization of the standard theory of 
universal consistency under i.i.d.\ processes with bounded losses \citep*[e.g.,][]{devroye:96,gyorfi:02}.

In Section~\ref{sec:independent-noise} below, 
we show that for certain restricted types of losses (essentially, those guaranteeing uniqueness of $\target$), 
any process $\ProcX$ satisfying Condition~\ref{con:kc} admits strong universal inductive and self-adaptive learning with independent noise.
In other words, a process $\ProcX$ admits strong universal learning with independent noise 
if (and only if) it admits strong universal learning \emph{without} noise.
In particular, this includes the squared loss ($\loss(a,b)=(a-b)^2$) for $\Y$ any bounded interval in $\reals$.
We also argue that there is a self-adaptive learning rule that is optimistically universal with independent noise.
Thus, the theory developed above for the noiseless setting completely generalizes to allow independent noise, 
for these loss functions.
However, it happens that the $0$-$1$ loss does not satisfy the requirements for this result.
As this is an important loss for the classification setting, in Section~\ref{sec:noisy-classification} 
we extend the theory to hold for 
learning with the $0$-$1$ loss ($\loss(a,b)=\ind[a \neq b]$) for any finite $\Y$, 
but only for the stronger \emph{noisy function} setting.
Specifically, we show that for this loss, any process $\ProcX$ satisfying Condition~\ref{con:kc} admits 
strong universal inductive and self-adaptive learning for noisy functions.
We also show that there is a self-adaptive learning rule that is optimistically universal for noisy functions.
Again, this result generalizes the results for the noiseless setting to the setting of noisy functions.
The approach to obtaining this result is via the traditional \emph{plug-in} technique \citep*[see e.g.,][]{devroye:96}, 
making use of consistent regression estimators (guaranteed to exist by the aforementioned result for learning with independent noise) 
to identify which element in $\Y$ has the highest likelihood given $X_{t}$.

Before proceeding with our results for learning with independent noise, 
we first discuss the motivation for the special role of $\target$ in the 
definition of universal consistency above.
This is motivated by the fact that, in any process $(\ProcX,\ProcY)$ that has independent noise, 
$\target$ is guaranteed to be an \emph{optimal} function (almost surely), 
so that no prediction rule can be \emph{better} than $\target$ in the limit.
Formally, we have the following lemma.

\begin{lemma}
\label{lem:non-negative-noisy-excess}
If $\maxloss < \infty$, for any deterministic self-adaptive learning rule $\hat{f}_{n,m}$, 
for any process $(\ProcX,\ProcY)$ that has independent noise, 
with probability one, $\forall n \in \nats$, 
$\hat{\L}_{\ProcX}(\hat{f}_{n,\cdot},\ProcY;n,\target) \geq 0$.
\end{lemma}
\begin{proof} 
For any $n,m \in \nats$ with $m \geq n$, 
let $g_{n,m}(X_{m+1}) = \hat{f}_{n,m}(X_{1:m},Y_{1:n},X_{m+1})$, 
and define $\Delta_{m+1}^{n} = \loss(g_{n,m}(X_{m+1}),Y_{m+1}) - \loss(\target(X_{m+1}),Y_{m+1})$.
For any $n \in \nats$ note that, by the conditional independence property \Ytwo, the sequence
\begin{equation*}
\left\{ \left( \Delta_{m+1}^{n} - \E\!\left[ \Delta_{m+1}^{n} \middle| X_{m+1}, g_{n,m}(X_{m+1}) \right] \right) \right\}_{m=n}^{\infty}
\end{equation*}
is a martingale difference sequence with respect to $\left\{\left( X_{1:(m+2)}, Y_{1:(m+1)} \right)\right\}_{m=n}^{\infty}$.
Therefore, Azuma's inequality \citep*[][Theorem 9.1]{devroye:96} implies that, 
for any $t \in \nats \cup \{0\}$, with probability at least $1-\frac{1}{(t+1)^2}$, 
\begin{equation*}
\frac{1}{t+1} \left| \sum_{m=n}^{n+t}  \Delta_{m+1}^{n}  - \E\!\left[ \Delta_{m+1}^{n} \middle| X_{m+1}, g_{n,m}(X_{m+1}) \right] \right| 
\leq \maxloss \sqrt{\frac{2\ln(2 (t+1)^2)}{t+1}}.
\end{equation*}
Since $\sum\limits_{t=0}^{\infty} \frac{1}{(t+1)^2} < \infty$, the Borel-Cantelli lemma implies that, with probability one, 
\begin{equation*}
\limsup_{t \to \infty} \frac{1}{t+1} \left| \sum_{m=n}^{n+t}  \Delta_{m+1}^{n} - \E\!\left[ \Delta_{m+1}^{n} \middle| X_{m+1}, g_{n,m}(X_{m+1}) \right] \right| = 0.
\end{equation*}
By the definition of $\target$ from \Yone, and the conditional independence property \Ytwo, for any 
$m \geq n$, 
$\E\!\left[ \Delta_{m+1}^{n} \middle| X_{m+1}, g_{n,m}(X_{m+1}) \right] \geq 0$ almost surely.
Altogether, by the union bound, on an event of probability one, we have 
\begin{equation*}
\hat{\L}_{\ProcX}(\hat{f}_{n,\cdot},\ProcY;n,\target) 
= \limsup_{t \to \infty} \frac{1}{t+1} \sum_{m=n}^{n+t} \E\!\left[ \Delta_{m+1}^{n} \middle| X_{m+1}, g_{n,m}(X_{m+1}) \right]
\geq 0.
\end{equation*}
Since this holds for any fixed $n \in \nats$, the lemma follows by the union bound over all $n \in \nats$.
\end{proof}

\subsection{Learning with Independent Noise}
\label{sec:independent-noise}

We begin with the general setting of learning with independent noise.
In this subsection we will restrict to the case of bounded losses ($\maxloss < \infty$), 
with $\loss$ satisfying some special properties.
Specifically, we suppose that 
there exist functions $\bclower$ and $\bcupper$ mapping $[0,\maxloss] \to [0,\infty)$, 
strictly increasing and continuous, with $\bclower$ convex and $\bcupper$ concave, 
such that $\bclower(0) = \bcupper(0) = 0$ and 
for any $\Y$-valued random variable $Y$, 
$\exists y^{*} \in \Y$ with $\E[ \loss(y^{*},Y) ] = \inf\limits_{y \in \Y} \E[ \loss(y,Y) ]$ 
(i.e., the infimum is realized in $\Y$), 
and 
$\forall y \in \Y$, 
\begin{equation}
\label{eqn:bcfunc}
\bclower(\loss(y,y^{*}))
\leq \E[ \loss(y,Y) - \loss(y^{*},Y) ]
\leq \bcupper(\loss(y,y^{*})).
\end{equation}

As a simple example of this, 
consider the case of bounded regression with the squared loss: 
$\Y = [0,1]$ and $\loss = \loss_{\sq}$ (where $\loss_{\sq}(y,y^{\prime}) = (y-y^{\prime})^{2}$ is the \emph{squared loss}).
In this case, as mentioned above, it is well known that 
$y^{*} = \E[ Y ]$ uniquely, 
and that for any $y \in [0,1]$,
$\E[ \loss(y,Y) - \loss(y^{*},Y) ]
= \loss(y,y^{*})$,
since for any $[0,1]$-valued random variable $Y$ and for $y^* = \E[Y]$,
it holds that $\E[ (Y-y)^2 ] - \E[ (Y-y^*)^2 ] = y^2 - 2 y \E[Y] + 2 y^* \E[Y] - (y^*)^2 = (y-y^*)^2$.
Thus, for $(\Y,\loss)=([0,1],\loss_{\sq})$, the above condition holds with $\bclower(x)=\bcupper(x)=x$.
As we discuss below, the results also have implications for the classification 
setting via the well-known \emph{plug-in} technique \citep*{devroye:96}.

For losses satisfying \eqref{eqn:bcfunc}, 
we will argue that the following specifications yield learning rules 
that are strongly universally consistent with independent noise.
First we specify an inductive learning rule.
Let $\{\G_{i}\}_{i=1}^{\infty}$, $\{m_{i}\}_{i=1}^{\infty}$, and $\{i_{n}\}_{i=1}^{\infty}$ 
be as in the proof of Lemma~\ref{lem:kc-subset-suil}, 
and note that without loss of generality we can suppose $m_{i}$ is strictly increasing 
(since, for the $\gamma_{i}$ values from the proof of Lemma~\ref{lem:kc-subset-suil}, 
the sequence $i_{n}^{\prime} = \min\{ i : m_{i} = m_{i_{n}} \}$ 
can be used in place of $i_{n}$ in Lemma~\ref{lem:srm} while still retaining 
the guarantee in the lemma, and while still satisfying $i_{n}^{\prime}\to\infty$), 
and that $|\G_{i}| = i$ for all $i \in \nats$ (and indeed, this is the case in the construction given in Lemma~\ref{lem:approximating-sequence}).
Then for any $n \in \nats$, $x_{1:n} \in \X^{n}$, and $y_{1:n} \in \Y^{n}$, 
for each $s \in \{m_{i_{n}},\ldots,n\}$ define the function 
$\tilde{f}_{n,s}(x_{1:n},y_{1:n},\cdot)$ as 
\begin{equation*}
\argmin_{f \in \G_{i_{n}}} \frac{1}{s} \sum_{t=1}^{s} \loss(f(x_{t}),y_{t}).
\end{equation*}
Then define the function $\hat{f}_{n}(x_{1:n},y_{1:n},\cdot)$ as
\begin{equation}
\label{eqn:suil-rule-noise}
\argmin_{f \in \G_{i_{n}}} \max_{m_{i_{n}} \leq s \leq n} \frac{1}{s} \sum_{t=1}^{s} \loss(f(x_{t}),\tilde{f}_{n,s}(x_{1:n},y_{1:n},x_{t})).
\end{equation}

We can specify a self-adaptive learning rule analogously, as follows.
Let $\{\F_{i}\}_{i=1}^{\infty}$, $\{\gamma_{i}\}_{i=1}^{\infty}$, and $\{u_{i}\}_{i=1}^{\infty}$ 
be as in \eqref{eqn:sual-index}, with $u_{i}$ strictly increasing here, 
and note that without loss of generality we can suppose $|\F_{i}| = i$.
Define $\hat{i}_{n,m}$ as in \eqref{eqn:sual-index}.  
Then for any $n,m \in \nats$ ($m \geq n$), $x_{1:m} \in \X^{m}$, and $y_{1:n} \in \Y^{n}$,
for each $s \in \{u_{\hat{i}_{n,m}(x_{1:m})},\ldots,n\}$ define the function 
$\tilde{f}_{n,m,s}(x_{1:m},y_{1:n},\cdot)$ as 
\begin{equation*}
\argmin_{f \in \F_{\hat{i}_{n,m}(x_{1:m})}} \frac{1}{s} \sum_{t=1}^{s} \loss(f(x_{t}),y_{t}).
\end{equation*}
Then define the function $\hat{f}_{n,m}(x_{1:m},y_{1:n},\cdot)$ as
\begin{equation}
\label{eqn:sual-rule-noise}
\argmin_{f \in \F_{\hat{i}_{n,m}(x_{1:m})}} \max_{u_{\hat{i}_{n,m}(x_{1:m})} \leq s \leq n} \frac{1}{s} \sum_{t=1}^{s} \loss(f(x_{t}),\tilde{f}_{n,m,s}(x_{1:m},y_{1:n},x_{t})).
\end{equation}
One can easily verify that the choices in these optimizations can be 
made in such a way that the functions $\hat{f}_{n}$ and $\hat{f}_{n,m}$ are measurable 
(e.g., by breaking ties based on a fixed predefined ordering of each $\F_{i}$ set); 
for simplicity, we will suppose ties in the $\argmin$ are broken deterministically, 
so that the learning rules are deterministic functions.

We have the following theorem, which reveals that a process $\ProcX$ 
admits strong universal (inductive/self-adaptive) learning with independent noise 
if and only if it admits strong universal (inductive/self-adaptive) learning \emph{without} noise.
Furthermore, the above learning rules witness the sufficiency of these conditions, 
which further implies that the self-adaptive learning rule 
\eqref{eqn:sual-rule-noise} is optimistically universal with independent noise.

\begin{theorem}
\label{thm:noisy-equivalence}
If $\loss$ satisfies \eqref{eqn:bcfunc}, 
then Condition~\ref{con:kc} is necessary and sufficient for 
a process $\ProcX$ to admit strong universal (inductive/self-adaptive) learning 
with independent noise. 
Moreover, the self-adaptive learning rule $\hat{f}_{n,m}$ defined by 
\eqref{eqn:sual-rule-noise} is optimistically universal with independent noise.
\end{theorem}

The restrictions to $\loss$ guaranteeing existence of $\bclower$ and $\bcupper$ 
provide an important convenience for us: 
namely, under these conditions, we can still characterize consistency 
as convergence to $\target$.
More specifically, we have the following guarantee, which will be a key component in the proof of Theorem~\ref{thm:noisy-equivalence}.

\begin{lemma}
\label{lem:good-on-data}
If $\ProcX \in \KC$, $(\ProcX,\ProcY)$ has independent noise,
$\loss$ satisfies \eqref{eqn:bcfunc}, 
and $\hat{f}_{n}$ and $\hat{f}_{n,m}$ are as in \eqref{eqn:suil-rule-noise} and \eqref{eqn:sual-rule-noise} respectively,
then 
\begin{equation*}
\limsup_{n \to \infty} \hat{\mu}_{\ProcX}( \loss(\hat{f}_{n}(X_{1:n},Y_{1:n},\cdot),\target(\cdot)) ) = 0 \text{ (a.s.)}
\end{equation*}
and 
\begin{equation*}
\limsup_{n \to \infty} \limsup_{t \to \infty} \frac{1}{t+1} \sum_{m=n}^{n+t} \loss(\hat{f}_{n,m}(X_{1:m},Y_{1:n},X_{m+1}),\target(X_{m+1})) = 0 \text{ (a.s.)}.
\end{equation*}
\end{lemma}
\begin{proof}
For brevity, we only give the detailed proof of the claim for the self-adaptive rule $\hat{f}_{n,m}$.
The proof of the claim for the inductive learning rule $\hat{f}_{n}$ follows analogously, 
merely substituting properties of the sequence $i_{n}$ established in the proof of Lemma~\ref{lem:kc-subset-suil}, 
in place of the analogous properties of $\hat{i}_{n}$ used here.  We provide 
an outline of that analogous argument following the detailed proof for 
the self-adaptive learning rule $\hat{f}_{n,m}$, which we now turn to.

To simplify notation, let $\hat{g}_{n,m}(\cdot) = \hat{f}_{n,m}(X_{1:m},Y_{1:n},\cdot)$ and $\tilde{g}_{n,m,s}(\cdot) = \tilde{f}_{n,m,s}(X_{1:m},Y_{1:n},\cdot)$.
Let $m_{n}^{*}$, $\hat{i}_{n}$, $\target_{i}$, $\alpha_{i}$, $\iota_{0}$, and the event $K$ 
all be as in the proof of Theorem~\ref{thm:doubly-universal-adaptive} (defined relative to the fixed function $\target$ from property \Yone), 
and define $\hat{g}_{n} = \hat{g}_{n,m_{n}^{*}}$.

By the conditional independence property \Ytwo, Hoeffding's inequality 
(applied under the conditional distribution given $\ProcX$) and the law 
of total probability imply that, for any $i, s \in \nats$ and $f \in \F_{i}$, with probability at least $1 - \frac{1}{i^{3} s^{2}}$, 
\begin{equation}
\label{eqn:good-on-data-hoeffding}
\left| \frac{1}{s} \sum_{t=1}^{s} \left( \loss( f(X_{t}), Y_{t} ) - \loss(\target(X_{t}),Y_{t}) \right) - \E\!\left[ \loss( f(X_{t}), Y_{t} ) - \loss(\target(X_{t}),Y_{t}) \middle| X_{t} \right] \right|
\leq 2\maxloss \sqrt{ \frac{\ln( 2 i^{3} s^{2} )}{2s} }.
\end{equation}
By the union bound, for any fixed $i \in \nats$, the inequality \eqref{eqn:good-on-data-hoeffding} 
holds simultaneously for all $f \in \F_{i}$ and $s \in \nats$ with probability at least 
$1 - \sum_{s=1}^{\infty} |\F_{i}| \frac{1}{i^{3} s^{2}} = 1 - \frac{\pi^{2}}{6 i^{2}}$ (since $|\F_{i}|=i$).
Furthermore, since $\sum_{i=1}^{\infty} \frac{\pi^{2}}{6 i^{2}} < \infty$, 
the Borel-Cantelli lemma implies that there is an event $\tilde{K}$ of probability one, 
on which $\exists \iota_{1} \in \nats$ such that
\eqref{eqn:good-on-data-hoeffding} holds simultaneously 
for every $i \geq \iota_{1}$ and every $s \in \nats$.

Now suppose the event $K \cap \tilde{K}$ occurs. Recalling that $\lim\limits_{n\to\infty} \hat{i}_{n} = \infty$ by \eqref{eqn:adaptive-hati-limit}, 
let $\tilde{\nu} \in \nats$ be such that $\forall n \geq \tilde{\nu}$ it holds that $\hat{i}_{n} \geq \max\{ \iota_{0}, \iota_{1} \}$, so that 
both \eqref{eqn:good-on-data-hoeffding} (for $i = \hat{i}_{n}$, and for all $s$) and \eqref{eqn:adaptive-targeti-bound} hold.
Then for any $n \geq \tilde{\nu}$ and any $s \in \{ u_{\hat{i}_{n}}, \ldots, n \}$, for any $m \geq m_{n}^{*}$ we have 
\begin{align*}
& \frac{1}{s} \sum_{t=1}^{s} \loss( \tilde{g}_{n,m,s}(X_{t}), Y_{t} ) - \loss(\target(X_{t}),Y_{t})
\\ & \leq \frac{1}{s} \sum_{t=1}^{s} \loss( \target_{\hat{i}_{n}}(X_{t}), Y_{t} ) - \loss(\target(X_{t}),Y_{t})
\\ & \leq 2 \maxloss \sqrt{ \frac{\ln( 2\hat{i}_{n}^{3} s^{2} )}{2s} } + \frac{1}{s} \sum_{t=1}^{s} \E\!\left[ \loss( \target_{\hat{i}_{n}}(X_{t}), Y_{t} ) - \loss(\target(X_{t}),Y_{t}) \middle| X_{t}, \hat{i}_{n} \right]
\\ & \leq 2 \maxloss \sqrt{ \frac{\ln( 2\hat{i}_{n}^{3} s^{2} )}{2s} } + \frac{1}{s} \sum_{t=1}^{s} \bcupper\!\left( \loss( \target_{\hat{i}_{n}}(X_{t}), \target(X_{t}) ) \right)
\\ & \leq 2 \maxloss \sqrt{ \frac{\ln( 2\hat{i}_{n}^{3} s^{2} )}{2s} } + \bcupper\!\left( \frac{1}{s} \sum_{t=1}^{s} \loss( \target_{\hat{i}_{n}}(X_{t}), \target(X_{t}) ) \right)
\\ & \leq 2 \maxloss \sqrt{ \frac{\ln( 2\hat{i}_{n}^{3} s^{2} )}{2s} } + \bcupper( \alpha_{\hat{i}_{n}} )
\leq 2 \maxloss \sqrt{ \frac{\ln( 2 \hat{i}_{n}^{5} )}{2\hat{i}_{n}} } + \bcupper( \alpha_{\hat{i}_{n}} ),
\end{align*}
where the last four inequalities are due to 
\eqref{eqn:bcfunc}, Jensen's inequality (due to concavity of $\bcupper$), 
\eqref{eqn:adaptive-targeti-bound} and monotonicity of $\bcupper$, 
and the fact that $u_{i}$ is strictly increasing (so that $s \geq u_{\hat{i}_{n}} \geq \hat{i}_{n}$).

Now denote by $\{Y_{t}^{\prime}\}_{t=1}^{\infty}$ a sequence with the same conditional distribution given $\ProcX$ as $\ProcY$, but conditionally independent of $\ProcY$ given $\ProcX$.
Again by \eqref{eqn:good-on-data-hoeffding}, if $n \geq \tilde{\nu}$ and $m \geq m_{n}^{*}$, 
every $s \in \{ u_{\hat{i}_{n}}, \ldots, n \}$ has 
\begin{align*}
& \frac{1}{s} \sum_{t=1}^{s} \loss(\tilde{g}_{n,m,s}(X_{t}),Y_{t}) - \loss(\target(X_{t}),Y_{t})
\\ & \geq - 2\maxloss \sqrt{ \frac{\ln( 2 \hat{i}_{n}^{3} s^{2} )}{2s} } + \frac{1}{s} \sum_{t=1}^{s} \E\!\left[ \loss(\tilde{g}_{n,m,s}(X_{t}),Y_{t}^{\prime}) - \loss(\target(X_{t}),Y_{t}^{\prime}) \middle| X_{t}, \tilde{g}_{n,m,s} \right]
\\ & \geq - 2\maxloss \sqrt{ \frac{\ln( 2 \hat{i}_{n}^{5})}{2\hat{i}_{n}} } + \frac{1}{s} \sum_{t=1}^{s} \bclower\!\left( \loss(\tilde{g}_{n,m,s}(X_{t}),\target(X_{t})) \right)
\\ & \geq - 2\maxloss \sqrt{ \frac{\ln( 2 \hat{i}_{n}^{5})}{2\hat{i}_{n}} } + \bclower\!\left( \frac{1}{s} \sum_{t=1}^{s} \loss(\tilde{g}_{n,m,s}(X_{t}),\target(X_{t})) \right),
\end{align*}
where the last two inequalities use that $u_{i}$ is strictly increasing (so that $s \geq u_{\hat{i}_{n}} \geq \hat{i}_{n}$) 
and \eqref{eqn:bcfunc} along with Jensen's inequality (due to convexity of $\bclower$).

Define  
\begin{equation*}
\beta_{i} = \bclower^{-1}\!\left( 4 \maxloss \sqrt{ \frac{\ln( 2 i^{5} )}{2i} } + \bcupper( \alpha_{i} ) \right),
\end{equation*}
noting that the inverse function $\bclower^{-1}$ is well-defined due to $\bclower$ being continuous and strictly increasing.
Note that $\lim\limits_{n \to \infty} \beta_{\hat{i}_{n}} = 0$ by the facts that 
$\bclower$ and $\bcupper$ are continuous with $\bclower(0) = \bcupper(0) = 0$, 
together with the facts that $\alpha_{i} \to 0$ and $\hat{i}_{n} \to \infty$ (as established in the proof of Theorem~\ref{thm:doubly-universal-adaptive}).
Altogether, we have established that, on $K \cap \tilde{K}$, every $n \geq \tilde{\nu}$, $m \geq m_{n}^{*}$, and $s \in \{ u_{\hat{i}_{n}}, \ldots, n \}$ satisfy 
\begin{equation}
\label{eqn:noise-beta-bound}
\frac{1}{s} \sum_{t=1}^{s} \loss(\tilde{g}_{n,m,s}(X_{t}),\target(X_{t})) \leq \beta_{\hat{i}_{n}}.
\end{equation}

Now to relate this performance guarantee for these $\tilde{g}_{n,m,s}$ functions on the first $n$ data points 
to performance of $\hat{g}_{n,m}$ on the full sequence, note that (just as in the proof of Theorem~\ref{thm:doubly-universal-adaptive}) 
since $\hat{g}_{n,m} = \hat{g}_{n}$ for all $m \geq m_{n}^{*}$, we have 
\begin{align*}
& \limsup_{s \to \infty} \frac{1}{s+1} \sum_{m=n}^{n+s} \loss(\hat{g}_{n,m}(X_{m+1}),\target(X_{m+1}))
\\ & = \limsup_{s \to \infty} \frac{1}{s} \sum_{t=1}^{s} \loss(\hat{g}_{n}(X_{t}),\target(X_{t}))
\leq \sup_{u_{\hat{i}_{n}} \leq s < \infty} \frac{1}{s} \sum_{t=1}^{s} \loss(\hat{g}_{n}(X_{t}),\target(X_{t})).
\end{align*}
Again supposing $K \cap \tilde{K}$ holds and that $n \geq \tilde{\nu}$,
by the relaxed triangle inequality we have
\begin{align*}
& \sup_{u_{\hat{i}_{n}} \leq s < \infty} \frac{1}{s} \sum_{t=1}^{s} \loss(\hat{g}_{n}(X_{t}),\target(X_{t})) 
\\ & \leq \triconst \left( \sup_{u_{\hat{i}_{n}} \leq s < \infty} \frac{1}{s} \sum_{t=1}^{s} \loss(\hat{g}_{n}(X_{t}),\target_{\hat{i}_{n}}(X_{t})) \right) + \triconst \left( \sup_{u_{\hat{i}_{n}} \leq s < \infty} \frac{1}{s} \sum_{t=1}^{s} \loss(\target_{\hat{i}_{n}}(X_{t}),\target(X_{t})) \right)
\\ & \leq \triconst \left( \max_{u_{\hat{i}_{n}} \leq s \leq n} \frac{1}{s} \sum_{t=1}^{s} \loss(\hat{g}_{n}(X_{t}),\target_{\hat{i}_{n}}(X_{t})) \right) + \triconst( \gamma_{\hat{i}_{n}} + \alpha_{\hat{i}_{n}} ),
\end{align*}
where the inequality in the last line is due to \eqref{eqn:adaptive-hati-gamma-bound} and \eqref{eqn:adaptive-targeti-bound}.
By the relaxed triangle inequality again, we have 
\begin{align*}
& \max_{u_{\hat{i}_{n}} \leq s \leq n} \frac{1}{s} \sum_{t=1}^{s} \loss(\hat{g}_{n}(X_{t}),\target_{\hat{i}_{n}}(X_{t}))
\\ & \leq \triconst \left( \max_{u_{\hat{i}_{n}} \leq s \leq n} \frac{1}{s} \sum_{t=1}^{s} \loss(\target_{\hat{i}_{n}}(X_{t}),\tilde{g}_{n,m,s}(X_{t})) \right)
+ \triconst \left( \max_{u_{\hat{i}_{n}} \leq s \leq n} \frac{1}{s} \sum_{t=1}^{s} \loss(\hat{g}_{n}(X_{t}),\tilde{g}_{n,m,s}(X_{t})) \right),
\end{align*}
and since $\hat{i}_{n,m}(X_{1:m}) = \hat{i}_{n}$ and $\hat{g}_{n,m} = \hat{g}_{n}$ for $m \geq m_{n}^{*}$, 
the definition of $\hat{f}_{n,m}$ from \eqref{eqn:sual-rule-noise} implies that in this case this last line is at most
\begin{equation*}
2 \triconst \left( \max_{u_{\hat{i}_{n}} \leq s \leq n} \frac{1}{s} \sum_{t=1}^{s} \loss(\target_{\hat{i}_{n}}(X_{t}),\tilde{g}_{n,m,s}(X_{t})) \right).
\end{equation*}
Furthermore, the relaxed triangle inequality implies 
\begin{align*}
& \max_{u_{\hat{i}_{n}} \leq s \leq n} \frac{1}{s} \sum_{t=1}^{s} \loss(\target_{\hat{i}_{n}}(X_{t}),\tilde{g}_{n,m,s}(X_{t}))
\\ & \leq \triconst \left( \max_{u_{\hat{i}_{n}} \leq s \leq n} \frac{1}{s} \sum_{t=1}^{s} \loss(\target_{\hat{i}_{n}}(X_{t}),\target(X_{t})) \right)
+ \triconst \left( \max_{u_{\hat{i}_{n}} \leq s \leq n} \frac{1}{s} \sum_{t=1}^{s} \loss(\tilde{g}_{n,m,s}(X_{t}),\target(X_{t})) \right)
\\ & \leq \triconst ( \alpha_{\hat{i}_{n}} + \beta_{\hat{i}_{n}} ),
\end{align*}
where the last inequality is due to \eqref{eqn:adaptive-targeti-bound} and \eqref{eqn:noise-beta-bound}.
Altogether, on $K \cap \tilde{K}$, any $n \geq \tilde{\nu}$ has 
\begin{equation*}
\limsup_{s \to \infty} \frac{1}{s+1} \sum_{m=n}^{n+s} \loss(\hat{g}_{n,m}(X_{m+1}),\target(X_{m+1}))
\leq 2 \triconst^{3} ( \alpha_{\hat{i}_{n}} + \beta_{\hat{i}_{n}} ) + \triconst ( \gamma_{\hat{i}_{n}} + \alpha_{\hat{i}_{n}} ).
\end{equation*}
Thus, since $\alpha_{\hat{i}_{n}} \to 0$, $\gamma_{\hat{i}_{n}} \to 0$, and $\beta_{\hat{i}_{n}} \to 0$ (as established above), 
and the event $K \cap \tilde{K}$ holds with probability one (by the union bound), 
the claim for the self-adaptive learning rule $\hat{f}_{n,m}$ in the statement of the lemma follows.

The claim for the inductive learning rule $\hat{f}_{n}$ follows by a very similar argument, 
except replacing $\hat{i}_{n}$ above with the quantity $i_{n}$ from the proof of Lemma~\ref{lem:kc-subset-suil}.
For brevity, we only give an outline here to illustrate the key steps, leaving the details as an exercise for the interested reader.
For this argument, we take the definitions of $\target_{i}$, $\alpha_{i}$, and $\gamma_{i}$ as in the proof of Lemma~\ref{lem:kc-subset-suil}.
Note that an event identical to $\tilde{K}$ now holds for the sets $\G_{i}$.
Defining $\hat{h}_{n}(\cdot) = \hat{f}_{n}(X_{1:n},Y_{1:n},\cdot)$ and $\tilde{h}_{n,s}(\cdot) = \tilde{f}_{n,s}(X_{1:n},Y_{1:n},\cdot)$,
and following the same reasoning as above (using the analogous results from the proof of Lemma~\ref{lem:kc-subset-suil}) 
we have that with probability one, for every sufficiently large $n$, every $s \in \{m_{i_{n}},\ldots,n\}$ satisfies 
$\frac{1}{s} \sum\limits_{t=1}^{s} \loss(\tilde{h}_{n,s}(X_{t}),\target(X_{t})) \leq \beta_{i_{n}}$.
Continuing to follow the same arguments as above, but now using \eqref{eqn:inductive-n-sqrt-gamma-bound}, 
we then have with probability one, for all sufficiently large $n$, 
\begin{align*}
\hat{\mu}_{\ProcX}\!\left( \loss\!\left(\hat{h}_{n}(\cdot),\target(\cdot)\right) \right)
& \leq \triconst \max_{m_{i_{n}} \leq s \leq n} \frac{1}{s} \sum_{t=1}^{s} \loss(\hat{h}_{n}(X_{t}),\target_{i_{n}}(X_{t})) + \triconst \left( \sqrt{\gamma_{i_{n}}} + \alpha_{i_{n}} \right)
\\ & \leq 2 \triconst^{2} \max_{m_{i_{n}} \leq s \leq n} \frac{1}{s} \sum_{t=1}^{s} \loss(\target_{i_{n}}(X_{t}),\tilde{h}_{n,s}(X_{t})) + \triconst \left( \sqrt{\gamma_{i_{n}}} + \alpha_{i_{n}} \right)
\\ & \leq 2 \triconst^{3} ( \alpha_{i_{n}} + \beta_{i_{n}} ) + \triconst \left( \sqrt{\gamma_{i_{n}}} + \alpha_{i_{n}} \right),
\end{align*}
which converges to $0$ as $n \to \infty$.
\end{proof}

To complete the proof of the claims for $\hat{f}_{n}$ and $\hat{f}_{n,m}$ in Theorem~\ref{thm:noisy-equivalence}, 
we will compose the above result with the following general lemma.

\begin{lemma}
\label{lem:reduction-to-realizable}
Fix any process $\ProcX$.
If $\loss$ satisfies \eqref{eqn:bcfunc}, then 
for any deterministic self-adaptive learning rule $f_{n,m}$, 
if, for every $\ProcY$ such that $(\ProcX,\ProcY)$ has independent noise,
\begin{equation*}
\limsup_{n \to \infty} \limsup_{t \to \infty} \frac{1}{t+1} \sum_{m=n}^{n+t} \loss(f_{n,m}(X_{1:m},Y_{1:n},X_{m+1}),\target(X_{m+1})) = 0 \text{ (a.s.)},
\end{equation*}
then $f_{n,m}$ is strongly universally consistent with independent noise under $\ProcX$.
\end{lemma}
\begin{proof}
For any $n,m \in \nats$ with $m \geq n$, 
define $g_{n,m}(x) = f_{n,m}(X_{1:m},Y_{1:n},x)$ 
and $\Delta_{m+1}^{n} = \loss(g_{n,m}(X_{m+1}),Y_{m+1}) - \loss(\target(X_{m+1}),Y_{m+1})$.
By the assumed properties of the loss $\loss$ and the conditional independence property \Ytwo, 
every $m \geq n$ has 
\begin{equation*}
\E\!\left[ \Delta_{m+1}^{n} \middle| X_{m+1}, g_{n,m}(X_{m+1}) \right]
\leq \bcupper\!\left(\loss(g_{n,m}(X_{m+1}),\target(X_{m+1}))\right).
\end{equation*}
Then note that, due to the conditional independence property \Ytwo, for any $n \in \nats$, 
\begin{equation*}
\left\{ \Big( \Delta_{m+1}^{n} - \E\!\left[ \Delta_{m+1}^{n} \middle| X_{m+1}, g_{n,m}(X_{m+1}) \right]\Big) \right\}_{m=n}^{\infty}
\end{equation*}
is a martingale difference sequence with respect to $\{(X_{1:(m+2)},Y_{1:(m+1)})\}_{m=n}^{\infty}$.
Therefore, Azuma's inequality \citep*[e.g.,][Theorem 9.1]{devroye:96} implies that, for any $t \in \nats \cup \{0\}$, 
with probability at least $1-\frac{1}{n^{2} (t+1)^{2}}$, 
\begin{equation*}
\frac{1}{t+1} \left| \sum_{m=n}^{n+t} \Delta_{m+1}^{n}  - \E\!\left[ \Delta_{m+1}^{n} \middle| X_{m+1}, g_{n,m}(X_{m+1}) \right] \right|
\leq \maxloss \sqrt{ \frac{2\ln(2 n^{2} (t+1)^{2} )}{(t+1)} }.
\end{equation*}
Since $\sum\limits_{n = 1}^{\infty} \sum\limits_{t = 0}^{\infty} \frac{1}{n^{2} (t+1)^{2}} < \infty$, 
the Borel-Cantelli lemma implies that, on an event $E_{1}$ of probability one, 
\begin{equation*}
\limsup_{n\to\infty} \limsup_{t \to \infty} \frac{1}{t+1} \left| \sum_{m=n}^{n+t} \Delta_{m+1}^{n} - \E\!\left[ \Delta_{m+1}^{n} \middle| X_{m+1}, g_{n,m}(X_{m+1}) \right] \right|
= 0,
\end{equation*}
so that 
\begin{equation*}
\limsup_{n \to \infty} \limsup_{t\to\infty} \frac{1}{t+1} \sum_{m=n}^{n+t} \Delta_{m+1}^{n} 
\leq \limsup_{n \to \infty} \limsup_{t \to \infty} \frac{1}{t+1} \sum_{m=n}^{n+t} \bcupper\!\left(\loss(g_{n,m}(X_{m+1}),\target(X_{m+1}))\right).
\end{equation*}
By Jensen's inequality, the right hand side above is at most
\begin{equation*}
\limsup_{n \to \infty} \limsup_{t \to \infty} \bcupper\!\left( \frac{1}{t+1} \sum_{m=n}^{n+t} \loss(g_{n,m}(X_{m+1}),\target(X_{m+1}))\right),
\end{equation*}
and since $\bcupper$ is continuous and strictly increasing, and its argument is bounded, this expression equals 
\begin{equation*}
\bcupper\!\left( \limsup_{n \to \infty} \limsup_{t \to \infty} \frac{1}{t+1} \sum_{m=n}^{n+t} \loss(g_{n,m}(X_{m+1}),\target(X_{m+1})) \right).
\end{equation*}
By the assumed property of $f_{n,m}$ in the statement of the lemma, 
and the fact that $\bcupper(0) = 0$, 
this last expression equals $0$ on an event $E_{2}$ of probability one.
Thus, on the event $E_{1} \cap E_{2}$, 
we have 
$\limsup\limits_{n \to \infty} \hat{\L}_{\ProcX}(f_{n,\cdot},\ProcY;n,\target) \leq 0$.
Also recall that Lemma~\ref{lem:non-negative-noisy-excess} implies that, 
on an event $E_{3}$ of probability one, 
$\liminf\limits_{n \to \infty} \hat{\L}_{\ProcX}(f_{n,\cdot},\ProcY;n,\target) \geq 0$.
Altogether, $\hat{\L}_{\ProcX}(f_{n,\cdot},\ProcY;n,\target) \to 0$ on the event $E_{1} \cap E_{2} \cap E_{3}$, 
which has probability one by the union bound.
\end{proof}

We may also note that, since any inductive learning rule $f_{n}$
can be interpreted as a self-adaptive learning rule that simply ignores the additional data $X_{(n+1):m}$, 
Lemma~\ref{lem:reduction-to-realizable} has the further implication that, 
if $\loss$ satisfies \eqref{eqn:bcfunc}, then 
for any deterministic inductive learning rule $f_{n}$, 
if, for every $\ProcY$ such that $(\ProcX,\ProcY)$ has independent noise,
\begin{equation*}
\limsup_{n \to \infty} \hat{\mu}_{\ProcX}( \loss( f_{n}(X_{1:n},Y_{1:n},\cdot), \target(\cdot) ) ) = 0 \text{ (a.s.)},
\end{equation*}
then $f_{n}$ is strongly universally consistent with independent noise under $\ProcX$.

With the above two lemmas in hand, we are ready for the proof of Theorem~\ref{thm:noisy-equivalence}.

\begin{proof}[of Theorem~\ref{thm:noisy-equivalence}]
It follows immediately from Theorem~\ref{thm:main}
that Condition~\ref{con:kc} is a necessary condition for $\ProcX$ to admit 
strong universal learning (either inductive or self-adaptive) with independent noise,
since the noise-free case is a special case of the stated conditions.
Furthermore, Lemmas~\ref{lem:good-on-data} and \ref{lem:reduction-to-realizable}
together imply that Condition~\ref{con:kc} is sufficient for the rules $\hat{f}_{n}$ and $\hat{f}_{n,m}$ 
to be strongly universally consistent with independent noise, 
and therefore also sufficient for $\ProcX$ to admit strong universal (inductive/self-adaptive) 
learning with independent noise.
Finally, note that
the self-adaptive learning rule 
$\hat{f}_{n,m}$ has no direct dependence on the distribution of $\ProcX$, 
aside from the data supplied as its arguments, and yet (as just established) is 
strongly universally consistent with independent noise under every $\ProcX$ satisfying Condition~\ref{con:kc}.
Together with the fact (also just established) that Condition~\ref{con:kc} is necessary and sufficient for $\ProcX$ to 
admit strong universal self-adaptive learning with independent noise, this also establishes the claim 
that $\hat{f}_{n,m}$ is optimistically universal with independent noise.
\end{proof}

In particular, this implies \eqref{eqn:sual-rule-noise} is optimistically universal with independent noise 
for the special case of \emph{regression}, where $\Y$ is any bounded interval of $\reals$ 
and $\loss$ is the squared loss: $\loss_{\sq}(y,y') = (y-y')^{2}$.  
However, since not every $(\Y,\loss)$ satisfies \eqref{eqn:bcfunc}, the questions of 
whether Condition~\ref{con:kc} is sufficient for universal learning 
with independent noise, and whether there exist self-adaptive learning rules that are 
optimistically universal with independent noise, for general (bounded, separable) 
losses $\loss$, remain open.

\begin{problem}
\label{prob:kc-independent-noise}
Is Condition~\ref{con:kc} sufficient for a process $\ProcX$ to 
admit strong universal inductive and self-adaptive learning 
with independent noise, 
for \emph{every} separable near-metric space $(\Y,\loss)$ with $\maxloss < \infty$?
\end{problem}

\begin{problem}
\label{prob:optimistic-independent-noise}
Is it true that, for \emph{every} separable near-metric space $(\Y,\loss)$ with $\maxloss < \infty$, 
there exists a self-adaptive learning rule that is 
optimistically universal with independent noise?
\end{problem}

\subsection{Learning Noisy Functions}
\label{sec:noisy-classification}

While the above results for learning with independent noise are quite general, 
it turns out the important problem of \emph{classification} with the $0$-$1$ loss is not directly covered by 
these results, since it does not guarantee the existence of functions $\bclower$, $\bcupper$ satisfying \eqref{eqn:bcfunc}.
Fortunately, we can extend the above theory to a result on classification for \emph{noisy functions} 
via the well-known \emph{plug-in} classifier technique \citep*[see e.g.,][Theorem 2.2]{devroye:96}.

Specifically, let $\hat{f}_{n}^{\sqsmall}$ and $\hat{f}_{n,m}^{\sqsmall}$ be the inductive and self-adaptive learning rules 
$\hat{f}_{n}$ and $\hat{f}_{n,m}$ 
from \eqref{eqn:suil-rule-noise} and \eqref{eqn:sual-rule-noise}, 
respectively, 
but defined for the setting $(\Y,\loss) = ([0,1],\loss_{\sq})$ (where $\loss_{\sq}(a,b) = (a-b)^2$).\footnote{In fact, 
it is not hard to show that, in the arguments below, it suffices to take $\hat{f}_{n}^{\sqsmall}$ and $\hat{f}_{n,m}^{\sqsmall}$ 
as \emph{any} learning rules that are strongly universally consistent for noisy functions with respect to $([0,1],\loss_{\sq})$ 
under the given $\ProcX \in \KC$, 
and the resulting ``plug-in'' learning rules $\hat{h}_{n}$ and $\hat{h}_{n,m}$ will then be strongly universally consistent 
for noisy functions under $\ProcX$ with respect to $(\Y,\loss_{\zo})$ for finite $\Y$.  As this more general 
reduction requires a few extra steps in the proof, we present the result specialized to \eqref{eqn:suil-rule-noise} and \eqref{eqn:sual-rule-noise} 
for simplicity.}
In the finite classification problem, 
we consider a setting with $\Y$ a finite set with $|\Y| \geq 2$, and $\loss = \loss_{\zo}$ (where $\loss_{\zo}(a,b) = \ind[a \neq b]$).
For any $n \in \nats$, $x_{1:n} \in \X^{n}$, $y_{1:n} \in \Y^{n}$, and $x \in \X$, 
define an inductive learning rule 
\begin{equation}
\label{eqn:suil-rule-noisy-classification}
\hat{h}_{n}(x_{1:n},y_{1:n},x) = \argmax_{y \in \Y} \hat{f}_{n}^{\sqsmall}(x_{1:n},\ind_{\{y\}}(y_{1:n}),x).
\end{equation}
Similarly, for any $n,m \in \nats$ ($m \geq n$), any $x_{1:m} \in \X^{m}$, $y_{1:n} \in \Y^{n}$, and $x \in \X$, 
define a self-adaptive learning rule 
\begin{equation}
\label{eqn:sual-rule-noisy-classification}
\hat{h}_{n,m}(x_{1:m},y_{1:n},x) = \argmax_{y \in \Y} \hat{f}_{n,m}^{\sqsmall}(x_{1:m},\ind_{\{y\}}(y_{1:n}),x).
\end{equation}
Since $\hat{f}_{n}^{\sqsmall}$ and $\hat{f}_{n,m}^{\sqsmall}$ are measurable functions, 
the functions $\hat{h}_{n}$ and $\hat{h}_{n,m}$ can also be defined as measurable functions 
(e.g., by breaking ties in the $\argmax$ based on a pre-specified preference order on the finite set $\Y$); 
for simplicity, let us suppose ties in the $\argmax$ are broken deterministically, 
so that $\hat{h}_{n}$ and $\hat{h}_{n,m}$ are deterministic functions.

Note that when $\ProcY$ is a noisy function of $\ProcX$, for every $y \in \Y$ the conditional probability 
$\P( Y_{t} = y | X_{t} )$ is a $t$-invariant function of $X_{t}$: that is, there is a function $\eta(\cdot;y)$ such that 
$\eta(X_{t};y) = \P( Y_{t} = y | X_{t} )$ for every $t$. 
Moreover, the value $\eta(X_{t};y)$ 
minimizes $\E\!\left[ (\eta(X_{t};y) - \ind_{\{y\}}(Y_{t}))^{2} \middle| X_{t} \right]$ almost surely.
Thus, for $\ProcX \in \KC$,
Lemma~\ref{lem:good-on-data} implies that for each $y \in \Y$,  
the estimators $\hat{f}_{n}^{\sqsmall}(X_{1:n},\ind_{\{y\}}(Y_{1:n}),X_{m+1})$ and $\hat{f}_{n,m}^{\sqsmall}(X_{1:m},\ind_{\{y\}}(Y_{1:n}),X_{m+1})$
will (on average, in the limit) be very close to $\P( Y_{m+1} = y | X_{m+1} )$ for all large $n$.
By this fact, we see that the learning rules $\hat{h}_{n}$ and $\hat{h}_{n,m}$ will 
(on average, in the limit) predict with a value $y$ that nearly maximizes $\P( Y_{m+1} = y | X_{m+1} )$, 
and therefore achieves a nearly-minimal $0$-$1$ loss for all large $n$.
The following theorem formalizes these claims, and summarizes our results on learning for noisy functions.

\begin{theorem}
\label{thm:noisy-classification}
For finite $\Y$ with $|\Y| \geq 2$, and $\loss = \loss_{\zo}$, 
Condition~\ref{con:kc} is necessary and sufficient for a process $\ProcX$ to admit strong universal (inductive/self-adaptive) 
learning for noisy functions.  Moreover, the self-adaptive learning rule $\hat{h}_{n,m}$ defined by \eqref{eqn:sual-rule-noisy-classification} 
is optimistically universal for noisy functions.
\end{theorem}
\begin{proof}
The fact that Condition~\ref{con:kc} is necessary for $\ProcX$ to admit strong universal 
inductive or self-adaptive learning for noisy functions 
is immediate from Theorem~\ref{thm:main} since the noise-free case is a special case of a noisy function: that is, 
defining $Y_{t} = \target(X_{t})$ for a $t$-invariant measurable function $\target$ 
always satisfies the property of $\ProcY$ being a noisy function of $\ProcX$.

For the sufficiency claim, for brevity, we only present the details for the self-adaptive learning rule $\hat{h}_{n,m}$.
The proof for the inductive learning rule $\hat{h}_{n}$ is essentially identical, except replacing 
every occurence of $\hat{h}_{n,m}(X_{1:m},Y_{1:n},X_{m+1})$ with $\hat{h}_{n}(X_{1:n},Y_{1:n},X_{m+1})$ 
and every occurence of $\hat{f}_{n,m}^{\sqsmall}(X_{1:m},\ind_{\{y\}}(Y_{1:n}),X_{m+1})$ with $\hat{f}_{n}^{\sqsmall}(X_{1:n},\ind_{\{y\}}(Y_{1:n}),X_{m+1})$.

We now proceed with the proof for the self-adaptive rule $\hat{h}_{n,m}$.
Suppose $\ProcX$ satisfies Condition~\ref{con:kc}, and that $\ProcY$ is a noisy function of $\ProcX$.
As mentioned above, since $\ProcY$ is a noisy function of $\ProcX$, 
for every $y \in \Y$ the conditional probability $\P( Y_{t} = y | X_{t} )$ is a $t$-invariant function of $X_{t}$. 
Also, as is well known, for any $p \in [0,1]$, it holds that  
$\E\!\left[ (p-\ind_{\{y\}}(Y_{t}))^{2} \middle| X_{t} \right] = \E\!\left[ (p - \P( Y_{t} = y | X_{t} ))^{2} \middle| X_{t} \right] + \E\!\left[ (\P( Y_{t} = y | X_{t} ) - \ind_{\{y\}}(Y_{t}))^{2} \middle| X_{t} \right]$,
which is minimized at $p = \P(Y_{t} = y | X_{t})$ almost surely.
Therefore, the process $\{(X_{t},\ind_{\{y\}}(Y_{t}))\}_{t=1}^{\infty}$ satisfies property \Yone\ for the squared loss $\loss_{\sq}$ on $[0,1]$, 
with the function $x \mapsto \eta(x;y) := \P( Y_{t} = y | X_{t} = x )$ being the function realizing the minimum value of $\E[ \loss_{\sq}(\eta(X_{t};y),\ind_{\{y\}}(Y_{t})) | X_{t} ]$ (a.s.).
Furthermore, since $(\ProcX,\ProcY)$ has independent noise, it follows immediately that, $\forall t \in \nats$, 
the variable $\ind_{\{y\}}(Y_{t})$ is conditionally independent of $\{(X_{t^{\prime}},\ind_{\{y\}}(Y_{t^{\prime}}))\}_{t^{\prime} \neq t}$ given $X_{t}$: 
that is, the process $\{(X_{t},\ind_{\{y\}}(Y_{t}))\}_{t=1}^{\infty}$ also satisfies property \Ytwo.
Therefore, the process $\{(X_{t},\ind_{\{y\}}(Y_{t}))\}_{t=1}^{\infty}$ has independent noise (under the loss $\loss_{\sq}$ on $[0,1]$).
Furthermore, as discussed above, the loss $\loss_{\sq}$ on $[0,1]$ satisfies \eqref{eqn:bcfunc} with $\bclower(x) = \bcupper(x) = x$.
Therefore, Lemma~\ref{lem:good-on-data} and the union bound imply that, on an event $E$ of probability one, we have 
\begin{equation*}
\max_{y \in \Y} \limsup_{n \to \infty} \limsup_{t \to \infty} \frac{1}{t+1} \sum_{m=n}^{n+t} \left( \hat{f}_{n,m}^{\sqsmall}(X_{1:m},\ind_{\{y\}}(Y_{1:n}),X_{m+1}) - \eta(X_{m+1};y) \right)^{2} = 0.
\end{equation*}
Furthermore, since the maximum of a finite number of values is continuous and nondecreasing in those values, 
this implies that on the event $E$, 
\begin{equation*}
\limsup_{n \to \infty} \limsup_{t \to \infty} \max_{y \in \Y} \frac{1}{t+1} \sum_{m=n}^{n+t} \left( \hat{f}_{n,m}^{\sqsmall}(X_{1:m},\ind_{\{y\}}(Y_{1:n}),X_{m+1}) - \eta(X_{m+1};y) \right)^{2} = 0,
\end{equation*}
and again because $\Y$ is a finite set, this implies that 
\begin{align*}
& \limsup_{n \to \infty} \limsup_{t \to \infty} \frac{1}{t+1} \sum_{m=n}^{n+t} \max_{y \in \Y}  \left( \hat{f}_{n,m}^{\sqsmall}(X_{1:m},\ind_{\{y\}}(Y_{1:n}),X_{m+1}) - \eta(X_{m+1};y) \right)^{2}
\\ & \leq \limsup_{n \to \infty} \limsup_{t \to \infty} \frac{1}{t+1} \sum_{m=n}^{n+t} \sum_{y \in \Y}  \left( \hat{f}_{n,m}^{\sqsmall}(X_{1:m},\ind_{\{y\}}(Y_{1:n}),X_{m+1}) - \eta(X_{m+1};y) \right)^{2}
\\ & \leq \limsup_{n \to \infty} \limsup_{t \to \infty} |\Y| \max_{y \in \Y} \frac{1}{t+1} \sum_{m=n}^{n+t} \left( \hat{f}_{n,m}^{\sqsmall}(X_{1:m},\ind_{\{y\}}(Y_{1:n}),X_{m+1}) - \eta(X_{m+1};y) \right)^{2} = 0.
\end{align*}
By Jensen's inequality, this further implies that on the event $E$, 
\begin{equation*}
\limsup_{n \to \infty} \limsup_{t \to \infty} \left( \frac{1}{t+1} \sum_{m=n}^{n+t} \max_{y \in \Y}  \left| \hat{f}_{n,m}^{\sqsmall}(X_{1:m},\ind_{\{y\}}(Y_{1:n}),X_{m+1}) - \eta(X_{m+1};y) \right| \right)^{2} = 0,
\end{equation*}
which (since $x \mapsto x^2$ is continuous and nondecreasing on $[0,1]$) implies 
\begin{equation}
\label{eqn:noisy-classification-good-y}
\limsup_{n \to \infty} \limsup_{t \to \infty} \frac{1}{t+1} \sum_{m=n}^{n+t} \max_{y \in \Y}  \left| \hat{f}_{n,m}^{\sqsmall}(X_{1:m},\ind_{\{y\}}(Y_{1:n}),X_{m+1}) - \eta(X_{m+1};y) \right| = 0.
\end{equation}

For each $n,m \in \nats$ with $m \geq n$, define 
$\hat{Y}_{n,m+1} = \hat{h}_{n,m}(X_{1:m},Y_{1:n},X_{m+1})$ and 
$\Delta_{m+1}^{n} = \ind\!\left[ \hat{Y}_{n,m+1} \neq Y_{m+1} \right] - \ind\!\left[ \target(X_{m+1}) \neq Y_{m+1} \right]$.
Then note that, due to the conditional independence property \Ytwo, for each $n \in \nats$, the sequence 
\begin{equation*}
\left\{ \Delta_{m+1}^{n} - \E\!\left[ \Delta_{m+1}^{n} \middle| X_{m+1}, \hat{Y}_{n,m+1} \right] \right\}_{m=n}^{\infty}
\end{equation*} 
is a martingale difference sequence with respect to $\{ (X_{1:(m+2)},Y_{1:(m+1)}) \}_{m=n}^{\infty}$.
Therefore, Azuma's inequality \citep*[e.g.,][Theorem 9.1]{devroye:96} implies that, for any $t \in \nats \cup \{0\}$, 
with probability at least $1 - \frac{1}{n^2 (t+1)^{2}}$,  
\begin{equation*}
\frac{1}{t+1} \left| \sum_{m=n}^{n+t} \Delta_{m+1}^{n} - \E\!\left[ \Delta_{m+1}^{n} \middle| X_{m+1}, \hat{Y}_{n,m+1} \right] \right| \leq \sqrt{\frac{2 \ln( 2 n^{2} (t+1)^{2} )}{t+1} }.
\end{equation*}
Since $\sum\limits_{n = 1}^{\infty} \sum\limits_{t = 0}^{\infty} \frac{1}{n^{2} (t+1)^{2}} < \infty$, 
the Borel-Cantelli lemma implies that, on an event $E^{\prime}$ of probability one, 
\begin{equation*}
\limsup_{n \to \infty} \limsup_{t \to \infty} \frac{1}{t+1} \left| \sum_{m=n}^{n+t} \Delta_{m+1}^{n} - \E\!\left[ \Delta_{m+1}^{n} \middle| X_{m+1}, \hat{Y}_{n,m+1} \right] \right| = 0,
\end{equation*}
which implies 
\begin{equation*}
\limsup_{n \to \infty} \hat{\L}_{\ProcX}(\hat{h}_{n,\cdot},\ProcY;n,\target) 
= \limsup_{n \to \infty} \limsup_{t \to \infty} \frac{1}{t+1} \sum_{m=n}^{n+t} \E\!\left[ \Delta_{m+1}^{n} \middle| X_{m+1}, \hat{Y}_{n,m+1} \right].
\end{equation*}

Next, note that for any $n,m \in \nats$ with $m \geq n$, 
by the conditional independence property \Ytwo, 
it holds that
\begin{align*}
& \E\!\left[ \Delta_{m+1}^{n} \middle| X_{m+1}, \hat{Y}_{n,m+1} \right] 
\\ & = \P\!\left( Y_{m+1} \neq \hat{Y}_{n,m+1} \middle| X_{m+1}, \hat{Y}_{n,m+1} \right) - \P\!\left( Y_{m+1} \neq \target(X_{m+1}) \middle| X_{m+1} \right)
\\ & = \P\!\left( Y_{m+1} = \target(X_{m+1}) \middle| X_{m+1} \right) - \P\!\left( Y_{m+1} = \hat{Y}_{n,m+1} \middle| X_{m+1}, \hat{Y}_{n,m+1} \right).
\end{align*}
Recalling that $\eta(X_{m+1};y) = \P( Y_{m+1} = y | X_{m+1} )$, 
the conditional independence property \Ytwo\ implies the last expression above equals
\begin{align*}
& \eta(X_{m+1};\target(X_{m+1})) - \eta(X_{m+1};\hat{Y}_{n,m+1})
\\ & \leq \eta(X_{m+1};\target(X_{m+1})) - \hat{f}_{n,m}^{\sqsmall}(X_{1:m},\ind_{\{\target(X_{m+1})\}}(Y_{1:n}),X_{m+1}) 
\\ & ~~~+ \max_{y \in \Y} \hat{f}_{n,m}^{\sqsmall}(X_{1:m},\ind_{\{y\}}(Y_{1:n}),X_{m+1}) - \eta(X_{m+1};\hat{Y}_{n,m+1})
\\ & = \eta(X_{m+1};\target(X_{m+1})) - \hat{f}_{n,m}^{\sqsmall}(X_{1:m},\ind_{\{\target(X_{m+1})\}}(Y_{1:n}),X_{m+1}) 
\\ & ~~~+ \hat{f}_{n,m}^{\sqsmall}(X_{1:m},\ind_{\{\hat{Y}_{n,m+1}\}}(Y_{1:n}),X_{m+1}) - \eta(X_{m+1};\hat{Y}_{n,m+1})
\\ & \leq 2 \max_{y \in \Y} \left| \hat{f}_{n,m}^{\sqsmall}(X_{1:m},\ind_{\{y\}}(Y_{1:n}),X_{m+1}) - \eta(X_{m+1};y) \right|.
\end{align*}
Therefore, on the event $E$, \eqref{eqn:noisy-classification-good-y} implies we have 
\begin{align*}
& \limsup_{n \to \infty} \limsup_{t \to \infty} \frac{1}{t+1} \sum_{m=n}^{n+t} \E\!\left[ \Delta_{m+1}^{n} \middle| X_{m+1}, \hat{Y}_{n,m+1} \right]
\\ & \leq 2 \limsup_{n \to \infty} \limsup_{t \to \infty} \frac{1}{t+1} \sum_{m=n}^{n+t} \max_{y \in \Y} \left| \hat{f}_{n,m}^{\sqsmall}(X_{1:m},\ind_{\{y\}}(Y_{1:n}),X_{m+1}) - \eta(X_{m+1};y) \right| = 0.
\end{align*}
Altogether, on the event $E \cap E^{\prime}$, it holds that 
$\limsup\limits_{n \to \infty} \hat{\L}_{\ProcX}(\hat{h}_{n,\cdot},\ProcY;n,\target) \leq 0$.
Also, Lemma~\ref{lem:non-negative-noisy-excess} implies that, on an event $E^{\prime\prime}$ of probability one, 
$\liminf\limits_{n \to \infty} \hat{\L}_{\ProcX}(\hat{h}_{n,\cdot},\ProcY;n,\target) \geq 0$.
Thus, on the event $E \cap E^{\prime} \cap E^{\prime\prime}$ of probability one (by the union bound), we have 
$\hat{\L}_{\ProcX}(\hat{h}_{n,\cdot},\ProcY;n,\target) \to 0$.

Since this holds for any $\ProcX \in \KC$, it immediately follows that Condition~\ref{con:kc} is sufficient 
for $\ProcX$ to admit strong universal self-adaptive learning for noisy functions.
Moreover, note that the definition of $\hat{h}_{n,m}$ has no dependence on the distributions of $\ProcX$ or $\ProcY$ 
beyond the data supplied as its arguments, 
and we have shown that $\hat{h}_{n,m}$ is strongly universally consistent for noisy functions under every $\ProcX$ satisfying Condition~\ref{con:kc}.
Since we have just established Condition~\ref{con:kc} is necessary and sufficient for $\ProcX$ to admit strong universal self-adaptive learning for noisy functions,
this also completes the proof that $\hat{h}_{n,m}$ is optimistically universal for noisy functions.
\end{proof}

We leave open the question of whether the above result for classification 
can be extended to general (bounded, separable) losses $\loss$, as stated 
in the following open problems.  

\begin{problem}
\label{prob:kc-noisy-function}
Is Condition~\ref{con:kc} sufficient for a process $\ProcX$ to 
admit strong universal inductive and self-adaptive learning 
for noisy functions, 
for \emph{every} separable near-metric space $(\Y,\loss)$ with $\maxloss < \infty$?
\end{problem}

\begin{problem}
\label{prob:optimistic-noisy-function}
Is it true that, for \emph{every} separable near-metric space $(\Y,\loss)$ with $\maxloss < \infty$, 
there exists a self-adaptive learning rule that is 
optimistically universal for noisy functions?
\end{problem}

Of course, Open Problem~\ref{prob:kc-noisy-function} would also be resolved 
by a positive resolution of Open Problem~\ref{prob:kc-independent-noise}, 
which represents a strictly stronger result.  Moreover, a positive resolution 
of both Open Problems~\ref{prob:kc-independent-noise} and \ref{prob:optimistic-independent-noise} together 
would also positively resolve Open Problem~\ref{prob:optimistic-noisy-function}.

\section{Extensions}
\label{sec:extensions}

Here we briefly mention two simple extensions of the above theory.
First, we present a straightforward extension to losses $\loss$ beyond near-metrics, 
admitting any loss dominated by a nondecreasing function of a near-metric loss.
Second, we present an extension of the results to \emph{weak} universal consistency.
In this latter case, we find that all of the results for inductive and self-adaptive learning 
above hold without modification for weak consistency as well.  However, interestingly, 
this is not true for online learning, as we find that the 
set of processes admitting weak universal online learning is a \emph{strict superset} 
of the set $\SUOL$ of processes admitting strong universal online learning 
(if $\X$ is infinite and $\maxloss < \infty$).

\subsection{More-General Loss Functions}
\label{subsec:nonmetric-losses}

For simplicity, we have chosen to restrict the loss function $\loss$ to be a \emph{near-metric} in the above results. 
However, as mentioned in Section~\ref{subsec:notation}, most of the theory developed above extends to a much 
broader family of loss functions, including all functions $\loss : \Y^{2} \to [0,\infty)$ that are merely \emph{dominated} by a separable near-metric 
$\lossdom$, in the sense that
$\forall y,y^{\prime} \in \Y, \loss(y,y^{\prime}) \leq \domfunc(\lossdom(y,y^{\prime}))$ for some continuous nondecreasing unbounded function 
$\domfunc : [0,\infty) \to [0,\infty)$ with $\domfunc(0)=0$,
and that also satisfy a non-triviality condition: 
$\!\sup\limits_{y_{0},y_{1} \in \Y} \inf\limits_{y \in \Y} \max\{ \loss(y,y_{0}),\loss(y,y_{1}) \} > 0$.
The measurable sets $\Borel_{y}$ are then defined as the Borel $\sigma$-algebra generated by the topology induced by $\lossdom$,
and we also require that $\loss$ be a measurable function with respect to this.
For instance, this extension admits asymmetric losses, 
such as in discrete classification with asymmetric misclassification costs. 

Here we briefly elaborate on the (minor) changes to the above theory yielding this generalization.
For any $z \in [0,\infty)$, define $\domfunc^{-1}(z) = \inf\{ x \in [0,\infty) : \domfunc(x) \geq z \}$; 
this always exists since the conditions on $\domfunc$ guarantee that its range is $[0,\infty)$, 
and moreover by continuity of $\domfunc$ we have $\domfunc(\domfunc^{-1}(z)) = z$.
Still defining $\maxloss = \sup\limits_{y,y^{\prime} \in \Y} \loss(y,y^{\prime})$, 
in the case of bounded losses ($\maxloss < \infty$), 
note that we can suppose $\lossdom$ is also bounded without loss of generality,
and in fact that it is bounded by $\domfunc^{-1}(\maxloss)$, since 
the near-metric $(y,y^{\prime}) \mapsto \lossdom(y,y^{\prime}) \land \domfunc^{-1}(\maxloss)$
still satisfies the requirement $\loss(y,y^{\prime}) \leq \domfunc(\lossdom(y,y^{\prime}) \land \domfunc^{-1}(\maxloss))$.
Then we can simply replace $\loss$ with $\lossdom$ in the learning rules proposed in \eqref{eqn:suil-rule} and \eqref{eqn:sual-rule}, 
and the resulting performance guarantees in terms of the loss $\lossdom$
then imply universal consistency under $\loss$
under the same conditions.
To see this, note that 
for any $\hat{y},y^{\star} \in \Y$, for any $\eps > 0$, we have
\begin{equation*}
\loss\!\left( \hat{y}, y^{\star} \right)
\leq \domfunc\!\left( \lossdom\!\left( \hat{y}, y^{\star} \right) \right)
\leq \eps + \maxloss \ind\!\left[ \lossdom\!\left( \hat{y}, y^{\star} \right) > \domfunc^{-1}(\eps) \right]
\leq \eps + \frac{\maxloss}{\domfunc^{-1}(\eps)} \lossdom\!\left( \hat{y}, y^{\star} \right),
\end{equation*}
noting that $\domfunc^{-1}(\eps) > 0$.  Plugging this inequality into the three $\hat{\L}_{\ProcX}$ definitions,
and noting that it holds for all $\eps > 0$, 
it easily follows that, in any of the three learning settings discussed above, 
strong universal consistency under the loss $\lossdom$ implies strong universal consistency under the loss $\loss$.

Furthermore, in the results where it is needed to argue inconsistency of a given learning rule 
(Lemma~\ref{lem:sual-subset-kc}, Theorems~\ref{thm:no-optimistic-inductive} and \ref{thm:okc-nec}),
the only property of $\loss$ used in those arguments is the stated non-triviality condition; 
more specifically, this condition is represented there by the
fact that, for $\loss$ a near-metric, any distinct $y_{0},y_{1} \in \Y$ have 
$\inf\limits_{y \in \Y}  \left( \loss(y,y_{0}) + \loss(y,y_{1}) \right) \geq \frac{1}{\triconst}\loss(y_{0},y_{1}) > 0$, 
but the arguments would hold just as well for these more-general losses $\loss$ by 
replacing $\frac{1}{\triconst}\loss(y_{0},y_{1})$ with $\inf\limits_{y \in \Y} \max\{ \loss(y,y_{0}),\loss(y,y_{1}) \}$
and choosing $y_{0},y_{1} \in \Y$ specifically to make this latter quantity nonzero.

These generalizations can be applied to all of the results involving a loss function in 
Sections~\ref{sec:intro} through \ref{subsec:online-vs-inductive}.
Section~\ref{subsec:suol-invariance-to-loss} is the only place (involving bounded losses) where somewhat-nontrivial modifications are necessary 
to extend the results to these more-general losses, 
simply due to needing an appropriate generalization of the notion of ``total boundedness'' for the arguments to remain valid.

The results on unbounded losses in Section~\ref{sec:unbounded-losses} can also be generalized. 
In this case, the same trick of using $\lossdom$ in place of $\loss$ in the definition of the learning rule \eqref{eqn:unbounded-universal2-inductive-rule} 
again works for establishing universal consistency with $\loss$ under $\ProcX \in \UKC$ in Lemma~\ref{lem:unbounded-ukc-subset-suil}, 
but in this case it follows from the stronger guarantee \eqref{eqn:sup-consistency} for $\lossdom$ 
(together with continuity and monotonicity of $\domfunc$, and $\domfunc(0)=0$) rather than 
from directly relating $\hat{\L}_{\ProcX}$ for the losses $\lossdom$ and $\loss$:
that is, the learning rule defined in terms of $\lossdom$ satisfies the convergence in \eqref{eqn:sup-consistency} for the loss $\lossdom$ under $\ProcX \in \UKC$,
and the properties of $\domfunc$ imply that it remains true for $\domfunc(\lossdom(\cdot,\cdot))$, and hence also for the loss $\loss$.
However, the complementary result in Lemma~\ref{lem:unbounded-sual-subset-ukc} requires an 
additional restriction to $\loss$ for the argument there to generalize: 
namely, that $\sup\limits_{y_{0},y_{1} \in \Y} \inf\limits_{y \in \Y} \max\{ \loss(y,y_{0}),\loss(y,y_{1}) \} = \infty$,
a property satisfied by most unbounded losses studied in the literature anyway.
Using this to replace the values $\loss(y_{i,0},y_{i,1})$ appearing in the proof of Lemma~\ref{lem:unbounded-sual-subset-ukc}
with values $\inf\limits_{y \in \Y} \max\{ \loss(y,y_{i,0}),\loss(y,y_{i,1}) \}$ (both in the definition of $y_{i,0},y_{i,1}$, and in \eqref{eqn:unbounded-nec-bernoulli}), 
the result is then extended to these more-general loss functions.
Together, these modifications allow us to extend all of the results in Section~\ref{sec:unbounded-losses} to 
these more-general loss functions $\loss$.

\subsection{Weak Universal Consistency}
\label{subsec:weak-consistency}

It is straightforward to extend the above results on inductive and self-adaptive learning (Sections~\ref{sec:main} and \ref{sec:universal2})
to \emph{weak} universal consistency as well,
where the definition of weakly universally consistent learning is as above except replacing 
the \emph{almost sure} convergence of $\hat{\L}_{\ProcX}$ to $0$ with convergence \emph{in probability}.
The proof of \emph{necessity} of Condition~\ref{con:kc} for inductive learning and self-adaptive learning 
(from Lemmas~\ref{lem:suil-subset-sual} and \ref{lem:sual-subset-kc}) 
can easily be modified to show necessity of Condition~\ref{con:kc} for \emph{weak} universal consistency by inductive or self-adaptive learning rules as well.
Specifically, the proof of Lemma~\ref{lem:sual-subset-kc} in this case would follow the same argument, 
but starting from $\sup\limits_{\kappa \in [0,1)} \limsup\limits_{n\to\infty} \E\!\left[ \hat{\L}_{\ProcX}(g_{n,\cdot},\target_{\kappa};n) \right]$ 
instead of $\sup\limits_{\kappa \in [0,1)} \E\!\left[ \limsup\limits_{n\to\infty} \hat{\L}_{\ProcX}(g_{n,\cdot},\target_{\kappa};n) \right]$.
After relaxing $\sup\limits_{\kappa \in [0,1)}$ to an integral over $\kappa \in [0,1)$ (as in the present proof) and applying Fatou's lemma 
to exchange the integral operator with the $\limsup\limits_{n \to \infty}$, the proof proceeds identically as before, 
and the final conclusion follows by noting that if $\lim\limits_{n \to \infty} \hat{\mu}_{\ProcX}\!\left( \bigcup \{A_{i} : X_{1:n} \cap A_{i} = \emptyset\} \right) > 0$
with nonzero probability, then (by the monotone convergence theorem) 
\begin{equation*}
\lim\limits_{n\to\infty} \E\!\left[ \hat{\mu}_{\ProcX}\!\left( \bigcup \{A_{i} : X_{1:n} \cap A_{i} = \emptyset\} \right) \right]
= \E\!\left[ \lim\limits_{n\to\infty} \hat{\mu}_{\ProcX}\!\left( \bigcup \{A_{i} : X_{1:n} \cap A_{i} = \emptyset\} \right) \right] > 0.
\end{equation*}
For brevity, we leave the details of the proof as an exercise for the interested reader.
Since strong universal consistency implies weak universal consistency, the sufficiency of Condition~\ref{con:kc}  
for universal consistency of inductive or self-adaptive learning (from Lemmas~\ref{lem:kc-subset-suil} and \ref{lem:suil-subset-sual}),  
as well as the result on optimistically universal self-adaptive learning (Theorem~\ref{thm:doubly-universal-adaptive}), 
continue to hold for the \emph{weak} universal consistency criterion in place of \emph{strong} universal consistency.
In particular, this means that the set of processes $\WUIL$ (or $\WUAL$) admitting weak universal inductive (or self-adaptive) learning 
is equal to $\SUIL$ (or $\SUAL$), both of which are equal $\KC$ by Theorem~\ref{thm:main}. 
Additionally, it follows from statements made in the proof of Theorem~\ref{thm:no-optimistic-inductive} 
that Theorem~\ref{thm:no-optimistic-inductive} remains valid for weak universal consistency as well.
Again, the details are left as an exercise for the interested reader.

Interestingly, the extension to weak consistency in the \emph{online} learning setting (with $\maxloss < \infty$) 
is substantially more involved, and indeed the set of processes that admit \emph{weak} universal online learning ($\WUOL$) 
is in fact a \emph{strict superset} of $\SUOL$ (if $\X$ is infinite).  
That it is a superset easily follows from the fact that almost sure convergence implies convergence in probability,
so the interesting detail here is that there exist processes $\ProcX$ that admit weak universal online learning
but \emph{not} strong universal online learning.
To see this, consider the following construction of a process $\ProcX$. 
Let $\{z_{i}\}_{i=0}^{\infty}$ be distinct elements of $\X$ (supposing $\X$ is infinite), and let $\{B_{k}\}_{k=1}^{\infty}$ be independent random variables 
with $B_{k} \sim {\rm Bernoulli}(\frac{1}{k})$.  Then for each $k \in \nats$ and each $t \in \{2^{k-1},\ldots,2^{k}-1\}$, 
if $B_{k} = 1$, then set $X_{t} = z_{t}$, and if $B_{k} = 0$, then set $X_{t} = z_{0}$.
Since $\sum\limits_{k=1}^{\infty} \frac{1}{k} = \infty$, the second Borel-Cantelli lemma implies that, with probability one, 
there exists an infinite strictly-increasing sequence $\{k_{i}\}_{i=1}^{\infty}$ in $\nats$ with $B_{k_{i}} = 1$ for every $i \in \nats$.
On this event, every $k \in \{ k_{i} : i \in \nats\}$ has 
$|\{ j \in \nats : X_{1:(2^{k}-1)} \cap \{z_{j}\} \neq \emptyset\}| \geq 2^{k-1}$,
so that 
$|\{ j \in \nats : X_{1:T} \cap \{z_{j}\} \neq \emptyset\}| \neq o(T) \text{ (a.s.)}$.
Thus, $\ProcX \notin \OKC$, and hence by Theorem~\ref{thm:okc-nec}, 
$\ProcX \notin \SUOL$.
However, if we take $f_{n}$ as the simple memorization-based online learning rule 
(from the proof of Theorem~\ref{thm:okc-countable}), 
then for any $n \in \nats$ and measurable $\target : \X \to \Y$,
we have 
$\E\!\left[ \hat{\L}_{\ProcX}(f_{\cdot},\target;n) \right] 
\leq \frac{\maxloss}{n} \E\left[ | \{ j \in \nats \cup \{0\} : X_{1:n} \cap \{z_{j}\} \neq \emptyset \} | \right]
\leq \frac{\maxloss}{n} \left( 1 + \sum\limits_{k=1}^{\lfloor\log_{2}(2n)\rfloor} \frac{1}{k} 2^{k-1} \right)
\leq \frac{\maxloss}{n} \left( 1 +\!\!\! \int\limits_{1}^{\lfloor\log_{2}(4n)\rfloor} \frac{1}{x} 2^{x-1} {\rm d}x \right)$.
Since $\int\limits_{1}^{t} \frac{1}{x} 2^{x-1} {\rm d}x = o\!\left( \int\limits_{1}^{t} 2^{x} {\rm d}x \right)$ as $t \to \infty$
(by L'H\^{o}pital's rule and the fundamental theorem of calculus), 
and $\int\limits_{1}^{t} 2^{x} {\rm d}x = \frac{1}{\ln(2)} 2^{t}$,
we conclude that 
$\int\limits_{1}^{\lfloor\log_{2}(4n)\rfloor} \frac{1}{x} 2^{x-1} {\rm d}x = o(n)$, 
so that $\E\!\left[ \hat{\L}_{\ProcX}(f_{\cdot},\target;n) \right] \to 0$, 
which implies $\hat{\L}_{\ProcX}(f_{\cdot},\target;n) \xrightarrow{P} 0$ by Markov's inequality.
Thus, $\ProcX$ admits weak universal online learning.

Following arguments analogous to the proof of Theorem~\ref{thm:okc-nec}, 
one can show that a \emph{necessary} condition for a process $\ProcX$ to admit weak universal online learning 
is that every disjoint sequence $\{A_{i}\}_{i=1}^{\infty}$ in $\Borel$ satisfies 
$\E[ | \{ i \in \nats : X_{1:T} \cap A_{i} \neq \emptyset \} | ] = o(T)$.
This represents a sort of \emph{weak} form of Condition~\ref{con:okc}.
Furthermore, following similar arguments to the proof of Theorem~\ref{thm:okc-countable}, 
one can show that in the special case of \emph{countable} $\X$, 
this condition is both necessary \emph{and sufficient} for $\ProcX$
to admit weak universal online learning.
However, as was the case for Condition~\ref{con:okc} and strong universal consistency (Open Problem~\ref{prob:suol-equals-okc}),
in the general case (allowing uncountable $\X$) 
it remains an open problem to determine whether this weaker form of Condition~\ref{con:okc} is equivalent to the condition that $\ProcX$ admits 
weak universal online learning.
Likewise, it also remains an open problem to determine whether there generally exist optimistically universal online learning rules 
under this weak consistency criterion instead of the strong consistency criterion.

In the case of unbounded losses,
one can show that Theorems~\ref{thm:unbounded-main} and \ref{thm:unbounded-optimistically-universal} 
extend to weak universal consistency without modification.
Specifically, since almost sure convergence implies convergence in probability, 
Theorem~\ref{thm:unbounded-main} immediately implies sufficiency of 
Condition~\ref{con:ukc} for a process to admit weak universal learning (in all three settings).
Furthermore, the same construction used in the proof of Lemma~\ref{lem:unbounded-sual-subset-ukc} 
can be used to show that Condition~\ref{con:ukc} is also necessary for 
weak universal learning (again in all three settings) when $\maxloss = \infty$.
Briefly, for any $\ProcX \notin \UKC$, 
in the notation defined in the proof of Lemma~\ref{lem:unbounded-sual-subset-ukc}, 
we would have that for any online learning rule $h_{n}$, 
every $j \in \nats$ has 
$\P\!\left( \hat{\L}_{\ProcX}(h_{\cdot},\target_{K};T_{j}) > \frac{1}{2\triconst} \right) 
\geq \frac{1}{2} \P( 0 < \tau_{j} \leq T_{j} ) > \frac{1}{2} ( \P(E) - 2^{-j} )$,
which is bounded away from $0$ for all sufficiently large $j$.
Since one can also show that $T_{j} \to \infty$, it follows that 
$\exists \kappa \in [0,1)$ such that 
$\limsup\limits_{n \to \infty} \P\!\left( \hat{\L}_{\ProcX}(h_{\cdot},\target_{\kappa};n) > \frac{1}{2\triconst} \right) > 0$, 
so that $h_{n}$ is not weakly universally consistent under $\ProcX$.
Similarly, for any self-adaptive learning rule $g_{n,m}$, 
we would have that for any $n \in \nats$, 
$\P\!\left( \hat{\L}_{\ProcX}(g_{n,\cdot},\target_{K};n) \geq \frac{1}{2\triconst} \right) \geq \P(E \cap E^{\prime}) > 0$, 
which implies $\exists \kappa \in [0,1)$ such that 
$\limsup\limits_{n \to \infty} \P\!\left( \hat{\L}_{\ProcX}(g_{n,\cdot},\target_{\kappa};n) \geq \frac{1}{2\triconst} \right) > 0$,
so that $g_{n,m}$ is not weakly universally consistent under $\ProcX$.
The same argument holds for any inductive learning rule $f_{n}$ as well.
The details of these arguments are left as an exercise for the interested reader.
Together with Theorems~\ref{thm:unbounded-main} and \ref{thm:unbounded-optimistically-universal}
and the fact that almost sure convergence implies convergence in probability, 
this also implies that (when $\maxloss = \infty$) there exists an optimistically universal learning rule 
(in all three settings) under this weak consistency criterion as well.

\section{Open Problems}
\label{sec:open-problems}

For convenience, we conclude the paper by briefly gathering in summary form 
the main open problems posed in the sections above, 
along with additional general directions for future study.
The statements dependent on $\loss$ regard the case $\maxloss < \infty$, 
and always restrict to $(\Y,\loss)$ a separable near-metric space.

\begin{itemize}
\item Open Problem~\ref{prob:optimistic-online}: 
Does there exist an optimistically universal online learning rule?
\item Open Problem~\ref{prob:suol-equals-okc}: Is $\SUOL = \OKC$?
\item Open Problem~\ref{prob:suol-multiclass}: 
Is the set $\SUOL$ invariant to the specification of $(\Y,\loss)$,
subject to $(\Y,\loss)$ being separable with $0 < \maxloss < \infty$?
\item Open Problem~\ref{prob:ukc-infinite-support}: For some uncountable $\X$, 
do there exist processes $\ProcX \in \UKC$ such that, with nonzero probability, 
the number of distinct $x \in \X$ appearing in $\ProcX$ is infinite?
\item Open Problem~(\ref{prob:kc-independent-noise} / \ref{prob:kc-noisy-function}):
Does every $\ProcX \in \KC$ admit strong universal inductive and self-adaptive learning (with independent noise / for noisy functions)
for every $(\Y,\loss)$?
\item Open Problem~(\ref{prob:optimistic-independent-noise} / \ref{prob:optimistic-noisy-function}): 
Is it true that, for every $(\Y,\loss)$, there exists a self-adaptive learning rule that is optimistically universal (with independent noise / for noisy functions)?
\end{itemize}

\bibliography{learning}

\begin{thebibliography}{69}
\providecommand{\natexlab}[1]{#1}
\providecommand{\url}[1]{\texttt{#1}}
\expandafter\ifx\csname urlstyle\endcsname\relax
  \providecommand{\doi}[1]{doi: #1}\else
  \providecommand{\doi}{doi: \begingroup \urlstyle{rm}\Url}\fi

\bibitem[Adams and Nobel(1998)]{adams:98}
T.~M. Adams and A.~B. Nobel.
\newblock On density estimation from ergodic processes.
\newblock \emph{The Annals of Probability}, 26\penalty0 (2):\penalty0 794--804,
  1998.

\bibitem[Adams and Nobel(2010{\natexlab{a}})]{adams:10a}
T.~M. Adams and A.~B. Nobel.
\newblock Uniform convergence of {V}apnik-{C}hervonenkis classes under ergodic
  sampling.
\newblock \emph{Annals of Probability}, 38\penalty0 (4):\penalty0 1345--1367,
  2010{\natexlab{a}}.

\bibitem[Adams and Nobel(2010{\natexlab{b}})]{adams:10b}
T.~M. Adams and A.~B. Nobel.
\newblock The gap dimension and uniform laws of large numbers for ergodic
  processes.
\newblock \emph{ar{X}iv:1007.2964}, 2010{\natexlab{b}}.

\bibitem[Adams and Nobel(2012)]{adams:12}
T.~M. Adams and A.~B. Nobel.
\newblock Uniform approximation of {V}apnik--{C}hervonenkis classes.
\newblock \emph{Bernoulli}, 18\penalty0 (4):\penalty0 1310--1319, 2012.

\bibitem[Algoet(1992)]{algoet:92}
P.~Algoet.
\newblock Universal schemes for prediction, gambling and portfolio selection.
\newblock \emph{The Annals of Probability}, 20\penalty0 (2):\penalty0 901--941,
  1992.

\bibitem[Algoet(1994)]{algoet:94}
P.~Algoet.
\newblock The strong law of large numbers for sequential decisions under
  uncertainty.
\newblock \emph{{IEEE} Transactions on Information Theory}, 40\penalty0
  (3):\penalty0 609--633, 1994.

\bibitem[Algoet(1999)]{algoet:99}
P.~Algoet.
\newblock Universal schemes for learning the best nonlinear predictor given the
  infinite past and side information.
\newblock \emph{{IEEE} Transactions on Information Theory}, 45\penalty0
  (4):\penalty0 1165--1185, 1999.

\bibitem[Ash and Dol\'{e}ans-Dade(2000)]{ash:00}
R.~B. Ash and C.~A. Dol\'{e}ans-Dade.
\newblock \emph{Probability \& Measure Theory}.
\newblock Academic Press, second edition, 2000.

\bibitem[Bailey(1976)]{bailey:76}
D.~H. Bailey.
\newblock \emph{Sequential Schemes for Classifying and Predicting Ergodic
  Processes}.
\newblock PhD thesis, Department of Mathematics, Stanford University, 1976.

\bibitem[Ben-David et~al.(2009)Ben-David, P\'{a}l, and
  Shalev-Shwartz]{ben-david:09}
S.~Ben-David, D.~P\'{a}l, and S.~Shalev-Shwartz.
\newblock Agnostic online learning.
\newblock In \emph{Proceedings of the $22^{{\rm nd}}$ Conference on Learning
  Theory}, 2009.

\bibitem[Ben-David et~al.(2010)Ben-David, Blitzer, Crammer, Kulesza, Pereira,
  and Vaughan]{ben-david:10}
S.~Ben-David, J.~Blitzer, K.~Crammer, A.~Kulesza, F.~Pereira, and J.~W.
  Vaughan.
\newblock A theory of learning from different domains.
\newblock \emph{Machine Learning}, 79\penalty0 (1):\penalty0 151--175, 2010.

\bibitem[Bogachev(2007)]{bogachev:07}
V.~I. Bogachev.
\newblock \emph{Measure Theory}, volume~1.
\newblock Springer-Verlag, 2007.

\bibitem[Cesa-Bianchi and Lugosi(2006)]{cesa-bianchi:06}
N.~Cesa-Bianchi and G.~Lugosi.
\newblock \emph{Prediction, Learning, and Games}.
\newblock Cambridge University Press, 2006.

\bibitem[Cesa-Bianchi et~al.(1997)Cesa-Bianchi, Freund, Haussler, Helmbold,
  Schapire, and Warmuth]{cesa-bianchi:97}
N.~Cesa-Bianchi, Y.~Freund, D.~Haussler, D.~P. Helmbold, R.~E. Schapire, and
  M.~K. Warmuth.
\newblock How to use expert advice.
\newblock \emph{Journal of the Association for Computing Machinery},
  44\penalty0 (3):\penalty0 427--485, 1997.

\bibitem[Chapelle et~al.(2010)Chapelle, Sch\"{o}lkopf, and Zien]{chapelle:10}
O.~Chapelle, B.~Sch\"{o}lkopf, and A.~Zien.
\newblock \emph{Semi-supervised Learning}.
\newblock {MIT} Press, 2010.

\bibitem[Choquet(1954)]{choquet:54}
G.~Choquet.
\newblock Theory of capacities.
\newblock \emph{Annales de l'institut {F}ourier}, 5:\penalty0 131--295, 1954.

\bibitem[Cohn(1980)]{cohn:80}
D.~L. Cohn.
\newblock \emph{Measure Theory}.
\newblock Birkh\"{a}user, 1980.

\bibitem[Collomb(1984)]{collomb:84}
G.~Collomb.
\newblock Propri\'{e}t\'{e}s de convergence presque compl\`{e}te du
  pr\'{e}dicteur \`{a} noyau.
\newblock \emph{Zeitschrift f\:{u}r Wahrscheinlichkeitstheorie und verwandte
  Gebiete}, 66:\penalty0 441--460, 1984.

\bibitem[Cortes et~al.(2008)Cortes, Mohri, Riley, and Rostamizadeh]{cortes:08}
C.~Cortes, M.~Mohri, M.~Riley, and A.~Rostamizadeh.
\newblock Sample selection bias correction theory.
\newblock In \emph{Proceedings of the $19^{{\rm th}}$ International Conference
  on Algorithmic Learning Theory}, 2008.

\bibitem[Cover(1975)]{cover:75}
T.~M. Cover.
\newblock Open problems in information theory.
\newblock In \emph{{IEEE USSR} Joint Workshop on Information Theory}, 1975.

\bibitem[Devroye et~al.(1996)Devroye, Gy\"{o}rfi, and Lugosi]{devroye:96}
L.~Devroye, L.~Gy\"{o}rfi, and G.~Lugosi.
\newblock \emph{A Probabilistic Theory of Pattern Recognition}.
\newblock Springer-Verlag New York, 1996.

\bibitem[Dobrakov(1974)]{dobrakov:74}
I.~Dobrakov.
\newblock \emph{On Submeasures {I}}, volume 112 of \emph{Dissertationes
  Mathematicae}.
\newblock Pa\'{n}stwowe Wydawnictwo Naukowe, 1974.

\bibitem[Dobrakov(1984)]{dobrakov:84}
I.~Dobrakov.
\newblock On extension of submeasures.
\newblock \emph{Mathematica Slovaca}, 34\penalty0 (3):\penalty0 265--271, 1984.

\bibitem[Fremlin(2002)]{fremlin:02v3}
D.~H. Fremlin.
\newblock \emph{Measure Theory, Volume 3}.
\newblock Torres Fremlin, 2002.

\bibitem[Gray(2009)]{gray:09}
R.~M. Gray.
\newblock \emph{Probability, Random Processes, and Ergodic Properties}.
\newblock Springer, second edition, 2009.

\bibitem[Gray and Kieffer(1980)]{gray:80}
R.~M. Gray and J.~C. Kieffer.
\newblock Asymptotically mean stationary measures.
\newblock \emph{The Annals of Probability}, 8\penalty0 (5):\penalty0 962--973,
  1980.

\bibitem[Gy\"{o}rfi and Lugosi(2002)]{gyorfi:02a}
L.~Gy\"{o}rfi and G.~Lugosi.
\newblock Strategies for sequential prediction of stationary time series.
\newblock In M.~Dror, P.~L'{E}cuyer, and F.~Szidarovszky, editors,
  \emph{Modeling Uncertainty: An Examination of Stochastic Theory, Methods, and
  Applications}, pages 225--248. Kluwer Academic Publishers, 2002.

\bibitem[Gy\"{o}rfi et~al.(1999)Gy\"{o}rfi, Lugosi, and Morvai]{gyorfi:99}
L.~Gy\"{o}rfi, G.~Lugosi, and G.~Morvai.
\newblock A simple randomized algorithm for sequential prediction of ergodic
  time series.
\newblock \emph{{IEEE} Transactions on Information Theory}, 45\penalty0
  (7):\penalty0 2642--2650, 1999.

\bibitem[Gy\"{o}rfi et~al.(2002)Gy\"{o}rfi, Kohler, Krzy\.{z}ak, and
  Walk]{gyorfi:02}
L.~Gy\"{o}rfi, M.~Kohler, A.~Krzy\.{z}ak, and H.~Walk.
\newblock \emph{A Distribution-Free Theory of Nonparametric Regression}.
\newblock Springer-Verlag New York, 2002.

\bibitem[Hanneke and Kpotufe(2019)]{hanneke:19b}
S.~Hanneke and S.~Kpotufe.
\newblock On the value of target data in transfer learning.
\newblock In \emph{Advances in Neural Information Processing Systems 32}, 2019.

\bibitem[Hanneke and Yang(2019)]{hanneke:19a}
S.~Hanneke and L.~Yang.
\newblock Statistical learning under nonstationary mixing processes.
\newblock In \emph{Proceedings of the 22$^{{\rm nd}}$ International Conference
  on Artificial Intelligence and Statistics}, 2019.

\bibitem[Haussler et~al.(1994)Haussler, Littlestone, and Warmuth]{haussler:94}
D.~Haussler, N.~Littlestone, and M.~Warmuth.
\newblock Predicting $\{0,1\}$-functions on randomly drawn points.
\newblock \emph{Information and Computation}, 115\penalty0 (2):\penalty0
  248--292, 1994.

\bibitem[Huang et~al.(2007)Huang, Smola, Gretton, Borgwardt, and
  Sch\"{o}lkopf]{huang:07}
J.~Huang, A.~J. Smola, A.~Gretton, K.~M. Borgwardt, and B.~Sch\"{o}lkopf.
\newblock Correcting sample selection bias by unlabeled data.
\newblock In \emph{Advances in Neural Information Processing Systems 19}, 2007.

\bibitem[Irle(1997)]{irle:97}
A.~Irle.
\newblock On consistency in nonparametric estimation under mixing conditions.
\newblock \emph{Journal of Multivariate Analysis}, 60\penalty0 (1):\penalty0
  123--147, 1997.

\bibitem[Karandikar and Vidyasagar(2002)]{karandikar:02}
R.~L. Karandikar and M.~Vidyasagar.
\newblock Rates of uniform convergence of empirical means with mixing
  processes.
\newblock \emph{Statistics \& Probability Letters}, 58\penalty0 (3):\penalty0
  297--307, 2002.

\bibitem[Karandikar and Vidyasagar(2004)]{karandikar:04}
R.~L. Karandikar and M.~Vidyasagar.
\newblock Probably approximately correct learning with beta mixing input
  sequences, 2004.

\bibitem[Kechris(1995)]{kechris:95}
A.~S. Kechris.
\newblock \emph{Classical Descriptive Set Theory}.
\newblock Springer-Verlag New York, 1995.

\bibitem[Kivinen and Warmuth(1999)]{kivinen:99}
J.~Kivinen and M.~K. Warmuth.
\newblock Averaging expert predictions.
\newblock In \emph{Proceedings of the $4^{{\rm th}}$ European Conference on
  Computational Learning Theory}, 1999.

\bibitem[Kolmogorov and Fomin(1975)]{kolmogorov:75}
A.~N. Kolmogorov and S.~V. Fomin.
\newblock \emph{Introductory Real Analysis}.
\newblock Dover, 1975.

\bibitem[Kulkarni et~al.(2002)Kulkarni, Posner, and Sandilya]{kulkarni:02}
S.~R. Kulkarni, S.~E. Posner, and S.~Sandilya.
\newblock Data-dependent $k_n$-nn and kernel estimators consistent for
  arbitrary processes.
\newblock \emph{{IEEE} Transactions on Information Theory}, 48\penalty0 (10),
  2002.

\bibitem[Kuznetsov and Mohri(2014)]{kuznetsov:14}
V.~Kuznetsov and M.~Mohri.
\newblock Generalization bounds for time series prediction with non-stationary
  processes.
\newblock In \emph{Proceedings of The 25$^{{\rm th}}$ International Conference
  on Algorithmic Learning Theory}, 2014.

\bibitem[Littlestone(1988)]{littlestone:88}
N.~Littlestone.
\newblock Learning quickly when irrelevant attributes abound: A new
  linear-threshold algorithm.
\newblock \emph{Machine Learning}, 2\penalty0 (4):\penalty0 285--318, 1988.

\bibitem[Littlestone and Warmuth(1994)]{littlestone:94}
N.~Littlestone and M.~K. Warmuth.
\newblock The weighted majority algorithm.
\newblock \emph{Information and Computation}, 108\penalty0 (2):\penalty0
  212--261, 1994.

\bibitem[Lozano et~al.(2006)Lozano, Kulkarni, and Schapire]{lozano:06}
A.~C. Lozano, S.~R. Kulkarni, and R.~E. Schapire.
\newblock Convergence and consistency of regularized boosting algorithms with
  stationary $\beta$-mixing observations.
\newblock In \emph{Advances in Neural Information Processing Systems 18}, 2006.

\bibitem[Maharam(1947)]{maharam:47}
D.~Maharam.
\newblock An algebraic characterization of measure algebras.
\newblock \emph{Annals of Mathematics}, 48\penalty0 (1):\penalty0 154--167,
  1947.

\bibitem[Morvai et~al.(1996)Morvai, Yakowitz, and Gy\"{o}rfi]{morvai:96}
G.~Morvai, S.~Yakowitz, and L.~Gy\"{o}rfi.
\newblock Nonparametric inference for ergodic, stationary time series.
\newblock \emph{The Annals of Statistics}, 24\penalty0 (1):\penalty0 370--379,
  1996.

\bibitem[Morvai et~al.(1999)Morvai, Kulkarni, and Nobel]{morvai:99}
G.~Morvai, S.~R. Kulkarni, and A.~B. Nobel.
\newblock Regression estimation from an individual stable sequence.
\newblock \emph{Statistics}, 33:\penalty0 99--118, 1999.

\bibitem[Nobel(1999)]{nobel:99}
A.~B. Nobel.
\newblock Limits to classification and regression estimation from ergodic
  processes.
\newblock \emph{The Annals of Statistics}, 27\penalty0 (1):\penalty0 262--273,
  1999.

\bibitem[Nobel(2003)]{nobel:03}
A.~B. Nobel.
\newblock On optimal sequential prediction for general processes.
\newblock \emph{{IEEE} Transactions on Information Theory}, 49\penalty0
  (1):\penalty0 83--98, 2003.

\bibitem[O'Brien and Vervaat(1994)]{obrien:94}
G.~L. O'Brien and W.~Vervaat.
\newblock How subadditive are subadditive capacities?
\newblock \emph{Commentationes Mathematicae Universitatis Carolinae},
  35\penalty0 (2):\penalty0 311--324, 1994.

\bibitem[Ornstein(1978)]{ornstein:78}
D.~S. Ornstein.
\newblock Guessing the next output of a stationary process.
\newblock \emph{Israel Journal of Mathematics}, 30\penalty0 (3):\penalty0
  292--296, 1978.

\bibitem[Parthasarathy(1967)]{parthasarathy:67}
K.~R. Parthasarathy.
\newblock \emph{Probability Measures on Metric Spaces}.
\newblock Academic Press, 1967.

\bibitem[Rakhlin et~al.(2015)Rakhlin, Sridharan, and Tewari]{rakhlin:15}
A.~Rakhlin, K.~Sridharan, and A.~Tewari.
\newblock Online learning via sequential complexities.
\newblock \emph{Journal of Machine Learning Research}, 16\penalty0
  (2):\penalty0 155--186, 2015.

\bibitem[Roussas(1988)]{roussas:88}
G.~G. Roussas.
\newblock Nonparametric estimation in mixing sequences of random variables.
\newblock \emph{Journal of Statistical Planning and Inference}, 18:\penalty0
  135--149, 1988.

\bibitem[Ryabko(1988)]{ryabko:88}
B.~Ryabko.
\newblock Prediction of random sequences and universal coding.
\newblock \emph{Problems of Information Theory}, 24\penalty0 (2):\penalty0
  87--96, 1988.

\bibitem[Ryabko(2006)]{ryabko:06}
D.~Ryabko.
\newblock Pattern recognition for conditionally independent data.
\newblock \emph{Journal of Machine Learning Research}, 7\penalty0 (4):\penalty0
  645--664, 2006.

\bibitem[Schervish(1995)]{schervish:95}
M.~J. Schervish.
\newblock \emph{Theory of Statistics}.
\newblock Springer-Verlag New York, 1995.

\bibitem[Singer and Feder(1999)]{singer:99}
A.~Singer and M.~Feder.
\newblock Universal linear prediction by model order weighting.
\newblock \emph{{IEEE} Transactions on Signal Processing}, 47\penalty0
  (10):\penalty0 2685--2699, 1999.

\bibitem[Srivastava(1998)]{srivastava:98}
S.~M. Srivastava.
\newblock \emph{A Course on Borel Sets}.
\newblock Springer-Verlag New York, 1998.

\bibitem[Steinwart et~al.(2009)Steinwart, Hush, and Scovel]{steinwart:09}
I.~Steinwart, D.~Hush, and C.~Scovel.
\newblock Learning from dependent observations.
\newblock \emph{Journal of Multivariate Analysis}, 100\penalty0 (1):\penalty0
  175--194, 2009.

\bibitem[Talagrand(2008)]{talagrand:08}
M.~Talagrand.
\newblock Maharam's problem.
\newblock \emph{Annals of Mathematics}, 168\penalty0 (3):\penalty0 981--1009,
  2008.

\bibitem[{van Handel}(2013)]{van-handel:13}
R.~{van Handel}.
\newblock The universal {G}livenko--{C}antelli property.
\newblock \emph{Probability Theory and Related Fields}, 155\penalty0
  (3--4):\penalty0 911--934, 2013.

\bibitem[Vapnik(1982)]{vapnik:82}
V.~Vapnik.
\newblock \emph{Estimation of Dependences Based on Empirical Data}.
\newblock Springer-Verlag New York, 1982.

\bibitem[Vapnik(1998)]{vapnik:98}
V.~Vapnik.
\newblock \emph{Statistical Learning Theory}.
\newblock John Wiley $\&$ Sons, 1998.

\bibitem[Vidyasagar(2005)]{vidyasagar:05}
M.~Vidyasagar.
\newblock Convergence of empirical means with alpha-mixing input sequences, and
  an application to {PAC} learning.
\newblock In \emph{Proceedings of the 44$^{{\rm th}}$ {IEEE} Conference on
  Decision and Control}, pages 560--565, 2005.

\bibitem[Vovk(1990)]{vovk:90}
V.~Vovk.
\newblock Aggregating strategies.
\newblock In \emph{Proceedings of the $3^{{\rm rd}}$ Annual Workshop on
  Computational Learning Theory}, 1990.

\bibitem[Vovk(1992)]{vovk:92}
V.~Vovk.
\newblock Universal forecasting algorithms.
\newblock \emph{Information and Computation}, 96\penalty0 (2):\penalty0
  245--277, 1992.

\bibitem[Yu(1994)]{yu:94}
B.~Yu.
\newblock Rates of convergence for empirical processes of stationary mixing
  sequences.
\newblock \emph{The Annals of Probability}, 22\penalty0 (1):\penalty0 94--116,
  1994.

\bibitem[Zou et~al.(2009)Zou, Li, and Xu]{zou:09}
B.~Zou, L.~Li, and Z.~Xu.
\newblock The generalization performance of {ERM} algorithm with strongly
  mixing observations.
\newblock \emph{Machine Learning}, 75\penalty0 (3):\penalty0 275--295, 2009.

\end{thebibliography}

\end{document}